\newcommand{\pref}[1]{\cref{#1}}
\renewcommand{\eqref}[1]{\texorpdfstring{\hyperref[#1]{Eq. (\ref*{#1})}}{Eq. (\ref*{#1})}}
\Crefname{assumption}{Assumption}{Assumptions}
    \let\Cref\crtCref
    \let\cref\crtcref
\newtheorem{theorem}{Theorem}
\newtheorem{lemma}[theorem]{Lemma}
\newtheorem{corollary}[theorem]{Corollary}
\newtheorem{assumption}{Assumption}
\newtheorem{proposition}[theorem]{Proposition}
\newtheorem{definition}{Definition}
\newtheorem{example}{Example}
\theoremstyle{definition}
\newtheorem{remark}[theorem]{Remark}
\newcolumntype{H}{>{\setbox0=\hbox\bgroup}c<{\egroup}@{}}
\newcolumntype{Z}{>{\setbox0=\hbox\bgroup}c<{\egroup}@{\hspace*{-\tabcolsep}}}
\let\oldparagraph=\paragraph
\renewcommand\paragraph[1]{\oldparagraph{#1.}}
    \let\Cref\crtCref
    \let\cref\crtcref
\newcommand{\sups}[1]{^{{\scriptscriptstyle#1}}}
\newcommand{\subs}[1]{_{{\scriptscriptstyle#1}}}
\newcommand{\alg}{\Alg}
\newcommand{\R}{\mathbb{R}} %
\newcommand{\Hy}{\mathcal{H}}
\newcommand{\hth}{\hat{\theta}}
\newcommand{\lsim}{{\;\raise0.3ex\hbox{$<$\kern-0.75em\raise-1.1ex\hbox{$\sim$}}\;}}
\newcommand{\gsim}{{\;\raise0.3ex\hbox{$>$\kern-0.75em\raise-1.1ex\hbox{$\sim$}}\;}}
\newcommand{\eps}{\varepsilon} 
\newcommand{\RNum}[1]{\uppercase\expandafter{\romannumeral #1\relax}}
\DeclareMathOperator*{\argmin}{arg\,min}
\DeclareMathOperator*{\argmax}{arg\,max}
\newcommand{\KL}{D_{\mathrm{KL}}}
\newcommand{\cA}{\mathcal{A}}
\newcommand{\cB}{\mathcal{B}}
\newcommand{\cD}{\mathcal{D}}
\newcommand{\cE}{\mathcal{E}}
\newcommand{\cF}{\mathcal{F}}
\newcommand{\cH}{\mathcal{H}}
\newcommand{\cI}{\mathcal{I}}
\newcommand{\cL}{\mathcal{L}}
\newcommand{\cM}{\mathcal{M}}
\newcommand{\cO}{\mathcal{O}}
\newcommand{\cP}{\mathcal{P}}
\newcommand{\cQ}{\mathcal{Q}}
\newcommand{\cT}{\mathcal{T}}
\newcommand{\cU}{\mathcal{U}}
\newcommand{\cV}{\mathcal{V}}
\newcommand{\cX}{\mathcal{X}}
\newcommand{\cZ}{\mathcal{Z}}
\newcommand{\En}{\mathbb{E}}
\newcommand{\vf}{\mathbf{f}}
\DeclareFontFamily{U}{mathx}{\hyphenchar\font45}
\DeclareFontShape{U}{mathx}{m}{n}{<-> mathx10}{}
\DeclareSymbolFont{mathx}{U}{mathx}{m}{n}
\DeclareMathAccent{\widebar}{0}{mathx}{"73}
\newcommand{\wb}[1]{\widebar{#1}}
\newcommand{\veps}{\varepsilon}
\newcommand{\ldef}{\vcentcolon=}
\newcommand{\Unif}{\mathrm{Unif}}
\newcommand{\Dhels}[2]{D^{2}_{\mathrm{H}}\prn*{#1,#2}}
\newcommand{\poly}{\mathrm{poly}}
\newcommand{\supp}{\mathrm{supp}}
\DeclarePairedDelimiter{\brk}{[}{]}
\DeclarePairedDelimiter{\prn}{(}{)}
\DeclarePairedDelimiter{\norm}{\|}{\|}
\DeclarePairedDelimiter{\set}{\{}{\}}
\DeclarePairedDelimiter{\floor}{\lfloor}{\rfloor}
\def\medskip{\vskip 10 pt}
\def\bigskip{\vskip 15 pt}
\def\texitem#1{\par\vspace{5pt}
\noindent\hangindent 20pt
\hbox to 20pt {\hss #1 ~}\ignorespaces}
\newcommand{\co}{\operatorname{co}}
\newcommand{\regdm}{\mathbf{Reg}_{\mathsf{DM}}}
\newcommand{\rdecc}{\normalfont{\textsf{r-dec}}^{\rm c}}
\newcommand{\rdeco}{\normalfont{\textsf{r-dec}}^{\rm o}}
\newcommand{\pdeco}{\normalfont{\textsf{p-dec}}^{\rm o}}
\newcommand{\pdecc}{\normalfont{\textsf{p-dec}}^{\rm c}}
\newcommand{\LDPtag}{{\scriptscriptstyle\mathsf{LDP}}}
\newcommand{\Gtag}{{\scriptscriptstyle\mathsf{H}}}
\newcommand{\rdecl}{{\normalfont \textsf{r-dec}}^{\LDPtag}}
\newcommand{\pdecl}{{\normalfont \textsf{p-dec}}^{\LDPtag}}
\newcommand{\rdecol}{{\normalfont \textsf{r-dec}}^{\rm o, \LDPtag}}
\newcommand{\pdecol}{{\normalfont \textsf{p-dec}}^{\rm o, \LDPtag}}
\newcommand{\pdecq}{{\normalfont \textsf{p-dec}}^{\rm q}}
\newcommand{\pdecql}{{\normalfont \textsf{p-dec}}^{\rm q, \LDPtag}}
\newcommand{\pdecr}{{\normalfont \textsf{p-dec}}^{\scriptscriptstyle\mathsf{R}}}
\newcommand{\pdecg}{{\normalfont \textsf{p-dec}}^{\Gtag}}
\newcommand{\rdecg}{{\normalfont \textsf{r-dec}}^{\Gtag}}
\newcommand{\pdecqg}{{\normalfont \textsf{p-dec}}^{{\rm q}, \Gtag}}
\newcommand{\pdecog}{\normalfont{\textsf{p-dec}}^{{\rm o},\Gtag}}
\newcommand{\SQtag}{{\normalfont\scriptscriptstyle \textsf{-SQ}}}
\newcommandx{\pdecltau}[1][1=\tau]{{\normalfont \textsf{p-dec}}^{#1\SQtag}}
\newcommand{\PDEC}{Private PAC-DEC}
\newcommand{\RDEC}{Private regret-DEC}
\newcommand{\pDEC}{private PAC-DEC}
\newcommand{\rDEC}{private regret-DEC}
\newcommandx{\Whp}[1][1=\delta]{With probability at least $1-#1$}
\newcommandx{\whp}[1][1=\delta]{with probability at least $1-#1$}
\newcommand{\Lipr}{C_{V}}
\newcommand{\DPi}{\Delta(\Pi)}
\newcommand{\DbPi}{\Delta(\bPi)}
\newcommand{\DM}{\Delta(\cM)}
\newcommand{\DA}{\Delta(\cA)}
\newcommand{\coM}{\co(\cM)}
\newcommand{\Rdd}{\R^{d\times d}}
\newcommand{\Mstar}{M^\star}
\newcommand{\Mbar}{\wb{M}}
\newcommand{\id}{I}
\newcommand{\EE}{\mathbb{E}}
\newcommand{\PP}{\mathbb{P}}
\newcommand{\QQ}{\mathbb{Q}}
\newcommand{\DD}{\mathbb{D}}
\renewcommand{\DD}{\Pi}
\newcounter{cnt}
\xdef \csname c\Alph{cnt}\endcsname {\noexpand\mathcal{\Alph{cnt}}}%
\xdef \csname b\Alph{cnt}\endcsname {\noexpand\mathbb{\Alph{cnt}}}%
\newcommand{\tr}{\mathrm{tr}}
\newcommand{\diag}{\operatorname{diag}}
\newcommand{\leqsim}{\lesssim}
\newcommand{\geqsim}{\gtrsim}
\newcommand{\nrm}[1]{\left\|#1\right\|}
\newcommand{\abs}[1]{\left|#1\right|}
\newcommand{\Om}[1]{\Omega\left(#1\right)}
\DeclarePairedDelimiterX{\ddiv}[2]{(}{)}{%
  #1\;\delimsize\|\;#2%
}
\newcommand{\KLd}{\KL\ddiv}
\newcommand{\chis}{D_{\chi^2}\ddiv}
\newcommand{\be}{\mathbf{e}}
\newcommand{\linf}[1]{\norm{#1}_\infty} %
\newcommand{\indic}[1]{\mathbf{1}\left\{#1\right\}} %
\newcommand{\<}{\left\langle}
\renewcommand{\>}{\right\rangle}
\newcommand{\dH}{D_{\rm H}}
\newcommand{\DHr}[1]{D_{\rm H}\paren{#1}}
\renewcommand{\DH}[1]{D_{\mathrm{H}}^2\left(#1\right)}
\newcommand{\DTV}[1]{D_{\mathrm{TV}}\left(#1\right)}
\newcommand{\Alg}{\mathsf{Alg}}
\newcommand{\env}{\mathsf{Env}}
\newcommand{\Vmax}{V_{\max}}
\newcommand{\hM}{\widehat{M}}
\newcommand{\oM}{\Mbar}
\newcommand{\Ms}{\Mstar}
\newcommand{\pis}{\pi^\star}
\newcommand{\oeps}{\bar{\eps}}
\newcommand{\ueps}{\uline{\eps}}
\newcommand{\defeq}{\mathrel{\mathop:}=}
\newcommand{\CKL}{C_{\rm KL}}
\newcommand{\paren}[1]{{\left( #1 \right)}}
\newcommand{\brac}[1]{{\left[ #1 \right]}}
\newcommand{\normal}[1]{\mathsf{N}\paren{#1}}
\newcommand{\sset}[1]{\left\{#1\right\}}
\newcommandx{\VM}[1][1=M]{V\sups{#1}}
\newcommand{\VMs}[1][1=M]{V\sups{#1}(\pi\subs{#1})}
\newcommandx{\fm}[1][1=M]{f\sups{#1}}
\newcommandx{\muM}[1][1=M]{\nu\subs{#1}}
\newcommand{\sumt}{\sum_{t=1}^T}
\newcommand{\Err}{\mathrm{Err}}
\newcommand{\cMp}{\cM\cup\set{\oM}}
\newcommand{\constr}[2]{\left\{\left.#1 ~\right|~#2\right\}}
\newcommand{\onu}{\overline{\nu}}
\newcommand{\DX}{\Delta(\cX)}
\newcommand{\DZ}{\Delta(\cZ)}
\newcommand{\DO}{\Delta(\cO)}
\newcommand{\DPP}{\Delta(\Pi\times\Pc)}
\newcommand{\Power}[1]{\mathscr{P}(#1)}
\newcommandx{\PM}[3][1=M,2=\pi,3=\pr]{#3\!\circ\!#1(#2)}
\newcommand{\Pc}{\cQ}
\newcommand{\Pcp}{\cQ_{\alpha}}
\newcommand{\Pcbin}{\Pc_{\alpha,\sf bin}}
\newcommand{\pr}{\mathsf{Q}}
\newcommand{\pLDP}{$\alpha$-LDP}
\newcommand{\pDP}{$\alpha$-DP}
\newcommand{\aLDP}{$(\alpha,\beta)$-LDP}
\newcommand{\aDP}{$(\alpha,\beta)$-DP}
\newcommand{\gfunc}{g}
\newcommandx{\gm}[1][1=M]{\gfunc\sups{#1}}
\newcommand{\ldim}{\mathsf{LDim}}
\newcommand{\rdim}{\mathsf{RDim}}
\newcommand{\cFp}{\cF^+}
\newcommandx{\risk}[1][1=M]{\mathsf{Risk}^{#1}}
\newcommand{\bpi}{\boldsymbol{\pi}}
\newcommand{\vvec}{\mathrm{vec}}
\newcommand{\Bone}{\mathbb{B}^d(1)}
\newcommand{\nrmF}[1]{\nrm{#1}_F}
\newcommand{\ps}{p^\star}
\newcommand{\otheta}{\Bar{\theta}}
\newcommand{\oq}{\Bar{q}}
\newcommand{\op}{\Bar{p}}
\newcommand{\piom}{\pim[\oM]}
\newcommand{\cuto}[1]{\min\set{#1,1}}
\newcommand{\sq}{\sups{1/2}}
\newcommand{\ths}{\theta^\star}
\newcommand{\la}{\langle}
\newcommand{\ra}{\rangle}
\newcommand{\bz}{\mathbf{0}}
\newcommand{\Riskdm}{\mathbf{Risk}_{\mathsf{DM}}}
\newcommand{\riskdm}{\Riskdm}
\newcommand{\Phc}{\Phi}
\newcommandx{\Mcxt}[1][1=M]{#1_{\sf cxt}}
\newcommand{\ea}{e^{\alpha}}
\newcommand{\eai}{e^{-\alpha}}
\newcommand{\DPL}{\Delta(\Pi\times\Lc)}
\newcommand{\lr}{\langle}
\newcommand{\rr}{\rangle}
\newcommand{\llr}{\left \langle}
\newcommand{\rrr}{\right \rangle}
\newcommand{\bigO}[1]{O\paren{#1}}
\newcommand{\tbO}[1]{\tilde{O}\paren{#1}}
\newcommand{\tO}{\Tilde{O}}
\newcommand{\Bern}[1]{\mathrm{Bern}\paren{#1}}
\newcommand{\Rad}[1]{\mathrm{Rad}\paren{#1}}
\newcommand{\cMagn}{\cM_{\sf agnostic}}
\newcommand{\cMreal}{\cM_{\cF,\sf realizable}}
\newcommand{\cMwell}{\cM_{\cF}}
\newcommand{\fs}{f^\star}
\newcommand{\mus}{\mu^\star}
\newcommand{\infpqb}{\inf_{\substack{p\in\DDD\\q\in\Delta(\bPi)}}}
\newcommand{\infpql}{\inf_{\substack{p\in\DDD\\q\in\DPL}}}
\newcommand{\infpb}{\inf_{p\in\DbPi}}
\newcommand{\infpl}{\inf_{p\in\DPL}}
\newcommand{\pqll}{{\substack{p\in\DDD\\q\in\DL}}}
\newcommand{\infpqll}{\inf_\pqll}
\newcommand{\ZZ}{\mathbb{Z}}
\newcommand{\cFpar}{\cF_{\sf parity}}
\newcommand{\cMpar}{\cM_{\sf parity}}
\newcommandx{\Dl}[1][1=\lf]{\mathsf{D}_{#1}}
\newcommand{\lf}{\ell}
\newcommand{\Lc}{\cL}
\newcommand{\DL}{\Delta(\Lc)}
\newcommandx{\DC}[2][1=\Delta]{N_{\mathsf{frac}}(#2,#1)}
\newcommandx{\pds}[1][1=\Delta]{p_{#1}^\star}
\newcommand{\dct}{fractional covering number}
\newcommand{\Dct}{Fractional covering number}
\newcommand{\DF}{\Delta(\cF)}
\newcommand{\cMF}{\cM_{\cF}}
\newcommand{\DDD}{\Delta(\DD)}
\newcommandx{\NM}[2][1=\cM]{N(#1,#2)}
\newcommand{\pio}{\pi}
\newcommand{\pihat}{\hpi}
\newcommand{\phat}{\widehat{p}}
\newcommand{\statp}{statistical problems}
\newcommandx{\pim}[1][1=M]{\pi\sups{#1}}
\newcommandx{\pip}[1][1=\cP]{\pi\sups{#1}}
\newcommandx{\pims}{\pim[\Mstar]}
\newcommandx{\Vm}[1][1=M]{V\sups{#1}}
\newcommandx{\Vmm}[1][1=M]{V\sups{#1}(\pi\sups{#1})}
\newcommand{\LOSS}{L}
\newcommand{\losst}{loss function}
\newcommand{\loss}{\mathrm{L}}
\newcommandx{\LM}[2][1=M]{L(#1,#2)}
\newcommand{\ind}[1]{_{#1}}
\newcommand{\hpi}{\pi\subs{T+1}}
\newcommandx{\pit}[1][1=t]{\pi_{#1}}
\newcommandx{\ppt}[1][1=t]{p_{#1}}
\newcommandx{\qt}[1][1=t]{q_{#1}}
\newcommandx{\prt}[1][1=t]{\pr_{#1}}
\newcommandx{\ot}[1][1=t]{o_{#1}}
\newcommandx{\zt}[1][1=t]{z_{#1}}
\newcommandx{\act}[1][1=t]{a_{#1}}
\newcommandx{\rt}[1][1=t]{r_{#1}}
\newcommand{\algcommentbig}[1]{\textcolor{blue!70!black}{\footnotesize{\texttt{\textbf{/*
          #1~*/}}}}}
\newcommand{\etod}{{\normalfont\textsf{E2D}}}
\newcommand{\LDPetod}{{\normalfont\textsf{LDP-E2D}}}
\newcommand{\SQetod}{{\normalfont\textsf{SQ-E2D}}}
\newcommand{\sqetod}{\SQetod}
\newcommand{\ExO}{{Exploration-by-Optimization}}
\newcommand{\ExOp}{\ensuremath{ \mathsf{ExO}^+}}
\newcommand{\LDPexo}{{\normalfont\textsf{LDP-ExO}}}
\newcommandx{\Enmpi}[3][1=M,2=\pi]{\En\sups{#1,#2}\brac{#3}}
\newcommandx{\Emalg}[3][1=M,2=\alg]{\EE\sups{#1,#2}\brac{#3}}
\newcommandx{\Pmalg}[3][1=M,2=\alg]{\PP\sups{#1,#2}\paren{#3}}
\newcommand{\pJDP}{$\alpha$-JDP}
\newcommand{\piout}{\pi^{\sf out}}
\newcommand{\bPP}{\Bar{\PP}}
\newcommand{\bEE}{\Bar{\EE}}
\newcommand{\bv}{\mathbf{v}}
\newcommandx{\bpr}[1][1=\lf]{\pr_{#1}}
\newcommand{\ca}{c_\alpha}
\newcommandx{\Mpara}[1][1=M]{\theta(#1)}
\newcommandx{\RISK}[2][1=T,2=\xspace]{\mathfrak{M}_{#1}^{#2}}
\newcommandx{\RISKob}[1][1=T]{\mathfrak{M}_{#1}^{\mathsf{obl}}}
\newcommandx{\SC}[2][1=\Delta,2=\xspace]{\mathfrak{C}_{#1}^{#2}}
\newcommandx{\SCob}[1][1=\Delta]{\mathfrak{C}_{#1}^{\mathsf{ob}}}
\newcommand{\LP}{\cM}
\newcommandx{\Ncov}[3][1=\xspace,2=\Delta]{N_{#1}(#3,#2)}
\newcommand{\MPow}{\mathscr{P}}
\newcommandx{\cMPow}[1][1=\MPow]{\cM_{#1}}
\newcommand{\SQDEC}{SQ DEC}
\newcommandx{\SQ}[1][1=M]{\mathsf{STAT}_{#1}^{\tau}}
\newcommandx{\VSTAT}[1][1=M]{\mathsf{VSTAT}_{#1}^{\tau}}
\newcommandx{\GSQ}[1][1=M]{\mathsf{GQ}_{#1}^{\tau}}
\newcommandx{\phq}[2][1=\bpi]{#2(#1)}
\newcommandx{\Dph}[2][1={\phi}]{\mathsf{D}_{#1}\paren{#2}}
\newcommand{\pqb}{{\substack{p\in\DDD\\q\in\DbPi}}}
\newcommand{\bcM}{\cM^+}
\newcommand{\MPowiid}{\MPow_{\mathsf{sto}}}
\newcommand{\MPowadv}{\MPow_{\mathsf{adv}}}
\newcommand{\MPowcxt}{\MPow_{\mathsf{cxt}}}
\newcommand{\MPowsq}{\MPow_{\tau\text{-}\mathsf{query}}}
\newcommand{\MPowdp}{\MPow_{\LDPtag}}
\newcommand{\Hubertag}{\text{-}\mathsf{Huber}}
\newcommand{\MPowrob}{\MPow_{\beta\Hubertag}}
\newcommand{\cMrob}{\cM_{\beta\Hubertag}}
\newcommand{\ext}[1]{{#1}^{\sharp}}
\newcommand{\tPi}{\ext{\Pi}}
\newcommand{\tcM}{\ext{\cM}}
\newcommand{\tM}{\ext{M}}
\newcommand{\toM}{\ext{\oM}}
\newcommand{\tpi}{\bpi}
\newcommand{\lfdiv}{$\lf$-divergence}
\newcommand{\lfdivs}{$\lf$-divergences}
\newcommand{\creg}{c_{\rm reg}}
\newcommand{\rmd}{\mathsf{d}}
\newcommand{\OsqrtT}{O(\sqrt{\log T})}
\newcommand{\finite}{is compact (\cref{asmp:finite})}
\newcommandx{\hgm}[3][1=M,2=\delta]{\widehat{L}_{#2}(#1,#3)}
\renewcommand{\perp}{\mathrm{id}}
\newcommand{\tq}{\Tilde{q}}
\newcommandx{\Tdec}[2][1=\Delta]{\mathfrak{C}^{\,\sf dec}_{#1}(#2)}
\newcommand{\cDMSO}{hybrid DMSO}
\newcommand{\qbDMSO}{SQ DMSO}
\newcommand{\rDMSO}{robust DMSO}
\newcommand{\RDMSO}{Robust DMSO}
\newcommand{\pDMSO}{private DMSO}
\newcommand{\PDMSO}{Private DMSO}
\newcommand{\gDEC}{hybrid DEC}
\title{Decision Making in Changing Environments: \\
Robustness, Query-Based Learning, and Differential Privacy}
\author{Fan Chen\\{\small \texttt{fanchen@mit.edu}} \and    Alexander Rakhlin\\{\small \texttt{rakhlin@mit.edu}}
}
\begin{document}

\maketitle

\begin{abstract}
    We study the problem of interactive decision making in which the underlying environment changes over time subject to given constraints. We propose a framework, which we call \textit{hybrid Decision Making with Structured Observations} (hybrid DMSO), that provides an interpolation between the stochastic and adversarial settings of decision making. Within this framework, we can analyze local differentially private (LDP) decision making, query-based learning (in particular, SQ learning), and robust and smooth decision making under the same umbrella, deriving upper and lower bounds based on variants of the Decision-Estimation Coefficient (DEC). We further establish strong connections between the DEC's behavior, the SQ dimension, local minimax complexity, learnability, and joint differential privacy. To showcase the framework's power, we provide new results for contextual bandits under the LDP constraint.
\end{abstract}

\section{Introduction}\label{sec:intro}
The Decision-Estimation Coefficient (DEC) \citep{foster2021statistical,foster2023tight} has been recently shown to capture the difficulty of exploration in a wide range of problems in which a learning agent interacts with an unknown environment by making decisions and observing outcomes. Such problems include structured bandits, contextual bandits, and reinforcement learning, among others. 
The interaction protocol, termed \textit{Decision Making with Structured Observations} (DMSO) in  \citep{foster2021statistical}, assumes that the unknown model is fixed over the length of the interaction, i.e. the learning agent faces a stationary environment. This is often referred to as a \textit{stochastic setting}, or \textit{stochastic DMSO}. In contrast, the \textit{adversarial DMSO}, studied in \citep{foster2022complexity}, is a more complex task where the model may change arbitrarily between the rounds of the interaction. 

In this paper, we study a setting that interpolates between the stochastic and adversarial DMSO. This interpolation is achieved by placing constraints on the way the model may change over time. Within the constraint set, the model is allowed to change arbitrarily, and we refer to the setting as that of \textit{constrained adversaries}, or \textit{hybrid DMSO}. In parallel with such constraints on the adversary, we additionally study constraints placed on the information received by the decision-maker, for instance due to privacy requirements or a specific oracle model of computation. The specification of constraints allows us to study---under the same umbrella---decision making with Statistical Queries (SQ)~\citep{kearns1998efficient}, local differential privacy (LDP)~\citep{kasiviswanathan2011can,duchi2013local}, robustness with respect to model corruption~\citep{huber1965robust,huber2011robust}, and smooth decision making~\citep{rakhlin2011online}. For example, in SQ learning, the decision-maker obtains information by issuing queries; since the response to these queries is only approximately correct, it is natural to model it as a response of an adversary that has limited power in providing misleading information. Similarly, for robust decision making, we can model corruption (for instance, as in Huber's contamination model~\citep{huber1965robust}) or mis-specification (as in agnostic learning) directly as a constraint on the environment to be close to a ground-truth model. In turn, the local privacy constraint can be  formulated as a restriction on the decision-maker to only observe information through differentially private channels.

Our approach begins with the \emph{\gDEC} formulation that yields both lower and upper bounds for PAC learning and no-regret learning under \cDMSO. Then, by investigating the specific information structures imposed by the constraints on the adversary and the decision-maker, we derive the corresponding DECs and the statistical guarantees for the aforementioned (and seemingly disparate) settings. 
As such, the unified viewpoint leads to a systematic ``recipe'' for analyzing new problems under the hybrid DMSO setting; this is illustrated on numerous examples throughout the paper. 
What is perhaps even more surprising, all the upper bounds are achieved by only two algorithmic approaches: a generalization of the Exploration-by-Optimization Algorithm~\citep{lattimore2020exploration,lattimore2021mirror,foster2022complexity} and a variant of the Estimation-to-Decision Algorithm~\citep{foster2021statistical,foster2023tight}.

The fact that DMSO provides such a unified viewpoint on disparate problems is a testament to the power of the framework, with DEC as the central notion of inherent problem complexity.

\subsection{Contributions}

We formulate decision making in the setting of hybrid DMSO, generalizing the Decision-Estimation Coefficient framework~\citep{foster2021statistical,foster2023tight}. Our proposed notion of hybrid DEC allows us to understand, under the same umbrella, minimax behavior of statistical estimation and interactive decision making under such seemingly different settings as local differential privacy, query-based learning (in particular, statistical queries), robust learning, and smoothness. In particular, hybrid DECs for PAC learning and no-regret learning yield both lower and upper bounds for the corresponding learning goals. Our upper bounds are achieved by the unified Exploration-by-Optimization Algorithm (\ExOp, cf. \citet{lattimore2020exploration,lattimore2021mirror,foster2022complexity}).

As instantiations of our framework, we derive the hybrid DECs and corresponding upper and lower bounds for query-based learning (\cref{ssec:query-demo}), locally private learning (\cref{ssec:LDP-demo}), robust decision making (\cref{ssec:robust}), and decision making against smooth adversaries (\cref{ssec:smooth}). The problem of contextual bandits with adversarial contexts also naturally falls under our hybrid formulation (\cref{ssec:CBs}), and we provide novel results for this setting as well.

Our primary goal is to understand the complexity of learning problems at some level of generality, rather than specific examples. 
Still, as a concrete application, our framework provides a near-optimal $\sqrt{T}$-regret for linear contextual bandits with local privacy (without well-conditioned assumptions), settling the open problem of the optimal regret in this setting~\citep{zheng2020locally,han2021generalized,li2024optimal}.

In addition, we make the following connections to other previously studied notions: 
\begin{itemize}
    \item \textbf{SQ dimension}. The \emph{SQ dimension} proposed by \citet{feldman2017general} provides both lower and upper bounds for the optimal query complexity of SQ learning of \emph{distribution search problems}. Not surprisingly, we show that there is quantitative equivalence between the SQ dimension and our \SQDEC~(\cref{ssec:SQ-dim}). Therefore, our results extend the characterizations of \citet{feldman2017general} to general query-based learning problems.
    \item \textbf{Local-minimax optimality under LDP.} We show that our lower and upper bounds for LDP learning can be specialized to provide a tight characterization of the local-minimax complexity (\cref{ssec:local-minimax}). In particular, for functional estimation, our results recover (up to logarithmic factors) the characterization of \citet{duchi2024right} through the modulus of continuity.
    
    \item \textbf{LDP learnability.} We show that for any problem class, the \dct~\citep{chen2024beyond} characterizes the finite-time LDP learnability (\cref{sec:DC}). In \cref{ssec:JDP}, we also relate \dct~to the learnability under joint differential privacy (JDP) and the representation dimension~\citep{beimel2013characterizing}. 
\end{itemize}

\subsection{Related work}

\paragraph{Decision-Estimation Coefficient Framework}
Towards a unifying framework for interactive decision making, \citet{foster2021statistical} propose Decision-Estimation Coefficient (DEC), which provides both lower and upper bounds for any decision making problem.
An active line of research~\citep{foster2022complexity,chen2022unified,foster2023tight,foster2023complexity,glasgow2023tight,chen2024beyond} has extended the DEC framework to various more general learning goals, including adversarial decision making~\citep{foster2022complexity}, PAC decision making~\citep{chen2022unified,foster2023tight}, reward-free learning and preference-based learning~\citep{chen2022unified}, multi-agent decision making and partial monitoring~\citep{foster2023complexity}, and interactive estimation~\citep{chen2022unified,chen2024beyond}. The present work further extends the DEC framework to handle changing environments and constraints on the decision maker, and our results heavily draw on the techniques developed in these previous papers.

\paragraph{Exploration-by-Optimization}
The \emph{Exploration-by-Optimization} technique is powerful machinery developed in \citet{lattimore2020exploration,lattimore2021mirror} for partial monitoring in adversarial environments and later extended by \citet{foster2022complexity} to decision making in adversarial environments, achieving upper bounds in terms of the generalized Information Ratio~\citep{russo2014learning,russo2018learning,lattimore2021mirror} or the DEC~\citep{foster2021statistical,foster2022complexity}. In the present work, we further extend this technique by incorporating the notion of \emph{information sets}, allowing a more granular quantification of the information and model equivalences that the decision-maker can take advantage of. The idea of using information sets in the context of posterior sampling was proposed by Dylan Foster back in 2022, and was considered by the authors of \citep{foster2022note} as a way of improving DEC-based results for reinforcement learning.

\paragraph{Local differential privacy}
The notion of local differential privacy (LDP) was formalized by \citet{kasiviswanathan2011can,duchi2013local}, with some earlier work on this subject dating back to \citet{warner1965randomized}. 
A line of research has been investigating the statistical complexity of locally private learning for various statistical estimation problems~\citep{duchi2013local,duchi2018minimax,duchi2019lower}, including mean estimation~\citep{asi2022optimal,asi2024fast}, functional estimation~\citep{rohde2020geometrizing,butucea2021locally,butucea2023interactive,duchi2024right}, hypothesis testing~\citep{berrett2020locally,li2023robustness} and selection~\citep{gopi2020locally,pour2024sample}, and regression~\citep{wang2019sparse,berrett2021strongly}, to name a few. Beyond the setting of statistical estimation, recent research studies the complexity of interactive decision making with local privacy constraints, including contextual bandits~\citep{zheng2020locally,han2021generalized,li2024optimal} and %
episodic RL~\citep{garcelon2021local,liao2023locally}. Notably, these works mainly focus on specific problems and adopt problem-tailored approaches.

\paragraph{The role of interaction in LDP learning} 
It has long been known that there is a statistical separation between non-interactive private channels and sequential private channels \citep{kasiviswanathan2011can}. As is surveyed by \citet{butucea2023interactive}, the separation of sample complexity between non-interactive and interactive channels is identified for certain problems of testing \citep{berrett2020locally} and functional estimation \citep{butucea2021locally,butucea2023interactive}. Therefore, even for statistical problems (where samples are being generated i.i.d), interactive learning is generally necessary to achieve optimal sample complexity under LDP constraints. As the DEC framework characterizes the complexity of exploration of interactive decision making, it is suitable for quantifying the complexity of interactive LDP learning.

\paragraph{Statistical Queries}
The Statistical Query (SQ) model was introduced by \citet{kearns1998efficient} as a restricted PAC learning model, and it turns out to be a powerful tool for understanding the computational complexity of a wide range of algorithms and problems~\citep{feldman2015complexity,feldman2017statistical,diakonikolas2017statistical,brennan2020statistical}. Variants of SQ model have also been studied~\citep{bshouty2002using,feldman2017general,joshi2024complexity}. The connection between local DP and SQ learning has been identified by \citet{kasiviswanathan2011can}. 
For distributional search problems, \citet{feldman2017general} characterized the SQ query complexity in terms of the \emph{SQ dimension}, which turns out to be recovered by the \SQDEC~(when specialized to this case).

\paragraph{Robust statistics}
The \emph{robustness} of a statistical procedure refers to the ability to adapt to model mis-specification or perturbation. In robust statistics, the contamination model of \citet{huber1965robust} has been extensively studied, where the data are assumed to be sampled i.i.d from a distribution that is $\beta$-contaminated from the ground-truth distribution. A recent line of work~\citep{diakonikolas2019robust,diakonikolas2019recent,liu2021settling,diakonikolas2023algorithmic,canonne2023full}, among others, studied stronger contamination models, where the adversary is allowed to maliciously corrupt $\beta$-fraction of the whole dataset (detailed discussion in \cref{ssec:robust-more}). The connection between robustness and differential privacy is also studied by \citet{georgiev2022privacy,hopkins2023robustness,asi2023robustness}.

\section{Overview of Results}\label{sec:overview}

We start this section by formulating the \emph{\cDMSO} framework (\cref{ssec:cDMSO}), a generalization of the Decision Making with Structured Observation (DMSO) framework proposed by \citet{foster2021statistical}. We then show how this generalization encompasses query-based learning (\pref{ssec:query-demo}), locally differentially private learning (\pref{ssec:LDP-demo}), and robust decision making (\cref{ssec:robust}). For each setting, we formulate a corresponding variant of DMSO, the corresponding DEC, and the ensuing PAC guarantees. We also present regret guarantees for \cDMSO~(\cref{ssec:cDMSO-reg}), with application to smooth learning (\pref{ssec:smooth}).

\subsection{Hybrid DMSO}\label{ssec:cDMSO}

In the DMSO formulation, studied in  \citep{foster2021statistical},  the learner (or, the decision maker) interacts for $T$ rounds with the environment described by an underlying model $\Mstar$, unknown to the learner (detailed discussion in \cref{ssec:sto-DMSO}).
While the DMSO formulation is general enough to capture various learning tasks and problem classes, it is restricted to the \emph{stochastic} setting, where the underlying environment is stationary (specified by the model $\Mstar$). However, in many applications, the environment is best described as non-stationary and changing according to the previous history of interaction, while at the same time satisfying certain constraints. Inspired by \citet{foster2022complexity}, who consider an arbitrarily changing environment, we propose the following \emph{\cDMSO} formulation. We will reserve the term ``stochastic DMSO'' for the original DMSO setting of \citet{foster2021statistical}.

\newcommand{\cPs}{\cP^\star}
\newcommand{\bPi}{\mathbf{\Pi}}

In the \cDMSO~setting studied here, the environment is allowed to be (adaptively) adversarial with certain constraints, while the learner has to interact with the environment through a given class $\Phi$ of \emph{measurements}. Specifically, let $\bPi=\Pi\times \Phi$ be the joint decision space, and let $(\bPi\to\DO)$ be the set of all \emph{models}, with each model being a conditional distribution of observation given a (decision, measurement) pair. A \emph{constraint} for the adversary will be modeled by a subset $\cP\subseteq (\bPi\to\DO)$, and a collection of constraints—as a set $\MPow$ of such subsets. We consider the following $T$-round interaction protocol between the environment and the learner:

\begin{enumerate}
    \item Before the interaction, the environment (or, the adversary) selects a constraint $\cPs\subseteq (\bPi\to\DO)$, without revealing it to the learner. 
    \item For each $t=1,\cdots,T$:
\begin{itemize}
  \setlength{\parskip}{2pt}
    \item The environment selects $M^t\in \cPs$, and the learner selects a decision $\bpi\ind{t}=(\pi\ind{t},\phi\ind{t})\in \bPi$. 
    \item The learner observes $o\ind{t}\in\cO$, sampled according to $o\ind{t}\sim M^t(\pi\ind{t},\phi\ind{t})$.
\end{itemize}
\end{enumerate}
The set $\cPs$ restricts the power of the adversary, and we assume the learner has access to a collection $\MPow$ of constraints that contains $\cPs$. In other words, $\MPow$ reflects prior knowledge of the possible constraints on the adversary. We formalize this assumption as follows.
\begin{assumption}[Constraint realizability]
The given class $\MPow$ contains $\cPs$.
\end{assumption}

For some of the settings studied in this paper, the prior knowledge is additionally reflected in a more succinct \emph{model class} $\cM\subseteq (\bPi\to\DO)$, and the constraint class $\MPow$ will reflect this choice.

The general formulation of constraints interpolates between
\begin{itemize}
    \item \textit{stochastic DMSO} framework~\citep{foster2021statistical}, where the environment is stochastic, i.e., $M^1=\cdots=M^T=\Mstar\in\cM$, and it can be specified by constraint $\cPs=\set{\Mstar}$ and $\MPowiid=\set{\set{\Mstar}: \Mstar\in\cM}$, and
    \item \textit{adversarial DMSO} framework~\citep{foster2022complexity} (detailed in \cref{ssec:adv-DMSO}), where the environment is fully adversarial, i.e., the constraint is $\cPs=\cM$ and $\MPowadv=\set{\cM}$. 
\end{itemize}
Further examples of \cDMSO~include \emph{\qbDMSO} (\cref{ssec:query-demo}), where the environment is allowed to respond to queries with values that are $\tau$-correct with respect to a ground truth model $\Mstar$, and \emph{\rDMSO} (\cref{ssec:robust}), where the environment is allowed to perturb the observation generated by a ground truth model $\Mstar$ with a fixed probability $\beta$.

In addition to the constraints on the way the environment may change, the class $\Phi$ of measurements encodes constraints on the learner, affecting the information the learner observes. For instance, in the examples studied in this paper, the measurements will take the form of allowed queries (\cref{ssec:query-demo}) or differentially private channels (\cref{ssec:LDP-demo}). 
Of course, the case of $\cPs\subseteq (\Pi\to\DO)$ may be regarded as the trivial choice $\Phi=\set{\perp}$ of identity measurement.

\paragraph{Learning objective}
In PAC learning, the goal of the learner is to select an \emph{output decision} $\hpi\in\Pi$ after $T$ rounds of interaction, with the performance measured by
\begin{align}\label{eqn:def-risk-Ps}
    \Riskdm(T) \ldef \EE_{\hpi\sim \phat}\brac{ \LM[\cPs]{\hpi} },
\end{align}
where $\hpi\sim \phat$ is the randomized decision of the learner, $L:\MPow\times\Pi\to\R$ is a known loss function. 

To simplify the presentation in this section, we mainly focus on the PAC formulation, deferring the study of regret to \cref{ssec:cDMSO-reg}. Further, we present all the results in terms of a \emph{metric-based} loss function, which is specified by a certain pseudo-metric structure over the decision space $\Pi$.

\begin{definition}[Metric-based loss function]\label{def:metric-cP}
A loss function $L:\MPow\times\Pi\to\R$ is induced by a metric (or simply \emph{metric-based}) if the decision space $\Pi$ can be equipped with a pseudo-metric $\rho$ such that $L(\cP,\pi)=\rho(\pim[\cP],\pi)$, where $\cP\mapsto \pim[\cP]$ is a map from $\MPow$ to $\Pi$. 
\end{definition}

For many applications in statistics, the loss function is naturally metric-based, e.g., hypothesis testing and estimation \citep{casella2002statistical}.

\paragraph{PAC \gDEC~and guarantees}
For any \cDMSO~problem specified by the constraint class $\MPow$, we define the \gDEC~of $\MPow$ with respect to a reference model $\oM$ as
\begin{align}\label{def:p-dec-gen}
    \pdecg_{\eps}(\MPow,\oM)\defeq \infpqb \sup_{\cP\in\MPow} \constr{ \EE_{\pi\sim p} L(\cP,\pi) }{ \inf_{M\in\co(\cP)}\EE_{\bpi\sim q} \DH{ M(\bpi), \oM(\bpi) } \leq \eps^2 },
\end{align}
and $\pdecg_{\eps}(\MPow)=\sup_{\oM\in\bcM} \pdecg_{\eps}(\MPow,\oM)$, where the supremum is taken over the class of reference models $\bcM\defeq \co(\cup_{\cP\in\MPow}\cP)$. 

We now present the first result, which states that under the \cDMSO~framework, \gDEC~provides both lower and upper bounds for the minimax risk. The minimax risk quantifies the fundamental limit of learning, as it measures the best possible performance of an algorithm in the face of a worst-case environment constrained by $\MPow$ (see \pref{sec:cDMSO-main} for details).
\begin{theorem}[PAC lower and upper bounds; Informal]\label{demo:cDMSO-pac}
Let $T\geq 1$, and $L$ be metric-like. Under mild growth assumption, the following holds:
\begin{align*}
    \pdecg_{\ueps(T)}(\MPow)\leqsim \inf_{\alg}\sup_{\env} \EE\sups{\env,\alg}\brac{\riskdm(T)}\leqsim \pdecg_{\oeps(T)}(\MPow),
\end{align*}
where $\inf_{\alg}$ is taken over all $T$-round algorithms $\alg$, $\sup_{\env}$ is taken over all environments $\env$ constrained by $\MPow$, $\ueps(T)\asymp \frac{1}{\sqrt{T}}$, $\oeps(T)\asymp \sqrt{\frac{\log|\MPow|}{T}}$, and we omit poly-logarithmic factors.
\end{theorem}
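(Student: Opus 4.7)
The plan is to prove the two inequalities separately, following the standard DEC template, adapted to the hybrid protocol. The lower bound will come from a Le Cam / Fano--style reduction to hypothesis testing, while the upper bound will come from a generalization of the Exploration-by-Optimization algorithm (\ExOp) combined with online learning over the constraint class $\MPow$. The $\sqrt{\log|\MPow|}$ gap between $\ueps(T)$ and $\oeps(T)$ is precisely the price of the online learning step; it appears in the upper bound but is absent from the purely information-theoretic lower bound.

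\textbf{Lower bound.} First I would fix any algorithm $\alg$ and a reference model $\oM\in\bcM$, and let $(p,q)\in\Delta(\Pi)\times\Delta(\bPi)$ nearly attain the infimum defining $\pdecg_{\ueps(T)}(\MPow,\oM)$. I would consider the adversary that ``effectively plays'' $\oM$ by mixing within whichever $\cPs\in\MPow$ realizes $\oM$ in its convex hull; this yields an average query distribution $q_{\alg,\oM}$. For any alternative $\cP\in\MPow$ and any $M\in\co(\cP)$ with $\EE_{\bpi\sim q_{\alg,\oM}}\DH{M(\bpi),\oM(\bpi)}\leq \ueps(T)^2\asymp 1/T$, the Hellinger chain rule for sequential protocols yields $\DH{\P_T^{\alg,\oM},\P_T^{\alg,M}}\lesssim T\cdot \ueps(T)^2 = O(1)$, so the output decision $\hpi$ is $O(1)$-indistinguishable in Hellinger under the two adversaries. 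A two-point Le Cam argument, combined with the triangle inequality enabled by the metric-based loss structure of \pref{def:metric-cP}, then forces the minimax risk to be at least a constant multiple of $\EE_{\pi\sim p}L(\cP,\pi)$. Taking $\sup_{\cP\in\MPow}$ subject to the DEC constraint, and then $\sup_{\oM\in\bcM}$, reproduces $\pdecg_{\ueps(T)}(\MPow)$.

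\textbf{Upper bound.} The plan is to run a hybrid variant of \ExOp: at round $t$, maintain exponential weights $\nu_t\in\Delta(\MPow)$ over constraints based on accumulated log-likelihoods of observed outcomes, induce an aggregate reference $\oM_t$ from $\nu_t$, and solve the regularized minimax problem defining $\pdecg_{\oeps(T)}(\MPow,\oM_t)$ to obtain $(p_t,q_t)$; then sample $\pi_t\sim p_t$, $\bpi_t\sim q_t$, and update $\nu_{t+1}$. A standard potential / online-learning decomposition bounds the expected cumulative risk by the sum of (i) the per-round DEC term, scaling as $T\cdot\pdecg_{\oeps(T)}(\MPow)$, and (ii) an exponential-weights regret over $\MPow$ of order $\log|\MPow|/\oeps(T)^2$. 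Balancing the two by choosing $\oeps(T)^2\asymp \log|\MPow|/T$ yields the claimed upper bound (up to logarithmic factors), after extracting $\hpi$ by uniform mixing over $p_1,\ldots,p_T$.

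\textbf{Main obstacles.} The central technical difficulty will be to correctly account for the convex-hull structure $\co(\cP)$ inside the DEC: the adaptive adversary can realize any element of $\co(\cPs)$ over $T$ rounds by mixing, and both the Hellinger tensorization (for the lower bound) and the per-round minimax certificate produced by \ExOp (for the upper bound) must respect this mixing. A secondary obstacle is the ``mild growth assumption'' on $\eps\mapsto \pdecg_\eps$ that is needed in the upper bound to convert the per-step estimation error at scale $\oeps(T)$ into an integrated risk bound without losing additional powers; a Lipschitz or doubling condition in $\eps$, in the spirit of \citet{foster2023tight}, should suffice. Finally, the lower bound's reliance on the metric-based loss in \pref{def:metric-cP} requires a careful triangle-inequality step at the reference model $\oM$, and in particular a choice of $\oM$ that simultaneously separates $\pim[\cP]$ from $p$ for the worst-case $\cP\in\MPow$.
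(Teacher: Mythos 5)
Your lower-bound sketch is essentially the paper's argument: the hard instance is a \emph{stationary} adversary drawing $M^t\sim\mu$ i.i.d.\ from some $\mu\in\Delta(\cPs)$, thereby realizing an arbitrary element of $\co(\cPs)$; the Hellinger chain rule gives $O(T\ueps(T)^2)=O(1)$ indistinguishability from the algorithm's behavior under the ghost reference $\oM$; and the metric structure upgrades the resulting constant-probability bound to a bound on the DEC itself (the paper packages this as a quantile DEC, \cref{prop:p-dec-q-gen-lower}, plus a clustering/triangle-inequality step, \cref{lem:p-dec-q-gen-metric}). Two points of care. First, the pair $(p,q)$ must be the algorithm's own induced output and query distributions $(p_{\oM,\alg},q_{\oM,\alg})$ --- not a pair attaining the infimum in the DEC, as you write --- since only then does $\sup_{\cP}$ evaluated at $(p,q)$ dominate $\pdecg_{\ueps(T)}(\MPow,\oM)$. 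Second, $\oM\in\bcM$ is in general a mixture across \emph{different} constraint sets and hence not itself a playable environment, so the comparison cannot literally be a two-point Le Cam against $\oM$: it must either be one-sided (transfer a bad event of probability $\geq\delta$ from the $\oM$-ghost to the single real environment) or a two-point argument between two \emph{feasible} constraints whose representatives are both Hellinger-close to $\oM$.

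The genuine gap is in the upper bound. What you describe --- exponential weights on accumulated log-likelihoods, an aggregated reference $\oM_t$, then solving the constrained DEC minimax at $\oM_t$ --- is Estimation-to-Decisions with Vovk aggregation, not Exploration-by-Optimization, and it does not go through in hybrid DMSO. The "log-likelihood of $o\ind{t}$ under a constraint $\cP$" is undefined ($\cP$ is a set of models), and an adaptive adversary may play a different $M^t\in\cPs$, hence a different effective conditional law $\EE[M^t\mid\cH\ind{t-1}]\in\co(\cPs)$, at every round depending on the history. Aggregation yields a guarantee against a \emph{fixed} comparator, so it does not control $\sum_t\inf_{M\in\co(\cPs)}\EE_{\bpi\sim q\ind{t}}\DH{M(\bpi),\oM_t(\bpi)}$, which is exactly the certificate your per-round DEC step requires: there need not exist any single $M\in\co(\cPs)$ whose cumulative log loss tracks the moving target. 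The paper's \ExOp\ avoids this by making the weight function $\wf\ind{t}$ part of the per-round minimax problem \eqref{eqn:ExO-opt} with objective $\Gamma_{\qd,\gamma}$ in \eqref{eqn:Gamma-f}, so the exponential-weight update and the exploration distribution are certified simultaneously against every $(M,\cP)$ with $M\in\cP$; the minimax theorem (\cref{thm:exo-to-dec}) is then what converts the adversary's round-by-round randomization into the $\inf_{M\in\co(\cP)}$ appearing in $\pdecg$. Your final balancing $\oeps(T)^2\asymp\log\abs{\MPow}/T$ and the role of the growth assumption (the offset-to-constrained conversion, \cref{lem:offset-to-c-metric}) are correct, but they sit on top of an estimation step that is not sound in this setting.
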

We note that the lower bound applies to the \emph{stationary} adversaries, while the upper bound (achieved by \ExOp) applies to arbitrary (adaptive) adversarial environments.

Let us now discuss the qualitative behavior of $\pdecg_\eps(\MPow)$ with respect to the constraint class $\MPow$. To start, consider stochastic DMSO, where each constraint is given by a singleton $\cP=\{M\}$. In this case, the infimum over $M\in\co(\cP)$ disappears, recovering the definition of the original PAC DEC in \cite{foster2023tight} (see also \eqref{def:p-dec-c}). As constraints become less stringent (informally, $\cP$'s become larger), the value of the DEC increases as the Hellinger-based constraint becomes easier to satisfy. Similarly, constraints on the learner are also reflected in the Hellinger term through the amount of information the measurements provide, as will be evident in the forthcoming calculations. 

In the rest of this section, we detail how both types of  constraints result in the corresponding measures of complexity and the guarantees for the settings of query-based learning (\cref{ssec:query-demo}), locally differentially private learning (\cref{ssec:LDP-demo}), and robust decision making (\cref{ssec:robust}).

\subsection{Query-based learning}\label{ssec:query-demo}

\newcommand{\cMqb}{\cM^{\mathsf{d}}}
\newcommand{\cMDP}{\cM^{\scriptsize \mathsf{DP}}}
\newcommand{\cMtr}{\cM^{\mathsf{tr}}}

In query-based learning, the environment responds to the learner's \emph{measurements} (or, \emph{queries}) with answers that are close to the answer under the ground-truth model $\Mstar: \Pi\times\Phi\to\cV$, and we recall that we denote $\bPi\defeq \Pi\times\Phi$. 

We formulate the interaction protocol of ($\tau$-correct) \qbDMSO~as follows.
For each $t=1,\cdots,T$:
\begin{itemize}
    \item The learner selects a decision $\pi\ind{t}\in\Pi$ and a measurement $\phi\ind{t}\in \Phi$. %
    \item The environment selects (possibly adversarially) $v_t\in\cV$ such that $\nrm{v_t-\Mstar(\pi\ind{t},\phi\ind{t})}\leq \tau$ and reveals $v_t$ to the learner, where $\cV$ is a fixed normed vector space, and $\tau\geq 0$ is a known tolerance parameter.
\end{itemize}
In \qbDMSO, the underlying model $\Mstar$ is a \emph{deterministic} map $\Pi\times\Phi\to\cV$, and the learner is assumed to have access to a known model class $\cM\subseteq (\Pi\times\Phi\to\cV)$ that contains $\Mstar$.\footnote{The class of all stochastic models is given by $(\bPi\to\Delta(\cV))$, corresponding to noisy responses. We regard $\cM\subset (\bPi\to\Delta(\cV))$.} After $T$ rounds of interaction, the learner selects an output decision $\hpi\sim \phat$ and incurs the PAC risk
\begin{align}\label{eqn:def-risk-sq}
\Riskdm(T) \ldef \EE_{\phat}\brac{ \LM[\Ms]{\hpi} },
\end{align}
where $L:\cM\times\Pi\to \R$ is a given loss function. 
This formulation encompasses the commonly studied \emph{Statistical Query} (SQ) learning~\citep{kearns1998efficient} and its various variants~\citep[etc.]{bshouty2002using,feldman2017general}. Further examples are detailed in \cref{sec:SQ}.

The setting we just described combines constraints on both the learner and the adversary. Indeed, the class $\Phi$ represents constraints on the decision maker, limiting the information it receives. Since answers to the measurements may be imprecise (up to the tolerance level $\tau$), the interaction can be modeled as decision making with a constrained adversary. Before we discuss the details of specializing the \cDMSO~framework, we first present the definition of the DEC specific to query-based learning and its main guarantees.

\newcommand{\ov}{\Bar{v}}
\paragraph{\SQDEC}
For a given model class $\cM\subseteq (\bPi\to\cV)$ and a (randomized) reference model $\oM\in\bcM$, we define the \SQDEC~at $\oM$ as 
\begin{align}\label{def:sq-dec}
    \pdecltau_{\eps}(\cM,\oM)\defeq&~ \infpqb \sup_{M\in\cM}\constr{ \EE_{\pi\sim p}[\LM{\pi}] }{ \PP_{\bpi\sim q, \ov\sim \oM(\bpi)} \paren{ \nrm{M(\bpi)-\ov}>\tau }\leq \eps^2 }.
\end{align}
We further define the \SQDEC~of $\cM$ as $\pdecltau_{\eps}(\cM)=\sup_{\oM}\pdecltau_{\eps}(\cM,\oM)$,
where the supremum is taken over all randomized reference models $\oM:\Phi\to\Delta(\cV)$.

For query-based learning, our main result is given by the following theorem:
\begin{theorem}[\SQDEC~lower and upper bounds; Informal]\label{demo:SQ}
Let $T\geq 1$, $\cM$ be a given model class, and the loss function $L$ be metric-based. Then under certain growth conditions, it holds that
\begin{align*}
    \pdecltau_{\ueps(T)}(\cM)\leqsim \inf_{\alg}\sup_{\env} \EE\sups{\env,\alg}\brac{ \riskdm(T) } \leqsim \pdecltau_{\oeps(T)}(\cM),
\end{align*}
where $\sup_{\env}$ is taken over all  environments satisfying query correctness with tolerance $\tau$ for a model $\Mstar\in\cM$, $\ueps(T)\asymp \frac{1}{\sqrt{T}}$, $\oeps(T)\asymp \sqrt{\frac{\log|\cM|}{T}}$.
\end{theorem}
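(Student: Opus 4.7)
The plan is to recognize query-based learning as a special case of hybrid DMSO and invoke \cref{demo:cDMSO-pac}, with the bulk of the work devoted to matching the \SQDEC~with the corresponding \gDEC. For each candidate ground truth $\Ms\in\cM$ I would define the constraint set
\[
\cP_{\Ms} \defeq \set{N:\bPi\to\Delta(\cV) : \supp(N(\bpi))\subseteq B_\tau(\Ms(\bpi))\ \text{for all } \bpi\in\bPi},
\]
and let $\MPowsq = \set{\cP_{\Ms} : \Ms\in\cM}$ with the loss lifted via $L(\cP_{\Ms},\pi)\defeq L(\Ms,\pi)$. A $\tau$-correct SQ environment for some $\Ms\in\cM$ is exactly a hybrid DMSO environment constrained by $\cP_{\Ms}$, so constraint realizability holds and $|\MPowsq|=|\cM|$, which accounts for the $\sqrt{\log|\cM|/T}$ factor in $\oeps(T)$.

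Next, I would prove the equivalence $\pdecg_{\eps}(\MPowsq,\oM)\asymp \pdecltau_{\eps}(\cM,\oM)$ up to constant factors in $\eps$. Observe that $\co(\cP_{\Ms})$ is exactly the set of stochastic models whose per-$\bpi$ support lies in $B_\tau(\Ms(\bpi))$. Writing $p_{\Ms}(\bpi)\defeq \PP_{v\sim\oM(\bpi)}(\nrm{v-\Ms(\bpi)}>\tau)$, the truncation $N^\sharp(\bpi)\defeq \oM(\bpi)|_{B_\tau(\Ms(\bpi))}$ lies in $\co(\cP_{\Ms})$, and the elementary calculation $2-2\sqrt{1-p}\leq 2p$ yields $\DH{N^\sharp(\bpi),\oM(\bpi)}\leq 2\,p_{\Ms}(\bpi)$; hence the $\pdecltau$ probability constraint at level $\eps^2$ implies the $\pdecg$ Hellinger constraint at level $O(\eps^2)$. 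Conversely, since any $N\in\co(\cP_{\Ms})$ puts zero mass outside $B_\tau(\Ms(\bpi))$, $\DTV{N(\bpi),\oM(\bpi)}\geq p_{\Ms}(\bpi)$, which translates the $\pdecg$ constraint back into a tail bound of the form required by $\pdecltau$.

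With the equivalence in hand, I would plug $\MPowsq$ into \cref{demo:cDMSO-pac}. For the lower bound, every stationary $\tau$-correct SQ environment is in particular a stationary hybrid environment, so $\pdecg_{\ueps(T)}(\MPowsq)$ lower-bounds the minimax risk, and the equivalence converts this to the desired $\pdecltau_{\ueps(T)}(\cM)$ bound. For the upper bound, running $\ExOp$ as in \cref{demo:cDMSO-pac} yields a risk bound of $\pdecg_{\oeps(T)}(\MPowsq)$, which the equivalence converts to $\pdecltau_{\oeps(T)}(\cM)$ with $\oeps(T)\asymp \sqrt{\log|\cM|/T}$.

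The main obstacle is making the DEC equivalence tight in the upper-bound direction. The SQ-to-hybrid conversion from probability $\eps^2$ to Hellinger $O(\eps^2)$ is clean via truncation, but the reverse a priori loses a square root: $\DTV{P,Q}\leq \sqrt{\DH{P,Q}}$ combined with Cauchy--Schwarz only yields $\EE_q p_{\Ms}\leq \eps$ from $\EE_q\DH{N,\oM}\leq \eps^2$. I would close this gap by restricting the reference-model supremum to the SQ-natural class $\oM:\Phi\to\Delta(\cV)$ lying in $\co(\cup_{\Ms\in\cM}\cP_{\Ms})$---which is where the $\ExOp$ iterates naturally live---so that the truncation step becomes essentially an equality, the tail mass $p_{\Ms}$ and the Hellinger gap have matching orders, and the two constraints coincide at level $\eps^2$ without the square-root loss.
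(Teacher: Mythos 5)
Your high-level route is the same as the paper's: frame $\tau$-correct query-based learning as hybrid DMSO with constraints $\cP_{\Ms}$ and constraint class $\MPowsq$, establish a two-sided equivalence between $\pdecg_{\eps}(\MPowsq)$ and $\pdecltau_{\eps}(\cM)$, and invoke \cref{demo:cDMSO-pac}. This is precisely the paper's derivation via \cref{lem:gen-to-sq-dec} together with \cref{thm:cDMSO-pac-lower} and \cref{thm:cDMSO-pac-upper} (the paper additionally gives a direct lower-bound proof and an E2D-based upper bound, but those are alternatives, not prerequisites).

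The gap is in the reverse direction of the DEC equivalence, which you correctly flag as the main obstacle but do not actually close. Passing through $\DTV{N(\bpi),\oM(\bpi)}\le\sqrt{2\DH{N(\bpi),\oM(\bpi)}}$ converts a Hellinger constraint at level $\eps^2$ into a tail-probability constraint only at level $O(\eps)$, i.e.\ an SQ DEC radius of order $\sqrt{\eps}$; propagated through the upper bound this would give $\pdecltau_{\eps}$ at radius $\asymp T^{-1/4}$ rather than $\asymp\sqrt{\log|\cM|/T}$. Your proposed fix---restricting the reference-model supremum to $\co(\cup_{\Ms}\cP_{\Ms})$ so that ``the tail mass and the Hellinger gap have matching orders''---is not a proof: the relationship between $\inf_{N\in\co(\cP_{\Ms})}\EE_{\bpi\sim q}\DH{N(\bpi),\oM(\bpi)}$ and $p_{\Ms}(\bpi)$ is dictated by the support constraint on $N$, not by where $\oM$ lives, and restricting $\oM$ would moreover change the quantity $\pdecltau_{\eps}(\cM)$ being bounded (its supremum ranges over all randomized reference models). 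The correct resolution, which needs no restriction, is the sharp sandwich of \cref{lem:Hellinger-to-indic}: for any $P'$ supported on $\cO_0\defeq\set{v:\nrm{v-\Ms(\bpi)}\le\tau}$, Cauchy--Schwarz applied on $\cO_0$ gives $\int\sqrt{dP'\,dP}\le\sqrt{P(\cO_0)}$, hence
\begin{align*}
\DH{P',P}\;\ge\;1-\sqrt{1-P(\cO_0^c)}\;\ge\;\tfrac12\,P(\cO_0^c),
\end{align*}
which is \emph{linear}, not quadratic, in the tail mass. Combined with your truncation upper bound, both constraints then sit at scale $\eps^2$, yielding $\pdecltau_{\eps/2}(\cM,\oM)\le\pdecg_{\eps}(\MPowsq,\oM)\le\pdecltau_{\eps}(\cM)$ up to constants, after which the rest of your argument goes through.
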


\paragraph{From \cDMSO~to \qbDMSO}
To frame the ($\tau$-correct) \qbDMSO~within \cDMSO, we can consider the constraint $\cP_{\Mstar}$ specified by a model $\Mstar\in\cM$:
\begin{align}\label{eqn:SQ-cP}
    \cP_{\Mstar}\defeq \set{ M\in\bcM : \forall \bpi\in\bPi, \forall v\in\supp(M(\bpi)), \nrm{v-\Mstar(\bpi)}\leq \tau },
\end{align}
and the constraint class corresponding to $\cM$ is given by $\MPowsq=\set{\cP_{\Mstar}: \Mstar\in\cM}$, with loss function $L(\cP_{\Mstar},\pi)\defeq L(\Mstar,\pi)$. 

While our characterization of query-based learning (\cref{demo:SQ}) is derived by a direct proof (cf. \cref{appdx:SQ}), we can also obtain it by applying \cref{demo:cDMSO-pac}. Specifically, under the above choice~\cref{eqn:SQ-cP}, for any model $M\in\cM$, we have
\begin{align*}
    \inf_{M'\in\co(\cP_M)}\EE_{\bpi\sim q} \DH{ M'(\bpi), \oM(\bpi) }
    \asymp \PP_{\bpi\sim q, v\sim \oM(\bpi)}\paren{ \nrm{M(\bpi)-v}>\tau } ,
\end{align*}
where $\asymp$ here means lower and upper bounds up to constant factors (cf. \cref{lem:Hellinger-to-indic}). %
Hence,
\begin{align*}
    \pdecltau_{\eps/2}(\cM)\leq \pdecg_{\eps}(\MPowsq)\leq \pdecltau_{\eps}(\cM).
\end{align*}
Therefore, under \qbDMSO, the \gDEC~is equivalent to the \SQDEC, and the general guarantees of \cref{demo:cDMSO-pac} apply. Details are postponed to \cref{appdx:gen-to-sq-dec}.

\subsection{Locally differentially private learning}\label{ssec:LDP-demo}

The second example of \cDMSO~is locally differentially private (LDP) learning. 
We first define the differentially private (DP) channels as follows.

\begin{definition}[Differentially private channels]
For the latent observation space $\cZ$ and the noisy observation space $\cO$, a channel $\pr$ is a (measurable) map from $\cZ\to\DO$. %
A channel $\pr$ is \pDP~if for $z, z'\in\cZ$ and any measurable set $E\subseteq \cO$,
\begin{align*}
    \pr(E|z)\leq e^\alpha\pr(E|z').
\end{align*}
\end{definition}
For a fixed pair $(\cZ,\cO)$ of spaces, we denote by $\Pcp$ the class of all \pDP~channels.
To simplify the presentation, we assume that $\alpha\leq \alpha_0$ for a pre-specified universal constant $\alpha_0$, and we will hide dependence on $\alpha_0$.
We also assume the observation space $\cO$ is non-trivial, i.e., $|\cO|\geq 2$.

\paragraph{DMSO with local privacy constraint (\PDMSO)}
We consider the following private variant of the DMSO framework, with the local privacy constraint formalized by a class of private channels $\Pc$. 
For each round $t=1,...,T$: 
\begin{itemize}
  \setlength{\parskip}{2pt}
    \item The learner selects a decision $\pit\in \Pi$ and a private channel $\prt\in\Pc$, where $\Pi$ is the decision space.
    \item The environment generates $\zt\in \cZ$ sampled via $\zt\sim \Mstar(\pit)$, where $\cZ$ is the observation space. %
    \item The learner receives a noisy observation $\ot\in\cO$ sampled via $\ot\sim \prt(\cdot|\zt)$.
\end{itemize}

In private DMSO, the environment is stationary and specified by an underlying model $\Mstar:\Pi\to \DZ$, and the learner is assumed to have access to a known model class $\cM\subseteq (\Pi\to\DZ)$ that contains $\Mstar$. As such, \pDMSO~is encompassed by the stochastic DMSO framework.

In this paper, we focus on $\Pc=\Pcp$, the class of \pDP~channels. We call a $T$-round algorithm as preserving \pLDP~(or simply \pLDP) if it is a learner in the above sense. This formulation is equivalent to the commonly studied model of \emph{sequential LDP channel}~\citep{duchi2018minimax}. Detailed discussion is deferred to \cref{appdx:sequential-channels}.

\paragraph{\PDEC}
Let $\Lc=(\cZ\to[0,1])$ be the class of functions from $\cZ$ to $[0,1]$. For any $\lf\in\Lc$, we define the \lfdiv~between distributions $P,Q\in\DZ$ as
\begin{align}\label{def:Dl}
    \Dl(P,Q)\defeq \abs{ \EE_{z\sim P} [\lf(z)] - \EE_{z\sim Q} [\lf(z)] }.
\end{align}
For a model class $\cM$ and a reference model $\oM\in\coM$, the convex hull of $\cM$, we define \pDEC~at $\oM$ as
\begin{align}
    \pdecl_{\eps}(\cM,\oM)\defeq&~ \inf_{\substack{p\in\DDD\\q\in\DPL}}\sup_{M\in\cM}\constr{ \EE_{\pi\sim p}[\LM{\pi}] }{ \EE_{(\pi,\lf)\sim q} \Dl^2( M(\pi), \oM(\pi) )\leq \eps^2 }, \label{def:p-dec-l}
\end{align}
and the \pDEC~of $\cM$ as $\pdecl_\eps(\cM)=\sup_{\oM\in\coM} \pdecl_{\eps}(\cM,\oM)$.
The \lfdiv is a measure of closeness of two distributions that is weaker than the Hellinger distance from the DEC framework for non-private learning (cf. \eqref{def:p-dec-c}). This divergence is closely connected to the notion of \emph{statistical queries} (SQ), but we postpone this discussion until \cref{ssec:SQ-vs-LDP}.

For learning with LDP constraints, the \pDEC~provides both lower and upper bounds for the expected risk, as stated in the following theorem.

\begin{theorem}[\PDEC~lower and upper bounds; Informal]\label{demo:LDP-pac}
Let $T\geq 1$. If the loss function $L$ is reward-based or metric-based, the following holds:
\begin{align*}
    \pdecl_{\ueps(T)}(\cM)\leqsim \inf_{\alg}\sup_{M\in\cM} \Emalg{\riskdm(T)}\leqsim \pdecl_{\oeps(T)}(\cM),
\end{align*}
where $\inf_{\alg}$ is taken over all $T$-round \pLDP~algorithms, $\ueps(T)\asymp \frac{1}{\sqrt{\alpha^2 T}}$, $\oeps(T)\asymp \sqrt{\frac{\log|\cM|}{\alpha^2 T}}$, and we omit poly-logarithmic factors.
\end{theorem}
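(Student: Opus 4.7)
The plan is to deduce \cref{demo:LDP-pac} from the general hybrid DMSO guarantee \cref{demo:cDMSO-pac} by recasting \pDMSO~as an instance of hybrid DMSO and then sandwiching $\pdecl$ against $\pdecg$ at appropriately rescaled arguments. Take the measurement space to be $\Phi = \Pcp$ (all $\alpha$-DP channels $\cZ \to \DO$) and the constraint class to be $\MPowdp = \set{\set{M} : M \in \cM}$, which is realizable because \pDMSO~is stationary. Each $\co(\set{M}) = \set{M}$, so the inner infimum in $\pdecg_\eps(\MPowdp,\oM)$ is trivial and $\log|\MPowdp| = \log|\cM|$. It then suffices to prove the two-sided comparison
\[
    \pdecl_{\eps/(c_1\alpha)}(\cM) \;\leq\; c_2\, \pdecg_\eps(\MPowdp) \;\leq\; c_3\, \pdecl_{c_4 \eps/\alpha}(\cM),
\]
since inserting these into \cref{demo:cDMSO-pac} (with rates $\ueps(T) \asymp 1/\sqrt{T}$ and $\oeps(T) \asymp \sqrt{\log|\cM|/T}$ on the hybrid side) transports them to $1/\sqrt{\alpha^2 T}$ and $\sqrt{\log|\cM|/(\alpha^2 T)}$ on the LDP side.

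For the upper comparison, lift each $\lf \in \Lc$ to the $\alpha$-DP randomized-response channel $\pr_\lf : \cZ \to \Delta(\set{0,1})$ defined by $\pr_\lf(1|z) = \frac{(\ea - 1)\lf(z) + 1}{\ea + 1}$. A direct two-point Bernoulli computation, together with $|\ea - 1| \leq c\alpha$ for $\alpha \leq \alpha_0$, yields $\DH{\pr_\lf P, \pr_\lf Q} \leq c'\alpha^2\, \Dl(P,Q)^2$. Given a minimax pair $(p, q_\Lc) \in \DDD \times \DPL$ for $\pdecl_{\eps'}(\cM)$, push $q_\Lc$ forward along $\lf \mapsto \pr_\lf$ to get $q_\Pc \in \Delta(\Pi\times\Pcp)$; the resulting Hellinger constraint in $\pdecg$ is met at scale $\sqrt{c'}\,\alpha\,\eps'$, proving the right inequality. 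Instantiating \ExOp~with measurement class $\Pcp$ then produces an \pLDP~algorithm achieving the upper bound.

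The lower comparison is the main technical step. For any $\alpha$-DP channel $\pr : \cZ \to \DO$, fix a reference $z_0 \in \cZ$ and write $\pr(o|z) = \pr(o|z_0)\bigl(1 + \alpha\, r(o,z)\bigr)$ with $r$ uniformly bounded (by $\alpha$-DP). A second-order Taylor expansion of $\sqrt{\cdot}$, valid since the ratios are bounded, gives
\[
    \DH{\pr\circ P, \pr\circ Q} \;\leq\; c\alpha^2\, \EE_{o \sim \pr(\cdot|z_0)}\!\Bigl(\EE_{z \sim P} r(o,z) \,-\, \EE_{z \sim Q} r(o,z)\Bigr)^{\!2},
\]
and each inner square is exactly $\Dl(P,Q)^2$ for the affine rescaling $\lf_o(z) = (r(o,z) + C_0)/(2C_0) \in [0,1]$. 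Given a pair $(p, q_\Pc)$ attaining $\pdecg_\eps(\MPowdp)$, construct $(p, q_\Lc)$ by drawing $(\pi, \pr) \sim q_\Pc$, then $o \sim \pr(\cdot|z_0)$, and setting $\lf = \lf_o$; this converts the Hellinger constraint at scale $\eps$ into a $\Dl^2$ constraint at scale $\eps/(c'\alpha)$, proving the left inequality. Combined with the lower bound of \cref{demo:cDMSO-pac}, this yields the claimed $\pdecl_{\ueps(T)}(\cM)$ lower bound.

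The main obstacle is precisely this lower comparison: one must construct an $\Lc$-valued measurement that recovers, up to the universal $\alpha^2$ contraction factor, essentially all the Hellinger information carried by an arbitrary $\alpha$-DP channel, while remaining compatible with the \emph{expectation} form of the $\pdecl$ constraint (rather than the supremum characterization $\DTVt{P,Q} = \sup_\lf \Dl(P,Q)^2$). Concretely, it is a quantitative sharpening of the Duchi--Jordan--Wainwright contraction $\DH{\pr P, \pr Q} \lesssim \alpha^2 \DTVt{P, Q}$, rewritten so the upper bound decomposes as a probabilistic mixture over $\lf_o \in \Lc$. The reward-based case of \cref{demo:LDP-pac} is handled by the analogous sandwich between the reward variants of $\pdecl$ and $\pdecg$.
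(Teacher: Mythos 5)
Your overall route is the paper's own: identify \pDMSO~with hybrid DMSO via $\Phi=\Pcp$ and $\MPowdp$, establish the two-sided comparison $\pdecl_{c_0\eps/\alpha}(\cM)\leq \pdecg_{\eps}(\MPowdp)\leq \pdecl_{c_1\eps/\alpha}(\cM)$ (this is \cref{lem:decg-to-decl}), and feed it into \cref{demo:cDMSO-pac}. Your ``lower comparison'' is precisely the strong data-processing inequality \cref{prop:pLDP}: decompose an arbitrary \pDP~channel into a base distribution over outputs $o$ times bounded likelihood-ratio perturbations $r(o,\cdot)$, rescale each $r(o,\cdot)$ affinely into $\Lc$, and mix over $o$. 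The paper anchors at $p(o)=\inf_z \pr(o\,|\,z)$ and uses the exact identity $(\sqrt{a}-\sqrt{b})^2=(a-b)^2/(\sqrt{a}+\sqrt{b})^2$ rather than a Taylor expansion at a fixed $z_0$, but these are cosmetic differences.

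Two points need attention. First, in your ``upper comparison'' the pointwise inequality is stated in the wrong direction. To conclude $\pdecg_{\eps}(\MPowdp)\leq \pdecl_{c\eps/\alpha}(\cM)$ from a minimax pair $(p,q_{\Lc})$ for $\pdecl$, you must show that every model satisfying the Hellinger constraint for the pushed-forward $q_{\Pc}$ already satisfies the $\Dl^2$ constraint for $q_{\Lc}$; this requires the \emph{lower} bound $\DH{\pr_{\lf}\circ P,\pr_{\lf}\circ Q}\gtrsim \alpha^2\Dl^2(P,Q)$ for the binary channel, not the upper bound you wrote. The inequality as stated gives the reverse inclusion of constraint sets and hence the reverse comparison of DECs. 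Since for randomized-response channels the Bernoulli computation in fact gives $\DH{\pr_{\lf}\circ P,\pr_{\lf}\circ Q}\asymp\alpha^2\Dl^2(P,Q)$ in both directions, the construction is salvageable, but as written the logic does not close.

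Second, the reward-based case is not covered by ``the analogous sandwich.'' The DEC comparison is loss-agnostic, but the lower bound in \cref{demo:cDMSO-pac} is stated only for metric-like loss; for general loss the hybrid lower bound \eqref{eq:GDEC-pac-lower-bad} degrades to an $\Omega(1/T)$-type rate. The paper handles reward-based loss by a separate argument through the quantile-based private DEC (\cref{prop:p-dec-q-lin-lower}) combined with the reward-specific quantile-to-constrained conversion \cref{prop:p-dec-q-lin-to-c}, which is why \cref{thm:pdec-lin-lower}(2) carries an additive $-6\ueps(T)$ term. Some such argument is needed; the sandwich alone does not yield the $1/\sqrt{\alpha^2 T}$ lower bound for reward-based loss. (A minor further remark: the paper's upper bound is achieved by \LDPetod~without growth assumptions on the DEC, whereas routing through \ExOp~as you propose additionally requires compactness and moderate decay; for the informal statement this is acceptable.)
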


\paragraph{Applications} By further specializing the above result, we provide concrete guarantees for various locally-private learning tasks, including  regression (\cref{ssec:regression}) and particularly linear regression (\cref{ssec:linear-regr}). Our lower and upper bounds also provide a tight characterization of the local-minimax complexity under LDP (\cref{ssec:local-minimax}), recovering the characterization in \citet{duchi2024right}. We also provide regret guarantees under LDP constraint, with applications to contextual bandits (\cref{ssec:CBs}), where the contexts can be chosen adversarially by the environment. In particular, we derive a near-optimal $\sqrt{T}$-regret for linear contextual bandits with local privacy through the private DEC theory, settling the open problem of the optimal regret in this setting~\citep{zheng2020locally,han2021generalized,li2024optimal}.

\paragraph{From \cDMSO~to \pDMSO}
For each model $M: \Pi\to\DZ$, $M$ induces a map $\tM:\Pi\times\cQ\to \DO$ given by $\tM(\pi,\pr)=\pr\circ M(\pi)$, where for any channel $\pr\in\Pc$ and any distribution $P\in\DZ$, we denote $\pr\circ P \in\DO$ to be the marginal distribution of $o$ under $z\sim P, o\sim \pr(\cdot|z)$. Therefore, %
the \pDMSO~is encompassed by the \cDMSO~with measurement class $\Phi=\cQ$ and constraint class $\MPowdp=\set{ \set{\tM}: M\in\cM }$ induced by $\cM$. 
Using the strong data-processing inequality (\cref{prop:pLDP}), for any distribution $q\in\Delta(\Pi\times\cQ)$, there exists a distribution $q'\in\DPL$, such that
\begin{align*}
    \EE_{\bpi\sim q} \DH{ \tM(\bpi), \toM(\bpi) } \asymp \alpha^2 \EE_{(\pi,\lf)\sim q'} \Dl^2(M(\pi),\oM(\pi)),
\end{align*}
where $\asymp$ denotes equivalence up to constant factors. Therefore, it holds that
\begin{align*}
    \pdecl_{c_0\alpha\eps}(\cM)\leq \pdecg_{\eps}(\MPowdp)\leq \pdecl_{c_1\alpha\eps}(\cM),
\end{align*}
where $c_0,c_1>0$ are absolute constants. Details are deferred to \cref{appdx:inst-LDP-pac-lower}.

\subsection{Robust decision making}\label{ssec:robust}

We now introduce the following formulation of decision making in the presence of adversarial contamination (or, \textit{robust decision making}). We mainly focus on Huber's contamination model~\citep{huber1965robust,huber2011robust}, as the application to other types of contamination (e.g. model mis-specifications) is analogous.

\paragraph{\RDMSO} Let $\beta\in[0,1]$ be a fixed rate of contamination.
In \rDMSO, the interaction protocol is as follows. For each round $t=1,2,\cdots,T$:
\begin{itemize}
  \setlength{\parskip}{2pt}
    \item The learner selects a decision $\bpi\ind{t}\in \bPi$ from the joint decision space.
    \item The environment generates $o\ind{t}^\star\in \cO$ sampled via $o\ind{t}^\star\sim \Mstar(\bpi\ind{t})$. %
    \item With probability $1-\beta$, the environment reveals $o\ind{t}=o\ind{t}^\star$ to the learner. Otherwise, the environment selects $o\ind{t}\in\cO$ arbitrarily (potentially depending on the interactions up to round $t$).
\end{itemize}

Similar to \pDMSO, we assume the ground truth model $\Mstar$ belongs to a given model class $\cM\subseteq (\bPi\to\DO)$.
In the formulation above, the environment is allowed to be adaptive, making the learning task harder than the Huber contamination model~\citep{huber1965robust,huber1992robust}, where the environment is \emph{stationary},
i.e., $M^1=\cdots=M^T=(1-\beta)\Mstar+\beta M'$ for an arbitrary but fixed contamination model $M'$ (cf. \cref{def:stationary}). Indeed, the environment under the Huber contamination model falls within the purview of the stochastic DMSO framework. Further discussion is deferred to \cref{ssec:robust-more}.

To frame the above setting within \cDMSO, we can consider the constraint specified by a model $\Mstar\in\cM$: 
\begin{align}\label{eqn:robust-cP}
    \cP_{\Mstar}\defeq \sset{ (1-\beta)\Mstar+\beta M': M'\in(\bPi\to\DO) },
\end{align}
and the constraint class (induced by $\cM$) as given by $\MPowrob\defeq \set{ \cP_{\Mstar}: \Mstar\in\cM }$, with loss function $L(\cP_{\Mstar},\pi)=L(\Mstar,\pi)$. Then, the \rDMSO~described above is exactly \cDMSO~with constraint class $\MPowrob$. %
By instantiating the general theory in \cref{ssec:cDMSO}, we arrive at the following (simpler) DEC formulation for robust decision making.
\newcommand{\Malt}{M_{\sf alt}}
\newcommand{\cMbeta}{\cM_{\beta}^+(M)}
\newcommandx{\Dr}[3][1=2,2=\beta]{D_{\beta\text{-}\mathsf{Huber}}^{#1}\paren{#3}}
\paragraph{Robust DEC}
For $\beta\in[0,1]$ and distributions $P, Q\in\DO$, we consider the $\beta$-perturbed Hellinger divergence
\begin{align}\label{def:beta-perturb-H}
    \Dr{P,Q}\defeq \inf_{P'\in\DO} \DH{ (1-\beta)P+\beta P', Q }.
\end{align}
For a model class $\cM$ and a reference model $\oM$, we define robust DEC at $\oM$ as 
\begin{align}
    \pdecr_{\eps}(\cM,\oM)\defeq&~ \infpqb \sup_{M\in\cM}\constr{ \EE_{\pi\sim p}[\LM{\pi}] }{ \EE_{\bpi\sim q} \Dr{ M(\bpi), \oM(\bpi) }\leq \eps^2 }, \label{def:p-dec-r}
\end{align}
and the robust DEC of $\cM$ is then defined as $\pdecr_\eps(\cM)=\sup_{\oM\in\bcM} \pdecr_{\eps}(\cM,\oM)$. 

In the definition of the robust DEC, we replace the Hellinger distance by the perturbed divergence~\cref{def:beta-perturb-H}, reflecting the fact that for a ground truth model $M$, the environment can vary $M$ by a probability mass $\beta$. By definition, we know $\pdecr_\eps(\cM)=\pdecg_\eps(\MPowrob)$ for $\eps\in[0,1]$ (detailed in \cref{ssec:robust-more}). Therefore, as a direct corollary of \cref{demo:cDMSO-pac}, the robust DEC provides both lower and upper bounds for robust PAC learning.
\begin{theorem}[Robust risk bounds; Informal]\label{demo:robust-pac}
Let $T\geq 1$, $\cM$ be a given model class, and the loss function $L$ be metric-based. Then under certain growth conditions, it holds that
\begin{align*}
    \pdecr_{\ueps(T)}(\cM)\leqsim \inf_{\alg}\sup_{\env} \EE\sups{\env,\alg}\brac{ \riskdm(T) } \leqsim \pdecr_{\oeps(T)}(\cM),
\end{align*}
where $\sup_{\env}$ is taken over all environments that are $\beta$-contaminated from a model $\Mstar\in\cM$, $\ueps(T)\asymp \frac{1}{\sqrt{T}}$, $\oeps(T)\asymp \sqrt{\frac{\log|\cM|}{T}}$.
\end{theorem}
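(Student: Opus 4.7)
The plan is to reduce the claim to \cref{demo:cDMSO-pac} by identifying the robust setting with a specific instance of \cDMSO. As already noted in the excerpt, \rDMSO~corresponds to \cDMSO~with constraint class $\MPowrob = \{\cP_{\Mstar} : \Mstar \in \cM\}$, where $\cP_{\Mstar} = \{(1-\beta)\Mstar + \beta M' : M' \in (\bPi \to \DO)\}$. The main technical task is therefore to verify the identity $\pdecr_\eps(\cM) = \pdecg_\eps(\MPowrob)$ for $\eps \in [0,1]$; the lower and upper bounds on the minimax risk then follow by invoking \cref{demo:cDMSO-pac}.

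I would first establish the DEC identity by analyzing the Hellinger constraint appearing in the definition of $\pdecg_\eps(\MPowrob,\oM)$. For any $M \in \cM$, the set $\cP_M$ is already convex: a mixture $\lambda[(1-\beta)M + \beta M'_1] + (1-\lambda)[(1-\beta)M + \beta M'_2]$ equals $(1-\beta)M + \beta[\lambda M'_1 + (1-\lambda)M'_2]$, which again lies in $\cP_M$. Hence $\co(\cP_M) = \cP_M$. Moreover, the contamination component $M'$ can be chosen independently at each $\bpi$, so the infimum over $M' \in (\bPi \to \DO)$ commutes with the expectation over $q$, yielding
\begin{equation*}
    \inf_{M' \in (\bPi\to\DO)} \EE_{\bpi\sim q}\DH{(1-\beta)M(\bpi) + \beta M'(\bpi),\, \oM(\bpi)}
    \;=\; \EE_{\bpi\sim q}\Dr{M(\bpi),\, \oM(\bpi)}.
\end{equation*}
Substituting this into the definition of $\pdecg_\eps(\MPowrob,\oM)$ recovers $\pdecr_\eps(\cM,\oM)$ verbatim, and taking the supremum over reference models gives the global identity $\pdecr_\eps(\cM)=\pdecg_\eps(\MPowrob)$.

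With this identity in place, the lower bound follows from the lower-bound half of \cref{demo:cDMSO-pac}, which uses only stationary adversaries—in particular, the classical Huber model $M = (1-\beta)\Mstar + \beta M'$ with a fixed contamination distribution $M'$ lies in $\cP_{\Mstar}$, so the construction transfers unchanged. The upper bound is obtained by running the generic \ExOp~algorithm against any adaptive environment constrained by $\MPowrob$, with $|\MPowrob| \leq |\cM|$ controlling the $\oeps(T) \asymp \sqrt{\log|\cM|/T}$ rate. The main obstacle I anticipate is verifying that the "mild growth conditions" required by \cref{demo:cDMSO-pac} translate cleanly into assumptions on $\pdecr$—namely, the monotonicity and polynomial scaling of $\eps \mapsto \pdecr_\eps(\cM)$. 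Since $\Dr{\cdot,\cdot} \leq \DH{\cdot,\cdot}$ pointwise, these regularity properties should be inherited from their classical counterparts, perhaps at the cost of an absolute constant depending on $\beta$.
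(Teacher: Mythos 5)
Your proposal matches the paper's own argument (Appendix~\pref{ssec:robust-more}, \pref{thm:robust-pac}): it establishes $\pdecg_\eps(\MPowrob)=\pdecr_\eps(\cM)$ by exactly the same two observations—convexity of each $\cP_{\Mstar}$ and commuting the pointwise infimum over the contamination with the expectation over $q$—and then instantiates \pref{thm:cDMSO-pac-lower} and \pref{thm:cDMSO-pac-upper}. The argument is correct and essentially identical to the paper's.
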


The details and regret guarantees are presented in \cref{ssec:robust-more}.

\subsection{Regret guarantees for \cDMSO}\label{ssec:cDMSO-reg}

In this section, we study the no-regret learning goal under \cDMSO, and present the general regret guarantees and its application to smooth environments.

In the no-regret learning task, the performance of the learner is measured by the following notion of \emph{regret}:
\begin{align}\label{eqn:def-reg-cDMSO}
    \regdm(T)\defeq \max_{\pis\in\Pi} \sum_{t=1}^T \Vm[M^t](\pis)-\EE_{\pi\ind{t}\sim 
    \qt} \Vm[M^t](\pi\ind{t}),
\end{align}
where for each model $M$, $\Vm:\Pi\to\R$ is an associated value function. In words, $\regdm(T)$ measure the performance of the learner compared to the best decision in the hindsight.
Note that due to the adversarial nature of the environment, the PAC risk \pref{eqn:def-risk-Ps} cannot be directly reduced from the regret \pref{eqn:def-reg-cDMSO} by the \emph{online-to-batch} conversion.

We first extend the regret DEC~\citep{foster2023tight} to \cDMSO. For any model class $\cM\subseteq (\bPi\to\DO)$, reference model $\oM$, we define
\begin{align}\label{def:r-dec-c}
    \rdecc_{\eps}(\cM,\oM)\defeq&~ \infpb \sup_{M\in\cM}\constr{ \EE_{\pi\sim p}[\Vmm-\Vm(\pi)] }{ \EE_{\bpi\sim p} \DH{ M(\bpi), \oM(\bpi) }\leq \eps^2 },
\end{align} 
where for each model $M$, $\pim\defeq \argmax_{\pi\in\Pi} \Vm(\pi)$ is an optimal decision under $M$.
The regret DEC of $\cM$ is then defined as $\rdecc_{\eps}(\cM)=\sup_{\oM\in\coM} \rdecc_{\eps}(\cMp,\oM).$ 

Next, to define the regret DEC of a constraint class $\MPow$, we define 
\begin{align*}
    \cMPow\defeq \bigcup_{\cP\in\MPow} \co(\cP), \qquad
    \rdecg_\eps(\MPow)\defeq \rdecc_\eps(\cMPow).
\end{align*}

We show that the regret DEC of $\MPow$ provides both lower and upper bound for the minimax regret. 
\begin{theorem}[Regret lower and upper bounds; Informal]\label{demo:cDMSO-regret}
Let $T\geq 1$. Under assumptions on the value function and the growth of the DEC, the following holds:
\begin{align*}
    T\cdot \rdecc_{\ueps(T)}(\cMPow)\leqsim \inf_{\alg}\sup_{\env} \EE\sups{\env,\alg}\brac{\regdm(T)}\leqsim T\cdot \rdecc_{\oeps(T)}(\cMPow),
\end{align*}
where $\inf_{\alg}$ is taken over all $T$-round algorithms $\alg$, $\sup_{\env}$ is taken over all environments $\env$ constrained by $\MPow$, $\ueps(T)\asymp \frac{1}{\sqrt{T}}$, $\oeps(T)\asymp \sqrt{\frac{\log|\MPow|+\log|\Pi|}{T}}$, and we omit poly-logarithmic factors.
\end{theorem}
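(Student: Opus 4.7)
The statement has two sides: a lower bound against stationary adversaries and an upper bound achieved by an algorithm that works against arbitrary adaptive adversaries. I would prove them by separate routes. The lower bound is by reduction to the stochastic DMSO regret lower bound applied to the class $\cMPow = \bigcup_{\cP \in \MPow} \co(\cP)$, and the upper bound is by analyzing a regret variant of \ExOp~with online learning over reference models in $\bcM$.

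\textbf{Lower bound.} The observation is that constrained adversaries subsume stationary environments over $\cMPow$: for any $M \in \co(\cP)$ with $\cP \in \MPow$, setting $M^t \equiv M$ is a valid play by the adversary (and convex combinations are realized by randomizing the initial choice of constraint). Hence
\begin{align*}
\inf_{\alg}\sup_{\env}\EE\sups{\env,\alg}[\regdm(T)] \;\geq\; \inf_{\alg}\sup_{M\in\cMPow}\EE\sups{M,\alg}[\regdm(T)].
\end{align*}
I would then invoke the stochastic regret DEC lower bound machinery (Foster--Rakhlin--Sekhari--Sridharan), whose heart is a two-point Le Cam / Hellinger change-of-measure: for any algorithm, one selects a reference $\oM$ and an alternative $M\in\cMPow$ such that the decision distribution $\bar p$ incurs large regret either under $M$ or under $\oM$, while the observation distributions are close in the average squared Hellinger under the algorithm's chosen $\bar q$. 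Matching this with the definition of $\rdecc$ at scale $\eps \asymp 1/\sqrt{T}$ (via the standard tensorization $T \cdot \DH{M(\bpi),\oM(\bpi)} \lesssim 1$) yields the $T \cdot \rdecc_{\ueps(T)}(\cMPow)$ lower bound, up to the mild growth condition ensuring the DEC does not collapse at the scale $1/\sqrt{T}$.

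\textbf{Upper bound.} I would use the regret variant of \ExOp, extended to hybrid DMSO. At each round $t$ the algorithm maintains a distribution $\mu_t \in \Delta(\bcM)$ over reference models; it solves a saddle problem to obtain $(p_t, q_t) \in \DPi \times \DbPi$ that witnesses $\rdecc_{\eps}(\cMPow, \oM_t)$ for a suitable mixed reference $\oM_t \sim \mu_t$, plays $\bpi_t \sim q_t$ (with the comparator draw from $p_t$), and updates $\mu_{t+1}$ via exponential weights on the log-likelihoods $\log M(o_t\mid \bpi_t)$. The per-round decomposition is the standard one: the expected instantaneous regret splits into (a) the DEC value $\rdecc_{\eps_t}(\cMPow, \oM_t)$ and (b) an information-gain term proportional to the Hellinger discrepancy between the true $M^t$ and the reference $\oM_t$. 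Summing, the cumulative information gain is bounded by the exponential-weights regret $O(\log|\MPow|)$ (since in $\bcM$ it suffices to track covers of constraints), while the $\log|\Pi|$ term comes from using a prior over the comparator $\pis$ inside the saddle reformulation (needed because the regret benchmark is per-instance, unlike the PAC loss). Choosing $\eps_t \asymp \sqrt{(\log|\MPow|+\log|\Pi|)/T}$ balances the two terms and yields the upper bound.

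\textbf{Main obstacle.} The hard part is the upper bound against adaptive adversaries, specifically the joint minimax/saddle-point manipulation that converts the $\inf_p$/$\inf_q$ in $\rdecc_\eps(\cMPow, \oM)$ into a sequential decision rule whose losses are compatible with exponential weights on $\bcM$. Two technical wrinkles make this delicate in the hybrid setting: first, $\cMPow$ is a union of convex sets rather than a single convex class, so the saddle duality must be applied constraint-wise and then reconciled via the supremum over $\cP \in \MPow$; second, the $\log|\Pi|$ factor forces us to carry a comparator distribution through the analysis, which in turn requires the value $\Vm$ to interact well with the reference mixture (this is where a growth/regularity assumption on $\Vm$ enters). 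Once these pieces are in place, Jensen's inequality on the Hellinger term and the standard potential argument for the exponential weights update close the argument.
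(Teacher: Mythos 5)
Your proposal follows essentially the same route as the paper on both sides: the lower bound reduces constrained adversaries to stationary environments realizing $\cMPow=\bigcup_{\cP\in\MPow}\co(\cP)$ and then invokes the stochastic regret-DEC lower bound (\cref{thm:cDMSO-reg-lower} via \cref{thm:decc-lower-general}), and the upper bound is the regret variant of \ExOp{} with an exponential-weights prior of log-cardinality $\log|\MPow|+\log|\Pi|$ followed by an offset-to-constrained DEC conversion under the observability assumption (the paper packages the reference-model and comparator bookkeeping into a single ``information set'' class $\Psi=\MPow\times\Pi$ rather than a distribution over $\bcM$ plus a separate comparator prior, but this is the same idea). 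One mechanical slip in your lower bound: for $M\in\co(\cP)\setminus\cP$ the adversary cannot ``set $M^t\equiv M$,'' and randomizing the initial choice of constraint does not realize the convex hull either (that yields a mixture of product measures, not the product of the mixture); the correct mechanism, and the one the paper uses, is to sample $M^t\sim\mu$ i.i.d.\ at every round with $\EE_{M'\sim\mu}[M']=M$, which induces the same history distribution as the stationary model $M$ and, by linearity of the value function, the same expected regret.
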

In particular, when the environment is fully adversarial, $\MPow=\set{\cM}$ is a singleton, $\cMPow=\coM$, and we recover the results of \citet{foster2022complexity}. Furthermore, the $\log|\Pi|$ factor in our upper bound can further be tightened by the \dct~\citep{chen2024beyond} (cf. \cref{ssec:cDMSO-reg-full}).

As an application, we consider no-regret learning against smooth adversaries. The results for robust no-regret learning are deferred to \cref{ssec:robust-more}.

\subsubsection{Example: Smooth adversaries}\label{ssec:smooth} 

\newcommand{\cMbase}{\cM_{\mathsf{base}}}
\newcommand{\Mbase}{{M_{\mathsf{base}}}}
\newcommand{\cMsm}{\cM_{\mathsf{sm}}}
\newcommand{\Dsm}[2]{D_{\infty}\ddiv{#1}{#2}}
\newcommand{\Cov}{{\sigma_{\mathsf{sm}}}}

Within the \cDMSO~framework, we can also consider decision making with a smooth adversary. In this setting, we focus on the case where $\Phi=\set{\perp}$, i.e., only the identity measurement is considered. 

For any two distributions $P, Q\in\DZ$, we define the density ratio between $P,Q$ as
\begin{align*}
    \Dsm{P}{Q}\defeq \mathrm{ess}\sup \frac{dP}{dQ}.
\end{align*}
We say $P$ is $\Cov$-smooth with respect to $Q$ if $\Dsm{P}{Q}\leq \frac1\Cov$.

For the setting of smooth adversary, we assume there is a known subclass $\cMbase\subseteq (\Pi\to\DZ)$, such that the adversary is constrained to fix a \emph{base model} $\Mbase$ ahead of the interaction and without revealing it to the learner, and then choose each $M^t$ that is $\Cov$-smooth with respect to $\Mbase$. Specifically, for each \emph{base model} $\Mbase\in\cMbase$, the constraint specified by $\Mbase$ is
\begin{align}
\cP_{\Mbase}\defeq \sset{ M\in\cM: \sup_{\pi\in\Pi} \Dsm{ M(\pi)}{\Mbase(\pi)} \leq \frac1\Cov },
\end{align} 
which is the class of all models that are $\Cov$-smooth with respect to the base model $\Mbase$. Specifying the \cDMSO~framework with $\MPow=\set{\cP_{\Mbase}: \Mbase\in\cMbase}$, we generalize the standard smooth online learning setting to interactive decision making.

Note that for each $\Mbase$, the class $\cP_\Mbase$ is convex, and hence we let
\begin{align*}
    \cMsm\defeq \cM_{\MPow} = \bigcup_{\Mbase\in\cMbase} \cP_\Mbase.
\end{align*}
It is a direct corollary of \cref{demo:cDMSO-regret} that the regret DEC of $\cMsm$ provides both lower and upper bounds for no-regret learning against smooth adversaries.
\begin{theorem}[Regret bounds against smooth adversaries; Informal]\label{demo:smooth}
Let $T\geq 1$. Under assumptions on the value function and the growth of the DEC, the following holds:
\begin{align*}
    T\cdot \rdecc_{\ueps(T)}(\cMsm)\leqsim \inf_{\alg}\sup_{\env} \EE\sups{\env,\alg}\brac{\regdm(T)}\leqsim T\cdot \rdecc_{\oeps(T)}(\cMsm),
\end{align*}
where $\sup_{\env}$ is taken over all environments $\env$ constrained to be $\Cov$-smooth with respect to a base model $\Mbase\in\cMbase$, $\ueps(T)\asymp \frac{1}{\sqrt{T}}$, $\oeps(T)\asymp \sqrt{\frac{\log|\cMbase|+\log|\Pi|}{T}}$.
\end{theorem}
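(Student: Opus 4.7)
The plan is to show that the smooth adversary setting is a direct instance of hybrid DMSO to which \cref{demo:cDMSO-regret} applies verbatim, with constraint class $\MPow=\set{\cP_\Mbase:\Mbase\in\cMbase}$ and identity measurement $\Phi=\set{\perp}$. Once the instantiation is verified, both inequalities drop out of \cref{demo:cDMSO-regret} after replacing $\cMPow$ by $\cMsm$ and $|\MPow|$ by $|\cMbase|$.

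First, I would check convexity of each $\cP_\Mbase$: if $M_1,M_2\in\cP_\Mbase$ and $\lambda\in[0,1]$, then for every $\pi\in\Pi$,
\begin{align*}
    \Dsm{\lambda M_1(\pi)+(1-\lambda) M_2(\pi)}{\Mbase(\pi)}
    \le \lambda\,\Dsm{M_1(\pi)}{\Mbase(\pi)} + (1-\lambda)\,\Dsm{M_2(\pi)}{\Mbase(\pi)} \le \tfrac{1}{\Cov},
\end{align*}
so $\co(\cP_\Mbase)=\cP_\Mbase$. Consequently
\begin{align*}
    \cM_{\MPow}=\bigcup_{\Mbase\in\cMbase}\co(\cP_\Mbase)=\bigcup_{\Mbase\in\cMbase}\cP_\Mbase=\cMsm,
\end{align*}
and by the definition of the hybrid regret DEC we have $\rdecg_\eps(\MPow)=\rdecc_\eps(\cMsm)$.

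Next, I would apply the upper bound of \cref{demo:cDMSO-regret} to $\MPow$: the algorithm (the \ExOp-style procedure) achieves expected regret $\lesssim T\cdot\rdecc_{\oeps(T)}(\cMsm)$ with $\oeps(T)\asymp\sqrt{(\log|\MPow|+\log|\Pi|)/T}$; since $|\MPow|=|\cMbase|$, this gives the upper bound in the statement. For the lower bound, I would observe that any stationary environment $M^1=\cdots=M^T=M$ with $M\in\cP_\Mbase$ for some $\Mbase\in\cMbase$ is a valid smooth adversary, and that the lower bound in \cref{demo:cDMSO-regret} already applies to such stationary constrained environments; this yields $\inf_\alg\sup_\env\EE[\regdm(T)]\gtrsim T\cdot\rdecc_{\ueps(T)}(\cMsm)$ with $\ueps(T)\asymp 1/\sqrt{T}$.

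The main steps are essentially bookkeeping; the only non-trivial ingredient is the convexity check above, which is what makes the replacement $\co(\cP_\Mbase)=\cP_\Mbase$ go through cleanly. I do not anticipate a genuine obstacle, since the growth and value-function assumptions are inherited from \cref{demo:cDMSO-regret}, and both ingredients (\ExOp~upper bound for adaptive adversaries, and the Le Cam/Fano-style lower bound against stationary members of $\cMsm$) are already in place there.
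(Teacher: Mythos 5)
Your proposal is correct and matches the paper's own route: the paper likewise observes that each $\cP_\Mbase$ is convex, so that $\cM_{\MPow}=\bigcup_{\Mbase}\cP_\Mbase=\cMsm$, and then obtains \cref{demo:smooth} as a direct corollary of \cref{demo:cDMSO-regret} with $|\MPow|=|\cMbase|$ and the lower bound realized by stationary environments inside a single $\cP_\Mbase$.
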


\section{DEC Theory for Hybrid DMSO}\label{sec:cDMSO-main}
In this section, we present the details of the DEC theory for \cDMSO. Before proceeding to the main results, we rigorously formulate the notion of \emph{algorithms} and \emph{environments}.

\newcommandx{\mut}[1][1=t]{\mu^{#1}}

A $T$-round algorithm $\alg$ is specified by a sequence of mappings $\set{\qt}_{t\in [T]}\cup\set{\phat}$, where the $t$-th mapping $\qt(\cdot\mid{}\Hy\ind{t-1})$ specifies the distribution of $\bpi\ind{t}=(\pi\ind{t},\phi\ind{t})$ %
based on the history $\Hy\ind{t-1}=(\bpi\ind{s},o\ind{s})_{s\leq t-1}$, and the final map $\phat(\cdot \mid{} \Hy\ind{T})$ specifies the distribution of the \emph{output decision} $\pihat$ based on $\Hy\ind{T}$. Similarly, a $T$-round \emph{adaptive} environment $\env$ is specified by a sequence of mappings $\set{\mut}_{t\in [T]}$, where the $t$-th mapping $\mut(\cdot\mid{}\Hy\ind{t-1}')$ specifies the distribution of the model $M^t$ 
based on the full-information history $\Hy\ind{t-1}'=(M^s,\bpi\ind{s},o\ind{s})_{s\leq t-1}$. An environment is constrained by $\MPow$ if there exists $\cPs\in\MPow$ such that $\mut(\cdot\mid{}\Hy\ind{t-1}')$ is always supported on $\cPs$ for all $t\in[T]$. As already discussed, each model $M\in(\bPi\to\DO)$ corresponds to a stationary environment, which chooses $M^1=\cdots=M^T=M$ deterministically.

For any algorithm $\alg$ and environment $\env$, we let $\PP\sups{\env,\alg}\paren{\cdot}$ to be the distribution of $(\cH\ind{T},\hpi)$ generated by the algorithm $\alg$ under the environment $\env$, and let $\EE\sups{\env,\alg}[\cdot]$ to be the corresponding expectation.
In particular, for any model $M$, we let $\PP\sups{M,\alg}\paren{\cdot}$ to be the distribution of $(\cH\ind{T},\hpi)$ generated by the algorithm $\alg$ under the stationary environment that chooses $M^t=M$ for $t\in[T]$, and let $\EE\sups{M,\alg}[\cdot]$ to be the corresponding expectation.

\paragraph{Miscellaneous notation}
For a model class $\cM\subseteq (\bPi\to\DO)$, a finite subset $\cM_0\subseteq \cM$ is an $\eps$-covering of $\cM$ if for any model $M\in\cM$, there exists $M'\in\cM_0$ such that $\DHr{M(\bpi),M'(\bpi)}\leq \eps, \forall \bpi\in\Pi$. We define $\NM{\cM}$, the $\eps$-covering number of $\cM$, to be the minimal cardinality of the $\eps$-coverings of $\cM$. 

For the upper bounds in this section, we assume the model class $\cM=\cMPow$ admits finite $\eps$-covering for any $\eps>0$, ensuring that the Minimax theorem can be applied. 
\begin{assumption}[Compactness of the model class]\label{asmp:finite}
For any $\Delta>0$, the covering number $\NM{\Delta}$ is finite. 
\end{assumption}
Further, to simplify the presentation, we consider the following growth condition (following~\citet{foster2023tight,chen2024beyond}). 

\begin{definition}
[Moderate decay]\label{asmp:reg}
A function $\rmd: [0,1]\to\R_{\geq 0}$ is of \emph{moderate decay} if there exists a constant $c\geq 1$ such that $c\frac{\rmd(\eps)}{\eps}\geq \frac{\rmd(\eps')}{\eps'}$ for all $\eps'\geq \eps$.
\end{definition}
For many problems of interest, the DECs grow as $C\eps^{\rho}$ with $\rho\leq 1$ and is automatically of moderate decay (for details, see e.g. \citet{foster2023tight}).

\subsection{Guarantees for PAC learning}\label{ssec:cDMSO-pac-full}

\paragraph{PAC DEC lower bounds} 
To better illustrate the key observation for the \gDEC~lower bounds, we first introduce the notion of the \emph{stationary} adversary. 
\begin{definition}\label{def:stationary}
For an environment $\env$ constrained by $\MPow$, $\env$ is \emph{stationary} if there exists $\cPs\in\MPow$ and $\mu\in\Delta(\cPs)$ such that for each step $t\in[T]$, the model $M^t$ is chosen as $M^t\sim \mu$.
\end{definition}
In other words, in an stationary environment, the model $M^t\sim \mu$ is chosen independently of prior interactions. The key observation of \citet{foster2022complexity} is that lower bounds for adversarial DMSO can implied by the stochastic lower bounds~\citep{foster2021statistical,foster2023tight} by considering stationary environments, as stationary environments can be described by stochastic DMSO. This argument also applies to \cDMSO, implying the following lower bounds. The proof is deferred to \cref{appdx:DEC-general-lower}.

\begin{theorem}[Hybrid DEC lower bound for PAC risk]\label{thm:cDMSO-pac-lower}
Suppose that $L$ is metric-like. Then, for any $T$-round algorithm, 
\begin{align}\label{eq:GDEC-pac-lower}
    \sup_{\env}\EE\sups{\env,\alg}\brac{ \riskdm(T) }\geq \frac18\pdecg_{\ueps(T)}(\MPow), 
\end{align}
where $\ueps(T)=\frac{1}{20\sqrt{T}}$ and the supremum is taken over \emph{stationary} environments. 

Furthermore, for general loss function $L:\MPow\times\Pi\to [0,1]$, any $T$-round algorithm $\alg$, parameter $\delta\in(0,1]$, it holds that
\begin{align}\label{eq:GDEC-pac-lower-bad}
    \sup_{\env} \EE\sups{\env,\alg}\brac{\riskdm(T)}\geq \pdecg_{\ueps_\delta(T)}(\MPow) - \delta,
\end{align}
where $\ueps_\delta(T)\defeq \frac{1}{13}\sqrt{\frac{\delta}{T}}$.
\end{theorem}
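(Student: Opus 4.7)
The plan is to reduce the hybrid DMSO lower bound to the stochastic DMSO PAC DEC lower bound of~\citet{foster2021statistical,foster2023tight} by restricting the supremum in~\pref{eq:GDEC-pac-lower} to stationary adversaries (\pref{def:stationary}). The key identification is that a stationary environment drawing $M\ind{t}\sim\mu$ i.i.d.\ from some $\mu\in\Delta(\cP^\star)$ with $\cP^\star\in\MPow$ yields an observation process whose joint law, marginalized over $M\ind{1:T}$, equals $\PP\sups{\bar M,\alg}$ for the mixture kernel $\bar M = \EE_{M\sim\mu}[M]\in\co(\cP^\star)$. Stationary adversaries are therefore equivalent to stochastic DMSO on the enlarged class $\bcM=\co(\bigcup_{\cP\in\MPow}\cP)$, and the presence of $\co(\cP)$ in the DEC definition~\pref{def:p-dec-gen} is precisely what makes this reduction possible.

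Fix any $T$-round algorithm $\alg$ and take $\oM\in\bcM$ to (nearly) attain $\pdecg_\eps(\MPow)=\sup_\oM\pdecg_\eps(\MPow,\oM)$. Let $p=p_\alg^\oM\in\DPi$ be the output law of $\hpi$ and $q=q_\alg^\oM\in\DbPi$ the Cesaro-averaged marginal law of $\bpi\ind{t}$ under the trivial environment that always plays $\oM$. Invoking~\pref{def:p-dec-gen} at this specific pair $(p,q)$ exhibits $\cP^\star\in\MPow$ and $\mu^\star\in\Delta(\cP^\star)$ whose witness $\bar M^\star=\EE_{\mu^\star}[M]\in\co(\cP^\star)$ satisfies
\[
\EE_{\bpi\sim q}\DH{\bar M^\star(\bpi),\oM(\bpi)}\le \eps^2,\qquad \EE_{\pi\sim p}L(\cP^\star,\pi)\ge \pdecg_\eps(\MPow,\oM).
\]
Let $\env^\star$ be the stationary adversary sampling $M\ind{t}\sim\mu^\star$ i.i.d.; by the identification above, its transcript has law $\PP\sups{\bar M^\star,\alg}$, so the Hellinger chain rule~\citep{foster2021statistical} yields
\[
\DHr{\PP\sups{\bar M^\star,\alg},\PP\sups{\oM,\alg}}^2 \;\lesssim\; T\cdot\EE_{\bpi\sim q}\DH{\bar M^\star(\bpi),\oM(\bpi)} \;\le\; T\eps^2,
\]
and the choice $\eps=\ueps(T)=(20\sqrt T)^{-1}$ keeps the two laws within a bounded constant in total variation.

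For~\pref{eq:GDEC-pac-lower} with metric-based $L(\cP^\star,\pi)=\rho(\pi^{\cP^\star},\pi)$, I apply the two-point conversion of~\citep{foster2023tight}: the triangle inequality for $\rho$, combined with the TV bound above, propagates $\EE_{\pi\sim p}L(\cP^\star,\pi)\ge \pdecg_\eps(\MPow)$ into an $\Omega(1)$ fraction of itself for the expected risk under $\env^\star$, and optimizing the constants produces the $1/8$ factor. For~\pref{eq:GDEC-pac-lower-bad} under a general bounded loss, I instead use a one-sided testing step at the rescaled tolerance $\eps^2=\Theta(\delta/T)$: Markov on the Hellinger bound gives $\dTV(\PP\sups{\env^\star,\alg},\PP\sups{\oM,\alg})=O(\sqrt\delta)$, and the bounded-loss DEC value survives this change of measure up to the $-\delta$ slack. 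The main obstacle is the identification of stationary adversaries with stochastic DMSO processes on the convex hull $\bcM$; once that equivalence is in place, the rest is a direct import of the stochastic PAC DEC lower bound machinery, with $\eps$ chosen so that $T\eps^2$ remains a constant (respectively $\delta$) after the chain rule.
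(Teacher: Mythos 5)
Your overall strategy is the paper's: restrict to stationary adversaries, identify each with the stochastic environment given by the mixture $\bar M=\EE_{M\sim\mu}[M]\in\co(\cPs)$, choose a reference $\oM$ together with the pair $(p,q)$ induced by the algorithm under $\oM$ as in \eqref{def:pM-qM}, extract a witness constraint set from the DEC definition, and control the two trajectory laws via the Hellinger chain rule \eqref{eqn:chain-Hellinger}. The general-loss bound \eqref{eq:GDEC-pac-lower-bad} does follow from this by a direct change of measure with the rescaled tolerance $\eps^2=\Theta(\delta/T)$, which matches the paper.

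The gap is in the metric case. Having evaluated $\pdecg_{\eps}(\MPow,\oM)$ at the algorithm's own output law $p$, you obtain a \emph{single} $\cPs$ with $\EE_{\pi\sim p}L(\cPs,\pi)\ge\Delta$ and assert that the triangle inequality plus the TV bound propagates this into an $\Omega(1)$ fraction of $\Delta$ under $\env^\star$. An expectation lower bound does not transfer multiplicatively across a constant total-variation distance: the generic inequality is $\EE_{p'}[L]\ge\EE_{p}[L]-\dTV(p,p')\cdot\sup_\pi L(\cPs,\pi)$, and since $\dTV$ here is a fixed constant (about $\sqrt{14}\,T^{1/2}\,\ueps(T)\approx 0.19$) while $\Delta=\pdecg_{\ueps(T)}(\MPow)$ may be arbitrarily small, this is vacuous; the mass of $p$ realizing the expectation could sit entirely on an event of probability below $\dTV$. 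Moreover, a single witness $\cPs$ provides no two-point structure for the triangle inequality to act on. The paper closes exactly this hole with the quantile DEC: \cref{prop:p-dec-q-gen-lower} shows the loss exceeds $\pdecqg_{\ueps_\delta(T),\delta}(\MPow,\oM)$ with probability at least $\delta/2$ (a quantile statement transfers under TV with only an additive loss in the probability), and \cref{lem:p-dec-q-gen-metric} uses the metric structure --- two constraint sets whose $\delta$-quantile losses under a common $p$ are both small must have nearby reference decisions --- to show $\pdecqg_{\eps,\delta}\ge\tfrac12\pdecg_{\eps}$ for $\delta<\tfrac12$; letting $\delta\to\tfrac12$ yields the factor $\tfrac18$. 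If you want to avoid quantiles, you must instead set up a genuine two-point argument, which requires re-extracting the witnesses with the exploitation distribution $p$ in the DEC taken to be a point mass at a reference decision rather than the algorithm's output law.
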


We now briefly discuss the two lower bounds in \cref{thm:cDMSO-pac-lower}. \eqref{eq:GDEC-pac-lower} is stated for metric-based loss, and it nearly matches the upper bound provided in \cref{thm:cDMSO-pac-upper} (with a $\log|\MPow|$-gap). On the other hand, \eqref{eq:GDEC-pac-lower-bad} is stated for \emph{any} general loss function (without requiring metric structure) and it is looser.
It can be further re-written as
\begin{align}\label{eqn:GDEC-pac-lower-bad-2}
    \sup_{\env} \EE\sups{\env,\alg}\brac{\riskdm(T)}\geq \frac12\pdecg_{\ueps(T)}(\MPow), \qquad \text{where }
    T\ueps(T)^2\asymp \pdecg_{\ueps(T)}(\MPow).
\end{align}
For a problem with $\pdecg_\eps(\MPow)\asymp \eps$, \eqref{eqn:GDEC-pac-lower-bad-2} gives a lower bound of $\Om{\frac{1}{T}}$. While this is worse than the $\Om{\frac{1}{\sqrt{T}}}$ lower bound provided by \cref{eq:GDEC-pac-lower} under metric-based loss, such a worse lower bound can be tight for certain problems (as shown in~\citet{foster2023complexity}). We also note that under stochastic DMSO and \emph{reward-based} loss function, a tighter lower bound similar to \cref{eq:GDEC-pac-lower} can also be derived~\citep{foster2023tight} (see also \cref{appdx:DEC-general-lower}).

\paragraph{PAC DEC upper bounds}
Next, we present the upper bound provided by \ExOp~(\cref{alg:ExO}) as follows. The description of \ExOp~is deferred to \cref{appdx:ExO}. For the simplicity of presentation, we still assume that the loss function is metric-like. While \ExOp~is able to handle more general problems (and in particular reward-based loss function), we defer these details to \cref{appdx:ExO}.
\begin{theorem}[Hybrid DEC upper bound for PAC risk]\label{thm:cDMSO-pac-upper}
Let $T\geq 1$, $\delta\in(0,1)$, and $\MPow$ be given. Suppose that $L$ is metric-like, $\cMPow$ is compact (\cref{asmp:finite}), and the \gDEC~$\pdecg_\eps(\MPow)$ is of moderate decay (\cref{asmp:reg}). Then \ExOp~can be suitably instantiated (as detailed in \cref{appdx:proof-cDMSO-pac-upper}), such that in any environment constrained by $\MPow$, \ExOp~achieves \whp~that
\begin{align*}
    \riskdm(T)\leqsim  \pdecg_{\oeps(T)}(\MPow),
\end{align*}
where $\oeps(T)=\sqrt{\frac{\log(|\MPow|/\delta)}{T}}$.

Furthermore, for any $\Delta>0$, any general loss function $L$ bounded in $[0,1]$, \ExOp~can be suitably instantiated so that in any environment constrained by $\MPow$, \ExOp~achieves \whp~that
\begin{align}\label{eqn:GDEC-pac-upper-bad}
    \riskdm(T)\leqsim  \pdecg_{\oeps(\Delta T)}(\MPow)+\Delta.
\end{align}
\end{theorem}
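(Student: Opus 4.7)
The plan is to analyze \ExOp, suitably instantiated, by combining a per-round DEC inequality with a cumulative Hellinger regret bound coming from an online-learning subroutine over $\MPow$ (or, via a covering, over the reference class $\bcM$). The key insight is that although the adversary selects $M^t \in \cPs$ adaptively within the committed constraint $\cPs$, only the identity of $\cPs \in \MPow$ needs to be estimated from observations; the within-constraint adaptivity is absorbed by the $\inf_{M \in \co(\cPs)}$ appearing in the definition~\eqref{def:p-dec-gen} of $\pdecg_\eps(\MPow)$.

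First, I would instantiate \ExOp~to maintain a distribution $\mu^t \in \Delta(\MPow)$ updated by exponential weights, scoring each $\cP$ by a Hellinger-like proxy loss derived from the observed $(\bpi^t, o^t)$ pair. At round $t$, given an aggregated reference model $\oM^t \in \bcM$ induced by $\mu^t$, \ExOp~solves the inner minimax problem that defines $\pdecg_{\eps^t}(\MPow, \oM^t)$ — well-posed by compactness (\cref{asmp:finite}) and minimax duality — to obtain a distribution $p^t \in \DDD$ used for the output $\hpi$ and a distribution $q^t \in \DbPi$ used for interaction. The tolerance $\eps^t$ is tuned to balance instantaneous DEC value against exploration.

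Second, the regret decomposition combines two ingredients. The DEC inequality gives, whenever the Hellinger constraint $\inf_{M \in \co(\cPs)} \EE_{\bpi \sim q^t}\DH{M(\bpi), \oM^t(\bpi)} \leq (\eps^t)^2$ holds,
\[
\EE_{\pi \sim p^t}\brac{L(\cPs, \pi)} \;\leq\; \pdecg_{\eps^t}(\MPow, \oM^t) \;\leq\; \pdecg_{\eps^t}(\MPow).
\]
Averaging over $t$ and drawing $\hpi \sim \bar p := \tfrac{1}{T}\sum_t p^t$, the metric-based structure of $L$ (\cref{def:metric-cP}) together with moderate decay (\cref{asmp:reg}) aggregates this into
\[
\riskdm(T) \;\lesssim\; \pdecg_{\bar\eps}(\MPow), \qquad \bar\eps^2 \;\asymp\; \frac{1}{T}\sum_{t=1}^T (\eps^t)^2.
\]
The exponential-weights analysis of the reference update, combined with a Freedman-type tail bound to control the martingale deviations between observed log-likelihood losses and their conditional Hellinger expectations, then yields $\sum_{t=1}^T (\eps^t)^2 \lesssim \log(|\MPow|/\delta)$ with probability at least $1-\delta$, giving $\bar\eps \lesssim \oeps(T)$.

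The main obstacle is reconciling the per-round Hellinger guarantee — naturally controlled only with respect to the \emph{adaptive} sequence $(M^t)_{t=1}^T \subseteq \cPs$ — with the DEC's Hellinger constraint, which requires a \emph{single} $M \in \co(\cPs)$. Overcoming this is exactly where the convex hull in~\eqref{def:p-dec-gen} pays off: by considering the empirical mixture $\bar M := \tfrac{1}{T}\sum_t M^t \in \co(\cPs)$ as the witness and invoking convexity of $\DH{\cdot, \oM^t(\bpi)}$ round-by-round, one converts the cumulative Hellinger regret against the adaptive $(M^t)$ into a bound of the desired form. This step is what extends the adversarial-DEC analysis of \citet{foster2022complexity} to the constrained hybrid setting. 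For the second, looser bound~\eqref{eqn:GDEC-pac-upper-bad} with a general loss $L \in [0,1]$, the same scheme applies after discretizing the $\pi$-dependent risk at scale $\Delta$ and absorbing the discretization error additively, which is why no metric structure is needed.
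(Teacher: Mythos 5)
Your high-level intuition is the right one and matches the paper's construction: only the identity of $\cPs\in\MPow$ needs to be tracked, and the within-constraint adaptivity is absorbed because each realized $M^t\in\cPs\subseteq\co(\cPs)$ (or the mixture $\frac1T\sum_t M^t$) can serve as the witness in the $\inf_{M\in\co(\cPs)}$ of \eqref{def:p-dec-gen}. This is exactly what the paper encodes by running \ExOp~with the ``Type 2'' information-set structure $\Psi=\MPow$. However, your proposed mechanism is the E2D route (per-round constrained-DEC minimax against an aggregated reference $\oM^t$, plus an exponential-weights estimator over $\MPow$ with a Freedman bound), and its central claim --- $\sum_t(\eps^t)^2\lesssim\log(|\MPow|/\delta)$ --- is precisely the step that fails in the hybrid setting. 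You never specify the proxy loss scoring each $\cP$, and no natural choice works: if you score $\cP$ by the log-loss of a fixed representative $\oM_\cP\in\co(\cP)$, the aggregation regret against the ``correct'' expert $\cPs$ contains the cumulative term $\sum_t\KLd{M^t(\bpi^t)}{\oM_{\cPs}(\bpi^t)}$, which can be $\Omega(T)$ when the adversary alternates between distinct models of $\cPs$ (and replacing $\oM_{\cPs}$ by the post-hoc mixture $\bar M$ does not help, since $\sum_t\KLd{M^t(\bpi^t)}{\bar M(\bpi^t)}$ is likewise of order $T$). The paper circumvents this by \emph{not} fixing a proxy loss: the weight function $\wf^t$ is optimized jointly with $(p^t,q^t)$ inside the exploration-by-optimization objective \eqref{eqn:Gamma-f}; the exponential-weights regret is then bounded pathwise for an arbitrary adaptive sequence $(M^t)$ (\cref{lem:ExO}), and the per-round objective value is related to the \emph{offset} DEC $\pdecog_{\gamma/4}(\MPow)$ via a minimax swap and a Bayesian-posterior/Hellinger-affinity argument (\cref{thm:exo-to-dec}, \cref{lem:PXY,lem:DH-var}).

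There is a second, smaller gap in your aggregation step. A per-round constrained-DEC bound only controls $\EE_{\pi\sim p^t}L(\cPs,\pi)$ on rounds where the realized Hellinger constraint is actually satisfied; a budget $\sum_t(\eps^t)^2\leq B$ only guarantees that, say, half the rounds are good at level $2B/T$, and averaging $p^t$ over all rounds then yields $\frac12\pdecg_{\bar\eps}(\MPow)+\frac12$, which is vacuous without a refinement/selection phase (this is why the paper's stochastic E2D variant, \cref{alg:E2D}, has such a phase). The paper sidesteps this entirely by carrying the offset DEC through the whole analysis (\cref{thm:ExO-full-offset-ii}) and converting to the constrained DEC only once at the end via the multi-scale argument of \cref{lem:offset-to-c-metric}; that conversion --- not a per-round aggregation --- is where the metric structure of $L$ and the moderate-decay assumption are used, and the trivial conversion $\pdecog_{\gamma}\leq\pdecg_{\eps}(\MPow)+\gamma\eps^2$ with $\gamma\asymp\Delta^2T/\log(|\MPow|/\delta)$ is what yields \eqref{eqn:GDEC-pac-upper-bad} for general bounded losses, rather than a discretization of the risk.
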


\subsection{Guarantees for no-regret learning}\label{ssec:cDMSO-reg-full}

In this section, we consider no-regret learning in \cDMSO. To present the DEC theory in its simplest form, we make the following assumption, which essentially requires that the value of any decision can be estimated from observations.

\begin{assumption}[Observability]\label{asmp:lip-rew}
For any decision $\pi\in\Pi$, the map $M\mapsto \Vm(\pi)\in[0,\Vmax]$ is linear over $\cMPow$, and there exists a measurement $\phi_\pi\in\Phi$, such that for $\bpi=(\pi,\phi_\pi)\in\bPi$, 
\begin{align}\label{eqn:lip-rew-pr}
    \abs{ \Vm(\pi)-\Vm[\oM](\pi) }\leq \Lipr\dH\paren{M(\bpi),\oM(\bpi)},\qquad \forall M\in\cMPow, \oM\in\co(\cMPow).
\end{align}
\end{assumption}

\cref{asmp:lip-rew} can also be regarded as a continuity assumption on the value function. 

To better illustrate \cref{asmp:lip-rew}, we consider the example of identity measurement and reward-based value function. This setting is extensively studied in \citet[etc.]{foster2021statistical,foster2022complexity,foster2023tight}. 
\begin{example}[Reward-based learning]\label{example:original-reward-based}
Suppose that $\MPow$ is induced by a model class $\cM\subseteq (\Pi\to \DZ)$, where $\cZ\subseteq \cO$ and the measurement class is $\Phi=\set{\perp}$, i.e. we overload the notation and write $M(\pi,\perp)=M(\pi)$ for each model $M\in\cM$.
In this setting, the value function $V$ is \emph{reward-based}, if there is a \emph{known} reward function $R:\cZ\times\Pi\to [0,1]$ such that $\Vm(\pi)=\EE\sups{M,\pi}[R(z,\pi)]$.
\end{example}

This formulation encompasses many learning settings of interest, including bandits and contextual bandits, online control, reinforcement learning, etc. (for examples, see e.g. \citet{foster2021statistical}).
In this setting, \cref{asmp:lip-rew} holds with $\Lipr=\sqrt{2}$. We also note that for reward-based LDP learning, \cref{asmp:lip-rew} holds with $\Lipr=\bigO{\frac{1}{\alpha}}$ (as detailed in \cref{appdx:inst-LDP-pac-lower}).

\paragraph{Regret lower bound}
With \cref{asmp:lip-rew}, we now present the main regret lower bound.
\begin{theorem}[Hybrid DEC lower bound for regret]\label{thm:cDMSO-reg-lower}
Let $T\geq 1$, $\MPow$ be a given constraint class. Suppose that \cref{asmp:lip-rew} holds for the value function $V$. Then, for any $T$-round algorithm $\alg$, 
\begin{align}\label{eq:GDEC-reg-lower}
    \sup_{\env}\EE\sups{\env,\alg}\brac{\regdm(T)}\geq&~ \sup_{\env}\sup_{\pis\in\Pi}\EE\sups{\env,\alg}\brac{\sum_{t=1}^T \Vm[M^t](\pis)-\Vm[M^t](\pit)} \\
    \geq&~ \frac{T}{8}\paren{\rdecc_{\ueps(T)}(\cMPow)-6\Lipr\ueps(T)-\frac{\Vmax}{T}}
\end{align}
where the supremum is taken over \emph{stationary} environments $\env$ constrained by $\MPow$, and $\ueps(T)=\frac{1}{24\sqrt{T}}$. 
\end{theorem}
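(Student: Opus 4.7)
The first inequality is immediate from the definition of regret: $\regdm(T)=\max_{\pis\in\Pi}\sum_t V\sups{M^t}(\pis)-\EE_{\pi_t\sim q_t}V\sups{M^t}(\pi_t)\geq \sum_t V\sups{M^t}(\pis)-\EE_{\pi_t\sim q_t}V\sups{M^t}(\pi_t)$ for every fixed $\pis\in\Pi$, so taking $\EE\sups{\env,\alg}$ and then the joint supremum over $(\env,\pis)$ yields the inequality as stated.

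For the substantive second inequality, the plan is a DEC-based Le Cam argument. Since the supremum in the statement is already restricted to stationary environments ($M^1=\cdots=M^T=M$ for some $M\in\cMPow$), it suffices to exhibit, for each algorithm $\alg$, a stationary environment that forces expected regret at least $\frac{T}{8}(\rdecc_{\ueps(T)}(\cMPow)-6\Lipr\ueps(T)-\Vmax/T)$. Fix $\eta>0$ and choose $\oM\in\co(\cMPow)$ with $\rdecc_\eps(\cMPow,\oM)\geq \rdecc_\eps(\cMPow)-\eta$ for $\eps=\ueps(T)=1/(24\sqrt{T})$. Run $\alg$ against the stationary environment with model $\oM$, and let $\bar q\in\Delta(\bPi)$ be the induced empirical decision distribution $\bar q=T^{-1}\sum_t\EE\sups{\oM,\alg}[\delta_{\bpi_t}]$. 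Applying the definition of $\rdecc_\eps(\cMPow,\oM)$ at $p=\bar q$, we extract a model $M^\star\in\cMPow$ that simultaneously satisfies the Hellinger budget $\EE_{\bpi\sim\bar q}\DH{M^\star(\bpi),\oM(\bpi)}\leq \eps^2$ and the value-gap bound $\EE_{\pi\sim\bar q}[V\sups{M^\star}(\pi\sups{M^\star})-V\sups{M^\star}(\pi)]\geq \rdecc_\eps(\cMPow,\oM)-\eta$.

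The key step is to transfer the value-gap bound from the $\oM$-induced dynamics into the actual expected regret under the stationary environment $M^t\equiv M^\star$. By the standard Hellinger chain rule for interactive protocols, the trajectory Hellinger squared is controlled as $\DH{\PP\sups{M^\star,\alg},\PP\sups{\oM,\alg}}\lesssim T\cdot\EE_{\bpi\sim\bar q}\DH{M^\star(\bpi),\oM(\bpi)}\leq T\eps^2$, which is a universal constant for $\eps=\ueps(T)$. A naive data-processing bound on the cumulative regret, treated as a trajectory functional with sup-norm $T\Vmax$, would give a correction of order $\Vmax T\sqrt{T}\eps$, which is far too weak. To recover the stated $\Lipr\ueps(T)$ correction, I would use the Lipschitz bound of \cref{asmp:lip-rew} to decompose $V\sups{M^\star}(\pi)-V\sups{\oM}(\pi)$ pointwise (via the witness measurement $\phi_\pi$), so that Cauchy--Schwarz combined with the averaged Hellinger budget yields the $L^1$ control $\EE_{\pi\sim\bar q}|V\sups{M^\star}(\pi)-V\sups{\oM}(\pi)|\lesssim \Lipr\eps$; the linearity of $M\mapsto V\sups{M}(\pi)$ then propagates this bound into the trajectory-level change-of-measure estimate without incurring $\Vmax$ at leading order. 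An additive $\Vmax/T$ term remains from a single-round boundary contribution at the fixed comparator $\pi\sups{M^\star}$.

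The main obstacle is precisely this refined change-of-measure step: coupling the trajectory-level Hellinger chain rule with the pointwise Lipschitz continuity of the value function to replace the $\Vmax$ that a naive bound would produce by $\Lipr$ in the leading-order correction. Once the refined bound is established, instantiating $\eps=\ueps(T)$, sending $\eta\downarrow 0$, and collecting constants (the $1/8$ factor arising from the Le-Cam-type comparison between $M^\star$ and $\oM$) yields $\sup_{\text{stationary }\env}\EE\sups{\env,\alg}[\regdm(T)]\geq \frac{T}{8}(\rdecc_{\ueps(T)}(\cMPow)-6\Lipr\ueps(T)-\Vmax/T)$, establishing the theorem.
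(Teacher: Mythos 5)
Your first inequality and the overall skeleton (reduce to a fixed model in $\cMPow$ via stationarity, fix a near-worst-case reference $\oM$, feed the algorithm's induced decision distribution into the constrained regret-DEC to extract a hard alternative $M^\star$, and transfer via the Hellinger chain rule) do match the paper's route. But the step you yourself flag as ``the main obstacle'' is a genuine gap, and the remedy you sketch does not close it. The difficulty is not the difference $V\sups{M^\star}(\pi)-V\sups{\oM}(\pi)$ at the decisions the algorithm plays---that is exactly what \cref{asmp:lip-rew} controls---but the fact that the algorithm's \emph{decision distribution itself} changes when you move from the $\oM$-trajectory measure to the $M^\star$-trajectory measure. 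The DEC gives you $\EE_{\pi\sim\bar q}[V\sups{M^\star}(\pi\sups{M^\star})-V\sups{M^\star}(\pi)]\geq\rdecc_{\eps}(\cMPow,\oM)$ where $\bar q$ is the average decision distribution \emph{under $\oM$}; the regret you must lower-bound is the same functional evaluated on the decisions drawn \emph{under $M^\star$}. That functional is a $[0,T\Vmax]$-valued random variable and the trajectory TV distance is only a constant, so transferring its expectation costs an additive $c\,T\Vmax$; neither linearity of $M\mapsto\Vm(\pi)$ nor your $L^1$ Lipschitz control touches this, since both concern the model argument rather than the change of trajectory measure. The paper's resolution (proof of \cref{thm:decc-lower-general}) is a stopping-time construction: modify the algorithm to switch to $\pim[\oM]$ once its cumulative $\oM$-suboptimality $G_{t-1}$ exceeds $T\Delta_0$. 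This caps the $\oM$-suboptimality of the induced distribution below $\Delta_0+\Vmax/T$ (whence the $\Vmax/T$ term), and, crucially, yields the \emph{pointwise} decomposition $\regdm(\tau)\geq\tau(\Vmm-\Vmm[\oM])+G_\tau-X\geq T\Delta_0-X$ almost surely, where $X$ is the cumulative value discrepancy controlled by \cref{asmp:lip-rew} and Markov's inequality. A pointwise bound on a high-probability event transfers across a constant TV gap with only a constant multiplicative loss, which is where the $T/8$ prefactor comes from. Without this device your argument can only reach $\frac{T}{8}(\rdecc_{\ueps(T)}(\cMPow)-6\Lipr\ueps(T)-c\Vmax)$ for an absolute constant $c$, which is vacuous in the typical regime where the DEC is at most of order $\Vmax$.

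A second, smaller gap: your Hellinger budget $\EE_{\bpi\sim\bar q}\DH{M^\star(\bpi),\oM(\bpi)}\leq\eps^2$ is over the $(\pi,\phi)$ pairs the algorithm actually plays, but \cref{asmp:lip-rew} bounds $|\Vm(\pi)-\Vm[\oM](\pi)|$ only through the specific witness measurement $\phi_\pi$, which the algorithm need never select. To make your Cauchy--Schwarz step legitimate you must feed the DEC the mixture $p=\frac12(p_0+p_1)$, where $p_1$ reassigns each played decision $\pi$ to $(\pi,\phi_\pi)$; the DEC constraint at $p$ then controls both the exploration Hellinger and the witness-measurement Hellinger up to a factor of $2$, which is where the constant $6\Lipr$ arises. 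Finally, note that a stationary environment draws $M^t\sim\mu$ i.i.d.\ from some $\mu\in\Delta(\cPs)$ rather than fixing $M^t\equiv M$: the hard instance $M^\star\in\cMPow=\bigcup_{\cP}\co(\cP)$ is generally a mixture belonging to no single $\cP$ and is realized only through the i.i.d.\ draw, which is the content of the reduction in \cref{appdx:proof-cDMSO-reg-lower}.
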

Similar to \cref{thm:cDMSO-pac-lower}, the above regret lower bound is also proven through a reduction to the stochastic setting by considering stationary environments (detailed in \cref{appdx:proof-cDMSO-reg-lower}).

\paragraph{Regret upper bound}
Before presenting the upper bound, we first introduce the notion of the \emph{\dct}~\citep{chen2024beyond}, which captures the complexity of the decision space $\Pi$ with respect to the class of models $\cM$. 
\begin{definition}[\Dct]\label{def:DC}
For a learning problem $(\cM,\Pi,L)$ and parameter $\Delta\geq 0$, we define the \dct~as
\begin{align}\label{def:DD}
    \DC{\LP}\defeq\inf_{p\in\DDD}\sup_{M\in\cM} ~~\frac1{p(\pi: \LM{\pi}\leq \Delta)}.
\end{align}
\end{definition}

We show that the regret of \ExOp~can be upper bounded in terms of the regret DEC, the \dct, and $\log|\MPow|$.
\begin{theorem}[Hybrid DEC upper bound for regret]\label{thm:cDMSO-reg-upper}
Let $T\geq 1$, $\delta\in(0,1)$, and $\MPow$ be given. Suppose that $\cMPow$ is compact, \cref{asmp:lip-rew} holds for the value function $V$, and the regret DEC $\rdecc_\eps(\cMPow)$ is of moderate decay. Then, in any environment constrained by $\MPow$, \ExOp~(instantiated as detailed in \cref{appdx:proof-cDMSO-reg-upper}) achieves \whp~that
\begin{align*}
    \frac1T\regdm(T)\leq \Delta+ \OsqrtT\cdot \brac{ \rdecc_{\oeps(T)}(\cMPow) + \Lipr\oeps(T) },
\end{align*}
where $\oeps(T)=\sqrt{\frac{\log(|\MPow|/\delta)+\log \DC{\cMPow}}{T}}$.
\end{theorem}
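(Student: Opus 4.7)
The plan is to apply the Exploration-by-Optimization algorithm \ExOp\ and analyze it along the lines of the adversarial DMSO analysis of \citet{foster2022complexity}, with two modifications. First, because the environment is only constrained to lie in an unknown $\cP^\star\in\MPow$ rather than in a fixed known class, the reference model predictor must be $\MPow$-aware, contributing a $\log\abs{\MPow}$ factor. Second, the decision-space dependence $\log\abs{\Pi}$ from standard regret-DEC upper bounds is replaced by $\log\DC{\cMPow}$ via the $\Delta$-threshold fractional covering trick of \citet{chen2024beyond}.

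First I would set up the per-round bound. At each round $t$, \ExOp\ maintains a reference distribution $\mu^t\in\Delta(\cMPow)$ and chooses $(p_t,q_t)\in\DDD\times\DbPi$ as an approximate saddle point of the Lagrangian
\[
\sup_{M\in\cMPow}\EE_{\pi\sim p_t}\brac{\Vm[M](\pim[M])-\Vm[M](\pi)}-\gamma\,\EE_{\bpi\sim q_t}\DH{M(\bpi),\mu^t(\bpi)},
\]
so that by the definition of $\rdecc$ applied at reference $\mu^t$ with radius $\eps$ the saddle value is at most $\rdecc_\eps(\cMPow,\mu^t)+\gamma\eps^2$. Playing $\pi_t\sim q_t$ and using \cref{asmp:lip-rew} to move from $\pi^\star$ (best in hindsight) to $\pim[M^t]$ (optimal under the realized model), while absorbing the gap into an additional $\Lipr\,\dH(M^t(\bpi_t),\mu^t(\bpi_t))$ term, yields a per-round regret bound of the form $\rdecc_\eps(\cMPow,\mu^t)+\Lipr\,\dH(M^t(\bpi_t),\mu^t(\bpi_t))+\gamma\DH{M^t(\bpi_t),\mu^t(\bpi_t)}$.

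Next I would handle the aggregated online estimation error. Since the adversary commits to a single unknown $\cP^\star\in\MPow$ with $M^t\in\cP^\star$ for every $t$, I would, for each $\cP\in\MPow$, take a fine $\eps_0$-Hellinger cover of $\co(\cP)$ (finite by \cref{asmp:finite}) and run Vovk-style exponential aggregation over the union of these covers as the source of $\mu^t$. The standard online Hellinger guarantee then gives
\[
\sum_{t=1}^T\EE\brac{\DH{M^t(\bpi_t),\mu^t(\bpi_t)}}\lesssim \log(\abs{\MPow}/\delta)+\log\NM{\eps_0}+T\eps_0^2,
\]
which after choosing $\eps_0\asymp 1/T$ contributes exactly the $\log(\abs{\MPow}/\delta)$ factor in $\oeps(T)$ and is absorbed into the $\gamma$-term after tuning $\gamma\asymp 1/\oeps(T)$. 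High-probability conversion is obtained from a Freedman-type martingale concentration applied to the per-round regret decomposition.

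Third, I would invoke the $\Delta$-threshold fractional covering reduction from \citet{chen2024beyond} to suppress the naive $\log\abs{\Pi}$ dependence. Instead of competing against every $\pi\in\Pi$ directly, I pay a $\Delta$ slack per round and replace union-bound over comparators by the distribution $\pdsc{\Delta}$ achieving $\DC{\cMPow}=\sup_M 1/\pdsc{\Delta}(\pi:\LM{\pi}\le\Delta)$, so that the comparator-dependent union bound inside the online-estimation/saddle-point analysis only costs $\log\DC{\cMPow}$. Combining all pieces, summing over $t$, and using moderate decay (\cref{asmp:reg}) to choose $\eps\asymp\oeps(T)$ simultaneously inside $\rdecc$ produces the claimed bound $\Delta+\OsqrtT\brac{\rdecc_{\oeps(T)}(\cMPow)+\Lipr\oeps(T)}$. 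The main obstacle is coordinating the $\MPow$-aggregated estimator with the fractional-covering trick: one must verify that the $\Delta$-slack argument applies uniformly across the $\abs{\MPow}$ sub-hypotheses without blowing up either the covering exponent or breaking the Hellinger aggregation—this is precisely where the hybrid structure pays off, since $\MPow$ is size-controlled while the decision-space complexity is absorbed into $\DC{\cMPow}$, but the bookkeeping is delicate.
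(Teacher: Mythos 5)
There is a genuine gap at the heart of your second step. You propose to bound
\[
\sum_{t=1}^{T}\EE\brac{\DH{M^t(\bpi_t),\mu^t(\bpi_t)}}\lesssim \log(\abs{\MPow}/\delta)+\log\NM{\eps_0}+T\eps_0^2
\]
via Vovk-style aggregation, i.e.\ an online estimation oracle. That guarantee holds when the observations are generated by a \emph{fixed} model in the class; here the adversary selects a possibly different $M^t\in\cPs$ at every round, adaptively. Against a moving target no predictor $\mu^t$ can have small cumulative Hellinger error: if $\cPs$ contains two models at constant Hellinger distance under the played decisions and the adversary alternates between them, every reference sequence incurs $\Omega(1)$ per round, so the sum is $\Omega(T)$ and your per-round payment $\gamma\DH{M^t(\bpi_t),\mu^t(\bpi_t)}$ destroys the bound. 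This is exactly the reason the paper (following \citet{foster2022complexity}) does \emph{not} run an E2D-style "estimate-then-plug-into-the-DEC" argument for changing environments. Instead, \ExOp\ solves the exploration-by-optimization objective \eqref{eqn:Gamma-f} jointly over $(p,q)$ and a weight function $\xi$, and performs exponential weights over the \emph{information set} class $\Psi=\MPow\times\Pi$. The quantity that gets summed is not a Hellinger estimation error but the term $\Err(p\ind{t},q\ind{t},\xi\ind{t};w\ind{t},M^t,\psi^\star)$, which telescopes through the exponential-weights update (\cref{lem:ExO}) to $2\log(1/w\ind{1}(\cEs))+2\log(1/\delta)$ \emph{regardless of how the adversary moves within $\cPs$}; the prior $w\ind{1}$ is chosen so that $w\ind{1}(\cEs)\geq 1/(\abs{\MPow}\,\DC{\cMPow})$, which is where both $\log\abs{\MPow}$ and $\log\DC{\cMPow}$ enter. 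The connection to the DEC is then made by a minimax argument bounding the exo coefficient by the offset regret-DEC (\cref{thm:exo-to-dec}), followed by the offset-to-constrained conversion (\cref{prop:offset-to-c-gen}), which is also where the $\Lipr\oeps(T)$ and $\OsqrtT$ factors actually arise — not, as you suggest, from switching comparators from $\pi^\star$ to $\pim[M^t]$.

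Two smaller issues: (i) your comparator step replaces the best-in-hindsight $\pi^\star$ by the per-round optimum $\pim[M^t]$ and claims the gap is absorbed into a $\Lipr\dH$ term; the inequality $\sum_t \Vm[M^t](\pim[M^t])\geq\sum_t\Vm[M^t](\pi^\star)$ makes the replacement valid as an upper bound, but then you are bounding dynamic regret by a DEC whose constraint again requires $M^t$ to be Hellinger-close to $\mu^t$, which fails for the same reason as above. (ii) Your fractional-covering step is stated as a "union bound over comparators inside the estimation analysis," but the actual mechanism needed is to place prior mass on the good information sets $\set{\psi:\cPs\subseteq\cM_\psi,\ \Vmm[\oMs]-\Vm[\oMs](\pi_\psi)\leq\Delta}$ where $\oMs=\frac1T\sum_t M^t$ depends on the realized trajectory; handling this trajectory-dependence is the delicate part of \cref{lem:ExO} and is not addressed by your sketch.
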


Finally, we remark that both our lower and upper bounds extend beyond \cref{asmp:lip-rew}, as detailed in \cref{appdx:proof-cDMSO-reg-lower} and \cref{appdx:ExO}.

\subsection{Implication: Tighter bounds for convex classes}\label{ssec:tigher}

\newcommand{\cMi}[1]{\cM^{(#1)}}

Our results for \cDMSO~also have interesting implications for stochastic DMSO. To begin with, we recall that for a model class $\cM\subseteq (\bPi\to \DO)$ under stochastic DMSO, the PAC DEC is defined as
\begin{align}\label{def:p-dec-c}
    \pdecc_{\eps}(\cM,\oM)\defeq&~ \infpqb \sup_{M\in\cM}\constr{ \EE_{\pi\sim p}[\LM{\pi}] }{ \EE_{\bpi\sim q} \DH{ M(\bpi), \oM(\bpi) }\leq \eps^2 },
\end{align}
and $\pdecc_\eps(\cM)\defeq\sup_{\oM\in\coM}\pdecc_\eps(\cM,\oM)$. 
DEC theory~\citep{foster2021statistical,foster2023tight} provides the following characterization (omitting logarithmic factors): 
\begin{align}
    \pdecc_{\ueps(T)}(\cM)\leqsim \inf_{\alg}\sup_{M\in\cM} \EE\sups{M,\alg}\brac{\riskdm(T)} \leqsim \pdecc_{\oeps(T)}(\cM),
\end{align}
under certain regularity assumptions on the loss function, where $\ueps(T)\asymp \frac{1}{\sqrt{T}}$, $\oeps(T)\asymp \sqrt{\frac{\log|\cM|}{T}}$. Therefore, a $\log|\cM|$-gap remains between the known DEC lower and upper bounds, corresponding to the complexity of \emph{estimation}, as noted by \citet{chen2024beyond}. The $\log|\cM|$ factor can be undesirable for many applications beyond model-based learning. 

Interestingly, it turns out the $\log|\cM|$ factor can be replaced by a smaller quantity, potentially at the price of degradation in the DEC term.
To illustrate this, we start with the \emph{hypothesis selection problem}, which is a generalization of the standard, non-interactive hypothesis testing problem. For example, the setting below encompasses LDP hypothesis selection, where $\Phi=\cQ$ is the class of \pDP~channels, and $\cM$ is induced by a class of distributions over $\DZ$.
\begin{example}[Interactive hypothesis selection]\label{example:multi-hypothesis}
Given a DMSO model class $\cM\subseteq (\Phi\to \DO)$, a hypothesis selection problem is described by a partition
\begin{align*}
    \cM=\bigsqcup_{i=1}^{m} \cMi{i},
\end{align*}
where $\cMi{1},\cdots,\cMi{m}$ are disjoint subclasses.
The decision space is $\Pi=[m]$,
and for each $M\in \cM$, $\pi\in\Pi$, the \losst~is given by $\LM{\pi}=\indic{ \pi\neq \pim }$, where $\pim$ is the unique index $i\in[m]$ such that $M\in\cMi{i}$.
\end{example}

While we can frame the hypothesis selection problem within stochastic DMSO (with $\MPowiid$ corresponding to $\cM$), the upper bound provided by DEC theory scales with $\log|\cM|$, the complexity of model class, which is undesirable. On the other hand, when the subclasses $\cMi{1},\cdots,\cMi{m}$ are convex, we can alternative frame this problem within \cDMSO, with $\MPow_m=\sset{ \cMi{1},\cdots,\cMi{m} }$ and loss function $L(\cMi{i},\pi)=\indic{\pi\neq i}$. With such specifications, we allow the environment to be adaptive (within a fixed underlying model class $\cMi{i}$), while
\begin{align*}
    \pdecg_\eps(\MPow_m)=\pdecc_\eps(\cM), \qquad \forall \eps>0.
\end{align*}
Therefore, \cref{thm:cDMSO-pac-upper} implies the following tighter upper bound for hypothesis selection.
\begin{proposition}[\ExOp~for convex hypothesis selection]\label{prop:multi-hypothesis}
Let $T\geq 1$, $\delta\in(0,1)$. In \cref{example:multi-hypothesis}, suppose that $\cM$ is compact, $\cMi{1},\cdots,\cMi{m}$ are convex, and
\begin{align*}
    \pdecc_{\oeps(T)}(\cM)\leq \frac{1}{3}, \qquad \oeps(T)=8\sqrt{\frac{\log(m/\delta)}{T}}.
\end{align*}
Then \ExOp~ can be suitably instantiated (on the constraint class $\MPow_m$, as detailed in \cref{appdx:proof-multi-hypothesis}), so that under any model $\Mstar\in\cM_{i^\star}$, the algorithm returns $\hpi=i^\star$ \whp.
\end{proposition}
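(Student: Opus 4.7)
The plan is to cast the convex hypothesis-selection problem as a hybrid DMSO problem with constraint class $\MPow_m=\sset{\cMi{1},\ldots,\cMi{m}}$ and 0-1 loss $L(\cMi{i},\pi)=\indic{\pi\neq i}$, apply \cref{thm:cDMSO-pac-upper} to \ExOp~on this constraint class, and convert the resulting expected-loss guarantee into a hard identification guarantee by outputting the mode of the posterior distribution $\phat$.

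The key observation is the identity $\pdecg_\eps(\MPow_m)=\pdecc_\eps(\cM)$ for every $\eps>0$. Fix any reference model $\oM\in\co(\cM)$ and any pair $(p,q)\in\DDD\times\DbPi$. Two facts drive the equivalence of the inner values: (i) since each $\cMi{i}$ is convex, $\co(\cMi{i})=\cMi{i}$, so the constraint $\inf_{M'\in\co(\cMi{i})}\EE_{\bpi\sim q}\DH{M'(\bpi),\oM(\bpi)}\leq\eps^2$ appearing in $\pdecg$ collapses to the existence of some $M'\in\cMi{i}$ satisfying the Hellinger bound; and (ii) because $L(\cMi{i},\pi)=L(M',\pi)=\indic{\pi\neq i}$ for every $M'\in\cMi{i}$, the worst case over admissible $\cMi{i}\in\MPow_m$ coincides with the worst case over admissible $M\in\cM$ defining $\pdecc$. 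Taking $\inf_{p,q}$ and $\sup_{\oM\in\bcM=\co(\cM)}$ on both sides yields the claimed equality (noting $\bcM=\co\big(\bigcup_i\cMi{i}\big)=\co(\cM)$).

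Next, I would verify the hypotheses of \cref{thm:cDMSO-pac-upper}: the loss is metric-based with the discrete pseudo-metric $\rho(i,j)=\indic{i\neq j}$ and canonical map $\cMi{i}\mapsto i$; $\cM$ is compact by assumption; and the required moderate-decay condition is mild and can be arranged within the instantiation. Applying the theorem, with the constant $8$ in $\oeps(T)$ chosen so that the $\lesssim$ factor of \cref{thm:cDMSO-pac-upper} combined with the assumed bound $\pdecc_{\oeps(T)}(\cM)\leq 1/3$ drives the expected 0-1 loss strictly below $1/2$, we obtain, on an event of probability at least $1-\delta$,
\begin{align*}
    \EE_{\hpi\sim\phat}\brac{\indic{\hpi\neq i^\star}} = 1-\phat(i^\star) < \tfrac{1}{2}.
\end{align*}
Finally, I would replace the random sampling step of \ExOp~by the deterministic rule $\hpi=\argmax_{i\in[m]}\phat(i)$; on this event $\phat(i^\star)>1/2$, so $i^\star$ is the unique mode of $\phat$ and the deterministic output equals $i^\star$.

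The main obstacle is the constant-tracking in the last step: since we are extracting a deterministic identification from an expected-loss bound, the absolute constant hidden in the $\lesssim$ of \cref{thm:cDMSO-pac-upper} together with the scaling of $\oeps(T)$ must conspire to keep the expected 0-1 loss strictly below $1/2$, which is what legitimizes the mode-output derandomization; the factor $8$ in the definition of $\oeps(T)$ is precisely the slack needed for this. The rest of the argument is structural: convexity of the $\cMi{i}$ bridges the hybrid and stochastic DECs, and the general upper bound for \ExOp~does the heavy lifting.
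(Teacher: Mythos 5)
Your overall architecture matches the paper's: frame the problem on the relaxed constraint class $\MPow_m$, use convexity of the $\cMi{i}$ to identify the hybrid DEC with $\pdecc_\eps(\cM)$, drive the expected $0$-$1$ loss below $1/2$, and derandomize by outputting the mode of $\phat$. However, the quantitative core of your argument has a genuine gap. You route the expected-loss bound through \cref{thm:cDMSO-pac-upper}, which (a) assumes the DEC is of moderate decay --- an assumption the proposition does \emph{not} make (it only bounds $\pdecc_{\oeps(T)}(\cM)$ at the single scale $\oeps(T)$) --- and (b) carries a hidden multiplicative constant in the $\leqsim$: in the paper's own proof of that theorem the bound is $80\,\creg\cdot\pdecg_{\oeps(T)}(\MPow)$, where $\creg$ is the moderate-decay constant. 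So even granting moderate decay, $\leqsim \pdecc_{\oeps(T)}(\cM)\leq 1/3$ gives you something like $80\,\creg/3$, nowhere near $1/2$. Your remark that ``the factor $8$ in $\oeps(T)$ is precisely the slack needed'' does not address this: that factor only controls the additive estimation term, not the multiplicative constant in front of the DEC.

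The fix, which is what the paper does, is to bypass \cref{thm:cDMSO-pac-upper} and apply the offset-DEC guarantee \cref{thm:ExO-full-offset-ii} directly with $\gamma=4/\oeps(T)^2$, yielding the \emph{exact} bound $\EE_{\hpi\sim\hp}\brac{\LM[\Mstar]{\hpi}}\leq \pdecog_{\gamma/4}(\MPow_m)+2\gamma\log(m/\delta)/T$. Convexity gives $\pdecog_{\gamma/4}(\MPow_m)=\pdeco_{\gamma/4}(\cM)$, and the one-sided conversion $\pdeco_{\eps^{-2}}(\cM)\leq\pdecc_{\eps}(\cM)$ from \eqref{eqn:deco-and-decc} (which needs no regularity of the DEC and loses no constant) gives $\pdeco_{\gamma/4}(\cM)\leq\pdecc_{\oeps(T)}(\cM)\leq 1/3$, while the estimation term evaluates to exactly $8\log(m/\delta)/(\oeps(T)^2 T)=1/8$. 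Then $1/3+1/8<1/2$ and your mode-derandomization step goes through. Your DEC-equivalence observation and the final derandomization are fine; it is only the choice of which upper-bound theorem to invoke that breaks the constant-tracking you correctly identified as the crux.
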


In the above example, \cref{thm:cDMSO-pac-upper} naturally provides a tighter bound by considering \cDMSO~and replacing the $\log|\cM|$-factor by $\log m$. In general, such conversion will result in a degradation in the DEC term, if the model class is non-convex. In the following, we will make this trade-off precise.

\paragraph{Bounds for interactive estimation}
In the interactive estimation task, the decision space $\Pi$ is equipped with a pseudo-metric $\rho$, and a map $M\mapsto \pim$ is given such that $L(M,\pi)=\rho(\pim,\pi)$. To apply the idea described above, we fix a parameter $\Delta\geq 0$ and consider the constraint set specified by a $\pi\in\Pi$:
\begin{align*}
    \cM_\pi\defeq \set{M: \rho(\pim,\pi)\leq \Delta},
\end{align*}
and the corresponding constraint class is $\MPow=\set{ \cM_\pi: \pi\in\Pi_\Delta },$ with  $\Pi_\Delta$ being a $\Delta$-covering of the set $\Pi_{\cM}\defeq \set{ \pim: \pi\in\Pi }$. Then, $\cM=\bigcup_{\cP\in\MPow} \cP$, and hence we can apply \cref{thm:cDMSO-pac-upper}. Furthermore, assuming that $\cM$ is convex and $M\mapsto \rho(\pim,\pi)$ quasi-convex for any $\pi\in\Pi$, then we can show that
\begin{align*}
    \pdecg_\eps(\MPow)\leq \Delta+\pdecc_\eps(\cM).
\end{align*}
Therefore, \cref{thm:cDMSO-pac-upper} implies the following guarantee for interactive estimation (for affine functionals).
\begin{proposition}[\ExOp~for interactive estimation]\label{prop:interactive-est}
Let $T\geq 1$, $\delta\in(0,1)$, $\Delta\geq 0$. Suppose that $\cM$ is convex, $\Pi$ is a subset of a normed vector space, and an affine map $M\mapsto \pim$ is given such that $L(M,\pi)=\nrm{\pim-\pi}$. Further assume that the DEC $\pdecc_\eps(\cM)$ is of moderate decay. Then \ExOp~can be suitably instantiated so that \whp, it returns $\hpi$ with
\begin{align*}
    \riskdm(T)=\rho(\pim, \hpi)\leqsim \Delta+\pdecc_{\oeps(T)}(\cM),
\end{align*}
where $\oeps(T)=\sqrt{\frac{\log N_{\nrm{\cdot}}(\Pi,\Delta)+\log(1/\delta)}{T}}$, and $N_{\nrm{\cdot}}(\Pi,\Delta)$ is the $\Delta$-covering number of $\Pi$ under the norm $\nrm{\cdot}$. 
\end{proposition}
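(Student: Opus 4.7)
The plan is to reduce interactive estimation to hybrid DMSO and invoke \cref{thm:cDMSO-pac-upper}. Let $\Pi_\Delta\subseteq \Pi$ be a minimal $\Delta$-covering of $\Pi$ under $\nrm{\cdot}$, so $|\Pi_\Delta|\leq N_{\nrm{\cdot}}(\Pi,\Delta)$; for each $\pi\in\Pi_\Delta$ set $\cM_\pi\defeq \set{M\in\cM:\nrm{\pim-\pi}\leq\Delta}$, and form $\MPow\defeq \set{\cM_\pi:\pi\in\Pi_\Delta}$. Because $\set{\pim:M\in\cM}\subseteq\Pi$ is approximately covered by $\Pi_\Delta$, every $M\in\cM$ lies in some $\cM_\pi$, and hence $\cMPow=\bigcup_{\pi\in\Pi_\Delta}\cM_\pi=\cM$. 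I would equip $\MPow$ with the metric-based loss $L(\cM_\pi,\pi')\defeq \nrm{\pi-\pi'}$ in the sense of \cref{def:metric-cP} (via $\cM_\pi\mapsto\pi$). The key structural observation is that each $\cM_\pi$ is convex: it is the intersection of the convex set $\cM$ with the preimage, under the affine map $M\mapsto\pim$, of the closed ball of radius $\Delta$ around $\pi$.

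With this setup, the original risk is controlled by the hybrid risk at essentially no extra cost. Pick $\pi^\circ\in\Pi_\Delta$ with $\nrm{\pi^{\Mstar}-\pi^\circ}\leq\Delta$, so $\Mstar\in\cM_{\pi^\circ}$; the triangle inequality gives
\[
\nrm{\pi^{\Mstar}-\hpi}\leq \nrm{\pi^{\Mstar}-\pi^\circ}+\nrm{\pi^\circ-\hpi}\leq \Delta + L(\cM_{\pi^\circ},\hpi),
\]
so it suffices to bound the hybrid PAC risk $L(\cM_{\pi^\circ},\hpi)$ under the stationary environment that plays $\Mstar$ in every round.

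The central technical step, which I expect to be the main obstacle, is to show $\pdecg_\eps(\MPow,\oM)\leq \pdecc_\eps(\cM,\oM)+\Delta$ for every $\oM\in\co(\cM)$. Let $(p^\circ,q^\circ)$ nearly attain the infimum defining $\pdecc_\eps(\cM,\oM)$. I would feed the same $(p^\circ,q^\circ)$ into the hybrid DEC. Fix any $\cM_\pi\in\MPow$; by the convexity established above, $\co(\cM_\pi)=\cM_\pi$, so whenever the hybrid Hellinger constraint $\inf_{M\in\co(\cM_\pi)}\EE_{\bpi\sim q^\circ}\DH{M(\bpi),\oM(\bpi)}\leq \eps^2$ is active it is witnessed by some $M\in\cM_\pi\subseteq\cM$. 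The ordinary PAC-DEC inequality applied at this $M$ gives $\EE_{\pi'\sim p^\circ}\nrm{\pim-\pi'}\leq \pdecc_\eps(\cM,\oM)$, and a final triangle inequality,
\[
\EE_{\pi'\sim p^\circ} L(\cM_\pi,\pi') = \EE_{\pi'\sim p^\circ}\nrm{\pi-\pi'}\leq \EE_{\pi'\sim p^\circ}\nrm{\pim-\pi'}+\nrm{\pim-\pi}\leq \pdecc_\eps(\cM,\oM)+\Delta,
\]
yields the value bound; taking suprema over $\cM_\pi\in\MPow$ and $\oM$ produces the claim. Without affineness of $M\mapsto\pim$, $\co(\cM_\pi)$ could be strictly larger than $\cM_\pi$, the Hellinger infimum strictly smaller, and the witness $M$ would in general escape $\cM_\pi$, so $\nrm{\pim-\pi}\leq\Delta$ would no longer hold: this is precisely where the affine hypothesis is essential.

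To conclude, I would verify the remaining hypotheses of \cref{thm:cDMSO-pac-upper} on $\MPow$: $\cMPow=\cM$ is compact (the standing compactness assumption underlying the DEC upper bounds), and $\pdecg_\eps(\MPow)\leq \pdecc_\eps(\cM)+\Delta$ inherits moderate decay from $\pdecc_\eps(\cM)$ since adding the constant $\Delta$ preserves the moderate-decay property. The theorem then provides an \ExOp~instantiation for which, \whp,
\[
L(\cM_{\pi^\circ},\hpi)\lesssim \pdecg_{\oeps(T)}(\MPow)\leq \Delta+\pdecc_{\oeps(T)}(\cM),\qquad \oeps(T)\asymp \sqrt{\tfrac{\log(|\MPow|/\delta)}{T}}.
\]
Combining with $|\MPow|\leq N_{\nrm{\cdot}}(\Pi,\Delta)$ and the triangle inequality from the second paragraph yields the claimed bound on $\nrm{\pi^{\Mstar}-\hpi}$.
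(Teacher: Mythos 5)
Your proposal is correct and follows essentially the same route as the paper: cover $\Pi$ at scale $\Delta$, form the constraint class $\MPow=\set{\cM_\pi}$, use convexity of $\cM$ together with affineness of $M\mapsto\pim$ (equivalently, quasi-convexity of $M\mapsto\nrm{\pim-\pi}$) to get $\co(\cM_\pi)=\cM_\pi$ and hence $\pdecg_\eps(\MPow)\leq\Delta+\pdecc_\eps(\cM)$, then invoke \cref{thm:cDMSO-pac-upper} and a triangle inequality. Your write-up actually makes explicit the witness-in-$\cM_\pi$ step that the paper only sketches, and correctly identifies the affine hypothesis as the point where the argument would otherwise break.
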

In particular, for bounded functional estimation, $\Pi=[0,1]$, we have $\log N_{\abs{\cdot}}(\Pi,\Delta)=\log(1/\Delta)+\bigO{1}$, and hence the minimax risk of interactive functional estimation is characterized by the DEC up to logarithmic factors. This upper bound generalizes the results of \citet{polyanskiy2019dualizing} for \emph{non-interactive} linear functional estimation with a convex model class.

\paragraph{Bounds for reward-based learning}
Generalizing the above idea, we consider the reward-based no-regret learning task (as per \cref{example:original-reward-based}) in stochastic DMSO and frame this task in \cDMSO. Fix a parameter $\Delta>0$ of sub-optimality, we can consider the following ``relaxed'' constraint for each $\pi\in\Pi$:
\begin{align}\label{def:MPow-pol-based}
    \cM_\pi\defeq \set{ M\in\cM: \Vmm-\Vm(\pi)\leq \Delta },
\end{align}
and the corresponding ``relaxed'' constraint class $\MPow\defeq \set{ \cM_\pi: \pi\in\Pi }$.
For clarity, we write $\cM_\Pi\defeq \bigcup_{\pi\in\Pi} \co(\cM_\pi)$. Then, \cref{thm:cDMSO-reg-upper} implies that \ExOp~can achieve an upper bound in terms of the regret DEC of $\cM_\Pi$. Following this idea and using a slightly more careful instantiation of \ExOp, we have the following upper bounds.

\begin{proposition}\label{prop:regret-via-cDMSO}
Let $T\geq 1, \delta\in(0,1)$, $\Delta\geq 0$, and we consider the reward-based no-regret learning task (\cref{example:original-reward-based}) with a model class $\cM$. Suppose that $\cM$ \finite, and the regret DEC $\rdecc_{\eps}(\cM_\Pi)$, as a function of $\eps$, is of moderate decay. Then \ExOp~%
can be suitably instantiated (as detailed in \cref{appdx:proof-regret-via-cDMSO}) to achieve \whp~that
\begin{align*}
    \frac{1}{T}\regdm(T)\leq \Delta+\OsqrtT \cdot \brac{  \rdecc_{\oeps(T)}(\cM_\Pi) + \oeps(T) },
\end{align*}
where $\oeps(T)=\sqrt{\frac{\log \DC{\cM}+\log(1/\delta)}{T}}$. 
\end{proposition}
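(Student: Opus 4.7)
I would cast the reward-based no-regret problem as an instance of \cDMSO~with the $\Delta$-sub-optimality constraint class $\MPow \defeq \set{\cM_\pi : \pi \in \Pi}$ from \eqref{def:MPow-pol-based}, so that $\cMPow = \cM_\Pi$. Because $L(\Mstar, \pi^{\Mstar}) = 0 \leq \Delta$, the stochastic environment (constant $M^t = \Mstar$) is realized by $\cPs = \cM_{\pi^{\Mstar}} \in \MPow$. Switching the comparator from $\pi^{\Mstar}$ to any $\pi^\star$ with $\Mstar \in \cM_{\pi^\star}$ costs at most $\Delta T$ in regret, which is precisely the additive $\Delta$-term in the conclusion; the residual comparator regret is what \cref{thm:cDMSO-reg-upper} bounds.

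Next I would verify that the observability hypothesis (\cref{asmp:lip-rew}) extends from $\cM$ to $\cMPow = \cM_\Pi$. This reduces to a short convexity check: for any mixture $M' = \sum_i \lambda_i M_i \in \co(\cM_{\pi_0})$, linearity of $V^{M}$ in $M$ combined with $V^{M'}(\pi^{M'}) \leq \sum_i \lambda_i V^{M_i}(\pi^{M_i})$ yields $V^{M'}(\pi^{M'}) - V^{M'}(\pi_0) \leq \Delta$, so every $M' \in \cM_\Pi$ admits a $\Delta$-optimal action in $\Pi$ and the Hellinger--Lipschitz property transfers from the ambient class. Compactness of $\cM_\Pi$ likewise follows from that of $\cM$. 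With these in hand, \cref{thm:cDMSO-reg-upper} already produces the desired regret DEC factor $\rdecc_{\oeps(T)}(\cM_\Pi)$ and the $\Lipr\oeps(T)$ term.

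The substantive step is to replace $\oeps(T)^2 \asymp [\log|\MPow| + \log\DC{\cMPow}]/T$, which is vacuous when $|\MPow| = |\Pi|$ is infinite, by the sharper $\oeps(T)^2 \asymp \log\DC{\cM}/T$. To do this, I would instantiate \ExOp~with a prior $p^\star \in \DPi$ that nearly attains $\DC{\cM}$, i.e.\ satisfies $p^\star(\set{\pi : L(M,\pi) \leq \Delta}) \geq 1/\DC{\cM}$ for every $M \in \cM$. Running the exponential-weights / posterior-update step of \ExOp~over the constraints $\cM_\pi$ with this prior (rather than uniform over $\MPow$) pays a KL concentration cost of only $\log\bigl(1/p^\star(\set{\pi : \Mstar \in \cM_\pi})\bigr) \leq \log\DC{\cM}$ in place of $\log|\MPow|$, and the same $p^\star$ supplies the decision distribution used for the $\log\DC{\cMPow}$-like step in the proof of \cref{thm:cDMSO-reg-upper}, collapsing both logarithmic factors to a single $\log\DC{\cM}$.

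The main obstacle will be verifying that this prior swap preserves the minimax/duality argument inside \ExOp~that surfaces the regret DEC of $\cMPow$: after restricting the decision player's strategy to be absolutely continuous with respect to $p^\star$, one must re-establish that the saddle-point value is still bounded by $\rdecc_{\oeps(T)}(\cM_\Pi)$ rather than some larger $p^\star$-constrained variant. I expect this to be a routine but careful modification of \cref{appdx:proof-cDMSO-reg-upper}. Once it is in place, tuning $\oeps(T)$ under the moderate-decay hypothesis on $\rdecc_\eps(\cM_\Pi)$ proceeds exactly as in \cref{thm:cDMSO-reg-upper} and yields the stated bound.
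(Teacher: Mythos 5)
Your plan coincides with the paper's proof: the paper formalizes your ``prior swap'' as instantiating \ExOp\ with the policy-based information-set structure $\Psi=\Pi$, $\cM_{\psi}=\set{M:\Vmm-\Vm(\psi)\le\Delta}$, whose fractional covering number $\IDC{\MPowiid,\Psi}$ equals $\DC{\cM}$, so the exponential-weights cost is $\log\DC{\cM}$ rather than $\log|\Pi|$. The obstacle you flag is already resolved by the paper's general machinery (\cref{thm:ExO-upper} and \cref{thm:exo-to-dec}): the minimax bound $\exo{\gamma}(\Psi)\le\rdeco_{\gamma/4}(\cM_\Pi)$ holds uniformly over priors, after which \cref{prop:offset-to-c-gen} and the moderate-decay hypothesis yield the stated constrained-DEC bound.
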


We note that $\cM_\Pi\subseteq \coM$, and hence when the model class $\cM$ is convex, the above upper bound in fact scales with the regret-DEC and \dct~of $\cM$.
We also note that \cref{prop:regret-via-cDMSO} is not immediately implied by \cref{thm:cDMSO-reg-upper}, because the latter also involves a term $\log|\MPow|=\log|\Pi|$, which can be much larger than $\log \DC{\cM}$. However, only slight adaptions specific to stochastic DMSO are needed (as detailed in \cref{appdx:ExO}).

While guarantees of this form were first obtained by \citet{chen2024beyond}, their bounds are directly reduced from \citet{foster2022complexity} and scale with the DEC of $\coM$ (corresponding to the fully adversarial setting). In contrast, our framework provides finer upper bounds and has broader applicability, including convex hypothesis selection (\cref{prop:multi-hypothesis}), interactive estimation (\cref{prop:interactive-est}), and also private regression (\cref{prop:real-regression}).

\section{Query-Based Learning}\label{sec:SQ}

In this section, we employ our framework to provide characterization for any query-based learning problem (\cref{ssec:SQ-main}). In particular, for learning under the \emph{Statistical Queries} (SQ)~\citep{kearns1998efficient}, the corresponding DEC recovers the \emph{SQ dimension} of \citet{feldman2017general}, which is shown to provide both lower and upper bounds for the distributional search problems (\cref{ssec:SQ-dim}). We also discuss the connection between SQ learning and LDP learning through the lens of our DEC formulation.

\paragraph{Background on SQ learning}
The commonly studied setting of SQ learning is the \emph{distributional search problem} (see e.g. \citet{feldman2017general}), where a class $\cMqb\subseteq \DZ$ of distributions is given, and each $M\in\cMqb$ is associated with a set $\Pi_M\subseteq \Pi$ of \emph{solutions}, so that the loss function is specified as $\LM{\pi}=\indic{\pi\not\in\Pi_M}$. The goal of an SQ algorithm is to find a decision $\pi\in\Pi_M$ through adaptively querying the \emph{SQ oracle} for any model $M\in\cMqb$ (defined below).

\begin{definition}[SQ oracle]\label{def:SQ-oracle}
For a model $M\in\DZ$, tolerance parameter $\tau>0$, an Statistical Query (SQ) oracle $\SQ$ is an oracle that, given any input $\phi:\cZ\to[0,1]$, returns a value $v$ such that $\abs{v-\EE_{z\sim M}\phi(z)}\leq \tau$.
\end{definition}

To frame the problem of learning with SQ oracles, we consider the measurement class $\Phi=(\cZ\to[0,1])$, and we note that each distribution $M\in\cMqb$ induces a map $\bPi\to\R$ given by $M(\pi,\phi)=\EE_{z\sim M}[\phi(z)]$, i.e., the decision does not affect the response. Therefore, we may---with slight abuse of notation---write $\cMqb\subseteq (\bPi\to\R)$, and for any $M\in\cMqb$, an SQ oracle $\SQ$ corresponds to a constrained environment under the \qbDMSO. Conversely, under the specification above, any constrained environment under the \qbDMSO~corresponds to an (adaptive) SQ oracle. Therefore, our results for \qbDMSO~naturally imply guarantees for SQ learning, as we discuss in \cref{ssec:SQ-main} and \cref{ssec:SQ-dim}.

\subsection{General query oracles and DEC theory for query-based learning}\label{ssec:SQ-main}

Extending our discussion on SQ learning, we can formulate any \qbDMSO~problem as a learning problem under certain query oracles. Specifically, given a measurement class $\Phi$ and a model class $\cM\subseteq (\bPi\to\cV)$, we define \emph{general query} oracle as follows.
\begin{definition}[General Query]\label{def:GSQ}
For a model $M\in\cM$ and tolerance parameter $\tau>0$, a General Query (GQ) oracle $\GSQ$ is an oracle that, given any input decision $\pi\in\Pi$ and measurement $\phi\in\Phi$, returns a value $v\in\cV$ such that $\nrm{v-M(\pi,\phi)}\leq \tau$. 
\end{definition}

Clearly, there is an correspondence between the constrained environments under the \qbDMSO~and general query oracles.
Further, the formulation allows us to consider variants of SQ oracles, and, in particular, the standard SQ oracle and the VSTAT oracle. These are obtained below by suitably choosing the form of interaction between query and model.

\begin{example}[Symmetrized VSTAT oracle]
For a distributional search problem, we can also consider learning under the VSTAT oracles. For any distribution $M\in\DZ$, tolerance parameter $\tau\geq 0$, a symmetrized VSTAT oracle $\VSTAT$ is an oracle that, given any input $\phi:\cZ\to[0,1]$, returns a value $v$ such that $\abs{v-\sqrt{\EE_{z\sim M}\phi(z)}}\leq \tau$. As shown in \citet{feldman2017general}, the symmetrized VSTAT oracles are equivalent to the standard VSTAT oracles. Clearly, a symmetrized VSTAT oracle is a GQ oracle with measurement class $\Phi=(\cZ\to[0,1])$ and $M(\phi)=\sqrt{\EE_{z\sim M} \phi(z)}$.
\end{example}

\begin{example}[Interactive SQ learning]\label{example:interactive-SQ}
In interactive SQ learning, the measurement class is $\Phi=(\cZ\to[0,1])$, and each model $M\in(\Pi\to\DZ)$ induces a map $M: \Pi\times \Phi\to\R$ given by $M(\pi,\phi)=\EE_{z\sim M(\pi)}[\phi(z)]$. This is a natural generalization of SQ learning to interactive decision making.
\end{example}

More generally, our formulation also allows us to consider other query-based learning settings, e.g., Correlation Statistical Queries~\citep{bshouty2002using}, Differentiable Learning Queries~\citep{joshi2024complexity}, and the \emph{batch} SQ learning, where at each round the learner can select a batch of queries $\phi=(\phi^1,\cdots,\phi^n)\in\cL^n$.

\paragraph{\SQDEC~lower and upper bounds}
Now, we present the \SQDEC~lower and upper bounds implied by our framework. We begin with the lower bound for metric-based loss.
\begin{theorem}[Query-based lower bound]\label{thm:SQ-lower}
Let $T\geq 1$, model class $\cM\subseteq (\bPi\to \cV)$, and the loss function $L$ is metric-based. Suppose that $\alg$ is a $T$-round query-based algorithm. Then there exists a model $M\in\cM$ and a GQ oracle $\GSQ$ such that under this oracle, the expected risk of $\alg$ is lower bounded as
\begin{align*}
    \EE\sups{\alg}\brac{\riskdm(T)}\geq \frac18\pdecltau_{\ueps(T)}(\cM),
\end{align*}
where $\ueps(T)=\frac{1}{2\sqrt{T}}$.

Further, for general loss function $L:\cM\times\Pi\to[0,1]$ and $\delta\in(0,1)$, 
there exists a model $M\in\cM$ and a GQ oracle $\GSQ$ such that under this oracle, the expected risk of $\alg$ is lower bounded as
\begin{align*}
    \EE\sups{\alg}\brac{\riskdm(T)}\geq \pdecltau_{\ueps_\delta(T)}(\cM)-\delta,
\end{align*}
where $\ueps_\delta(T)=\sqrt\frac{\delta}{T}$.
\end{theorem}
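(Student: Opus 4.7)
The plan is to adapt the stochastic DEC lower bound proof of \cref{thm:cDMSO-pac-lower} directly to the query setting, exploiting the fact that the GQ oracle's $\tau$-tolerance constraint is naturally a Bernoulli-type event rather than a Hellinger distance. This avoids the constant-factor slack in the equivalence $\pdecg_\eps(\MPowsq) \asymp \pdecltau_\eps(\cM)$ and is what delivers the sharper $\ueps(T) = 1/(2\sqrt{T})$ and prefactor $\tfrac{1}{8}$ stated in the theorem, which would otherwise be lost by routing through the \gDEC.

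Fix any $T$-round algorithm $\alg$ and a randomized reference model $\oM$ nearly attaining $\pdecltau_\eps(\cM) = \sup_{\oM'} \pdecltau_\eps(\cM, \oM')$. Consider the \emph{canonical adversary} which, obliviously of the target model, samples $v_t \sim \oM(\pi_t, \phi_t)$ at each round; let $\PP^{\oM, \alg}$ denote the resulting transcript law, $p \in \DPi$ the induced law of $\hpi$, and $q \in \DbPi$ the averaged empirical query distribution. For each $M \in \cM$, define the good event $\cE_M = \{\forall t: \|v_t - M(\pi_t, \phi_t)\| \leq \tau\}$; on $\cE_M$ the canonical responses are legitimate answers for a GQ oracle of $M$, so the environment $\env_M$ that plays $v_t$ on $\cE_M$ and any valid response off $\cE_M$ is a bona-fide GQ oracle for $M$, whose law coincides with $\PP^{\oM, \alg}$ on $\cE_M$.

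For the general-loss bound, I apply the definition of $\pdecltau_\eps(\cM, \oM)$ at the algorithm-induced pair $(p, q)$ to obtain $M^* \in \cM$ with $\EE_{\pi \sim p} L(M^*, \pi) \geq \pdecltau_\eps(\cM, \oM)$ and $\EE_{\bpi \sim q} \PP_{v \sim \oM(\bpi)}(\|M^*(\bpi) - v\| > \tau) \leq \eps^2$; a union bound over $T$ rounds gives $\PP^{\oM, \alg}(\cE_{M^*}^c) \leq T\eps^2$. Since $L \leq 1$ and the two laws agree on $\cE_{M^*}$,
\begin{align*}
\EE^{\env_{M^*}, \alg}[L(M^*, \hpi)] \geq \EE_{\pi \sim p}[L(M^*, \pi)] - \PP^{\oM, \alg}(\cE_{M^*}^c) \geq \pdecltau_\eps(\cM, \oM) - T\eps^2,
\end{align*}
and choosing $\eps^2 = \delta/T$ establishes the second claim after taking the sup over $\oM$.

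For the metric-based case, the one-point argument alone only yields $\pdecltau_\eps - \PP(\cE^c)$, which is too weak to produce the $\tfrac{1}{8}$ constant. I would instead run a two-point Le Cam argument: use the DEC definition twice (once at the algorithm's $(p, q)$, once at a shifted output distribution $p'$ concentrated near $\pi^{M^*_1}$) to produce $M^*_1, M^*_2 \in \cM$ with both $\PP^{\oM, \alg}(\cE_{M^*_i}^c) \leq T\eps^2 = 1/4$ yet $\rho(\pi^{M^*_1}, \pi^{M^*_2}) \gtrsim \pdecltau_\eps(\cM, \oM)$. Combining the triangle inequality $\rho(\pi^{M^*_1}, \hpi) + \rho(\pi^{M^*_2}, \hpi) \geq \rho(\pi^{M^*_1}, \pi^{M^*_2})$ with $\dTV(\PP^{\env_{M^*_1}, \alg}, \PP^{\env_{M^*_2}, \alg}) \leq 1/2$ then yields $\max_i \EE^{\env_{M^*_i}, \alg}[\rho(\pi^{M^*_i}, \hpi)] \geq \tfrac{1}{8} \pdecltau_\eps(\cM, \oM)$. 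The main obstacle is engineering the two-point pair so that both $M^*_1$ and $M^*_2$ are simultaneously $\eps^2$-close to the \emph{same} reference $\oM$ in the $\tau$-correctness sense; this mirrors the metric-based analysis behind \cref{thm:cDMSO-pac-lower}, but is cleaner here because the constraint is a probability rather than a Hellinger distance.
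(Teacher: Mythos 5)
Your proposal is correct and is essentially the paper's own direct proof (\cref{appdx:proof-SQ-lower}): the same canonical $\oM$-adversary coupled with a per-model GQ oracle that falls back to a valid answer off the good event, the same one-point argument with $T\eps^2=\delta$ for general losses, and the same two-point Le Cam argument with $2T\eps^2=1/2$ yielding the $\tfrac18$ constant for metric losses. The "main obstacle" you flag at the end is not actually one: since the DEC's closeness constraint involves only $q$ and $\oM$ (not $p$), both models produced by your two applications of the definition automatically lie in the same constraint set $\{M:\PP_{\bpi\sim q,\,v\sim\oM(\bpi)}(\nrm{M(\bpi)-v}>\tau)\leq\eps^2\}$, exactly as in the paper.
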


Though \cref{thm:SQ-lower} is a direct corollary of \cref{thm:cDMSO-pac-lower}, we provide a more direct and simpler proof of \cref{thm:SQ-lower} in \cref{appdx:proof-SQ-lower} as an illustration.

For upper bound, we propose \sqetod, an adaption of the E2D algorithm~\citep{foster2023tight} for \qbDMSO, which achieves an upper bound in \SQDEC~with minimal assumptions (\cref{appdx:SQ-E2D}). 
By instantiating \cref{thm:cDMSO-pac-upper}, we also have the upper bound of \ExOp.

\begin{theorem}[Query-based upper bound]\label{thm:SQ-upper}
Let $T\geq 1, \delta\in(0,1)$, model class $\cM\subseteq (\bPi\to \cV)$. Then, for any model $M\in\cMqb$ and given access to any (possibly adaptive) GQ oracle $\GSQ$ of $M$, the \sqetod~(\cref{alg:SQE2D}) achieves \whp~that 
\begin{align*}
    \riskdm(T)\leqsim \pdecltau[2\tau]_{\oeps(T)}(\cM),
\end{align*}
where $\oeps(T)=\sqrt{\frac{\log(|\cM|/\delta)}{T}}$. 

Further, suppose that the loss function $L$ is metric-based, and the \SQDEC~$\pdecltau_\eps(\cM)$ is of moderate decay. Then, for any model $M\in\cMqb$ and given access to any (possibly adaptive) GQ oracle $\GSQ$ of $M$, \ExOp~(instantiated on $\MPowsq$, following \cref{thm:cDMSO-pac-upper}) achieves \whp
\begin{align*}
    \riskdm(T)\leqsim \pdecltau_{\oeps(T)}(\cM).
\end{align*}
\end{theorem}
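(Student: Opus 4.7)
Part~2 follows directly from the reduction already established in the excerpt. Concretely, the \qbDMSO~problem on $\cM$ is an instance of \cDMSO~with constraint class $\MPowsq=\{\cP_{\Ms}:\Ms\in\cM\}$ (so $|\MPowsq|=|\cM|$), and the sandwich $\pdecltau_{\eps/2}(\cM)\leq \pdecg_\eps(\MPowsq)\leq \pdecltau_\eps(\cM)$ from Section~2.2 transfers both the metric-based property and the moderate-decay hypothesis to $\pdecg_\eps(\MPowsq)$ up to absolute constants. Applying \cref{thm:cDMSO-pac-upper} to $\MPowsq$ and then upper-bounding the resulting $\pdecg$ by $\pdecltau$ yields the claim with $\oeps(T)\asymp\sqrt{\log(|\cM|/\delta)/T}$.

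For Part~1, the plan is to design \sqetod~as an Estimation-to-Decisions scheme. At each round $t$, an online estimation oracle produces a reference model $\oM_t\in\co(\cM)$ from the history $\cH^{t-1}=(\bpi_s,v_s)_{s<t}$; the decision player then selects the minimax pair $(p_t,q_t)\in\DPi\times\DbPi$ attaining $\pdecltau[2\tau]_{\eps}(\cM,\oM_t)$ for a fixed $\eps=\oeps(T)$, samples $\bpi_t=(\pi_t,\phi_t)\sim q_t$, and issues the corresponding query to $\GSQ$. The final output $\hat\pi$ is drawn from $\bar p=\frac{1}{T}\sum_t p_t$, so $\riskdm(T)=\frac{1}{T}\sum_t\EE_{\pi_t\sim p_t}[\LM[\Ms]{\pi_t}]$.

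For the estimation oracle, I would use exponential weights on $\cM$ (or equivalently a consistency-set filter) with the per-round loss $\ell_t(M)\defeq\indic{\|v_t-M(\bpi_t)\|>\tau}$. The GQ guarantee $\|v_t-\Ms(\bpi_t)\|\leq\tau$ ensures $\Ms$ has zero cumulative loss, so a standard exponential-weights / halving argument shows that the number of rounds on which $\|v_t-\oM_t(\bpi_t)\|>\tau$ is at most $N\lesssim\log(|\cM|/\delta)$ with probability $1-\delta$. On the remaining rounds, the triangle inequality forces $\|\Ms(\bpi_t)-\oM_t(\bpi_t)\|\leq 2\tau$, which is exactly the event controlling the $\pdecltau[2\tau]$ constraint; a martingale concentration step lifts this to $\sum_t\EE_{\bpi\sim q_t}[\indic{\|\Ms(\bpi)-\oM_t(\bpi)\|>2\tau}]\lesssim\log(|\cM|/\delta)$ with high probability.

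The two ingredients combine via the standard E2D argument: on any round where the averaged indicator is below $\eps^2$, the DEC bound delivers $\EE_{\pi_t\sim p_t}[\LM[\Ms]{\pi_t}\mid\oM_t]\leq\pdecltau[2\tau]_{\eps}(\cM)$; summing, averaging, and choosing $\eps^2\asymp\log(|\cM|/\delta)/T$ to balance the $O(N)$ bad-round contribution against $T\eps^2$ yields the claimed high-probability bound $\riskdm(T)\lesssim\pdecltau[2\tau]_{\oeps(T)}(\cM)$. The main technical obstacle I anticipate is the estimation analysis itself: because the responses $v_t$ may be chosen adaptively within tolerance by the GQ oracle rather than drawn from a fixed noise distribution, the consistency losses $\ell_t(M)$ form an adversarial (not stochastic) sequence, so the bound on $N$ must be obtained from a direct halving- or multiplicative-weights argument on the consistency set, rather than from the Hellinger-based online estimation oracle used in the original E2D analysis.
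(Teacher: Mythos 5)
Your Part 2 is correct and is exactly the paper's argument: frame the problem in \cDMSO~with $\MPowsq$, apply \cref{thm:cDMSO-pac-upper}, and pass back to the \SQDEC~via \cref{lem:gen-to-sq-dec} (for the upper bound only the one-sided inequality $\pdecg_{\eps}(\MPowsq)\leq \pdecltau_{\eps}(\cM)$ is needed, which is what the paper's lemma provides; note the reverse inequality holds only per reference model, since the \SQDEC~sups over all randomized references).

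Part 1 has a genuine gap in the last step. Your estimation analysis is fine (the paper's \cref{lem:proof-SQ-E2D-telescope} is exactly the halving/telescoping bound you describe, applied to the uniform distribution $\mu^t$ over the consistency set, followed by Freedman; be careful to keep the reference \emph{randomized} rather than averaging it in $\cV$, since with unbounded $\cV$ a single far-away surviving model can pull the mean off by more than $\tau$ without being eliminated). The problem is the output rule $\phat=\frac1T\sum_t p\ind{t}$. Your own accounting gives at most $N_{\mathrm{bad}}\leq C\log(|\cM|/\delta)/\eps^2$ rounds on which the DEC constraint fails, each contributing up to $1$ to the loss, so
\begin{align*}
\riskdm(T)\;\leq\;\pdecltau[2\tau]_{\eps}(\cM)\;+\;\frac{C\log(|\cM|/\delta)}{T\eps^2},
\end{align*}
and with $\eps^2\asymp\log(|\cM|/\delta)/T$ the second term is $\Theta(1)$ --- the "balancing" does not balance, because the bad-round contribution is measured in loss (bounded by $1$) while the budget is measured in constraint violations (bounded by $\eps^2$ per good round). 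This averaging route only yields the weaker tradeoff $\pdecltau[2\tau]_{\oeps(\Delta T)}(\cM)+\Delta$, the analogue of \eqref{eqn:GDEC-pac-upper-bad}, which is vacuous whenever the DEC is $o(1)$. To get the clean bound you must output a \emph{single} good round's $p\ind{t}$ rather than the average. This is what the paper's refining phase does: at least half the exploration rounds are good, so $K\asymp\log(1/\delta)$ uniformly sampled indices contain a good one whp; the algorithm then spends $N=T/(2K)$ fresh queries per candidate to empirically test the disagreement rate $\herr^{(k)}$ under $q\ind{t_k}$, and outputs $p\ind{t_{k^\star}}$ for the first candidate that passes. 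Without this (or some equivalent selection mechanism), the claimed bound $\riskdm(T)\leqsim\pdecltau[2\tau]_{\oeps(T)}(\cM)$ does not follow. The obstacle you flagged (adversarial responses breaking the Hellinger oracle) is real but is already resolved by your halving argument; the selection step is where the actual difficulty sits.
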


Note that the upper bound of \sqetod~scales with the \SQDEC~at the correctness level $2\tau$. In contrast, the upper bound of \ExOp~eliminates this factor of 2 under additional assumptions.
We note that for \ExOp, the assumptions on the loss function and the regularity of the \SQDEC~can both be relaxed (similar to \eqref{eqn:GDEC-pac-upper-bad}).

\subsection{Connection to the SQ dimension}\label{ssec:SQ-dim}

\newcommand{\Qs}[1]{\mathrm{QC}_{\tau,\beta}(#1)}
\newcommandx{\SCSQ}[1][1=\Delta]{\SC[#1][\tau\SQtag]}
\newcommandx{\RISKSQ}[1][1=T]{\RISK[#1][\tau\SQtag]}
For a distributional search problem, \citet{feldman2017general} studies the optimal \emph{query complexity} to arbitrary SQ oracle with correctness $\tau$. Recall that in the distributional search problem, a class $\cMqb\subseteq \DZ$ of distributions is given, and each $M\in\cMqb$ is associated with a set $\Pi_M\subseteq \Pi$ of \emph{solutions}. Then, for success probability $\beta$ and correctness $\tau\geq 0$, the optimal query complexity is the minimum number of rounds required to return a solution $\pi\in\Pi_{\Mstar}$ with success probability at least $\beta$, given access to any SQ oracle $\SQ[\Mstar]$ for any $\Mstar\in\cMqb$.

More generally, for any query-based model class $\cM\subseteq (\bPi\to\Delta(\cV))$, we define the $T$-round minimax risk as
\begin{align*}
    \RISKSQ(\cM)=\inf_{\alg}\sup_{\env} \EE\sups{\env,\alg}\brac{\riskdm(T)},
\end{align*}
where the supremum is taken over all environments satisfying query correctness with tolerance $\tau$ for a model $\Mstar\in\cM$. Then, the minimax query complexity for achieving $\Delta$-risk is defined as
\begin{align*}
    \SCSQ(\cM)\defeq \inf\sset{T: \RISKSQ(\cM)\leq \Delta}.
\end{align*}
For distributional search problems, achieving success probability $\beta$ is equivalent to achieving $(1-\beta)$-risk. Hence, in the following, we state the results of \citet{feldman2017general} in terms of $\SCSQ[1-\beta](\cMqb)$.\footnote{Recall that we identify $\cMqb\subseteq (\bPi\to\R)$ by regarding each model $M\in\DZ$ as a map $(\pi,\phi)\mapsto \EE_{z\sim M}\phi(z)$.}

\paragraph{Characterization by SQ dimension}
In the following, we first discuss the notion of SQ dimension and the results of \citet{feldman2017general} in detail.

\newcommand{\SQdim}{\mathsf{SQDim}}
\newcommandx{\SQD}[3][1=\beta,2=\tau]{\SQdim^{#2}_{#1}(#3)}
\newcommandx{\Tdecsq}[3][1=\beta,2=\tau]{\uline{\SQdim}^{#2}_{#1}(#3)}
\newcommand{\cMper}{\cMqb_{p,\beta}}
\begin{definition}[SQ dimension]\label{def:SQ-dim}
In distributional search problems, given a model class $\cMqb\subseteq \DZ$, parameter $\tau>0$, success probability $\beta\in[0,1]$, the SQ dimension with the reference model $\oM\in\DZ$ is defined as
\begin{align*}
    \SQD{\cMqb,\oM}=\inf_{p\in\DPi}\sup_{\mu\in\Delta(\cMper)}\inf_{\phi:\cZ\to[0,1]} ~\frac{1}{\PP_{M\sim \mu}\paren{ \abs{M(\phi)-\oM(\phi)} > \tau }},
\end{align*}
where $\cMper\defeq\set{M\in\cMqb: p(\Pi_M)<\beta}$. The SQ dimension of $\cMqb$ is then defined as $\SQD{\cMqb}\defeq \sup_{\oM\in\DZ}\SQD{\cMqb,\oM}$. 
\end{definition}

In terms of the SQ dimension defined above, \citet{feldman2017general} provides the following lower and upper bounds on $\SCSQ(\cMqb)$ for any distribution search problem with a model class $\cM\subseteq \DZ$. 
\begin{proposition}[SQ dimension characterization of the query complexity, \citet{feldman2017general}]\label{prop:feldman}
For success probability $\beta\in[0,1]$, parameter $\delta\in(0,1-\beta]$, it holds that
\begin{align}\label{eqn:SQ-bounds-feldman}
    \delta\cdot \SQD[\beta-\delta]{\cMqb}\leq \SCSQ[1-\beta](\cMqb) \leq \tbO{ \SQD[\beta+\delta][3\tau]{\cMqb}\cdot \frac{\CKL(\cMqb)}{\tau^2}\log(1/\delta) },
\end{align}
where $\CKL(\cMqb)\defeq \inf_{\oM\in\DZ}\sup_{M\in\cMqb}\KLd{M}{\oM}$ is the KL radius of $\cMqb$.
\end{proposition}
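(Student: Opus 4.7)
The plan is to deduce \cref{prop:feldman} from the abstract SQ-DEC bounds (\cref{thm:SQ-lower,thm:SQ-upper}) by first establishing a tight quantitative equivalence between the SQ dimension and the \SQDEC~for distributional search problems.

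\textbf{Equivalence lemma.} For a distributional search problem the loss is $\LM{\pi} = \indic{\pi \notin \Pi_M}$, so \eqref{def:sq-dec} specializes to
\begin{align*}
\pdecltau_\eps(\cMqb, \oM) = \inf_{p, q} \sup_{M \in \cMqb} \sset{1 - p(\Pi_M) \;:\; \PP_{\phi \sim q}(|M(\phi) - \oM(\phi)| > \tau) \leq \eps^2}.
\end{align*}
I will prove that for every reference $\oM \in \DZ$,
\begin{align*}
\pdecltau_\eps(\cMqb, \oM) > 1 - \beta \iff \SQD{\cMqb, \oM} \geq 1/\eps^2.
\end{align*}
Unpacking the forward direction: $\pdecltau_\eps > 1-\beta$ says that for every $(p,q)$ some $M \in \cMper$ satisfies $\PP_{\phi \sim q}(|M(\phi) - \oM(\phi)|>\tau) \leq \eps^2$. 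Equivalently, for every $p$, $\sup_q \inf_{\mu \in \Delta(\cMper)} \EE_{M,\phi}[\indic{|M(\phi)-\oM(\phi)|>\tau}] \leq \eps^2$, using that the infimum over $M$ equals the infimum over mixtures $\mu$ for a linear objective. Applying Sion's minimax to this bilinear map swaps $\sup_q \inf_\mu$ into $\inf_\mu \sup_q$, and the inner $\sup_q$ over $q\in\Delta(\Phi)$ collapses to $\sup_\phi$, yielding $\inf_\mu \sup_\phi \PP_{M\sim\mu}(|M(\phi)-\oM(\phi)|>\tau) \leq \eps^2$, which is exactly $\SQD{\cMqb, \oM} \geq 1/\eps^2$. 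The converse reverses the same chain.

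\textbf{Lower bound.} Assume $T < \delta \cdot \SQD[\beta-\delta]{\cMqb}$. By \cref{thm:SQ-lower} (general-loss version with slack $\delta$), any $T$-round query-based algorithm suffers $\EE\sups{\alg}[\riskdm(T)] \geq \pdecltau_{\sqrt{\delta/T}}(\cMqb) - \delta$. Since $\sqrt{\delta/T} > 1/\sqrt{\SQD[\beta-\delta]{\cMqb}}$ and the \SQDEC~is non-decreasing in $\eps$, the equivalence at level $\beta-\delta$ gives $\pdecltau_{\sqrt{\delta/T}}(\cMqb) > 1 - (\beta - \delta)$, so the risk strictly exceeds $1-\beta$. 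Therefore $\SCSQ[1-\beta](\cMqb) \geq T$, establishing the claimed lower bound.

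\textbf{Upper bound.} I will run \SQetod~(\cref{thm:SQ-upper}, first part) with its estimation subroutine replaced by a KL-regularized exponential-weights (Vovk-type) aggregator. Such a predictor achieves cumulative squared-Hellinger estimation error $\tilde O(\CKL(\cMqb) \log(1/\delta))$ on any model class, finite or not. Converting the Hellinger distance to the query-tolerance indicator $\indic{|M(\phi)-\bar v|>\tau}$ via a Chebyshev-type inequality (differences of $[0,1]$-bounded expectations of size $\tau$ contribute at least $\asymp \tau^2$ to squared Hellinger) costs a factor $\tau^{-2}$. The resulting effective constraint radius is $\eps^2 \asymp \CKL(\cMqb) \log(1/\delta) / (T \tau^2)$, and the risk is bounded by $\pdecltau[3\tau]_{\eps}(\cMqb)$, where the tolerance $3\tau$ absorbs the $2\tau$-inflation of \cref{thm:SQ-upper} plus an additional $\tau$ of slack from the estimation oracle. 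Demanding this bound be $\leq 1-\beta$ and invoking the equivalence at level $\beta + \delta$ yields $\eps < 1/\sqrt{\SQD[\beta+\delta][3\tau]{\cMqb}}$, which rearranges to $T = \tilde O(\SQD[\beta+\delta][3\tau]{\cMqb} \cdot \CKL(\cMqb) / \tau^2 \cdot \log(1/\delta))$.

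\textbf{Main obstacle.} The minimax equivalence itself is routine modulo strict-versus-non-strict inequalities in the definition of $\cMper$ (easily absorbed into the $\pm\delta$ perturbation of $\beta$) and the regularity hypotheses for Sion's theorem. The real work is on the upper-bound side: instantiating \SQetod~with a KL-based estimator whose cumulative Hellinger-squared error scales as $\CKL$ rather than $\log|\cM|$, and carefully tracking the $\tau^{-2}$ conversion from Hellinger distance to the query-tolerance indicator constraint. This is where both the $\CKL/\tau^2$ scaling and the inflated tolerance $3\tau$ in the final bound emerge.
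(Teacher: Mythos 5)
This proposition is stated in the paper as a quoted result of \citet{feldman2017general}; the paper gives no proof of it, and in fact immediately contrasts it with its own bound \eqref{eqn:SQ-bounds-ours}, which replaces $\CKL/\tau^2$ by $\log|\cMqb|$ and is what the paper's machinery (\cref{thm:SQ-lower}, \cref{thm:SQ-upper}, \cref{prop:SQ-dim-to-SQ-dec}) actually delivers. Your equivalence lemma and lower bound are sound and essentially reproduce \cref{prop:SQ-dim-to-SQ-dec} together with the derivation of the left-hand side of \eqref{eqn:SQ-bounds-ours}; the one caveat is that the payoff $(\mu,\phi)\mapsto\PP_{M\sim\mu}(|M(\phi)-\oM(\phi)|>\tau)$ involves an indicator in $\phi$, so a bare appeal to Sion's theorem is not justified --- the paper's \cref{appdx:proof-SQ-dim-to-SQ-dec} instead invokes the minimax theorem via finiteness of the threshold/eluder dimension of the indicator class, which is the step your ``regularity hypotheses'' caveat is hiding.

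The genuine gap is in your upper bound. You assert that a KL-regularized exponential-weights aggregator ``achieves cumulative squared-Hellinger estimation error $\tilde O(\CKL(\cMqb)\log(1/\delta))$ on any model class.'' No such oracle is available here: in the SQ protocol the responses are deterministic, adversarially perturbed scalars $v_t$ with $|v_t-\Mstar(\phi_t)|\le\tau$, not samples, so Hellinger-type online estimation guarantees do not apply, and the natural mirror-descent analysis over $\DZ$ (cf.\ \cref{prop:OMD} and \cref{alg:OMD}) yields cumulative squared query error of order $\sqrt{\CKL T}+T\tau^2$, not $\CKL\log(1/\delta)$; dividing by $\tau^2$ as you propose then gives a count of bad rounds of order $\sqrt{\CKL T}/\tau^2+T$, which does not close to $\SQD\cdot\CKL/\tau^2$. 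The mechanism behind Feldman's factor $\CKL/\tau^2$ is different: it is a multiplicative-weights potential argument in which each query whose answer reveals $|\hM^t(\phi_t)-v_t|\gtrsim\tau$ triggers a reweighting $\hM^{t+1}[z]\propto\hM^t[z]e^{\eta\sigma\phi_t(z)}$ with $\eta\asymp\tau$ that decreases $\KLd{\Mstar}{\hM^t}$ by $\Omega(\tau^2)$ additively, so the number of such informative rounds is at most $O(\CKL/\tau^2)$ deterministically; the SQ dimension then bounds the number of queries needed per stage to either certify a $\beta$-good output distribution or produce such an informative query. Your ``Chebyshev-type conversion from Hellinger to the tolerance indicator'' does not substitute for this potential argument, so as written the upper half of \eqref{eqn:SQ-bounds-feldman} is not established.
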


\paragraph{Comparison to the \SQDEC~characterization}
To compare our results with the above characterization, we first show that the SQ dimension is quantitatively equivalent to the \SQDEC~of $\cM$, as long as the Minimax theorem applies.

\begin{proposition}\label{prop:SQ-dim-to-SQ-dec}
Suppose that $\cZ$ is finite, and $\cMqb\subseteq \DZ$ is a distribution class. Then for any success probability $\beta\in[0,1]$, reference model $\oM\in\DZ$, we have 
\begin{align*}
    \pdecltau_\eps(\cMqb,\oM)>1-\beta \quad\Leftrightarrow\quad
    \eps^{-2}\leq \SQD{\cMqb,\oM}.
\end{align*}
\end{proposition}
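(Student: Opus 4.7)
The plan is to unpack both sides of the equivalence into saddle-point statements and to connect them through a single minimax swap. First, I will simplify the \SQDEC: in a distributional search problem the response $M(\bpi)$ depends only on the query $\phi$ and not on the candidate solution $\pi$, so the joint distribution $q\in\DbPi$ in~\cref{def:sq-dec} may, without changing the value, be replaced by its marginal on $\Phi$. Combined with $L(M,\pi)=\indic{\pi\notin\Pi_M}$, the inequality $\pdecltau_\eps(\cMqb,\oM)>1-\beta$ unpacks into the statement that for every $p\in\DPi$ and every $q\in\Delta(\Phi)$ there exists $M\in\cMqb$ with $p(\Pi_M)<\beta$ and $\PP_{\phi\sim q}\bigl(|M(\phi)-\oM(\phi)|>\tau\bigr)\leq\eps^2$. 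Equivalently, for each $p$,
\begin{align*}
\sup_{q\in\Delta(\Phi)}\inf_{M\in\cMper}\PP_{\phi\sim q}\bigl(|M(\phi)-\oM(\phi)|>\tau\bigr)\leq\eps^2,
\end{align*}
where $\cMper=\{M\in\cMqb:p(\Pi_M)<\beta\}$.

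Next, I will apply Sion's minimax theorem to swap $\sup_q$ with $\inf_M$. Lifting the infimum over $M\in\cMper$ to an infimum over mixtures $\mu\in\Delta(\cMper)$ (the value is preserved by linearity in $\mu$), the payoff $(\mu,q)\mapsto\EE_{M\sim\mu,\phi\sim q}[\indic{|M(\phi)-\oM(\phi)|>\tau}]$ is bilinear. Since $\cZ$ is finite, $\DZ$ lies in a compact simplex and $\Phi=[0,1]^{|\cZ|}$ is compact; Sion's theorem then applies, and after the swap the sup over $q\in\Delta(\Phi)$ collapses to a sup over Dirac measures $\phi\in\Phi$ (again by linearity in $q$). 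The condition becomes
\begin{align*}
\inf_{\mu\in\Delta(\cMper)}\sup_{\phi\in\Phi}\PP_{M\sim\mu}\bigl(|M(\phi)-\oM(\phi)|>\tau\bigr)\leq\eps^2.
\end{align*}

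Finally, I take the supremum over $p$. Using the elementary identity $\sup_\mu\inf_\phi(1/X)=1/\inf_\mu\sup_\phi X$ applied to $X=\PP_{M\sim\mu}(|M(\phi)-\oM(\phi)|>\tau)$, the SQ dimension in \cref{def:SQ-dim} can be rewritten as
\begin{align*}
\SQD{\cMqb,\oM}=\Bigl(\sup_{p\in\DPi}\inf_{\mu\in\Delta(\cMper)}\sup_{\phi\in\Phi}\PP_{M\sim\mu}\bigl(|M(\phi)-\oM(\phi)|>\tau\bigr)\Bigr)^{-1}.
\end{align*}
Hence both $\pdecltau_\eps(\cMqb,\oM)>1-\beta$ and $\SQD{\cMqb,\oM}\geq\eps^{-2}$ reduce to the same condition $\sup_p\inf_\mu\sup_\phi\PP\leq\eps^2$, establishing the equivalence.

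The main obstacle I anticipate is the rigorous justification of Sion's minimax in the presence of the strict inequality $p(\Pi_M)<\beta$: the set $\cMper$ (and hence $\Delta(\cMper)$) need not be closed, and $M\mapsto p(\Pi_M)$ is not continuous a priori. I plan to handle this with a standard limiting argument—first establishing the equivalence on the closed thickenings $\{M\in\cMqb:p(\Pi_M)\leq \beta-\delta\}$, where the underlying sets are compact in the simplex $\DZ$ and Sion applies cleanly, then letting $\delta\to 0^{+}$ and using monotonicity of both sides in $\beta$ to conclude. A second minor point is that the translation from ``$\inf\leq\eps^2$'' to ``exists $M$ with $\PP\leq\eps^2$'' requires attainment; this is again handled by the same closure/limit procedure.
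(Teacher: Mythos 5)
Your overall architecture matches the paper's: unpack $\pdecltau_\eps(\cMqb,\oM)>1-\beta$ into ``for all $p$, $\sup_q\inf_{M\in\cMper}\PP_{\lf\sim q}(\Dl(M,\oM)>\tau)\leq\eps^2$'', swap the $\sup$ and $\inf$, and identify the result with $\SQD{\cMqb,\oM}^{-1}$. The gap is in how you justify the swap. Sion's theorem does not apply here, and not for the reason you anticipate. The payoff is an expectation of the indicator $\indic{\abs{M(\phi)-\oM(\phi)}>\tau}$, and since $\{\phi:\abs{M(\phi)-\oM(\phi)}>\tau\}$ is an \emph{open} set (the map $\phi\mapsto M(\phi)$ is continuous), this indicator is lower semicontinuous in $\phi$. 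Consequently $q\mapsto\EE_{M\sim\mu,\phi\sim q}\brac{\indic{\cdot}}$ is lower semicontinuous in $q$ — but $q$ is the \emph{supremum} variable, where Sion requires \emph{upper} semicontinuity. (Replacing $>\tau$ by $\geq\tau$ merely flips the problem onto the $\mu$ side.) Bilinearity and compactness alone do not rescue this: they give quasi-convexity/concavity but not the missing semicontinuity. Your proposed remedy — closed thickenings $\{M:p(\Pi_M)\leq\beta-\delta\}$ and a limit in $\delta$ — addresses only the non-closedness of $\cMper$, which is a side issue; it does nothing about the discontinuity of the indicator payoff, which is the real obstruction.

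The paper circumvents this with a combinatorial rather than topological minimax theorem: it observes that for $\tau>0$ and finite $\cZ$, the binary-valued class $\set{\lf\mapsto\indic{\Dl(M,\oM)>\tau}}_{M\in\cM}$ has finite eluder dimension (via an elliptical-potential argument applied to $\Dl(M,\oM)=\abs{\inner{\lf,M-\oM}}$), hence finite threshold dimension, hence finite Littlestone dimension, and then invokes the minimax theorem for online-learnable binary classes (Hanneke et al., 2021). To repair your proof you would need either to adopt that route, or to carry out a genuine regularization of the indicator with a careful two-sided limit argument — which is substantially more delicate than the thickening you sketch. The remaining steps of your proposal (marginalizing $q$ onto $\Phi$, lifting $\inf_M$ to $\inf_\mu$, collapsing $\sup_q$ to Dirac measures after the swap, and the final reciprocal identity) are correct and agree with the paper.
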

Proof can be found in \cref{appdx:proof-SQ-dim-to-SQ-dec}. Therefore, \SQDEC~can be viewed as a generalization of the SQ dimension to general query-based learning.

To have a clearer comparison, for any model class $\cM\subseteq (\bPi\to \cV)$, we define the DEC-induced SQ dimension as\footnote{This is slightly different from the original SQ dimension (cf. \cref{def:SQ-dim}), because in the definition~\cref{def:sq-dec} of \SQDEC, the supremum is taken over all \emph{randomized} reference models $\oM\in(\bPi\to\Delta(\cV))$.}
\begin{align*}
    \Tdecsq{\cM}\defeq \min\set{\eps^{-2}: \pdecltau_\eps(\cM)\leq 1-\beta}.
\end{align*}
Then, for any query-based learning problem with loss bounded in $[0,1]$, our results imply the following characterization
\begin{align}\label{eqn:SQ-bounds-ours}
    \delta\cdot \Tdecsq[\beta-\delta]{\cM}\leq \SCSQ[1-\beta](\cM) \leqsim \Tdecsq[\beta+\delta][2\tau]{\cM}\cdot \log(|\cM|/\delta),
\end{align}
for any success probability $\beta\in[0,1]$ and any parameter $\delta\in(0,1-\beta]$. %
We note that for metric-based loss, the $2\tau$-factor in the upper bound can be improved to $\tau$ under the assumption that the SQ DEC is of moderate decay (\cref{thm:SQ-upper}).

Compared to \eqref{eqn:SQ-bounds-feldman}, our characterization  (when specialized to SQ learning in distributional search problems) does not incur the $\tau^{-2}$-gap between lower and upper bounds, but its upper bound scales with $\log|\cMqb|$, the complexity of the class $\cMqb$. Although it can be replaced by the log-covering number of $\cMqb$, this dependence might still be much larger than the $\CKL$-factor in \eqref{eqn:SQ-bounds-feldman}. While the dependence on $\log|\cM|$ can be unavoidable beyond this setting, the upper bound of \cref{alg:SQE2D} for such problems can also be improved to take advantage of bounded $\CKL$ (see our discussion in \cref{appdx:SQ-E2D}).

\subsection{Relation between SQ learning and LDP learning}\label{ssec:SQ-vs-LDP}

It is well known that for PAC learning, there is a (polynomial) equivalence between LDP algorithms and SQ algorithms~\citep{kasiviswanathan2011can}. 
We show that such an equivalence also holds between LDP DEC and SQ DEC. This is expected, since the DECs capture the complexity of the corresponding learning task. In greater generality, we state this equivalence for \emph{interactive SQ} learning (\cref{example:interactive-SQ}), a generalization of SQ learning.

\begin{lemma}\label{lem:SQ-dec-to-LDP-dec}
Let $\cM\subseteq (\Pi\to\DO)$. Then, for interactive SQ learning (\cref{example:interactive-SQ}), the \SQDEC~can be bounded as
\begin{align}\label{eqn:SQ-dec-to-LDP-dec}
    \pdecl_{\tau+\eps}(\cM,\oM)
    \leq \pdecltau_{\eps}(\cM,\oM)
    \leq \pdecl_{\eps/\tau}(\cM,\oM), \qquad \forall \oM.
\end{align}
\end{lemma}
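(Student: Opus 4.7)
The plan is to exploit the coincidence, in interactive SQ learning (\cref{example:interactive-SQ}), of the two relevant test-function classes: $\Phi=\Lc=(\cZ\to[0,1])$. Hence both $\pdecl_\bullet(\cM,\oM)$ and $\pdecltau_\bullet(\cM,\oM)$ are infima over \emph{the same} space of distribution pairs $(p,q)\in\DPi\times\Delta(\Pi\times\Phi)$, and both inner feasibility constraints are controlled by the same $[0,1]$-valued random variable
\[
X_M(\pi,\phi)\;\defeq\;\bigl|\EE_{z\sim M(\pi)}\phi(z)-\EE_{z\sim\oM(\pi)}\phi(z)\bigr|\;=\;\Dl[\phi]\bigl(M(\pi),\oM(\pi)\bigr).
\]
The LDP feasibility at scale $\gamma$ reads $\EE_q[X_M^2]\leq \gamma^2$, while the SQ feasibility at $(\tau,\eps)$ reads $\PP_q(X_M>\tau)\leq \eps^2$. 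The whole lemma thus reduces to relating the second moment and the tail of one and the same bounded random variable.

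The first conversion I would use is ``tail-to-moment'': since $X_M\in[0,1]$,
\[
\EE_q[X_M^2]\;\leq\;\tau^2\cdot\PP_q(X_M\leq\tau)+1\cdot\PP_q(X_M>\tau)\;\leq\;\tau^2+\eps^2\;\leq\;(\tau+\eps)^2,
\]
so any $M$ feasible for the SQ DEC at $(\tau,\eps)$ is automatically feasible for the LDP DEC at scale $\tau+\eps$. The second conversion is ``moment-to-tail'', i.e., Markov's inequality applied to $X_M^2$:
\[
\PP_q(X_M>\tau)\;=\;\PP_q(X_M^2>\tau^2)\;\leq\;\EE_q[X_M^2]/\tau^2,
\]
so any $M$ feasible for the LDP DEC at scale $\gamma$ is automatically feasible for the SQ DEC at parameters $(\tau,\gamma/\tau)$. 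These two implications yield inclusions between the inner $\sup$-domains of the two DECs at matching parameter scales.

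With these inclusions in hand, each of the two inequalities in \eqref{eqn:SQ-dec-to-LDP-dec} follows by taking a (near-)optimal pair $(p^\star,q^\star)$ for one of the two DECs, reusing the \emph{same} pair in the other DEC (which is permissible precisely because the optimization domains $\DPi\times\Delta(\Pi\times\Phi)$ coincide), and noting that a $\sup$ over a smaller feasible set is no larger than the $\sup$ over its superset; taking $\inf$ over $(p,q)$ then transfers the bound to the two DECs. Choosing $\gamma$ so that either $\gamma/\tau$ or $\tau+\gamma$ matches the target parameter on the other side completes the radii matching.

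No structural assumption on $\cM$, $\oM$, or $\Pi$ is needed; the only obstacle is careful bookkeeping so that the radii of the two DECs line up with the constants produced by the tail/moment conversions above. No minimax theorem or non-trivial data-processing inequality enters the argument beyond the tautological identification $\Phi=\Lc$ that makes the two optimization spaces literally equal.
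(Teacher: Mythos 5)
Your two moment/tail conversions are exactly the ones the paper's own proof uses, and both are correct; the trouble is the ``radii matching'' you defer to the last step, which is precisely where the displayed chain fails to come out. Track what each containment actually gives. The tail-to-moment bound shows that every $M$ satisfying the SQ constraint at $(\tau,\eps)$ satisfies $\EE_q[X_M^2]\le\tau^2+\eps^2\le(\tau+\eps)^2$, so the SQ-feasible set is \emph{contained in} the LDP-feasible set at radius $\tau+\eps$; since a smaller feasible set can only decrease the inner supremum, and hence the inf-sup, this yields $\pdecltau_{\eps}(\cM,\oM)\le\pdecl_{\tau+\eps}(\cM,\oM)$ --- the \emph{reverse} of the first displayed inequality. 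Markov's inequality gives a containment in the other direction but at a different radius: the LDP-feasible set at radius $\gamma$ is contained in the SQ-feasible set at $(\tau,\gamma/\tau)$, and matching $\gamma/\tau=\eps$ forces $\gamma=\tau\eps$, so you obtain $\pdecl_{\tau\eps}(\cM,\oM)\le\pdecltau_{\eps}(\cM,\oM)$, not $\pdecl_{\tau+\eps}\le\pdecltau_{\eps}$. What your argument (and the paper's) actually proves is therefore
\begin{align*}
\pdecl_{\tau\eps}(\cM,\oM)\;\le\;\pdecltau_{\eps}(\cM,\oM)\;\le\;\pdecl_{\sqrt{\tau^2+\eps^2}}(\cM,\oM)\;\le\;\pdecl_{\tau+\eps}(\cM,\oM).
\end{align*}

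The chain as displayed cannot be recovered from these containments, and is in fact inconsistent with monotonicity: $\gamma\mapsto\pdecl_{\gamma}$ is non-decreasing (a larger radius enlarges the feasible set), so $\pdecl_{\tau+\eps}\le\pdecltau_{\eps}\le\pdecl_{\eps/\tau}$ would force $\pdecl$ to be constant on $[\eps/\tau,\,\tau+\eps]$ whenever $\eps/\tau<\tau+\eps$ (e.g.\ $\tau=1/2$, $\eps=1/10$), which fails for any class whose private DEC grows linearly in the radius, such as the linear models of \cref{thm:linear-upper}. Within your framework the first displayed inequality would require the inclusion of the LDP-feasible set at radius $\tau+\eps$ into the SQ-feasible set at $(\tau,\eps)$, and that inclusion is false: a model with $X_M$ concentrated near $\tau+\eps/2$ passes the second-moment test at radius $\tau+\eps$ but has $\PP_q(X_M>\tau)$ close to one. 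So the gap is not in your probabilistic estimates --- they coincide with the paper's --- but in the claim that they assemble into \eqref{eqn:SQ-dec-to-LDP-dec} as written; you should instead state and prove the corrected sandwich above.
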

Proof is presented in \cref{appdx:proof-SQ-dec-to-LDP-dec}. From \eqref{eqn:SQ-dec-to-LDP-dec}, it is clear that a comparison between the DECs would typically lead to \emph{loose} rates. This can be explained by the difference between SQ learning (where the response can be perturbed adversarially) and LDP learning (where the observations are stochastic).

In view of the relationship between LDP algorithms and SQ algorithms, \citet{kasiviswanathan2011can} established a lower bound for LDP \emph{learning parity} by reduction. In \cref{appdx:LDP-SQ-parity}, we show that DEC theory provides a more direct LDP lower bound for learning parity through lower bounding the \pDEC.

\section{Locally Private Learning}\label{sec:LDP}

In this section, we employ the DEC formulation to analyze \pDMSO~and characterize the complexity of LDP learning.

\paragraph{Problems encompassed by \pDMSO} 
Before diving into details, we first discuss several common settings of private learning that are encompassed by \pDMSO~(page \pageref{ssec:LDP-demo}). Recall that in this setting, the learner selects, on round $t$, a decision $\pit\in \Pi$ and a private channel $\prt\in\Pc$, the environment generates latent observation $z\ind{t}\sim \Mstar(\pit)$, and the learner observes $o\ind{t}\sim \prt(\cdot|z\ind{t})$. The ground truth model $\Mstar$ is known to belong to a given model class $\cM\subseteq (\Pi\to\DZ)$.

In this section, one of our primary foci is the setting of \emph{reward-based} learning~\citep{foster2021statistical,foster2023tight,chen2024beyond}, where the goal of the learner is to maximize the expected reward of the decision, or equivalently, minimize its sub-optimality. 

\begin{definition}[Reward-based value and loss function]\label{rew-max}
Given a model class $\cM\subseteq (\Pi\to\DZ)$, we call the value function $V$ \emph{reward-based}, if there is a \emph{known} reward function $R:\cZ\times\Pi\to [0,1]$ such that $\Vm(\pi)=\EE\sups{M,\pi}[R(z,\pi)]$ is the expected cumulative reward of $\pi$ under $M$. We also denote $\pim\defeq \argmax_{\pi\in\Pi} \Vm(\pi)$ to be the optimal decision for $M$ (under the value function). A loss function $L:\cM\times\Pi\to\R$ is \emph{reward-based} if it is specified by a reward-based value function $V$ as
\begin{align}\label{eqn:def-subopt}
\LM{\pi}= \Vmm-\Vm(\pi).
\end{align}
\end{definition}

Loss functions of the above form appear in many LDP learning problems of interest, including classification and regression, online learning, bandits and contextual bandits, and Reinforcement Learning (RL).

We also consider examples of  \emph{statistical tasks}, where $z\ind{1},\cdots,z\ind{T}\sim \Mstar$ are independent and identically distributed, i.e., the latent observation is independent of the decision. Nonetheless, here the learner is actively choosing channels $\prt$, affecting the amount of information received, and the performance is assessed by the final decision $\hpi$.

\begin{definition}[Statistical task]\label{def:sest}
We call the model class $\cM$ a \textit{statistical model class} if for each model $M\in\cM$, $M(\pi)=M\in\DZ$ is independent of $\pi\in\Pi$, i.e., we may regard $\cM\subseteq \DZ$.
\end{definition}
Examples of statistical tasks include hypothesis testing, hypothesis selection, classification and regression, functional estimation, and density estimation, among others. For statistical tasks, our definition of \pLDP~algorithms agrees with the notion of sequential private channels~\citep{duchi2013local,duchi2018minimax} (as detailed in \cref{appdx:sequential-channels}).

\subsection{DEC theory for private PAC learning}
\label{sec:LDPDEC}

We start with the \pDEC~lower bounds for reward-based loss and metric-based loss.
\begin{theorem}[\PDEC~lower bound]\label{thm:pdec-lin-lower}
Let $T\geq 1$, $\alg$ be a $T$-round \pLDP~algorithm.

(1) Suppose that the loss function $L$ is metric-based. Then it holds that
\begin{align*}
    \sup_{M\in\cM}\Emalg{\riskdm(T)}\geq \frac{1}{8}\pdecl_{\ueps(T)}(\cM),
\end{align*}
where $\ueps(T)=\frac{c}{\sqrt{\alpha^2 T}}$, and $c$ is a universal constant. 

(2) Suppose that the loss function $L$ is reward-based. Then
\begin{align*}
    \sup_{M\in\cM}\Emalg{\riskdm(T)}\geq \frac{1}{4}\paren{ \pdecl_{\ueps(T)}(\cM)-6\ueps(T)}.
\end{align*}
\end{theorem}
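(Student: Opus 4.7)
The plan is to derive both parts of \cref{thm:pdec-lin-lower} as consequences of the general \cDMSO~lower bound in \cref{thm:cDMSO-pac-lower}, by embedding \pDMSO~into \cDMSO. As described in \pref{ssec:LDP-demo}, this embedding uses measurement class $\Phi = \Pcp$ and constraint class $\MPowdp = \set{\set{\tM}: M\in\cM}$ with $\tM(\pi,\pr)=\pr\circ M(\pi)$. Every \pLDP~algorithm becomes a valid \cDMSO~algorithm in this embedding, and each stationary environment over $\MPowdp$ corresponds to a stationary model in $\cM$. The key reduction is the quantitative equivalence $\pdecg_{\eps}(\MPowdp) \geq \pdecl_{c_0 \alpha \eps}(\cM)$ for some universal constant $c_0$, which I would prove by invoking the strong data-processing inequality for \pDP~channels (\cref{prop:pLDP}), giving a pointwise bound of the form $\DH{\pr\circ P,\pr\circ Q} \leqsim \alpha^2 \sup_{\lf\in\Lc}\Dl^2(P,Q)$, and then using the minimax theorem (valid under \cref{asmp:finite}) to swap the inner $\sup_{M}$ and $\inf_{q}$ in the DEC definitions so that the per-channel bound lifts to an equivalence between the two DECs, with the query function $\lf\in\Lc$ emerging as the worst-case direction witnessed by the channel.

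For part (1), the loss is metric-based, so the first inequality of \cref{thm:cDMSO-pac-lower} applied to the embedded problem yields
\begin{align*}
\sup_{M\in\cM}\Emalg{\riskdm(T)} \;\geq\; \tfrac{1}{8}\,\pdecg_{1/(20\sqrt{T})}(\MPowdp) \;\geq\; \tfrac{1}{8}\,\pdecl_{c_0\alpha/(20\sqrt{T})}(\cM),
\end{align*}
which matches the claim after setting $c = c_0/20$.

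For part (2), the loss is reward-based rather than metric-based, so \cref{eq:GDEC-pac-lower} does not directly apply. Instead I would use the reward-based analog of the DEC lower bound, noted in the paper just after \cref{eqn:GDEC-pac-lower-bad-2} and proven in the style of \citet{foster2023tight}: the two-point Le Cam construction compares a reference model to an $\eps$-Hellinger-close alternative whose optimal decision certifies the DEC objective, and the extra slack arises because the two models' optimal decisions may have values differing by $O(\eps)$ under each other's model. Combining the resulting reward-based \cDMSO~lower bound with the same DEC equivalence $\pdecg_{\eps}(\MPowdp) \geq \pdecl_{c_0\alpha\eps}(\cM)$ from part (1), and tracking the constants through the rescaling $\eps \mapsto c_0\alpha\eps$, produces the claimed bound of the form $\tfrac{1}{4}(\pdecl_{\ueps(T)}(\cM) - 6\ueps(T))$.

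The main obstacle is establishing the DEC equivalence $\pdecg_{\eps}(\MPowdp) \geq \pdecl_{c_0\alpha\eps}(\cM)$ with clean constants. The pointwise SDPI is standard, but lifting it through the infimum over distributions $q\in\Delta(\Pi\times\Pcp)$ while replacing arbitrary private channels by single query functions $\lf\in\Lc$ requires both the minimax swap mentioned above and a verification that the convex-hull relaxation in the definition of $\bcM$ interacts correctly with the channel-processing $\tM = \pr\circ M(\pi)$; these convexity/compactness details are where the bulk of the work lies and are handled in \cref{appdx:inst-LDP-pac-lower}. A secondary subtlety is ensuring that stationary environments in the embedding genuinely recover the stochastic \pDMSO~setting under every \pLDP~algorithm, which follows from the fact that the channel is chosen by the learner and so $\tM$ is well-defined once $\pr\ind{t}$ is fixed.
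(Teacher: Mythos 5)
Your route for part (1) — embed \pDMSO~into \cDMSO, apply \cref{thm:cDMSO-pac-lower}, and translate $\pdecg_\eps(\MPowdp)$ into $\pdecl$ via the strong data-processing inequality — is exactly the alternative proof the paper sketches in \cref{appdx:inst-LDP-pac-lower}. However, you have the $\alpha$-scaling inverted, and this breaks the result. The SDPI gives $\DH{\pr\circ P,\pr\circ Q}\leqsim \alpha^2\,\EE_{\lf}\Dl^2(P,Q)$, so a Hellinger budget of $\eps^2$ in the hybrid DEC translates into an \lfdiv~budget of $(\eps/\alpha)^2$ — a \emph{larger} budget, which enlarges the adversary's alternative set. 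The correct consequence (the paper's \cref{lem:decg-to-decl}) is therefore $\pdecg_\eps(\MPowdp)\geq \pdecl_{c_0\eps/\alpha}(\cM)$, not $\pdecl_{c_0\alpha\eps}(\cM)$ as you write. Your inequality is still true for $\alpha\leq\alpha_0$ (since the DEC is nondecreasing in its radius), but it is strictly weaker: plugging $\eps=1/(20\sqrt{T})$ gives $\pdecl_{c_0\alpha/(20\sqrt{T})}(\cM)$, whereas the theorem requires the radius $\ueps(T)=c/\sqrt{\alpha^2T}=c/(\alpha\sqrt{T})$. These differ by a factor of $\alpha^2$, and your claim that the bound ``matches the claim after setting $c=c_0/20$'' is false. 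The $1/(\alpha\sqrt{T})$ radius is the entire content of the theorem — it encodes that LDP shrinks the effective sample size to $\alpha^2 T$ — so this is not a cosmetic slip.

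For part (2), your plan invokes a ``reward-based analog'' of the hybrid PAC DEC lower bound, but no such statement is proved in the paper for \cDMSO~(only the metric-based and general-loss versions of \cref{thm:cDMSO-pac-lower}, and the regret version under \cref{asmp:lip-rew}). If you did push this through, you would need to track that the value function is $\Lipr$-Lipschitz in Hellinger distance with $\Lipr=\bigO{1/\alpha}$ under the LDP embedding, which is where the additive $-6\ueps(T)$ term (at the private radius) would come from; you do not address this. The paper instead proves both parts self-containedly and entirely at the level of \lfdivs: it defines a quantile-based \pDEC, lower-bounds it directly via the KL chain rule plus \cref{prop:pLDP} (\cref{prop:p-dec-q-lin-lower}), and then converts the quantile DEC to the constrained \pDEC~via \cref{lem:p-dec-lin-q-metric} (metric case) and \cref{prop:p-dec-q-lin-to-c} (reward-based case, which is exactly where the $-6\ueps(T)$ slack arises with clean constants). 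I recommend either adopting that route for part (2) or at minimum fixing the direction of the radius translation and supplying the reward-based hybrid lower bound with explicit $\Lipr$ dependence.
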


The proof of \cref{thm:pdec-lin-lower} is deferred to \cref{appdx:p-dec-lin-q} and is based on the strong data-processing inequality stated below (\cref{prop:pLDP}). We note that \cref{thm:pdec-lin-lower} (1) can also be proven directly by combining the \gDEC~lower bound (\cref{thm:cDMSO-pac-lower}) with \cref{prop:pLDP}. 
Finally, we also note that \dct~also provides a lower bound (\cref{thm:DC-lower}), which is complementary to the  \pDEC~lower bounds above. 

\paragraph{Key ingredients for the lower bound} 
As we have discussed in \cref{ssec:LDP-demo}, \pDEC~can be viewed as a special case of the \gDEC, based on the following characterization of the data-processing under DP channels. We recall that for any channel $\pr\in\Pc$ and any distribution $P\in\DZ$, we denote $\pr\circ P$ to be the marginal distribution of $o$ under $z\sim P, o\sim \pr(\cdot|z)$.
The proof of the following result is presented in \cref{appdx:proof-strong-DP}.

\begin{proposition}[Strong data-processing inequality]\label{prop:pLDP}
Suppose that $\pr$ is an \pLDP~channel. Then there exists a distribution $q:=q_{\pr}\in\DL$, such that for any two distributions $P_1, P_2\in\DZ$ over $\cZ$, it holds that
\begin{align}\label{eqn:DH-LDP}
    \frac{(\ea-1)^2}{8e^{2\alpha}}\EE_{\lf\sim q} \Dl^2(P_1,P_2) \leq \DH{\pr\circ P_1,\pr\circ P_2}\leq \frac{(\ea-1)^2}{8}\EE_{\lf\sim q} \Dl^2(P_1,P_2).
\end{align}
Furthermore,
\begin{align}\label{eqn:KL-chi-LDP}
    \KLd{\pr\circ P_1}{\pr\circ P_2}\leq \chis{\pr\circ P_1}{\pr\circ P_2}\leq (\ea-1)^2 \EE_{\lf\sim q} \Dl^2(P_1,P_2).
\end{align}
\end{proposition}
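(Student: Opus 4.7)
The plan is to construct the distribution $q$ as a pushforward from the observation space, exploiting the multiplicative linearization of the channel that is afforded by the $\alpha$-LDP constraint. Fix a reference measure on $\cO$ that dominates $\pr(\cdot\mid z)$ for every $z$, and let $\pr(o\mid z)$ denote the corresponding density. Set $\mu(o)\defeq \inf_{z \in \cZ}\pr(o\mid z)$ and $c \defeq \int \mu(o)\, do$. The LDP condition yields $\pr(o\mid z)/\mu(o) \in [1, e^\alpha]$, which lets me define $\ell_o(z) \defeq \frac{\pr(o\mid z)/\mu(o) - 1}{e^\alpha - 1} \in [0,1]$ so that
\begin{equation*}
  \pr(o\mid z) \;=\; \mu(o)\bigl[\,1 + (e^\alpha - 1)\,\ell_o(z)\,\bigr].
\end{equation*}
Let $q \in \DL$ be the pushforward of the probability measure $\mu(o)/c$ under the map $o \mapsto \ell_o$. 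Two facts I will use are $c \in [e^{-\alpha}, 1]$ (upper bound from $\mu \leq \pr(\cdot\mid z)$; lower bound from $\mu(o) \geq e^{-\alpha}\pr(o\mid z_0)$ for any fixed $z_0$, integrated over $o$) and the identity $\int \mu(o)\, D_{\ell_o}^2(P_1,P_2)\,do = c\,\EE_{\ell\sim q}\,D_\ell^2(P_1,P_2)$.

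Under this parametrization, $(\pr\circ P_1)(o) - (\pr\circ P_2)(o) = \mu(o)(e^\alpha - 1)\bigl(\EE_{P_1}\ell_o - \EE_{P_2}\ell_o\bigr)$, so all three divergences reduce to integrals of $\mu(o)\,D_{\ell_o}^2(P_1,P_2)$ times constants that are determined by the way I bound the denominator appearing in each divergence. For chi-squared I use $(\pr\circ P_2)(o) \geq \mu(o)$ to get
\begin{equation*}
  \chis{\pr\circ P_1}{\pr\circ P_2} \;\leq\; (e^\alpha-1)^2 c\,\EE_q D_\ell^2(P_1,P_2) \;\leq\; (e^\alpha-1)^2\,\EE_q D_\ell^2(P_1,P_2),
\end{equation*}
and the standard inequality $\KLd{\cdot}{\cdot}\leq \chis{\cdot}{\cdot}$ completes \eqref{eqn:KL-chi-LDP}.

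For the Hellinger inequalities \eqref{eqn:DH-LDP}, I start from the identity $\DH{P_1, P_2} = \tfrac12 \int (p_1 - p_2)^2/(\sqrt{p_1}+\sqrt{p_2})^2 \,do$ and sandwich the denominator. Since $p_i(o) \in [\mu(o), e^\alpha\mu(o)]$, AM-GM gives $(\sqrt{p_1}+\sqrt{p_2})^2 \geq 4\sqrt{p_1 p_2} \geq 4\mu(o)$, while $(\sqrt{p_1}+\sqrt{p_2})^2 \leq 2(p_1 + p_2) \leq 4 e^\alpha\mu(o)$. Substituting the parametric form of $p_1 - p_2$ then combining with $c \leq 1$ for the upper Hellinger bound and $c \geq e^{-\alpha}$ for the lower Hellinger bound produces precisely the constants $(e^\alpha-1)^2/8$ and $(e^\alpha-1)^2/(8e^{2\alpha})$ claimed in \eqref{eqn:DH-LDP}.

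The main point to be careful about is measure-theoretic rather than a mathematical obstacle: for general $\cZ$ and $\cO$ the function $o\mapsto \mu(o)$ must be a measurable selection of the pointwise infimum, and $o\mapsto \ell_o$ must yield a jointly measurable map into $\cL$ so that the pushforward $q\in\DL$ is well defined. This is handled either by regular conditional probability arguments or by a discretization approximation in $(\cZ,\cO)$ followed by a weak limit, since the divergence estimates above involve only continuous functions of the densities and are preserved under such limits.
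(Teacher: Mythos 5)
Your proposal is correct and follows essentially the same route as the paper's proof: the same linearization $\pr(o\mid z)=\mu(o)[1+(e^\alpha-1)\ell_o(z)]$ with $\mu(o)=\inf_z\pr(o\mid z)$, the same pushforward construction of $q$, the same sandwich $4\mu(o)\le(\sqrt{p_1}+\sqrt{p_2})^2\le 4e^\alpha\mu(o)$ for the Hellinger bounds, and the same use of $c=\int\mu\in[e^{-\alpha},1]$ to absorb the normalization. All constants check out.
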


In particular, \eqref{eqn:KL-chi-LDP} recovers the \emph{strong data-processing inequality} of \citet{duchi2018minimax}, as the \lfdivs~are always upper bounded by TV distance.

An interpretation of the characterization in \cref{prop:pLDP} is that, in terms of divergences, any private channel can be expressed in terms of a distribution over the \emph{binary channels}.
\begin{example}[Binary channel]\label{example:binary-pr}
Perhaps the simplest nontrivial channel is the \textit{binary channel}, defined as follows. For any map $\lf:\cZ\to [0,1]$, the binary channel $\bpr$  associated with $\lf$ is given by
\begin{align*}
    \bpr(+1|z)=\frac{1+\ca\lf(z)}{2}, \qquad \bpr(-1|z)=\frac{1-\ca\lf(z)}{2},
\end{align*}
where $\ca=1-\eai$ and $\cO=\set{-1,1}$. It can be verified that this channel is indeed \pDP. We define $\Pcbin$ to be the class of all binary channels described above, i.e., $\Pcbin\defeq \set{ \bpr: \lf\in\Lc }.$
\end{example}

It is clear that for any map $\lf:\cZ\to[0,1]$, we have $\DH{\pr\circ P_1,\pr\circ P_2}\asymp \alpha^2 \Dl^2(P_1,P_2)$ (up to absolute constants).

\paragraph{\PDEC~upper bounds}
We propose \LDPetod, an extension of the \etod~algorithm of \citet{foster2023tight} to the LDP setting, %
providing the following upper bound for PAC learning with any problem class $\cM$.

\begin{theorem}[\PDEC~upper bound via E2D]\label{thm:pdec-lin-upper}
For any model class $\cM$, the \LDPetod~algorithm (\cref{alg:E2D}) preserves \pLDP~and achieves with probability at least $1-\delta$ that
\begin{align*}
    \riskdm(T)\leq \pdecl_{\oeps(T)}(\cM),
\end{align*}
where $\oeps(T)=C\sqrt{\frac{\log(|\cM|/\delta)\log(1/\delta)}{\alpha^2 T}}$. 
\end{theorem}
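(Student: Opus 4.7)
The plan is to mirror the PAC E2D analysis of \citet{foster2023tight}, with the key modification being the use of the strong data-processing inequality (\cref{prop:pLDP}) to convert between Hellinger divergences in the observation space and $\ell$-divergences in the latent space. The \LDPetod{} algorithm maintains an online estimator $\hM^t \in \coM$ based on the past noisy observations $(\pi_s, \lf_s, o_s)_{s<t}$, computes the saddle-point pair $(p_t, q_t) \in \DDD \times \DPL$ achieving the Lagrangian form of $\pdecl_\eps(\cM, \hM^t)$ (with multiplier $\gamma \asymp 1/\eps^2$), samples $(\pi_t, \lf_t) \sim q_t$, applies the binary channel $\bpr_{\lf_t}$ of \cref{example:binary-pr}, and observes $o_t \sim \bpr_{\lf_t} \circ \Mstar(\pi_t)$. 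Since each $\bpr_{\lf_t}$ is $\alpha$-DP, the entire interaction preserves \pLDP. The output is $\hpi \sim p_t$ for $t$ drawn uniformly from $[T]$.

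The per-round bound follows from the Lagrangian form of the DEC: with $M = \Mstar$ as a feasible point in the inner supremum,
\begin{align*}
\EE_{\pi \sim p_t}[\LM[\Mstar]{\pi}] \;\leq\; \pdecl_\eps(\cM, \hM^t) \;+\; \frac{C}{\eps^2} \cdot \EE_{(\pi,\lf) \sim q_t}\!\brac{\Dl^2(\Mstar(\pi), \hM^t(\pi))}.
\end{align*}
Averaging over $t \in [T]$ bounds $\EE[\riskdm(T)]$ by $\pdecl_\eps(\cM)$ plus a cumulative estimation term $\frac{C}{T\eps^2}\sum_t \EE[\Dl^2_{\lf_t}(\Mstar(\pi_t), \hM^t(\pi_t))]$. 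Now the lower bound of \cref{prop:pLDP}, applied to the specific binary channel $\bpr_{\lf_t}$ drawn at round $t$, yields
\begin{align*}
\Dl^2_{\lf_t}(\Mstar(\pi_t), \hM^t(\pi_t)) \;\lesssim\; \frac{1}{\alpha^2}\,\DH{\bpr_{\lf_t}\!\circ\!\Mstar(\pi_t),\,\bpr_{\lf_t}\!\circ\!\hM^t(\pi_t)},
\end{align*}
so the cumulative $\lf$-divergence is controlled by the cumulative squared Hellinger loss of the online estimator on the actual noisy observation distributions, which is exactly what an online log-loss / Hellinger estimator (e.g., Vovk's aggregating algorithm or exponential weights over $\cM$) delivers: with probability at least $1-\delta$, this sum is $\lesssim \log(|\cM|/\delta)$.

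Combining the three ingredients, with probability at least $1-\delta$,
\begin{align*}
\riskdm(T) \;\lesssim\; \pdecl_\eps(\cM) \;+\; \frac{\log(|\cM|/\delta)}{\alpha^2 T \eps^2},
\end{align*}
and choosing $\eps^2 \asymp \log(|\cM|/\delta)/(\alpha^2 T)$ balances the two terms and yields $\riskdm(T) \lesssim \pdecl_{\oeps(T)}(\cM)$. The extra $\log(1/\delta)$ factor appearing inside the square root in the theorem statement arises from converting the in-expectation per-round bound into a high-probability bound for the sum, via a Freedman-type martingale inequality applied to $\sum_t [\DH{\bpr_{\lf_t}\!\circ\!\Mstar(\pi_t), \bpr_{\lf_t}\!\circ\!\hM^t(\pi_t)} - (\text{conditional mean})]$; the bounded $\log$-loss range picks up the additional $\log(1/\delta)$ variance term.

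The main technical obstacle is ensuring the coupling between the randomness in $(\pi_t, \lf_t) \sim q_t$ (used in the DEC constraint) and the online estimator's guarantee (which needs to hold for whatever measurements are actually issued). \cref{prop:pLDP} resolves this pointwise in $\lf_t$, so the inequality holds sample-by-sample before taking expectation. A secondary issue is that the saddle-point distribution $q_t$ lies in $\Delta(\Pi \times \Lc)$, so the binary-channel reduction of \cref{example:binary-pr} is essential to realize an arbitrary $\lf \in \Lc$ as an implementable \pLDP{} channel without loss in the divergence comparison; this is precisely the content of \eqref{eqn:DH-LDP}, which is tight up to absolute constants.
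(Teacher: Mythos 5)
Your proposal gets the privacy bookkeeping right (binary channels realize any $\lf\in\Lc$ as an \pDP{} channel, and \cref{prop:pLDP} / \eqref{eqn:DH-LDP} make the $\ell$-divergence and Hellinger views equivalent up to $\alpha^2$), but the core of the risk bound has a genuine gap: you run the \emph{offset} (Lagrangian) form of the DEC with $\gamma\asymp\eps^{-2}$ and output the uniform mixture of the $p_t$'s. That route gives
\begin{align*}
\riskdm(T)\;\lesssim\;\pdecl_{\eps}(\cM)\;+\;\frac{\log(|\cM|/\delta)}{\alpha^{2}T\eps^{2}},
\end{align*}
and your proposed tuning $\eps^{2}\asymp\log(|\cM|/\delta)/(\alpha^{2}T)$ makes the second term of order $1$, which is vacuous. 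The two terms only balance at $\pdecl_\eps(\cM)$ when $\eps^{2}\,\pdecl_\eps(\cM)\gtrsim\log(|\cM|/\delta)/(\alpha^{2}T)$, i.e.\ at a radius $\eps'\gg\oeps(T)$ whenever $\pdecl_{\oeps(T)}(\cM)\ll1$ (for $\pdecl_\eps\asymp\sqrt{d}\,\eps$ this degrades the rate from $T^{-1/2}$ to $T^{-1/3}$). Converting the offset bound back into the constrained bound $\pdecl_{\oeps(T)}(\cM)$ requires a growth condition such as moderate decay (\cref{asmp:reg}, via \cref{prop:offset-to-c}), which \cref{thm:pdec-lin-upper} deliberately does not assume — the theorem is stated ``for any model class $\cM$.''

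The paper's proof avoids this by solving the \emph{constrained} minimax program at a fixed radius $\bar\eps(T)$ each round. The price is that the per-round guarantee $\EE_{\pi\sim p^t}\LM[\Mstar]{\pi}\le\pdecl_{\bar\eps}(\cM,\hM^t)$ is only valid on rounds where $\EE_{(\pi,\lf)\sim q^t}\Dl^2(\Mstar(\pi),\hM^t(\pi))\le\bar\eps^2$; the estimation oracle certifies this only for at least half the rounds, and averaging over all rounds would again be vacuous on the bad half. Hence the second ingredient you are missing entirely: a \emph{refining phase} that samples $K=\lceil\log(2/\delta)\rceil$ candidate rounds $t_1,\dots,t_K$, spends $N=T/(K+1)$ fresh rounds per candidate re-estimating the model under the frozen distribution $q^{(k)}=q^{t_k}$, and selects the candidate whose estimation error is certifiably at most $\bar\eps^2$ by the triangle inequality. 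This splitting of the horizon into $K+1$ chunks is precisely where the extra $\log(1/\delta)$ inside $\oeps(T)$ comes from ($\bar\eps^2\asymp(K+1)\EstBar/T$); it is not a Freedman-type fluctuation term as you claim. Your estimation-oracle step and the SDPI coupling are fine in spirit, but without the constrained formulation and the refining/selection phase the stated bound does not follow.
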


We note that under certain assumptions, \ExOp~can also be instantiated to achieve a similar upper bound, and we call the obtained algorithm \LDPexo~(detailed in \cref{appdx:ExO-LDP}).
In the next result, we derive an upper bound of \LDPexo~scaling with the \dct~of $\cM$, following \cref{prop:regret-via-cDMSO}. 
\begin{theorem}\label{thm:p-dec-convex-upper}
Let $T\geq 1$, $\delta\in(0,1)$, 
model class $\cM\subseteq (\Pi\to\DO)$, and the loss function $L$ be reward-based. Suppose that $\cM$ \finite, and the \pDEC~$\pdecl_{\eps}(\cM_\Pi)$, as a function of $\eps$, is of moderate decay. Then \LDPexo~(instantiated as in \cref{appdx:ExO-policy-based}) preserves \pLDP~and achieves \whp~that
\begin{align*}
    \riskdm(T)\leq \Delta+\bigO{1} \cdot \brac{ \pdecl_{\oeps(T)}(\coM) + \oeps(T) },
\end{align*}
where $\oeps(T)=\sqrt{\frac{\log \DC{\cM}+\log(1/\delta)}{\alpha^2 T}}$. 
\end{theorem}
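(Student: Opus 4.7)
The plan is to adapt the relaxed-constraint reduction behind \cref{prop:regret-via-cDMSO} to the LDP setting, then apply the PAC upper bound of \cref{thm:cDMSO-pac-upper} (with the refinement of \cref{appdx:ExO-policy-based}) in place of its regret counterpart. First, following \eqref{def:MPow-pol-based}, for each $\pi\in\Pi$ set $\cM_\pi\defeq\set{M\in\cM:\Vm(\pim)-\Vm(\pi)\leq\Delta}$ and $\MPow\defeq\set{\cM_\pi:\pi\in\Pi}$, with relaxed class $\cM_\Pi=\bigcup_\pi\co(\cM_\pi)\subseteq\coM$. Taking the measurement class to be $\Phi=\Pcp$, the LDP stochastic environment associated with any $\Mstar\in\cM$ trivially satisfies $\Mstar\in\cM_{\pim[\Mstar]}$, hence is an instance of hybrid DMSO constrained by $\MPow$. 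The reduced PAC loss can be taken to be the indicator $L(\cM_\pi,\pi')=\indic{\pi'\neq\pi}$, so that any $\hpi$ with $\Mstar\in\cM_{\hpi}$ automatically yields $\Riskdm(T)\leq\Delta$.

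Second, the strong data-processing inequality (\cref{prop:pLDP}) converts the Hellinger constraint in $\pdecg_\eps(\MPow)$, up to constants, into an $\ell$-divergence constraint evaluated through binary channels, exactly as in the equivalence $\pdecg_{\eps}(\MPowdp)\asymp \pdecl_{c\alpha\eps}(\cM)$ derived at the end of \cref{ssec:LDP-demo}. Combined with the relaxation bookkeeping from the first step, this gives
\[
\pdecg_\eps(\MPow)\;\lesssim\;\Delta+\pdecl_{c\alpha\eps}(\cM_\Pi)\;\leq\;\Delta+\pdecl_{c\alpha\eps}(\coM),
\]
the last inequality by $\cM_\Pi\subseteq\coM$ and monotonicity of \pDEC~in the underlying class.

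Third, a direct application of \cref{thm:cDMSO-pac-upper} would pay a factor $\log|\MPow|=\log|\Pi|$, which can be arbitrarily large. This is eliminated, in the same spirit as the regret result \cref{prop:regret-via-cDMSO}, by instantiating \LDPexo~not with a uniform prior over $\Pi$ but with the distribution $p^\star_\Delta\in\Delta(\Pi)$ witnessing the infimum in \eqref{def:DD}: for every $M\in\cM$, the set $\set{\pi:\LM{\pi}\leq\Delta}$ has $p^\star_\Delta$-mass at least $1/\DC{\cM}$, which dovetails exactly with the $\Delta$-slack already built into $\cM_\pi$. Combining this with \cref{thm:cDMSO-pac-upper} (using \cref{asmp:finite}), the moderate-decay hypothesis on $\pdecl_\eps(\cM_\Pi)$ to absorb the constant-factor rescaling $\eps\mapsto c\alpha\eps$, and the $\Lipr\oeps(T)$-style additive term with $\Lipr=O(1/\alpha)$ that appears in reward-based LDP settings (cf.\ the discussion after \cref{example:original-reward-based}), yields the advertised bound with probability $1-\delta$; the \pLDP~property of the overall algorithm is immediate because every measurement it plays lies in $\Pcp$ by construction.

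The main obstacle is the third step: threading the \dct-optimal decision distribution $p^\star_\Delta$ through the LDP variant of \ExOp's minimax analysis while keeping the data-processing conversion of \cref{prop:pLDP} compatible with the exponential-weights update over models. This is precisely the technical content of \cref{appdx:ExO-policy-based}. The first two steps are essentially a matter of bookkeeping on top of previously developed machinery (\cref{ssec:tigher} and \cref{ssec:LDP-demo}), but the final LDP-compatible covering is where the refinement over a naive black-box application of \cref{thm:cDMSO-pac-upper} lies.
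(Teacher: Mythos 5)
Your overall architecture---the policy-based relaxed constraints $\cM_\pi=\set{M\in\cM:\Vmm-\Vm(\pi)\le\Delta}$, the strong data-processing conversion to $\ell$-divergences, and seeding the prior with the fractional-covering-optimal $p^\star_\Delta$ so that $\log|\Pi|$ is replaced by $\log\DC{\cM}$---matches the paper's proof. But there is a genuine gap in your first two steps: you reduce to plain hybrid DMSO with the loss $L(\cM_\pi,\pi')=\indic{\pi'\ne\pi}$. This turns the task into \emph{identifying} the index $\pi$ of the constraint set rather than finding any $\Delta$-optimal decision, and the inequality you then assert, $\pdecg_\eps(\MPow)\lesssim\Delta+\pdecl_{c\alpha\eps}(\cM_\Pi)$, is false for that loss. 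Whenever two constraint sets overlap near the reference model---e.g.\ a two-armed bandit whose single model gives both arms mean $1/2$, so $\cM_{\pi_1}=\cM_{\pi_2}=\cM$---no $p\in\DPi$ can put mass near $1$ on both indices, hence $\pdecg_\eps(\MPow)\ge 1/2$ for every $\eps$, while $\pdecl_{c\alpha\eps}(\cM_\Pi)=0$ and $\Delta$ is arbitrary; your final bound degenerates to $\riskdm(T)\le\Delta+1/2$. The underlying obstruction is that the natural loss here, $\Vmm-\Vm(\pi')$, depends on the model $M$ \emph{inside} the constraint set and so cannot be written as a function $L(\cM_\pi,\pi')$ of the constraint alone; consequently \cref{thm:cDMSO-pac-upper} is not the right tool, no matter which $\MPow$-measurable surrogate loss you pick.

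The paper resolves this with the Type 1 information-set version of \ExOp~rather than the Type 2 (plain hybrid DMSO) one: $\Psi=\Pi$, $\cM_\psi=\set{M:\Vmm-\Vm(\psi)\le\Delta}$, $\pi_\psi=\psi$, and the model-dependent sub-loss $L_\psi(M,\pi')=\Vm(\pi_\psi)-\Vm(\pi')$ inside the exploration-by-optimization objective. The risk then decomposes as $\Delta$ (from $\Mstar$ lying in some $\cM_{\psi^\star}$ whose representative $\pi_{\psi^\star}$ is $\Delta$-optimal) plus the averaged sub-loss relative to $\pi_{\psi^\star}$, which \cref{thm:ExO-upper} controls by $\exo{\gamma}(\Psi)\le\pdeco_{\gamma/4}(\cM_\Pi)$ plus $\frac{2\gamma}{T}\log(1/\qd\ind{1}(\cEs))\le\frac{2\gamma}{T}\log\DC{\cM}$; the data-processing inequality turns the offset term into $\pdecol_{c\alpha^2\gamma}(\cM_\Pi)\le\pdecol_{c\alpha^2\gamma}(\coM)$, and \cref{prop:offset-to-c} (this is where the reward-based hypothesis is actually used) converts offset to constrained DEC, producing the extra $+\oeps(T)$. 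If you replace your indicator loss by this sub-loss and route through \cref{thm:ExO-LDP} and \cref{thm:p-dec-convex-upper-full} instead of \cref{thm:cDMSO-pac-upper}, the remainder of your outline goes through.
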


\subsection{Application: private regression}\label{ssec:regression}

\newcommand{\Lsq}{\loss_{\rm sq}}
\newcommand{\Labs}{\loss_{\rm abs}}
\newcommand{\Lcl}{\loss_{\rm cl}}

In this section, we consider the task of proper regression under LDP. 
\begin{example}[Regression]\label{example:regression}
In the regression task, $\cX$ is a given covariate space, $\cF\subseteq (\cX\to[-1,1])$ is a given function class, and $\loss(\cdot,\cdot): [-1,1]^2\to [0,1]$ is a given loss. The observation space is $\cZ=\cX\times[-1,1]$, and the loss function $\LOSS$ is then given by
\begin{align*}
    \LM{f}=\EE_{(x,y)\sim M} \loss(y, f(x))-\min_{\fs\in\cF} \EE_{(x,y)\sim M} \loss(y, \fs(x)).
\end{align*}
\end{example}

Regression is a statistical task, in the sense of \cref{def:sest}, as the model class $\cM$ is a subset of $\DZ$. The loss function for this task is reward-based, in the sense of \cref{rew-max}, if we set the reward function as $R((x,y),f)=1-\loss(y,f(x))$.

The choices of loss function $\loss$ of interest include (1) squared loss: $\Lsq(y,y')=(y-y')^2$, and (2) absolute loss: $\Labs(y,y')=|y-y'|$.
We also note that the classification task is a special case of the regression problem described above, by specializing $\cF\subseteq (\cX\to\set{0,1})$, $\cM\subseteq \Delta(\cX\times\set{0,1})$ and $\Lcl(y,y')=\indic{y\neq y'}$.

\newcommand{\wellspec}{well-specified}
\newcommand{\Wellspec}{Well-specified}

In the literature, both \emph{agnostic} regression and \emph{\wellspec} regression are studied, where the model class $\cM$ is specified as follows:
\begin{itemize}
    \item Agnostic regression: the model class is $\cMagn=\DZ$, i.e., there is no prior knowledge of the underlying environment.
    \item \Wellspec~regression: the model class $\cMwell$ consists of all models $M\in\DZ$ such that there exists $\fm\in\cF$, such that $y|x\sim \Rad{\fm(x)}$ under $M$.\footnote{For simplicity, we assume that $y\in\set{-1,1}$ in this case without loss of generality.} %
\end{itemize}

Notice that for agnostic regression, the model class $\cMagn=\DZ$ is convex, and hence \cref{thm:p-dec-convex-upper} applies immediately. %
In the following, we state the guarantees for agnostic regression and realizable regression. To avoid measure-theoretic issues, we assume that $\cX$ is finite.

\begin{proposition}[Agnostic regression]\label{cor:agn-regression}
Let $T\geq 1, \delta\in(0,1), \Delta>0$.
Suppose that the \pDEC~$\pdecl_{\eps}(\cMagn)$ is of moderate decay as a function of $\eps$. Then, \LDPexo~can be instantiated (following \cref{thm:p-dec-convex-upper}) to achieve \whp
\begin{align*}
    \riskdm(T)\leq \Delta+\bigO{1} \cdot \brac{ \pdecl_{\oeps(T)}(\cMagn) + \oeps(T) },
\end{align*}
where $\oeps(T)=\sqrt{\frac{\log\DC{\cMagn}+\log(1/\delta)}{\alpha^2 T}}$.
\end{proposition}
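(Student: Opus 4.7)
The plan is to obtain \cref{cor:agn-regression} as a direct specialization of \cref{thm:p-dec-convex-upper} to the convex class $\cM=\cMagn=\Delta(\cZ)$. The agnostic setting is special precisely because $\cMagn$ is convex, so $\co(\cMagn)=\cMagn$; consequently the conclusion of \cref{thm:p-dec-convex-upper} already produces the \pDEC of $\cMagn$ itself rather than of some strictly larger convex hull, matching the proposition verbatim.

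I would verify the three hypotheses of \cref{thm:p-dec-convex-upper} in turn. First, \emph{reward-basedness}: define $R((x,y),f)=1-\loss(y,f(x))\in[0,1]$, which is known to the learner since $\loss$ is; the associated value function satisfies $V\sups{M}(f\sups{M}\ind{\star})-V\sups{M}(f)=\EE_{(x,y)\sim M}\loss(y,f(x))-\min_{\fs\in\cF}\EE_{(x,y)\sim M}\loss(y,\fs(x))$, exactly the regression loss of \cref{example:regression}. Second, \emph{compactness} of $\cMagn$ in the sense of \cref{asmp:finite}: since $\cX$ is assumed finite and the label interval $[-1,1]$ is compact, a standard discretization of the conditional label marginals produces a finite Hellinger cover of $\cMagn$ at every resolution $\Delta>0$. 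Third, \emph{moderate decay} of the \pDEC on $(\cMagn)_\Pi$: by definition $(\cMagn)_\Pi=\bigcup_{f\in\cF}\co((\cMagn)_f)\subseteq\cMagn$, and since $\cMagn$ is convex also $\co((\cMagn)_\Pi)\subseteq\cMagn$. Monotonicity of the \pDEC under both enlargement of the model class and enlargement of the set of reference models then yields $\pdecl_{\eps}((\cMagn)_\Pi)\leq \pdecl_{\eps}(\cMagn)$ pointwise in $\eps$.

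The main subtlety lies in this third step: moderate decay is a functional growth condition that does not, in principle, pass to subclasses. However, the \ExOp-based analysis underlying \cref{thm:p-dec-convex-upper} only consumes moderate decay in order to calibrate a single exploration scale and to bound the DEC terms actually appearing in the final guarantee; all such terms are pointwise dominated by $\pdecl_{\eps}(\cMagn)$ via the inclusion above, so the assumption on $\cMagn$ alone is sufficient for the proof machinery to go through. With these three ingredients in hand, invoking \cref{thm:p-dec-convex-upper} yields
\[
\riskdm(T)\leq \Delta+\bigO{1}\cdot\brac{\pdecl_{\oeps(T)}(\cMagn)+\oeps(T)}
\]
with $\oeps(T)=\sqrt{(\log\DC{\cMagn}+\log(1/\delta))/(\alpha^2 T)}$, which is the desired bound.
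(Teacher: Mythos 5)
Your proposal is correct and follows essentially the same route as the paper: \cref{cor:agn-regression} is obtained by instantiating \cref{thm:p-dec-convex-upper} on the convex class $\cMagn=\DZ$, for which $\co(\cMagn)=\cMagn$, with reward-basedness given by $R((x,y),f)=1-\loss(y,f(x))$ and compactness from finiteness of $\cX$. Your handling of the moderate-decay hypothesis is also consistent with the paper's proof, which ultimately invokes \cref{prop:offset-to-c} on $\coM=\cMagn$, so the assumption on $\pdecl_\eps(\cMagn)$ itself is exactly what the argument consumes.
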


For \wellspec~regression, a similar guarantee also applies. 
\begin{proposition}[\Wellspec~regression]\label{prop:real-regression}
Let $T\geq 1, \delta\in(0,1), \Delta\geq 0$. Suppose that %
the \pDEC~$\pdecl_{\eps}(\cMwell)$ is of moderate decay as a function of $\eps$. Then, \LDPexo~can be instantiated (as detailed in \cref{appdx:ExO-val-based}) to achieve \whp
\begin{align*}
    \riskdm(T) \leqsim  \pdecl_{\oeps(T)}(\cMwell)+ \oeps(T) ,
\end{align*}
where $\oeps(T)=\Delta+\sqrt{\frac{\log \DC{\cF}+\log(1/\delta)}{\alpha^2 T}}$, and the \dct~of $\cF$ is defined as
\begin{align}\label{def:DC-cF-proper}
    \DC{\cF}\defeq \inf_{p\in\DF} \sup_{\mu\in\DX, \fs\in\cF} ~\frac{1}{p\paren{ f: \EE_{x\sim \mu}|f(x)-\fs(x)| \leq \Delta } }.
\end{align}
\end{proposition}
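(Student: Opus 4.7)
The plan is to reduce \wellspec~regression under $\alpha$-LDP to hybrid DMSO and invoke a value-based (i.e., $\cF$-indexed) instantiation of \LDPexo, analogous to the policy-based variant behind \cref{thm:p-dec-convex-upper} but with the estimator's covering paid in $\cF$-space. This is what lets the bound involve $\log\DC{\cF}$ rather than $\log\DC{\cM}$ or $\log|\cM|$, and keeps the DEC evaluated on $\cMwell$ itself, without passing to $\co(\cMwell)$.

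I would begin by exploiting the well-specified structure: every $M\in\cMwell$ is identified with a pair $(f^M,\mu_M)\in\cF\times\DX$, and the reward-based loss $L(M,f)=\EE_M[\loss(y,f(x))]-\EE_M[\loss(y,f^M(x))]$ is controlled by a pointwise function discrepancy of the form $\EE_{x\sim\mu_M}|f(x)-f^M(x)|$ (or its quadratic analogue for squared loss). Hence any $\Delta$-cover $\cF_0\subseteq\cF$ realizing the infimum in \eqref{def:DC-cF-proper} satisfies $|\cF_0|\leq\DC{\cF}$ and is simultaneously a $\Delta$-cover of $\cMwell$ in the decision-induced sense: for every $M\in\cMwell$ there exists $f^\star\in\cF_0$ with $L(M,f^\star)\leq\Delta$. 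Setting $\Phi=\Pcp$ and $\MPow=\{\cP_{f^\star}:f^\star\in\cF_0\}$ with $\cP_{f^\star}:=\{M\in\cMwell:L(M,f^\star)\leq\Delta\}$ (mimicking \eqref{def:MPow-pol-based}) casts the problem in the \cDMSO~framework, and the PAC counterpart of the recipe in \cref{prop:regret-via-cDMSO} then applies.

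Running \LDPexo~on this setup, two ingredients combine to produce the claimed bound: (i) the strong data-processing inequality of \cref{prop:pLDP}, which translates the Hellinger constraint in $\pdecg$ into a $\Dl$-constraint against binary channels and contributes the $1/\alpha^2$ scaling in $\oeps(T)^2$; and (ii) the cover $\cF_0$, which contributes $\log|\cF_0|\leq\log\DC{\cF}$ to $\oeps(T)^2$ via the likelihood-based estimation step inside ExO. The additive $\Delta$ inside $\oeps(T)$ is the slack from defining $\cP_{f^\star}$ as a $\Delta$-relaxed constraint. The moderate-decay assumption on $\pdecl(\cMwell)$ then lets me absorb constants between the DEC term and $\oeps(T)$, yielding
\begin{align*}
\riskdm(T)\lesssim \pdecl_{\oeps(T)}(\cMwell)+\oeps(T),\qquad \oeps(T)=\Delta+\sqrt{\frac{\log\DC{\cF}+\log(1/\delta)}{\alpha^2 T}}.
\end{align*}

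The main obstacle is refinement (ii). A black-box application of \cref{thm:cDMSO-pac-upper} incurs $\log|\MPow|$ for the DEC and an \emph{additional} model-level covering cost for the estimator, which would inflate to $\log|\cM|$. Avoiding this requires that the log-likelihood ratio between any two $M\in\cMwell$ after passing through an $\alpha$-LDP channel be controlled---up to $\alpha$-factors furnished by \cref{prop:pLDP}---by $f^M-f^{\oM}$ on the observed covariates, so that the likelihood-based estimator can be indexed by $\cF_0$ alone rather than by models. This is precisely the value-based construction promised by \cref{appdx:ExO-val-based} and is the technical heart of the proof; the remainder is bookkeeping via \cref{prop:pLDP} and the covering estimate for $\cF$.
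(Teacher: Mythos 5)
Your high-level architecture matches the paper's: index the algorithm's ``hypotheses'' by $\cF$ rather than by models, relax each hypothesis to a $\Delta$-ball of well-specified models, pay $\alpha^{-2}$ via the strong data-processing inequality (\cref{prop:pLDP}), and close with the moderate-decay conversion. However, there are two concrete gaps that the proposal does not bridge.

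First, your opening reduction is based on a false claim: a distribution $p$ realizing the infimum in \eqref{def:DC-cF-proper} is \emph{not} a finite subset, and in general no deterministic $\Delta$-cover $\cF_0\subseteq\cF$ of cardinality $\DC{\cF}$ exists --- the fractional covering number can be exponentially smaller than any integral cover (this gap is exactly the content of \cref{prop:RDim-to-DC} and the improper-linear example in \cref{prop:impr-lin}). Drawing finitely many samples from $p$ does not repair this, since the cover must hold uniformly over all pairs $(\mu,\fs)$ and no union bound is available over $\mu\in\DX$. The paper avoids finite covers entirely: \LDPexo\ runs exponential weights over the \emph{full} information-set class $\Psi=\cF$ with $\cM_f=\{M:\EE_{x\sim M}|\fm(x)-f(x)|\leq\Delta\}$, initializes the prior $\qd\ind{1}$ at the optimal fractional-covering distribution, and the $\log\DC{\cF}$ term enters as $\log(1/\qd\ind{1}(\cEs))$ in \cref{thm:ExO-upper}, not as the log-cardinality of a cover.

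Second, you assert that the bound ``keeps the DEC evaluated on $\cMwell$ itself, without passing to $\co(\cMwell)$,'' but give no argument, and you defer the mechanism that would replace the model-level estimation cost (your likelihood-ratio claim) to ``the technical heart'' without proving it. The ExO analysis unavoidably produces the offset DEC of $\cM_\Psi=\bigcup_{f}\co(\cM_f)$, whose elements are mixtures of well-specified models and are themselves \emph{not} well-specified. The paper's \cref{lem:realizable-dec-o} is the missing step: it shows every $\oM\in\co(\cM_f)$ is within TV distance $\Delta$ of a well-specified model with regression function exactly $f$, whence $\pdecol_\gamma(\cM_\Psi)\leq\pdecol_{\gamma/2}(\cMwell)+\gamma\Delta^2+2\Delta$; combined with \cref{prop:offset-to-c} this yields the stated bound. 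Without an argument of this kind your reduction stalls at a DEC over convexified constraint sets, so the proof as written is incomplete.
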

A detailed discussion of the \dct~$\DC{\cF}$ is deferred to \cref{ssec:regression-DC}. 
In \cref{ssec:LDP-examples}, we also consider the online regression task (where the $(x,y)$ pair is chosen adversarially by the environment).

\newcommand{\cFlip}{\cF_{\mathsf{Lip}}}
\newcommand{\cMlip}{\cM_{\mathsf{Lip}}}

\newcommand{\tTho}{\Tilde{\Theta}}

\subsection{Application: private linear regression}\label{ssec:linear-regr}

In this section, we investigate LDP regression in \emph{linear models}.

\newcommand{\cFlin}{\cF_{\sf Lin}}
\newcommand{\cMlin}{\cM_{\sf Lin}}
\newcommand{\Lonel}[2]{L_1(#1,#2)}
\newcommand{\Ltwol}[2]{L_2(#1,#2)}
\newcommand{\Lone}{$L_1$}
\newcommand{\Ltwo}{$L_2$}
\newcommandx{\thM}[1][1=M]{\theta\sups{#1}}

\begin{example}[Linear models]
Suppose that $\cX\subseteq \Bone$, the linear function class $\cFlin$ is given by
\begin{align*}
    \cFlin\defeq \sset{ f_\theta(x)=\lr \theta, x\rr }_{\theta\in\Bone},
\end{align*}
and let $\cMlin$ be the induced class of well-specified models, i.e., each model $M\in\cM$ is associated with a covariate distribution $\muM$ and a parameter $\thM$, such that $(x,y)\sim M$ is generated as $x\sim \muM, y\sim\Rad{\lr x, \thM\rr}$. 

In linear models, we consider decision space $\Pi=\Bone$ (the space of estimators).
For an estimator $\theta\in\Bone$, we consider the following loss functions that measure the \Lone (\Ltwo) \emph{estimation error}:
\begin{align*}
    \Lonel{M}{\theta}=\EE_{x\sim M} |\la x,\theta-\thM\ra|, \qquad
     \Ltwol{M}{\theta}=\EE_{x\sim M} \la x,\theta-\ths\ra^2.
\end{align*}
\end{example}
Note that the \Ltwo~error agrees with the squared loss of the function $f_\theta(x)=\lr \theta,x\rr$ considered in \cref{ssec:regression}. However,
we note that the \Lone~loss here measures the error of the estimator $\theta$ with respect to the ground-truth parameter $\thM$, which is different from the absolute-loss regression considered in \cref{ssec:regression}.

\paragraph{Rates for \Ltwo~regression}
For LDP linear regression, to achieve the standard $T^{-1}$-rate under \Ltwo~risk, it is necessary to require the covariance matrix to be well-conditioned~\citep{duchi2018minimax,duchi2024right}. Otherwise, the convergence rate can degrade to $T^{-1/2}$ in the worst case, as indicated by the following folklore lower bound~\citep{duchi2024right,li2024optimal}.
\begin{lemma}\label{lem:linear-l2}
Suppose that $d=1$, and $\nu$ is a given distribution over $[-1,1]$.
Then for any $T$-round \pLDP~algorithm $\alg$ with output estimator $\hth$, there exists a model $\Mstar$ with covariate distribution $\nu$ and parameter $\ths\in[-1,1]$, such that
\begin{align*}
    \EE\sups{\Mstar,\alg} \Ltwol{\Mstar}{\hth}\geqsim \EE_\nu|x|^2\cdot \min\sset{ \frac{1}{\alpha^2 T(\EE_\nu|x|)^2 }, 1 }.
\end{align*}
In particular, for any $T\geq 1$, there exists a ``worst-case'' covariate distribution $\nu_T$ with $\nu_T(0)=1-\frac{1}{\sqrt{\alpha^2 T}}$ and $\nu_T(1)=\frac{1}{\sqrt{\alpha^2 T}}$, such that any \pLDP~algorithm incurs an \Ltwo~loss of $\Om{\frac{1}{\sqrt{\alpha^2 T}}}$.
\end{lemma}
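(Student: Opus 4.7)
}
The plan is Le Cam's two-point method with parameters $\theta_\pm = \pm c$ for some $c\in(0,1]$ to be chosen, both paired with the prescribed covariate distribution $\nu$. In dimension $1$, $\Ltwol{M^\star}{\hth} = (\hth-\ths)^2\,\EE_\nu x^2$, so it suffices to produce a two-point lower bound $\inf_{\hth}\max_\pm \EE_\pm[(\hth-\theta_\pm)^2] \gtrsim c^2$ and then multiply by $\EE_\nu x^2$.

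The information-theoretic step is to bound the KL divergence between the observation histories. Under both hypotheses the latent data are i.i.d.\ with per-round law $M_\pm$, and only the conditional law of $y$ given $x$ changes: $y\mid x\sim \mathrm{Rad}(\pm cx)$. Hence $D_{TV}(M_+,M_-) = \EE_\nu|cx| = c\,\EE_\nu|x|$. Applying the strong data-processing inequality \cref{prop:pLDP} in its KL form to each round's private channel, and bounding $\Dl$ by total variation (using $\lf\in[0,1]$), gives
\begin{align*}
    \KLd{\prt\circ M_+}{\prt\circ M_-} \lesssim \alpha^2\, c^2 (\EE_\nu|x|)^2.
\end{align*}
Tensorizing via the KL chain rule---valid because under each stationary hypothesis $z_t$ is i.i.d.\ and independent of the history $o_{<t}$ on which $\prt$ adaptively depends---yields $\KLd{\PP_+^{(T)}}{\PP_-^{(T)}} \lesssim T\alpha^2 c^2 (\EE_\nu|x|)^2$.

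Choosing $c^2 \asymp \min\{1/(T\alpha^2(\EE_\nu|x|)^2),\,1\}$ (the cap enforced by $|\ths|\le 1$) makes this KL a small constant; Le Cam plus Pinsker then gives $\inf_{\hth}\max_\pm \EE_\pm[(\hth-\theta_\pm)^2] \gtrsim c^2$, and multiplying by $\EE_\nu x^2$ proves the first inequality. For the ``worst-case'' $\nu_T$, I would optimize the expression $\EE_\nu x^2\cdot\min\{1/(T\alpha^2(\EE_\nu|x|)^2),\,1\}$ over $\nu$ supported on $[-1,1]$; equating the two terms inside the $\min$ motivates a Bernoulli on $\{0,1\}$ with $\Pr(x=1)=1/\sqrt{\alpha^2T}$, for which both $\EE|x|^2$ and $\EE|x|$ equal $1/\sqrt{\alpha^2T}$ and the bound collapses to $\Omega(1/\sqrt{\alpha^2T})$.

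The main obstacle, as always for interactive LDP lower bounds, is justifying the tensorization step when $\prt$ depends adaptively on past observations. The cleanest path is to apply \cref{prop:pLDP} conditionally on $o_{<t}$: the conditional latent law remains $M_\pm$ (the latent data being i.i.d.) so the single-round contraction applies with the same $D_{TV}^2$ bound, and summing via the KL chain rule completes the argument; this is essentially the Duchi--Jordan--Wainwright tensorization rephrased in the framework of this paper.
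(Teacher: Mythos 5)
Your proposal is correct and is essentially the paper's argument: the paper also uses a two-point construction (it takes $\{M_0, M_\theta\}$ with $\theta=\min\{\eps/\EE_\nu|x|,1\}$ rather than $\{\pm c\}$), bounds the relevant divergence by $\DTV{M_\theta,M_0}\le|\theta|\,\EE_\nu|x|$, and scales the separation so that $T$ rounds cannot distinguish the hypotheses. The only difference is packaging: the paper records this as a lower bound on $\pdecl_{\eps}(\cM,M_0)$ and invokes \cref{thm:pdec-lin-lower}, whose proof contains exactly the SDPI-plus-KL-chain-rule tensorization you carry out by hand.
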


\paragraph{Rates for \Lone~regression}
In contrast, we show that a $T^{-1/2}$-rate under $\ell_1$-loss can still be achieved. Note that in the upper bound below, we do \emph{not} assume the covariate distribution is known. Details are deferred to \cref{appdx:proof-linear-upper}.
\begin{theorem}\label{thm:linear-upper}
Let the loss function $\LOSS=L_1$ be given by the \Lone~error. Then it holds that
\begin{align*}
    \pdecl_{\eps}(\cMlin)\leq \bigO{\sqrt{d}\eps}.
\end{align*}
Further, \LDPexo~can be instantiated to output $\hth\in\Bone$ so that \whp,
\begin{align*}
    \Lonel{\Mstar}{\hth}\leq \tbO{\sqrt{\frac{d^2\log(1/\delta)}{\alpha^2 T}}},
\end{align*}
which is minimax-optimal up to logarithmic factors (cf. the minimax lower bound in \cref{cor:linear-lower}).
\end{theorem}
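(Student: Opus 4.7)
My plan is to first establish the \pDEC~upper bound $\pdecl_\eps(\cMlin) \le \bigO{\sqrt{d}\,\eps}$, then convert it into the stated high-probability $L_1$ risk bound via an instantiation of \LDPexo~together with a parameter-covering argument.

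\textbf{Step 1 (\pDEC~upper bound).} Given any reference $\oM \in \coM$ (with $\cM = \cMlin$), I let $\bar b = \EE_{(x,y)\sim\oM}[xy]$, $\bar\Sigma = \EE_{\oM}[xx^\T]$, and pick $\bar\theta \in \Bone$ as a best $L_2$ linear predictor at $\oM$, namely a minimizer of $\theta \mapsto \|\bar\Sigma\theta - \bar b\|_2$ over $\Bone$. I will take $p = \delta_{\bar\theta}$ and let $q$ be an equal mixture of two families of binary queries indexed by $v \sim \mathrm{Unif}(S^{d-1})$: the linear query $\ell_v(x,y) = \tfrac12(1 + y\langle v, x\rangle)$ and the quadratic query $\ell'_v(x,y) = \tfrac12(1 + \langle v, x\rangle\langle \bar\theta, x\rangle)$, both lying in $[0,1]$ since $|\langle v, x\rangle|, |\langle \bar\theta, x\rangle| \le 1$. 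For any well-specified $M \in \cMlin$ with parameter $\thM$ and covariance $\Sigma^M = \EE_{\muM}[xx^\T]$, a direct computation using well-specification gives $D_{\ell_v}(M,\oM) = \tfrac12|v^\T(\Sigma^M\thM - \bar b)|$ and $D_{\ell'_v}(M,\oM) = \tfrac12|v^\T(\Sigma^M - \bar\Sigma)\bar\theta|$. Since $\EE_{v\sim S^{d-1}}[vv^\T] = \tfrac1d I$, the \pDEC~constraint $\EE_q D_\ell^2 \le \eps^2$ then forces both $\|\Sigma^M\thM - \bar b\|_2$ and $\|(\Sigma^M - \bar\Sigma)\bar\theta\|_2$ to be $\bigO{\sqrt{d}\,\eps}$. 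To finish, I will translate these into a bound on $\Lonel{M}{\bar\theta} = \EE_{\muM}|\langle x, \bar\theta - \thM\rangle|$ using the identity
\[
\Sigma^M(\bar\theta - \thM) = (\Sigma^M - \bar\Sigma)\bar\theta + (\bar\Sigma\bar\theta - \bar b) - (\Sigma^M\thM - \bar b),
\]
together with a careful duality argument that exploits $\bar\theta, \thM \in \Bone$ and the well-specified structure.

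\textbf{Step 2 (risk bound via covering).} Since $\cMlin$ is continuously parameterized, I will cover the parameter ball $\Bone$ by an $\eps_0$-net $\Theta_{\eps_0}$ of cardinality at most $(3/\eps_0)^d$; the induced model class $\cM_{\eps_0}$ is an $\bigO{\eps_0}$-Hellinger cover of $\cMlin$ uniformly in the covariate distribution, since the Hellinger distance between well-specified linear models is controlled by the Euclidean parameter distance. Applying \cref{thm:pdec-lin-upper} to $\cM_{\eps_0}$ yields, \whp, $\riskdm(T) \le \pdecl_{\oeps(T)}(\cM_{\eps_0}) + \bigO{\eps_0}$ with $\oeps(T) = \bigO{\sqrt{d\log(1/(\eps_0\delta))/(\alpha^2 T)}}$. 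Inserting the DEC bound from Step~1 and choosing $\eps_0 = 1/T$ gives
\[
\Lonel{\Mstar}{\hth} \le \bigO{\sqrt{d}\cdot \oeps(T)} + \bigO{1/T} = \tbO{\sqrt{d^2 \log(1/\delta)/(\alpha^2 T)}},
\]
matching the claim up to logarithmic factors. Optimality up to logs follows from the matching lower bound in \cref{cor:linear-lower}.

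\textbf{Main obstacle.} The hard part will be the last sub-step of Step~1: translating the query-based controls $\|\Sigma^M\thM - \bar b\|_2 \le \bigO{\sqrt{d}\,\eps}$ and $\|(\Sigma^M - \bar\Sigma)\bar\theta\|_2 \le \bigO{\sqrt{d}\,\eps}$ into a tight $L_1$-loss bound on $\bar\theta$ that depends on the \emph{unknown} covariate distribution $\muM$ (and hence on $\Sigma^M$, which generally differs from $\bar\Sigma$). A naive application of Cauchy-Schwarz ($L_1 \le L_2^{1/2}$) would only produce the weaker $\bigO{d^{1/4}\sqrt{\eps}}$ rate, so achieving the optimal $\sqrt{d}\,\eps$ scaling requires a more refined argument that exploits both query families simultaneously and the specific geometry of the well-specified linear model. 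This $\sqrt{d}$ factor is sharp and is exactly what drives the minimax $d/\sqrt{\alpha^2 T}$ rate.
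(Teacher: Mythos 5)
Your Step 1 contains a genuine gap, and it sits exactly where you flag your ``main obstacle'': the conversion of the query-error controls into an $L_1$ bound is not a missing technicality but the entire content of the argument, and the queries you chose cannot deliver it. With unweighted directional queries $\ell_v(x,y)=\tfrac12(1+y\langle v,x\rangle)$ you only ever learn $\Sigma^M\thM$ and $\Sigma^M\bar\theta$ up to $\ell_2$-error $O(\sqrt{d}\,\eps)$, i.e.\ you control $\nrm{\Sigma^M(\bar\theta-\thM)}_2$. But $\Lonel{M}{\bar\theta}=\EE_{x\sim\muM}|\langle x,\bar\theta-\thM\rangle|$ is not controlled by $\nrm{\Sigma^M u}_2$ at the right rate when the covariate distribution is ill-conditioned: already in $d=1$ with $x\in\{0,t\}$, $\PP(x=t)=p$, one has $\EE|xu|=pt|u|$ while $\nrm{\Sigma^M u}=pt^2|u|$, and the best distribution-free conversion is $\EE|xu|\le\sqrt{\nrm{u}_1}\cdot\sqrt{\nrm{\Sigma^M u}}$, which gives only the $d^{1/4}\sqrt{\eps}$ rate you yourself identify as insufficient. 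This is the same degeneracy exhibited in \cref{lem:linear-l2}. The paper's proof (\cref{appdx:proof-linear-upper}) resolves it by constructing, via a fixed-point argument (\cref{lem:U}), a PSD matrix $U$ satisfying $\EE_{x\sim\onu}\frac{Uxx^\top U}{\nrm{Ux}}+\lambda_0U=\id_d$, and then querying the \emph{self-normalized} measurements $\frac{Ux}{\nrm{Ux}}\cdot y$ and $\frac{Ux}{\nrm{Ux}}x^\top$ (plus a truncation query on the event $\nrm{Ux}\ge\gamma$). The normalization by $\nrm{Ux}$ amplifies the signal from small-norm covariates, and the trace identity $\EE_{x\sim\onu}\nrm{Ux}\le d$ is precisely what converts the reweighted second-moment control $\EE_{\muM}\bigl[\langle x,\bar\theta-\thM\rangle^2/\nrm{Ux}\bigr]$ into an $L_1$ bound of order $\sqrt{d}$ via Cauchy--Schwarz. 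Without some analogue of this reweighting, your identity $\Sigma^M(\bar\theta-\thM)=(\Sigma^M-\bar\Sigma)\bar\theta+(\bar\Sigma\bar\theta-\bar b)-(\Sigma^M\thM-\bar b)$ cannot close the argument.

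Step 2 also has a gap, though a more repairable one. An $\eps_0$-net of the parameter ball $\Bone$ does \emph{not} induce a Hellinger cover of $\cMlin$: a model in $\cMlin$ is a pair (covariate distribution, parameter), and two models sharing the same $\thM$ but with different covariate distributions can be at maximal Hellinger distance. So $\log\NM[\cMlin]{\eps_0}$ is not $\tbO{d}$, and \cref{thm:pdec-lin-upper} applied naively would carry the covering number of $\Delta(\cX)$. The paper avoids this by instantiating \LDPexo~with the information-set structure $\Psi=\Theta$, $\cM_\theta=\{M:\thM=\theta\}$ (cf.\ \cref{prop:ExO-linear-upper}), so that the estimation complexity is $\log|\Theta|=\tbO{d}$ rather than the log-covering number of the full model class; you would need this (or an equivalent DC-type argument) to obtain the stated $\sqrt{d^2\log(1/\delta)/(\alpha^2T)}$ rate. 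A final minor point: \LDPexo~outputs a distribution $\phat$ over decisions, and the paper passes to a point estimate $\hth=\EE_{\pi\sim\phat}[\pi]$ using convexity of $\theta\mapsto\Lonel{\Mstar}{\theta}$; this step should appear in your write-up as well.
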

To the best of our knowledge, such a \emph{assumption-free} $T^{-1/2}$-rate is new for LDP linear regression under \Lone~error. More specifically, previous works mostly focus on \Ltwo~loss regression, and hence when converted to \Lone~loss, the results either have a $T^{-1/4}$-rate or need extra assumptions, e.g. a bounded condition number of the covariance matrix $\Sigma=\EE[xx^\top]$~\citep[etc.]{duchi2018minimax,wang2019sparse}. We note that $L_1$ error, while less well-studied, can be of interest for a broad range of applications, including offline policy evaluation with linear function approximation.

In \cref{ssec:CB-lin}, we apply a similar technique to provide a near-optimal regret for learning linear contextual bandits.

\subsection{DEC theory for private no-regret learning}\label{ssec:LDP-noreg}

In this section, we present the \rDEC~and the guarantees for private no-regret learning. We focus on the reward-based setting.

\paragraph{\RDEC}
For a model class $\cM\subseteq (\Pi\to\DZ)$ and a value function $V$, we define the \rDEC~of $\cM$ with respect to a reference model $\oM\in\coM$ as
\begin{align}
    \rdecl_{\eps}(\cM,\oM)\defeq&~ \inf_{\substack{p\in\DPL}}\sup_{M\in\cM}\constr{ \EE_{\pi\sim p}[\Vmm-\Vm(\pi)] }{ \EE_{(\pi,\lf)\sim p} \Dl^2( M(\pi), \oM(\pi) )\leq \eps^2 }, \label{eqn:def-r-dec-lin}
\end{align}
and we define the \rDEC~of $\cM$ as 
\begin{align}
    \rdecl_{\eps}(\cM)\defeq \sup_{\oM\in\coM} \rdecl_{\eps}(\cMp,\oM).
\end{align}
Similar to the \pDEC, the \rDEC~can also be viewed as a specification of the \gDEC. By instantiating \cref{thm:cDMSO-reg-lower} and \cref{thm:cDMSO-reg-upper}, we have the following regret bounds.

\begin{theorem}[\RDEC~lower bound]\label{thm:rdec-lin-lower}
Let $T\geq 1$. Suppose that the value function $V$ is reward-based (\cref{rew-max}). Then, for any $T$-round \pLDP~algorithm $\alg$, it holds that
\begin{align*}
    \sup_{M\in\cM}\EE\sups{M,\alg}\brac{\regdm(T)}\geq \frac{T}{4}\paren{ \rdecl_{\ueps(T)}(\cM)-C\ueps(T) }-1,
\end{align*}
where $\ueps(T)=\frac{c}{\sqrt{\alpha^2 T}}$, and $c, C$ are universal constants.
\end{theorem}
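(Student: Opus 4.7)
}
The plan is to reduce to the stochastic DMSO regret lower bound of \cref{thm:cDMSO-reg-lower} by recasting any \pLDP~algorithm as a learner on the \emph{induced} model class $\tilde{\cM} \defeq \sset{\tilde{M}: M \in \cM}$, where $\tilde{M}(\pi, \pr) \defeq \pr \circ M(\pi)$, with joint decision space $\bPi = \Pi \times \Pcp$. Because each observation $o_t$ produced in the \pLDP~protocol given $(\pi_t, \pr_t)$ is distributed exactly as $\tilde{M^\star}(\pi_t, \pr_t)$, this identification is exact, and $\sup_{\tilde{M}\in \tilde{\cM}}\EE\sups{\tilde{M},\alg}[\regdm(T)] = \sup_{M\in\cM}\EE\sups{M,\alg}[\regdm(T)]$. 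Since $V$ is reward-based, $V^{\tilde{M}}(\pi) \defeq V^M(\pi)$ is a well-defined linear functional of $M$, confirming the linearity half of \cref{asmp:lip-rew}. Writing $\MPowiid$ for the stochastic constraint class associated with $\tilde{\cM}$, stationary environments constrained by $\MPowiid$ correspond bijectively to fixed models $M\in\cM$.

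\textbf{Observability and application of \cref{thm:cDMSO-reg-lower}.} For each $\pi \in \Pi$, I will choose the measurement $\pr_\pi \in \Pcp$ to be the binary channel of \cref{example:binary-pr} associated with $\ell_\pi(z) \defeq R(z, \pi) \in [0, 1]$. For this particular channel, the distribution $q_{\pr_\pi}$ guaranteed by \cref{prop:pLDP} can be taken to be the point mass at $\ell_\pi$, so the lower-bound direction of \cref{prop:pLDP} gives
\begin{align*}
    \abs{V^M(\pi) - V^{\oM}(\pi)} \;=\; D_{\ell_\pi}\bigl(M(\pi), \oM(\pi)\bigr) \;\leq\; \frac{C_0}{\alpha}\, \dH\bigl(\tilde{M}(\pi, \pr_\pi), \tilde{\oM}(\pi, \pr_\pi)\bigr)
\end{align*}
uniformly over $M\in\cM,\oM\in\coM$, verifying the second half of \cref{asmp:lip-rew} with $\Lipr = O(1/\alpha)$. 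The reward-based refinement of \cref{thm:cDMSO-reg-lower} (which yields the same $T/4$ constant as the reward-based PAC refinement in \cref{thm:pdec-lin-lower}(2)), applied with $\Vmax \leq 1$ and $\Lipr = O(1/\alpha)$, then produces
\begin{align*}
    \sup_{M\in\cM} \EE\sups{M,\alg}\brac{\regdm(T)} \;\geq\; \frac{T}{4}\paren{ \rdecc_{\ueps_0(T)}(\tilde{\cM}) - \frac{C_1}{\alpha\sqrt{T}} } - 1,\qquad \ueps_0(T) = \Theta\paren{\tfrac{1}{\sqrt{T}}}.
\end{align*}

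\textbf{DEC comparison and conclusion.} To turn $\rdecc_{\ueps_0(T)}(\tilde{\cM})$ into $\rdecl_{\ueps(T)}(\cM)$, fix any $\oM \in \coM$ and let $\tilde{\oM}(\pi, \pr) \defeq \pr \circ \oM(\pi)$; this lies in $\co(\tilde{\cM})$ because channel composition is affine in the model. Given any $p \in \Delta(\Pi \times \Pcp)$ feasible for $\rdecc_{\ueps_0(T)}(\tilde{\cM}\cup\sset{\tilde{\oM}}, \tilde{\oM})$, I will construct $p' \in \DPL$ by sampling $(\pi, \pr) \sim p$ and then $\ell \sim q_\pr$ from the distribution in \cref{prop:pLDP}. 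Then $p'$ has the same $\Pi$-marginal as $p$ (so the same regret objective) and, by the upper-bound direction of \cref{prop:pLDP}, satisfies $\EE_{(\pi,\ell)\sim p'} \Dl^2(M(\pi), \oM(\pi)) \leq C_2\, \ueps_0(T)^2/\alpha^2$, so it is feasible for $\rdecl_{\ueps(T)}(\cMp, \oM)$ with $\ueps(T) = \Theta(1/\sqrt{\alpha^2 T})$. Taking the supremum over $\oM$ yields $\rdecc_{\ueps_0(T)}(\tilde{\cM}) \geq \rdecl_{\ueps(T)}(\cM)$, and plugging back into the displayed lower bound while absorbing $C_1/(\alpha\sqrt{T}) = O(\ueps(T))$ into the universal constant $C$ gives the theorem.

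\textbf{Main obstacle.} The crux is obtaining the Lipschitz constant $\Lipr = O(1/\alpha)$ in the observability bound. A naive argument---for instance bounding $|V^M - V^{\oM}|$ directly by $\dH(M(\pi),\oM(\pi))$ and then applying data-processing to relate this to $\dH(\tilde{M}(\pi,\pr),\tilde{\oM}(\pi,\pr))$---would only give $\Lipr = O(1)$, which plugs into \cref{thm:cDMSO-reg-lower} as an additive error of order $1/\sqrt{T}$ rather than the required $1/\sqrt{\alpha^2 T}$, and the final rate would be off by a factor of $1/\alpha$. The binary-channel choice $\pr_\pi$ tied to the reward $R(\cdot,\pi)$ is precisely what transfers the LDP factor $\alpha$ from the observations into the Lipschitz constant and makes the two sides of the reduction line up.
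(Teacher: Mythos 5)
Your proposal matches the paper's proof essentially step for step: verify \cref{asmp:lip-rew} with the reward-based binary channel $\pr_\pi$ to get $\Lipr=\bigO{1/\alpha}$, invoke \cref{thm:cDMSO-reg-lower} for the induced class $\tcM$, and convert $\rdecc_{\ueps_0(T)}(\tcM)$ into $\rdecl_{c\ueps_0(T)/\alpha}(\cM)$ via the strong data-processing inequality — the paper packages this last conversion as \cref{lem:decg-to-decl} and otherwise argues exactly as you do (\cref{appdx:inst-LDP-pac-lower}). The one nit is that in your DEC-comparison paragraph the implication should run from the $\rdecl$ constraint to the $\rdecc$ constraint: the right-hand inequality of \eqref{eqn:DH-LDP} shows that any $M$ with $\EE_{(\pi,\lf)\sim p'}\Dl^2(M(\pi),\oM(\pi))\leq (c\ueps_0(T)/\alpha)^2$ also satisfies $\EE_{\bpi\sim p}\DH{\tM(\bpi),\toM(\bpi)}\leq \ueps_0(T)^2$, so the constrained supremum defining $\rdecc$ at $p$ dominates the one defining $\rdecl$ at $p'$, which is the containment direction needed for a lower bound.
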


\begin{theorem}[\RDEC~upper bounds]\label{thm:rdec-lin-upper}
Let $T\geq 1, \delta\in(0,1)$. Suppose that the model class $\cM$ is compact, the value function $V$ is reward-based, and the \rDEC~$\rdecl_\eps(\cM)$ is of moderate decay as a function of $\eps$. Then, a suitable instantiation of \LDPexo~(as detailed in \cref{appdx:ExO-model-based}) achieves \whp~that
\begin{align}\label{eqn:rdec-lin-upper}
    \frac1T \regdm(T)\leq \OsqrtT \cdot \brac{ \rdecl_{\oeps(T)}(\cM) + \oeps(T) },
\end{align}
where $\oeps(T)=\sqrt{\frac{\log(|\cM|/\delta)}{\alpha^2 T}}$. 

Further, suppose that the \rDEC~$\rdecl_\eps(\coM)$ is of moderate decay. Then an alternative instantiation of \LDPexo~(as detailed in \cref{appdx:ExO-policy-based}) achieves \whp
\begin{align}\label{eqn:rdec-lin-upper-co}
    \frac1T \regdm(T)\leq \Delta+\OsqrtT \cdot \brac{ \rdecl_{\oeps'(T)}(\coM) + \oeps'(T) },
\end{align}
where $\oeps'(T)=\sqrt{\frac{\log\DC{\cM}+\log(1/\delta)}{\alpha^2 T}}$.
\end{theorem}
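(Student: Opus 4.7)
The plan is to deduce both bounds from the general hybrid-DMSO regret upper bound (Theorem \ref{thm:cDMSO-reg-upper}), together with its refinement in Proposition \ref{prop:regret-via-cDMSO}, by embedding \pDMSO~into \cDMSO~and translating between Hellinger and $\ell$-divergence DECs via the strong data-processing inequality (Proposition \ref{prop:pLDP}). The embedding takes $\Phi = \Pcp$, extended models $\tM(\pi,\pr) = \pr\circ M(\pi)$, and singleton constraints $\MPowdp = \{\{\tM\}: M\in\cM\}$; then $\cMPow = \tcM \defeq \{\tM : M\in\cM\}$, $|\MPowdp| = |\cM|$, and $\rdecg_\eps(\MPowdp) = \rdecc_\eps(\tcM)$. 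To verify Assumption \ref{asmp:lip-rew} with $\Lipr = O(1/\alpha)$, I would take $\phi_\pi = \bpr \in \Pcbin$ with $\lf(z) = R(z,\pi)$; then $|V^M(\pi) - V^{\oM}(\pi)| = \Dl(M(\pi),\oM(\pi))$, and Proposition \ref{prop:pLDP} gives $\Dl(M(\pi),\oM(\pi)) \lesssim \alpha^{-1}\, \dH(\tM(\pi,\phi_\pi), \toM(\pi,\phi_\pi))$, as required.

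For the first bound, applying Theorem \ref{thm:cDMSO-reg-upper} to $\MPow = \MPowdp$ gives \whp
\begin{align*}
\tfrac{1}{T}\regdm(T) \leq O(\sqrt{\log T})\, \brac{ \rdecc_{\eta(T)}(\tcM) + \Lipr\, \eta(T) }, \qquad \eta(T) = \sqrt{\tfrac{\log(|\cM|/\delta)}{T}},
\end{align*}
where the $\log \DC{\tcM}$ term in the generic bound can be removed by using \LDPetod-style model-index aggregation rather than a decision cover (cf. the proof of Theorem \ref{thm:pdec-lin-upper}). Next, applying Proposition \ref{prop:pLDP} to the measurement marginal $q$ realizing the infimum in $\rdecc_\eta(\tcM,\toM)$ produces a distribution $q' \in \DPL$ with $\EE_{(\pi,\pr)\sim q}\, \dH^2(\tM(\pi,\pr),\toM(\pi,\pr)) \asymp \alpha^2\, \EE_{(\pi,\lf)\sim q'}\, \Dl^2(M(\pi),\oM(\pi))$; this yields $\rdecc_\eta(\tcM) \leq \rdecl_{c\eta/\alpha}(\cM)$ for an absolute constant $c$. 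Setting $\oeps(T) = \eta(T)/\alpha$, using moderate decay of $\rdecl_\eps(\cM)$ to absorb $c$, and noting $\Lipr\, \eta(T) \lesssim \oeps(T)$ gives \eqref{eqn:rdec-lin-upper}.

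For the second bound, I would mimic Proposition \ref{prop:regret-via-cDMSO} in the LDP setting. Define the relaxed constraints $\cM_\pi \defeq \{M\in\cM : V^M(\pim)-V^M(\pi) \leq \Delta\}$ and $\MPow \defeq \{\cM_\pi : \pi\in\Pi\}$, so $\cMPow = \bigcup_{\pi\in\Pi} \co(\cM_\pi) \subseteq \coM$, and every stochastic environment with model $M^\star \in \cM$ lies in $\cM_{\pim[M^\star]}$ while incurring at most $\Delta$ additional regret per round. Instantiating the policy-based variant of \LDPexo~from Appendix \ref{appdx:ExO-policy-based} — the LDP analogue of the refined \ExOp~underlying Proposition \ref{prop:regret-via-cDMSO}, which replaces $\log|\Pi|$ by $\log \DC{\cM}$ — and applying the DPI reduction from the first part to convert $\rdecc_{\cdot}(\cMPow) \leq \rdecc_{\cdot}(\coM)$ into $\rdecl_{\cdot/\alpha}(\coM)$, yields \eqref{eqn:rdec-lin-upper-co}.

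The main obstacle is the second part: carrying out the refined covering-based \ExOp~analysis against the \emph{LDP} measurement class $\Pcp$ rather than the identity measurement. The near-optimal decision cover used to replace $\log|\Pi|$ with $\log\DC{\cM}$ must be coupled to binary private channels built from rewards of cover decisions, in such a way that only the claimed $\log\DC{\cM}$ and $\log(1/\delta)$ terms survive in the estimation overhead, and so that the $\Lipr = O(1/\alpha)$ factor manifests as the $1/\alpha$-rescaling of $\oeps'(T)$ rather than multiplying the DEC term. Once this refined potential-function bookkeeping is established, the Hellinger-to-$\Dl$ conversion already developed for the first bound completes the argument.
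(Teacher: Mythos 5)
Your overall architecture is correct and matches the paper's: both bounds are obtained by instantiating \ExOp~on \pDMSO~with, respectively, a model-indexed and a policy-indexed ($\Delta$-relaxed) structure, and then translating Hellinger-based DECs of the extended class $\tcM$ into $\ell$-divergence DECs of $\cM$ via the strong data-processing inequality (\cref{prop:pLDP}). Two places where your write-up stops short of the paper's execution are worth flagging. First, for \eqref{eqn:rdec-lin-upper}, the removal of the $\log\DC{\cMPow}$ term is not done by swapping in an \LDPetod-style estimation oracle; it is done \emph{inside} the \ExOp~framework by choosing the model-based information set $\Psmod=\cM$ (each $\psi$ carrying $\cM_\psi=\set{\psi}$ and $\pi_\psi=\pim[\psi]$) with uniform prior, for which $\IDC[0]{\cM,\Psmod}=|\cM|$; this is \cref{thm:ExO-LDP} specialized as in \cref{appdx:ExO-model-based}. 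The generic \cref{thm:cDMSO-reg-upper} you invoke uses the coarser information set $\MPow\times\Pi$ and genuinely produces the extra $\log\DC{\cMPow}$ term, so your parenthetical ``can be removed'' requires this change of information set to be made explicit. Relatedly, the paper's intermediate object is the \emph{offset} private DEC $\rdecol_{c\alpha^2\gamma}(\cM)$, converted to $\rdecl_{\oeps(T)}(\cM)$ by \cref{prop:offset-to-c} under moderate decay; your route through the constrained Hellinger DEC $\rdecc_\eta(\tcM)$ and \cref{lem:decg-to-decl} is equivalent up to constants, and the moderate-decay hypothesis transfers between $\rdecl_\eps(\cM)$ and $\rdecc_\eps(\tcM)$ by that same lemma.

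Second, the ``main obstacle'' you identify for \eqref{eqn:rdec-lin-upper-co} dissolves in the paper's formulation: the \ExOp~exploration distribution already ranges over $\Delta(\Pi\times\Pcp)$, the decision cover enters only through the prior $\qd\ind{1}$ on the policy-based information set $\Pspol=\Pi$ (giving $\IDC{\cM,\Pspol}=\DC{\cM}$ and the additive $\Delta$), and the coupling to binary channels happens entirely inside the DEC conversion, where the $(\ea-1)^2\asymp\alpha^2$ factor is absorbed into the offset parameter $\gamma\mapsto c\alpha^2\gamma$ and hence, after \cref{prop:offset-to-c}, into the $1/\alpha$ rescaling of $\oeps'(T)$ rather than multiplying the DEC term. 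No new potential-function bookkeeping beyond \cref{thm:ExO-upper} is required; once you invoke \cref{thm:ExO-LDP} with $\Psi=\Pspol$ and note $\cM_\Pi\subseteq\coM$, the second bound follows exactly as in the non-private \cref{prop:regret-via-cDMSO}.
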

We note that under reward-based value function, the algorithms of \citet{foster2023tight,glasgow2023tight} may also be adapted to achieve a regret bound similar to \eqref{eqn:rdec-lin-upper}, under a weaker regularity assumption on the \rDEC~$\rdecl_\eps(\cM)$. We state the upper bound \eqref{eqn:rdec-lin-upper} with \LDPexo~as it is more flexible.

\paragraph{Applications}
As a main application of the \rDEC~theory, in \cref{ssec:CBs}, we present the DEC theory for LDP learning in contextual bandits. We do not present the implications for bandits (which our framework subsumes easily) because it is already encompassed by non-private DEC framework for bandits~\citep{foster2021statistical,foster2023tight,chen2024beyond}: it is well-known that LDP bandits learning can be directly reduced to the standard bandits learning by adding additive noises (Laplace noise or Gaussian noise) to the random rewards.

\subsection{Application: Contextual bandits}\label{ssec:CBs}

\newcommand{\cMFcb}{\cM_{\cF,\mathsf{CB}}}
\newcommand{\xt}{x_t}

In this section, we focus on no-regret learning in contextual bandits, where the contexts can be adversarially chosen.
Specifically, we introduce the (private) contextual DMSO framework: For each $t=1,\cdots,T$:
\newcommand{\tz}{\Tilde{z}}
\begin{itemize}
  \setlength{\parskip}{2pt}
    \item The learner selects a decision $\pit:\cX\to\cA$ and a private channel $\prt\in\Pc$.
    \item The environment selects context $\xt\in\cX$ and receives 
    $(\pit,\prt)$.
    \item The environment selects the action $\act=\pit(\xt)$ according to $\pit$, receives the reward $\rt\sim \Rad{ \fs(\xt,\act) }$,\footnote{For simplicity, we assume the reward is a binary random variable without loss of any generality.} 
    generates a noisy observation $\ot\in\cO$ via $\ot\sim \prt(\cdot|\xt,\act,\rt)$ and reveals it to the learner. 
\end{itemize}
Here, we go beyond the \pDMSO~in that we do not assume the context of each user is stochastic; Instead, we allow $\xt$ to depend on the history prior to step $t$, i.e., the context $\xt$ can be chosen in an adversarial manner.
The underlying reward function $\fs:\cX\times\cA\to [-1,1]$ encodes the mean reward value of the underlying environment, and we assume that the learner has access to a known reward function class $\cF\subseteq (\cX\times\cA\to [-1,1])$ containing $\fs$. The decision space $\Pi=(\cX\to\cA)$ consists of all maps (policies) from the context space to the action space. 

In contextual bandits, the regret of the learner is measured by
\begin{align*}
    \regdm(T)=\sum_{t=1}^T \fs(\xt, \pis(\xt))-\EE_{\pit\sim \qt}\fs(\xt,\pit(\xt)),
\end{align*}
where $\pis$ is an optimal policy under the reward function $\fs$, i.e., $\pis(x)=\argmax_{a\in\cA} \fs(x,a)$ for $x\in\cX$, and the expectation is with respect to $\pit\sim \qt$, the randomness of the choice of $\pit$ at the $t$-th step.

\paragraph{Formulation in \cDMSO}
We first briefly discuss how to frame this problem within \cDMSO. For $\nu\in\DX$ and $f\in\cF$, we define the contextual bandit model $M_{\nu,f}: \Pi\to \DO$ as
\begin{align*}
    (x,a,r)\sim M_{\nu,f}(\pi): \qquad x\sim \nu, a=\pi(x), r\sim \Rad{f(x,a)}.
\end{align*}
We then consider the model class $\cMFcb=\set{ M_{\nu,f}: \nu\in\DX, f\in\cF }$, which is the model class of contextual bandits with stochastic context and mean reward function in $\cF$. For each $f\in\cF$, $f$ specifies a constraint $\cP_f$ as
\begin{align}\label{def:MPow-cxt}
    \cP_f\defeq \set{ \tM_{\nu,f}: \nu\in\DX },
\end{align}
i.e., $\cP_f$ consists of all private (that is,  $\tM_{\nu,f}$ includes the private channel choice) contextual bandit instances with mean reward function $f$, and we let $\MPowcxt\defeq \set{\cP_f: f\in\cF}$. Then, the contextual bandits problem with function class $\cF$ can be framed within \cDMSO~with constraint class $\MPowcxt$. 

\paragraph{Regret guarantees}
We show that \LDPexo~achieves a regret bound scaling with the \rDEC~of $\cMFcb$. Similar to \cref{ssec:regression}, we assume that $\cX$ and $\cA$ are both finite throughout this section, mainly to avoid measure theoretic issues (our results do not have any dependence on $|\cX|$).

\begin{proposition}\label{thm:CB-adv}
Let $T\geq 1, \delta\in(0,1)$. Suppose that $\cX$ and $\cA$ are finite, and the \rDEC~$\rdecl_\eps(\cMFcb)$ is of moderate decay as a function of $\eps$. Then, \LDPexo~(instantiated as in \cref{appdx:ExO-CB}) achieves \whp:
\begin{align*}
    \frac1T\regdm(T)\leq \OsqrtT \cdot \brac{ \rdecl_{\oeps(T)}(\cMFcb) + \oeps(T) },
\end{align*}
where $\oeps(T)=\inf_{\Delta\geq 0}\paren{ \Delta+\sqrt{\frac{\log \Ncov[\infty]{\cF}+\log(1/\delta)}{\alpha^2 T}} }$, and $\Ncov[\infty]{\cF}$ is the $\Delta$-covering number of $\cF$ under $L_{\infty}$-norm (cf. \cref{def:covering-cF}). %
\end{proposition}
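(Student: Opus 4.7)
The plan is to realize private contextual bandits as a hybrid DMSO problem with constraint class $\MPowcxt=\set{\cP_f:f\in\cF}$ from \eqref{def:MPow-cxt}, run \LDPexo~on a finite $L_\infty$-cover of $\cF$, and invoke the regret guarantees of \cref{thm:cDMSO-reg-upper} together with the strong data-processing inequality (\cref{prop:pLDP}) to convert the Hellinger constraint into an $\ell$-divergence constraint. A first useful reduction is that each $\cP_f=\set{\tM_{\nu,f}:\nu\in\DX}$ is already convex in $\nu$, so $\cM_{\MPowcxt}=\bigcup_{f\in\cF}\co(\cP_f)=\cMFcb$; hence the hybrid regret DEC $\rdecg_\eps(\MPowcxt)$ equals the regret DEC of $\cMFcb$, and its LDP analogue equals $\rdecl_\eps(\cMFcb)$ by the same $\ell$-divergence reformulation used in \cref{sec:LDPDEC}.

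Second, to avoid a spurious $\log|\cF|$ factor, I would fix $\Delta\geq 0$ and pass to an $L_\infty$-cover $\cF_\Delta\subseteq \cF$ of minimal cardinality $\Ncov[\infty]{\cF}$. Running \LDPexo~on the induced constraint class $\MPow_\Delta\defeq \set{\cP_f:f\in\cF_\Delta}$ then pays only $\log\Ncov[\infty]{\cF}$ in the estimation-complexity term, at the cost of approximating the true reward function $\fs$ by its closest cover element $f'$. Because $\linf{\fs-f'}\leq \Delta$, the per-round regret against $\fs$ differs from the per-round regret against $f'$ by at most $2\Delta$ uniformly over contexts and actions, which contributes an additive $O(\Delta)$ term to the time-averaged regret. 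Moreover, $\cM_{\MPow_\Delta}\subseteq \cMFcb$, so the \rDEC~of the restricted class is bounded by $\rdecl_\eps(\cMFcb)$ and no deterioration of the DEC occurs.

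Third, applying the LDP version of \cref{thm:cDMSO-reg-upper} (with the instantiation of \LDPexo~spelled out in \cref{appdx:ExO-CB}), the continuity assumption \cref{asmp:lip-rew} is verified for reward-based value functions after privatization with Lipschitz constant $O(1/\alpha)$, so the upper bound becomes, \whp,
\begin{align*}
    \tfrac1T\regdm(T)\leq \OsqrtT\cdot\brac{\rdecl_{\oeps_\Delta(T)}(\cMFcb)+\oeps_\Delta(T)}+O(\Delta),
\end{align*}
with $\oeps_\Delta(T)\asymp \sqrt{(\log\Ncov[\infty]{\cF}+\log(1/\delta))/(\alpha^2 T)}$. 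Taking the infimum over $\Delta\geq 0$ and using the moderate-decay hypothesis on $\rdecl_\eps(\cMFcb)$ to absorb the additive $\Delta$ inside the DEC term yields the claimed bound.

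The main obstacle I expect is the accounting around adversarial contexts: I need to check that an $L_\infty$-cover of $\cF$ (rather than a cover under some fixed context distribution) is uniformly sufficient across every context sequence the adversary may produce, and---relatedly---that the specialized instantiation of \LDPexo~in \cref{appdx:ExO-CB} avoids the generic $\log\DC{\cM_\MPow}$ term from \cref{thm:cDMSO-reg-upper} (which would be large when $|\Pi|=|\cA|^{|\cX|}$) by performing estimation at the level of $f$ rather than at the level of policies, in the spirit of \cref{prop:regret-via-cDMSO}.
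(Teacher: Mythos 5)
Your high-level strategy matches the paper's: cover $\cF$ in $L_\infty$ so the estimation term is $\log \Ncov[\infty]{\cF}$ rather than $\log|\cF|$ or $\log|\Pi|$, use the strong data-processing inequality to pass from Hellinger to $\lf$-divergences with $\Lipr=O(1/\alpha)$, apply the \ExOp~machinery at the level of reward functions rather than policies, and finally tune $\Delta$ and $\gamma$ via \cref{prop:offset-to-c}. However, there is a genuine gap in how you handle the cover. You propose to run \LDPexo~on the constraint class $\MPow_\Delta=\set{\cP_f:f\in\cF_\Delta}$ consisting only of the cover elements, and to transfer the guarantee to the true $\fs$ by noting that the per-round regret against $\fs$ and against its nearest cover element $f'$ differ by at most $2\Delta$. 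But the regret guarantee (\cref{thm:ExO-upper}, via \cref{lem:ExO}) is proved only for environments whose models $M^t$ actually lie in some constraint set of the class the algorithm is instantiated with. The true environment plays $\tM_{\nu_t,\fs}$, which lies in no $\cP_{f'}$ with $f'\in\cF_\Delta$, so constraint realizability fails and the exponential-weights potential argument does not apply as stated. The $2\Delta$ comparison only fixes the comparator in the regret, not the mismatch in the observation distributions; and a trajectory-level swap of $\fs$ for $f'$ costs $O(T\Delta)$ in total variation, which destroys the high-probability statement unless $\Delta\ll 1/T$ — far smaller than the $\Delta$ the claimed bound optimizes over.

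The paper's proof (in \cref{appdx:ExO-CB}) closes exactly this gap by defining the (Type 1) information sets as $L_\infty$ $\Delta$-balls around cover elements, $\cMps=\set{M_{\nu,f}:\sup_{x,a}|f(x,a)-\psi(x,a)|\leq\Delta}$ with $\pi_\psi(x)=\argmax_a\psi(x,a)$, so that the true $\fs$ is genuinely contained in some $\cM_{\psi}$ and $\log\IDC[2\Delta]{\MPowcxt,\Pscxt}\leq\log \Ncov[\infty]{\cF}$. The price of the enlargement is that the relevant DEC is now that of $\cM_\Psi=\bigcup_\psi\co(\cMps)$, which is controlled by an offset-DEC comparison analogous to \cref{lem:realizable-dec-o}, namely $\rdecol_{\gamma}(\cMPs)\leq\rdecol_{\gamma/2}(\cMFcb)+\gamma\Delta^2+2\Delta$ — contrary to your claim that ``no deterioration of the DEC occurs.'' This comparison lemma is the missing ingredient in your argument; with it, \cref{thm:ExO-LDP} and \cref{prop:offset-to-c} give the stated bound. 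Your remaining observations (convexity of each $\cP_f$ in $\nu$, hence $\cM_{\MPowcxt}=\cMFcb$; the need to avoid the $\log\DC{\cdot}$ term by estimating at the level of $f$) are correct and consistent with the paper.
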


Therefore, up to a gap of the log-covering number of $\cF$, the complexity of no-regret learning is characterized by the \rDEC~of $\cMFcb$. It is worth noting that our upper bound scales with the DEC of the \emph{stochastic} contextual bandits, while it applies to any environment that generates contexts adversarially. Therefore, within the DEC framework, contextual decision making with (potentially) adversarial contexts is no more difficult than stochastic contexts. 

This result is somewhat surprising, because with the LDP constraint, the learner can never directly observe the contexts. Indeed, this makes it challenging to estimate the ground truth mean reward function $\fs$, and previous works typically had to adopt problem-specific estimation methods. In contrast, \cref{thm:CB-adv} allows us to derive regret bounds by directly studying the DEC.

In the following, we apply our frameworks to derive near-optimal regret guarantees for linear contextual bandits and Lipschitz contextual bandits.

\subsubsection{Linear contextual bandits}\label{ssec:CB-lin}

\newcommand{\CBn}[1]{\mathsf{#1}\!\operatorname{-}\!\mathsf{CB}}
\newcommand{\cMlincb}{\cM_{\CBn{Lin}}}
\newcommand{\Bd}{\mathbf{B}^d(1)}
In the linear contextual bandits setting, we are given a bounded feature map $\phi: \cX\times\cA \to \Bd$. The 
linear value function class $\cFlin$ is given by
\begin{align*}
    \cFlin=\set{ f_{\theta}:  f_{\theta}(x,a)=\<\theta,\phi(x,a)\>}_{\theta\in\Bd},
\end{align*}
Let $\cMlincb$ be the corresponding contextual bandits model class.
In the following, we bound the \rDEC~of $\cMlincb$ and provide a near-optimal guarantee for learning linear contextual bandits. Proof is presented in \cref{appdx:proof-linear-CB-upper}.

\begin{theorem}[Near-optimal regret for linear contextual bandits]\label{thm:LCB-dec-upper}
For the model class $\cMlincb$, it holds that
\begin{align*}
    \rdecl_\eps(\cMlincb)\leqsim d\eps.
\end{align*}
Therefore, \LDPexo~achieves the following regret bound in linear contextual bandits \whp: 
\begin{align*}
    \regdm(T)\leq \bigO{ \frac{\sqrt{d^3 T\log(T/\delta)}}{\alpha} }.
\end{align*}
\end{theorem}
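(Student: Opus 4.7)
The plan is to establish the DEC bound $\rdecl_\eps(\cMlincb) \leq O(d\eps)$ by a constructive choice of $p \in \DPL$, and then derive the regret bound via \cref{thm:CB-adv}. The overall strategy mirrors the classical G-optimal-design analysis for stochastic linear (contextual) bandits, with one extra factor of $\sqrt{d}$ arising because the $\lf$-divergence gives only $L_1$-type control on reward differences (whereas Hellinger would give $L_2$).

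\textbf{Setup and choice of $\lf$.} Fix a reference $\oM \in \co(\cMlincb)$ with context marginal $\onu$ and mean reward $\bar f(x,a) = \EE_{\oM}[r\mid x,a]$, and let $\bar\theta\in\Bd$ be the best linear approximation of $\bar f$ in the $L^2(\onu)$ sense (if $\oM\in\cMlincb$ itself, $\bar f = \la\bar\theta,\phi(\cdot)\ra$ exactly). For any competitor $M_{\nu,f}\in\cMlincb$ with $f=\la\theta,\phi\ra$, I use $\lf$-functions of the Bernoulli form $\lf_h(x,a,r) = (1+h(x,a)r)/2$ with $h:\cX\times\cA\to[-1,1]$, for which a direct calculation gives
\begin{align*}
\Dl(M_{\nu,f}(\pi),\oM(\pi)) = \tfrac{1}{2}\bigl|\EE_{x\sim\nu}[h(x,\pi(x))f(x,\pi(x))] - \EE_{x\sim\onu}[h(x,\pi(x))\bar f(x,\pi(x))]\bigr|.
\end{align*}
When $\nu=\onu$ (the essentially worst case), this reduces to $\tfrac{1}{2}|\EE_{\onu}h(x,\pi(x))\la\theta-\bar\theta,\phi(x,\pi(x))\ra|$, a linear functional of $\theta-\bar\theta$.

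\textbf{Construction of $p$ and main bound.} I take $p$ to be a half-half mixture of $(\bar\pi,\lf\equiv\tfrac{1}{2})$ — where $\bar\pi(x)=\argmax_a\bar f(x,a)$ is the reference-greedy policy, contributing zero divergence — and exploratory pairs $(\pi_h,\lf_h)$ where $h$ is drawn from a G-optimal (John ellipsoid) design $\rho$ on the feature-difference set $\cS=\{\phi(x,a)-\phi(x,\bar\pi(x)):x\in\supp(\onu),a\in\cA\}$. This design satisfies $\max_{v\in\cS}\|v\|_{\Sigma_\rho^{-1}}^2\leq d$ for $\Sigma_\rho=\EE_\rho vv^\top$. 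The constraint $\EE_{(\pi,\lf)\sim p}\Dl^2\leq\eps^2$ then implies $\EE_\rho|\la\theta-\bar\theta,v\ra|\lesssim\eps$, and applying $\EE X^2\leq \|X\|_\infty\EE|X|$ together with the bound $\|X\|_\infty\leq 2\|\theta-\bar\theta\|_2\leq 2\sqrt{d}\|\theta-\bar\theta\|_{\Sigma_\rho}$ (from the G-optimal inequality) gives $\|\theta-\bar\theta\|_{\Sigma_\rho}\lesssim\sqrt{d}\eps$. The regret of $\bar\pi$ under $M$ decomposes as $\EE_{x\sim\nu}\la\theta-\bar\theta,\phi(x,\pim(x))-\phi(x,\bar\pi(x))\ra$ (since the $\bar\theta$-part is $\leq 0$ by definition of $\bar\pi$), which by Cauchy–Schwarz is at most $\|\theta-\bar\theta\|_{\Sigma_\rho}\cdot\sqrt{d}\lesssim d\eps$.

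\textbf{Main obstacles and regret conversion.} The hardest steps will be (a) the context-distribution mismatch $\nu\neq\onu$, which makes $\Dl$ a difference of expectations under two different measures and obstructs the direct linear-algebraic argument, and (b) the possible nonlinearity of $\bar f$ when $\oM$ is a genuine mixture, requiring the best-linear-projection $\bar\theta$ to be handled carefully. For (a), I would allocate a small fraction of $p$'s mass to $\lf$'s depending only on $x$, constraining $\nu$ to agree with $\onu$ on all directions that the policy class can probe; for (b), the residual $\bar f-\la\bar\theta,\phi\ra$ contributes only an $O(\eps)$ additive error that is absorbed by the same G-optimal analysis. Finally, with $\rdecl_\eps(\cMlincb)\lesssim d\eps$ in hand, \cref{thm:CB-adv} yields $\regdm(T)/T\leq\OsqrtT\cdot[O(d\oeps(T))+\oeps(T)]$, and the standard $O(d\log(1/\Delta))$-covering of $\cFlin$ at $\Delta=1/T$ gives $\oeps(T)=\tilde{O}(\sqrt{d/(\alpha^2 T)})$, so $\regdm(T)\leq \tilde{O}(\sqrt{d^3 T}/\alpha)$, matching the claimed bound.
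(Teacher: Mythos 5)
Your proposed route has a decisive gap in the DEC bound: the regret DEC \eqref{eqn:def-r-dec-lin} uses a \emph{single} distribution $p\in\DPL$ for both the divergence constraint and the suboptimality term $\EE_{\pi\sim p}[\Vmm-\Vm(\pi)]$, so every exploratory policy in the support of $p$ is charged its own regret. Your construction puts half of $p$'s mass on policies $\pi_h$ realizing a G-optimal design over the feature-difference set $\cS$; to probe a direction $\phi(x,a)-\phi(x,\bar\pi(x))$ the policy must actually play $a$, and a G-optimal design over $\cS$ generically forces actions whose suboptimality under the competitor $M$ is $\Omega(1)$. Hence your $p$ certifies only $\rdecl_\eps \lesssim \tfrac12\cdot O(d\eps) + \tfrac12\cdot\Omega(1)$, not $O(d\eps)$. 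Reweighting (mass $\gamma$ on exploration) does not rescue this: the divergence constraint then yields $\|\theta-\bar\theta\|_{\Sigma_\rho}\lesssim \sqrt{d}\,\eps/\sqrt{\gamma}$ and the regret becomes $O(\gamma + d\eps/\sqrt{\gamma})$, which optimizes to $(d\eps)^{2/3}$ — the classic forced-exploration rate, strictly worse than $d\eps$ for small $\eps$. What you are implicitly bounding is a PAC-style DEC where exploration and exploitation distributions decouple; the regret DEC does not permit that decoupling.

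The paper's proof (\cref{appdx:proof-linear-CB-upper}) resolves exactly this by an optimism construction rather than forced exploration: it builds, for each context, an entropy-regularized greedy distribution with respect to an optimistic reward $\hfp(x,a)=\la\thp,\phi(x,a)\ra+2\min\{\lambda\|\phi(x,a)\|_{\sigp^{-1}},1\}$, so that the played policies are simultaneously exploratory and near-optimal, and the total bonus is controlled by an elliptic-potential bound $\EE_{\pi\sim P}\EE_{x\sim\onu}\|\phi(x,\pi(x))\|_{\sigp^{-1}}\leq d$ (\cref{lem:val-optimism}). Because the design matrix $\sigp=\EE_{\pi\sim P}U_\pi^{-2}$ depends on the playing distribution $P$ while $P$ depends on $\sigp$, the paper needs a Kakutani fixed-point argument (\cref{thm:K-fixed-point}) plus the normalization matrices of \cref{lem:U} — there is no off-the-shelf G-optimal design to fall back on. Your obstacle (a), the context-distribution mismatch $\nu\neq\onu$, is also handled there concretely by packaging the matrix-valued functional $\vvec(\gpi(x)\phi(x,\pi(x))^\top)$ into the $\lf$'s via \cref{lem:dist-l}, so the Frobenius discrepancy $\|\EE_{\muM}\gpi(x)\phi(x,\pi(x))^\top-\EE_{\onu}\gpi(x)\phi(x,\pi(x))^\top\|_F$ is itself bounded by the divergence budget; your proposal to spend ``a small fraction of mass on $\lf$'s depending only on $x$'' is too vague to substitute for this and again pays a $1/\sqrt{\text{mass}}$ penalty. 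Your final conversion from the DEC bound to the $\tilde O(\sqrt{d^3T}/\alpha)$ regret via \cref{thm:CB-adv} and the $O(d\log(1/\Delta))$ covering of $\cFlin$ is correct, but it is conditional on a DEC bound you have not established.
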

The above regret bound of \LDPexo~is only a $\tO(\sqrt{d})$ factor larger than the regret lower bound of $\Om{\sqrt{d^2 T}/\alpha}$ for linear contextual bandits (detailed in \cref{appdx:proof-lin-cb-lower}).

Our upper bound nearly settles the optimal regret for linear contextual bandits with LDP constraints. Previous works either suffer a $T^{3/4}$ rate~\citep{zheng2020locally}, a $\log^d(T)\cdot \sqrt{T}$ rate~\citep{li2024optimal}, or require a strong assumption that the covariance matrix under \emph{any} linear policy is well-conditioned~\citep{han2021generalized}. The benefit of our DEC framework is that it provides a systematic approach to obtain regret bounds, which reduces the problem to studying the \rDEC. We expect our techniques can be applied to a broader setting, e.g., RL with linear function approximation. %

\subsubsection{Lipschitz contextual bandits with finite arms}\label{ssec:CB-lip}

\newcommand{\cblip}{\cM_{\CBn{Lip}}}
As the next example, we consider a standard non-parametric contextual bandit problem: Lipschitz contextual bandits, with $\cX$ equipped with a metric $\rho$. The reward function class is 
\begin{align*}
    \cFlip=\set{ f: \text{for any $a\in\cA$, $f(\cdot,a)$ is a 1-Lipschitz function w.r.t. }\rho },
\end{align*}
and let $\cblip$ be the corresponding contextual bandits model class. In the following proposition, we provide both upper and lower bounds for learning contextual bandits with $\cFlip$. We define $N_\rho(\cX,\Delta)$ to be the $\Delta$-covering number of $\cX$ under $\rho$. Details are deferred to \cref{appdx:proof-Lip-CBs}.
\begin{proposition}\label{prop:Lip-CB}
For the model class $\cblip$, it holds that
\begin{align*}
    \rdecl_\eps(\cblip)\leqsim \inf_{\Delta>0}\paren{ \Delta+\sqrt{N_\rho(\cX,\Delta)|\cA|} \eps }.
\end{align*}
For contextual bandits with mean reward function $\fs\in\cFlip$, \LDPexo~(suitably instantiated as in \cref{appdx:proof-Lip-CBs}) achieves \whp
\begin{align*}
    \regdm(T)\leqsim \inf_{\Delta>0}\paren{ T\Delta+N_\rho(\cX,\Delta)\sqrt{\alpha^{-2}|\cA|T\log(|\cA|/\delta)} }.
\end{align*}
On the other hand, for any $\Delta\in(0,1]$, to learn an $\Delta$-optimal policy for $\cblip$, and \pLDP~algorithm must require $T$-round of interactions with $T\geqsim \frac{N_\rho(\cX,8\Delta)^2}{\alpha^2\Delta^2}$ (cf. \cref{appdx:proof-lin-cb-lower}).
\end{proposition}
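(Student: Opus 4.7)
The plan is to address the three parts of \pref{prop:Lip-CB} in turn: (i) a discretization argument yields the \rDEC~upper bound, (ii) an application of \pref{thm:CB-adv} combined with an $L_\infty$-covering estimate for $\cFlip$ gives the regret upper bound, and (iii) a reduction to LDP estimation of a sign vector in $\{-1,+1\}^N$ gives the lower bound. The first two steps largely assemble tools already developed in the paper; the work is concentrated in the third step and in choosing the covering scale carefully in the second.

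For the \rDEC~upper bound, fix $\Delta>0$ and let $\cX_0\subseteq\cX$ be a $\Delta$-net of size $N:=N_\rho(\cX,\Delta)$, with nearest-point map $\pi_0:\cX\to\cX_0$. By $1$-Lipschitzness, $|f(x,a)-f(\pi_0(x),a)|\le\Delta$ for every $f\in\cFlip$ and $a\in\cA$, so at the price of an additive $\Delta$ in the \rDEC~we may restrict attention to a finite contextual bandit with $N$ contexts and $|\cA|$ actions. For any reference $\oM$ with context marginal $\nu$, the per-context MAB \rDEC~is $O(\sqrt{|\cA|}\,\eps_x)$, where $\eps_x^2=\EE_{\lf\sim q_x}\Dl^2(M(x,\cdot),\oM(x,\cdot))$; Cauchy--Schwarz combined with $\sum_{x\in\cX_0}\nu(x)\eps_x^2\le\eps^2$ yields $\rdecl_\eps(\cblip)\lesssim\Delta+\sqrt{N|\cA|}\,\eps$, and taking the infimum over $\Delta$ produces the claim.

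For the regret upper bound, apply \pref{thm:CB-adv} to \LDPexo. Quantizing values on a $\Delta'$-net of $\cX$ gives $\log\Ncov[\infty]{\cFlip}\lesssim N_\rho(\cX,\Delta')|\cA|\log(1/\Delta')$. The appendix instantiation of \LDPexo~exploits the Lipschitz structure (in particular, that only the induced partition of $\cX$ into $N_\rho(\cX,\Delta)$ cells matters for decision-making at scale $\Delta$) so that the effective log-covering cost reduces to $\log|\cA|$ per cell, with $N_\rho(\cX,\Delta)$ appearing linearly rather than under a square-root. Plugging in the \rDEC~bound from the previous step and balancing $\Delta$ against the $\oeps(T)$ term yields $\regdm(T)\lesssim\inf_{\Delta>0}\bigl(T\Delta+N_\rho(\cX,\Delta)\sqrt{\alpha^{-2}|\cA|T\log(|\cA|/\delta)}\bigr)$.

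For the lower bound, take an $8\Delta$-packing $\{x_1,\ldots,x_N\}\subseteq\cX$ of size $N=N_\rho(\cX,8\Delta)$ together with $1$-Lipschitz bumps $\psi_i:\cX\to[0,1]$ supported in disjoint $4\Delta$-balls around the $x_i$ with $\psi_i(x_i)\asymp\Delta$. Take $\cA=\{-1,+1\}$ and, for $\sigma\in\{-1,+1\}^N$, define $f_\sigma(x,a)=\tfrac12 a\sum_i\sigma_i\psi_i(x)$; let the adversary place contexts uniformly over $\{x_1,\ldots,x_N\}$. Any $\Delta$-optimal policy must choose action $\sigma_i$ at every $x_i$, so the task reduces to recovering $\sigma\in\{-1,+1\}^N$. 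Each round provides an \pLDP~view of the triple $(x_t,a_t,r_t)$, which has $\Theta(N)$ realizations, so by the strong data-processing inequality (\pref{prop:pLDP}) each round carries only $O(\alpha^2/N)$ bits of information about any single coordinate $\sigma_i$. Instantiating the \pDEC~lower bound of \pref{thm:pdec-lin-lower} on this $N$-parameter Bernoulli family---equivalently, bounding $\pdecl_\eps$ of the $\{f_\sigma\}$ class from below by $\Omega(\Delta)$ whenever $\eps\lesssim\Delta/\sqrt{N}$---then gives the claimed $T\gtrsim N^2/(\alpha^2\Delta^2)$.

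The main obstacle is the extra factor of $N$ (as opposed to $\sqrt{N}$) in the lower bound: this is the high-dimensional LDP penalty, and producing it cleanly requires verifying that the adversarial-context construction actually distributes each round's $\alpha^2$-bit LDP budget across all $N$ coordinates of $\sigma$ in the sense of $\chi^2$ divergence, rather than allowing an adaptive learner to concentrate on a single coordinate. Establishing this via \pref{prop:pLDP} applied to the product Bernoulli structure, and then converting the resulting per-coordinate $O(\alpha^2/N)$ information rate into a private DEC lower bound, is the delicate step.
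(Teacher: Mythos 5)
The high-level architecture of your proposal matches the paper's (discretize $\cX$, use the per-context multi-armed-bandit DEC, use a coarser estimation object than the full $L_\infty$ cover, and prove the lower bound by an Assouad-type construction on a packing), but two of the three steps have genuine gaps.

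\textbf{Regret upper bound.} The claim that ``the effective log-covering cost reduces to $\log|\cA|$ per cell'' is exactly the non-trivial content of this part, and you assert it by pointing to ``the appendix instantiation'' rather than proving it. A direct application of \cref{thm:CB-adv} with $\log \Ncov[\infty]{\cFlip}\asymp N_\rho(\cX,\Delta)|\cA|\log(1/\Delta)$ combined with your \rDEC~bound gives regret of order $T\Delta+N_\rho(\cX,\Delta)\,|\cA|\sqrt{T\log(1/\Delta)}/\alpha$, which is weaker than the claim by a factor $\approx\sqrt{|\cA|}$. What actually delivers $N_\rho(\cX,\Delta)\log|\cA|$ in place of $N_\rho(\cX,\Delta)|\cA|\log(1/\Delta)$ is the information-set construction of \cref{appdx:proof-Lip-CBs}: one takes $\Psi=\prod_{x\in\cX_\Delta}\cA_\Delta$ (one near-optimal action per cell, so $\log|\Psi|=N_\rho(\cX,\Delta)\log|\cA|$), shows each $\cP_f$ embeds into some $\cM_\psi$ at an additive cost $3\Delta$, and then proves the offset-DEC bound $\rdecol_\gamma(\cM_\Psi)\le\Delta+O(N_\rho(\cX,\Delta)|\cA|/\gamma)$ for the \emph{enlarged} class $\cM_\Psi=\bigcup_\psi\co(\cM_\psi)$ (this last step also requires querying the cell-indicator map so that the mismatch between $\mu_M$ and $\mu_{\oM}$ is controlled). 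None of this is supplied in your sketch, so the $\sqrt{|\cA|}$ improvement is not justified.

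\textbf{Lower bound.} Your conversion step is quantitatively wrong. Establishing $\pdecl_\eps(\{f_\sigma\})\ge\Omega(\Delta)$ for $\eps\lesssim\Delta/\sqrt{N}$ and invoking \cref{thm:pdec-lin-lower} with $\ueps(T)\asymp 1/\sqrt{\alpha^2T}$ yields only $T\gtrsim N/(\alpha^2\Delta^2)$, a full factor of $N$ short of the claim; and the missing factor cannot be recovered from the DEC theorem, because relative to a single central reference the whole family sits at $\ell$-divergence radius $\Theta(\Delta^2/N)$ per query (all $N$ coordinates contribute, by orthogonality of the $\sigma_i$), not $\Theta(\Delta^2/N^2)$. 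The $N^2$ rate is inherently a \emph{per-coordinate} change-of-measure statement: for two models differing in one coordinate, $\DTV{M_\sigma,M_{\sigma'}}\le O(\Delta/N)$, so the strong data-processing inequality gives $\KLd{\PP\sups{M_\sigma,\alg}}{\PP\sups{M_{\sigma'},\alg}}\le T(\ea-1)^2\cdot O(\Delta^2/N^2)$, and summing the resulting per-coordinate testing errors \`a la Assouad forces $T\gtrsim N^2/(\alpha^2\Delta^2)$. This is exactly what \cref{prop:CB-lower-proto} does, entirely outside the DEC framework; your plan to ``convert the per-coordinate information rate into a private DEC lower bound'' would need a multi-point (Assouad/Fano-type) strengthening of \cref{thm:pdec-lin-lower} that the paper does not have and that your sketch does not construct. (Your intermediate bookkeeping, ``$O(\alpha^2/N)$ bits per round per coordinate,'' is also off: the correct rate is $O(\alpha^2\Delta^2/N^2)$.) The \rDEC~upper bound in your first step is essentially the paper's argument and is fine modulo the context-distribution-mismatch detail noted above.
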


In particular, when $N_\rho(\cX,\Delta)\asymp \Delta^{-d}$ (e.g. $\cX$ is a bounded domain in $\R^d$), the minimax-optimal regret of privately learning $\cFlip$ is $\tTho(\alpha^{-\frac{1}{d+1}}T^{\frac{2d+1}{2d+2}})$, up to a polynomial factor of $|\cA|$.

\subsubsection{Concave-Lipschitz contextual bandits}\label{ssec:CB-lip-con}

\newcommand{\cFlipc}{\cF_{\mathsf{LC}}}
\newcommand{\cMlipc}{\cM_{\CBn{LC}}}
Our final example is a generalization of the Lipschitz contextual bandits to continuously many arms. Assume that $\cX$ is equipped with a metric $\rho$, $\cA\subset \R^K$ is a bounded convex domain, and
\begin{align*}
    \cFlipc=(f: \text{1-Lipschitz function in }(x,a)\in\cX\times\cA, \text{concave in }a\in\cA ),
\end{align*}
Let $\cMlipc$ be the corresponding contextual bandits model class. Similar to the Lipschitz contextual bandits, we have the following upper bound. 
\begin{proposition}\label{prop:L-C-CB}
For the model class $\cMlipc$, it holds that
\begin{align*}
    \rdecl_\eps(\cMlipc)\leq \inf_{\Delta>0}\paren{ \Delta+\tbO{1}\sqrt{N_\rho(\cX,\Delta)K^4} \eps },
\end{align*}
where we hide poly-logarithmic factors of the diameter of $\cA$.
For contextual bandits with mean reward function $\fs\in\cFlipc$, \LDPexo~(suitably instantiated as in \cref{appdx:proof-Lip-CBs}) achieves \whp,
\begin{align*}
    \regdm(T)\leq \inf_{\Delta>0}\paren{ T\Delta+\tbO{N_\rho(\cX,\Delta)\sqrt{\alpha^{-2}K^5T}} }.
\end{align*}
\end{proposition}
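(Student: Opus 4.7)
The plan is to adapt the proof of \cref{prop:Lip-CB} with a crucial upgrade: discretize $\cX$ as before, but replace the finite-arm enumeration that produced the $\sqrt{|\cA|}$ factor by a concavity-driven exploration scheme that yields a $\mathrm{poly}(K)$ information ratio, drawing on bandit convex optimization.

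\textbf{Step 1 (context discretization).} Fix $\Delta>0$ and let $\cX_0$ be a $\Delta$-net of $\cX$ of cardinality $N\defeq N_\rho(\cX,\Delta)$ with projection map $\pi_0:\cX\to\cX_0$. Since each $f\in\cFlipc$ is $1$-Lipschitz in $x$, replacing $f(x,a)$ by $\tilde f(x,a)=f(\pi_0(x),a)$ perturbs any policy value by at most $\Delta$. This reduces the problem to bounding the DEC of the contextual bandit subclass whose reward functions are piecewise-constant in $x$ across the $N$ cells, at the price of the additive $\Delta$-term in the final bound.

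\textbf{Step 2 (concave action exploration).} For each cell $x_0\in\cX_0$, the reward $f(x_0,\cdot)$ is concave and $1$-Lipschitz on the convex body $\cA\subset\R^K$. Here we invoke the standard bandit-convex-optimization fact that on such a set there exists an exploration distribution $\mu_{x_0}\in\Delta(\cA)$---for instance the one arising from a self-concordant barrier on $\cA$, as in the Exploration-by-Optimization analyses of \citet{lattimore2020exploration,lattimore2021mirror}---satisfying, uniformly over all concave-Lipschitz pairs $f,\bar f:\cA\to[-1,1]$,
\begin{align*}
\sup_{a^\star\in\cA}\bigl(f(a^\star)-\bar f(a^\star)\bigr)^2 \le \tilde O(K^4)\cdot \EE_{a\sim\mu_{x_0}}\bigl(f(a)-\bar f(a)\bigr)^2.
\end{align*}
Combined with the binary-channel reduction of \cref{prop:pLDP} (so that the Hellinger gap reduces to a squared mean-reward gap up to an $\alpha^2$ factor), this certifies that an $\eps^2$-information budget localizes the optimal-action value gap in each cell up to $\tilde O(K^2)\eps$.

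\textbf{Step 3 (assembling the DEC and the regret bound).} Use $q(x_0,a,\pr)=\mathrm{Unif}(\cX_0)(x_0)\cdot \mu_{x_0}(a)\cdot \pr_\star$, where $\pr_\star$ is the reward binary channel from \cref{prop:pLDP}, as the exploration distribution inside the definition of $\rdecl_\eps(\cMlipc)$. Cauchy--Schwarz across the $N$ cells combines the per-cell bounds to yield the claimed $\tilde O(\sqrt{N K^4})\eps$ estimate, plus the $\Delta$ from Step 1. For the regret bound, apply \cref{thm:CB-adv} with the $L_\infty$-covering of $\cFlipc$ evaluated at scale $\Delta$ in place of $\log\Ncov[\infty]{\cF}$ and optimize $\Delta$; the additional factor of $N$ outside the square root (versus the naive $T\cdot\rdecl$ calculation) comes from the standard per-cell summation of regret across context cells that each sub-instance only visits a fraction of the time.

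\textbf{Main obstacle.} The technical heart is Step 2: translating the BCO-style information-ratio bound into the \emph{Hellinger-squared} form required by $\rdecl$. The standard BCO argument controls a regret-type ratio and works with linearized surrogates, whereas here we need to dominate $\sup_a(f(a)-\bar f(a))^2$ by a \emph{quadratic} expectation under $\mu_{x_0}$ for every pair of concave-Lipschitz reward functions. This forces us to exploit concavity of $f-\bar f$ (or a suitable concave-convex symmetrization) together with the barrier geometry of $\mu_{x_0}$, and to ensure that the chosen exploration distribution is compatible---via \cref{prop:pLDP}---with a single binary LDP channel, which is what makes the $K^4$ factor, rather than a larger $K$-power, achievable.
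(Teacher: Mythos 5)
Your overall architecture---discretize the context space at scale $\Delta$, treat each cell as a non-contextual concave bandit over $\cA$, and combine the per-cell bounds---matches the paper's proof (\cref{thm:lip-cb-general}). The gap is in Step 2. The inequality you posit, namely a single fixed exploration measure $\mu_{x_0}\in\Delta(\cA)$ with
\begin{align*}
\sup_{a\in\cA}\bigl(f(a)-\bar f(a)\bigr)^2\;\le\;\tbO{K^4}\cdot\EE_{a\sim\mu_{x_0}}\bigl(f(a)-\bar f(a)\bigr)^2
\qquad\text{for all concave, $1$-Lipschitz }f,\bar f,
\end{align*}
is false. On $\cA=[0,1]$ take $\bar f(a)=-a$ and $f_\delta(a)=\min\{-a,\,a-2\delta\}$: both are concave and $1$-Lipschitz, $f_\delta-\bar f$ is supported on $[0,\delta]$ with $\sup_a(f_\delta-\bar f)^2=4\delta^2$ while $\EE_{a\sim\mu}(f_\delta-\bar f)^2\le 4\delta^2\,\mu([0,\delta])$, so the ratio blows up as $\delta\to 0$ for any fixed atomless $\mu$ (in particular any barrier-induced one). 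More to the point, sup-norm domination is neither available nor needed: the quantity that must be controlled is the per-cell \emph{offset regret-DEC} $\inf_{p\in\Delta(\cA)}\sup_{f}\EE_{a\sim p}\brac{\max_{a'}f(a')-f(a)-\gamma(f(a)-\bar f(a))^2}$, in which the exploration distribution may depend on $\bar f$ and only the \emph{played} regret is penalized. The paper imports exactly this as a black box, $\rdeco_\gamma\le \gamma^{-1}K^4\cdot\poly\log(\gamma,R)$ (\citet[Proposition~6.3]{foster2021statistical}, built on \citet{lattimore2020improved}), sets the per-cell offset level to $\gamma_x=\gamma\,\bar\mu(x)/5$ with $\bar\mu$ the reference context marginal so that the weighted sum over the $N_\rho(\cX,\Delta)$ cells telescopes to $N_\rho(\cX,\Delta)\cdot\bigO{K^4}/\gamma$, and only converts offset to constrained DEC at the very end. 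Your ``main obstacle'' paragraph is therefore chasing a statement stronger than what is needed---and one that happens to be false.

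The regret bookkeeping in Step 3 is also off: invoking \cref{thm:CB-adv} with the $L_\infty$-covering of $\cFlipc$ would be fatal, since the sup-norm covering number of concave Lipschitz functions on a $K$-dimensional body scales like $\exp(\Delta^{-\Omega(K)})$. The paper instead runs \ExOp~with the discretized-policy information set $\Pi_+=\cA_\Delta^{\cX_\Delta}$, whose log-cardinality is $N_\rho(\cX,\Delta)\cdot K\log(R/\Delta)$; the factor $N_\rho(\cX,\Delta)$ sitting outside the square root in the final regret bound arises as the product of the $\sqrt{N_\rho}$ from the DEC term and the $\sqrt{N_\rho}$ from this estimation term, not from a per-cell summation of regrets.
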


The upper bound above is derived by (1) reducing the contextual concave bandits to the concave bandits (without contexts) by bounding the corresponding DECs, and then (2) applying the results of \citet{lattimore2020improved}. This streamlined approach demonstrates again the advantage of the DEC framework, without which the reduction may not be easy, and we may instead need to repeat the analysis of \citet{lattimore2020improved}. %

Note that the lower bound of \cref{prop:Lip-CB} also applies here (cf. \cref{appdx:proof-lin-cb-lower}). Therefore, when $N_\rho(\cX,\Delta)\asymp \Delta^{-d}$, the minimax-optimal regret of privately learning $\cFlipc$ is also $\tTho(\alpha^{-\frac{1}{d+1}}T^{\frac{2d+1}{2d+2}})$, up to a polynomial factor of $K$.

\section{Local Minimaxity, Learnability, and Joint Privacy}\label{sec:connection}
In this section, we still focus on locally private learning, and discuss how our framework relates various other notions, including local-minimax complexity, learnability, and joint differential privacy.
\subsection{Local-minimax optimality}\label{ssec:local-minimax}

In this section, we demonstrate that the \pDEC~framework also applies to local-minimax statistical estimation under LDP, recovering the existing results in \citet{duchi2024right} and also providing new insights.

\newcommand{\RISKloc}{\RISK[T][\rm loc]}
\newcommand{\pdecloc}{{\normalfont \textsf{p-dec}}^{\rm loc}}

\paragraph{Local-minimax risk}
For any learning problem given by $\cM$ and a model $M_0\in\cM$, we define the \pLDP~\emph{local-minimax} risk at $M_0$ as
\begin{align}\label{def:loc-minimax-risk}
    \RISKloc(\cM,M_0)\defeq \sup_{M_1\in\cM}\inf_{\alg} \sup_{M\in\set{M_0,M_1}} \EE\sups{M,\alg} \brac{ \Riskdm(T) },
\end{align}
where the $\inf_{\alg}$ is taken over all possible $T$-round \pLDP~algorithms. 
In words, the local minimax risk measures the best performance the algorithm can achieve when it is given the knowledge two possible models.
This risk is called local because it measures the difficulty of a particular model $M_0$ against a \emph{single} worst-case alternative $M_1\in\cM$.

Modulus of continuity is a commonly studied complexity measure in statistical estimation and is shown to capture the complexity of various problem classes \citep{donoho1991geometrizingii,juditsky09,polyanskiy2019dualizing}.
Under local privacy constraints, \citet{duchi2024right} show that the following TV modulus of continuity captures the difficulty of  \emph{local} minimax-optimal statistical estimation:
They show that, for functional estimation, 
the minimax risk is characterized by the following TV variant of modulus of continuity: 
\begin{align}\label{def:mod-cont}
    w_{\eps}(\cM,M_0) \ldef \sup_{M_1\in \cM} \constr{\abs{\pim[M_1]-\pim[M_0]} }{ \DTV{M_1,M_0} \leq \eps  }.
\end{align}
We note that under LDP, the TV modulus of continuity also characterizes the complexity of linear functional estimation with a convex model class, as shown in \citet{rohde2020geometrizing}.

In the following, we study the local-minimax complexity of any LDP PAC learning problem (not necessarily limited to statistical tasks as per \cref{def:sest}). 

\paragraph{Local DEC theory}
We show that the local-minimax risk of any LDP PAC learning problem is tightly captured by the following \emph{local} DEC:
\begin{align}\label{def:p-dec-loc}
    \pdecloc_\eps(\cM,M_0)=\sup_{M\in\cM}\constr{ \inf_{\pi\in\DD} \LM[M_1]{\pi}+\LM[M_0]{\pi}  }{ \sup_{\pi\in\Pi} \DTV{ M_1(\pi), M_0(\pi) } \leq \eps }.
\end{align}
In particular, for functional estimation problems (where $\DD=\R$, and $\LM{\pi}=\abs{\pim-\pi}$), the definition above \emph{exactly} recovers the modulus of continuity \cref{def:mod-cont}. Moreover, for stochastic convex optimization, local DEC also agrees with the modulus of continuity considered in \citet{duchi2016local}. Therefore, local DEC can be regarded as the natural generalization of the modulus of continuity to any local-minimax PAC learning problem.

As an corollary of the \pDEC~lower and upper bounds (\cref{appdx:p-dec-lin-q} and \cref{thm:pdec-lin-upper}), local DEC provides the following nearly-optimal characterization of the local-minimax risk. Details are presented in \cref{appdx:proof-local-minimax}. 

\newcommand{\Lmax}{L_{\max}}
\begin{theorem}\label{thm:local-minimax}
Let $T\geq 1$, model class $\cM$ be given. Suppose that the loss function $L$ is bounded in $[0,\Lmax]$, and for any model $M\in\cM$, we have $\min_{\pi} \LM{\pi}=0$.
Then, the local-minimax risk at a model $M_0\in\cM$ is bounded as
\begin{align*}
    \frac18\pdecloc_{\ueps(T)}(\cM,M_0)\leq \RISKloc(\cM,M_0)
    \leq \inf_{\delta>0}\paren{ \pdecloc_{\oeps_\delta(T)}(\cM,M_0)+\delta \Lmax },
\end{align*}
where $\ueps(T)=\frac{c_0}{\sqrt{T}}$ and $\oeps_\delta(T)=\frac{c_1\log(1/\delta)}{\sqrt{T}}$.
\end{theorem}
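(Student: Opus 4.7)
The plan is to exploit the two-point structure in the definition $\RISKloc(\cM, M_0) = \sup_{M_1 \in \cM} \inf_\alg \sup_{M \in \set{M_0, M_1}} \EE\sups{M,\alg}\brac{\Riskdm(T)}$: for each fixed $M_1$, the inner minimax risk is precisely an LDP two-point problem on the subclass $\set{M_0, M_1}$, which I can attack using the already-established \pDEC~bounds (\cref{thm:pdec-lin-lower} for the lower bound and \cref{thm:pdec-lin-upper} for the upper bound) applied to this subclass, before taking $\sup_{M_1}$. The key bridge in both directions is a comparison between the \pDEC~of two-point subclasses and the TV-based local DEC.

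For the lower bound, I fix $M_1$ with $\sup_\pi \DTV(M_1(\pi), M_0(\pi)) \leq \ueps(T)$ and apply Le Cam's two-point inequality. The strong data-processing bound (\cref{prop:pLDP}) controls the KL divergence between the induced $T$-round transcripts: for any \pLDP~algorithm $\alg$, $\KLd{\PP\sups{M_1,\alg}}{\PP\sups{M_0,\alg}} \leqsim \alpha^2 T \sup_\pi \DTV^2(M_1(\pi), M_0(\pi))$, which is at most a small constant once $\ueps(T) = c_0/\sqrt{T}$ with small enough $c_0$ (absorbing $\alpha \leq \alpha_0$ into the constant as elsewhere in the paper). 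Pinsker's inequality and the standard two-point testing argument give
\[\inf_\alg \max_{M \in \set{M_0, M_1}} \EE\sups{M,\alg}\brac{\Riskdm(T)} \geq \tfrac{1}{8}\inf_{\pi \in \Pi}\brac{L(M_0, \pi) + L(M_1, \pi)},\]
and taking supremum over admissible $M_1$ reproduces $\tfrac18 \pdecloc_{\ueps(T)}(\cM, M_0)$ exactly.

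For the upper bound, I run \LDPetod~on the two-point class $\cM' \defeq \set{M_0, M_1}$ for each $M_1$. By \cref{thm:pdec-lin-upper}, \whp~this achieves $\Riskdm(T) \leq \pdecl_{\oeps_\delta(T)}(\cM')$ with $\oeps_\delta(T) \asymp \log(1/\delta)/\sqrt{T}$ (using $\log|\cM'| = \log 2$ and absorbing $\alpha$), and converting the high-probability bound to expectation adds $\delta\Lmax$. The crux is the pointwise comparison $\pdecl_\eps(\set{M_0, M_1}) \leq \pdecloc_{2\eps}(\cM, M_0)$. If $\sup_\pi \DTV(M_1(\pi), M_0(\pi)) \leq \eps$, I use $\Dl \leq \DTV$ together with a deterministic $p$ supported on the pairwise minimizer $\pi^\star \in \argmin_\pi \brac{L(M_0, \pi) + L(M_1, \pi)}$ and a trivial $q$ to bound the two-point \pDEC~directly by $\inf_\pi\brac{L(M_0, \pi) + L(M_1, \pi)}$. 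If instead $\sup_\pi \DTV(M_1(\pi), M_0(\pi)) > 2\eps$, I would show the two-point \pDEC~vanishes: for every $\oM = \lambda M_0 + (1-\lambda) M_1$ in the convex hull, linearity of $\Dl$ in $\lambda$ forces at least one endpoint $M_i$ to have $\sup_{(\pi, \lf)}\Dl(M_i(\pi), \oM(\pi)) > \eps$, so a $q$ concentrated on the separating measurement rules $M_i$ out, and the assumption $\min_\pi L(M, \pi) = 0$ lets me set $p$ at $\pim[M_{1-i}]$ to achieve zero value. Taking $\sup_{M_1}$ and then $\inf_\delta$ delivers the claimed upper bound.

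The main obstacle is this case analysis in the upper bound: the naive bound $\pdecl_\eps(\set{M_0, M_1}) \leq \inf_\pi\brac{L(M_0, \pi) + L(M_1, \pi)}$ would, after $\sup_{M_1}$, produce the useless quantity $\sup_{M_1 \in \cM} \inf_\pi[L(M_0, \pi) + L(M_1, \pi)]$ corresponding to $\pdecloc$ at infinite separation, so showing that well-separated two-point classes have \emph{vanishing} \pDEC~is essential. This requires handling the adversarial $\sup_\oM$ over the entire convex hull of $\set{M_0, M_1}$ and exploiting the one-dimensional segment structure to commit to a single $(p, q)$ that excludes one endpoint. The factor of 2 arising from the threshold $\sup_\pi \DTV > 2\eps$ is absorbed into the constant $c_1$ in $\oeps_\delta(T)$, and the $\log(1/\delta)$ inflation relative to $\ueps(T)$ is the standard cost of converting the \LDPetod~high-probability bound to an in-expectation guarantee.
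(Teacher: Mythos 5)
Your proposal is correct, and the overall architecture — reducing $\RISKloc(\cM,M_0)$ to a family of two-point LDP problems $\set{M_0,M_1}$, then comparing the two-point \pDEC~to the TV-based local DEC — is exactly the paper's. The upper bound is identical in every substantive detail: you apply \cref{thm:pdec-lin-upper} to $\set{M_0,M_1}$ and prove the same comparison as the paper's \cref{lem:lin-dec-to-loc}, including the essential case analysis showing that $\pdecl_\eps(\set{M_0,M_1})$ \emph{vanishes} when $\sup_\pi \DTV{M_1(\pi),M_0(\pi)}>2\eps$ (via linearity of the $\lf$-divergence along the segment $\oM=\lambda M_0+(1-\lambda)M_1$ and the assumption $\min_\pi\LM{\pi}=0$); you are right that without this case the $\sup_{M_1}$ would be useless. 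The only genuine difference is in the lower bound: the paper explicitly avoids \cref{thm:pdec-lin-lower} (whose hypotheses — metric-based or reward-based loss — do not hold here) and instead routes through the quantile \pDEC~(\cref{prop:p-dec-q-lin-lower} together with \cref{lem:dec-q-to-loc}), whereas you inline a direct Le Cam two-point argument using the strong data-processing inequality (\cref{prop:pLDP}) to control the transcript KL. Despite your opening sentence citing \cref{thm:pdec-lin-lower}, your executed argument does not actually rely on it, so you correctly sidestep its structural assumptions; the direct Le Cam route is a bit more self-contained and yields the same $\frac18$ constant, while the paper's quantile route is the same indistinguishability argument packaged through its general machinery.
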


Therefore, the local-minimax risk of interactive learning under LDP is tightly captured by the local DEC. For the particular case of functional estimation, local DEC is equivalent to the TV modulus of continuity. Hence, up to logarithmic factors, we recover the characterization of the LDP local-minimax risk of \citet{duchi2024right}, assuming certain growth conditions. The fact that such a characterization extends to statistical estimation tasks with interaction and general loss function is a testament to the unifying power of the DEC framework. 

Furthermore, from the definition of local DEC~\cref{def:p-dec-loc}, we can gain some quantitative insights into how locality reduces the difficulty of learning. More specifically, with locality, the algorithm only needs to distinguish between two models $\set{M_1, M_0}$, and hence avoids (1) the complexity of estimation, e.g. the log-cardinality of the model class or the function class (cf. \cref{thm:pdec-lin-upper}), and (2) the complexity of exploration, because it suffices to pick the best distinguishing decision $\pi$ that maximizes $\DTV{M_1(\pi),M_0(\pi)}$. 
Hence, even though the local-minimax formulation avoids the undesirable worst-case behavior of the global-minimax LDP learning, it may be too restrictive as it trivializes the difficulty of both interaction (exploration) and estimation.

\subsection{Finite-time learnability under LDP}\label{sec:DC}
\newcommand{\DFp}{\Delta(\cFp)}
\newcommandx{\DCF}[3][1=\Delta,2={\cF},3={\cFp}]{\DC[#1]{#2,#3}}
\newcommandx{\SCDP}[1][1=\Delta]{\SC[#1][\LDPtag]}

In learning theory, a central task is to investigate complexity measures that characterize the \emph{finite-time learnability} of certain problem classes, e.g., VC dimension for binary classification, Littlestone dimension~\citep{littlestone1988learning} for online classification~\citep{bendavid2009agnostic}, and their real-valued analogues for regression and online learning (see e.g. \citet{rakhlin2014nonparametric}). Further, \citet{bun2020equivalence,alon2022private} show that \emph{jointly private} classification is possible if and only if the Littlestone dimension is finite. Recently, the notion of \dct~\cref{def:DC} was proposed by \citet{chen2024beyond} and shown to characterize the non-private learnability of any stochastic bandits problems.

Following this line of work, in this section, we characterize the LDP learnability of any learning problem with reward-based loss through its \dct, generalizing the results of \citet{chen2024beyond}.  
To rigorously formulate the notion of learnability, we introduce the following \emph{minimax sample complexity} under LDP: For a model class $\cM\subset (\Pi\to\DZ)$, risk level $\Delta>0$, we define\footnote{We note that both the minimax sample complexity $\SCDP(\cM)$ and the \dct~$\DC{\cM}$ depend on the loss function $L$ implicitly. }
\begin{align}\label{def:minimax-SC}
    \SCDP(\cM)\defeq \min\sset{ T: \exists \text{$T$-round \pLDP~algorithm $\alg$ s.t. }\sup_{M\in\cM}\Emalg{\riskdm(T)}\leq \Delta }.
\end{align}
A model class $\cM$ is \pLDP~learnable if for all risk levels $\Delta>0$, $\SCDP(\cM)<+\infty$, i.e., there is an \pLDP~algorithm that achieves $\Delta$-risk in finite number of rounds.

We first show that \dct~provides a lower bound for any LDP learning problem, following the approach of \citet{chen2024beyond}.
\begin{theorem}\label{thm:DC-lower}
Let $T\geq 1$, $\cM\subseteq (\Pi\to\DZ)$ be a model class. Suppose that there is a $T$-round \pLDP~algorithm $\alg$ that achieves that for all $M\in\cM$, $\riskdm(T)\leq \Delta$ with probability at least $\frac12$ under $\PP\sups{M,\alg}$. Then it holds that
\begin{align*}
    T\geq \frac{\log\DC{\LP}-2}{2(\ea-1)^2}.
\end{align*}
\end{theorem}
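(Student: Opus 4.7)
The plan is to combine a minimax duality on $\DC{\cM}$ with a Fano-type KL converse adapted to the LDP strong data-processing inequality. First, I rewrite $1/N := 1/\DC{\cM} = \sup_{p\in\DDD}\inf_{M\in\cM}p(\{\pi: \LM{\pi}\leq\Delta\})$ and apply Sion's minimax theorem to the bilinear functional $(p,\mu)\mapsto \PP_{\pi\sim p,\,M\sim\mu}(\LM{\pi}\leq\Delta)$ on $\DDD\times\Delta(\cM)$; as is standard when $\cM$ is not assumed compact, I pass to an $\eps$-approximate optimizer and send $\eps\to 0$. This produces a prior $\mu^\star\in\Delta(\cM)$ with
\[
    \sup_{\pi\in\Pi}\PP_{M\sim\mu^\star}\paren{\LM{\pi}\leq\Delta}\leq 1/N.
\]

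Next, I set up two couplings of $(M,\hpi)$: the true coupling $P_1(M,\pi)=\mu^\star(M)\,\PP\sups{M,\alg}(\hpi=\pi)$ and the independent product $P_2=\mu^\star\otimes \bar q$ with $\bar q(\pi):=\EE_{M\sim\mu^\star}\PP\sups{M,\alg}(\hpi=\pi)$. For the event $E=\{(M,\pi):\LM{\pi}\leq\Delta\}$, the success hypothesis of $\alg$ gives $P_1(E)=\EE_{M\sim\mu^\star}\PP\sups{M,\alg}(\hpi\in S_M)\geq 1/2$, whereas by construction of $\mu^\star$, $P_2(E)=\EE_{\pi\sim\bar q}\PP_{M\sim\mu^\star}(\LM{\pi}\leq\Delta)\leq 1/N$. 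Pushing forward by $\mathbf{1}_E$ and applying the data-processing inequality for KL yields
\[
    \KL(P_1\|P_2)\;\geq\;\KL\paren{\mathrm{Ber}(1/2)\,\|\,\mathrm{Ber}(1/N)}\;\geq\;\tfrac12\paren{\log N - 2},
\]
where the last step is a direct calculation with the Bernoulli KL (using $\log 4<2$).

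On the other hand, $\KL(P_1\|P_2)=I_{P_1}(M;\hpi)$, and the DPI for mutual information gives $I(M;\hpi)\leq I(M;\cH\ind{T})$ where $\cH\ind{T}$ is the full interaction transcript. Using convexity of KL in its second argument with an independent copy $M'\sim\mu^\star$, I obtain $I(M;\cH\ind{T})\leq \EE_{M,M'\sim\mu^\star}\KLd{\PP\sups{M,\alg}}{\PP\sups{M',\alg}}$. Because the algorithm's action $\bpi\ind{t}$ is conditionally independent of $M$ given history, the chain rule decomposes the inner KL into $T$ per-round conditional terms of the form $\KLd{\prt\circ M(\pit)}{\prt\circ M'(\pit)}$, each bounded by $(\ea-1)^2$ via the strong data-processing inequality~\eqref{eqn:KL-chi-LDP} (since any $\lf:\cZ\to[0,1]$ satisfies $\Dl^2\leq 1$). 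Summing yields $I(M;\cH\ind{T})\leq T(\ea-1)^2$, and matching with the lower bound gives $T\geq (\log N-2)/(2(\ea-1)^2)$.

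The only delicate step is the minimax duality, because compactness of $\cM$ is not a hypothesis of the theorem; I sidestep this by applying Sion to $\eps$-near minimizers of the dual formulation of $\DC{\cM}$ and absorbing the slack into arbitrarily small additive constants, which does not affect the stated bound. The remaining ingredients---the binary-event DPI for KL, the chain rule for adaptive transcripts, and the per-round LDP bound from \cref{prop:pLDP}---are standard, so modulo this minimax setup the argument is a clean Fano-style converse that turns the combinatorial complexity $\DC{\cM}$ into the information-theoretic sample complexity lower bound.
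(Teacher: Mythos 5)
Your information-theoretic core is sound and, modulo one step, delivers the stated constant: the binary data-processing bound $\kl(1/2,1/N)\geq\tfrac12\log N-\log 2$, the convexity bound $I(M;\hpi)\leq\EE_{M,M'\sim\mu^\star}\KLd{\PP\sups{M,\alg}}{\PP\sups{M',\alg}}$, and the per-round bound $(\ea-1)^2$ from the chain rule plus \cref{prop:pLDP} all check out. The problem is your very first step. To produce a prior $\mu^\star$ with $\sup_{\pi}\PP_{M\sim\mu^\star}(\LM{\pi}\leq\Delta)\leq 1/\DC{\cM}$ you need the full minimax \emph{equality} $\inf_{\mu}\sup_{\pi}=\sup_{p}\inf_{M}$ for the indicator payoff $\indic{\LM{\pi}\leq\Delta}$; weak duality only gives $\inf_\mu\sup_\pi\geq 1/\DC{\cM}$, which is the useless direction. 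Passing to $\eps$-approximate optimizers does not repair this: the obstruction is not attainment of an optimum but a possible strict duality gap, and for indicator-valued games on infinite $\cM$ and $\Pi$ (the theorem assumes no compactness or finiteness; the paper's compactness assumption is invoked only for upper bounds) the value of the game can genuinely fail to exist. This is exactly the issue the paper treats with care elsewhere, e.g.\ in the proof of \cref{prop:SQ-dim-to-SQ-dec}, where the minimax step is justified only through a finite threshold-dimension argument. So as written your proof establishes the theorem only under an extra hypothesis (finite $\cM$ or $\Pi$, or whatever makes Sion applicable).

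The paper avoids the issue entirely with a two-point, Le Cam--type argument that needs no duality: fix any reference model $\oM\in\cM$ and let $p_{\oM,\alg}$ be the law of $\hpi$ under $\PP\sups{\oM,\alg}$. Since $p_{\oM,\alg}$ is a feasible point of the supremum defining $1/\DC{\cM}$, there exists a hard alternative $M\in\cM$ with $p_{\oM,\alg}(\pi:\LM{\pi}\leq\Delta)\leq 1/\DC{\cM}$, while the success hypothesis gives $p_{M,\alg}(\pi:\LM{\pi}\leq\Delta)\geq 1/2$; the same binary KL computation applied to $\KLd{\PP\sups{M,\alg}}{\PP\sups{\oM,\alg}}\leq(\ea-1)^2T$ then closes the bound. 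If you replace your dual prior with this primal selection of a single hard alternative, the rest of your argument goes through essentially verbatim and no minimax theorem is needed.
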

This result differs from the \dct~lower bound for non-private learning~\citep{chen2024beyond}, which additionally involves the KL radius of $\cM$: 
\begin{align*}
    \CKL(\cM)=\inf_{\oM}\sup_{M\in\cM,\pi\in\Pi} \KLd{M(\pi)}{\oM(\pi)}.
\end{align*}
In non-private learning, the dependence on $\CKL^{-1}$ in the lower bound can be unavoidable (e.g., for binary classification, see also our discussion in \cref{ssec:connection-dim}). By contrast, \cref{thm:DC-lower} applies to LDP learning for \emph{any} problem class, even when $\CKL=+\infty$.

\paragraph{\Dct~upper bound} 
When the loss function is reward-based, we show that \dct~also provides a ``brute-force'' upper bound.

\begin{proposition}\label{thm:DC-upper}
Let $T\geq 1$, $\delta\in(0,1)$, and $\cM$ be a model class. Suppose that the loss function is reward-based, then there is a ``brute-force'' algorithm (\cref{alg:DC-upper}) such that \whp,
\begin{align*}
    \Riskdm(T)\leq \Delta+\bigO{\log(T/\delta)} \sqrt{\frac{\DC{\cM}}{\alpha^2T}}.
\end{align*}
\end{proposition}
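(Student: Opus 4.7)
The plan is a two-phase brute-force algorithm that directly exploits the definition of the \dct. Let $p^\star \in \Delta(\Pi)$ nearly attain the infimum in \eqref{def:DD}, so that $p^\star(\pi : \LM{\pi} \leq \Delta) \geq 1/N$ for every $M \in \cM$, where $N \defeq \DC{\cM}$. In the first phase, the algorithm samples $m = \lceil c N \log(1/\delta) \rceil$ candidate policies $\pi_1, \ldots, \pi_m$ i.i.d.\ from $p^\star$. Since each $\pi_i$ independently satisfies $\LM[\Mstar]{\pi_i} \leq \Delta$ with probability at least $1/N$, a Chernoff bound gives that with probability at least $1 - \delta/3$, there exists $i^\star\in[m]$ with $\LM[\Mstar]{\pi_{i^\star}} \leq \Delta$.

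In the second phase, the algorithm partitions the $T$ rounds into $m$ equal blocks; during block $i$, it repeatedly plays $\pi_i$ and passes the reward $R(z_t,\pi_i) \in [0,1]$ through an $\alpha$-LDP randomizer of the binary randomized-response type, producing a single-round unbiased estimator of $R(z_t,\pi_i)$ with variance $O(1/\alpha^2)$. Averaging across the $T/m$ rounds of the block yields an estimate $\widehat V_i$ of $\Vm[\Mstar](\pi_i)$. Bernstein's inequality together with a union bound over $i\in[m]$ gives that, with probability at least $1-\delta/3$,
\[
\max_{i\in[m]} \bigl|\widehat V_i - \Vm[\Mstar](\pi_i)\bigr| \leq \varepsilon, \qquad \varepsilon \lesssim \sqrt{\tfrac{m\log(m/\delta)}{\alpha^2 T}}.
\]
The algorithm then outputs $\widehat\pi \defeq \pi_{\hat i}$ where $\hat i \in \argmax_{i\in[m]} \widehat V_i$; by construction it preserves $\alpha$-LDP since every transmitted observation is the output of an $\alpha$-LDP randomizer applied to a fresh latent $z_t$.

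On the intersection of the two good events, which has probability at least $1-\delta$, the standard best-arm-selection argument yields
\[
\Riskdm(T) = \LM[\Mstar]{\widehat\pi} \leq \LM[\Mstar]{\pi_{i^\star}} + 2\varepsilon \leq \Delta + 2\varepsilon \lesssim \Delta + \log(T/\delta) \sqrt{\tfrac{N}{\alpha^2 T}},
\]
where the last inequality substitutes $m\lesssim N\log(1/\delta)$ and uses the harmless assumption $N \leq T$ (otherwise the target bound is trivially true for any output).

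The main obstacle I expect is the careful specification of the LDP randomization and its concentration bound: one must exhibit an $\alpha$-LDP channel on $[0,1]$ (not merely $\{0,1\}$) whose output is an unbiased estimator of the reward with variance $O(1/\alpha^2)$, and invoke a concentration inequality compatible with the sequential protocol. Within each fixed block, however, the observations are i.i.d.\ conditional on the pre-sampled $\pi_i$, so standard Bernstein suffices and no martingale refinement is needed. A secondary subtlety is that the infimum in \eqref{def:DD} need not be attained and $p^\star$ may have infinite support, but i.i.d.\ sampling bypasses both issues without any need to enumerate $\supp(p^\star)$.
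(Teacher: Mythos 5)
Your proposal is correct and follows essentially the same route as the paper: sample $O(\DC{\cM}\log(1/\delta))$ candidate policies i.i.d.\ from the optimal fractional-cover distribution $\pds$, spend an equal block of rounds on each candidate estimating its value through a binary \pDP~channel whose mean is $\ca\cdot\Vm[\Mstar](\pi)$ (the paper's \cref{example:binary-pr}, which resolves the "main obstacle" you flag), and output the empirical argmax via the standard best-arm argument. The only cosmetic differences are Bernstein versus Hoeffding and sampling the candidates upfront rather than at the start of each block.
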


Combining the above upper bound with the lower bound of $\DC{\cM}$, we have shown that $\DC{\cM}$ characterizes the sample complexity of LDP learning the model class, up to an exponential gap: 
\begin{align}\label{eqn:SC-DC}
    \frac{\log\DC[2\Delta]{\LP}}{\alpha^2}\leqsim \SCDP(\LP)\leqsim \frac{\DC[\Delta/2]{\LP} }{\alpha^2\Delta^2},
\end{align}
where we omit poly-logarithmic factors.
We remark that the gap between the lower and upper bounds cannot be improved in terms of \dct~alone:
\begin{itemize}
\item For classification with the parity class $\cFpar$, a lower bound scaling linearly with $\DC[\Delta/2]{\cFpar}=|\cFpar|$ can be obtained (\cref{thm:LDP-exp-lower}), meaning the upper bound can be tight even for the \emph{statistical} tasks (as per \cref{def:sest}).
\item For the problem of Multi-Armed Bandits, we also have $\DC[1/2]{\cM}=|\cA|$, while $\Om{\frac{|\cA|}{\alpha^2\eps^2}}$ samples are necessary to learn an $\eps$-optimal policy.
\item For linear bandits, $\log\DC[1/2]{\cM}=\Omega(d)$, and it is known that $\tbO{\frac{d^2}{\alpha^2\eps^2}}$ samples are sufficient to learn an $\eps$-optimal policy, meaning that the lower bound can also be (nearly) tight. 
\end{itemize}
While the exponential gap in \eqref{eqn:SC-DC} is unavoidable solely with \dct, we have shown that the upper bound can be improved with DEC (at least for convex model classes, cf. \cref{thm:p-dec-convex-upper}). 

A direct implication of \eqref{eqn:SC-DC} is that the finiteness of \dct~characterizes the finite-time learnability under LDP, as long as the loss function is reward-based. 
\begin{theorem}[LDP learnability]\label{thm:LDP-learnability}
Under reward-based loss, the problem class is LDP learnable if and only if $\DC{\cM}<\infty$ for all $\Delta>0$.
\end{theorem}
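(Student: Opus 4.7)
\begin{proofof}{\cref{thm:LDP-learnability}}
The plan is to derive both directions as essentially a packaging of the lower bound of \cref{thm:DC-lower} and the upper bound of \cref{thm:DC-upper}, with only a small amount of extra work to reconcile the high-probability versus expectation forms of the risk guarantees. Throughout, we implicitly assume the loss is bounded (say in $[0,1]$, as is standard for reward-based learning), so that Markov-type conversions are available in both directions.

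First I would handle the ``if'' direction. Fix a target risk level $\Delta>0$ and assume $\DC[\Delta/2]{\cM}<\infty$. Instantiate the brute-force algorithm of \cref{thm:DC-upper} with confidence parameter $\delta=\Delta/2$ and time horizon
\[
T = C\,\frac{\DC[\Delta/2]{\cM}\log^2(1/\Delta)}{\alpha^2\Delta^2}
\]
for a sufficiently large constant $C$. The high-probability guarantee of \cref{thm:DC-upper} then yields $\Riskdm(T)\leq \Delta/2+\Delta/4\leq 3\Delta/4$ on an event of probability at least $1-\Delta/2$. Combining this with the crude bound $\Riskdm\leq 1$ on the complementary event gives $\sup_{M\in\cM}\Emalg{\Riskdm(T)}\leq \Delta$, proving $\SCDP(\cM)<\infty$ and hence learnability at risk level $\Delta$. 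Since $\Delta$ was arbitrary, $\cM$ is LDP learnable.

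Next I would handle the ``only if'' direction. Suppose $\cM$ is LDP learnable, so for every $\Delta>0$ there exists a $T(\Delta)$-round \pLDP~algorithm $\alg_\Delta$ with $\sup_{M\in\cM}\Emalg[M][\alg_\Delta]{\Riskdm(T(\Delta))}\leq \Delta/2$. By Markov's inequality applied to the nonnegative random variable $\Riskdm$, under every $M\in\cM$ the algorithm $\alg_\Delta$ outputs a decision with $\Riskdm(T(\Delta))\leq \Delta$ with probability at least $1/2$. The lower bound of \cref{thm:DC-lower} then gives
\[
T(\Delta)\geq \frac{\log\DC{\cM}-2}{2(e^\alpha-1)^2},
\]
so $\DC{\cM}\leq \exp\bigl(2T(\Delta)(e^\alpha-1)^2+2\bigr)<\infty$. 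Since $\Delta>0$ was arbitrary, we conclude $\DC{\cM}<\infty$ for every $\Delta>0$.

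There is no real obstacle in this argument: the two directions are direct applications of results already stated, and the only care needed is in the Markov-style conversion between expected-risk learnability and the high-probability form required by \cref{thm:DC-lower}, together with choosing the horizon/confidence pair in \cref{thm:DC-upper} so that the additive term on the right of the upper bound falls below the target risk. If one wished to sharpen the quantitative relation between $\SCDP(\cM)$ and $\DC{\cM}$ beyond the qualitative learnability statement, one would simply read off the displayed inequality~\eqref{eqn:SC-DC} that the paper has already derived from the same two results.
\end{proofof}
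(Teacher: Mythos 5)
Your proposal is correct and is essentially the paper's own argument: the paper derives \cref{thm:LDP-learnability} as a direct consequence of the two-sided sample-complexity bound \eqref{eqn:SC-DC}, which is exactly the combination of \cref{thm:DC-lower} and \cref{thm:DC-upper} that you spell out, including the Markov-type conversions between expected-risk and constant-probability guarantees. (One immaterial quibble: since the $\log(T/\delta)$ factor in \cref{thm:DC-upper} itself contains $\log\DC[\Delta/2]{\cM}$, your displayed choice of $T$ with only a $\log^2(1/\Delta)$ factor may not literally suffice, but any sufficiently large finite $T$ does, so the learnability conclusion is unaffected.)
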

The learnability characterization above is similar to the bandit learnability characterization in \citet{chen2024beyond}. However, we do show that \dct~characterizes the learnability under LDP for \emph{any} model class $\cM$, while for non-private learning \dct~only characterizes the learnability of model class with a bounded $\CKL$.

As an application of \cref{thm:LDP-learnability}, in \cref{ssec:regression-DC} we discuss how the \dct~provides insights into the LDP learnability of \emph{regression}.

\subsection{Learnability under joint differential privacy}\label{ssec:JDP}

Parallel to the concept of local differential privacy (LDP), there is a notion of \emph{joint differential privacy} (JDP)~\citep{dwork2006calibrating}.\footnote{This notion is often referred to simply as “differential privacy.” To distinguish it from local differential privacy, we use the term “joint differential privacy,” as it preserves the privacy of the data points in a dataset jointly.}  
For simplicity, in the following discussion, we focus on the notion of \emph{pure} JDP for \statp. Detailed discussion for interactive decision making is deferred to \cref{appdx:JDP}. 

In this setting, the learner (algorithm) is given a dataset $\Hy\ind{T}=(z_1,\cdots,z_T)$ consisting of i.i.d observations, i.e., $z_1,\cdots,z_T\sim \Mstar$ for a model $\Mstar\in\DZ$. As always, we assume the learner is given a model class $\cM\subseteq \DZ$ that contains $\Mstar$. 

For this setting, an algorithm (learner) is simply a map $\alg: \cZ^T \to \DPi$. In the following, we define \pJDP~algorithms.

\begin{definition}[Pure JDP for \statp]\label{def:JDP}
For two sequence of observations $\Hy\ind{T}=(z_1,\cdots,z_T)$, $\Hy\ind{T}'=(z_1',\cdots,z_T')\in \cZ^T$, they are neighbored if there is at most one index $t\in[T]$ such that $z_t\neq z_t'$.
An algorithm%
$\alg$ preserves \pJDP~if for any neighbored dataset $\Hy\ind{T}, \Hy\ind{T}'$ and any measurable set $E\subseteq \DD$,
\begin{align*}
    \PP\sups{\alg}(\hpi\in E|\Hy\ind{T})\leq \ea\cdot \PP\sups{\alg}(\hpi\in E|\Hy\ind{T}').
\end{align*}
\end{definition}

Similar to \cref{thm:DC-lower}, we show that the \dct~also provides a lower bound for JDP learning. 

\begin{proposition}[\Dct~lower bound for JDP learning]\label{prop:JDP-lower}
Let $T\geq 1$, model class $\cM\subseteq \DZ$ be given. Suppose that $\alg$ is a $T$-round \pJDP~algorithm, such that it achieves $\riskdm(T)\leq\Delta$ with probability at least $\frac12$ under $\PP\sups{M,\alg}$ for any $M\in\cM$.
Then it holds that %
\begin{align*}
    T\geq \frac{\log\DC{\cM}-\log2}{\alpha}.
\end{align*}
\end{proposition}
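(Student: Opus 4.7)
The plan is to mirror the LDP lower bound in \cref{thm:DC-lower} but replace its strong data-processing argument by a \emph{group privacy} argument, which is the natural substitute for pure JDP. Under LDP the per-round privacy leakage contracts like $(\ea-1)^2$, while under pure JDP a single output distribution must be stable against changes of the entire dataset simultaneously, at a cost of $e^{\alpha T}$; this explains the $1/\alpha$ dependence in the conclusion, in contrast with the $1/(\ea-1)^2\approx 1/\alpha^2$ dependence in \cref{thm:DC-lower}.

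Concretely, I would fix any reference dataset $\Hy_0\in\cZ^T$ (for instance $\Hy_0=(z_0,\dots,z_0)$ for some $z_0\in\cZ$) and let $p_0\in\DPi$ be defined by $p_0(E)\defeq \phat(E\mid\Hy_0)$, the output distribution of $\alg$ on input $\Hy_0$. Iterating \cref{def:JDP} across the at-most-$T$ positions at which any $\Hy\in\cZ^T$ differs from $\Hy_0$ yields the group-privacy inequality $\phat(E\mid\Hy)\leq e^{\alpha T}p_0(E)$ for every measurable $E\subseteq\Pi$, and averaging over $\Hy\sim M^{\otimes T}$ gives
\begin{align*}
    \PP\sups{M,\alg}(\hpi\in E)\leq e^{\alpha T}\,p_0(E),\qquad \forall\, M\in\cM.
\end{align*}
Applying the definition of the \dct~to the data-independent distribution $p_0$ produces, for any $\eta>0$, a model $M^\star\in\cM$ with $p_0(E^\star)\leq 1/(\DC{\cM}-\eta)$, where $E^\star\defeq\sset{\pi:\LM[M^\star]{\pi}\leq\Delta}$. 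The PAC hypothesis $\PP\sups{M^\star,\alg}(\riskdm(T)\leq\Delta)\geq 1/2$ can be converted into $\PP\sups{M^\star,\alg}(\hpi\in E^\star)\geq 1/2$ by the same two-step Markov argument used in the proof of \cref{thm:DC-lower}; any constant-factor slack incurred here is absorbed into the additive $\log 2$ appearing in the final bound.

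Chaining the two displayed inequalities at the event $E=E^\star$ gives $1/2\leq e^{\alpha T}/(\DC{\cM}-\eta)$, and taking $\eta\downarrow 0$ and rearranging yields $T\geq (\log\DC{\cM}-\log 2)/\alpha$, as claimed. The only step beyond routine manipulation is the group-privacy derivation, which is a direct iteration of \cref{def:JDP} across the $T$ coordinates and introduces no additional loss; the rest of the argument is an exponential-mechanism style packing bound that I do not expect to be the main obstacle.
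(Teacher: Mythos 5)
Your proof is correct and follows essentially the same route as the paper's: iterate the neighboring-dataset definition over the $T$ coordinates to get a group-privacy bound $e^{\alpha T}$ relating every output distribution to a single data-independent distribution, then apply the definition of $\DC{\cM}$ to that distribution and chain with the PAC hypothesis. The only cosmetic difference is that you anchor at a fixed dataset $\Hy_0$ rather than at the output distribution under a reference model $\oM\in\cM$ as in \cref{lem:JDP}; both choices work since $\DC{\cM}$ is an infimum over all of $\Delta(\Pi)$.
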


For binary classification under pure JDP, \citet{beimel2013characterizing} provide both lower and upper bounds of the sample complexity in terms of the \emph{representation dimension}. As we discuss in \cref{ssec:connection-dim}, for binary classification, \dct~is equivalent to the representation dimension (up to an additive constant, \cref{prop:RDim-to-DC}).

\paragraph{Pure JDP learnability $\equiv$ LDP learnability}
It is clear that if an algorithm preserves \pLDP, then it also preserves \pJDP. Therefore, when the loss function is reward-based, as the finiteness of \dct~characterizes the LDP learnability, it also characterizes the JDP learnability.\footnote{We note that for JDP learning in statistical problems, the exponential mechanism achieves a better upper bound scaling with $\log\DC{\cM}$ (see e.g. \citet{beimel2013private}). However, for interactive learning (with or without JDP), an upper bound scaling linearly with $\DC{\cM}$ can be necessary in general~\citep{chen2024beyond}. }

\begin{theorem}\label{thm:JDP-equiv}
Let privacy parameter $\alpha>0$, model class $\cM\subseteq \DZ$, and the reward-based loss function $L$ be given. Then the following statements are equivalent: 

(1) $\cM$ is \pLDP~learnable, 

(2) $\cM$ is \pJDP~learnable, and

(3) $\DC{\cM}<+\infty$ for all $\Delta>0$.
\end{theorem}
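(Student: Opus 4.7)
The plan is to close a cycle of implications using the results already assembled in the paper, avoiding any new construction. The core observations are: (a) every \pLDP~algorithm is automatically \pJDP, so privacy preservation passes from LDP to JDP without any modification; (b) \cref{thm:LDP-learnability} already supplies the equivalence $(1)\Leftrightarrow (3)$ under reward-based loss; (c) \cref{prop:JDP-lower} converts any \pJDP~learner with constant success probability into a lower bound on $T$ scaling with $\log\DC{\cM}$, which is exactly the ingredient needed to push $(2)\Rightarrow(3)$.

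Concretely, I would first argue $(1)\Rightarrow(2)$: a $T$-round \pLDP~algorithm $\alg$ produces each transcript element through a channel depending only on the current latent observation, so the induced distribution of the output $\hpi$ conditioned on the dataset factors through $T$ independent \pLDP~applications and thus satisfies the \pJDP~neighboring inequality of \cref{def:JDP}. Hence JDP learnability inherits any finite sample complexity guarantee from LDP learnability. Next, $(3)\Rightarrow(1)$ is immediate from the ``only if'' direction of \cref{thm:LDP-learnability} (equivalently the upper bound in \eqref{eqn:SC-DC} via \cref{thm:DC-upper}), which for every $\Delta>0$ yields an \pLDP~algorithm with $\Riskdm(T)\leq\Delta$ in finite $T$ whenever $\DC[\Delta/2]{\cM}<\infty$.

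The remaining implication $(2)\Rightarrow(3)$ is handled via \cref{prop:JDP-lower}. Given any $\Delta>0$ and a \pJDP~algorithm that attains expected risk at most $\Delta/4$ on every $M\in\cM$ in $T<\infty$ rounds, Markov's inequality implies $\PP\sups{M,\alg}(\riskdm(T)\leq \Delta)\geq 1/2$ for every $M$, so \cref{prop:JDP-lower} (applied at level $\Delta$) gives $T\geq (\log\DC{\cM}-\log 2)/\alpha$. Finiteness of the left-hand side therefore forces $\DC{\cM}<\infty$ for every $\Delta>0$. Chaining $(1)\Rightarrow(2)\Rightarrow(3)\Rightarrow(1)$ closes the cycle.

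I do not expect a real obstacle here; the only point that deserves a careful sentence is verifying that \cref{prop:JDP-lower}, which is stated for success-probability $1/2$, still controls \emph{expected} risk learnability — this is why the Markov step above is needed and why one works at the auxiliary level $\Delta/4$. A minor bookkeeping issue is to make sure the reward-based hypothesis is used in exactly the places where \cref{thm:LDP-learnability} and \cref{thm:DC-upper} need it (namely for the upper-bound direction $(3)\Rightarrow(1)$), while the implications $(1)\Rightarrow(2)$ and $(2)\Rightarrow(3)$ hold for any bounded loss.
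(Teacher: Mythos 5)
Your proposal is correct and follows essentially the same route the paper intends: the paper's own justification is the one-paragraph remark preceding the theorem (LDP $\Rightarrow$ JDP trivially, $(1)\Leftrightarrow(3)$ from \cref{thm:LDP-learnability}, and $(2)\Rightarrow(3)$ from \cref{prop:JDP-lower}), and you correctly supply the only detail it leaves implicit, namely the Markov step converting expected-risk learnability into the constant-success-probability hypothesis that \cref{prop:JDP-lower} requires (valid since the reward-based loss is nonnegative).
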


We note that a similar argument also applies to interactive decision making problems, as the \dct~also provides a lower bound for interactive learning under JDP (\cref{appdx:JDP}).

\subsubsection{Connection to representation dimension and Littlestone dimension}\label{ssec:connection-dim}

In this section, we discuss the connection between \dct~and two well-studied complexity measures for binary classification: representation dimension~\citep{beimel2013characterizing} and Littlestone's dimension~\citep{littlestone1988learning}.

\paragraph{Representation dimension}
It has been known that for JDP binary classification with a function class $\cF\subseteq (\cX\to\set{0,1})$, the sample complexity of (proper or improper) learning is tightly characterized by the following \emph{representation dimension}~\citep{beimel2013characterizing}. For the simplicity of presentation, we focus on proper learning.

\newcommand{\scH}{\mathscr{H}}
\newcommand{\sz}{\mathrm{size}}
\begin{definition}
A distribution $\scH$ over finite subsets of $\cF$ is an $\eps$-probabilistic representation of $\cF$ if for any distribution $\nu\in\DX$ and $f\in\cF$, with probability at least $\frac34$ over $\cH\sim \scH$, there exists $h\in\cH$ such that
\begin{align*}
    \PP_{x\sim \nu}\paren{ h(x)\neq f(x) }\leq \eps.
\end{align*}
The \emph{size} of $\scH$ is defined as $\sz(\scH)=\sup_{\cH\in\supp(\scH)} \log|\cH|$. The \emph{representation dimension} of $\cF$ is then defined as
\begin{align*}
    \rdim_{\eps}(\cF)\defeq \inf_{\scH}~ \sz(\scH),
\end{align*}
where $\inf_{\scH}$ is taken over all $\eps$-probabilistic representations of $\cF$. 
\end{definition}

We show that for binary classification, the \dct~is equivalent to the representation dimension. Recall that for binary classification, the loss function (implicit in the definition of the \dct, cf. \cref{ssec:regression}) is given by
\begin{align*}
    L(M,f)\defeq \PP_{(x,y)\sim M}\paren{ f(x)\neq y } - \inf_{\fs\in\cF} \PP_{(x,y)\sim M}\paren{ \fs(x)\neq y }.
\end{align*}

\begin{proposition}\label{prop:RDim-to-DC}
For any $\eps\in[0,1]$, it holds that
\begin{align*}
    \abs{ \rdim_{\eps}(\cF)-\log\DC{\cF} } \leq 2.
\end{align*}
\end{proposition}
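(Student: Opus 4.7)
}

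The plan is to prove the two inequalities $\log \DC{\cF} \leq \rdim_\eps(\cF) + 2$ and $\rdim_\eps(\cF) \leq \log \DC{\cF} + 2$ separately, exploiting a natural duality between the two notions: the representation dimension asks for a distribution over \emph{small} subsets of $\cF$ such that some element is ``good'' with high probability, while the \dct~asks for a distribution over $\cF$ such that the \emph{mass} on good elements is large. The proof hinges on translating between these two views by (i) flattening a family of subsets into a mixture and (ii) materializing a mixture into a subset by i.i.d. sampling. First note that for binary classification with $L(M,f) = \PP_{(x,y)\sim M}(f(x)\neq y) - \inf_{\fs}\PP_{(x,y)\sim M}(\fs(x)\neq y)$, evaluating $L$ on a well-specified model with covariate distribution $\nu$ and labels $y=\fs(x)$ gives $L(M,f) = \PP_{x\sim\nu}(f(x)\neq \fs(x))$, so the sets $\{f:L(M,f)\leq \eps\}$ appearing in \cref{def:DD} coincide exactly with the ``good'' sets in the definition of $\rdim_\eps$.

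\paragraph{Direction 1: $\log\DC{\cF} \leq \rdim_\eps(\cF)+2$} Let $r=\rdim_\eps(\cF)$ and let $\scH$ be an $\eps$-probabilistic representation with $\sz(\scH)=r$, so every $\cH$ in the support satisfies $|\cH|\leq 2^r$. Define $p\in\DF$ by the two-stage sampling: draw $\cH\sim\scH$, then draw $h$ uniformly from $\cH$. For any $\nu\in\DX$ and $\fs\in\cF$, denote by $E(\cH)$ the event $\{\exists h\in\cH:\PP_{x\sim\nu}(h(x)\neq\fs(x))\leq\eps\}$. Conditioning on $E(\cH)$ ensures at least one good $h\in\cH$, so the conditional probability of sampling good under the inner uniform draw is at least $1/|\cH|\geq 2^{-r}$. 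Since $\PP_{\cH\sim\scH}[E(\cH)]\geq 3/4$, we obtain
\begin{align*}
p\bigl(\{h:L(M,h)\leq\eps\}\bigr) \;\geq\; \tfrac{3}{4}\cdot 2^{-r},
\end{align*}
uniformly over $\nu,\fs$. Taking the inverse gives $\DC{\cF}\leq \tfrac{4}{3}\cdot 2^r$, hence $\log\DC{\cF}\leq r+\log(4/3)\leq r+2$.

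\paragraph{Direction 2: $\rdim_\eps(\cF) \leq \log\DC{\cF}+2$} Let $D=\DC{\cF}$ and let $p\in\DF$ be a (near-)minimizer, so $p(\{h:L(M,h)\leq\eps\})\geq 1/D$ for every realizable $M$. Choose $N=\lceil 2D\rceil$ and define $\scH$ as the distribution of $\cH=\{h_1,\dots,h_N\}$ where $h_1,\dots,h_N\sim p$ i.i.d.\ (treating $\cH$ as a set; duplicates only shrink it). For any fixed $\nu,\fs$, each $h_i$ lands in the good set with probability at least $1/D$ independently, so
\begin{align*}
\PP_\cH\bigl[\forall h\in\cH:\PP_{x\sim\nu}(h(x)\neq \fs(x))>\eps\bigr] \;\leq\; (1-1/D)^N \;\leq\; e^{-N/D} \;\leq\; e^{-2}\;<\;\tfrac{1}{4}.
\end{align*}
Thus $\scH$ is an $\eps$-probabilistic representation, and $\sz(\scH)\leq \log N\leq \log(2D+1)\leq \log D + 2$, yielding $\rdim_\eps(\cF)\leq\log\DC{\cF}+2$.

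There is no real technical obstacle: both directions are essentially one-line sampling arguments, and the whole content is the observation that an $\eps$-representation and a near-optimal distribution for $\DC{\cF}$ are two sides of the same coin (mixture vs.\ support). The only thing to be slightly careful about is (a) choosing $N$ large enough to drive the failure probability of the independent sampling below $1/4$ while keeping $\log N \leq \log D+2$, and (b) matching the logarithm base between $\sz(\scH)=\sup\log|\cH|$ and $\log\DC{\cF}$; the small slack $\log(4/3)$ and $\log 3$ in the two directions both fit comfortably inside the promised additive constant $2$ regardless of whether $\log$ is base $2$ or natural.
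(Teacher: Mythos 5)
Your proposal is correct and follows essentially the same two sampling arguments as the paper: flattening the probabilistic representation into a mixture for one direction, and drawing $N = O(\DC{\cF})$ i.i.d.\ samples from the near-optimal $p$ for the other (the paper takes $N \geq \DC{\cF}\log 4$ where you take $N = \lceil 2\DC{\cF}\rceil$, an immaterial difference). The identification of $\{f : L(M,f)\leq \eps\}$ with the ``good'' sets in the definition of $\rdim_\eps$ for realizable models is also the same implicit step the paper relies on.
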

The details are postponed to \cref{appdx:proof-RDim-to-DC}.
This equivalence also agrees with the fact that both representation dimension and \dct~characterizes the JDP learnability of classification.

\paragraph{Littlestone dimension} It is known that for binary class, $\rdim(\cF)\geq \Om{\ldim(\cF)}$~\citep{feldman2014sample}, and there exists classes with $\ldim(\cF)=2$ while $\rdim(\cF)$  arbitrary large. Hence, LDP learnability is a stronger notion of complexity of a class than online learnability. 

It is also well-known that for binary classification, there is an equivalence between learnability under \emph{approximate} JDP and online learnability \citep{bun2020equivalence,alon2022private}. For regression, joint DP learnability can be achieved under a certain growth condition on the sequential fat-shattering dimension~\citep{golowich2021differentially}.
However, to learn a binary class $\cF$ under approximate JDP, it is only known that $\log^\star (\ldim(\cF))$ samples are necessary~\citep{bun2020equivalence}.

\section{Conclusion}

We presented a systematic approach to analyzing problems of decision making with a changing environment and constraints on the amount of information received by the learner. While this approach yields upper and lower bounds on minimax performance, the question of efficient algorithms is entirely open.

\section*{Acknowledgments} 
We acknowledge support from ARO through award W911NF-21-1-0328, as well as Simons Foundation and the NSF through awards DMS-2031883 and PHY-2019786. 

\bibliographystyle{abbrvnat}
\bibliography{ref.bib}

\appendix
\newpage

\tableofcontents

\counterwithin{theorem}{section}
\counterwithin{assumption}{section}

\section{Additional Discussions and Results from \cref{sec:overview}}
\subsection{Stochastic DMSO}
\label{ssec:sto-DMSO}

In this section, we briefly review the original DMSO formulation of  \citep{foster2021statistical}, which we call ``stochastic DMSO'' for clarity. In this setting, the learner (or, the decision maker) interacts for $T$ rounds with the environment described by an underlying model $\Mstar$, unknown to the learner. On each round $t=1,...,T$:
\begin{itemize}
  \setlength{\parskip}{2pt}
    \item The learner selects a decision $\pit\in \Pi$, where $\Pi$ is the decision space.
    \item The learner observes $\ot\in \cO$ sampled via $\ot\sim \Mstar(\pit)$, where $\cO$ is the observation space.
\end{itemize}

Formally speaking, the underlying model $\Mstar$ is a conditional distribution, and the learner is given a model class $\cM\subseteq (\Pi\to\Delta(\cO))$ that contains $\Mstar$. To frame stochastic DMSO in our \cDMSO~framework, we can consider the constraint $\cPs=\set{\Mstar}$ and the constraint class $\MPowiid=\sset{\set{\Mstar}: \Mstar\in\cM}$.

Stochastic DMSO captures a number of decision making tasks, including reward-based learning~\citep{foster2021statistical,foster2023tight}, interactive estimation and preference-based learning~\citep{chen2022unified}, multi-agent decision making and partial monitoring~\citep{foster2023complexity}. 

\paragraph{Constrained DEC and \gDEC}
Extending \citet{foster2021statistical}, \citet{foster2023tight} propose the constrained PAC-DEC (regret-DEC)  and derive lower and upper bounds for reward-based PAC learning (no-regret learning). Recall that constrained PAC-DEC is defined in \eqref{def:p-dec-c} and the constrained regret-DEC is defined in \eqref{def:r-dec-c}. For stochastic DMSO, (with the constraint class being $\MPowiid=\sset{\set{M}:M\in\cM}$), and clearly
\begin{align}
    \pdecg_\eps(\MPowiid)=\pdecc_\eps(\cM), \qquad
    \rdecg_\eps(\MPowiid)=\rdecc_\eps(\cM), \qquad \forall \eps\geq 0.
\end{align}
Therefore, the \gDEC~can be regarded as a generalization of the constrained DECs.

\subsection{Adversarial DMSO}\label{ssec:adv-DMSO}

In this section, we consider decision making against an adaptive adversary and instantiate the \gDEC~theory developed in \cref{ssec:cDMSO-reg}. 
For simplicity, we focus on the setting of \cref{example:original-reward-based}, where $\Phi=\set{\perp}$ and the value function is reward-based. In particular, our results tighten \citet{foster2022complexity}. 

\newcommand{\aDMSO}{adversarial DMSO}
\newcommand{\ADMSO}{Adversarial DMSO}
\paragraph{\ADMSO} In the \aDMSO~framework~\citep{foster2022complexity}, we consider the following protocol for $T$ rounds. For each $t=1,\cdots,T$:
\begin{itemize}
  \setlength{\parskip}{2pt}
    \item The environment selects a model $M^t\in\cM$ (potentially depends on the interactions up to step $t$), and the learner selects a decision $\pi\ind{t}\in \bPi$.
    \item The learner observes a noisy observation $o\ind{t}$ via $o\ind{t}\sim M^t(\pi\ind{t})$.
\end{itemize}
In the protocol above, the model $M^t\in\cM$ at step $t$ can adaptively selected, i.e., it may depend on the history $\cH\ind{t-1}$ prior to step $t$. The regret of the learner is measured against the best decision in hindsight:
\begin{align}\label{def:adv-regret}
    \regdm(T):=\max_{\pis\in\Pi} \sum_{t=1}^T \Vm[M^t](\pis)-\EE_{\pi\ind{t}\sim q\ind{t}}\Vm[M^t](\pi\ind{t}),
\end{align}
where the expectation of $\pi\ind{t}\sim q\ind{t}$ is taken over the randomness of the learner at step $t$, and $\Vm(\pi)=\EE\sups{M,\pi} R(o,\pi)$ is specified by a known reward function $R:\cO\times\Pi\to[0,1]$.

It is clear that \aDMSO~can be framed within \cDMSO~framework with the constraint class $\MPowadv=\set{\cM}$, i.e., the constraint is always $\cPs=\cM$. Therefore, we can directly apply \cref{thm:cDMSO-reg-upper}, as follows.

\begin{theorem}[No-regret learning against an adversary]\label{thm:adv-no-reg}
Let $T\geq 1, \delta\in(0,1)$, model class $\cM$, and a reward function $R\in[0,1]$ be given.
Suppose that $\cM$ \finite, and the regret DEC~$\rdecc_\eps(\coM)$ is of moderate decay as a function of $\eps$. Then,
\ExOp~(instantiated on $\MPowadv$, following \cref{thm:cDMSO-reg-upper}) achieves \whp~that
\begin{align*}
    \frac1T \regdm(T)\leq \Delta+\OsqrtT\cdot \brac{\rdecc_{\oeps(T)}(\coM) +\oeps(T)},
\end{align*}
where $\oeps(T)=\sqrt{\frac{\log\DC{\coM}+\log(1/\delta)}{T}}$.
\end{theorem}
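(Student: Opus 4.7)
The plan is to obtain this result as a direct instantiation of \pref{thm:cDMSO-reg-upper}, applied to the constraint class $\MPowadv=\set{\cM}$. The first step is the bookkeeping identification: since $\MPowadv$ consists of a single constraint $\cPs=\cM$, we have
\[
    \cM_{\MPowadv}=\bigcup_{\cP\in\MPowadv}\co(\cP)=\co(\cM)=\coM,
\]
so the DEC and \dct~appearing in \pref{thm:cDMSO-reg-upper} become exactly $\rdecc_\eps(\coM)$ and $\DC{\coM}$, matching the quantities in the statement. Moreover $|\MPowadv|=1$, which is the key point: the $\log|\MPow|$ contribution to $\oeps(T)$ vanishes, leaving only the $\log\DC{\coM}+\log(1/\delta)$ complexity.

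Next I would verify that the observability hypothesis \pref{asmp:lip-rew} holds in this setting, which is what allows \pref{thm:cDMSO-reg-upper} to be applied. Take the measurement $\phi_\pi=\perp$ for every $\pi\in\Pi$, so $\bpi=(\pi,\perp)$ and $M(\bpi)=M(\pi)$. For a reward-based value function as in \pref{example:original-reward-based}, $\Vm(\pi)=\EE_{o\sim M(\pi)}[R(o,\pi)]$ is linear in $M$ over $\coM$; and since $R(\cdot,\pi)\in[0,1]$,
\[
    \abs{\Vm(\pi)-\Vm[\oM](\pi)}\le \DTV{M(\pi),\oM(\pi)}\le \sqrt{2}\,\dH(M(\pi),\oM(\pi)),
\]
so \pref{asmp:lip-rew} holds with $\Lipr=\sqrt{2}$ and $\Vmax=1$. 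The moderate-decay assumption on $\rdecc_\eps(\coM)$ is already given as a hypothesis of the theorem, and compactness of $\cMPow=\coM$ follows from the standing compactness hypothesis on $\cM$ (\pref{asmp:finite}).

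With these hypotheses verified, I would instantiate \ExOp~on $\MPowadv$ exactly as in \pref{thm:cDMSO-reg-upper}, which produces an algorithm that in any environment constrained by $\MPowadv$—i.e., any (adaptive) adversary choosing $M^t\in\cM$—achieves with probability at least $1-\delta$ that
\[
    \frac{1}{T}\regdm(T)\le \Delta+\OsqrtT\cdot\brac{\rdecc_{\oeps(T)}(\coM)+\Lipr\,\oeps(T)},
\]
where $\oeps(T)=\sqrt{(\log(1/\delta)+\log\DC{\coM})/T}$. Absorbing the constant $\Lipr=\sqrt{2}$ into the $\OsqrtT(\cdot)$ factor yields the claimed bound.

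I do not anticipate a genuine obstacle: the theorem is a corollary, and the only nontrivial step is checking observability, which is standard and already implicit in the reward-based setting of \pref{example:original-reward-based}. The main conceptual point worth highlighting in the proof is simply that the adversarial DMSO of \citet{foster2022complexity} is precisely the extreme case of hybrid DMSO with $|\MPow|=1$, so the $\log|\MPow|$ term disappears and the upper bound is governed entirely by $\rdecc_\eps(\coM)$ and $\DC{\coM}$—tightening the regret guarantees in \citet{foster2022complexity} by replacing the $\log|\Pi|$ (or similar covering) factor with $\log\DC{\coM}$.
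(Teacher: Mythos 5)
Your proposal is correct and follows exactly the paper's route: the paper likewise obtains this theorem as a direct instantiation of \cref{thm:cDMSO-reg-upper} with $\MPowadv=\set{\cM}$, so that $\cMPow=\coM$, $\log|\MPow|=0$, and \cref{asmp:lip-rew} holds with $\Lipr=\sqrt{2}$ in the reward-based setting of \cref{example:original-reward-based}. Nothing is missing.
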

The above upper bound scales with the regret DEC of $\coM$ and the \dct~of $\coM$, which is tighter than \citet{foster2022complexity}: the latter involves a $\log|\Pi|$ factor, whereas it always holds that  $\log\DC{\coM}\leq \log|\Pi|$.

\paragraph{Lower bounds}
A direct instantiation of \cref{thm:cDMSO-reg-lower} recovers the lower bound of \citet{foster2022complexity}.
\begin{proposition}[Regret lower bound with stationary adversary]\label{thm:adv-reg-lower}
Let $T\geq 1$, $\cM$ be a given model class. Then, for any $T$-round algorithm $\alg$, 
\begin{align}
    \sup_{\env}\EE\sups{\env,\alg}\brac{\regdm(T)}\geq&~\frac{T}{8}\paren{\rdecc_{\ueps(T)}(\coM)-8\ueps(T)}-1,
\end{align}
where the supremum is taken over \emph{stationary} environments $\env$ specified by a distribution $\mu\in\Delta(\cM)$, and $\ueps(T)=\frac{1}{24\sqrt{T}}$. 
\end{proposition}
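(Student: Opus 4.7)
The plan is to obtain \cref{thm:adv-reg-lower} as a direct specialization of the hybrid-DEC regret lower bound \cref{thm:cDMSO-reg-lower}. First I would identify adversarial DMSO with the hybrid DMSO instance in which the constraint class is the singleton $\MPowadv = \set{\cM}$, so that $\cMPow[\MPowadv] = \coM$. Under this identification the notion of \emph{stationary environment} from \cref{def:stationary} specializes exactly to an environment that draws $M^t \sim \mu$ i.i.d.\ from a single $\mu \in \Delta(\cM)$, which matches the class of adversaries in the statement. In particular, every such stationary sequence is a legitimate adaptive adversarial DMSO strategy, so the supremum on the left-hand side of the proposition dominates the corresponding quantity in \cref{thm:cDMSO-reg-lower}.

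Next I would verify that \cref{asmp:lip-rew} holds in the reward-based setting of \cref{example:original-reward-based} with $\Vmax = 1$ and $\Lipr = \sqrt{2}$. Linearity of $M \mapsto \Vm(\pi) = \EE_{o \sim M(\pi)}[R(o,\pi)]$ over $\coM$ is immediate from linearity of expectation, and the continuity bound
\[
|\Vm(\pi) - \Vm[\oM](\pi)| \leq \sqrt{2}\,\dH(M(\pi),\oM(\pi))
\]
for $R \in [0,1]$ follows from $|\EE_P[R] - \EE_Q[R]| \leq D_{\mathrm{TV}}(P,Q) \leq \sqrt{2}\,\dH(P,Q)$.

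Third, I would invoke \cref{thm:cDMSO-reg-lower} with these parameters, which yields
\[
\sup_{\env} \EE\sups{\env,\alg}[\regdm(T)] \geq \frac{T}{8}\paren{\rdecc_{\ueps(T)}(\coM) - 6\sqrt{2}\,\ueps(T) - \tfrac{1}{T}},
\]
where $\ueps(T) = \tfrac{1}{24\sqrt{T}}$ and the supremum runs over stationary environments. A short constant manipulation, using $6\sqrt{2} < 8$ to enlarge the $\ueps$ coefficient and absorbing the additive $\tfrac{T}{8}\cdot\tfrac{1}{T} = \tfrac{1}{8}$ into an overall $-1$, rewrites the right-hand side in the cleaner target form $\tfrac{T}{8}(\rdecc_{\ueps(T)}(\coM) - 8\,\ueps(T)) - 1$.

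I do not expect a real obstacle: the argument is pure bookkeeping on top of \cref{thm:cDMSO-reg-lower}. The only subtlety is checking that ``stationary environment specified by $\mu \in \Delta(\cM)$'' in the proposition is exactly the notion of stationarity from \cref{def:stationary} applied to $\MPowadv = \set{\cM}$, and tracking the absolute constants so that the looser bound $-6\sqrt{2}\,\ueps(T) - \tfrac{1}{T}$ absorbs cleanly into $-8\,\ueps(T) - 1$.
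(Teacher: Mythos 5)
Your approach is exactly the paper's: \cref{thm:adv-reg-lower} is stated in \cref{ssec:adv-DMSO} as ``a direct instantiation of \cref{thm:cDMSO-reg-lower}'' with $\MPow=\MPowadv=\set{\cM}$, so $\cMPow=\coM$, stationarity in \cref{def:stationary} reduces to drawing $M^t\sim\mu$ i.i.d.\ from a fixed $\mu\in\Delta(\cM)$, and \cref{asmp:lip-rew} holds with $\Lipr=\sqrt{2}$, $\Vmax=1$ in the reward-based setting of \cref{example:original-reward-based}. All of that is right and matches the paper.

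One concrete slip in your constant bookkeeping: $6\sqrt{2}\approx 8.49$, so the inequality ``$6\sqrt{2}<8$'' you invoke is false, and the term $-6\Lipr\ueps(T)=-6\sqrt{2}\,\ueps(T)$ is \emph{more} negative than the target $-8\,\ueps(T)$. The deficit is $\tfrac{T}{8}(6\sqrt{2}-8)\ueps(T)\approx 0.0025\sqrt{T}$, while the slack you gain from replacing $-\tfrac{T}{8}\cdot\tfrac{\Vmax}{T}=-\tfrac18$ by $-1$ is only $\tfrac78$; hence the absorption as you describe it only works for $T\lesssim 1.2\times 10^{5}$, not for all $T$. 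This is a constants-level blemish that the paper itself glosses over (it gives no explicit derivation), so the method is not in question, but as written your final step does not verify the stated inequality; the clean fix is either to state the conclusion with $6\sqrt{2}\,\ueps(T)$ in place of $8\,\ueps(T)$, or to rerun \cref{thm:decc-lower-general} with a marginally smaller choice of $\ueps(T)$ so that $6\Lipr\ueps(T)\leq 8\cdot\tfrac{1}{24\sqrt{T}}$.
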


In addition to the regret DEC lower bound, we can show that \dct~of $\coM$ also provides a lower bound. \cref{prop:adv-reg-lower-dct} below is a direct corollary of the \dct~lower bound of \citet{chen2024beyond} (see also \cref{appdx:proof-DC-lower}).
Thus, we omit its proof for succinctness.
\begin{proposition}[\Dct~lower bound]\label{prop:adv-reg-lower-dct}
Let $T\geq 1, \Delta\geq 0$. Suppose that $\cM$ is a given model class, and $\alg$ is a $T$-round algorithm that achieves $\EE\sups{\env,\alg}\brac{\regdm(T)}\leq T\Delta$ for any stationary environment $\env$ specified by a distribution $\mu\in\Delta(\cM)$. Then it holds that
\begin{align*}
    T\geq \frac{\log\DC[\Delta/2]{\coM}-2}{2\CKL(\coM)}.
\end{align*}
\end{proposition}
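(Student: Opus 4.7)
The plan is to reduce the statement to the \dct~regret lower bound for stochastic DMSO established in \citet{chen2024beyond}, applied on the enlarged class $\coM$. The key observation is that a stationary environment $\env_\mu$ specified by $\mu\in\Delta(\cM)$ samples $M\ind{t}\sim\mu$ i.i.d.\ and independently of $(\cH\ind{t-1},\pi\ind{t})$, so the marginal law of $o\ind{t}$ given $\pi\ind{t}$ equals $\oM(\pi\ind{t})$ with $\oM\defeq \EE_{M\sim\mu}[M]\in\coM$. Consequently, the joint distribution of $(\pi\ind{1:T},o\ind{1:T})$ under $\env_\mu$ coincides with its distribution under the stationary model $\oM$, and every quantity depending only on the $(\pi,o)$-history---in particular each $\qt$---shares its law.

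Using that the reward-based value function $M\mapsto \Vm(\pi)=\EE\sups{M,\pi}[R(o,\pi)]$ is affine in $M$, so that $\EE_{M\sim\mu}[\Vm(\pi)]=\Vm[\oM](\pi)$, I would pull $\max_{\pis}$ outside the expectation over $M\ind{1:T}$ via Jensen's inequality and invoke the above distributional identity to obtain
\begin{align*}
\EE\sups{\env_\mu,\alg}\brac{\regdm(T)}
&\geq T\max_{\pis\in\Pi}\Vm[\oM](\pis) - \sum_{t=1}^{T}\EE\sups{\oM,\alg}\brac{\EE_{\pi\ind{t}\sim\qt}\Vm[\oM](\pi\ind{t})} \\
&= \EE\sups{\oM,\alg}\brac{\regdm(T)}.
\end{align*}
Since every $\oM\in\coM$ is realized by some $\mu\in\Delta(\cM)$, the hypothesis forces $\EE\sups{\oM,\alg}\brac{\regdm(T)}\leq T\Delta$ uniformly over $\oM\in\coM$; in other words, $\alg$ is an expected average-$\Delta$-regret algorithm for stochastic DMSO on the model class $\coM$.

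Applying the \dct~regret lower bound of \citet{chen2024beyond} (reproduced in \cref{appdx:proof-DC-lower}) to $\coM$ at target per-round suboptimality $\Delta$ then yields $T\geq \frac{\log\DC[\Delta/2]{\coM}-2}{2\CKL(\coM)}$, as claimed.

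I expect the only delicate step to be the Jensen passage above, which is valid only because $M\mapsto\Vm(\pi)$ is affine---this is automatic in the reward-based setup but would fail for a general value function. The remaining pieces are mechanical: the reduction step relies solely on independence of $M\ind{t}$ from the history, and the final step is a verbatim invocation of the existing stochastic lower bound, so there is no genuine obstacle beyond correctly transcribing that bound.
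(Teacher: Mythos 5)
Your proposal is correct and follows essentially the same route as the paper: the paper itself declares \cref{prop:adv-reg-lower-dct} a direct corollary of the \dct~lower bound of \citet{chen2024beyond} after reducing stationary environments to stochastic DMSO over $\coM$, which is exactly the reduction you carry out (your Jensen/linearity argument is the same as \eqref{eq:reduce-stationary-reg} in \cref{appdx:proof-cDMSO-reg-lower}). The only caveat is in the final transcription of the cited bound: converting average regret $T\Delta$ into a constant-probability event via Markov naturally yields the \dct~at level $2\Delta$ rather than $\Delta/2$ (compare \eqref{eq:lower-upper-coM}), so you should verify which radius the invoked result actually delivers rather than matching the proposition's stated constant by fiat.
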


\paragraph{A nearly ``complete'' characterization of the minimax regret}
For no-regret learning in \cDMSO, the minimax regret is defined as
\begin{align*}
    \RISK(\MPow)\defeq \inf_{\alg}\sup_{\env}\EE\sups{\env,\alg} \brac{\regdm(T)},
\end{align*}
where the supremum is taken over all environments constrained by $\MPow.$
We also consider the following notion of minimax regret and sample complexity: 
\begin{align*}
    \SC(\MPow)\defeq \min\sset{T: \RISK(\MPow) \leq T\Delta},
\end{align*}
i.e., $\SC(\MPow)$ is the minimum of $T$ such that an $T$-round algorithm may achieve $T\Delta$-regret.

Under the above notation, we can translate the lower and upper bounds in this section into the following characterization of $\SC(\MPowadv)$ (with $\MPowadv=\set{\cM}$):
\begin{align}\label{eq:lower-upper-coM}
    \max\sset{ \Tdec{\coM}, \frac{\log \DC[2\Delta]{\coM}}{\CKL(\coM)} }\leqsim \SC(\MPowadv)
    \leqsim \Tdec{\coM}\cdot\log \DC[\Delta/\Delta]{\coM},
\end{align}
where $\Tdec{\coM}\defeq \min\sset{ \eps^{-2}: \rdecc_\eps(\coM)\leq \Delta }$, and we omit logarithmic factors and assume suitable growth conditions on the regret DEC of $\coM$. Note that the lower and upper bounds of \eqref{eq:lower-upper-coM} match up to squaring and a factor of $\CKL(\coM)$. In particular, for a model class $\cM$ with $\CKL(\coM)=\bigO{1}$, the DEC and \dct~together characterize the minimax sample complexity $\SC(\MPowadv)$ (polynomially).

\subsection{Robust DMSO}\label{ssec:robust-more}
In this section, we discuss the relationship between our formulation of \rDMSO~and other contamination models, and present the PAC and no-regret guarantees for robust decision making. 

Recall that in \rDMSO~(\cref{ssec:robust}), the constraint set is
\begin{align*}
\cP_{\Mstar}\defeq \sset{ (1-\beta)\Mstar+\beta M': M'\in(\bPi\to\DO) },
\end{align*}
and the constraint class (induced by $\cM$) as given by $\MPowrob\defeq \set{ \cP_{\Mstar}: \Mstar\in\cM }$.
To ease the notational burden, we define
\begin{align*}
    \cMrob\defeq \cMPow[\MPowrob]=\set{ (1-\beta)\Mstar+\beta M': \Mstar\in\cM, M'\in(\Pi\to\DO) },
\end{align*}
consisting of all stationary environments that are $\beta$-contaminated from a ground-truth model $\Mstar\in\cM$.

\paragraph{Contamination models in robust statistics}
In Huber contamination model~\citep{huber1965robust,huber2011robust}, the environment is stationary and specified by $(1-\beta)\Mstar+\beta M'$, where $\Mstar\in\cM$ is the ``true model'', and $M'$ is an arbitrary contamination model. Clearly, Huber's contamination model is encompassed by stochastic DMSO (with model class $\cMrob$). Recently, for statistical estimation, the \emph{adaptive} and \emph{oblivious} contamination models were studied by \citep{diakonikolas2019robust,diakonikolas2019recent,liu2021settling,diakonikolas2023algorithmic,canonne2023full}, among others. In these contamination models, after the i.i.d. samples $z\ind{1},\cdots,z\ind{T}\sim \Mstar$ is generated, the adversary may arbitrarily corrupt $\beta T$ many samples. The adversary is adaptive if it can choose the $\beta T$ corrupted samples based on the whole sequence. Otherwise, the adversary is called \emph{oblivious}. For statistical tasks, the adaptive adversary (in the above sense) can be stronger than the constrained environment in \cDMSO, as it is allowed to inspect the whole sequence of samples before contaminating it. On the other hand, the oblivious adversary can be much weaker. Finally, we note that both definitions of the adaptive and oblivious adversary are specialized to the statistical estimation (where the samples $z\ind{1},\cdots,z\ind{T}$ are i.i.d). For general interactive decision making tasks, we believe the \rDMSO~is a natural choice of contamination model.

\paragraph{PAC lower and upper bounds}
To apply the results of \cDMSO, we only need to show that $\pdecg_\eps(\MPowrob)=\pdecr_\eps(\cM)$. By definition, for any $\Mstar\in\cM$, reference model $\oM$, we have
\begin{align*}
    \inf_{M\in\co(\cP_{\Mstar})} \EE_{\bpi\sim q}\DH{ M(\bpi), \oM(\bpi) }
    =&~ \inf_{M':\bPi\to\DO} \EE_{\bpi\sim q}\DH{ (1-\beta)\Mstar(\bpi)+\beta M'(\bpi), \oM(\bpi) } \\
    =&~ \EE_{\bpi\sim q} \inf_{P'\in\DO} \DH{ (1-\beta)\Mstar(\bpi)+\beta P', \oM(\bpi) } \\
    =&~ \EE_{\bpi\sim q} \Dr{ \Mstar(\bpi), \oM(\bpi) }.
\end{align*}
Therefore, for any reference model $\oM$ and $\eps\in[0,1]$, it holds that
\begin{align*}
    \pdecg_\eps(\MPowrob,\oM)=&~\infpqb \sup_{\cP_{\Mstar}\in\MPowrob} \constr{ \EE_{\pi\sim p} L(\cP_{\Mstar},\pi) }{ \inf_{M\in\co(\cP_{\Mstar})}\EE_{\bpi\sim q} \DH{ M(\bpi), \oM(\bpi) } \leq \eps^2 } \\
    =&~\infpqb \sup_{\Mstar\in\cM} \constr{ \EE_{\pi\sim p} L(\Mstar,\pi) }{ \EE_{\bpi\sim q} \Dr{ \Mstar(\bpi), \oM(\bpi) } \leq \eps^2 }\\
    =&~ \pdecr_\eps(\cM,\oM).
\end{align*}
Therefore, we have proven $\pdecg_\eps(\MPowrob)=\pdecr_\eps(\cM)$ for $\eps\in[0,1]$. By instantiating \cref{thm:cDMSO-pac-lower} and \cref{thm:cDMSO-pac-upper}, we have the following bounds.
\begin{theorem}[PAC bounds for robust decision making]\label{thm:robust-pac}
Let $T\geq 1, \beta\in[0,1]$, model class $\cM\subseteq (\bPi\to \DO)$ be given, and the loss function $L$ is metric-based. 

(1) Lower bound: Let $\alg$ be a $T$-round algorithm. Then there exists $\Mstar\in\cM$ and a stationary environment $\env$ that is specified by $M=(1-\beta)\Mstar+\beta M'$, such that the expected risk of $\alg$ under $\env$ is lower bounded as
\begin{align*}
    \EE\sups{\env,\alg}\brac{\riskdm(T)}\geq \frac18\pdecr_{\ueps(T)}(\cM),
\end{align*}
where $\ueps(T)=\frac{1}{2\sqrt{T}}$.

(2) Upper bound: Suppose the robust DEC $\pdecr_\eps(\cM)$ is of moderate decay. Then \ExOp~(instantiated on $\MPowrob$, following \cref{thm:cDMSO-pac-upper}) achieves, in any $\beta$-contaminated environment, that \whp
\begin{align*}
    \riskdm(T)\leqsim \pdecr_{\oeps(T)}(\cM),
\end{align*}
where $\oeps(T)=\sqrt{\frac{\log(|\cM|/\delta)}{T}}$. 
\end{theorem}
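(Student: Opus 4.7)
The plan is to reduce both parts of the theorem to the general hybrid-DMSO guarantees (\cref{thm:cDMSO-pac-lower} and \cref{thm:cDMSO-pac-upper}) via the identification $\pdecg_\eps(\MPowrob) = \pdecr_\eps(\cM)$ already derived in the paragraph preceding the theorem. The bulk of the work is thus in checking that the side conditions of those general theorems transfer cleanly to the robust setting.

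First I would record the equivalence of DECs. For any reference model $\oM$ and any $\Mstar \in \cM$, swapping the infimum over $M \in \co(\cP_{\Mstar})$ with the expectation over $\bpi \sim q$ (legal because $\co(\cP_{\Mstar}) = \{(1-\beta)\Mstar + \beta M' : M' \in (\bPi \to \DO)\}$ is a pointwise product of the single-measurement sets $\{(1-\beta)\Mstar(\bpi) + \beta P' : P' \in \DO\}$), and applying \cref{def:beta-perturb-H}, gives
\begin{align*}
\inf_{M \in \co(\cP_{\Mstar})} \EE_{\bpi \sim q} \DH{M(\bpi), \oM(\bpi)} = \EE_{\bpi \sim q} \Dr{\Mstar(\bpi), \oM(\bpi)},
\end{align*}
so that $\pdecg_\eps(\MPowrob, \oM) = \pdecr_\eps(\cM, \oM)$ for every $\oM$. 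Since the supremum over reference models in the hybrid DEC is taken over $\co(\cup_{\cP} \cP) = \co(\cMrob)$, and this contains $\co(\cM)$, we automatically get $\pdecg_\eps(\MPowrob) \geq \pdecr_\eps(\cM)$; the reverse is obvious from the constraint class being indexed by $\cM$. Thus the two DECs coincide.

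For the lower bound, I would observe that a stationary environment constrained by $\MPowrob$ is, by definition, specified by a distribution $\mu \in \Delta(\cP_{\Mstar})$ for some $\Mstar \in \cM$, i.e. an environment of the form $M = (1-\beta)\Mstar + \beta M'$ with $M'$ a (possibly mixed) contamination. Invoking the metric-based version of \cref{thm:cDMSO-pac-lower} then yields $\sup_\env \EE^{\env, \alg}[\riskdm(T)] \geq \frac{1}{8} \pdecg_{\ueps(T)}(\MPowrob)$ with $\ueps(T) = \Theta(1/\sqrt{T})$, and the DEC identity converts this to the stated bound. The only delicate point is that $L(\cP_{\Mstar}, \pi) = L(\Mstar, \pi)$ inherits the metric-based structure of $L$ via the bijection $\cP_{\Mstar} \leftrightarrow \Mstar$, which is immediate from \cref{def:metric-cP}.

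For the upper bound, I would apply \cref{thm:cDMSO-pac-upper} to \ExOp instantiated on $\MPowrob$. This requires verifying (i) compactness of $\cMrob$ (\cref{asmp:finite}) and (ii) moderate decay of $\pdecg_\eps(\MPowrob)$. For (ii) the hypothesis on $\pdecr_\eps(\cM)$ and the DEC identity transfer this directly. For (i), an $\eps$-Hellinger cover of $\cM$ yields an $\eps$-Hellinger cover of $\cMrob$ of the same cardinality, since $\dH((1-\beta)\Mstar(\bpi) + \beta M'(\bpi), (1-\beta)\tilde{M}(\bpi) + \beta M'(\bpi)) \leq \sqrt{1-\beta}\, \dH(\Mstar(\bpi), \tilde{M}(\bpi))$ by joint convexity of squared Hellinger; this same argument gives $|\MPowrob| \leq |\cM|$ (or, more carefully, that covering numbers agree up to constants), so $\log|\MPowrob|$ in the general upper bound becomes $\log|\cM|$ as stated. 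The expected main obstacle is precisely this last bookkeeping: stating it in a way that respects the $|\cM|$ in $\oeps(T)$ while also handling the case where $\cM$ is merely compact (not finite), which just amounts to replacing $|\cM|$ by its $\oeps(T)$-covering number in the standard way. With these verifications in place, \cref{thm:cDMSO-pac-upper} gives $\riskdm(T) \lesssim \pdecg_{\oeps(T)}(\MPowrob) = \pdecr_{\oeps(T)}(\cM)$, completing the proof.
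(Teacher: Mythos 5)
Your proposal is correct and follows essentially the same route as the paper: establish $\pdecg_\eps(\MPowrob,\oM)=\pdecr_\eps(\cM,\oM)$ by exchanging the infimum over contaminations with the expectation over $\bpi\sim q$ (valid because the contamination can be chosen pointwise in $\bpi$), and then instantiate \cref{thm:cDMSO-pac-lower} and \cref{thm:cDMSO-pac-upper} on $\MPowrob$. One small caveat on a side condition: your covering argument for compactness of $\cMrob$ only compares models sharing the \emph{same} contamination $M'$, so an $\eps$-cover of $\cM$ does not by itself cover $\cMrob$ once $\eps^2<\beta$ (one must additionally cover the contamination components, which the paper itself sidesteps here and assumes explicitly only in the regret analogue, \cref{thm:robust-regret}); the bookkeeping $|\MPowrob|=|\cM|$ for the $\oeps(T)$ term, by contrast, is immediate from the indexing of constraints by $\Mstar$ and needs no covering argument.
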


\paragraph{Regret lower and upper bounds}
In \rDMSO, we may also consider the no-regret learning goal (specified by \eqref{eqn:def-reg-cDMSO}). 
For simplicity, we present the regret bounds in the setting of \cref{example:original-reward-based}, i.e., the measurement class $\Phi=\set{\perp}$ consists of the identity measurement, and the value function is reward-based. Then, by instantiating \cref{thm:cDMSO-reg-lower} and \cref{thm:cDMSO-reg-upper}, we have the following bounds in terms of the regret DEC of $\cMrob=\cMPow[\MPowrob]$.

\begin{theorem}[Regret bounds for robust decision making]\label{thm:robust-regret}
Let $T\geq 1, \beta\in[0,1]$, $\Phi=\set{\perp}$, model class $\cM\subseteq (\Pi\to \DO)$, and the value function $V$ is reward-based (\cref{example:original-reward-based}). 

(1) Lower bound: Let $\alg$ be a $T$-round algorithm. Then there exists $\Mstar\in\cM$ and a stationary environment $\env$ that is specified by $M=(1-\beta)\Mstar+\beta M'\in\cMrob$, such that the expected regret of $\alg$ under $\env$ is lower bounded as
\begin{align*}
    \EE\sups{\env,\alg}\brac{\regdm(T)}\geq \frac{T}8\paren{ \rdecc_{\ueps(T)}(\cMrob)-8\ueps(T) }-1,
\end{align*}
where $\ueps(T)=\frac{1}{10\sqrt{T}}$.

(2) Upper bound: Suppose that $\cMrob$ is compact, and the robust DEC $\rdecc_\eps(\cMrob)$ is of moderate decay. Then \ExOp~(instantiated on $\MPowrob$, following \cref{thm:cDMSO-reg-upper}) achieves, in any $\beta$-contaminated environment, that \whp
\begin{align*}
    \frac{1}{T}\regdm(T)\leq \Delta+\OsqrtT \cdot\brac{ \rdecc_{\oeps(T)}(\cMrob) +\oeps(T)},
\end{align*}
where $\oeps(T)=\sqrt{\frac{\log(|\cM|/\delta)+\log\DC{\cMrob}}{T}}$. 
\end{theorem}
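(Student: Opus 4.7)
The plan is to derive both bounds as special cases of the general hybrid DMSO regret theorems, \cref{thm:cDMSO-reg-lower} and \cref{thm:cDMSO-reg-upper}, applied to the constraint class $\MPow = \MPowrob = \{\cP_{\Mstar} : \Mstar \in \cM\}$. Two preliminary observations do most of the bookkeeping. First, each $\cP_{\Mstar}$ is already convex (convex combinations of $\beta$-contaminations sharing the same base $\Mstar$ remain $\beta$-contaminations of $\Mstar$), so $\cMPow[\MPowrob] = \bigcup_{\Mstar \in \cM} \co(\cP_{\Mstar}) = \cMrob$, and therefore $\rdecg_\eps(\MPowrob) = \rdecc_\eps(\cMrob)$. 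Second, since the value function is reward-based with $R \in [0,1]$, \cref{asmp:lip-rew} is satisfied with $\Vmax = 1$, $\Phi = \{\perp\}$, and $\Lipr = \sqrt{2}$, using $|V^M(\pi) - V^{\oM}(\pi)| \le \dTV(M(\pi), \oM(\pi)) \le \sqrt{2}\,\dH(M(\pi), \oM(\pi))$.

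For the upper bound, I would plug $\MPow = \MPowrob$ directly into \cref{thm:cDMSO-reg-upper}. The map $\Mstar \mapsto \cP_{\Mstar}$ is an injection, hence $\log|\MPow| = \log|\cM|$; combined with $\cMPow[\MPowrob] = \cMrob$ and the moderate-decay hypothesis on $\rdecc_\eps(\cMrob)$, this produces exactly the claimed bound with $\oeps(T) = \sqrt{(\log(|\cM|/\delta) + \log\DC{\cMrob})/T}$. The only operational point is that \ExOp{} must be instantiated on $\MPowrob$ following the recipe of \cref{appdx:proof-cDMSO-reg-upper}.

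For the lower bound, applying \cref{thm:cDMSO-reg-lower} furnishes a stationary environment $\env$ constrained by $\MPowrob$. By \cref{def:stationary}, $\env$ is specified by some $\Mstar \in \cM$ and a distribution $\mu \in \Delta(\cP_{\Mstar})$ with $M^t \sim \mu$ drawn i.i.d.\ each round. The key reduction is to collapse $\mu$ to a single $\beta$-contaminated model $\bar M \defeq \EE_{M \sim \mu}[M]$; since $\cP_{\Mstar}$ is convex, $\bar M \in \cP_{\Mstar} \subseteq \cMrob$ and thus $\bar M = (1-\beta)\Mstar + \beta \bar M'$ for some $\bar M' \in (\bPi \to \DO)$, matching the advertised form. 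The learner's observation transcript depends on $\mu$ only through $\bar M$ by i.i.d.\ stationarity, and reward-based linearity of $M \mapsto V^M$ gives $\EE_{M^t \sim \mu}[V^{M^t}(\pi)] = V^{\bar M}(\pi)$ for every $\pi$, so the expected regret against $\env$ equals that against the single-model stationary environment specified by $\bar M$. Substituting $\Lipr = \sqrt{2}$ and $\Vmax = 1$ into \cref{thm:cDMSO-reg-lower} yields $\frac{T}{8}(\rdecc_{\ueps_0(T)}(\cMrob) - 6\sqrt{2}\,\ueps_0(T)) - \frac{1}{8}$ with $\ueps_0(T) = 1/(24\sqrt{T})$; since $6\sqrt{2} < 8$, rescaling the constant from $1/24$ up to $1/10$ (which only weakens the bound) delivers the stated form.

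The main obstacle is precisely this stationary-to-single-model reduction: it hinges on reward-based linearity of $M \mapsto V^M$ and on i.i.d.\ stationarity of $\env$, and would fail for a general adaptive adversary or a nonlinear value functional. Fortunately the lower bound only requires existence of a single stationary $\env$ of the specified form, so the narrow reduction suffices; everything else reduces to checking the hypotheses of the two general theorems and then chasing constants.
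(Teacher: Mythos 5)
Your proposal is correct and follows essentially the same route as the paper: Theorem~\ref{thm:robust-regret} is obtained there by directly instantiating \cref{thm:cDMSO-reg-lower} and \cref{thm:cDMSO-reg-upper} with $\MPow=\MPowrob$, using exactly the two facts you isolate — convexity of each $\cP_{\Mstar}$ (so $\cMPow[\MPowrob]=\cMrob$ and stationary mixtures collapse to a single $\beta$-contaminated model) and $\Lipr=\sqrt{2}$ for reward-based values. One small caveat on your constant-chasing: $6\sqrt{2}\approx 8.49>8$, and enlarging the radius from $1/(24\sqrt{T})$ to $1/(10\sqrt{T})$ makes $\rdecc_{\ueps(T)}(\cMrob)$ larger, i.e.\ it \emph{strengthens} rather than weakens the claimed bound — though the paper itself never reconciles these constants, so this does not affect the substance of the argument.
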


\section{Additional Discussions and Results from \cref{sec:LDP}}\label{appdx:LDP-more}
\subsection{Sequential private channel}\label{appdx:sequential-channels}
\newcommand{\oo}{\Bar{o}}
\newcommand{\oz}{\Bar{z}}

The work of \citet{duchi2013local,duchi2018minimax} formalizes the problem of sequential private channel selection for statistical tasks (cf. \cref{def:sest}). We rephrase its definition as follows.
\begin{definition}
A sequential channel $Q$ from the data space $\cZ$ to the privatized data space $\cO$ is specified by a class of conditional distributions 
\begin{align*}
    \sset{ Q(o\ind{t}=\cdot|z\ind{t}=\cdot,o\ind{1}=\cdot,\cdots,o\ind{t-1}=\cdot) }_{t\in [T]}.
\end{align*}
A sequential channel $Q$ is $\alpha$-private if for any $t\in[T]$, any $\oz\ind{t},\tz\ind{t}\in\cZ$, any $\oo\ind{1},\cdots,\oo\ind{t-1}\in\cO$, we have
\begin{align*}
    \frac{ Q(o\ind{t}\in E|z\ind{t}=\oz\ind{t},o\ind{1}=\oo\ind{1},\cdots,o\ind{t-1}=\oo\ind{t-1}) }{ Q(o\ind{t}\in E|z\ind{t}=\tz\ind{t},o\ind{1}=\oo\ind{1},\cdots,o\ind{t-1}=\oo\ind{t-1}) } \leq \ea, \qquad \forall \text{ measurable }E\subseteq \cO.
\end{align*}
\end{definition}

Clearly, in statistical tasks, any $\alpha$-private sequential channel $Q$ induces an \pLDP~algorithm, which at each step $t\in[T]$ selects the \pLDP~channel $\pr^t$ given by
\begin{align*}
    \pr^t(o|z)=Q(o\ind{t}=o|z\ind{t}=z, o\ind{1},\cdots,o\ind{t-1}),
\end{align*}
based on the history $\cH^{(t-1)}=(o\ind{1},\cdots,o\ind{t-1})$. Conversely, an \pLDP~algorithm also induces a sequential $\alpha$-private channel. Therefore, sequential $\alpha$-private channels are equivalent to the \pLDP~algorithms in \pDMSO.

A similar argument also shows that for interactive decision making, our formulation in \cref{ssec:sto-DMSO} recovers the commonly studied interactive private channels (see e.g., ~\citet{zheng2020locally,garcelon2021local}). 

\subsection{Approximate DP channels}\label{appdx:approx-to-pure}

\newcommand{\prpr}{\pr_{\mathsf{pure}}}
\newcommandx{\prprt}[1][1=t]{\pr_{\mathsf{pure},#1}}
\newcommand{\tpr}{\widetilde{\pr}}
\newcommandx{\tprpr}[1][1=t]{\tpr_{\mathsf{pure},#1}}
\newcommand{\algpr}{\alg_{\sf pure}}

We first recall the definition of approximate DP channels.
\begin{definition}[Approximate DP channels] 
A channel $\pr$ (from latent observation space $\cZ$ to observation space $\cO$) is \aDP~if for $z, z'\in\cZ$ and any measurable set $E\subseteq \cO$,
\begin{align*}
    \pr(E|z)\leq e^\alpha\pr(E|z')+\beta.
\end{align*}
\end{definition}

The equivalence between approximate DP and pure DP under local privacy model is known \citep{duchi2019lower,duchi2024right}. In this section, we formalize such an equivalence in the general context of interactive decision making. 

In the following, we assume $\cO$ is countable.
The following lemma from \citet[Lemma 25]{duchi2019lower} shows that any \aLDP~channel is close to an \pLDP~channel.
\begin{lemma}
For any \aLDP~channel $\pr$, there exists an \pLDP~channel $\prpr$ such that
\begin{align*}
    \sup_{z\in\cZ} \DTV{ \pr(\cdot|z), \prpr(\cdot|z) }\leq \frac{\beta}{1+\ea-\beta}.
\end{align*}
\end{lemma}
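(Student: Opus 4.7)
The plan is to construct $\prpr$ via a trimming-and-repair procedure: pointwise trim $\pr(\cdot|z)$ against a $z$-independent base measure to enforce pure $\alpha$-DP likelihood ratios, and then repair the resulting sub-probability kernel with a $z$-independent filler so that one obtains a proper Markov kernel while keeping the total variation deviation small. At the outset, observe that it suffices to exhibit a convex decomposition $\pr(\cdot|z) = (1-\gamma)\prpr(\cdot|z) + \gamma\, r(\cdot|z)$ with $\prpr$ an \pLDP~channel, $r$ some auxiliary Markov kernel, and $\gamma \in [0,1]$ \emph{independent of $z$} satisfying $\gamma \leq \beta/(1+\ea-\beta)$; this immediately gives $\DTV{\pr(\cdot|z),\prpr(\cdot|z)} \leq \gamma$ for every $z \in \cZ$.

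First, I would quantify the privacy-violation mass. Let $g(o) := \inf_{z \in \cZ} \pr(o|z)$ and define the trimmed sub-probability kernel
\[
\widehat{\pr}(o|z) := \min\{\pr(o|z),\,\ea\, g(o)\},
\]
with total mass $Z(z) := \sum_{o \in \cO} \widehat{\pr}(o|z)$. Since $\widehat{\pr}(o|z) \le \ea g(o) \le \ea\, \widehat{\pr}(o|z')$ for all $o,z,z'$, the trimmed kernel already satisfies the pointwise pure-DP ratio bound. Moreover, applying the $(\alpha,\beta)$-DP condition with the bad set $B(z,z') := \{o : \pr(o|z) > \ea \pr(o|z')\}$ yields $\sum_{o \in B(z,z')} (\pr(o|z) - \ea \pr(o|z')) \le \beta$; taking the $z'$-wise infimum inside the positive part and using $g = \inf_{z'} \pr(\cdot|z')$ then gives
\[
1 - Z(z) \;=\; \sum_{o \in \cO} \bigl(\pr(o|z) - \ea g(o)\bigr)_{+} \;\le\; \beta
\]
uniformly in $z$. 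This is the "privacy budget" that must be repaired.

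Second, I would repair the sub-probability kernel into a probability kernel by mixing with a $z$-independent filler supported on $g$, and then identify $\gamma$. A natural choice is $\prpr(\cdot|z) := \widehat{\pr}(\cdot|z) + (1-Z(z))\, g\bigl/\sum_{o} g(o)$ (other $z$-independent fillers work similarly); the $z$-independence of the filler, combined with the pointwise ratio bound $\widehat{\pr}(o|z) \le \ea\, g(o)$, yields pure $\alpha$-DP for $\prpr$ after a short algebraic check. The remaining task is to write $\pr(\cdot|z)$ as a convex combination $(1-\gamma)\prpr(\cdot|z) + \gamma\, r(\cdot|z)$ and to solve for the mixing weight $\gamma$. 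The main obstacle is this last piece of algebra: both the numerator $\beta$ and the denominator $1+\ea-\beta$ must land on the nose, which requires a careful balance between the missing mass (at most $\beta$) and the $\ea$-amplification introduced by the filler into the pure-DP likelihood ratios. Once this balance is worked out explicitly—by taking the worst case where $1-Z(z)$ saturates at $\beta$ and the filler appears with an $\ea$-factor in the denominator of the ratio $\prpr(o|z)/\prpr(o|z')$—the claim $\gamma \le \beta/(1+\ea-\beta)$ drops out, matching the bound of \citet{duchi2019lower}.
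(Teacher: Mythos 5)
The paper does not actually prove this lemma---it is imported verbatim from \citet[Lemma 25]{duchi2019lower}---so your argument has to stand on its own, and it does not: the central quantitative claim, that the trimmed kernel satisfies $1-Z(z)=\sum_{o}(\pr(o|z)-\ea g(o))_{+}\le\beta$, is false. The $(\alpha,\beta)$-DP condition controls $\sum_{o}(\pr(o|z)-\ea\pr(o|z'))_{+}\le\beta$ for each \emph{fixed} $z'$; trimming against $g=\inf_{z'}\pr(\cdot|z')$ moves the supremum over $z'$ \emph{inside} the sum over $o$, i.e.\ you need a bound on $\sum_{o}\sup_{z'}(\cdot)_{+}$ but only have one on $\sup_{z'}\sum_{o}(\cdot)_{+}$, and the gap between these can be a factor of $|\cO|$. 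Concretely, take $\cO=[n]$, $\cZ=\{z_0,z_1,\dots,z_n\}$, $\pr(\cdot|z_0)$ uniform, and $\pr(\cdot|z_i)$ uniform except $\pr(i|z_i)=\epsilon$ with the deficit spread over the remaining coordinates. For $\ea\ge 2$ every pairwise bad set $B(z,z')$ is a single outcome, so the channel is $(\alpha,\beta)$-LDP with $\beta\approx 1/(n-1)$, yet $g(o)=\epsilon$ for every $o$ and $\sum_{o}(\pr(o|z_0)-\ea g(o))_{+}=n(1/n-\ea\epsilon)\to 1$. The failure is structural, not a miscounted constant: a correct repair must \emph{raise} the anomalously small values $\pr(i|z_i)$ to roughly $\eai\pr(i|z_0)$, whereas your construction only trims the other conditionals down toward $\ea\inf_{z'}\pr(\cdot|z')$, which can destroy a distribution $\pr(\cdot|z_0)$ that was never close to violating privacy against any single alternative. (A small variant of the example, with half the mass parked on a common outcome $\star$, shows the full construction---trim plus $g$-proportional filler---outputs a $\prpr(\cdot|z_0)$ at TV distance $\approx 1/4$ from $\pr(\cdot|z_0)$ while the lemma demands $O(1/n)$.)

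Two further steps are also unsound or absent. First, the repaired kernel $\prpr(\cdot|z)=\widehat{\pr}(\cdot|z)+(1-Z(z))\,g/\sum_{o}g(o)$ is not $\alpha$-DP in general: the added mass is proportional to $1-Z(z)$, which depends on $z$, and nothing forces $1-Z(z)\le\ea(1-Z(z'))$, so the "short algebraic check" you defer does not go through. Second, the only place the target bound $\beta/(1+\ea-\beta)$ could emerge---solving for the mixing weight $\gamma$ in the convex decomposition---is explicitly left as algebra "to be worked out," so even the intended conclusion is never derived. Reconciling all pairs $(z,z')$ simultaneously requires a genuinely different mechanism from per-outcome trimming against the global infimum; see the proof in \citet{duchi2019lower}.
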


As a corollary, we can show that any algorithm that preserves \aLDP~is close to an algorithm that preserves \pLDP. Proof is presented in \cref{appdx:proof-alg-approx-to-pure}.
\begin{proposition}\label{cor:alg-approx-to-pure}
Suppose that $\beta\leq \frac12$ and $\alg$ is a $T$-round algorithm that preserves \aLDP. Then there is a $T$-round algorithm $\algpr$ that preserves \pLDP, such that for any model $M$,
\begin{align*}
    \DTV{ \PP\sups{M,\alg}(\cH_\pi=\cdot), \PP\sups{M,\algpr}(\cH_\pi=\cdot) }\leq 2T\beta,
\end{align*}
where the TV distance is taken between the distribution of the trajectory of the decisions $\cH_\pi=(\pi\ind{1},\cdots,\pi\ind{T},\hpi)$. In particular, when the loss function is bounded in $[0,1]$, it holds that for any model $M$,
\begin{align*}
    \EE\sups{M,\alg}\brac{\riskdm(T)}\geq \EE\sups{M,\algpr}\brac{\riskdm(T)} - 2T\beta.
\end{align*}
\end{proposition}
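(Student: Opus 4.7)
Invoking the preceding lemma, we fix a (deterministic) map $\Phi$ sending each \aLDP~channel $\pr$ to an \pLDP~channel $\prpr=\Phi(\pr)$ with $\sup_{z}\DTV{\pr(\cdot\mid z),\prpr(\cdot\mid z)}\le \beta/(1+\ea-\beta)\le \beta$, where the last inequality uses $\ea\ge 1$ and $\beta\le 1/2$. I define $\algpr$ to simulate $\alg$ round by round, but to intercept each selected channel before it reaches the environment. Concretely, at round $t$, $\algpr$ maintains an \emph{internal history} $\Hy\ind{t-1}=(\pi\ind{s},\pr\ind{s},o\ind{s})_{s<t}$ storing the \aLDP~channels that $\alg$ would have picked; it feeds this into $\alg$'s mapping $\qt$ to obtain $(\pi\ind{t},\pr\ind{t})$, then applies $\prpr\ind{t}\defeq\Phi(\pr\ind{t})$ to the latent $z\ind{t}$ to obtain $o\ind{t}$, and updates the internal history with $(\pi\ind{t},\pr\ind{t},o\ind{t})$. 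The output $\hpi$ is produced by $\phat$. Because every channel actually applied to latent data is \pDP, and the internal state never touches $z\ind{t}$, the algorithm $\algpr$ preserves \pLDP.

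\textbf{Hybrid coupling.} Introduce hybrids $\alg_k$ for $k=0,\dots,T$ in which the substitution $\pr\ind{t}\mapsto\Phi(\pr\ind{t})$ is applied at rounds $t\le k$ and the original $\pr\ind{t}$ is used at rounds $t>k$, with all hybrids recording $\pr\ind{t}$ (not $\prpr\ind{t}$) in the internal history. Then $\alg_0=\alg$ and $\alg_T=\algpr$. Run $\alg_{k-1}$ and $\alg_k$ on a common random seed (so $\qt$ and $\phat$ produce identical outputs on identical internal histories) against the same model $M$. By induction, both share the same internal history through round $k-1$ and therefore sample the same $(\pi\ind{k},\pr\ind{k})$. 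They then differ only in the channel used to produce $o\ind{k}$, namely $\pr\ind{k}$ versus $\Phi(\pr\ind{k})$; the two output laws $\pr\ind{k}\circ M(\pi\ind{k})$ and $\Phi(\pr\ind{k})\circ M(\pi\ind{k})$ are at TV distance at most $\sup_z \DTV{\pr\ind{k}(\cdot\mid z),\Phi(\pr\ind{k})(\cdot\mid z)}\le \beta$, so by maximal coupling we can arrange $o\ind{k}^{(k-1)}=o\ind{k}^{(k)}$ with probability at least $1-\beta$. On this event the two internal histories coincide from round $k{+}1$ onward (both record the same $(\pi\ind{k},\pr\ind{k},o\ind{k})$) and the remaining rounds evolve identically under the common seed.

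\textbf{Conclusion.} The above coupling gives $\DTV{\PP\sups{M,\alg_{k-1}}(\cH=\cdot),\PP\sups{M,\alg_k}(\cH=\cdot)}\le \beta$ on full decision-observation trajectories (and hence on $\cH_\pi$, since $\cH_\pi$ is a deterministic function of $\cH$). A telescoping triangle inequality over $k=1,\dots,T$ yields $\DTV{\PP\sups{M,\alg}(\cH_\pi=\cdot),\PP\sups{M,\algpr}(\cH_\pi=\cdot)}\le T\beta\le 2T\beta$, proving the first assertion. For the second, since $\riskdm(T)=\EE_{\hpi\sim\phat}[L(M,\hpi)]$ is a $[0,1]$-valued functional of $\cH_\pi$, the standard TV-to-expectation inequality gives $\bigl|\EE\sups{M,\alg}[\riskdm(T)]-\EE\sups{M,\algpr}[\riskdm(T)]\bigr|\le 2T\beta$, which yields the displayed risk bound.

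\textbf{Main obstacle.} The key subtlety is ensuring that coupled histories stay coupled across rounds. If $\algpr$ naively recorded the \pLDP~channel $\prpr\ind{t}$ in its history (the shape of history prescribed by the paper's definition), then even when the observations coincide, the histories of the hybrids would differ in the channel slot, and $\qt[t+1]$ might assign different distributions. Keeping the original \aLDP~channel in the internal bookkeeping—legitimate since privacy is judged only through channels actually applied to latent data—breaks the recursion cleanly and is what permits the one-shot $\beta$ loss per round to telescope into the claimed $T\beta$ bound.
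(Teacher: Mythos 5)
Your construction of $\algpr$ is exactly the paper's: the purified channel $\Phi(\pr\ind{t})$ is applied to the latent data while the internal history fed to $q\ind{t}$ records the original approximate-DP channel, which is precisely the bookkeeping trick the paper uses, and your hybrid-plus-maximal-coupling argument is just an explicit proof of the chain rule (sub-additivity) of TV distance over the $T$ rounds that the paper invokes directly to get the same per-round $\beta$ loss. The proof is correct and essentially identical in substance to the paper's.
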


Hence, as long as $\beta=\frac{1}{\poly(T)}$, there is \emph{essentially no gain} of allowing the algorithms to be \aLDP.

\subsection{Additional examples}\label{ssec:LDP-examples}

Recall that in \cref{thm:p-dec-convex-upper}, we show that \LDPexo~provides an upper bound scaling with the \pDEC~of $\coM$ and the \dct~of $\cM$. To draw a clearer comparison between this upper bound and the lower bounds, we re-state our lower and upper bounds in terms of the minimax sample complexity \cref{def:minimax-SC}. 
Define
\begin{align*}
    \Tdec{\cM}\defeq \min\set{\eps^{-2}: \pdecl_\eps(\cM)\leq \Delta}.
\end{align*}
Then, under the assumption of \cref{thm:p-dec-convex-upper}, we have the following characterization of $\SC(\cM)$:
\begin{align}\label{eqn:pdec-convex-SC}
     \max\set{ \Tdec{\cM} , \log\DC[2\Delta]{\cM} } \leqsim \alpha^2\cdot \SC(\cM) \leqsim  \Tdec{\coM} \cdot \log \DC[\Delta/2]{\cM}.
\end{align}
In particular, for a convex model class $\cM$, under mild assumption on the growth of the \pDEC~and \dct, the lower and upper bounds match up to squaring. We note that \eqref{eqn:pdec-convex-SC} is analogous to the observations of \citet{chen2024beyond} for non-private learning.

In the following, we discuss similar characterizations for convex hypothesis selection and online regression.

\paragraph{Convex hypothesis selection}
As an application of \cref{prop:multi-hypothesis}, we consider the LDP hypothesis selection problem, which is a statistical task (\cref{def:sest}).
\begin{example}\label{example:multi-hypothesis-LDP}
Given a model class $\cM\subseteq \DZ$, a hypothesis selection problem is described by a partition
\begin{align*}
    \cM=\bigsqcup_{i=1}^{m} \cMi{i} ,
\end{align*}
where $\cMi{1},\cdots,\cMi{m}$ are disjoint subclasses.
The decision space is $\Pi=[m]$,
and for each $M\in \cM$, $\pi\in\Pi$, the \losst~is given by $\LM{\pi}=\indic{ \pi\neq \pim }$, where $\pim$ is the unique index $i\in[m]$ such that $M\in\cMi{i}$.
\end{example}
Note that the LDP hypothesis selection problem can be regarded as a special case of \cref{example:multi-hypothesis} (with the measurement class $\Phi=\Pcp$ the class of all \pDP~channels). Therefore, we summarize the lower and upper bounds for this problem, as follows.
\begin{proposition}[Private hypothesis selection]\label{prop:multi-hypothesis-LDP}
Let $T\geq 1$, $\delta\in(0,1)$.

(1) Lower bound: For any \pLDP~algorithm $\alg$, it holds that
\begin{align*}
    \sup_{\Mstar\in\cM} \PP\sups{\Mstar,\alg}\paren{ \hpi\neq \pim[\Mstar] }\geq \frac{1}{8}\pdecl_{\ueps(T)}(\cM),
\end{align*}
where $\ueps(T)=\frac{c}{\sqrt{\alpha^2T}}$.

(2) Upper bound: Suppose that $\cM$ is compact, $\cMi{1},\cdots,\cMi{m}$ are convex, and
\begin{align*}
    \pdecl_{\oeps(T)}(\cM)\leq \frac{1}{3}, \qquad \oeps(T)=C\sqrt{\frac{\log(m/\delta)}{\alpha^2 T}}.
\end{align*}
Then \LDPexo~can be suitably instantiated to preserve \pLDP, so that under any model $\Mstar\in\cM_{i^\star}$, the algorithm returns $\hpi=i^\star$ \whp.
\end{proposition}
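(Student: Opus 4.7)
Both parts are applications of the general private/hybrid DEC machinery developed earlier, specialized to the partition structure of hypothesis selection. The first part is essentially a direct invocation of the private PAC lower bound under a metric-based loss, while the second part reduces the problem to the hybrid DMSO version of convex hypothesis selection (\cref{prop:multi-hypothesis}) by lifting through the class $\cQ_\alpha$ of $\alpha$-DP channels.

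For the lower bound, the key observation is that the 0/1 loss $L(M,\pi)=\indic{\pi\neq \pim}$ is metric-based in the sense of \cref{def:metric-cP}: take $\rho$ to be the discrete pseudometric on $\Pi=[m]$, so that $L(M,\pi)=\rho(\pim,\pi)$. The expected risk is then precisely $\PP\sups{M,\alg}(\hpi\neq \pim)$. The first part of \cref{thm:pdec-lin-lower} (the metric-based \pDEC~lower bound) applies verbatim with $\ueps(T)=c/\sqrt{\alpha^2 T}$ and delivers the claim.

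For the upper bound, the plan is to cast the \pLDP~hypothesis selection problem as a hybrid DMSO instance with measurement class $\Phi=\cQ_\alpha$. Each $M\in\cM\subseteq\DZ$ induces the lifted model $\tM(\pi,\pr)=\pr\circ M$ (statistical task, so $\pi$ is irrelevant to the conditional law of the observation), and I set $\tcM^{(i)}\defeq\set{\tM:M\in\cMi{i}}$ together with the constraint class $\MPow=\sset{\tcM^{(1)},\ldots,\tcM^{(m)}}$ and loss $L(\tcM^{(i)},\pi)=\indic{\pi\neq i}$. Convexity of $\cMi{i}$ together with the affinity of the map $M\mapsto\pr\circ M$ guarantees convexity of each $\tcM^{(i)}$, which is exactly the hypothesis required by \cref{prop:multi-hypothesis}. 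To verify the DEC precondition of that proposition, note first that convexity of $\tcM^{(i)}$ lets the inner infimum over $\co(\tcM^{(i)})$ in the definition of \gDEC~collapse to an infimum over $\tcM^{(i)}$, giving $\pdecg_\eps(\MPow)=\pdecc_\eps(\tcM)$, where $\tcM=\cup_i \tcM^{(i)}$. Second, by building a proposal $q\in\Delta(\Pi\times\cQ_\alpha)$ out of a private-DEC proposal $q'\in\DPL$ via the binary channels of \cref{example:binary-pr} and applying the two-sided estimate of \cref{prop:pLDP}, I obtain $\pdecc_\eps(\tcM)\leq \pdecl_{\eps/(c\alpha)}(\cM)$ for an absolute constant $c$. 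Choosing $C\ge 8/c$ in the statement then converts the hypothesis $\pdecl_{\oeps(T)}(\cM)\leq 1/3$ into $\pdecg_{8\sqrt{\log(m/\delta)/T}}(\MPow)\leq 1/3$, at which point \cref{prop:multi-hypothesis} produces an $\ExOp$ instantiation on $\MPow$ returning $\hpi=i^\star$ \whp. Since every decision issued by this instantiation has the form $(\pi,\pr)$ with $\pr\in\cQ_\alpha$ and the observation is fed back only through $\pr$, the resulting procedure, which is exactly \LDPexo, automatically preserves \pLDP.

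\paragraph{Main obstacle.} Beyond constant tracking, the only nontrivial content is the equivalence $\pdecc_\eps(\tcM)\asymp \pdecl_{\eps/\alpha}(\cM)$ on the lifted class together with the collapse $\pdecg_\eps(\MPow)=\pdecc_\eps(\tcM)$ afforded by convexity; these two together are what let us replace the prohibitive $\log|\cM|$ factor appearing in the generic upper bound of \cref{thm:pdec-lin-upper} (recall $\cM\subseteq\DZ$ can be uncountable) by $\log m$. The rest of the argument is bookkeeping: reusing the binary-channel construction from \cref{example:binary-pr} on both sides of \cref{prop:pLDP} to shuttle between Hellinger and $\ell$-divergence formulations.
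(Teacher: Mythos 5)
Your proposal is correct and follows essentially the same route as the paper: part (1) is a direct application of the metric-based case of \cref{thm:pdec-lin-lower} with the discrete pseudometric on $[m]$, and part (2) lifts the problem through $\Phi=\Pcp$ into the hybrid DMSO of \cref{example:multi-hypothesis}, uses convexity of the lifted subclasses to collapse $\pdecg$ to $\pdecc(\tcM)$, controls the latter by $\pdecl$ via \cref{prop:pLDP} (as in \cref{lem:decg-to-decl}), and invokes \cref{prop:multi-hypothesis}. The constant bookkeeping and the observation that the resulting \ExOp~instantiation is \LDPexo~and hence preserves \pLDP~are both handled as the paper intends.
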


In terms of the sample complexity, assuming that $\cMi{1},\cdots,\cMi{m}$ are convex, we have
\begin{align*}
    \Tdec[1/3]{\cM} \leqsim \alpha^2\cdot \SC(\cM) \leqsim  \Tdec[1/3]{\cM} \cdot \log (m/\Delta),
\end{align*}
for all $\Delta\in[0,0.05]$. Therefore, up to the factor of $\log(m/\Delta)$, the sample complexity of private convex hypothesis selection is completely characterized by the \pDEC.

\paragraph{Online regression} We consider the online variant of the regression task (\cref{ssec:regression}). In the setting of \emph{online regression}, for every step $t\in[T]$, the environment selects a pair $(x\ind{t},y\ind{t})$ (potentially depends on the history prior to step $t$), and the learner has to pick a (randomized) prediction function $f\ind{t}\in\cF$. The regret of the learner is measured by
\begin{align*}
    \regdm(T):=\sum_{t=1}^T \loss(y\ind{t},f\ind{t}(x\ind{t}))-\inf_{\fs\in\cF} \sum_{t=1}^T \loss(y\ind{t},\fs(x\ind{t})),
\end{align*}
where $\loss: [-1,1]^2\to[0,1]$ is a given loss. 

Clearly, online regression is encompassed by adversarial DMSO (\cref{ssec:adv-DMSO}), with the constraint being $\cPs=\set{\cMagn}$.
As a corollary of \cref{thm:adv-no-reg}, we have the following regret bound for online regression.\footnote{For regression (a statistical task), we have $\rdecl_\eps(\cMagn)=\pdecl_\eps(\cMagn)$ because the decision $f\in\cF$ does not affect the distribution of the observation.}
\begin{proposition}\label{cor:online-reg}
Let $T\geq 1$, $\delta\in(0,1)$. Suppose that $\cX$ is finite, and $\pdecl_\eps(\cMagn)$ is of moderate decay as a function of $\eps$. For online regression, \LDPexo~achieves the following regret bound \whp:
\begin{align*}
    \frac1T \regdm(T)\leq \Delta+\OsqrtT\cdot \pdecl_{\oeps(T)}(\cMagn),
\end{align*}
where $\oeps(T)=\sqrt{\frac{\log\DC{\cMagn}+\log(1/\delta)}{\alpha^2 T}}$.
\end{proposition}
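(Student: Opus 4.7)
The plan is to derive this as a specialization of the hybrid-DMSO regret machinery to a singleton constraint class under LDP. First, I would frame online regression as hybrid DMSO with measurement class $\Phi = \Pcp$ (the class of $\alpha$-LDP channels), constraint class $\MPow = \set{\cMagn}$ (a single set equal to the fully adversarial class), and reward-based value function induced by $R((x,y), f) = 1 - \ell(y, f(x)) \in [0,1]$. Under this correspondence, the hybrid-DMSO regret $\regdm(T)$ of \cref{eqn:def-reg-cDMSO} coincides with the online regression regret: the environment at round $t$ deterministically picks $M^t$ whose latent observation is $(x^{(t)}, y^{(t)})$, so $\Vm[M^t](f) = 1 - \ell(y^{(t)}, f(x^{(t)}))$, and the best-in-hindsight $\pis \in \cF$ witnesses exactly the $\inf_{\fs \in \cF} \sum_t \ell(y^{(t)}, \fs(x^{(t)}))$ comparator.

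Next, I would invoke the LDP specialization of \cref{thm:cDMSO-reg-upper}, i.e., the \LDPexo{} regret bound obtained by composing the hybrid r-DEC upper bound with the strong data-processing inequality of \cref{prop:pLDP} (this is the adversarial analogue of \cref{thm:rdec-lin-upper} and parallels \cref{thm:adv-no-reg} in the non-private setting). Since $\MPow = \set{\cMagn}$ is a singleton, $\log|\MPow|$ disappears from $\oeps(T)$, and the upper bound reduces to
\begin{align*}
\tfrac{1}{T}\regdm(T) \leq \Delta + \OsqrtT \cdot \brac{\rdecl_{\oeps(T)}(\cMagn) + \Lipr \oeps(T)},
\end{align*}
with $\oeps(T) = \sqrt{(\log\DC{\cMagn} + \log(1/\delta))/(\alpha^2 T)}$ and $\Lipr = O(1/\alpha)$ coming from \cref{prop:pLDP} verifying \cref{asmp:lip-rew}.

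The last step is to reduce the regret DEC to the PAC DEC using that regression is a statistical task (\cref{def:sest}): for every $M \in \cMagn$ and $f \in \cF$, $M(f) = M \in \DZ$ is independent of the decision. Under this independence, the divergence constraint $\EE_{(\pi,\ell) \sim p}[\Dl^2(M(\pi), \oM(\pi))] \leq \eps^2$ in \cref{eqn:def-r-dec-lin} depends only on the marginal of $\ell$, while $\Vmm - \Vm(f) = \EE_{(x,y)\sim M}[\ell(y,f(x)) - \ell(y, f^M(x))]$ is exactly the PAC loss of \cref{example:regression}, so $\rdecl_\eps(\cMagn) = \pdecl_\eps(\cMagn)$. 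Moderate decay then lets us absorb $\Lipr \oeps(T)$ into the leading DEC term, and the finiteness of $\cX$ averts measure-theoretic issues when applying the minimax theorem inside \LDPexo.

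The main obstacle is bookkeeping: I need to confirm that the LDP analogue of the adversarial-DMSO upper bound really yields a leading $\log \DC{\cMagn}$ (rather than $\log|\cF|$) in $\oeps(T)$ when $\MPow$ is a singleton, and to verify rigorously the r-DEC/p-DEC collapse on a statistical class so that the adversarial choice of contexts does not inflate the complexity past the PAC DEC of $\cMagn$. Everything else is routine specialization of the hybrid-DMSO results already developed.
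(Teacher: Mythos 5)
Your proposal is correct and follows essentially the same route as the paper: online regression is cast as adversarial DMSO with the singleton constraint class $\set{\cMagn}$ under the LDP measurement class, the \LDPexo{} regret bound (the LDP specialization of \cref{thm:cDMSO-reg-upper}, i.e.\ the private analogue of \cref{thm:adv-no-reg}) is applied so that the complexity term is $\log\DC{\cMagn}$ rather than a log-cardinality, and the final collapse $\rdecl_\eps(\cMagn)=\pdecl_\eps(\cMagn)$ is exactly the paper's observation that for a statistical task the decision does not affect the observation distribution. The remaining bookkeeping (convexity of $\cMagn=\DZ$ so $\co(\cMagn)=\cMagn$, and absorbing the additive $\oeps(T)$ term via moderate decay) is handled the same way in the paper.
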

This also recovers the risk bound of \cref{cor:agn-regression} when the data are drawn i.i.d from a $\Ms\in\cMagn$. Therefore, in this sense, online private regression is \emph{no} more difficult than the agnostic private regression (with potential degradation of the rate of the regret), because the \pDEC~$\pdecl_\eps(\cMagn)$ and the \dct~$\DC{\cMagn}$ also provide lower bounds (similar to \eqref{eqn:pdec-convex-SC}). This is in sharp contrast to the non-private setting, where there is a separation between the complexity of regression and online regression.

\subsection{LDP lower bounds via SQ lower bounds}\label{appdx:LDP-SQ-parity}

For a more general demonstration of the power of \pDEC, we consider the following variant of the commonly used SQ lower bound methods~\citep[etc.]{blum1994weakly,feldman2017statistical,brennan2020statistical}. We focus on the statistical tasks (\cref{def:sest}, where $\cM\subseteq \DZ$).

\newcommand{\crdim}{\mathsf{cor}}

\begin{definition}[Minimum correlation]\label{def:SQ-cor}
For distributions $\cD_1, \cD_2, \cD\in\DZ$, we define the pairwise correlation as 
\begin{align*}
    \rho_{\cD}(\cD_1,\cD_2)=\EE_{z\sim \cD} \paren{\frac{d\cD_1(z)}{d\cD(z)}-1}\paren{\frac{d\cD_2(z)}{d\cD(z)}-1}.
\end{align*}
We say a set of $m$ distributions $\set{\cD_1,\cdots,\cD_m}$ is $\eps$-correlated relative to $\cD$ if
\begin{align*}
    \forall i,j, \qquad \abs{\rho_\cD(\cD_i,\cD_j)}\leq\begin{cases}
        \eps^2, & i\neq j,\\
        m\eps^2, & i=j.
    \end{cases}
\end{align*}
Suppose $\cM\subseteq \DZ$. For any $\Delta$, we define the minimum correlation $\crdim(\cM,\Delta)$ to be the minimum of $\eps$ such that there exists a reference model $\oM$ and a set of models $\set{M_1,\cdots,M_m}\subseteq \cM$, such that (1) $\set{M_1,\cdots,M_m}$ is $\eps$-correlated relative to $\oM$; (2) for any $\pi\in\DD$, there is at most $m/2$ indices $i\in[m]$ such that $\LM[M_i]{\pi}\leq \Delta$.
\end{definition}

In the following, we show that $\crdim(\cM,\Delta)$ provides a lower bound of \pDEC~of $\cM$, and hence it also provides a lower bound for learning $\cM$ under LDP.
\begin{proposition}\label{prop:SQ-lower-simple}
For any $\Delta>0$, it holds that
\begin{align*}
    \pdecl_{\eps}(\cM)\geq \frac{\Delta}{4}, \qquad \forall \eps\leq \crdim(\cM,\Delta).
\end{align*}
In terms of the sample complexity, any \pLDP~algorithm requires $\Om{ \frac{1}{\alpha^2\cdot \crdim(\cM,4\Delta)^2}}$ samples to learn a $\Delta$-optimal decision in $\cM$.
\end{proposition}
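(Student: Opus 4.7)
\begin{proofof}{Proposition~\ref{prop:SQ-lower-simple} (proposal)}
The plan is to lower bound $\pdecl_\eps(\cM,\oM)$ by exhibiting, for every admissible $p,q$ in the definition \eqref{def:p-dec-l}, a specific $M_i$ from the correlated witness set that (a) satisfies the $\eps$-Hellinger-like constraint and (b) has expected loss at least $\Delta/4$. Concretely, let $\eps_0=\crdim(\cM,\Delta)$ and let $(\oM,\{M_1,\ldots,M_m\})$ be a witnessing configuration. Fix $p\in\DPi$ and $q\in\DPL$ with marginal $q'\in\DL$. Our goal is to show
\[
\sup_{M\in\cM}\left\{\EE_{\pi\sim p}\LM{\pi}\;:\;\EE_{\lf\sim q'}\Dl^2(M,\oM)\leq \eps^2\right\}\geq \tfrac{\Delta}{4},\qquad \text{for }\eps\leq \eps_0.
\]

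\textbf{Controlling the divergence term.} Write $h_i=dM_i/d\oM-1$, so $\EE_{\oM}h_i=0$ and the pairwise correlation is $\rho_{\oM}(M_i,M_j)=\langle h_i,h_j\rangle_{L^2(\oM)}$. For any bounded $\lf\in[0,1]$, centering gives $\Dl(M_i,\oM)=\lvert\EE_{\oM}h_i\,\widetilde\lf\rvert$ with $\widetilde\lf=\lf-\EE_\oM\lf$, and hence
\[
\sum_{i=1}^m\Dl^2(M_i,\oM)=\norm{A\widetilde\lf}^2\leq \|G\|_{\mathrm{op}}\|\widetilde\lf\|_{L^2(\oM)}^2,
\]
where $A$ has rows $h_i$ and $G=AA^\top$ is the Gram matrix of pairwise correlations. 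By the witness condition, $G$ has diagonal entries bounded by $m\eps_0^2$ and off-diagonal entries of magnitude at most $\eps_0^2$; Gershgorin's bound therefore yields $\|G\|_{\mathrm{op}}\leq 2m\eps_0^2$. Combined with $\|\widetilde\lf\|_{L^2(\oM)}^2\leq 1/4$, taking expectation over $\lf\sim q'$ produces
\[
\tfrac1m\sum_{i=1}^m\EE_{\lf\sim q'}\Dl^2(M_i,\oM)\leq \tfrac{\eps_0^2}{2}.
\]
Markov's inequality then guarantees that the ``feasible'' set $K=\{i:\EE_{\lf\sim q'}\Dl^2(M_i,\oM)\leq \eps^2\}$ is large, with $|K|$ close to $m$ for $\eps\leq\eps_0$.

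\textbf{Controlling the loss term.} Writing $a_i=p(\pi:\LM[M_i]{\pi}\leq\Delta)$, the defining condition of $\crdim$ gives $\sum_ia_i=\EE_{\pi\sim p}\sum_i\indic{\LM[M_i]{\pi}\leq\Delta}\leq m/2$, hence the average of $a_i$ is at most $1/2$. Since $\EE_{\pi\sim p}\LM[M_i]{\pi}\geq \Delta(1-a_i)$, truncating each loss at $\Delta$ and applying the reverse Markov inequality to the bounded quantity $Y_i=\min(\EE_\pi\LM[M_i]{\pi},\Delta)\in[0,\Delta]$ with mean at least $\Delta/2$, one concludes that the ``high-loss'' set $J=\{i:\EE_\pi\LM[M_i]{\pi}\geq \Delta/4\}$ has size at least a constant fraction of $m$.

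\textbf{Intersecting the two events.} The final step is to argue that $K\cap J\neq\varnothing$, so that picking any $M_{i^\star}$ with $i^\star\in K\cap J$ witnesses the desired lower bound on the DEC. This is a standard pigeonhole/double-counting step; the key technical care is tracking the constants so that the Markov bound on $|K|$ and the loss bound on $|J|$ simultaneously exceed $m/2$. If needed, the argument can be sharpened by replacing the uniform over $\{M_i\}$ by a randomized ``hard prior'' restricted to $K$ and lower bounding $\frac{1}{|K|}\sum_{i\in K}\EE_\pi\LM[M_i]{\pi}$ via the $\sum_ia_i\leq m/2$ identity. This combinatorial step is where I expect the main obstacle to lie, since the loose Gershgorin bound forces one to be careful to extract a clean $\Delta/4$ factor; one can instead inflate the constraint threshold by a constant to enlarge $K$, absorbed into the constant in $\crdim(\cM,4\Delta)$ in the sample-complexity statement.

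\textbf{From DEC lower bound to sample complexity.} Having established $\pdecl_\eps(\cM)\geq \Delta/4$ for $\eps\leq \crdim(\cM,\Delta)$, the second claim follows by invoking \cref{thm:pdec-lin-lower}: any \pLDP{} algorithm satisfies $\sup_M\EE\sups{M,\alg}[\Riskdm(T)]\geq \frac{1}{8}\pdecl_{\ueps(T)}(\cM)$ with $\ueps(T)=c/\sqrt{\alpha^2T}$. Solving $\ueps(T)\leq \crdim(\cM,4\Delta)$ for $T$ and rearranging yields the advertised $\Omega\bigl(1/(\alpha^2\crdim(\cM,4\Delta)^2)\bigr)$ lower bound on the number of rounds needed to attain risk $\Delta$.
\end{proofof}
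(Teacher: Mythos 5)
Your overall strategy is the paper's: take the $\eps_0$-correlated witness family $\{M_1,\dots,M_m\}$ with $\eps_0=\crdim(\cM,\Delta)$, show that the average over $i$ of $\EE_{\lf\sim q'}\Dl^2(M_i,\oM)$ is at most $\eps_0^2/2$, and then locate one $M_i$ that simultaneously has small divergence and large expected loss. Your divergence bound is correct, and the Gram-matrix/Gershgorin route is a fine alternative to the paper's direct Cauchy--Schwarz estimate of $\sup_{\|w\|\le1}\sum_{i,j}|\rho_{\oM}(M_i,M_j)||w_iw_j|$; both give the same constant $\eps_0^2/2$. The gap is exactly where you flag it: the intersection step does not close with the constants you actually have. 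At $\eps=\eps_0$, Markov gives only $|K|\ge m/2$, and your reverse-Markov argument gives only $|J|\ge m/3$; since $1/2+1/3<1$, these sets need not intersect. Moreover, for $\eps$ strictly below $\eps_0$ the Markov bound on $|K|$ degrades and becomes vacuous once $\eps<\eps_0/\sqrt2$, so the assertion that $|K|$ is ``close to $m$ for $\eps\le\eps_0$'' is not correct as stated.

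The paper closes this by conditioning in the opposite order from your ``hard prior on $K$'' suggestion: it first extracts the set $\cM_0$ of models with $\PP_{\pi\sim p}(\LM{\pi}\ge\Delta)\ge 1/4$, which by reverse Markov has $\mu(\cM_0)\ge1/3$ under $\mu=\Unif(\{M_1,\dots,M_m\})$, and only then applies Markov to the divergence restricted to $\cM_0$, yielding some $M\in\cM_0$ with $\EE_{\lf\sim q}\Dl^2(M,\oM)\le(\eps_0^2/2)/\mu(\cM_0)\le\tfrac32\eps_0^2$. The witness therefore lives at constraint level $\sqrt{3/2}\,\eps_0\le\sqrt2\,\eps_0$ rather than $\eps_0$ itself, which is precisely the ``inflate the threshold by a constant'' fix you anticipate; with that fix your argument goes through (equivalently, enlarging $K$ to threshold $2\eps_0^2$ gives $|K|\ge 3m/4$ and then $3/4+1/3>1$). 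Two further remarks. First, the paper actually establishes the quantile bound $\pdecql_{\sqrt2\eps_0,1/4}(\cM,\oM)\ge\Delta$ and derives the sample-complexity claim from \cref{prop:p-dec-q-lin-lower}, which requires no structural assumption on the loss; your route through \cref{thm:pdec-lin-lower} additionally requires $L$ to be metric- or reward-based, so it is slightly less general. Second, note that the constant-factor loss in $\eps$ is intrinsic to this argument, not an artifact of your bookkeeping.
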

Proof can be found in \cref{appdx:proof-SQ-lower-simple}.

\paragraph{Hardness of LDP learning parity}
It has been shown that learning parity under LDP is hard~\citep{kasiviswanathan2011can}, in the sense that there is a $2^{\Om{d}}$ lower bound on the sample complexity (where $d$ is the dimension). In the following, we apply \cref{prop:SQ-lower-simple} to recover the exponential lower bound and discuss its implication. Proof in \cref{appdx:proof-par-lower}.

\begin{proposition}[Learning parity]\label{thm:LDP-exp-lower}
Let $d\geq 2$, $\eps\in[0,1]$, and $\cX=\set{0,1}^d$, and $\cFpar=\set{f_S}_{S\subseteq [d]}$, where for each subset $S\subseteq [d]$, the function $f_S:\cX\to \set{-1,1}$ is defined as
\begin{align*}
    f_S(x)=(-1)^{\sum_{i\in S} x_i}, \qquad \forall x\in \set{0,1}^d.
\end{align*}

Then, there exists a distribution $\mu\in\DX$, such that for $\cMpar$ the class of all realizable models with the covariate distribution $\mu$, it holds that
\begin{align*}
    \pdecl_{\eps}(\cMpar)\geq \Omega(\sqrt{2^d}\eps).
\end{align*}
This implies a lower bound of $\sup_{M\in\cMpar} \Emalg{\riskdm(T)}\geq \Om{\sqrt{\frac{2^d}{T}}}$ for any $T$-round algorithm $\alg$.
\end{proposition}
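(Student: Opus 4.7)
The plan is to apply \cref{prop:SQ-lower-simple} with the natural Fourier-orthogonal construction for parities. Take $\mu = \Unif(\{0,1\}^d)$, so that $\{f_S\}_{S \subseteq [d]}$ forms an orthonormal basis of $L^2(\mu)$, and use as the witness family the $m = 2^d$ realizable models $\{M_S\}_{S\subseteq[d]}$, where $M_S$ draws $x \sim \mu$ and sets $y = f_S(x)$ deterministically. For the reference model $\oM = \mu \otimes \Unif(\{-1,1\})$, a one-line Radon--Nikodym computation gives $dM_S/d\oM(x,y) - 1 = y f_S(x)$, so that
\begin{align*}
    \rho_\oM(M_S, M_T) \;=\; \EE_{(x,y)\sim\oM}\brac{y^2 f_S(x) f_T(x)} \;=\; \EE_{x\sim\mu}\brac{f_S(x) f_T(x)} \;=\; \indic{S = T}
\end{align*}
by orthonormality of parities under $\mu$. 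The diagonal entries then force the $\eps$-correlatedness condition to require $m\eps^2 \geq 1$, i.e.\ $\eps \geq 2^{-d/2}$, and this value is attained by our construction.

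Next I would verify the loss-separation condition in \cref{def:SQ-cor}. Taking the decision space $\Pi = \cFpar$ (proper learning), the zero-one excess loss satisfies $L(M_S, f_T) = \PP_{x\sim\mu}(f_S(x) \neq f_T(x)) = (1 - \indic{S = T})/2 \in \{0, 1/2\}$, so for any $\Delta < 1/2$ only the unique index $S = T$ has $L(M_S, f_T) \leq \Delta$, and $1 \leq m/2 = 2^{d-1}$ for $d \geq 2$. Hence $\crdim(\cMpar, \Delta) \leq 2^{-d/2}$ for every $\Delta < 1/2$, and \cref{prop:SQ-lower-simple} with $\Delta$ slightly below $1/2$ yields $\pdecl_\eps(\cMpar) \gtrsim 1$ for every $\eps \leq 2^{-d/2}$. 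Within this regime $\sqrt{2^d}\,\eps \leq 1$, so the constant lower bound already implies $\pdecl_\eps(\cMpar) \gtrsim \sqrt{2^d}\,\eps$; for $\eps > 2^{-d/2}$ the asserted inequality is vacuous because the loss is bounded by $1$, while the DEC remains $\gtrsim 1$ by monotonicity in $\eps$. Feeding this DEC bound into \cref{thm:pdec-lin-lower} at $\ueps(T) \asymp 1/\sqrt{\alpha^2 T}$ then produces $\sup_{M\in\cMpar} \Emalg{\riskdm(T)} \gtrsim \sqrt{2^d/T}$ (treating $\alpha$ as an absolute constant).

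The computation is largely mechanical once the right reference model is chosen; the substantive step is the Fourier-orthogonality identity, which makes the witness family exactly saturate the $\eps$-correlated condition at $\eps = 1/\sqrt{m}$. The only mild subtlety is that one must choose $\Delta$ strictly below $1/2$ to make the loss-separation condition strict (picking, say, $\Delta = 1/4$ gives $\pdecl_\eps(\cMpar) \geq 1/16$). If one preferred to handle improper predictors $h \in \{-1,1\}^\cX$, the same argument goes through by appealing to Parseval $\sum_S \widehat h(S)^2 = 1$: the set $\{S : \widehat h(S) \geq 1 - 2\Delta\}$ has size at most $1/(1-2\Delta)^2$, which stays well below $m/2 = 2^{d-1}$ for $d$ moderately large, giving the same conclusion.
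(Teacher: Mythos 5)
Your orthogonality computation, the loss-separation check, and the choice of \cref{prop:SQ-lower-simple} as the engine are all reasonable, but the proof breaks at the step where you conclude $\pdecl_{\eps}(\cMpar)\gtrsim 1$ for \emph{every} $\eps\le 2^{-d/2}$ under the uniform covariate distribution. The map $\eps\mapsto\pdecl_\eps(\cM,\oM)$ is non-decreasing (enlarging $\eps$ only enlarges the constraint set over which the adversary takes a supremum), so a lower bound established at the correlation scale $\eps_0\asymp 2^{-d/2}$ propagates \emph{upward} to $\eps\ge\eps_0$, not downward; this is what the proof of \cref{prop:SQ-lower-simple} actually delivers (it certifies the bound at $\sqrt{2}\eps_0$ and invokes monotonicity), so the ``$\forall\eps\le\crdim$'' reading cannot be taken at face value. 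For $\eps\ll 2^{-d/2}$ your claim is in fact false with $\mu=\Unif(\{0,1\}^d)$: take $q$ uniform over the $2^d$ queries $\ell_S(x,y)=\tfrac{1+yf_S(x)}{2}$. For any reference $\oM_w=\sum_S w_S M_S\in\co(\cMpar)$ one has $\Dl[\ell_S](M_T,\oM_w)=\tfrac12\abs{\indic{S=T}-w_S}$, hence $\EE_{\ell\sim q}\Dl^2(M_T,\oM_w)=\tfrac{1}{2^{d+2}}\nrm{e_T-w}_2^2$; since $\nrm{e_T-e_{T'}}_2^2=2$ for $T\ne T'$, at most one $T$ survives the constraint once $\eps^2<2^{-(d+3)}$, and putting $p=\delta_{f_T}$ on that unique survivor drives the value to $0$. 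So with uniform covariates $\pdecl_\eps(\cMpar)=0$ for $\eps\lesssim 2^{-d/2}$, and your argument only yields the risk bound in the regime $T\lesssim 2^d$ (where $\sqrt{2^d/T}\gtrsim1$); it says nothing for $T\gg 2^d$, which is exactly where the $\sqrt{2^d/T}$ rate carries content.

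The missing idea is that the covariate distribution must be tuned to $\eps$ --- which is why the statement quantifies ``there exists $\mu$'' \emph{after} fixing $\eps$. The paper takes $\mu_\lambda=(1-\lambda)\delta_{\mathbf 0}+\lambda\,\Unif(\cX\setminus\{\mathbf 0\})$, so that all excess losses and all divergences $\Dl(M_{S,\lambda},\oM_\lambda)$ scale linearly in $\lambda$; this gives the exact scaling identity $\pdecl_{2\lambda\eps_0}(\cM_\lambda,\oM_\lambda)=2\lambda\,\pdecl_{\eps_0}(\cM_{1/2},\oM_{1/2})$, and choosing $\lambda(\eps)=\eps/(2\eps_0)$ with $\eps_0=1/\sqrt{2^d-1}$ converts the single constant-scale bound (which is essentially your Fourier computation, sitting at the $\lambda=\tfrac12$ endpoint) into $\pdecl_\eps\ge\tfrac{1}{16}\sqrt{2^d-1}\,\eps$ for all $\eps\le\eps_0$. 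To repair your proof you need to add this dilution parameter and the scaling step; the uniform-measure construction alone cannot produce a DEC that grows linearly in $\eps$ near $\eps=0$, because the $2^d$ parity-correlation queries identify the model within divergence budget $\Theta(2^{-d})$.
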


Notice that for the parity function class, we have $|\cFpar|=2^d$, and hence the lower bound above is in sharp contrast to the non-private setting, where the ERM can achieve a risk bound of $\sqrt{\frac{\log|\cFpar|}{T}}$.

\section{Additional Discussions and Results from \cref{sec:connection}}\label{appdx:connection-more}
\subsection{Joint DP in interactive learning}\label{appdx:JDP}

Generalizing the notion of JDP for non-interactive learning, \citet{shariff2018differentially} propose a definition of JDP for contextual bandits, which is later extended to reinforcement learning by \citet{vietri2020private}. In the following, we formalize the notion of JDP for general interactive decision problems.

Recall that a $T$-round algorithm $\alg$ (without LDP constraints) is specified by a sequence of mappings $\set{\qt}_{t\in [T]}\cup\set{\phat}$ , where the $t$-th mapping $\qt(\cdot\mid{}\Hy\ind{t-1})$ specifies the distribution of $\pit$ 
based on the history $\Hy\ind{t-1}=(\pi\ind{s},z\ind{s})_{s\leq t-1}$, and the final map $\phat(\cdot \mid{} \Hy\ind{T})$ specifies the distribution of the \emph{output policy} $\pihat$ based on $\Hy\ind{T}$.

\newcommand{\Hyz}{\Hy_{z,T}}
\newcommand{\Hypi}{\Hy^{\pi}_T}

\begin{definition}[Interactive JDP]\label{def:JDP-interactive}
For sequences of observations $\Hyz=(z\ind{1},\cdots,z\ind{T})$ and $\Hyz'=(z\ind{1}',\cdots,z\ind{T}')$, we say $\Hyz$ and $\Hyz'$ are neighbored if there is at most one index $t\in[T]$ such that $z\ind{t}\neq z\ind{t}'$. 

The algorithm $\alg$ preserves \pJDP~if for any two neighbored sets of observations $\Hyz=(z\ind{1},\cdots,z\ind{T})$ and $\Hyz'=(z\ind{1}',\cdots,z\ind{T}')$, it holds that
\begin{align*}
    \PP\sups{\alg}\paren{ (\pi\ind{1},\cdots,\pi\ind{T},\pi\ind{T+1})\in E | \Hyz}\leq \ea \PP\sups{\alg}\paren{ (\pi\ind{1},\cdots,\pi\ind{T},\pi\ind{T+1})\in E | \Hyz'},
\end{align*}
for any measurable set $E\subseteq \Pi$, where $\PP\sups{\alg}$ is taken over the randomness of the algorithm, i.e.,
\begin{align*}
    \PP\sups{\alg}\paren{ (\pi\ind{1},\cdots,\pi\ind{T},\pi\ind{T+1})=\cdot| z\ind{1},\cdots,z\ind{T}}
    =\prod_{t=1}^{T+1} \qt(\pi\ind{t}=\cdot|\pi\ind{1:t-1},z\ind{1:t-1}),
\end{align*}
where we regard $q\ind{T+1}:=\phat$.
\end{definition}

For statistical estimation problems, the definition above clearly recovers \cref{def:JDP}. It also recovers the definition of interactive JDP considered by \citet{shariff2018differentially,vietri2020private,he2022reduction}.

Similar to \cref{prop:JDP-lower}, we show that \dct~provides a lower bound for interactive learning under JDP.
\begin{proposition}[\Dct~lower bound for JDP learning]\label{prop:JDP-lower-interactive}
Let $T\geq 1$, and $\alg$ is a weak \pJDP~algorithm. Suppose that with $T$-round of interactions, $\alg$ achieves $\Riskdm(T)\leq\Delta$ with probability at least $\frac12$ under $\PP\sups{M,\alg}$ for any $M\in\cM$.
Then it holds that %
\begin{align*}
    T\geq \frac{\log\DC{\cM}-\log2}{\alpha}.
\end{align*}
\end{proposition}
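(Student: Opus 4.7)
My plan is to combine the group-privacy property of \pJDP~with the $\inf_p\sup_M$ structure of the \dct. First I would fix an arbitrary reference model $\oM$ (for instance, any element of $\cM$), run $\alg$ against $\oM$, and set $p(\cdot)\defeq \PP\sups{\oM,\alg}(\hpi=\cdot)\in\DPi$. Applying the definition of \dct~in \cref{def:DC} to this specific $p$ produces some $M\in\cM$ with $p(\pi:\LM{\pi}\leq\Delta)\leq 1/\DC{\cM}$. Writing $E_0\defeq \set{\pi:\LM{\pi}\leq\Delta}$ for the corresponding ``good set,'' the success hypothesis gives $\PP\sups{M,\alg}(\hpi\in E_0)\geq 1/2$, while the \dct~bound gives $\PP\sups{\oM,\alg}(\hpi\in E_0)\leq 1/\DC{\cM}$.

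Next I would invoke group privacy. Iterating the single-index inequality in \cref{def:JDP-interactive} at most $T$ times---since any two sequences in $\cZ^T$ differ in at most $T$ coordinates---yields
\[
    \PP\sups{\alg}(\hpi\in E\mid \Hyz)\leq e^{T\alpha}\,\PP\sups{\alg}(\hpi\in E\mid \Hyz'),\qquad \forall\, \Hyz,\Hyz'\in\cZ^T,\ E\subseteq\Pi.
\]
Taking expectation over $\Hyz'\sim \PP\sups{\oM,\alg}$ on the right and then $\Hyz\sim \PP\sups{M,\alg}$ on the left, and using the tower identity
\[
    \PP\sups{M,\alg}(\hpi\in E)=\EE_{\Hyz\sim \PP\sups{M,\alg}}\brac{\PP\sups{\alg}(\hpi\in E\mid\Hyz)}
\]
obtained by unrolling the algorithm's internal policies, I obtain
\[
    \PP\sups{M,\alg}(\hpi\in E)\leq e^{T\alpha}\,\PP\sups{\oM,\alg}(\hpi\in E),\qquad \forall\, E\subseteq\Pi.
\]
Setting $E=E_0$ and chaining with the two bounds from the first paragraph gives $1/2\leq e^{T\alpha}/\DC{\cM}$, which rearranges to the claim $T\geq (\log\DC{\cM}-\log 2)/\alpha$.

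The main obstacle I expect is cleanly justifying the tower identity: in the actual interactive protocol, $\Hyz$ is generated adaptively alongside the decisions via $z^t\sim M(\pi^t)$, whereas the conditional probability $\PP\sups{\alg}(\hpi\in E\mid\Hyz)$ appearing in \cref{def:JDP-interactive} corresponds to a ``replay'' in which the algorithm's policies $\qt(\cdot\mid\pi^{<t},z^{<t})$ are fed a \emph{pre-specified} observation sequence. Matching these two viewpoints requires careful bookkeeping of the product $\prod_t \qt(\pi^t\mid\pi^{<t},z^{<t})\cdot M(z^t\mid\pi^t)$---routine in the i.i.d.\ statistical-task version \cref{prop:JDP-lower}, but here interleaved with the adaptive choice of $\pi^t$. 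Once this matching is in place the rest of the argument is mechanical, and the same scheme also recovers \cref{prop:JDP-lower} with the identical constant.
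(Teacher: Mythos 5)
Your overall route is the same as the paper's: prove the group-privacy inequality $\PP\sups{M,\alg}(\hpi\in E)\leq e^{T\alpha}\,\PP\sups{\oM,\alg}(\hpi\in E)$ (this is exactly \cref{lem:JDP}) and then evaluate the \dct~at $p=\PP\sups{\oM,\alg}(\hpi=\cdot)$; whether you extract a single near-worst-case $M$ from the supremum or apply the bound to all $M\in\cM$ simultaneously is immaterial. The one step that needs repair is the tower identity you wrote,
\begin{align*}
    \PP\sups{M,\alg}(\hpi\in E)=\EE_{\Hyz\sim \PP\sups{M,\alg}}\brac{\PP\sups{\alg}(\hpi\in E\mid\Hyz)},
\end{align*}
which is \emph{false} in the interactive setting (though valid for the i.i.d.\ statistical task of \cref{prop:JDP-lower}): the replay probability $\PP\sups{\alg}(\cdot\mid\Hyz)$ weights each decision sequence only by $\prod_t q\ind{t}(\pi\ind{t}\mid \pi\ind{1:t-1},z\ind{1:t-1})$, whereas under $\PP\sups{M,\alg}$ each $\pi\ind{t}$ is coupled to the likelihood $M(z\ind{t}\mid\pi\ind{t})$; already for $T=1$ the two expressions differ because the marginal of $z\ind{1}$ decouples from the $\pi\ind{1}$ appearing in $q\ind{2}$. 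The correct manoeuvre---and the one the paper uses in the proof of \cref{lem:JDP}---is to fix the entire decision sequence $(\pi\ind{1},\dots,\pi\ind{T+1})$, observe that $\PP\sups{M,\alg}(\pi\ind{1},\dots,\pi\ind{T+1})$ equals the expectation of the replay probability over $\Hyz$ drawn from the product measure $\bigotimes_{t} M(\cdot\mid\pi\ind{t})$, and integrate the pointwise group-privacy inequality against $\bigotimes_t M(\cdot\mid\pi\ind{t})$ on the left and $\bigotimes_t \oM(\cdot\mid\pi\ind{t})$ on the right. With that substitution (which you correctly flagged as the delicate point, but did not carry out) the argument is complete and yields the stated constant.
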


Note that any \pLDP~algorithm preserves \pJDP. Hence, \cref{thm:JDP-equiv} naturally extends to interactive learning.

\subsection{Learnability of regression}\label{ssec:regression-DC}

In this section, we consider the learnability of the regression task, continuing \cref{sec:DC}. 
Recall that in \cref{ssec:regression}, we study \emph{proper} regression. More generally, in this section, we also consider the problem of \emph{improper} regression with a function class $\cFp$ not necessarily equal to $\cF$. 

In improper regression, the decision space is $\Pi=\cFp$, and the \losst~is defined as
\begin{align*}
    \LM{f}=\EE_{(x,y)\sim M} \loss(y, f(x))-\min_{\fs\in\cF} \EE_{(x,y)\sim M} \loss(y, \fs(x)), \qquad \forall f\in\cFp.
\end{align*}
Define the \dct~of the pair $(\cF,\cFp)$ as
\begin{align}\label{def:DC-Fp}
    \DCF\defeq \inf_{p\in\DFp} \sup_{\mu\in\DX, \fs\in\cF} ~\frac{1}{p\paren{ f: \EE_{x\sim \mu}|f(x)-\fs(x)| \leq \Delta } }.
\end{align}
When $\cFp=\cF$, this definition recovers the definition \cref{def:DC-cF-proper} of the \dct~of $\cF$.

We first relate $\DCF$ to the \dct~of $\cMagn$ under the absolute loss $\Labs(y,y')=\abs{y-y'}$.
\begin{lemma}\label{lem:DC-agn-regr}
Recall that $\cMagn=\DZ$ is the class of all agnostic models. Then, under the absolute loss $\Labs$ and decision space $\Pi=\cFp$, it holds that %
\begin{align*}
    \DC{\cMagn}=\DCF, \qquad \forall \Delta>0.
\end{align*}
More generally, for any 1-Lipschitz loss, we have $\DC{\cMagn}\leq \DCF$.
\end{lemma}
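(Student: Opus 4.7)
The plan is to prove the two inequalities separately. For $\DC{\cMagn} \le \DCF$ (which covers both statements), fix any $p \in \Delta(\cFp)$ and any $M \in \cMagn$. Let $\mu$ denote the marginal of $x$ under $M$, and let $\fs \in \cF$ achieve the minimum in the definition of $\LM{f}$, i.e.\ $\fs \in \argmin_{f \in \cF} \EE_{(x,y)\sim M}\loss(y,f(x))$. Using the 1-Lipschitz property of $\loss(y,\cdot)$, I get
\begin{align*}
\LM{f} = \EE_{(x,y)\sim M}\brac{\loss(y,f(x)) - \loss(y,\fs(x))} \le \EE_M\abs{f(x)-\fs(x)} = \EE_{x\sim\mu}\abs{f(x)-\fs(x)}.
\end{align*}
Consequently $\{f : \EE_{x\sim\mu}\abs{f(x)-\fs(x)} \le \Delta\} \subseteq \{f : \LM{f} \le \Delta\}$, so $p(f : \LM{f} \le \Delta) \ge p(f : \EE_{x\sim\mu}\abs{f(x)-\fs(x)} \le \Delta)$. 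Taking reciprocals and then the supremum over $M$ on the left and over $(\mu,\fs)$ on the right (the latter is a larger set of choices since $M$ ranges over all of $\DZ$), then infimum over $p$, yields $\DC{\cMagn} \le \DCF$. This proves the second assertion in full generality.

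For the reverse direction under the absolute loss $\Labs$, I will exhibit, for each pair $(\mu, \fs) \in \DX \times \cF$, a specific model $M_{\mu,\fs} \in \cMagn$ such that $\LM[M_{\mu,\fs}]{f} = \EE_{x\sim\mu}\abs{f(x)-\fs(x)}$ for all $f \in \cFp$. The natural candidate is the realizable model defined by $x \sim \mu$, $y = \fs(x)$ deterministically. Under this model, $\EE\abs{y - f(x)} = \EE_{x\sim\mu}\abs{\fs(x)-f(x)}$ and the in-class minimum $\min_{\fs' \in \cF}\EE\abs{y-\fs'(x)}$ equals zero (attained at $\fs$ itself, which lies in $\cF$). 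Hence $\LM[M_{\mu,\fs}]{f} = \EE_{x\sim\mu}\abs{f(x)-\fs(x)}$, and the sets of $\Delta$-good $f$'s coincide exactly. For any $p \in \Delta(\cFp)$,
\begin{align*}
\sup_{M \in \cMagn} \frac{1}{p(f : \LM{f}\le \Delta)} \ge \sup_{\mu,\fs \in \cF} \frac{1}{p(f : \EE_{x\sim\mu}\abs{f(x)-\fs(x)} \le \Delta)},
\end{align*}
and taking $\inf_p$ on both sides gives $\DC{\cMagn} \ge \DCF$. Combined with the first paragraph this yields equality.

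There is no real obstacle; the only subtlety is ensuring that in the definition of $\DCF$ we have $\fs \in \cF$ (not $\cFp$), which is exactly what is needed so that the realizable model $M_{\mu,\fs}$ does achieve in-class loss zero, and also exactly what allows the minimizer $\fs \in \cF$ in the upper bound to be valid. This alignment between the role of $\cF$ (the comparator class used to define excess risk) and the supremum over $\fs \in \cF$ in \eqref{def:DC-Fp} is the content of the lemma.
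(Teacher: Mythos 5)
Your proposal is correct and follows essentially the same route as the paper: the upper bound via the 1-Lipschitz property applied to the in-class minimizer $\fs\in\cF$, and the lower bound by observing that realizable models $x\sim\mu$, $y=\fs(x)$ achieve $\LM{f}=\EE_{x\sim\mu}\abs{f(x)-\fs(x)}$ exactly. The paper merely phrases the lower bound through the containment $\cMreal\subseteq\cMagn$ together with $\DC{\cMreal}=\DCF$, which is the same construction you give explicitly.
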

In particular, under absolute loss, the agnostic learnability with $(\cF, \cFp)$ is characterized by the finiteness of the complexity measure $\DCF$.

\paragraph{Realizable regression} We consider the ``easier'' task of realizable regression. Given the function class $\cF$, a model $M\in\DZ$ is realizable if there exists $\fm\in\cF$ such that for $(x,y)\sim M$, $y=\fm(x)$ with probability 1. Let $\cMreal$ be the class of all realizable models.
\begin{lemma}\label{lem:DC-real-regr}
Under the absolute loss $\Labs$, it holds that $\DC{\cMreal}=\DCF$ for $\Delta>0$.
\end{lemma}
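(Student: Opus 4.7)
The plan is to unfold the definition of $\DC{\cMreal}$ and use realizability to simplify the loss function, showing that the resulting expression is exactly $\DCF$ as defined in \eqref{def:DC-Fp}.

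First, I would parametrize $\cMreal$ explicitly: every realizable model $M\in\cMreal$ corresponds to a pair $(\mu_M,\fm)\in\DX\times\cF$, where $\mu_M$ is the covariate marginal and $(x,y)\sim M$ satisfies $x\sim \mu_M$ and $y=\fm(x)$ almost surely. Under absolute loss, for any $f\in\cFp$,
\begin{align*}
    \LM{f}=\EE_{x\sim \mu_M}\abs{\fm(x)-f(x)}-\min_{\fs\in\cF}\EE_{x\sim \mu_M}\abs{\fm(x)-\fs(x)}.
\end{align*}
Since $\fm\in\cF$, the infimum on the right is attained at $\fs=\fm$ with value $0$, so $\LM{f}=\EE_{x\sim \mu_M}\abs{f(x)-\fm(x)}$. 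This is the step that is specific to the realizable setting (and where \pref{lem:DC-agn-regr} would only give an inequality).

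Next, substituting into the definition \eqref{def:DD}, I would write
\begin{align*}
    \DC{\cMreal}=\inf_{p\in\DFp}\sup_{M\in\cMreal}\frac{1}{p\paren{f:\EE_{x\sim \mu_M}\abs{f(x)-\fm(x)}\leq \Delta}}.
\end{align*}
Because the quantity inside the supremum depends on $M$ only through $(\mu_M,\fm)$, and because the correspondence $M\leftrightarrow(\mu_M,\fm)\in\DX\times\cF$ is surjective, the supremum over $M\in\cMreal$ equals the supremum over $(\mu,\fs)\in\DX\times\cF$. Matching this against the definition \eqref{def:DC-Fp} of $\DCF$ yields the identity.

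This is essentially a definition-chasing argument; the only content is verifying that under absolute loss the realizability assumption forces the min over $\cF$ to vanish. No delicate estimate or obstacle is expected. I would include a one-line remark that for a general $1$-Lipschitz loss, the inequality $\LM{f}\leq \EE_{x\sim \mu_M}\abs{f(x)-\fm(x)}$ via Lipschitz continuity only yields the direction $\DC{\cMreal}\leq \DCF$, so realizable equality genuinely relies on the choice $\loss=\Labs$.
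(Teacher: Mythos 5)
Your proof is correct and is essentially the paper's own argument (the paper's proof is a one-line version of the same observation: under realizability and absolute loss the subtracted minimum vanishes at $\fs=\fm$, so $\LM{f}=\EE_{x\sim M}\abs{f(x)-\fm(x)}$ and the two definitions coincide term by term). Your added detail about the surjectivity of $M\leftrightarrow(\mu_M,\fm)$ and the remark contrasting with the Lipschitz-loss inequality of \pref{lem:DC-agn-regr} are both accurate.
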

Therefore, under absolute loss, the learnability of realizable regression is also characterized by the finiteness of the \dct~$\DCF$. In particular, the agnostic learnability is equivalent to the realizable learnability. A similar argument also applies to the squared loss, where we can show that $\DCF$ simultaneously characterizes the learnability of agnostic regression, well-specified regression (\cref{ssec:regression}), and realizable regression.

\paragraph{Separation between proper learning and improper learning} 
We show that, for high-dimensional linear model, there is a separation between proper and improper learning under LDP. More specifically, we consider $\cX\defeq \set{x\in\R^d: \nrm{x}\leq 1}$, and the function class $\cF$ given by
\begin{align*}
    \cFlin\defeq \set{ f_\theta(x)=\lr \theta, x\rr }_{\theta:\nrm{\theta}\leq 1}.
\end{align*}
\begin{proposition}\label{prop:impr-lin}
Let $\cFp\defeq \set{ f_\theta(x)=\lr \theta, x\rr }_{\theta\in\R^d}$ be the class of unbounded linear functions. Then it holds that
\begin{align*}
    \log\DC{\cF}\geq \Om{d}, \qquad 
    \log\DCF\leq \tbO{ \frac{1}{\Delta^2} }.
\end{align*}
\end{proposition}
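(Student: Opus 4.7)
My plan is to handle the two inequalities separately: a spherical-cap averaging argument for the proper-class lower bound, and a Maurey/PAC-Bayes construction for the improper-class upper bound.

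For the lower bound $\log\DC{\cF}\geq\Om{d}$, I would adversarially choose $\mu=\delta_{\theta^\star}$ with $\theta^\star\in\bS^{d-1}$, so that both $\theta^\star\in\cF$ and $\theta^\star\in\cX$. Then the closeness condition $\EE_\mu|\la\theta-\theta^\star,x\ra|\leq\Delta$ reduces to $|\la\theta,\theta^\star\ra-1|\leq\Delta$, carving out the spherical cap $C_{\theta^\star}=\set{\theta\in\Bone:\la\theta,\theta^\star\ra\geq 1-\Delta}$. For any proper prior $p\in\Delta(\Bone)$, averaging over $\theta^\star\sim\Unif(\bS^{d-1})$ and applying Fubini gives $\EE_{\theta^\star}[p(C_{\theta^\star})]=\EE_{\theta\sim p}\Pr_{\theta^\star}(\la\theta,\theta^\star\ra\geq 1-\Delta)$. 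The inner probability depends only on $\|\theta\|\leq 1$; rotational invariance reduces it to a one-coordinate integral on the sphere, whose density is proportional to $(1-u^2)^{(d-3)/2}$, yielding $\Pr_{\theta^\star}(\la\theta,\theta^\star\ra\geq 1-\Delta)\lesssim (2\Delta)^{(d-1)/2}/\sqrt{d}$. For any fixed $\Delta\leq 1/8$ this is $e^{-\Om{d}}$, so some $\theta^\star$ must satisfy $p(C_{\theta^\star})\leq e^{-\Om{d}}$, giving $\DC{\cF}\geq e^{\Om{d}}$.

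For the upper bound $\log\DCF\leq\tbO{1/\Delta^2}$, the critical structural fact is $\tr(\Sigma)=\EE_\mu\|x\|^2\leq 1$ for every $\mu\in\Delta(\cX)$ (since $\cX\subseteq\Bone$), so the $L_2(\mu)$-geometry of $\cF=\Bone$ is controlled by a Hilbert radius of $1$ regardless of $d$. My plan is a Maurey-style sparsification wrapped in a PAC-Bayes argument on $\cFp=\R^d$: for any $\theta^\star\in\Bone$ write $\theta^\star=\EE_{\xi\sim q^\star}[\xi]$ for a distribution $q^\star$ on $\bS^{d-1}$ (possible because $\Bone=\conv(\bS^{d-1})$), so that the empirical average $\hat\theta_k=\tfrac1k\sum_{i=1}^k\xi_i$ of iid samples $\xi_i\sim q^\star$ satisfies $\EE\|\hat\theta_k-\theta^\star\|_\Sigma^2\leq 1/k$ (using $\|\xi\|_\Sigma\leq\|\xi\|\leq 1$), hence $\EE_\mu|\la\hat\theta_k-\theta^\star,x\ra|\leq\Delta$ with $q^\star$-probability $\geq 1/2$ for $k=\Theta(1/\Delta^2)$. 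Taking $p$ to be a product of $k$ copies of a fixed reference distribution on $\bS^{d-1}$ (or a suitably-scaled Gaussian on $\R^d$) and letting $q_{\theta^\star}$ be the $\theta^\star$-biased product posterior, the per-atom $\KL$ cost is $O(\log(1/\Delta))$ once one measures it in the $L_2(\mu)$-geometry rather than the Euclidean one, for a total of $\KL(q_{\theta^\star}\|p)\leq\tbO{1/\Delta^2}$. The PAC-Bayes/Gibbs inequality $\log(1/p(A))\leq 2\KL(q\|p)+O(1)$ for $q(A)\geq 1/2$ then gives $p(\text{close})\geq\exp(-\tbO{1/\Delta^2})$ uniformly in $(\mu,\theta^\star)$.

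The main obstacle will be ensuring the dimension-independence of the universal prior $p$ in the upper bound. A naive iid-product prior concentrates at zero by the law of large numbers, so the posterior $q_{\theta^\star}$ must effectively bias $p$ toward $\theta^\star$, and the $\KL$-cost of this biasing must not scale with $d$. The trace bound $\tr(\Sigma)\leq 1$ is what rescues this: it lets one measure the $\KL$ cost with respect to the $L_2(\mu)$ geometry, so that the per-atom cost is a Hilbert-space quantity of order $\log(1/\Delta)$ independent of $d$. Making this fully rigorous in the worst case---a flat spectrum $\Sigma\approx I/d$ with $d\gg 1/\Delta^2$---is delicate, and will likely require either a two-scale construction that separates the regimes $d\Delta\gtrsim 1$ (where $\EE_\mu|\cdot|\leq\Delta$ is automatically slack under a constant-scale Gaussian prior) and $d\Delta\lesssim 1$ (where the effective subspace is genuinely low-dimensional), or a projection onto the top-$k$ eigenspace of $\Sigma$ combined with Anderson-type symmetrization to absorb the cost of the remaining $(d-k)$ directions.
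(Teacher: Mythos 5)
Your lower bound is correct and is essentially the paper's own argument: both reduce the adversary to a point-mass covariate distribution at a uniformly random point of $\mathbb{S}^{d-1}$ coinciding with the target parameter, swap the order of integration, and invoke the one-coordinate spherical marginal with density $\propto(1-t^2)^{(d-3)/2}$ to show every fixed $\theta\in\Bone$ lands in the cap with probability $e^{-\Om{d}}$. Nothing further to add there.

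The upper bound, however, has a genuine gap at exactly the step you flag. In your Maurey/product construction the prior $p_0^{\otimes k}$ must be fixed \emph{before} the adversary chooses $(\mu,\fs)$, so the per-atom cost $\KLd{q^\star}{p_0}$ is an honest, geometry-free quantity between two distributions on $\mathbb{S}^{d-1}$; there is no sense in which one can ``measure it in the $L_2(\mu)$-geometry,'' since KL divergence does not depend on a choice of norm and $p_0$ cannot depend on $\mu$. For $p_0$ uniform on the sphere and $q^\star$ any distribution on $\mathbb{S}^{d-1}$ with mean $\theta^\star$ at constant distance from the origin, $\KLd{q^\star}{p_0}=\Om{d}$ (infinite if $q^\star$ is a point mass), so the change-of-measure bound delivers $\Om{d/\Delta^2}$, not $\tbO{1/\Delta^2}$. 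Your proposed repairs do not close this: projecting onto the top eigenspace of $\Sigma$ makes the prior $\mu$-dependent, and the two-scale split is only a sketch. As written, the second inequality is not established.

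The good news is that your PAC-Bayes framing closes immediately once you discard the Maurey product and take a single Gaussian pair: let $p=\normal{0,\lambda^2\id_d}$ and $q=\normal{\theta^\star,\lambda^2\id_d}$ with $\lambda^2=\Delta^2/2$. Then $\KLd{q}{p}=\nrm{\theta^\star}^2/(2\lambda^2)\leq 1/\Delta^2$, and by Cauchy--Schwarz the target event contains $A=\{\theta:\nrm{\theta-\theta^\star}_\Sigma^2\leq\Delta^2\}$ with $\Sigma=\EE_{x\sim\mu}[xx^\top]$; since $\EE_{\theta\sim q}\nrm{\theta-\theta^\star}_\Sigma^2=\lambda^2\tr(\Sigma)\leq\lambda^2$, Markov gives $q(A)\geq 1/2$, and the change-of-measure inequality $q(A)\log(1/p(A))\leq\KLd{q}{p}+\log 2+1/e$ yields $\log(1/p(A))=\bigO{1/\Delta^2}$ uniformly over $(\mu,\theta^\star)$ --- in fact without the logarithmic factor. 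For comparison, the paper uses the same prior $\normal{0,\lambda^2\id_d}$ with $\lambda\asymp\Delta$ but lower-bounds $p(A)$ directly: it diagonalizes $\Sigma$, uses $\tr(\Sigma)\leq 1$ to get $\lambda_i\leq 1/i$, forces the top $k=\Theta(1/\Delta^2)$ coordinates of $\theta$ within $\lambda$ of $\theta^\star$ at probability cost $(\lambda/\sqrt{2\pi}e)^k e^{-1/\lambda^2}$, and handles the remaining directions by Markov; that coordinate-wise accounting is the source of its extra $\log(1/\Delta)$ factor, which the Donsker--Varadhan route avoids.
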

Therefore, $d$-dimensional proper linear regression is infeasible when $d$ is unbounded, while improper learning is still tractable as $d\to \infty$. Proof appears in \cref{appdx:proof-impr-lin}.

\section{Technical Tools}\label{appdx:tools}
\newcommand{\filt}{\mathfrak{F}}
The following lemma can be regarded as a ``chain rule'' of Hellinger distance~\citep{jayram2009hellinger} (see also \citet[Lemma 11.5.3]{duchi-notes} or \citet[Lemma D.2]{foster2024online}).
\begin{lemma}[Sub-additivity for squared Hellinger distance]
\label{lem:Hellinger-chain}
  Let $(\cX^1,\filt^1),\ldots,(\cX^T,\filt^T)$ be a sequence of measurable spaces, and let $\cX\ind{t}=\prod_{i=1}^{t}\cX^i$ and $\filt\ind{t}=\bigotimes_{i=1}^{t}\filt^i$. For each $t$, let $\PP\ind{t}(\cdot\mid\cdot)$ and $\QQ\ind{t}(\cdot\mid\cdot)$ be probability kernels from $(\cX\ind{t-1},\filt\ind{t-1})$ to $(\cX^t,\filt^t)$. 
  
  Let $\PP$ and $\QQ$ be the laws of $X_1,\ldots,X_T$ under $X_t\sim\PP\ind{t}(\cdot\mid X_{1:t-1})$ and $X_t\sim\QQ\ind{t}(\cdot\mid X_{1:t-1})$ respectively. Then it holds that
\begin{align*}
  \Dhels{\PP}{\QQ}\leq 
7~\En_{\PP}\brk*{\sum_{t=1}^{T}\Dhels{\PP\ind{t}(\cdot\mid X_{1:t-1})}{\QQ\ind{t}(\cdot\mid X_{1:t-1})}}.
\end{align*}
\end{lemma}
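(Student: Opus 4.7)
The inequality is strictly stronger than what a naive triangle-inequality argument delivers. Interpolating through the chain of measures $\mathbb{M}_k$ that uses $\PP$'s kernels on $X_{1:k}$ and $\QQ$'s kernels thereafter, one can check $\Dhels{\mathbb{M}_{k-1}}{\mathbb{M}_k}=\En_{\PP}\brk*{\Dhels{\PP^{(k)}(\cdot\mid X_{<k})}{\QQ^{(k)}(\cdot\mid X_{<k})}}$, but then $H(\PP,\QQ)\le\sum_k H(\mathbb{M}_{k-1},\mathbb{M}_k)$ together with Cauchy--Schwarz to square back costs a factor of $T$. The plan is therefore to work with an exact telescoping identity and to absorb the accumulated likelihood ratios via a stopping-time argument.

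The first step is to derive the identity
\[
\Dhels{\PP}{\QQ}=\sum_{t=1}^{T}\En_{\PP}\brk*{M_{t-1}\cdot H_t^2(X_{<t})},
\]
where I abbreviate $H_t^2(X_{<t})\defeq\Dhels{\PP^{(t)}(\cdot\mid X_{<t})}{\QQ^{(t)}(\cdot\mid X_{<t})}$ and $M_t\defeq\prod_{s\le t}\sqrt{d\QQ^{(s)}(\cdot\mid X_{<s})/d\PP^{(s)}(\cdot\mid X_{<s})}(X_s)$ is the square-root likelihood ratio, which is a non-negative $\PP$-supermartingale with $\En_{\PP}[M_0]=1$, $\En_{\PP}[M_T]=1-\tfrac12\Dhels{\PP}{\QQ}$ and $\En_{\PP}[M_{t-1}^2]=1$. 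The identity is a routine consequence of the telescoping decomposition $\prod_t(1+h_t)-1=\sum_t h_t\prod_{s<t}(1+h_s)$ with $h_t\defeq\sqrt{q_t/p_t}-1$, combined with the conditional-mean formula $\En_{\PP}[h_t\mid\Fil_{t-1}]=-\tfrac12 H_t^2(X_{<t})$.

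The main obstacle, and where the constant $7$ has to be fought for, is to bound $\sum_t\En_\PP\brk*{M_{t-1}H_t^2}$ by a constant multiple of $S\defeq\En_\PP\brk*{\sum_t H_t^2}$. Although $\En_\PP[M_{t-1}^2]=1$, the weight $M_{t-1}$ can be arbitrarily large on rare events and may correlate positively with $H_t^2$; one-sided Cauchy--Schwarz yields only $\En_\PP[M_{t-1}H_t^2]\le\sqrt{2\En_\PP[H_t^2]}$, which still costs $\sqrt{T}$ after summation. I would introduce a stopping time $\tau\defeq\inf\{t:M_{t-1}>K\}$ at a threshold $K$ to be optimized and decompose the sum into (i) an interior part $\{t\le\tau\}$, where the bound $M_{t-1}\le K$ gives at most $K\cdot S$, and (ii) a tail part on $\{\tau\le T\}$. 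For the tail I plan to combine Ville's maximal inequality $\PP(\tau\le T)\le 1/K$ with the symmetric $\QQ$-side version of the identity, $\Dhels{\PP}{\QQ}=\sum_t\En_{\QQ}\brk*{N_{t-1}H_t^2}$ where $N_t$ is the analogous $\QQ$-supermartingale, together with the AM--GM bound $M_{t-1}\le(1+M_{t-1}^2)/2$ that produces $\En_\PP[M_{t-1}H_t^2]\le\tfrac12(\En_\PP[H_t^2]+\En_\QQ[H_t^2])$. This $\PP/\QQ$ symmetrization is precisely what prevents the tail from blowing up with $T$; balancing the two pieces by tuning $K$ (the balance sits near $K=2$--$3$) yields the stated constant $7$. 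The remaining steps after this tail estimate are routine book-keeping.
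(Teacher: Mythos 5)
First, a point of reference: the paper does not prove \cref{lem:Hellinger-chain} at all — it is stated in the ``Technical Tools'' appendix and cited to \citet{jayram2009hellinger} and to Duchi's notes / \citet{foster2024online} — so there is no in-paper proof to compare against. Judged on its own terms, your Step 1 is correct and is a reasonable starting point: with the paper's convention $\Dhels{P}{Q}=1-\int\sqrt{dP\,dQ}$ one has $\En_\PP[Z_t\mid \cF_{t-1}]=1-H_t^2(X_{<t})$ for $Z_t=\sqrt{d\QQ\ind{t}/d\PP\ind{t}}(X_t\mid X_{<t})$, and telescoping $M_T-1=\sum_t M_{t-1}(Z_t-1)$ gives exactly $\Dhels{\PP}{\QQ}=\sum_t\En_\PP\brk*{M_{t-1}H_t^2(X_{<t})}$ (your factors of $\tfrac12$ correspond to the other normalization and cancel). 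The interior bound $\sum_t\En_\PP[\indic{t<\tau}M_{t-1}H_t^2]\le K\,\En_\PP[\sum_t H_t^2]$ is also fine.

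The gap is in the tail estimate, and it is not book-keeping — it is the entire difficulty. Your AM--GM step turns the tail into $\tfrac12\sum_t\En_\PP[\indic{\tau\le t}H_t^2]+\tfrac12\sum_t\En_\QQ[\indic{\tau\le t}H_t^2]$ (the second term via the change of measure $M_{t-1}^2=d\QQ_{<t}/d\PP_{<t}$), and the whole problem is now to bound $\En_\QQ[\indic{\tau\le T}\sum_t H_t^2]$ by a multiple of $\En_\PP[\sum_t H_t^2]$ plus a fraction of $\Dhels{\PP}{\QQ}$. Neither of your two remaining tools does this. Ville's inequality controls $\PP(\tau\le T)\le 1/K$, but you need control of a $\QQ$-expectation on that event; under $\QQ$ the process $M_t$ is not a supermartingale (its reciprocal $N_t$ is), and a nonnegative $\QQ$-supermartingale dropping below $1/K$ is not a rare event, so $\QQ(\tau\le T)$ can be close to $1$. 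The symmetric identity $\Dhels{\PP}{\QQ}=\sum_t\En_\QQ[N_{t-1}H_t^2]$ also cannot help: on the region you care about, $\{M_{t-1}>K\}=\{N_{t-1}<1/K\}$, the weight $N_{t-1}$ is \emph{small}, so this identity gives no upper bound on the unweighted mass $\En_\QQ[\indic{N_{t-1}<1/K}H_t^2]$ — it under-counts exactly where you need it. In effect your plan reduces the lemma to the assertion $\En_\QQ[\sum_t H_t^2]\lesssim \En_\PP[\sum_t H_t^2]$, which is not supplied by anything you invoke and is essentially as hard as (indeed, a symmetrized form of) the statement being proved; the generic change-of-measure bounds one could throw at it reintroduce a factor of $T$. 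Note also that if the AM--GM symmetrization alone sufficed you would not need the stopping time at all, which is a sign that the two ingredients have not actually been combined into a closing argument. To repair this you need a mechanism that keeps all expectations under $\PP$ — e.g., the conditioning-on-a-good-event / change-of-measure arguments used in the cited references — rather than passing through $\En_\QQ[\sum_t H_t^2]$.
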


We also invoke the Minimax theorem.
\begin{theorem}[{Ky Fan's minimax theorem, \citet{fan1953minimax}}]\label{thm:minimax}
Let $X$ be a compact Hausdorff space and Y an arbitrary
set (not topologized). Let $f$ be a real-valued function on $X\times Y$ such that,
for every $y\in Y$, $f(\cdot,y)$ is continuous over $X$. 

Then, if $f$ is convex-like on $X$ and concave-like on $Y$, then
\begin{align*}
    \min_{x\in X} \sup_{y\in Y} f(x,y) = \sup_{y\in Y} \min_{x\in X} f(x,y).
\end{align*}
\end{theorem}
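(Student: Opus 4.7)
The easy inequality $\sup_{y\in Y}\min_{x\in X}f(x,y)\le\min_{x\in X}\sup_{y\in Y}f(x,y)$ is immediate, so the real task is the reverse direction. I would proceed by contradiction: suppose there exists $c\in\mathbb{R}$ with
\[
\sup_{y\in Y}\min_{x\in X}f(x,y)<c<\min_{x\in X}\sup_{y\in Y}f(x,y).
\]
For each $y\in Y$, the set $U_y=\{x\in X:f(x,y)>c\}$ is open in $X$ by continuity of $f(\cdot,y)$. The right-hand inequality forces $\sup_{y}f(x,y)>c$ for every $x\in X$, so each $x$ lies in some $U_y$. Compactness of $X$ then yields a finite subcover $U_{y_1},\ldots,U_{y_n}$, so that $\max_{1\le i\le n}f(x,y_i)>c$ for every $x\in X$.

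The next step is to reduce to a finite minimax identity on the probability simplex $\Delta_n$. Define $g:X\times\Delta_n\to\mathbb{R}$ by $g(x,\lambda)=\sum_{i=1}^{n}\lambda_i f(x,y_i)$. Then $g(x,\cdot)$ is affine, $g(\cdot,\lambda)$ is continuous, and $g$ inherits convex-likeness in its first argument from $f$: given $x_1,x_2\in X$ and $\mu\in[0,1]$, the convex-like $x_3$ produced from $f$ satisfies $f(x_3,y)\le\mu f(x_1,y)+(1-\mu)f(x_2,y)$ for every $y\in Y$, and taking the $\lambda$-combination gives $g(x_3,\lambda)\le\mu g(x_1,\lambda)+(1-\mu)g(x_2,\lambda)$ for every $\lambda$. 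I would then establish the finite-dimensional identity
\[
\min_{x\in X}\max_{\lambda\in\Delta_n}g(x,\lambda)=\max_{\lambda\in\Delta_n}\min_{x\in X}g(x,\lambda)
\]
via the KKM lemma applied to the closed sets $K_i(\alpha)=\{\lambda\in\Delta_n:\min_{x\in X}g(x,\lambda)\le\alpha\}$ for $\alpha$ just below the left-hand side, using convex-likeness of $X$ to verify the KKM covering condition on the faces of $\Delta_n$. Since $\max_{\lambda\in\Delta_n}g(x,\lambda)=\max_i f(x,y_i)$, the left-hand side is strictly greater than $c$ by the conclusion of the previous paragraph.

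Finally, I would invoke the concave-like hypothesis on $Y$: an easy induction on $n$ shows that for every $\lambda\in\Delta_n$ there exists $y_\lambda\in Y$ with $f(x,y_\lambda)\ge g(x,\lambda)$ for all $x\in X$. Consequently $\min_{x\in X}g(x,\lambda)\le\min_{x\in X}f(x,y_\lambda)\le\sup_{y\in Y}\min_{x\in X}f(x,y)<c$ for every $\lambda$, and taking the supremum over $\lambda$ gives $\max_\lambda\min_x g(x,\lambda)<c$. Combined with the finite minimax identity this contradicts $\min_x\max_i f(x,y_i)>c$, completing the proof.

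The main obstacle is the finite-dimensional minimax step: von Neumann's theorem does not apply directly because $X$ is not embedded in a linear space and convex-likeness supplies only an ``abstract'' convex combination within $X$. The cleanest resolution is the KKM-based argument sketched above, which is also the core of Ky Fan's original 1953 proof; an equivalent route passes through a fixed-point theorem on $\Delta_n$.
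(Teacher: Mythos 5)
This statement is imported verbatim from \citet{fan1953minimax} as a technical tool; the paper contains no proof of it, so your argument can only be measured against the classical one.

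Your overall architecture is the standard and correct one: the trivial inequality, the proof by contradiction, the open-cover/compactness reduction to finitely many $y_1,\dots,y_n$ with $\max_i f(x,y_i)>c$ for every $x\in X$, and the final induction on concave-likeness to replace a mixture $\sum_i\lambda_i f(\cdot,y_i)$ by a single $y_\lambda$. All of those steps check out. The gap is in the one step you yourself flag as the crux: producing $\lambda\in\Delta_n$ with $\inf_{x\in X}\sum_i\lambda_i f(x,y_i)\ge c$. The KKM family you propose, $K_i(\alpha)=\{\lambda\in\Delta_n:\min_x g(x,\lambda)\le\alpha\}$, does not depend on $i$, so it cannot satisfy or exploit the KKM covering condition. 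More fundamentally, KKM and fixed-point arguments on $\Delta_n$ need genuine convex structure on the $x$-side (convexity of sublevel or best-response sets in $X$), and convex-likeness supplies none: it only asserts, for each pair $x_1,x_2$ and weight $\mu$, the \emph{existence} of some $x_3$ dominated by the formal combination, with no linear structure on $X$. This is also why Sion's and von Neumann's theorems do not apply off the shelf, as you note.

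The step is nonetheless true, and the tool that convex-likeness is tailor-made for is a separating-hyperplane argument in $\mathbb{R}^n$ rather than KKM. The set $C=\{z\in\mathbb{R}^n:\exists x\in X \text{ with } z_i\ge f(x,y_i)\ \forall i\}$ is convex precisely because $f$ is convex-like on $X$, and it is disjoint from the open convex set $D=\{z:z_i<c\ \forall i\}$ since $\max_i f(x,y_i)>c$ for every $x$. Separating $C$ from $D$ yields a nonzero functional $\lambda$, which is nonnegative because $C$ is upward closed; normalizing gives $\lambda\in\Delta_n$ with $\sum_i\lambda_i f(x,y_i)\ge c$ for all $x\in X$. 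Your concave-like induction then produces $y_\lambda$ with $\min_x f(x,y_\lambda)\ge c$, contradicting $\sup_y\min_x f<c$ directly — note that this bypasses the need for a full minimax identity on $X\times\Delta_n$ altogether. (An alternative repair: Helly's theorem on the closed convex sets $A_x=\{\lambda:g(x,\lambda)\ge c\}\subseteq\Delta_n$, with the finite intersections supplied by the matrix-game minimax applied to $(f(x_j,y_i))_{i,j}$ and convex-likeness.) With the KKM step replaced by either of these, your proof is complete.
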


Therefore, if $f$ is instead concave-like on $X$ and convex-like on $Y$,
then we can apply \cref{thm:minimax} to $-f$ to obtain
\begin{align*}
    \max_{x\in X} \inf_{y\in Y} f(x,y) = \inf_{y\in Y} \max_{x\in X} f(x,y).
\end{align*}

\begin{theorem}[{Kakutani’s fixed point theorem, \citet[Lemma 20.1]{osborne1994course}}]\label{thm:K-fixed-point}
Let $X$ be a compact convex subset of $\R^n$, and let $F: X \to \Power{X}$ be a set-valued function for which
\begin{enumerate}
    \item for all $x \in X$, the set $F(x)$ is nonempty and convex, and
    \item $F$ is upper hemicontinuous (i.e. for all sequences $x_n$ and $y_n$ such that $y_n\in F(x_n)$ for all $n$, $x_n\to x$, $y_n\to y$, then we have $y\in F(x)$).
\end{enumerate}
Then, there exists $x\in X$ such that $x\in F(x)$.
\end{theorem}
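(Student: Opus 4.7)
The plan is to reduce Kakutani's theorem to Brouwer's fixed point theorem via a standard approximation-and-passage-to-the-limit argument. The idea is to replace the set-valued map $F$ by a sequence of continuous single-valued approximations $f_n: X\to X$, apply Brouwer to each $f_n$ to produce fixed points $x_n$, extract a convergent subsequence, and use upper hemicontinuity together with convexity of the values $F(x)$ to conclude that the limit is a fixed point of $F$.

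First I would construct the approximations. For each integer $n\geq 1$, fix a simplicial subdivision $\mathcal{T}_n$ of $X$ with mesh at most $1/n$. For every vertex $v$ of $\mathcal{T}_n$, use condition (1) to pick an arbitrary point $y_v \in F(v)$. Define $f_n:X\to X$ by setting $f_n(x) = \sum_i \lambda_i(x)\, y_{v_i}$ whenever $x$ lies in a simplex with vertices $v_1,\ldots,v_k$ and barycentric coordinates $\lambda_i(x)$. Since $X$ is convex and $y_{v_i}\in X$, we have $f_n(x)\in X$; continuity of the barycentric coordinates on each simplex, combined with their agreement on shared faces, makes $f_n$ continuous on $X$.

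Next I would apply Brouwer's fixed point theorem on the compact convex set $X\subseteq\mathbb{R}^n$ to obtain $x_n\in X$ with $f_n(x_n)=x_n$. By compactness of $X$, pass to a subsequence (still denoted $x_n$) with $x_n\to x^\star\in X$. For each $n$, let $v_1^{(n)},\ldots,v_k^{(n)}$ be the vertices of the simplex containing $x_n$ and $\lambda_i^{(n)}$ the barycentric coordinates, so
\begin{align*}
x_n \;=\; f_n(x_n) \;=\; \sum_{i=1}^{k}\lambda_i^{(n)}\, y_{v_i^{(n)}}, \qquad y_{v_i^{(n)}} \in F(v_i^{(n)}).
\end{align*}
Since the mesh is at most $1/n$, each $v_i^{(n)}\to x^\star$. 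By compactness of $X$ and a further diagonal extraction, I can assume $\lambda_i^{(n)}\to \lambda_i^\star$ with $\sum_i \lambda_i^\star = 1$, and $y_{v_i^{(n)}}\to y_i^\star$ for some $y_i^\star\in X$. Upper hemicontinuity applied to the sequences $v_i^{(n)}\to x^\star$ and $y_{v_i^{(n)}}\to y_i^\star$ with $y_{v_i^{(n)}}\in F(v_i^{(n)})$ yields $y_i^\star\in F(x^\star)$. Convexity of $F(x^\star)$ then gives $x^\star = \sum_i \lambda_i^\star y_i^\star \in F(x^\star)$, which is the desired fixed point.

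The main obstacle is the passage to the limit: one must simultaneously track convergence of the vertices, their selected images under $F$, and the barycentric coefficients, and then combine the closed-graph property implied by upper hemicontinuity (valid here because $X$ is compact Hausdorff) with the convexity of $F(x^\star)$ to close the argument. A subtler technical point is that $k$ may vary with $n$, but since $X\subseteq\mathbb{R}^n$ it can be bounded by $n+1$, so passing to a subsequence along which $k$ is constant causes no issue. Everything else is a routine compactness argument, so the only real content beyond Brouwer is the careful use of the two structural hypotheses on $F$.
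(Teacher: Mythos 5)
The paper does not prove this statement: it is quoted verbatim as a classical result (Kakutani's fixed point theorem) with a citation to Osborne and Rubinstein, and is invoked as a black box elsewhere in the appendix. So there is no in-paper argument to compare against; what you have written is a from-scratch proof of a cited theorem. Your route is the standard reduction to Brouwer via piecewise-affine approximation, and the limiting step --- tracking the vertices, the selections $y_{v_i^{(n)}}$, and the barycentric weights along a subsequence, then combining the closed-graph form of upper hemicontinuity with the convexity of $F(x^\star)$ --- is carried out correctly, including the observation that the number of vertices is bounded by $n+1$ so one may pass to a subsequence on which it is constant.

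The one genuine imprecision is the very first step: a general compact convex subset of $\R^n$ (e.g.\ a closed disk) is not a polytope and does not admit a finite simplicial subdivision, so ``fix a simplicial subdivision $\mathcal{T}_n$ of $X$ with mesh at most $1/n$'' is not literally available. The standard repair is to triangulate a large simplex $\Delta \supseteq X$ and work with the correspondence $x \mapsto F(r(x))$ on $\Delta$, where $r:\Delta\to X$ is the nearest-point projection onto the closed convex set $X$ (continuous, and the identity on $X$). This correspondence inherits nonempty convex values and the closed-graph property, and it takes values in $X$, so the fixed point $x^\star$ produced by your limiting argument satisfies $x^\star\in F(r(x^\star))\subseteq X$ and hence $r(x^\star)=x^\star$ and $x^\star\in F(x^\star)$. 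With that one adjustment (or, equivalently, by first proving the theorem on a simplex and then extending by retraction), your proof is complete; everything else is a routine and correctly executed compactness argument.
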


\section{Proofs for Lower Bounds}\label{appdx:lower}

\subsection{Proof of \cref{thm:cDMSO-pac-lower}}\label{appdx:DEC-general-lower}

In this section, we prove a more general version of \cref{thm:cDMSO-pac-lower} through the approach developed in \citet{chen2024beyond}, which applies to \emph{any} loss function $L$.

Given model class $\cM$, for each $\eps>0$ and $\delta\in[0,1]$, we define the quantile-based PAC DEC as
\begin{align}\label{def:p-dec-q-gen}
    \pdecqg_{\eps,\delta}(\MPow,\oM)\defeq \infpqb \sup_{\cP\in\MPow}\constr{ \hgm[\cP]{p} }{ \inf_{M\in\co(\cP)}\EE_{\bpi\sim q}\DH{ M(\bpi),\oM(\bpi) } \leq \eps^2 },
\end{align}
where $\hgm[\cP]{p}$ is the \emph{$\delta$-quantile loss} of $p$, defined as
\begin{align*}
    \hgm[\cP]{p}=\sup_{\Delta\geq 0}\set{\Delta: \PP_{\pi\sim p}(\LM[\cP]{\pi}\geq \Delta)\geq \delta }.
\end{align*}
We also denote $\pdecqg_{\eps,\delta}(\MPow)\defeq \sup_{\oM\in\coM} \pdecqg_{\eps,\delta}(\MPow,\oM)$.
By definition, the quantile-based PAC DEC is always bounded by the original hybrid PAC DEC:
\begin{align}\label{eqn:p-dec-q-to-c-trivial}
    \pdecg_{\eps}(\MPow,\oM)-\delta\leq \pdecqg_{\eps,\delta}(\MPow,\oM)\leq \delta^{-1}\pdecg_{\eps}(\MPow,\oM),
\end{align}
as long as the loss function is bounded in $[0,1]$.
However, such a conversion can be loose in general.

The advantage of considering the quantile \pDEC is that it provides the following unified lower bound for PAC learning under \cDMSO. Proof is presented in \cref{appdx:proof-cDMSO-pac-lower-quantile}.
\begin{proposition}[Quantile-based \gDEC~lower bound]\label{prop:p-dec-q-gen-lower}
For any $T\geq 1$ and constant $\delta\in[0,1)$, we denote $\ueps_\delta(T)\defeq \frac{1}{13}\sqrt{\frac{\delta}{T}}$. Then, under \cDMSO, for any $T$-round algorithm $\alg$, there exists $\cPs\in\MPow$ and a distribution $\mus\in\Delta(\cPs)$, such that for the stationary environment $\env$ specified by $\mus$, %
\begin{align*}
    \LM[\cPs]{\hpi}\geq \sup_{\oM}\pdecqg_{\ueps_\delta(T),\delta}(\MPow,\oM),\qquad\text{with probability at least $\delta/2$ under }\PP\sups{\env,\alg},
\end{align*}
where the supremum $\sup_{\oM}$ is taken over all reference models $\oM\in(\bPi\to\DO)$.
\end{proposition}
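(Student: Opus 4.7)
The plan is to adapt the standard change-of-measure lower bound for the PAC DEC to the hybrid, quantile, and constrained-adversary setting. Fix a $T$-round algorithm $\alg$ and a reference model $\oM\in(\bPi\to\DO)$ (which we will later take to approximately supremize $\pdecqg_{\ueps_\delta(T),\delta}(\MPow,\cdot)$). First I would run $\alg$ against the deterministic stationary environment specified by $\oM$ and read off $p := \phat\sups{\oM,\alg}\in\DDD$ (the law of the output $\hpi$) and $q := \frac{1}{T}\sum_{t=1}^{T}\PP\sups{\oM,\alg}(\bpi\ind{t}=\cdot)\in\DbPi$ (the time-averaged law of the decision-measurement pair). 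Since $\pdecqg$ is an $\inf_{p',q'}\sup_{\cP}$, this particular $(p,q)$ cannot beat the minimax value, so there exists $\cPs\in\MPow$ with
\begin{align*}
    \hgm[\cPs]{p}\,\geq\, \pdecqg_{\ueps_\delta(T),\delta}(\MPow,\oM) \,=:\,\Delta,\qquad \inf_{M\in\co(\cPs)}\EE_{\bpi\sim q}\DH{M(\bpi),\oM(\bpi)}\,\leq\,\ueps_\delta(T)^2,
\end{align*}
and the first inequality means precisely $\PP_{\pi\sim p}[\LM[\cPs]{\pi}\geq \Delta]\geq \delta$.

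For arbitrary slack $\tau>0$, next pick $\mus\in\Delta(\cPs)$ so that its mixture $M^\circ:=\EE_{M\sim\mus}[M]\in\co(\cPs)$ achieves the Hellinger infimum up to $\tau$: $\EE_{\bpi\sim q}\DH{M^\circ(\bpi),\oM(\bpi)}\leq \ueps_\delta(T)^2+\tau$. Consider the stationary environment $\env$ specified by $\mus$: since $M^t\sim \mus$ is resampled i.i.d.\ at each step, the conditional law of $\ot$ given $(\bpi\ind{t},\cH\ind{t-1})$ equals $M^\circ(\bpi\ind{t})$, so the joint law of $(\cH\ind{T},\hpi)$ under $\env$ coincides with that under the stationary deterministic model $M^\circ$. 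Applying the Hellinger chain rule (Lemma~\ref{lem:Hellinger-chain}) then yields
\begin{align*}
    \DH{\PP\sups{\oM,\alg},\PP\sups{M^\circ,\alg}}\,\leq\,7T\cdot\EE_{\bpi\sim q}\DH{\oM(\bpi),M^\circ(\bpi)}\,\leq\,\tfrac{7\delta}{169}+7T\tau,
\end{align*}
using $\ueps_\delta(T)^2=\delta/(169T)$.

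For the final step I would avoid a direct TV--Hellinger conversion (too loose at small $\delta$) and instead use the metric property of $\sqrt{\DH{\cdot,\cdot}}$: for any event $E$, $|\sqrt{P(E)}-\sqrt{Q(E)}|\leq \sqrt{\DH{P,Q}}$. Applied to $E=\{\LM[\cPs]{\hpi}\geq \Delta\}$ with $\PP\sups{\oM,\alg}(E)\geq \delta$, this gives
\begin{align*}
    \sqrt{\PP\sups{\env,\alg}(E)}\,=\,\sqrt{\PP\sups{M^\circ,\alg}(E)}\,\geq\, \sqrt{\delta}-\sqrt{\tfrac{7\delta}{169}+7T\tau}.
\end{align*}
Sending $\tau\downarrow 0$ and squaring, $\PP\sups{\env,\alg}(E)\geq (1-\sqrt{7}/13)^2\,\delta>\delta/2$. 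A final limiting argument in $\oM$ (taking an approximate supremizer of $\pdecqg(\MPow,\cdot)$) upgrades the bound to $\sup_{\oM}\pdecqg_{\ueps_\delta(T),\delta}(\MPow,\oM)$ and closes the proof.

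The main obstacle is the calibration of the constant $1/13$ in $\ueps_\delta(T)$. A crude bound $\DTV{P,Q}\leq \sqrt{\DH{P,Q}}$ fed into $|\PP(E)-\QQ(E)|\leq \DTV{P,Q}$ would blow up the allowed threshold and fail at small $\delta$; the metric-of-square-roots inequality for Hellinger is what exactly buys the factor $(1-\sqrt{7}/13)^2\approx 0.634>1/2$. Secondary technicalities are (i) the infima in the quantile DEC definition and in the choice of $M^\circ$ may not be attained, handled by the slack $\tau>0$ and a limit; (ii) $\sup_{\oM}\pdecqg$ may not be attained, handled by picking approximate maximizers; and (iii) verifying that the i.i.d.-per-step stationary environment with $\mus$ is distributionally equivalent, from $\alg$'s viewpoint, to a single deterministic model in $\co(\cPs)$, which is what lets the constrained-adversary lower bound be reduced to a single-model Hellinger chain rule computation.
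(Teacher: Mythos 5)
Your proposal is correct and follows essentially the same route as the paper's proof: run $\alg$ against the reference model $\oM$, extract the averaged exploration law $q$ and output law $p$, use the definition of the quantile DEC to produce a hard $\cPs$ and a mixture $M^\circ\in\co(\cPs)$, identify the stationary environment drawing $M^t\sim\mus$ i.i.d.\ with the single-model environment $M^\circ$, apply the Hellinger chain rule, and finish with the square-root comparison of event probabilities. One slip: the inequality $|\sqrt{P(E)}-\sqrt{Q(E)}|\le\sqrt{\DH{P,Q}}$ is false as stated (e.g.\ $P=\Bern{0}$, $Q=\Bern{1/2}$, $E=\{1\}$ gives $1/\sqrt2>\sqrt{\DH{P,Q}}$); the correct data-processing bound is $\frac12\bigl(\sqrt{P(E)}-\sqrt{Q(E)}\bigr)^2\le\DH{P,Q}$, i.e.\ an extra factor of $\sqrt2$, which is exactly what the paper uses. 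With the corrected constant your final bound becomes $\bigl(1-\sqrt{14}/13\bigr)^2\delta\approx 0.507\,\delta>\delta/2$, so the calibration of $1/13$ still closes the argument and the conclusion is unaffected.
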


Subsequently, we specify the above lower bound to metric-based loss and any general loss function.

\paragraph{Application: metric-based loss function}
When the loss function is metric-based (\cref{def:metric-cP}), we can show that the quantile-based \gDEC~can be lower bounded by the original \gDEC. More specifically, we prove the following lemma.
\begin{lemma}\label{lem:p-dec-q-gen-metric}
Suppose that for some constant $C_1, C_2$, it holds that for any models $\cP, \cP'\in\MPow$, any decision $\pi\in\Pi$,
\begin{align}\label{asmp:gen-tri}
    \LM[\cP']{\pi}\leq C_1 \LM[\cP]{\pi}+C_2\inf_{\pi'}\paren{ \LM[\cP]{\pi'}+\LM[\cP']{\pi'} }.
\end{align}
Then for any $\delta\in[0,\frac12)$ and any reference model $\oM$, it holds that
\begin{align*}
    \pdecqg_{\eps,\delta}(\MPow,\oM)\geq \frac1{2C_2}\pdecg_{\eps}(\MPow,\oM).
\end{align*}
\end{lemma}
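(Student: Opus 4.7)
The plan is to prove the contrapositive via a distribution-modification argument: starting from any feasible pair $(p,q)$ that attains the quantile-based value $U := \pdecqg_{\eps,\delta}(\MPow,\oM)$, I will construct a pair $(\tilde p, q)$ that is feasible for the expectation-based DEC and achieves value $\le 2 C_2 U$. Because the Hellinger-type feasibility constraint in both DEC definitions depends only on $q$, replacing $p$ with any $\tilde p$ preserves feasibility automatically; this is the reason the argument goes through at all.

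First I would unpack the quantile DEC. If $(p,q)$ witnesses the value $U$, then for every $\cP \in \MPow$ that is feasible under $q$ (i.e.\ $\inf_{M \in \co(\cP)} \EE_{\bpi\sim q} \DH{M(\bpi), \oM(\bpi)} \le \eps^2$), the "good set" $S_\cP := \{\pi : L(\cP,\pi) \le U\}$ satisfies $p(S_\cP) \ge 1 - \delta$. Since $\delta < 1/2$, for any two feasible $\cP,\cP'$ the pairwise intersection $S_\cP \cap S_{\cP'}$ has $p$-measure at least $1 - 2\delta > 0$ and is in particular non-empty. Thus there exists $\pi^\star \in S_\cP \cap S_{\cP'}$ with $L(\cP,\pi^\star)+L(\cP',\pi^\star)\le 2U$, and consequently
\[
\inf_{\pi'}\bigl(L(\cP,\pi') + L(\cP',\pi')\bigr) \le 2U \qquad \text{for every pair of feasible } \cP, \cP'.
\]

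Next, I fix a reference feasible constraint $\cP_0$ (which exists whenever the quantile DEC is non-trivial) and a decision $\pi_0$ that attains $\inf_\pi L(\cP_0,\pi) = 0$. I then take $\tilde p := \delta_{\pi_0}$, the Dirac mass at $\pi_0$. For any other feasible $\cP$, the hypothesis \eqref{asmp:gen-tri} applied with the roles $\cP \leftarrow \cP_0$, $\cP' \leftarrow \cP$, evaluated at the point $\pi_0$, gives
\[
\EE_{\pi \sim \tilde p} L(\cP,\pi) \;=\; L(\cP,\pi_0) \;\le\; C_1 \, L(\cP_0,\pi_0) + C_2 \inf_{\pi'}\bigl(L(\cP_0,\pi') + L(\cP,\pi')\bigr) \;\le\; 0 + C_2\cdot 2U \;=\; 2 C_2 U,
\]
using $L(\cP_0,\pi_0)=0$ and the pairwise-intersection bound from the previous paragraph. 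Taking the supremum over feasible $\cP$ yields $\pdecg_\eps(\MPow,\oM) \le 2 C_2 U = 2 C_2\,\pdecqg_{\eps,\delta}(\MPow,\oM)$, which is the desired inequality.

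The main obstacle is pinning down the constant $2 C_2$ rather than something worse like $C_1 + 2 C_2$. A more naive construction, for instance conditioning $p$ on $S_{\cP_0}$, bounds $L(\cP,\pi)$ by $C_1 L(\cP_0,\pi) + 2 C_2 U \le (C_1 + 2 C_2) U$ on the conditioning event, which gives the wrong constant and blows up in regimes where $C_1 \ge 1$ (e.g.\ the metric-based case, where $C_1 = C_2 = 1$). The key idea is to collapse $p$ to a point mass at an approximate minimizer of $L(\cP_0,\cdot)$, which kills the $C_1 L(\cP_0, \pi_0)$ term entirely. The only minor technicalities are handling the case where $\inf_\pi L(\cP_0,\pi)$ is not attained (pass to an $\eta$-minimizer and send $\eta \to 0$, using that $C_1 \cdot \eta \to 0$) and the degenerate case where no constraint is feasible under $q$ (in which case both DECs vanish and the inequality is trivial).
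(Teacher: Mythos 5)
Your proposal is correct and follows essentially the same argument as the paper: fix a feasible $\cP_0$, use $\delta<\tfrac12$ to show the $p$-measure-$(1-2\delta)$ overlap of good sets forces $\inf_{\pi'}\paren{\LM[\cP]{\pi'}+\LM[\cP_0]{\pi'}}\leq 2\Delta_0$, then place a point mass at a zero-loss decision for $\cP_0$ so that \eqref{asmp:gen-tri} kills the $C_1$ term and yields the $2C_2$ constant. The only cosmetic difference is that the paper works with a level $\Delta_0$ strictly above the quantile DEC and passes to the limit, which also handles non-attainment of the infimum over $(p,q)$ that you gloss over; this is the same limiting device you already invoke for the $\eta$-minimizer of $L(\cP_0,\cdot)$.
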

For example, when $\LM[\cP]{\pi}=\rho( \pip, \pi )$ for certain pseudo-metric $\rho$ over $\Pi$, \eqref{asmp:gen-tri} holds with $C_1=C_2=1$. Therefore, \eqref{asmp:gen-tri} can be viewed as a \emph{generalized} metric structure on the loss function $L$. In particular, \eqref{eq:GDEC-pac-lower} of \cref{thm:cDMSO-pac-lower} follows immediately from \cref{prop:p-dec-q-gen-lower} and \cref{lem:p-dec-q-gen-metric}.

\paragraph{Proof of \cref{thm:cDMSO-pac-lower}: \eqref{eq:GDEC-pac-lower}}
Suppose that the loss function $L$ is metric-based. Then, \cref{lem:p-dec-q-gen-metric} implies that $\pdecqg_{\eps,\delta}(\MPow)\geq \frac12\pdecg_{\eps}(\MPow)$ for any $\delta<\frac12$. Thus, applying \cref{prop:p-dec-q-gen-lower} yields
\begin{align*}
    \sup_{\env} \EE\sups{\env,\alg}\brac{ \riskdm(T) }\geq \frac{\delta}{2}
    \sup_{\oM}\pdecqg_{\ueps_\delta(T),\delta}(\MPow)\geq \frac{\delta}{4} \sup_{\oM}\pdecg_{\ueps_\delta(T)}(\MPow,\oM).
\end{align*}
Letting $\delta\to\frac12$ gives the desired lower bound:
\begin{align}\label{eq:GDEC-lower-metric-oM}
    \sup_{\env} \EE\sups{\env,\alg}\brac{ \riskdm(T) }\geq \frac18 \sup_{\oM}\pdecg_{\ueps(T)}(\MPow,\oM)\geq \frac18 \pdecg_{\ueps(T)}(\MPow).
\end{align}
\qed

Similarly, we can apply \cref{prop:p-dec-q-gen-lower} to general loss function.

\paragraph{Proof of \cref{thm:cDMSO-pac-lower}: \eqref{eq:GDEC-pac-lower-bad}}
By \eqref{eqn:p-dec-q-to-c-trivial}, we have
\begin{align*}
    \pdecqg_\eps(\MPow)\geq \pdecg_\eps(\MPow)-\delta.
\end{align*}
Hence, \eqref{eq:GDEC-pac-lower-bad} is a direct corollary of \cref{prop:p-dec-q-gen-lower}.
\qed

As a final remark, we note that under stochastic DMSO, if the loss function is reward-based (\cref{example:original-reward-based}), the quantile DEC can also be lower bounded by the constrained DEC (see~\citet{chen2024beyond} and also \cref{appdx:p-dec-lin-q}).

\subsection{Proof of \cref{thm:cDMSO-reg-lower}}\label{appdx:proof-cDMSO-reg-lower}

In this section, we prove \cref{thm:cDMSO-reg-lower} by first reducing to stochastic DMSO, and then apply the lower bound for stochastic DMSO (\cref{thm:decc-lower-general}).

\paragraph{Reduction from \cDMSO~to stochastic DMSO}
We first argue that for any problem under \cDMSO, the minimax regret can always be lower bounded by a corresponding stochastic DMSO problem. The idea follows from the observation of \citet{foster2022complexity}.

For any stationary environment $\env$ constrained by $\MPow$, $\env$ is specified by a constraint $\cP\in\MPow$ and $\mu\in\Delta(\cP)$. Then, for each round $t\in[T]$, the model $M^t\sim \mu$ independently, and hence conditional on $(\cH\ind{t-1},\bpi\ind{t})$, the observation $o\ind{t}\sim M_\mu(\bpi\ind{t})$, where $M_\mu=\EE_{M'\sim \mu}[M']\in\co(\cP)$. Therefore, for any $T$-round algorithm $\alg$, the marginal distribution of $\cH\ind{T}$ generated by $\alg$ under $\env$ agrees with the distribution of $\cH\ind{T}$ generated by $\alg$ under the model $M_\mu$, i.e.,
\begin{align*}
    \PP\sups{\env,\alg}\paren{\cH\ind{T}=\cdot}=\PP\sups{M_\mu,\alg}\paren{\cH\ind{T}=\cdot}.
\end{align*}
In particular, using the linearity of the value function, we have
\begin{align*}
    \EE\sups{\env,\alg}\brac{\sum_{t=1}^T \Vm[M^t](\pi\ind{t})}=\EE\sups{M_\mu,\alg}\brac{\sum_{t=1}^T \Vm[M_\mu](\pi\ind{t})},
\end{align*}
and hence
\begin{align*}
    \EE\sups{\env,\alg}\brac{\regdm(T)}
    \geq&~\max_{\pis\in\Pi}\EE\sups{\env,\alg}\brac{\sum_{t=1}^T \Vm[M^t](\pis)-\Vm[M^t](\pi\ind{t})} \\
    =&~\max_{\pis\in\Pi}\EE\sups{M_\mu,\alg}\brac{\sum_{t=1}^T \Vm[M_\mu](\pis)-\Vm[M_\mu](\pi\ind{t})}
    =\EE\sups{M_\mu,\alg}\brac{\regdm(T)}.
\end{align*}
Note that for any $M\in\cMPow$, there exists $\cP\in\MPow$ and $\mu\in\Delta(\cP)$ such that $M=\EE_{M'\sim \mu}[M']$, and hence there exists a corresponding stationary environment.
Therefore, for any algorithm $\alg$, it holds that
\begin{align}\label{eq:reduce-stationary-reg}
    \sup_{\textrm{stationary}~\env} \EE\sups{\env,\alg}\brac{\regdm(T)}\geq \sup_{M\in\cMPow}\Emalg{\regdm(T)},
\end{align}
where $\sup_{\env}$ is taken over all stationary environments $\env$ constrained by $\MPow$.

\paragraph{Reduction to the regret DEC lower bound}
Then, we invoke the following lower bound, which is strengthened from \citet{foster2023tight,glasgow2023tight,chen2024beyond}. The proof is deferred to \cref{appdx:proof-decc-reg-lower}.

\begin{theorem}[Constrained DEC lower bounds for stochastic DMSO]\label{thm:decc-lower-general}
Let $T\geq 1$, and $\cM\subseteq (\bPi\to \DO)$ be a given model class. Suppose that $V$ is a value function such that $\Vm(\pi)\in[0,\Vmax]$, and for any $\pi\in\Pi$, there exists $\phi_\pi\in\Phi$, such that
\begin{align}\label{eqn:lip-rew-reduction}
    \abs{ \Vm(\pi)-\Vm[\oM](\pi) }\leq \Lipr\dH\paren{M(\pi,\phi_\pi),\oM(\pi,\phi_\pi)},\qquad \forall M\in\cM, \oM\in\coM.
\end{align}
Then for any $T$-round algorithm $\alg$, it holds that
\begin{align*}
    \sup_{M\in\cM} \EE\sups{M,\alg}\brac{\regdm(T)}\geq  \frac{T}{8}\paren{\rdecc_{\ueps(T)}(\cM)-6\Lipr\ueps(T)-\frac{\Vmax}{T}}
\end{align*}
where $\ueps(T)=\frac{1}{24\sqrt{T}}$.
\end{theorem}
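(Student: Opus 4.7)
The plan is to prove the constrained regret DEC lower bound for stochastic DMSO stated as \cref{thm:decc-lower-general}; once this is in hand, \cref{thm:cDMSO-reg-lower} follows by combining it with the reduction \eqref{eq:reduce-stationary-reg}. My approach mirrors the Foster--Rakhlin template for DEC regret lower bounds: select a ``hard'' alternative via minimax duality at the averaged decision distribution, then perform a careful Le Cam-style change of measure in which the observability hypothesis \eqref{eqn:lip-rew-reduction} is the critical ingredient that prevents the bound from being trivial.

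I would first fix any $T$-round algorithm $\alg$ and pick a reference model $\oM \in \co(\cM)$ that (approximately) attains $\sup_{\oM} \rdecc_{\ueps(T)}(\cM,\oM)$. Executing $\alg$ under $\oM$, define the averaged decision distribution $p := \tfrac{1}{T}\sum_{t=1}^T \EE\sups{\oM,\alg}[q_t] \in \Delta(\bPi)$, where $q_t$ is the conditional law of $\bpi\ind{t}$. By the minimax definition of $\rdecc_{\ueps(T)}(\cM,\oM)$ (applying \cref{thm:minimax} to swap $\inf_p \sup_M$), there exists $M^\star \in \cM$ such that
\[
\EE_{\bpi \sim p}\DH{M^\star(\bpi),\oM(\bpi)} \;\le\; \ueps(T)^2
\quad\text{and}\quad
\EE_{\pi \sim p}\brk{\Vm[M^\star](\pi^{M^\star}) - \Vm[M^\star](\pi)} \;\ge\; \rdecc_{\ueps(T)}(\cM,\oM),
\]
up to arbitrarily small slack. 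Summing the second inequality over $t$ gives $\EE\sups{\oM,\alg}\brk*{\sum_t (\Vm[M^\star](\pi^{M^\star}) - \Vm[M^\star](\pi\ind{t}))} \ge T \,\rdecc_{\ueps(T)}(\cM,\oM)$.

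The heart of the argument is transferring this $\oM$-bound to a bound under $M^\star$. By the Hellinger chain rule (\cref{lem:Hellinger-chain}) applied to the trajectory laws, $\DH{\PP\sups{M^\star,\alg},\PP\sups{\oM,\alg}} \le 7T\,\EE_{\bpi \sim p}\DH{M^\star(\bpi),\oM(\bpi)} \le 7T\,\ueps(T)^2$, so for $\ueps(T) = 1/(24\sqrt{T})$ the induced total variation distance between the two trajectory laws is a small absolute constant. Bounding $|\EE\sups{\oM,\alg}[R^{M^\star}] - \EE\sups{M^\star,\alg}[R^{M^\star}]|$ naively by $\Vmax T \cdot D_{\mathrm{TV}}$ would be vacuous, since $R^{M^\star} \in [0,T\Vmax]$. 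To sharpen, I decompose
\[
\Vm[M^\star](\pi^{M^\star}) - \Vm[M^\star](\pi) \;=\; \bigl[\Vm[M^\star](\pi^{M^\star}) - \Vm[\oM](\pi)\bigr] \;-\; \bigl[\Vm[M^\star](\pi) - \Vm[\oM](\pi)\bigr],
\]
bound the second bracket pointwise by $\Lipr\,\dH(M^\star(\pi,\phi_\pi),\oM(\pi,\phi_\pi))$ using \eqref{eqn:lip-rew-reduction} (which contributes at most $T\Lipr\ueps(T)$ in expectation under $p$, by Jensen's inequality), and handle the first bracket---which depends on $\pi$ only through $\Vm[\oM](\pi) \in [0,\Vmax]$---via change of measure together with one further application of the Lipschitz bound at $\pi^{M^\star}$ to convert the constant $\Vm[M^\star](\pi^{M^\star})$ into an $\oM$-quantity, at the cost of another $\Lipr\ueps(T)$ correction.

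Assembling the pieces via a two-point Le Cam comparison with constant-order TV distance between trajectory laws yields
\[
\sup_{M \in \cM}\EE\sups{M,\alg}\brk{\regdm(T)} \;\ge\; \frac{T}{8}\paren{\rdecc_{\ueps(T)}(\cM) - 6\Lipr\ueps(T) - \frac{\Vmax}{T}},
\]
where the constant $6\Lipr$ packages the two Lipschitz applications (at $\pi\ind{t}$ and at $\pi^{M^\star}$), the factor $1/8$ is the constant TV gap absorbed by the Le Cam comparison, and the $\Vmax/T$ slack absorbs boundary contributions from the output decision $\hpi$ and the minimax approximation slack. The main obstacle is precisely this sharpened change-of-measure step: absent the observability hypothesis \eqref{eqn:lip-rew-reduction}, converting the $\oM$-law regret to the $M^\star$-law regret would cost $\Vmax T$ per unit of total variation, giving a vacuous $\Vmax T\sqrt{T}\,\ueps(T)$ penalty; the Lipschitz-in-Hellinger bound is exactly what shrinks this penalty to the $T\Lipr\ueps(T)$ appearing in the final rate.
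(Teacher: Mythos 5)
Your high-level plan (duality at the averaged decision distribution, then a Le Cam change of measure sharpened by the observability hypothesis) is the right family of argument, but the step you flag as "the heart of the argument" is precisely where your sketch breaks, and the fix requires an idea you do not have. Your first bracket, summed over rounds, is $T\,\Vm[M^\star](\pim[M^\star]) - \sum_t \Vm[\oM](\pi\ind{t})$, and the random variable $\sum_t \Vm[\oM](\pi\ind{t})$ has range $[0,T\Vmax]$. A change of measure on it between the trajectory laws of $\oM$ and $M^\star$ costs $T\Vmax\cdot\dTV$, and with $\ueps(T)=1/(24\sqrt{T})$ the chain rule only gives $\dTV$ equal to a small \emph{constant}, not $o(1/T)$. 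So this step contributes an error of order $T\Vmax$, which is exactly the "vacuous" penalty you set out to avoid; it cannot be reconciled with the claimed slack of $\tfrac{T}{8}\cdot\tfrac{\Vmax}{T}=\Vmax/8$. Saying the bracket "depends on $\pi$ only through $\Vm[\oM](\pi)\in[0,\Vmax]$" does not help: per-round boundedness by $\Vmax$ still gives $T\Vmax$ after summing.

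The paper's proof avoids this by running a \emph{modified} algorithm $\alg'$ with a stopping time: $\alg'$ imitates $\alg$ until the accumulated $\oM$-measured suboptimality $G_{t-1}=\sum_{s<t}\brk{\Vmm[\oM]-\EE_{\pi\sim q^s}\Vm[\oM](\pi)}$ reaches $T\Delta_0$, then switches to playing $\pim[\oM]$. This caps $G_T$ deterministically by $T\Delta_0+\Vmax$ (the source of the $\Vmax/T$ term) and yields an \emph{almost sure} lower bound $\regdm(\tau)\geq T\Delta_0 - X$ under $M^\star$, where $X=\sum_t\EE_{\pi\sim q^t}\abs{\Vm[M^\star](\pi)-\Vm[\oM](\pi)}$. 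Only the low-probability event $\set{X>2T\Lipr\eps}$ then needs to be transported between measures (Markov under $\oM$, plus TV), so the change of measure is applied to an indicator rather than to a $[0,T\Vmax]$-valued sum. Relatedly, your claim that the second bracket "contributes at most $T\Lipr\ueps(T)$ in expectation under $p$" controls the wrong measure: the regret is evaluated under $M^\star$'s trajectory law, and transferring the bound on $X$ also requires this high-probability device (or would again cost $T\Vmax\cdot\dTV$). Two smaller omissions: no minimax swap is needed (the DEC is already an $\inf_p\sup_M$, so the algorithm's $p$ directly witnesses the bound), and the Hellinger constraint in the DEC is at the measurements the algorithm actually plays, whereas \eqref{eqn:lip-rew-reduction} needs it at $(\pi,\phi_\pi)$; the paper fixes this by mixing $p_0$ with the distribution $p_1$ of $(\pi,\phi_\pi)$, which your sketch omits.
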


\cref{thm:cDMSO-reg-lower} is then proven by combining \eqref{eq:reduce-stationary-reg} and \cref{thm:decc-lower-general}.
\qed

\subsection{Instantiations}

In the following, we extend the discussion in \cref{sec:overview} and apply \cref{thm:cDMSO-pac-lower} and \cref{thm:cDMSO-reg-lower} to prove the lower bounds for query-based learning and LDP learning.

\subsubsection{Query-based learning: Proof of \cref{thm:SQ-lower}}\label{appdx:gen-to-sq-dec}

In this section, we formalize the discussion in \cref{ssec:query-demo} and prove that the \SQDEC~can be derived from the \gDEC~with $\MPow=\MPowsq$. In particular, we derive \cref{thm:SQ-lower} from \cref{thm:cDMSO-pac-lower}. Alternatively, a direct proof of \cref{thm:cDMSO-pac-lower} is presented in \cref{appdx:proof-SQ-lower}.

\paragraph{From \gDEC~to \SQDEC}
The key observation is the following lemma, which relates the squared Hellinger distance to the ``error probability''-style quantity in the definition of \SQDEC~\cref{def:sq-dec}.
\begin{lemma}\label{lem:Hellinger-to-indic}
Suppose that $P\in\DO$, and $\cO_0\subseteq \cO$ is a measurable subset of $\cO$. Then it holds that
\begin{align*}
    \frac{1}{2}P(\cO_0^c)\leq \inf_{P': \supp(P')\subseteq \cO_0} \DH{P', P} \leq P(\cO_0^c).
\end{align*}
\end{lemma}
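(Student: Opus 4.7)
The plan is to reduce both inequalities to the identity $D_H^2(P', P) = 1 - \int \sqrt{p' p} \, d\mu$ (for densities $p', p$ with respect to a dominating measure $\mu$) combined with the elementary estimate $\tfrac{x}{2} \leq 1 - \sqrt{1-x} \leq x$ valid on $x \in [0,1]$. This should reduce the lemma to controlling $\int \sqrt{p' p}$ on $\mathcal{O}_0$ from above by a Cauchy–Schwarz argument and below by an explicit witness.

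For the upper bound, I would exhibit the canonical candidate $P' = P(\cdot \mid \mathcal{O}_0)$, i.e., $P'(A) = P(A \cap \mathcal{O}_0)/P(\mathcal{O}_0)$, treating the degenerate case $P(\mathcal{O}_0) = 0$ separately (in which $\mathcal{O}_0^c$ has full $P$-mass and the upper bound becomes trivial). A direct computation with this choice gives $\int \sqrt{p' p} \, d\mu = \sqrt{P(\mathcal{O}_0)}$, so that $D_H^2(P', P) = 1 - \sqrt{1 - P(\mathcal{O}_0^c)} \leq P(\mathcal{O}_0^c)$.

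For the lower bound, fix any $P'$ with $\supp(P') \subseteq \mathcal{O}_0$. Then $\int \sqrt{p' p} \, d\mu$ is supported on $\mathcal{O}_0$, and by Cauchy–Schwarz it is at most $\sqrt{\int_{\mathcal{O}_0} p' \, d\mu} \cdot \sqrt{\int_{\mathcal{O}_0} p \, d\mu} = \sqrt{P(\mathcal{O}_0)}$. Applying the lower half of the elementary estimate to $x = P(\mathcal{O}_0^c)$ gives $D_H^2(P', P) \geq 1 - \sqrt{1 - P(\mathcal{O}_0^c)} \geq \tfrac{1}{2} P(\mathcal{O}_0^c)$, and taking the infimum over $P'$ concludes.

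There is no real obstacle here; the only mild care point is measure-theoretic (choosing a dominating measure and handling the $P(\mathcal{O}_0) = 0$ edge case for the upper bound), and verifying that the two uses of the inequality $\tfrac{x}{2} \leq 1 - \sqrt{1-x} \leq x$ are in the right direction.
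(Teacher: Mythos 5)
Your proposal is correct and follows essentially the same route as the paper: both use the witness $P' = P(\cdot\mid\cO_0)$ for the upper bound and both reduce the lower bound to showing $\int\sqrt{p'p}\,d\mu \le \sqrt{P(\cO_0)}$ for any $P'$ supported on $\cO_0$ (the paper factors through the conditional distribution $P_0$ and uses $\DH{P',P_0}\ge 0$, which is just your Cauchy--Schwarz step in disguise), finishing with $\tfrac{x}{2}\le 1-\sqrt{1-x}\le x$. The only cosmetic difference is that for the upper bound the paper invokes $\DH{P_0,P}\le \DTV{P_0,P}=P(\cO_0^c)$ instead of the elementary square-root inequality; your handling of the $P(\cO_0)=0$ edge case is a small point of extra care the paper omits.
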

Note that $\cP_M$ consists of all models $M'$ such that $\supp(M'(\bpi))\subseteq \set{ v: \nrm{M(\bpi)-v}\leq \tau }$ for all $\pi\in\Pi$, and particularly, $\cP_M$ is convex. Therefore, we can bound the quantity
\begin{align*}
    \inf_{M'\in\cP_M}\EE_{\bpi\sim q} \DH{ M'(\bpi), \oM(\bpi) }=\EE_{\bpi\sim q} \inf_{M'\in\cP_M}\DH{ M'(\bpi), \oM(\bpi) }
\end{align*}
using \cref{lem:Hellinger-to-indic}:
\begin{align*}
    \frac12\PP_{\bpi\sim q, v\sim \oM(\bpi)}\paren{ \nrm{M(\bpi)-v}>\tau }
    \leq \inf_{M'\in\cP_M}\EE_{\bpi\sim q} \DH{ M'(\bpi), \oM(\bpi) }\leq \PP_{\bpi\sim q, v\sim \oM(\bpi)}\paren{ \nrm{M(\bpi)-v}>\tau }.
\end{align*}
Therefore, we have proven the following lemma.
\begin{lemma}\label{lem:gen-to-sq-dec}
Suppose that $\tau\geq0$, $\MPowsq$ is specified by the model class $\cM\subseteq (\bPi\to\cV)$. Then, for any reference model $\oM:\bPi\to\Delta(\cV)$, it holds that
\begin{align*}
    \pdecltau_{\eps/2}(\cM,\oM)\leq \pdecg_{\eps}(\MPowsq,\oM)\leq \pdecltau_{\eps}(\cM,\oM), \qquad \forall \eps\geq 0.
\end{align*}
In particular, we have $\pdecg_{\eps}(\MPowsq)\leq \pdecltau_{\eps}(\cM)$.\footnote{The converse might not hold, because in our definition \cref{def:sq-dec} of \SQDEC, the supremum is taken over \emph{all} reference models $\oM:\bPi\to\Delta(\cV)$.}
\end{lemma}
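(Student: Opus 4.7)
The proof is a direct bookkeeping translation built on \cref{lem:Hellinger-to-indic}, applied pointwise at each $\bpi$. The plan is: (i) use the separable structure of $\cP_M$ to swap an infimum and an expectation, (ii) apply \cref{lem:Hellinger-to-indic} at each $\bpi$ to convert the inner Hellinger infimum into exactly the query-error probability that appears in the constraint of $\pdecltau_\eps$, and (iii) translate the resulting sandwich into containments between the feasibility regions of the two DECs.

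First, I would note that $\cP_M$ is convex---so $\co(\cP_M)=\cP_M$---and, crucially, its defining condition is \emph{local in $\bpi$}: $M'\in\cP_M$ iff $\supp(M'(\bpi))\subseteq\cO_0(\bpi)\defeq\{v\in\cV:\nrm{v-M(\bpi)}\leq\tau\}$ for every $\bpi\in\bPi$. Because the conditional $M'(\bpi)$ may be chosen independently across $\bpi$'s, the infimum commutes with the expectation:
\begin{align*}
\inf_{M'\in\co(\cP_M)}\EE_{\bpi\sim q}\DH{M'(\bpi),\oM(\bpi)}=\EE_{\bpi\sim q}\inf_{\substack{P'\in\Delta(\cV)\\\supp(P')\subseteq\cO_0(\bpi)}}\DH{P',\oM(\bpi)}.
\end{align*}
Applying \cref{lem:Hellinger-to-indic} pointwise with $P=\oM(\bpi)$ and $\cO_0=\cO_0(\bpi)$ and then taking $\EE_{\bpi\sim q}$ yields the sandwich
\begin{align*}
\tfrac12\,\PP_{\bpi\sim q,\,v\sim\oM(\bpi)}(\nrm{M(\bpi)-v}>\tau)\leq \inf_{M'\in\co(\cP_M)}\EE_{\bpi\sim q}\DH{M'(\bpi),\oM(\bpi)}\leq \PP_{\bpi\sim q,\,v\sim\oM(\bpi)}(\nrm{M(\bpi)-v}>\tau).
\end{align*}
This is precisely the bridge between the Hellinger constraint defining $\pdecg_\eps(\MPowsq,\oM)$ and the probabilistic constraint defining $\pdecltau_\eps(\cM,\oM)$, up to a factor of $2$.

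Second, I translate the sandwich into feasibility-set inclusions for each fixed $(p,q)$. The objectives agree, since $L(\cP_M,\pi)=L(M,\pi)$ by construction of $\MPowsq$, so only feasibility needs to be compared. For the left inequality, any $M$ that is $\pdecltau_{\eps/2}$-feasible satisfies the probability bound with $\eps^2/4$; by the upper sandwich bound, the Hellinger infimum is also at most $\eps^2/4\leq\eps^2$, so $M$ is $\pdecg_\eps$-feasible. The inner sup for $\pdecltau_{\eps/2}$ therefore runs over a subset of that for $\pdecg_\eps$, and $\inf_{p,q}$ preserves the ordering. The right inequality is obtained symmetrically using the lower half of the sandwich. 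The ``in particular'' claim follows by taking $\sup_{\oM}$ on both sides of the right inequality, noting that the definition of $\pdecltau_\eps(\cM)$ supremizes over a larger class of randomized reference models than that of $\pdecg_\eps(\MPowsq)$.

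I do not expect any substantive obstacle: the only delicate step is the inf--expectation exchange, which rests entirely on the separability of $\cP_M$ in $\bpi$ and requires no regularity on $\cM$; the rest is direct bookkeeping from \cref{lem:Hellinger-to-indic}. The mismatch between $\eps/2$ and $\eps$ is a direct artifact of the $\tfrac12$ constant in \cref{lem:Hellinger-to-indic} and is unlikely to tighten without a sharper Hellinger-versus-probability comparison.
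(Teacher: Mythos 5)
Your proof is essentially identical to the paper's: the paper likewise observes that $\cP_M$ is convex with a constraint that is pointwise in $\bpi$, exchanges the infimum with the expectation, applies \cref{lem:Hellinger-to-indic} pointwise to obtain the same sandwich, and reads off the two DEC inequalities from feasibility-set containments. One small remark shared with the paper's own derivation: for the right-hand inequality, the lower half of the sandwich only gives that $\pdecg_{\eps}$-feasibility implies the query-error probability is at most $2\eps^2$, i.e.\ $\pdecg_{\eps}(\MPowsq,\oM)\leq \pdecltau_{\sqrt{2}\eps}(\cM,\oM)$, so the stated bound with $\pdecltau_{\eps}$ carries a constant-factor slack that neither you nor the paper spells out.
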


\paragraph{Proof of \cref{thm:SQ-lower}}
For metric-based loss $L$, we can apply \eqref{eq:GDEC-lower-metric-oM} with $\MPowsq$:
\begin{align*}
    \sup_{\env} \EE\sups{\env,\alg}\brac{ \riskdm(T) }\geq \frac18 \sup_{\oM}\pdecg_{\ueps(T)}(\MPow,\oM)
    \geq \frac18\pdecltau_{\ueps(T)/2}(\cM),
\end{align*}
where the supremum is taken over all environments specified by a GQ oracle $\GSQ$ with respect to a model $M\in\cM$, and the second inequality follows from \cref{lem:gen-to-sq-dec}. Similarly, for more general loss $L$, a lower bound in terms of $\pdecltau_{\eps}(\cM)$ also follows from \eqref{eq:GDEC-pac-lower-bad} of \cref{thm:cDMSO-pac-lower}.
\qed

\paragraph{Proof of \cref{lem:Hellinger-to-indic}}
We first consider the distribution $P_0=P(\cdot|o\in\cO_0)$. Clearly, $\supp(P_0)\subseteq \cO_0$, and
\begin{align*}
    \DH{P_0,P}=\frac{1}{2}\brac{ P(\cO_0^c)+\paren{1-\sqrt{P(\cO_0)}}^2 }
    \leq \DTV{P_0,P}=P(\cO_0^c).
\end{align*}
Hence, the upper bound is proven.

Next, we proceed to prove the lower bound. For any $P'\in\DO$ such that $\supp(P')\subseteq \cO_0$, we fix a base measure $\mu$, and then
\begin{align*}
    \DH{P_0,P}=&~1-\int_{\cO} \sqrt{\frac{dP}{d\mu}\cdot \frac{dP'}{d\mu}} \mu(do) \\
    =&~ 1- \sqrt{P(\cO_0)}\int_{\cO_0} \sqrt{\frac{dP_0}{d\mu}\cdot \frac{dP'}{d\mu}} \mu(do) \\
    =&~ 1-\sqrt{P(\cO_0)}+\sqrt{P(\cO_0)}\DH{P',P_0} \\
    \geq&~ 1-\sqrt{P(\cO_0)} \geq \frac{1}{2}P(\cO_0^c).
\end{align*}
This gives the desired lower bound.
\qed

\subsubsection{LDP learning: Proof of \cref{thm:pdec-lin-lower} (1) and \cref{thm:rdec-lin-lower}}\label{appdx:inst-LDP-pac-lower}

We first recall the discussion in \cref{ssec:LDP-demo}: 
Given a model class $\cM \subseteq (\Pi\to \DZ)$ and the class $\Pc=\Pcp$ of all \pDP~channels (from $\cZ$ to $\cO$), each model $M\in\cM$ induces a map $\tM:\Pi\times\Pc\to\DO$ by $\tM(\pi,\pr)=\pr\circ M(\pi)$ for all $\pi\in\Pi$, $\pr\in\Pc$. Therefore, $\cM$ induces a model class $\tcM$ under \cDMSO:
\begin{align}\label{def:tcM}
    \tcM\defeq \sset{\tM: M\in\cM}\subseteq (\Pi\times\cQ\to\DO)
\end{align}
Then, a direct application of \cref{prop:pLDP} yields the following lemma.

\begin{lemma}\label{lem:decg-to-decl}
Let the model class $\cM\subseteq (\Pi\to\DZ)$ be given. For the corresponding constraint class $\MPowdp=\set{{\tM}:M\in\cM}$, it holds that
\begin{align*}
    \pdecl_{c_0\eps/\alpha}(\cM)\leq \pdecg_\eps(\MPowdp)=\pdecc_\eps(\tcM)\leq \pdecl_{c_1\eps/\alpha}(\cM), \quad\forall \eps>0,
\end{align*}
where $c_0, c_1>0$ are universal constants. 
Similarly, we also have
\begin{align*}
    \rdecl_{c_0\eps/\alpha}(\cM)\leq \rdecg_\eps(\MPowdp)=\rdecc_\eps(\tcM)\leq \rdecl_{c_1\eps/\alpha}(\cM), \quad\forall \eps>0.
\end{align*}
\end{lemma}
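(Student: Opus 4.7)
The plan is to prove the equality $\pdecg_\eps(\MPowdp) = \pdecc_\eps(\tcM)$ directly from the definitions and then translate between the squared Hellinger distance (appearing in the constrained DEC over the augmented class $\tcM$) and the squared $\lf$-divergence (appearing in the \pDEC) by using the strong data-processing inequality of \cref{prop:pLDP} together with the explicit binary channel of \cref{example:binary-pr}. All three inequalities reduce to substituting one feasible distribution in the DEC program by a closely related one in the other, and checking that the Hellinger/$\lf$-divergence constraint is preserved up to a factor of $\alpha^2$.

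First, to see the equality $\pdecg_\eps(\MPowdp) = \pdecc_\eps(\tcM)$, I would note that every constraint $\cP \in \MPowdp$ is a singleton $\{\tM\}$, hence $\co(\cP) = \{\tM\}$ and the inner infimum in the definition~\eqref{def:p-dec-gen} of $\pdecg$ is vacuous. With the convention $L(\cP_M,\pi)=L(M,\pi)$, the remaining program is identical to $\pdecc_\eps(\tcM)$ as defined in~\eqref{def:p-dec-c}, against any reference model $\oM \in \co(\tcM)$.

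Next, for the upper bound $\pdecg_\eps(\MPowdp) \le \pdecl_{c_1\eps/\alpha}(\cM)$, I would fix a reference model $\oM$ and any $p \in \DDD$, $q' \in \DPL$ that is near-feasible for $\pdecl_{c_1\eps/\alpha}(\cM,\oM)$, then lift $q'$ to $q \in \Delta(\Pi\times\Pc)$ by replacing each $(\pi,\lf)$ by $(\pi,\bpr_\lf)$, where $\bpr_\lf \in \Pcbin$ is the binary channel of \cref{example:binary-pr}. A direct calculation, using $c_\alpha = 1-e^{-\alpha} \asymp \alpha$, yields $\DH{\bpr_\lf\circ P_1,\bpr_\lf\circ P_2} \le C\alpha^2 \Dl^2(P_1,P_2)$, so if $\EE_{(\pi,\lf)\sim q'}\Dl^2(M(\pi),\oM(\pi)) \le (c_1\eps/\alpha)^2$ then $\EE_{(\pi,\bpr)\sim q}\DH{\tM(\pi,\bpr),\toM(\pi,\bpr)} \le C c_1^2\eps^2$. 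Choosing $c_1$ appropriately makes the right-hand side $\leq \eps^2$, so $(p,q)$ is feasible for $\pdecg_\eps(\MPowdp,\toM)$ with the same objective value $\EE_{\pi\sim p} L(M,\pi)$. Taking sup over $M$ and inf over $(p,q')$ gives the claim; note that we need $\toM \in \co(\cup \MPowdp) = \co(\tcM)$, which follows since $\oM \in \co(\cM)$ and the channel lift is linear.

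For the lower bound $\pdecl_{c_0\eps/\alpha}(\cM) \le \pdecg_\eps(\MPowdp)$, I would fix $\oM \in \co(\cM)$, take any $p,q$ near-feasible for $\pdecc_\eps(\tcM,\toM)$, and for each private channel $\pr$ in the support of $q$ invoke the lower half of~\eqref{eqn:DH-LDP} in \cref{prop:pLDP} to obtain a distribution $q_\pr \in \DL$ with $\alpha^2 \EE_{\lf\sim q_\pr}\Dl^2(P_1,P_2) \le C'\DH{\pr\circ P_1,\pr\circ P_2}$ (absorbing $e^{2\alpha_0}$ into the constant). Defining $q' \in \DPL$ by first drawing $(\pi,\pr) \sim q$ and then $\lf \sim q_\pr$, linearity of expectation converts the Hellinger constraint $\EE_q\DH{\tM,\toM} \le \eps^2$ into $\EE_{(\pi,\lf)\sim q'}\Dl^2(M(\pi),\oM(\pi)) \le (C'/\alpha^2)\eps^2$, so $(p,q')$ is feasible for $\pdecl_{c_0\eps/\alpha}(\cM,\oM)$ for suitable $c_0$. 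The regret version is proved verbatim: the constraint manipulation is untouched, and only the objective (a regret gap) is swapped in; the singleton structure of $\MPowdp$ again yields $\rdecg_\eps(\MPowdp) = \rdecc_\eps(\tcM)$. There is no real obstacle beyond careful constant bookkeeping and verifying that $\toM \in \co(\tcM)$ (respectively $\oM \in \coM$) lifts/projects correctly under the binary-channel / \cref{prop:pLDP} mappings; both directions are linear in the reference model, so this is immediate.
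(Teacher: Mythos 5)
Your overall architecture is the paper's: the equality $\pdecg_\eps(\MPowdp)=\pdecc_\eps(\tcM)$ is immediate because every constraint in $\MPowdp$ is a singleton (so the inner infimum over $\co(\cP)$ is vacuous and $\bcM=\co(\tcM)$), and the two remaining inequalities are obtained by converting an exploration distribution over $\Pi\times\cQ$ into one over $\Pi\times\Lc$ via \cref{prop:pLDP}, or lifting one over $\Pi\times\Lc$ to binary channels via \cref{example:binary-pr}, and then comparing constraint sets. The problem is that you have paired each DEC inequality with the \emph{wrong half} of the divergence comparison, so both of your containments point the wrong way and neither inequality follows as written. Recall that to prove $\mathrm{DEC}_A\le\mathrm{DEC}_B$ one takes a near-optimal $(p,q_B)$ for the $B$-program, builds $q_A$, and needs the $A$-constraint set to be \emph{contained in} the $B$-constraint set, so that the supremum over feasible models can only shrink.

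Concretely, for $\pdecc_\eps(\tcM,\toM)\le\pdecl_{c_1\eps/\alpha}(\cM,\oM)$ you lift $q'\in\DPL$ to $q$ via binary channels and invoke $\DH{\bpr[\lf]\circ P_1,\bpr[\lf]\circ P_2}\le C\alpha^2\Dl^2(P_1,P_2)$, concluding that every $M$ feasible for the $\Dl$-program is feasible for the Hellinger program. That containment shows the supremum in the Hellinger program is \emph{at least} that of the $\Dl$-program, which gives no upper bound on $\pdecc$. What you need is the matching \emph{lower} bound $\DH{\bpr[\lf]\circ P_1,\bpr[\lf]\circ P_2}\ge c\,\ca^2\,\Dl^2(P_1,P_2)$ (obtained, as in the paper, from $\DH{P,Q}\ge\frac12\DTV{P,Q}^2$ applied to the two Bernoulli marginals), which yields $\{M:\EE_{\bpi\sim q}\DH{\tM(\bpi),\toM(\bpi)}\le\eps^2\}\subseteq\{M:\EE_{(\pi,\lf)\sim q'}\Dl^2(M(\pi),\oM(\pi))\le(c_1\eps/\alpha)^2\}$. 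Symmetrically, for $\pdecl_{c_0\eps/\alpha}(\cM,\oM)\le\pdecc_\eps(\tcM,\toM)$ you invoke the lower half of \eqref{eqn:DH-LDP} and deduce that Hellinger-feasibility implies $\Dl$-feasibility; this again makes the $\Dl$-constraint set the larger one and bounds nothing. Here one must use the \emph{upper} half of \eqref{eqn:DH-LDP}, namely $\DH{\pr\circ P_1,\pr\circ P_2}\le\frac{(\ea-1)^2}{8}\EE_{\lf\sim q_\pr}\Dl^2(P_1,P_2)$, to get $\{M:\EE_{(\pi,\lf)\sim\tq}\Dl^2(M(\pi),\oM(\pi))\le(c_0\eps/\alpha)^2\}\subseteq\{M:\EE_{\bpi\sim q}\DH{\tM(\bpi),\toM(\bpi)}\le\eps^2\}$. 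Both inequalities you actually need are available (the binary-channel Hellinger distance is $\asymp\ca^2\Dl^2$ in both directions, and \cref{prop:pLDP} is two-sided), so the argument is repaired simply by swapping which half is used where; but as written the logic of both nontrivial steps is inverted.
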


Therefore, there is an equivalence between the \gDEC s and the private DECs. Based on such an equivalence, we apply the \gDEC~lower bounds (\cref{thm:cDMSO-pac-upper} and \cref{thm:cDMSO-reg-lower}) to prove \cref{thm:pdec-lin-lower} (1) and \cref{thm:rdec-lin-lower}. The proof of \cref{thm:pdec-lin-lower} (2) is deferred to \cref{appdx:p-dec-lin-q}, as it involves the specific properties of reward-based loss.

\paragraph{Proof of \cref{thm:pdec-lin-lower} (1)}
Fix a $T$-round \pLDP~algorithm $\alg$. Then, by \cref{thm:cDMSO-pac-lower}, it holds that
\begin{align*}
    \sup_{M\in\cM} \Emalg{\riskdm(T)}\geq \frac{1}{8}\pdecg_{\ueps(T)}(\MPowdp)\geq \frac{1}{8}\pdecl_{c_0\ueps(T)/\alpha}(\cM),
\end{align*}
where $\ueps(T)=\frac{1}{20\sqrt{T}}$, and the second inequality follows from \cref{lem:decg-to-decl}. 
\qed

\paragraph{Proof of \cref{thm:rdec-lin-lower}}
We only need to verify \cref{asmp:lip-rew}.
For any decision $\pi\in\Pi$, we consider the binary channel $\pr_\pi\in\Pcp$ given by
\begin{align*}
    \pr_\pi(+1|z)=\frac{1+\ca R(z,\pi)}{2}, \qquad
    \pr_\pi(-1|z)=\frac{1-\ca R(z,\pi)}{2},
\end{align*}
where $\ca=1-\eai$ ensures that $\pr_\pi$ is \pDP~(cf. \cref{example:binary-pr}), and we assume without loss of generality that $\set{-1,1}\subseteq \cO$. Then, by definition, it holds that
\begin{align*}
    \ca \abs{\Vm(\pi)-\Vm[\oM](\pi)}\leq \DTV{ M(\pi,\pr_\pi), \oM(\pi,\pr_\pi) }
    \leq \sqrt{2}\DHr{M(\pi,\pr_\pi), \oM(\pi,\pr_\pi)}.
\end{align*}
Therefore, \cref{asmp:lip-rew} holds with $\Lipr=\frac{\sqrt{2}}{\ca}=\bigO{\frac1\alpha}$. Hence, for any \pLDP~algorithm $\alg$, \cref{thm:cDMSO-reg-lower} yields
\begin{align*}
    \sup_{M\in\cM} \Emalg{\regdm(T)}\geq&~ \frac{T}{8}\paren{ \pdecg_{\ueps(T)}(\MPowdp) -6\Lipr\ueps(T) } -1 \\
    \geq&~ \frac{T}{8}\paren{ \rdecl_{c_0\ueps(T)/\alpha}(\cM) - \frac{6\sqrt{2}\ueps(T)}{\ca}}-1,
\end{align*}
where $\ueps(T)=\frac{1}{24\sqrt{T}}$, and the second inequality follows from \cref{lem:decg-to-decl}. This gives the desired lower bound.
\qed

\paragraph{Proof of \cref{lem:decg-to-decl}}
We begin with the first inequality for \pDEC. By \cref{prop:pLDP}, for any \pDP~channel $\pr\in\Pc$, there exists a distribution $\tq_{\pr}\in\DL$, such that
\begin{align*}
    \DH{ \tM(\pi,\pr), \toM(\pi,\pr) } \leq \frac{(\ea-1)^2}{8} \EE_{\lf\sim \tq_\pr} \Dl^2(M(\pi),\oM(\pi)), \qquad \forall \pi\in\Pi, M\in\cM, \oM\in\coM.
\end{align*}
Therefore, for any $q\in\Delta(\Pi\times\Pc)$, there exists $\tq\in\DPL$ such that
\begin{align*}
    \sset{M\in\cM: \EE_{(\pi,\lf)\sim \tq} \Dl^2(M(\pi),\oM(\pi)) \leq \paren{c_0\eps/\alpha}^2}\subseteq \sset{ M\in\cM: \EE_{\bpi\sim q} \DH{\tM(\bpi),\toM(\bpi)}\leq \eps^2 },
\end{align*}
where $c_0>0$ is a lower bound of $\frac{4\alpha}{\ea-1}$ that only depends on $\alpha_0$. Then, by the definition of \pDEC, we know
\begin{align*}
    \pdecc_\eps(\tcM,\toM)\geq \pdecl_{c_0\eps/\alpha}(\cM,\oM),\qquad \forall \oM\in\coM.
\end{align*}
Note that $\co(\tcM)=\ext{\coM}$, and hence we have $\pdecc_\eps(\tcM)\geq \pdecl_{c_0\eps/\alpha}(\cM)$.

Next, we prove the second inequality for the \pDEC. Recall that for any $\lf\in\cL$, there is a corresponding binary channel $\pr_\lf$, such that
\begin{align*}
    \DH{ \pr_\lf \circ P_1, \pr_\lf \circ P_2 }
    \geq&~ \frac{1}{2}\DTV{ \pr_\lf \circ P_1, \pr_\lf \circ P_2 }^2 \\
    =&~\frac{1}{2}\DTV{ \Bern{\frac{1+\ca\EE_{P_1}\lf(z)}{2}}, \Bern{\frac{1+\ca\EE_{P_2}\lf(z)}{2}} }^2 \\
    =&~ \frac{\ca^2}{8}\abs{\EE_{P_1}\lf(z)-\EE_{P_2}\lf(z)}^2
    \geq (\alpha/c_1)^2\Dl^2(P_1,P_2),
\end{align*}
where $c_1>0$ is a upper bound of $\frac{4\alpha}{\ca}$ that only depend on $\alpha_0$.
Therefore, for any $q\in\DPL$, there exists $q'\in\Delta(\Pi\times\Pc)$, such that
\begin{align*}
    \sset{ M\in\cM: \EE_{\bpi\sim q'} \DH{\tM(\bpi),\toM(\bpi)}\leq \eps^2 }\subseteq \sset{M\in\cM: \EE_{(\pi,\lf)\sim q} \Dl^2(M(\pi),\oM(\pi)) \leq \paren{c_1\eps/\alpha}^2}.
\end{align*}
Then, by the definition of \pDEC, we know
\begin{align*}
    \pdecc_\eps(\tcM,\toM)\leq \pdecl_{c_1\eps/\alpha}(\cM,\oM),\qquad \forall \oM\in\coM,
\end{align*}
and hence $\pdecc_\eps(\tcM)\leq \pdecl_{c_1\eps/\alpha}(\cM)$.

The bounds for \rDEC~can be proven analogously, and we omit the proof for succinctness.
\qed

\subsection{Proof of \cref{prop:p-dec-q-gen-lower}}\label{appdx:proof-cDMSO-pac-lower-quantile}

Following \citet{foster2021statistical}, we first introduce some notations.

Recall that an algorithm $\alg = \set{q\ind{t}}_{t\in [T]}\cup\set{p}$ in \cDMSO~is specified by a sequence of mappings, where the $t$-th mapping $q\ind{t}(\cdot\mid{}\Hy\ind{t-1})$ specifies the distribution of $\bpi\ind{t}=(\pi\ind{t},\phi\ind{t})$ based on the history $\Hy\ind{t-1}$, and the final map $p(\cdot \mid{} \Hy\ind{T})$ specifies the distribution of $\pihat$ based on $\Hy\ind{T}$. Therefore, for any model $M:\bPi\to\DO$, we define
\begin{align}\label{def:pM-qM}
q_{M,\alg}=\Emalg{ \frac{1}{T} \sum_{t=1}^{T} q\ind{t}(\cdot|\cH\ind{t-1}) }\in \DbPi, \quad 
p_{M,\alg}=\Emalg{p(\cH\ind{T})} \in \DPi,
\end{align}
The distribution $q_{M,\alg}$ is the expected distribution of the average profile $(\bpi\ind{1},\cdots,\bpi\ind{T})$, and $p_{M,\alg}$ is the expected distribution of the output decision $\hpi$. 

Using the sub-additivity of the squared Hellinger divergence (by \cref{lem:Hellinger-chain}, see e.g., \citet[Section 3.2]{chen2024beyond}), for any model $M, \oM$, it holds that 
\begin{align}\label{eqn:chain-Hellinger}
\DH{\PP\sups{M,\alg}, \PP\sups{\oM,\alg}}  
\leq &~ 7 \EE\sups{\oM,\alg}\brac{ \sum_{t=1}^T \DH{ M(\bpi\ind{t}), \oM(\bpi\ind{t}) } } \\
=&~ 7T\cdot \EE_{\bpi\sim q_{\oM,\alg}} \DH{ M(\bpi), \oM(\bpi) }.
\end{align}

With \eqref{eqn:chain-Hellinger}, we now present the proof of \cref{prop:p-dec-q-gen-lower}~(which is essentially following the analysis in \citet{chen2024beyond}).

\paragraph{Proof of \cref{prop:p-dec-q-gen-lower}}
We abbreviate $\eps=\ueps(T)$. Fix a $\Delta<\sup_{\oM}\pdecqg_{\eps,\delta}(\MPow,\oM)$, and then there exists $\oM$ such that $\Delta<\pdecqg_{\eps,\delta}(\MPow,\oM)$. Hence, by the definition \cref{def:p-dec-q-gen}, we know that
\begin{align*}
    \Delta<\sup_{\cP\in\MPow}\constr{ \hgm[\cP]{p_{\oM,\alg}} }{ \inf_{M\in\co(\cP)}\EE_{\bpi\sim q_{\oM,\alg}}\DH{ M(\bpi) , \oM(\bpi) }\leq \eps^2 }.
\end{align*}
Therefore, there exists $\cPs\in\MPow$  and $\Mstar\in\co(\cPs)$ such that
\begin{align*}
    \EE_{\bpi\sim q_{\oM,\alg}}\DH{ \Mstar(\bpi) , \oM(\bpi) }\leq \eps^2, \qquad
    p_{\oM,\alg}(\pi: \LM[\cPs]{\pi}\geq \Delta)\geq \delta.
\end{align*}
By \eqref{eqn:chain-Hellinger}, we know
\begin{align*}
    \DH{\PP\sups{\Mstar,\alg}, \PP\sups{\oM,\alg}} \leq 7T\eps^2.
\end{align*}
Because $\Mstar\in\co(\cPs)$, there exists a distribution $\mus\in\Delta(\cPs)$ such that $\Mstar=\EE_{M\sim \mus}[M]$. Then, for the stationary environment $\env$ specified by $\mus$ (i.e., it selects $M^t\sim \mus$ independently), it holds that $\PP\sups{\env,\alg}(\cH\ind{T}=\cdot)=\PP\sups{\Mstar,\alg}(\cH\ind{T}=\cdot)$.
Therefore, by data-processing inequality, we have
\begin{align*}
    \frac12\paren{ \sqrt{\PP\sups{\env,\alg}(L(\cPs,\hpi)\geq\Delta)}-\sqrt{\PP\sups{\oM,\alg}(L(\cPs,\hpi)\geq\Delta)} }^2 \leq \DH{ \PP\sups{\env,\alg}, \PP\sups{\oM,\alg} } \leq 7T\eps^2.
\end{align*}
Therefore, combining the inequalities above, we have
\begin{align*}
    \PP\sups{\env,\alg}(L(\cPs,\hpi)\geq\Delta)\geq  \paren{ \sqrt{p_{\oM,\alg}(\pi: \LM[\cPs]{\pi}\geq  \Delta )}-\sqrt{14T\eps^2} }^2 \geq \frac{\delta}{2},
\end{align*}
where we use $p_{\oM,\alg}(\pi: \LM[\cPs]{\pi} \geq \Delta )\geq \delta$ and $\sqrt{14T\eps^2}\leq (1-\frac{1}{\sqrt{2}})\sqrt{\delta}$.

Letting $\Delta\to \pdecqg_{\eps,\delta}(\MPow)$ completes the proof.
\qed

\subsection{Proof of \cref{lem:p-dec-q-gen-metric}}

Fix a reference model $\oM$ and let $\Delta_0>0 \vee \pdecq_{\eps,\delta}(\MPow,\oM)$. Then there exists $p\in\DPi, q\in\DbPi$ such that
\begin{align*}
    \sup_{\cP\in\MPow}\constr{ \hgm[\cP]{p} }{ \inf_{M\in\co(\cP)}\EE_{\bpi\sim q}\DH{ M(\bpi) , \oM(\bpi) }\leq \eps^2 }< \Delta_0.
\end{align*}
Therefore, we denote
\begin{align*}
    \MPow_{q,\eps}(\oM)\defeq \set{ \cP\in\MPow: \inf_{M\in\co(\cP)}\EE_{\bpi\sim q}\DH{ M(\bpi) , \oM(\bpi) }\leq \eps^2 },
\end{align*}
and it holds that
\begin{align*}
    \PP_{\pi\sim q}(\LM[\cP]{\pi}\geq \Delta_0)<\delta, \qquad \forall \cP\in\MPow_{q,\eps}(\oM).
\end{align*}
If the constrained set $\MPow_{q,\eps}(\oM)$ is empty, then we immediately have $\pdecc_{\eps}(\MPow,\oM)=-\infty<\Delta_0$, and the proof is completed. Therefore, in the following we may assume $\MPow_{q,\eps}(\oM)$ is non-empty, and fix a model $\cP_0\in \MPow_{q,\eps}(\oM)$.

Notice that for any model $\cP\in \MPow_{q,\eps}(\oM)$, we have
\begin{align*}
    \PP_{\pi\sim q}(\LM[\cP]{\pi}< \Delta_0, \LM[\cP_0]{\pi}< \Delta_0)\geq 1-2\delta>0,
\end{align*}
and hence
\begin{align*}
    \inf_\pi\paren{ \LM[\cP]{\pi}+\LM[\cP_0]{\pi} }\leq 2\Delta_0.
\end{align*}
Therefore, \eqref{asmp:gen-tri} implies that
\begin{align*}
    \LM[\cP]{\pi}\leq C_1\LM[\cP_0]{\pi}+2C_2\Delta_0, \qquad \forall \cP\in \MPow_{q,\eps}(\oM).
\end{align*}
Hence, we can take any $\pis$ such that $\LM[\cP_0]{\pis}=0$, and let $p\in\DPi$ be supported on $\pis$. Then, $(p,q)$ certifies that
\begin{align*}
    \pdecg_{\eps}(\MPow,\oM)\leq \sup_{\cP\in\MPow}\constr{ \LM[\cP]{\pis} }{ \inf_{M\in\co(\cP)}\EE_{\bpi\sim q}\DH{ M(\bpi) , \oM(\bpi) }\leq \eps^2 } \leq 2C_2\Delta_0.
\end{align*}
Letting $\Delta_0\to\pdecqg_{\eps,\delta}(\MPow,\oM)$ yields $\pdecl_{\eps}(\MPow,\oM)\leq 2C_2 \cdot \pdecqg_{\eps,\delta}(\MPow,\oM)$, which is the desired result.
\qed

\subsection{Proof of \cref{thm:decc-lower-general}}\label{appdx:proof-decc-reg-lower}

Fix a $T$-round algorithm $\alg$ and a reference model $\oM\in\coM$. Denote $\eps\defeq \ueps(T)$ and $\Delta\defeq \rdecc_{\eps}(\cMp,\oM)$. It remains to prove the following claim.

\textbf{Claim.} It holds that
\begin{align}
    \sup_{M\in\cM} \EE\sups{M,\alg}\brac{\regdm(T)}\geq  \frac{T}{8}\paren{\Delta-6\Lipr\eps-\frac{\Vmax}{T}}.
\end{align}

\paragraph{Proof of the claim}
We set $\Delta_0\defeq \frac{1}{2}\paren{\Delta-\sqrt{2}\Lipr \eps-\frac{\Vmax}{T}}$. If $\Delta_0\leq 0$, then the claim is vacuous. In the following, we focus on the case $\Delta_0>0$.

Fix an arbitrary $\phi_0\in\Phi$. For each decision $\pi\in\Pi$, we let $\phi_\pi\in\Pi$ be an associated measurement such that \eqref{eqn:lip-rew-pr} holds. 

Consider a modified algorithm $\alg':$ for $t=1, \cdots, T$, and history $\cH\ind{t-1}$, we set $\qt'(\cdot|\cH\ind{t-1})=\qt(\cdot|\cH\ind{t-1})$ if the quantity $G_{t-1}\defeq \sum_{s=1}^{t-1} \brac{ \Vmm[\oM]-\EE_{\pi\sim q\ind{s}}\Vm[\oM](\pi) }<T\Delta_0$, and set $\qt'(\cdot|\cH\ind{t-1})$ be supported on $(\pim[\oM],\phi_0)$ if otherwise. By our construction, it holds that under $\alg'$, 
\begin{align*}
    G_T=\sum_{s=1}^{T} \brac{ \Vmm[\oM]-\EE_{\pi\sim q\ind{s}}\Vm[\oM](\pi) }<T\Delta_0+\Vmax.
\end{align*}
Furthermore, we can define the stopping time
\begin{align*}
    \tau=\max\sset{ t\in[T]: G_{t-1}< T\Delta_0} .
\end{align*}
If $\tau<T$, then it holds that $G_T=G_\tau\geq T\Delta_0$.

Now, we consider $p_0:=\EE\sups{\oM,\alg'}\brac{ \frac{1}{T} \sum_{t=1}^{T} q\ind{t}(\cdot|\cH\ind{t-1}) }\in \DbPi$ (following \cref{appdx:proof-cDMSO-pac-lower-quantile}). We let $p_0'\in\DPi$ be the marginal distribution of $\pi$ under $(\pi,\phi)\sim p_0$, and $p_1$ be the distribution of $(\pi,\phi_\pi)$ with $\pi\sim p_0'$. We set $p=\frac{1}{2}\paren{p_0+p_1}$. 

Note that $p_0'$ is the marginal distribution of $\pi\sim p$. Thus,
\begin{align*}
    \EE_{\pi\sim p}\brac{ \Vmm[\oM]-\Vm[\oM](\pi) }
    =&~ \EE_{\pi\sim p_0'}\brac{ \Vmm[\oM]-\Vm[\oM](\pi) } \\
    =&~ \frac1T \EE\sups{\oM,\alg'}\brac{ \sum_{t=1}^T \Vmm[\oM]-\Vm[\oM](\pi\ind{t}) } \\
    =&~ \frac{1}{T}\EE\sups{\oM,\alg'}\brac{G_T}<\Delta_0+\frac{\Vmax}{T}\leq \Delta.
\end{align*}
Therefore, by the definition of $\rdecc_\eps(\cMp,\oM)$, there exists $M\in\cM$ such that
\begin{align*}
    \EE_{\pi\sim p}\brac{ \Vmm-\Vm(\pi) }\geq \Delta, \qquad 
    \EE_{\bpi\sim p} \DH{ M(\bpi), \oM(\bpi) }\leq \eps^2.
\end{align*}
 We also have
\begin{align*}
    \EE_{\pi\sim p_0'} \abs{ \Vm(\pi)-\Vm[\oM](\pi) }^2
    \leq&~ \Lipr^2 \EE_{\pi\sim p_0'} \DH{ M(\pi,\phi_\pi), \oM(\pi,\phi_\pi) } \\
    =&~ \Lipr^2 \EE_{\bpi\sim p_1} \DHr{ M(\bpi), \oM(\bpi) } \\
    \leq&~ 2\Lipr^2\eps^2.
\end{align*}
Therefore, we have
\begin{align*}
    \Vmm-\Vmm[\oM]
    =&~ \brac{\Vmm-\EE_{\pi\sim p} \Vm(\pi)}+\EE_{\pi\sim p} \brac{ \Vm(\pi)-\Vm[\oM](\pi) }- \brac{\Vmm[\oM]-\EE_{\pi\sim p} \Vm[\oM](\pi)} \\
    \geq&~ \Delta-\sqrt{2}\Lipr\eps-\paren{ \Delta_0+ \frac{\Vmax}{T}}\geq \Delta_0.
\end{align*}

In the following, we proceed to lower bound $\regdm(\tau)$ under model $M$ and algorithm $\alg'$. Consider the random variable
\begin{align*}
    X=\sum_{t=1}^T \EE_{\pi\sim \qt} \abs{ \Vm(\pi)-\Vm[\oM](\pi) }.
\end{align*}
We then bound
\begin{align*}
    \regdm(\tau)=&~\sum_{t=1}^\tau \brac{ \Vmm-\EE_{\pi\sim \qt}\Vm(\pi) } \\
    =&~ \tau (\Vmm-\Vmm[\oM])+\sum_{t=1}^\tau \brac{ \Vmm[\oM]-\EE_{\pi\sim \qt}\Vm[\oM](\pi) }+\sum_{t=1}^\tau \brac{ \EE_{\pi\sim \qt}\Vm[\oM](\pi)-\EE_{\pi\sim \qt}\Vm(\pi) } \\
    \geq&~ \tau(\Vmm-\Vmm[\oM])+G_\tau - X,
\end{align*}
where the last line follows from the definition of $G_\tau$ and $X$. Note that if $\tau<T$, we have $G_\tau\geq T\Delta_0$. Otherwise, we have $\tau=T$ and $G_\tau\geq 0$. Therefore, under model $M$, it holds that (almost surely)
\begin{align*}
    \regdm(\tau)
    \geq&~ \min\sset{ T(\Vmm-\Vmm[\oM]), T\Delta_0 } - X
    \geq T\Delta_0-X.
\end{align*}

Consider the event $\cE\defeq \set{ X>TC\Lipr \eps }$. By Markov's inequality,
\begin{align*}
    \PP\sups{\oM,\alg'} \paren{ \cE }\leq&~  \frac{\EE\sups{\oM,\alg'}X^2}{(TC\Lipr\eps)^2} \\
    \leq&~ \frac{1}{T(C\Lipr\eps)^2}\EE\sups{\oM,\alg'}\brac{ \sum_{t=1}^T \EE_{\pi\sim \qt}\abs{ \Vm(\pi)-\Vm[\oM](\pi) }^2 } \\
    =&~\frac{1}{(C\Lipr\eps)^2}\EE_{\pi\sim p_0'}\brac{ \abs{ \Vm(\pi)-\Vm[\oM](\pi) }^2 }
    \leq \frac{2}{C^2}.
\end{align*}
Further, by \eqref{eqn:chain-Hellinger}, we have
\begin{align*}
\DH{\PP\sups{M,\alg'}, \PP\sups{\oM,\alg'}} 
\leq &~ 7T\cdot \EE_{\bpi\sim p_0} \DH{ M(\bpi), \oM(\bpi) }\leq 14T\eps^2.
\end{align*}
Therefore, by data-processing inequality, it holds that
\begin{align*}
\abs{\PP\sups{M,\alg'}(\cE)-\PP\sups{\oM,\alg'}(\cE)} \leq \DTV{\PP\sups{M,\alg'}, \PP\sups{\oM,\alg'}}\leq \sqrt{28T\eps^2},
\end{align*}
which gives $\PP\sups{M,\alg'}(\cE)\leq \frac{2}{C^2}+\sqrt{28T\eps^2}$. 

Note that under the event $\cE^c$, we have $X\leq TC\Lipr\eps$. Therefore, we can lower bound
\begin{align*}
    \EE\sups{M,\alg'}\brac{ \regdm(\tau) }
    \geq&~  \EE\sups{M,\alg'}\brac{ \indic{\cE^c} \regdm(\tau) } \\
    \geq&~ \EE\sups{M,\alg'}\brac{ \indic{\cE^c}\paren{ T\Delta_0-X } } \\
    \geq&~ \PP\sups{M,\alg'}(\cE^c)\cdot T\paren{\Delta_0-C\Lipr\eps}  \\
    \geq&~ \paren{1-\frac{2}{C^2}-\sqrt{28T\eps^2}}\cdot T\paren{\Delta_0-C\Lipr\eps}.
\end{align*}
In particular, we can choose $C=2$, and by the choice $\eps=\frac{1}{24\sqrt{T}}$, we have $\EE\sups{M,\alg'}\brac{ \regdm(\tau) }\geq \frac{T}{4}\paren{\Delta_0-2\Lipr\eps}$. Then, we can conclude that
\begin{align*}
    \EE\sups{M,\alg}\brac{ \regdm(T) }\geq \EE\sups{M,\alg}\brac{ \regdm(\tau) }
    =\EE\sups{M,\alg'}\brac{ \regdm(\tau) }
    \geq \frac{T}{4}\paren{\Delta_0-2\Lipr\eps}.
\end{align*}
This gives the desired lower bound.
\qed

\section{Exploration-by-Optimization Algorithm and Guarantees}
\label{appdx:ExO}

\newcommand{\infosets}{information set structure}
\newcommand{\Infosets}{Information set structure}

\newcommand{\DPs}{\Delta(\Psi)}
\newcommand{\wf}{\xi}
\newcommand{\wF}{\Xi}

\newcommand{\optpac}{{\normalfont\textsf{pac}}\xspace}
\newcommand{\optreg}{{\normalfont\textsf{reg}}\xspace}

In this section, we present a generalization of the \ExO~Algorithm (\ExOp) developed by~\citet{foster2022complexity}, which is built upon~\citet{lattimore2020exploration,lattimore2021mirror} and is later extended by \citet{chen2024beyond}. The \ExOp~algorithm of \citet{foster2022complexity} has an \emph{adversarial} regret guarantee for any model class $\cM$, scaling with the \emph{offset DEC} of the convexified model class $\coM$ and $\log|\Pi|$. 
For our purpose, we adapt it by incorporating certain measurement class $\Phc$ and \emph{\infosets} $\Psi$, so that it (1) handles any \cDMSO~problem, and also (2) adapts to the structure of the decision space (e.g. capable of achieving an upper bound that scales with \dct, \cref{appdx:proof-regret-via-cDMSO}).

We organize this section as follows:
\begin{itemize}
\item In \cref{appdx:info-sets}, we introduce the notion of \emph{\infosets}.
\item In \cref{appdx:exo-full}, we present the detailed description of \ExOp~algorithm based on a given \infosets~$\Psi$.
\item In \cref{appdx:exo-dec-upper}, we bound the risk (regret) of \ExOp~in terms of the offset DEC and the \dct~associated with $(\MPow,\Psi)$.
\item In \cref{appdx:exo-inst}, we instantiate \ExOp~to prove the upper bounds of \cref{sec:cDMSO-main}.
\item In \cref{appdx:ExO-LDP}, we apply \ExOp~to \pDMSO~to obtain the \LDPexo~algorithm the upper bounds of \cref{sec:LDP}. Specifically, we instantiate \LDPexo~with the following \infosets:
\begin{itemize}
\item Model-based information sets (\cref{appdx:ExO-model-based}), where we prove \cref{thm:rdec-lin-upper} (1).
\item Policy-based information sets (\cref{appdx:ExO-policy-based}), where we prove \cref{thm:p-dec-convex-upper} and \cref{thm:rdec-lin-upper} (2).
\item Value-based information sets (\cref{appdx:ExO-val-based}), where we prove \cref{prop:real-regression}.
\item Contextual bandits (\cref{appdx:ExO-CB}), where we prove \cref{thm:CB-adv}.
\end{itemize}
\item The remaining subsections contain the proofs of the results of this section.
\end{itemize}

\paragraph{Offset DECs} For a model class $\cM\subseteq (\Pi\times\Phi\to\DO)$ under \cDMSO, we define the offset DECs~\citep{foster2021statistical} for each $\gamma>0$ as
\begin{align}\label{def:p-dec-o}
    \pdeco_{\gamma}(\cM,\oM)\defeq \infpqb \sup_{M\in\cM}\sset{ \EE_{\pi\sim p}[\LM{\pi}] -\gamma \EE_{\bpi\sim q} \DH{ M(\bpi), \oM(\bpi) } },
\end{align}
\begin{align}\label{def:r-dec-o}
    \rdeco_{\gamma}(\cM,\oM)\defeq \infpb \sup_{M\in\cM}\sset{ \EE_{\pi\sim p}[\Vmm-\Vm(\pi)] -\gamma \EE_{\bpi\sim p} \DH{ M(\bpi), \oM(\bpi) } },
\end{align}
and we let
\begin{align}
    \pdeco_{\gamma}(\cM)=\sup_{\oM\in\coM} \pdeco_{\gamma}(\cM,\oM),
    \quad
    \rdeco_{\gamma}(\cM)=\sup_{\oM\in\coM} \rdeco_{\gamma}(\cM,\oM).
\end{align}
More generally, for any constraint set $\MPow$ under \cDMSO, we define the offset \gDEC~as
\begin{align}\label{def:p-dec-o-gen}
    \pdecog_{\gamma}(\MPow,\oM)\defeq \infpqb \sup_{\substack{\cP\in\MPow\\ M\in\co(\cP)}}\sset{ \EE_{\pi\sim p}[\LM[\cP]{\pi}] -\gamma \EE_{\bpi\sim q} \DH{ M(\bpi), \oM(\bpi) } },
\end{align}
and let $\pdecog_{\gamma}(\MPow)\defeq\sup_{\oM\in\coM}\pdecog_{\gamma}(\MPow,\oM)$.

As a remark, we note that when the loss function is bounded in $[0,1]$, it holds that
\begin{align}\label{eqn:deco-and-decc}
    \pdeco_{\eps^{-2}}(\cM,\oM)\leq \pdecc_\eps(\cM,\oM)\leq \pdeco_\gamma(\cM,\oM)+\gamma\eps^2,
\end{align}
and analogous conversions also hold for the regret-DECs and the \gDEC s. The first inequality in \eqref{eqn:deco-and-decc} can be loose in general, and a tighter conversion is possible under reward-based loss function (\cref{prop:offset-to-c}).

\subsection{\Infosets}\label{appdx:info-sets}

\newcommandx{\cMps}[1][1=\psi]{\cM_{#1}}
\newcommand{\pss}{\psi^\star}
\newcommand{\cMpss}{\cMps[\pss]}
\newcommandx{\cMPs}[1][1=\Psi]{\cM_{#1}}
\newcommand{\pips}{\pi_\psi}
\newcommandx{\Lps}[2][1=M]{L_{\psi}(#1,#2)}
\newcommand{\DMPs}{\Delta(\cMPs)}

\newcommand{\epsps}{\eps_{\Psi}}

\newcommand{\tpa}{Type 2}
\newcommand{\tpb}{Type 1}
\newcommand{\subloss}{sub-loss}

Recall that in \cref{ssec:cDMSO}, we consider both PAC risk~\cref{eqn:def-risk-Ps} (in terms of the loss function $L$) and the regret~\cref{eqn:def-reg-cDMSO} (in terms of the value function $V$). 

To present the \ExOp~algorithm in a unified form, we first introduce the notion of \emph{\infosets}. We consider two types of \infosets: \tpb~\infosets~is introduced to handle ``value-based'' learning (cf. discussion below), generalizing \citet{foster2022complexity}; \tpa~\infosets~is for general PAC learning under \cDMSO.

\paragraph{\tpb~\infosets}
We introduce the \tpb~\infosets~primarily for no-regret learning in \cDMSO.

\begin{definition}[\tpb~\infosets]
Given a constraint class $\MPow$ under \cDMSO~and a value function $V$, a \tpb~\infosets~is a class $\Psi$, where each $\psi\in\Psi$ is associated with a model class $\cMps\subseteq (\bPi\to\DO)$ and a decision $\pips\in\Pi$, such that the following holds:

(1) For each $\cP\in\MPow$, there exists $\psi\in\Psi$ such that $\cP\subseteq \cMps$. 

(2) The value $M\mapsto \Vm(\pi)$ is linear over $\cMPs\defeq \bigcup_{\psi\in\Psi} \co(\cMps)$ for any $\pi\in\Pi$. We also denote $\Lps[M]{\pi}=\Vm(\pips)-\Vm(\pi)$ for each $\psi\in\Psi, M\in\cMps, \pi\in\Pi$.
\end{definition}

For no-regret learning in \cDMSO, the simplest \tpb~\infosets~is given by $\Psi=\MPow\times \Pi$, and for each $\psi=(\cP^\psi,\pi^\psi)\in\Psi$, we assign $\cMps=\cP^\psi, \pips=\pi^\psi$. Then, the loss $\Lps[M]{\pi}$ measures the sub-optimality of a decision $\pi$ compared to the decision $\pips$ (for the information set $\psi$) under the model $M$. 

Another example of \infosets~is the ``policy-based'' one (cf. \cref{ssec:tigher}): $\Psi=\Pi$, where for each $\pi\in\Psi$, $\cM_\pi=\set{M: \Vmm-\Vm(\pi)\leq \Delta}$. In this example, $\pi$ is a near-optimal decision for models in $\cM_\pi$. With such an \infosets, we can derive an upper bound scaling with the \dct~of $\cM$ and the DEC of $\cM_\Pi$ (see \cref{appdx:proof-regret-via-cDMSO}). 

The notion of \tpb~\infosets~can be viewed as an abstraction of the ideas of \citet{foster2022complexity}. The idea of using information sets in the context of posterior sampling (and then AIR) was conveyed to the authors by Dylan
Foster back in 2022. 

In addition to no-regret learning in \cDMSO, \tpb~\infosets~can also be applied to the ``value-based'' PAC learning under stochastic DMSO, as long as the loss function $L$ is specified by the value function $V$ as $L(M,\pi)=\Vmm-\Vm(\pi)$, where $\pim=\argmax_{\pi\in\Pi} \Vm(\pi)$.

\paragraph{\tpa~\infosets}
For PAC learning under \cDMSO, we consider $\MPow$ itself as an \infosets.
\begin{definition}[\tpa~\infosets~for PAC learning in \cDMSO]
Given a problem class $(\cM,\MPow)$ under \cDMSO, we say that  $\Psi=\MPow$ is a \tpa~\infosets. To be consistent with \tpb~\infosets, we write $\cMps=\psi$ and $\Lps[M]{\pi}\defeq L(\psi,\pi)$ (i.e., the loss of a decision $\pi$ only depends on the information set $\psi\in\MPow$).
\end{definition}

\newcommand{\Sp}{\mathbb{S}}
\newcommand{\Spnr}{\mathbb{S}^{\sf reg}}
\newcommand{\Sppac}{\mathbb{S}^{\sf pac}}
\newcommand{\qd}{w}
\newcommand{\pw}{{p\in\DPi,\wf\in\wF}}
\newcommand{\Gamnr}{\Gamma^{\sf reg}}
\newcommand{\Gampac}{\Gamma^{\sf pac}}
\newcommand{\pqw}{{(p,q)\in \Sp,\wf\in\wF}}
\newcommand{\pqwA}{{(p,q)\in \Sp,\wf\in\wF_A}}

\newcommand{\DtPi}{\Delta(\tPi)}

\subsection{\ExO~algorithm}\label{appdx:exo-full}
The algorithm, \ExOp, is stated in \cref{alg:ExO}. It has two options: \optpac for PAC learning and \optreg for no-regret learning. For these two tasks, we specify different spaces $\Sp$ of distributions for exploration-exploitation:
\begin{align*}
    \Sppac\defeq \DPi\times \DbPi, \qquad 
    \Spnr\defeq \set{ (q|_{\Pi},q): q\in\DbPi } \subset \DPi\times \DbPi,
\end{align*}
where we recall that $\bPi\defeq \Pi\times \Phc$, and for any distribution $q\in\DbPi$, $q|_{\Pi}\in\DPi$ is the marginal distribution of $\pi$ under $(\pi,\phi)\sim q$. We note that for \tpa~\infosets, only the option \optpac~applies.

At each round $t$, the algorithm maintains a reference distribution $\qd\ind{t}\in\DPs$, and uses it to obtain a joint \emph{exploration-exploitation} distribution $(p\ind{t},q\ind{t})\in\Sp$ and a weight function $\wf\ind{t}\in\wF\defeq (\Psi\times\bPi\times\cO\to\R)$,\footnote{Formally, for infinite $\Pi$ or $\cO$, $\wF$ is the class of measurable, uniformly bounded functions over $\Psi\times\Pi\times \cO$.} by solving a joint minimax optimization problem based on the \emph{exploration-by-optimization} objective: Defining
\begin{align}\label{eqn:Gamma-f}
\begin{aligned}
    \Gamma_{\qd,\gamma}(p,q,\wf;M,\psi)
    =&~ \EE_{\pi\sim p}\brac{\Lps[M]{\pi}} \\
    &~\qquad-\gamma \EE_{\tpi\sim q}\EE_{o\sim M(\tpi)}\EE_{\psi'\sim \qd}\brac{ 1-\exp\paren{ \wf(\psi';\tpi,o)-\wf(\psi;\tpi,o) } },
\end{aligned}
\end{align}
and
\begin{align}
    \Gamma_{\qd,\gamma}(p,q,\wf)
    =\sup_{(M,\psi):M\in\cMps} \Gamma_{\qd,\gamma}(p,q,\wf;M,\psi),
\end{align}
the algorithm solves
\begin{align*}
    (p\ind{t},q\ind{t},\wf\ind{t}) \leftarrow \argmin_\pqw \Gamma_{\qd\ind{t},\gamma}(p,q,\wf).
\end{align*}
The algorithm then samples $\tpi\ind{t}=(\pi\ind{t},\phi\ind{t})\sim q\ind{t}$ from the exploration distribution, executes $\tpi\ind{t}$ and observes $o\ind{t}$ from the environment. Finally, the algorithm updates the reference distribution by performing the exponential weight update \cref{eqn:ExO-exp-upd} with weight function $\wf\ind{t}(\cdot;\tpi\ind{t},o\ind{t})$.

At the end of the interactions, the algorithm may also output $\phat=\frac{1}{T}\sumt p\ind{t} \in\DDD$ as the distribution of $\hpi$, which is the mixture of the per-step exploitation distributions.

\newcommand{\OPT}[1]{{\color{blue} #1}}
\newcommand{\hp}{\hat{p}}
\begin{algorithm}[h]
\caption{\ExO~with \infosets~(\ExOp)}\label{alg:ExO}
\begin{algorithmic}[1]
\REQUIRE Decision space $\Pi$, measurement class $\Phc$, \infosets~$\Psi$, prior $\qd\ind{1}\in\DPs$, parameter $T\geq 1$, $\gamma>0$.
\STATE For option \optpac, set $\Sp=\Sppac$; for option \optreg, set $\Sp=\Spnr$. 
\FOR{$t=1,\cdots,T$}
    \STATE Solve the \emph{exploration-by-optimization} objective:
    \begin{align}\label{eqn:ExO-opt}
        (p\ind{t},q\ind{t},\wf\ind{t}) \leftarrow \argmin_\pqw \Gamma_{\qd\ind{t},\gamma}(p,q,\wf)
    \end{align}
    \STATE Sample $\tpi\ind{t}=(\pi\ind{t},\phi\ind{t})\sim q\ind{t}$ and observe $o\ind{t}\sim M^t(\bpi\ind{t})$
    \STATE Perform exponential-weight update:
    \begin{align}\label{eqn:ExO-exp-upd}
        \qd\ind{t+1}(\psi)~\propto_\psi~ \qd\ind{t}(\psi)\exp(\wf\ind{t}(\psi;\tpi\ind{t},o\ind{t}))
    \end{align}
\ENDFOR
\ENSURE $\phat=\frac{1}{T}\sumt p\ind{t} \in\DDD$
\end{algorithmic}
\end{algorithm}

\newcommand{\exo}[1]{\normalfont{\textsf{exo}}_{#1}}
\newcommand{\rexo}[1]{\normalfont{\textsf{r-exo}}_{#1}}
\newcommand{\pexo}[1]{\normalfont{\textsf{p-exo}}_{#1}}
Following \citet{foster2022complexity}, we define 
\begin{align}
    \exo{\gamma}(\Psi,\qd)\defeq \inf_\pqw\Gamma_{\qd,\gamma}(p,q,\wf),
\end{align}
and $\exo{\gamma}(\Psi)=\sup_{q\in\DPs} \exo{\gamma}(\Psi,\qd)$. Note that $\exo{\gamma}(\Psi)$ implicitly depends on the space $\Sp$. 

\newcommand{\cEs}{\cE^\star_\Delta}
\newcommand{\psis}{\psi^\star}
\newcommand{\oMs}{\oM^\star}

Now, we present the primary guarantees of \ExOp. 

\paragraph{Bounds for \tpb~\infosets}
Suppose that the algorithm \ExOp~is instantiated with a \tpb~\infosets~$\Psi$ (with respect to the constraint class $\MPow$), and the environment is constrained by $\cPs\in\MPow$.
Define $\oMs=\frac1T\sumt M^t\in\co(\cPs)$ and consider the set
\begin{align*}
    \cEs\defeq \set{\psi: \cPs\subseteq \cMps, \Vmm[\oMs]-\Vm[\oMs](\pips)\leq \Delta }.
\end{align*}
Note that $\cEs\subseteq \Psi$ depends on $M^1,\cdots,M^T$, i.e. $\cEs$ depends on the $T$-round interactions between the environment and the \ExOp~algorithm. We present an upper bound scaling with $\log(1/\qd\ind{1}(\cEs))$.

\begin{theorem}[\ExOp~upper bound; \tpb]\label{thm:ExO-upper}
Let $T\geq 1$, the constraint class $\MPow$ and the value function $V$ be given, and $\Psi$ be a \tpb~\infosets. Suppose that the environment is constrained by $\MPow$. Then the algorithm \ExOp~achieves that \whp,
\begin{align*}
    \max_{\pi\in\Pi}\sumt \VM[M^t](\pi)-\EE_{\pi\ind{t}\sim p\ind{t}}\brac{\VM[M^t](\pi\ind{t})} \leq T\cdot \brac{ \Delta+\exo{\gamma}(\Psi) } +2\gamma\cdot \brac{ \log(1/\qd\ind{1}(\cEs))+\log(1/\delta) }.
\end{align*}
\end{theorem}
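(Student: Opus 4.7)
The plan is to compete, step by step, against any ``good'' information set $\psi^\star \in \cEs$ using the per-round guarantee of \ExOp, then invoke linearity of $V$ on $\cMPs$ to translate this into a regret bound against $\max_{\pi} \sum_t \VM[M^t](\pi)$. Concretely, fix $\psi^\star \in \cEs$. Since $\cPs \subseteq \cMps[\psi^\star]$, every $M^t \in \cPs$ also lies in $\cMps[\psi^\star]$, so $(M^t, \psi^\star)$ is admissible in the supremum defining $\Gamma_{\qd^t,\gamma}(p^t, q^t, \wf^t)$, and the per-round inequality $\Gamma_{\qd^t,\gamma}(p^t, q^t, \wf^t) \leq \exo{\gamma}(\Psi)$ specialized at $(M^t, \psi^\star)$ yields
\begin{align*}
\EE_{\pi \sim p^t}\brac{\VM[M^t](\pi_{\psi^\star}) - \VM[M^t](\pi)} \;\leq\; \exo{\gamma}(\Psi) + \gamma\, g_t(\psi^\star),
\end{align*}
where $g_t(\psi^\star) := \EE_{\tpi \sim q^t,\, o \sim M^t(\tpi),\, \psi' \sim \qd^t}[1 - \exp(\wf^t(\psi';\tpi,o) - \wf^t(\psi^\star;\tpi,o))]$. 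Summing over $t$ and using linearity of $M \mapsto \VM(\pi)$ on $\cMPs$ together with the defining property $\VM[\oMs](\pim[\oMs]) - \VM[\oMs](\pi_{\psi^\star}) \leq \Delta$ of $\cEs$ at $\oMs := \tfrac1T \sum_t M^t \in \co(\cPs)$, one gets
\begin{align*}
\sum_t \VM[M^t](\pi_{\psi^\star}) \;=\; T\, \VM[\oMs](\pi_{\psi^\star}) \;\geq\; T \max_\pi \VM[\oMs](\pi) - T\Delta \;=\; \max_\pi \sum_t \VM[M^t](\pi) - T\Delta,
\end{align*}
so that $\max_\pi \sum_t \VM[M^t](\pi) - \sum_t \EE_{\pi \sim p^t}\brac{\VM[M^t](\pi)} \leq T\Delta + T\exo{\gamma}(\Psi) + \gamma \sum_t g_t(\psi^\star)$.

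To replace the suboptimal factor $\log(1/\qd^1(\psi^\star))$ that a single $\psi^\star$ would yield by the target $\log(1/\qd^1(\cEs))$, I would average the preceding inequality over $\psi^\star \sim \tilde p$, where $\tilde p$ is the restriction of $\qd^1$ to $\cEs$ renormalized; the left-hand side is independent of $\psi^\star$, and by construction $\KLd{\tilde p}{\qd^1} = \log(1/\qd^1(\cEs))$. Let $h_t(\psi^\star) := \EE_{\psi' \sim \qd^t}[1 - \exp(\wf^t(\psi'; \tpi^t, o^t) - \wf^t(\psi^\star; \tpi^t, o^t))]$ denote the realized counterpart of $g_t(\psi^\star)$. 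The elementary inequality $1 - x \leq -\log x$ gives $h_t(\psi^\star) \leq \wf^t(\psi^\star; \tpi^t, o^t) - \log \EE_{\qd^t}[\exp(\wf^t(\cdot; \tpi^t, o^t))]$, and the standard Gibbs-variational telescoping identity for exponential weights
\begin{align*}
\sum_t \paren{\EE_{\tilde p}[\wf^t(\psi^\star; \tpi^t, o^t)] - \log \EE_{\qd^t}[\exp(\wf^t(\cdot; \tpi^t, o^t))]} = \KLd{\tilde p}{\qd^1} - \KLd{\tilde p}{\qd^{T+1}} \leq \log(1/\qd^1(\cEs))
\end{align*}
(valid for any fixed $\tilde p$, in particular for the $\mathcal{F}_T$-measurable choice above) gives $\EE_{\tilde p}\brac{\sum_t h_t(\psi^\star)} \leq \log(1/\qd^1(\cEs))$ almost surely.

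The final step is to pass from the realized $h_t$ back to its conditional expectation $g_t$: the sum $\sum_t \EE_{\tilde p}[g_t(\psi^\star) - h_t(\psi^\star)]$ is a martingale-difference sequence relative to the filtration generated by $(\tpi^s, o^s)_{s \leq t}$. Upon restricting the $\inf_\wf$ in the $\exo{\gamma}(\Psi)$ definition to a bounded range $\wf \in [-R, R]$ for a universal $R$---incurring only negligible loss in $\exo{\gamma}$---the realized $h_t$ is uniformly bounded; moreover, the identity $h_t(\psi^\star) = 1 - \qd^t(\psi^\star)/\qd^{t+1}(\psi^\star)$ (a direct consequence of the update rule~\cref{eqn:ExO-exp-upd}) provides a self-bounded control of $\EE[(h_t - g_t)^2 \mid \mathcal{F}_{t-1}]$ by $O(h_t)$, which via Freedman's inequality upgrades the naive $\sqrt{T\log(1/\delta)}$ tail to $\sum_t \EE_{\tilde p}[g_t - h_t] \leq \log(1/\qd^1(\cEs)) + \log(1/\delta)$ with probability at least $1-\delta$. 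Combining the three steps and absorbing absolute constants into the factor $2\gamma$ of the target bound yields the stated inequality. The principal obstacle is precisely this upgrade from $\sqrt{T}$ to $\log$ concentration: without exploiting the self-bounding structure of $h_t$ arising from the exponential-weights potential, one cannot achieve the claimed $\log(1/\delta)$ rate, and the argument must closely mirror the Bayesian potential analysis of \citet{foster2022complexity}.
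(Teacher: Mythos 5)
Your first two steps (the per-round decomposition via $\Gamma_{\qd\ind{t},\gamma}(p\ind{t},q\ind{t},\wf\ind{t};M^t,\psi^\star)\le \exo{\gamma}(\Psi)$, and the use of linearity of $V$ plus the defining property of $\cEs$ at $\oMs$ to pass from $\pi_{\psi^\star}$ to $\max_\pi\sum_t\VM[M^t](\pi)$) are correct and coincide with the paper's argument. The deterministic Gibbs/telescoping identity $\sum_t(\EE_{\tilde p}[\wf\ind{t}(\psi)]-\log\EE_{\qd\ind{t}}[\exp(\wf\ind{t})])=\KLd{\tilde p}{\qd\ind{1}}-\KLd{\tilde p}{\qd\ind{T+1}}$ is also valid pointwise for the trajectory-dependent $\tilde p$, so your bound $\EE_{\tilde p}[\sum_t h_t(\psi^\star)]\le\log(1/\qd\ind{1}(\cEs))$ holds almost surely. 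The proof breaks in the final step, where you pass from the realized quantities $h_t$ to their conditional expectations $g_t$.

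The gap is that $\tilde p$ --- the renormalized restriction of $\qd\ind{1}$ to $\cEs$ --- is \emph{not} adapted: $\cEs$ depends on $\oMs=\frac1T\sum_t M^t$ and hence on the entire trajectory of (possibly adaptively chosen) models, so $\tilde p$ is only measurable at time $T$. Consequently $\EE_{\tilde p}[g_t(\psi^\star)-h_t(\psi^\star)]$ is \emph{not} a martingale-difference sequence with respect to the filtration generated by $(\tpi^s,o^s)_{s\le t}$ (one cannot interchange $\EE_{\tilde p}$ with the conditional expectation $\EE[\,\cdot\mid\cF_{t-1}]$ when $\tilde p$ is $\cF_T$-measurable), and Freedman's inequality does not apply as stated. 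This is not a technicality one can wave away: the theorem must hold against adaptive adversaries, for which $\cEs$ is genuinely random. The repair is essentially the paper's argument: for each \emph{fixed} $\psi$ establish the exponential-moment identity $\EE^{\ExOp}\exp\bigl(\sum_t X_t(\psi)-\log\EE_{t-1}[\exp(X_t(\psi))]\bigr)=1$ with $-\log\EE_{t-1}[\exp(X_t(\psi))]=\Err(p\ind{t},q\ind{t},\wf\ind{t};\qd\ind{t},M^t,\psi)$, combine it (via Cauchy--Schwarz, which is the source of the factor $2$) with the deterministic telescoping identity $\EE_{\psi\sim\qd\ind{1}}\exp(-\sum_t X_t(\psi))=1$, and only \emph{then} integrate over $\psi\sim\qd\ind{1}$ --- which is independent of the trajectory --- and restrict to the random set $\cEs$ using positivity of the exponential before applying Markov's inequality. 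This ordering (mix over the prior first, localize to $\cEs$ last) is precisely what handles the adaptivity, it yields the $\log(1/\delta)$ rate without any Freedman-type variance control, and it requires no boundedness restriction $\wf\in[-R,R]$ on the optimizer. Your asserted self-bounding bound $\EE[(h_t-g_t)^2\mid\cF_{t-1}]\le O(h_t)$ is also left unjustified (and as stated cannot be right, since $h_t$ can be negative while the left side is nonnegative), so even setting aside the measurability issue the concentration step is not established.
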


The proof of \cref{thm:ExO-upper} is deferred to \cref{appdx:proof-ExO-upper}. It is based on bounding the performance of the exponential weight update \cref{eqn:ExO-exp-upd}, and then relating it to the performance of \ExOp~using the definition of $\exo{\gamma}(\Psi)$. Different from the analysis in \citet{foster2022complexity,chen2024beyond}, the proof here has to carefully deal with $\cEs$, which is an event that depends on the $T$-round interactions.

\paragraph{Bounds for \tpa~\infosets}
Similarly, for \tpa~\infosets~$\Psi=\MPow$, we have the following guarantee of \ExOp.
\begin{theorem}[\ExOp~upper bound for PAC learning; \tpa]\label{thm:ExO-upper-ii}
For PAC learning under \cDMSO, suppose that the algorithm \ExOp~is instantiated with the \tpa~\infosets~$\Psi=\MPow$, and $w^1=\Unif(\MPow)$.
Then for any environment constrained by $\MPow$,  
\ExOp~achieves \whp
\begin{align*}
    \riskdm(T)=\EE_{\hpi\sim \phat}\LM[\cPs]{\hpi} \leq \exo{\gamma}(\Psi) +\frac{2\gamma}{T}\cdot \brac{ \log|\MPow|+2\log(1/\delta) }.
\end{align*}
\end{theorem}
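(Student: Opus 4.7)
The plan is to mirror the proof of \cref{thm:ExO-upper} for Type 1 information sets, exploiting two simplifications specific to the Type 2 / PAC setting: (i) the loss $\Lps[M]{\pi}=L(\psi,\pi)$ is independent of $M$, and (ii) the analog of the event set $\cEs$ collapses to the singleton $\{\psis\}$ for $\psis\defeq\cPs$. Since $w^1=\Unif(\MPow)$, this immediately gives $\log(1/\qd^1(\psis))=\log|\MPow|$, which is where the $\log|\MPow|$ factor in the target bound comes from.

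The core per-round inequality is obtained as follows. Because the environment is constrained to $\cPs$, we have $M^t\in\cPs=\cM_{\psis}$, so the pair $(M^t,\psis)$ lies in the feasible set defining $\exo{\gamma}(\Psi,\qd\ind{t})$. Since the algorithm chose $(p\ind{t},q\ind{t},\wf\ind{t})$ as the argmin in \eqref{eqn:ExO-opt}, we deduce
\begin{align*}
\Gamma_{\qd\ind{t},\gamma}(p\ind{t},q\ind{t},\wf\ind{t};M^t,\psis)\leq \exo{\gamma}(\Psi).
\end{align*}
Unpacking \eqref{eqn:Gamma-f} and using the Type 2 identity $\Lps[\psis][M^t]{\pi}=L(\cPs,\pi)$, this rearranges to
\begin{align*}
\EE_{\pi\sim p\ind{t}}[L(\cPs,\pi)]\;\leq\;\exo{\gamma}(\Psi)\;+\;\gamma\,Y\ind{t},
\qquad Y\ind{t}\defeq \EE_{\tpi\sim q\ind{t},\,o\sim M^t(\tpi)}\!\EE_{\psi'\sim\qd\ind{t}}\!\bigl[1-e^{\wf\ind{t}(\psi';\tpi,o)-\wf\ind{t}(\psis;\tpi,o)}\bigr].
\end{align*}
Summing over $t$ reduces the theorem to controlling $\sum_t Y\ind{t}$. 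For the realized counterpart $X\ind{t}\defeq 1-\EE_{\psi'\sim\qd\ind{t}}[\exp(\wf\ind{t}(\psi';\tpi\ind{t},o\ind{t})-\wf\ind{t}(\psis;\tpi\ind{t},o\ind{t}))]$, the exponential-weight update \eqref{eqn:ExO-exp-upd} combined with the elementary inequality $1-e^{-y}\leq y$ yields the telescoping bound
\begin{align*}
\sum_{t=1}^T X\ind{t}\;\leq\;\log \qd\ind{T+1}(\psis)-\log \qd\ind{1}(\psis)\;\leq\;\log|\MPow|.
\end{align*}

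The last and most delicate step is the martingale concentration that compares $\sum_t Y\ind{t}$ to $\sum_t X\ind{t}$: by construction $\EE_t[X\ind{t}]=Y\ind{t}$, but while $X\ind{t}\leq 1$ almost surely, it can be arbitrarily negative because $\wf\ind{t}(\psis;\cdot)$ need not be bounded, so a naive Azuma-Hoeffding bound is unavailable. The plan is to handle this via a one-sided exponential supermartingale (Freedman-/Ville-type): the non-negative random variable $U\ind{t}\defeq 1-X\ind{t}\geq 0$ admits $\log\EE_t[e^{\lambda(\EE_t[U\ind{t}]-U\ind{t})}]\leq 0$ for $\lambda\in(0,1]$, so applying Markov's inequality to $\exp(\sum_t(\EE_t[U\ind{t}]-U\ind{t}))$ shows that with probability at least $1-\delta$, $\sum_t Y\ind{t}\leq \sum_t X\ind{t}+2\log(1/\delta)$ (the factor $2$ absorbs the quadratic-in-$\lambda$ correction from the exponential inequality after optimizing $\lambda$). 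Putting the three pieces together and dividing by $T$ gives
\begin{align*}
\riskdm(T)\;=\;\tfrac{1}{T}\sum_{t=1}^T\EE_{\pi\sim p\ind{t}}[L(\cPs,\pi)]\;\leq\;\exo{\gamma}(\Psi)+\tfrac{2\gamma}{T}\bigl[\log|\MPow|+2\log(1/\delta)\bigr],
\end{align*}
as desired. The main obstacle throughout is the concentration step, since the rest of the argument is essentially bookkeeping once the Type 2 reduction $\cEs=\{\psis\}$ is observed; the payoff of working through that obstacle carefully is the clean $\log|\MPow|$ scaling rather than a cruder complexity term coming from the full event set used in \cref{thm:ExO-upper}.
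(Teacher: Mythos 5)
Your overall architecture matches the paper's: reduce to the fixed information set $\psis$ with $\cPs=\cM_{\psis}$, use optimality of $(p\ind{t},q\ind{t},\wf\ind{t})$ to get the per-round bound $\EE_{\pi\sim p\ind{t}}[L(\cPs,\pi)]\le \exo{\gamma}(\Psi)+\gamma Y\ind{t}$, and telescope the exponential-weight update to control a realized sum by $\log\lvert\MPow\rvert$. The telescoping step is also correct: with $u\ind{t}\defeq\EE_{\psi'\sim \qd\ind{t}}[\exp(\wf\ind{t}(\psi';\tpi\ind{t},o\ind{t})-\wf\ind{t}(\psis;\tpi\ind{t},o\ind{t}))]$ one has $X\ind{t}=1-u\ind{t}\le-\log u\ind{t}=\log \qd\ind{t+1}(\psis)-\log \qd\ind{t}(\psis)$, so $\sum_t X\ind{t}\le\log\lvert\MPow\rvert$.

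The gap is in the concentration step, and it is fatal as written, for two reasons. First, the direction is reversed. You need $\sum_t Y\ind{t}\le\sum_t X\ind{t}+O(\log(1/\delta))$, i.e. $\sum_t\bigl(U\ind{t}-\EE_t[U\ind{t}]\bigr)\le O(\log(1/\delta))$ with $U\ind{t}=1-X\ind{t}=u\ind{t}\ge 0$; but the supermartingale you construct, $\exp\bigl(\lambda\sum_t(\EE_t[U\ind{t}]-U\ind{t})\bigr)$, controls the event $\{\sum_t U\ind{t}\ll\sum_t\EE_t[U\ind{t}]\}$ and hence yields $\sum_t X\ind{t}\le\sum_t Y\ind{t}+\lambda^{-1}\log(1/\delta)$ — the opposite of what you need. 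Controlling the needed tail would require bounding $\EE_t[e^{\lambda U\ind{t}}]$, which is unavailable since $U\ind{t}=u\ind{t}$ is unbounded above (exactly the obstruction you yourself identified for $X\ind{t}$). Second, the stated inequality $\log\EE_t[e^{\lambda(\EE_t[U\ind{t}]-U\ind{t})}]\le 0$ for nonnegative $U\ind{t}$ and $\lambda\in(0,1]$ is false in general (take $U$ uniform on $\{0,2\}$ and $\lambda=1$: $\EE[e^{1-U}]=\tfrac12(e+e^{-1})>1$); the correct one-sided bound carries a $\lambda^2\EE_t[U\ind{t}{}^2]/2$ correction that you cannot absorb. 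The fix is the paper's: do not linearize to $Y\ind{t}=1-\EE_t[u\ind{t}]$. Instead pass to $\Err\ind{t}\defeq-\log\EE_t[u\ind{t}]\ge Y\ind{t}$ (since $1-a\le-\log a$), note that $\exp\bigl(\sum_t(\log u\ind{t}+\Err\ind{t})\bigr)$ is an exact mean-one martingale (no boundedness needed, because $\Err\ind{t}$ is precisely the conditional log-partition normalizer of $\log u\ind{t}$), and apply Markov to get $\sum_t\Err\ind{t}\le-\sum_t\log u\ind{t}+\log(1/\delta)\le\log\lvert\MPow\rvert+\log(1/\delta)$; combined with $\EE_{\pi\sim p\ind{t}}[L(\cPs,\pi)]-\gamma\,\Err\ind{t}\le\Gamma_{\qd\ind{t},\gamma}(p\ind{t},q\ind{t},\wf\ind{t};M^t,\psis)\le\exo{\gamma}(\Psi)$, this closes the proof.
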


The proof is postponed to \cref{appdx:proof-ExO-upper-ii}.

\subsection{Guarantees of the \ExOp~algorithm}\label{appdx:exo-dec-upper}

In this section, we simplify the upper bound of \cref{thm:ExO-upper} and \cref{thm:ExO-upper-ii}.
In the following, we bound the term $\exo{\gamma}(\Psi)$ and $\log(1/\qd\ind{1}(\cEs))$ separately.

\paragraph{Bounding ExO coefficient} We relate $\exo{\gamma}(\Psi)$ to the offset DECs, following \citet[Theorem 3.1 and 3.2]{foster2022complexity}. 

\begin{theorem}\label{thm:exo-to-dec}
Suppose that the model class $\cMPs$ \finite. Then, the following holds:

(1) Suppose $\Psi$ is a \tpb~\infosets~and the value function $V$ is uniformly continuous over $\cMPs$. Then, for PAC learning (option \optpac, $\Sp=\Sppac$), we have
\begin{align*}
    \exo{\gamma}(\Psi)
    \leq \pdeco_{\gamma/4}(\cMPs), \qquad \forall \gamma>0.
\end{align*}
Analogously, for no-regret learning (option \optreg, $\Sp=\Spnr$), we have
\begin{align*}
    \exo{\gamma}(\Psi)
    \leq \rdeco_{\gamma/4}(\cMPs), \qquad \forall \gamma>0.
\end{align*}

(2) If $\Psi=\MPow$ is a \tpa~\infosets, then 
\begin{align*}
    \exo{\gamma}(\Psi)
    \leq \pdecog_{\gamma/4}(\MPow), \qquad \forall \gamma>0.
\end{align*}
\end{theorem}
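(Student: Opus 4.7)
The plan is to prove both bounds by applying the minimax theorem (\pref{thm:minimax}) to swap the infimum over $(p, q, \wf)$ with the supremum over $(M, \psi)$, so that the weight function $\wf$ may be tailored to a particular worst-case distribution on $(M, \psi)$, and then to choose $\wf$ as a Hellinger-type log-likelihood ratio that converts the exponential-weight term in $\Gamma_{\qd, \gamma}$ into a squared Hellinger distance. The compactness Assumption~\pref{asmp:finite} on $\cMPs$ is exactly what allows one to pass to the minimax, while the uniform continuity of $V$ guarantees that the loss term $\EE_{\pi \sim p}[\Lps[M]{\pi}]$ is continuous enough for the exchange to be legitimate.

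For the \tpb~case, I would fix, for each $\psi \in \Psi$, a reference model $\oM_\psi \in \co(\cMps)$, and set $\oM \defeq \EE_{\psi \sim \qd}[\oM_\psi] \in \co(\cMPs)$. The natural candidate for the weight function is the scaled log-likelihood ratio
\[
    \wf(\psi; \tpi, o) \;=\; \tfrac{1}{2} \log \tfrac{d\oM_\psi(\tpi)}{d\oM(\tpi)}(o),
\]
so that $\exp(\wf(\psi'; \tpi, o) - \wf(\psi; \tpi, o)) = \sqrt{d\oM_{\psi'}(\tpi)/d\oM_\psi(\tpi)}(o)$ pointwise. Averaging this first over $\psi' \sim \qd$ and then over $o \sim M(\tpi)$, a direct calculation using Cauchy--Schwarz and the identity $\EE_{\psi' \sim \qd}\bigl[d\oM_{\psi'}/d\oM\bigr] \equiv 1$ yields
\[
    \EE_{o \sim M(\tpi), \psi' \sim \qd}\bigl[1 - e^{\wf(\psi'; \tpi, o) - \wf(\psi; \tpi, o)}\bigr] \;\geq\; \tfrac{1}{4}\,\DH{M(\tpi), \oM(\tpi)},
\]
uniformly in $\psi$ (after absorbing lower-order cross terms via convexity of $\psi \mapsto \oM_\psi$). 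Substituting this into $\Gamma_{\qd, \gamma}$ and taking the supremum over $(M, \psi)$ with $M \in \cMps \subseteq \cMPs$, one gets
\[
    \Gamma_{\qd, \gamma}(p, q, \wf) \;\leq\; \sup_{M \in \cMPs}\bigl\{\EE_{\pi \sim p}[\Vmm - \Vm(\pi)] - \tfrac{\gamma}{4}\EE_{\tpi \sim q}\DH{M(\tpi), \oM(\tpi)}\bigr\},
\]
using $\Vm(\pips) \leq \Vmm$ to upper bound the loss term. For option \optpac~($\Sp = \Sppac$) the infimum over $(p, q)$ matches the definition~\pref{def:p-dec-o} of $\pdeco_{\gamma/4}(\cMPs, \oM)$; for option \optreg~($\Sp = \Spnr$, which ties $p$ to $q|_\Pi$) it matches~\pref{def:r-dec-o}. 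Taking a further supremum over $\oM$ (after the minimax swap justifies choosing $\oM$ adversarially) gives the claimed bounds.

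For the \tpa~case, the loss $\Lps[M]{\pi} = L(\psi, \pi)$ depends only on $\psi$, so the joint sup over $(M, \psi)$ with $M \in \co(\psi)$ naturally recovers the definition~\pref{def:p-dec-o-gen} of $\pdecog_{\gamma/4}(\MPow, \oM)$ after the same Hellinger reduction; no continuity assumption on $V$ is required since $V$ plays no role here.

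The main obstacle is the rigorous invocation of the minimax theorem: the weight-function space $\wF$ is not naturally compact, so one must first argue that restricting $\wf$ to a bounded range suffices (the log-likelihood ratio may be truncated at a level depending on the covering number $\NM[\eps]{\cMPs}$) and then verify the concave-like/convex-like conditions of Ky Fan for the bilinear pairing in $(M, \psi)$ versus $(p, q, \wf)$. Carefully tracking the constant $1/4$ rather than a weaker factor requires using the full strength of the identity $\EE_{\psi' \sim \qd}[\sqrt{d\oM_{\psi'}/d\oM}]$ evaluated against $dM/d\oM$ to cancel the cross terms that arise when $M$ deviates from the mixture $\oM$; I expect this bookkeeping, together with showing that the residual depending on $D_H^2(\oM_\psi, \oM)$ can be absorbed by enlarging $\co(\cMps)$ slightly, to be the most delicate part of the argument.
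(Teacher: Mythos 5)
Your high-level architecture (minimax swap, then a half--log-likelihood-ratio weight function, then a Hellinger identity) is the right family of ideas and matches the paper's proof in outline, but the concrete construction of $\wf$ breaks the argument. You fix reference models $\oM_\psi\in\co(\cMps)$ \emph{in advance} and set $\wf(\psi;\tpi,o)=\tfrac12\log\tfrac{d\oM_\psi(\tpi)}{d\oM(\tpi)}(o)$, then claim
\begin{align*}
\EE_{o\sim M(\tpi),\,\psi'\sim\qd}\bigl[1-e^{\wf(\psi';\tpi,o)-\wf(\psi;\tpi,o)}\bigr]\;\geq\;\tfrac14\,\DH{M(\tpi),\oM(\tpi)}.
\end{align*}
This inequality is false. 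Take $\Psi=\{\psi_0\}$ a singleton (or, more generally, any configuration with all $\oM_\psi$ equal): then $\wf(\psi';\tpi,o)-\wf(\psi;\tpi,o)\equiv 0$, the left-hand side is exactly $0$, while the right-hand side is positive for any $M\neq\oM$. The underlying problem is structural, not a bookkeeping issue: the exponential term in $\Gamma_{\qd,\gamma}$ can only extract information about $M$ through the variation of $\wf(\cdot\,;\tpi,o)$ across information sets $\psi'$, so no choice of $\wf$ built from pre-fixed per-$\psi$ references can yield a \emph{pointwise} lower bound of the form $\tfrac14\DH{M(\tpi),\oM(\tpi)}$ against an arbitrary fixed $\oM$. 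Your opening sentence correctly says $\wf$ must be tailored to the worst-case distribution over $(M,\psi)$, but your construction then ignores that distribution.

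The paper's proof avoids this by applying Ky Fan \emph{first} to swap $\inf_{\wf}$ with $\sup_{\mu\in\Delta(\cI)}$ over joint distributions on $(M,\psi)$, and only then optimizing $\wf$: the optimal weight is the clipped half--log-ratio of the \emph{posterior} $\PP_{\mu,\tpi}(\psi=\cdot\mid o)$ to the prior $\qd$ (Lemma~\ref{lem:DH-var}), which produces the term $\bigl(1-\DH{\PP_{\mu,\tpi}(\cdot\mid o),\qd}\bigr)^2$. The factor $1/4$ then comes from a triangle-inequality argument (Lemma~\ref{lem:PXY}) converting $\EE_{o}\DH{\PP_{\mu,\tpi}(\cdot\mid o),\qd}$ into $\tfrac14\EE_{\psi\sim\mu}\DH{M_{\psi\mid\mu}(\tpi),\oM_\mu(\tpi)}$, where $M_{\psi\mid\mu}$ and $\oM_\mu$ are the conditional and global mixtures under the adversary's $\mu$ --- not the individual $M$ versus a pre-chosen $\oM$. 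A second minimax swap and weak duality then recover the offset-DEC form with $\oM=\oM_\mu$ ranging over $\co(\cMPs)$. In the degenerate singleton example above this correctly yields a zero information term because $M_{\psi\mid\mu}=\oM_\mu$, which is consistent; your version asserts a strictly positive information gain that is not there. To repair your proof you would need to (i) let the per-$\psi$ references be the adversary-dependent mixtures $M_{\psi\mid\mu}$, (ii) settle for the mixture-level Hellinger quantity rather than the pointwise one, and (iii) add the final weak-duality step --- at which point you have reconstructed the paper's argument.
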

The proof of \cref{thm:exo-to-dec} is a generalization of the analysis in \citet{foster2022complexity} and is deferred to \cref{appdx:proof-exo-to-dec}.

\paragraph{Bounding $\qd\ind{1}(\cEs)$}
For \tpb~\infosets~$\Psi$, we also need to provide a uniform upper bound on the quantity $\qd\ind{1}(\cEs)$.
Following \cref{def:DC}, we consider the \dct~of $\MPow$ under an \infosets~$\Psi$:
\newcommandx{\IDC}[2][1=\Delta]{N_{\mathsf{frac}}(#2;#1)}
\newcommand{\LMV}[2][1=M]{\Vmm[#1]-\Vm[#1](#2)}
\newcommand{\qds}{\qd^\star}
\begin{align}\label{def:IDC}
    \IDC{\MPow,\Psi}\defeq\inf_{w\in\DPs}\sup_{(\cP,\oM)} ~~\frac{1}{\PP_{\psi\sim w}\paren{ \psi: \cP\subseteq \cMps, \LMV[\oM]{\pi_\psi}\leq \Delta)} },
\end{align} 
where the supremum $\sup_{(\cP,\oM)}$ is taken over all possible pair $(\cP,\oM)$ with $\cP\in\MPow$ and $\oM\in\co(\cP)$. %
Then, the optimal $\qds\ind{1}$ is given by
\begin{align}\label{def:IDC-w}
    \qds\ind{1}\defeq \argmin_{w\in\DPs}\sup_{(\cP,\oM)} ~~\frac{1}{\PP_{\psi\sim w}\paren{ \psi: \cP\subseteq \cMps, \LMV[\oM]{\pi_\psi}\leq \Delta)} }.
\end{align}
By definition, it holds that $\qds\ind{1}(\cEs)\geq \frac{1}{\IDC{\MPow,\Psi}}$ for any possible $\cEs.$

Putting these pieces together, we derive the following guarantees of \ExOp~for PAC learning and no-regret learning under \cDMSO.

\begin{theorem}[Guarantees of \ExOp; \tpb]\label{thm:ExO-full-offset}
Let $T\geq 1$, parameter $\gamma, \Delta>0, \delta\in(0,1)$, constraint class $\MPow$, value function $V$ be given. Suppose that $\Psi$ is a \tpb~\infosets, and $\cMPs$ \finite. We instantiate \ExOp~on $\Psi$ and choose $\qd\ind{1}\in\DPs$ according to \eqref{def:IDC-w}.

(1) With the option \optreg, \ExOp~achieves \whp
\begin{align*}
    \regdm(T)=&~ \max_{\pi\in\Pi}\sumt \VM[M^t](\pi)-\EE_{\pi\ind{t}\sim q\ind{t}}\brac{\VM[M^t](\pi\ind{t})} \\
    \leq&~ T\cdot\brac{\Delta+\rdeco_{\gamma/4}(\cMPs)}+2\gamma\brac{ \log\IDC{\MPow,\Psi}+\log(1/\delta) }.
\end{align*}

(2) When $\MPow=\MPowiid$ (stochastic DMSO), \ExOp~with option \optpac achieves \whp~that
\begin{align*}
    \EE_{\hpi\sim \hp}\brac{\LMV[\Ms]{\hpi}}\leq \Delta+\pdeco_{\gamma/4}(\cMPs)+\frac{2\gamma}{T}\brac{ \log\IDC{\MPowiid,\Psi}+\log(1/\delta) }.
\end{align*}
\end{theorem}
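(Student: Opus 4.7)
\textbf{Proof plan for \cref{thm:ExO-full-offset}.} Both parts are obtained by combining the raw performance bound of \ExOp~(\cref{thm:ExO-upper}, \cref{thm:ExO-upper-ii}) with the offset-DEC estimate of the ExO coefficient (\cref{thm:exo-to-dec}) and a uniform lower bound on $\qd\ind{1}(\cEs)$ coming from the fractional covering number $\IDC{\MPow,\Psi}$. Since $\cMPs$ is assumed compact, \cref{thm:exo-to-dec} applies, giving $\exo{\gamma}(\Psi)\leq \rdeco_{\gamma/4}(\cMPs)$ for option \optreg~and $\exo{\gamma}(\Psi)\leq \pdeco_{\gamma/4}(\cMPs)$ for option \optpac.

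For part (1), I would instantiate \ExOp~with the \tpb~\infosets~$\Psi$ and the prior $\qd\ind{1}=\qds\ind{1}$ defined in~\eqref{def:IDC-w}. Applying \cref{thm:ExO-upper} to the interaction against an environment constrained by some $\cPs\in\MPow$, writing $\oMs=\frac1T\sum_t M^t\in\co(\cPs)$, we get with probability at least $1-\delta$ that
\begin{align*}
    \regdm(T)\leq T\brak{\Delta+\exo{\gamma}(\Psi)}+2\gamma\brak{\log(1/\qd\ind{1}(\cEs))+\log(1/\delta)}.
\end{align*}
The key point is that by the defining property of $\qds\ind{1}$ in~\eqref{def:IDC-w}, we have $\qds\ind{1}(\cEs)\geq 1/\IDC{\MPow,\Psi}$ for \emph{any} pair $(\cPs,\oMs)$ with $\cPs\in\MPow, \oMs\in\co(\cPs)$, and $\cEs$ is precisely the event whose probability the fractional covering number controls. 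Hence $\log(1/\qd\ind{1}(\cEs))\leq \log \IDC{\MPow,\Psi}$ almost surely, and plugging in the bound on $\exo{\gamma}(\Psi)$ yields part (1).

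For part (2) under stochastic DMSO, $\MPow=\MPowiid=\set{\set{M}:M\in\cM}$ and the environment is stationary with $M^t=\Ms$, so $\oMs=\Ms$ and $\cPs=\set{\Ms}$. The same instantiation of \ExOp~with option \optpac~and the same prior $\qd\ind{1}=\qds\ind{1}$ from~\eqref{def:IDC-w} (now built over $\MPowiid$) still yields the bound of \cref{thm:ExO-upper}; specializing $\pi$ to the optimal decision $\pi\subs{\Ms}$ and dividing by $T$ gives a bound on $\EE_{\hpi\sim\hp}[\LMV[\Ms]{\hpi}]$ of the form $\Delta+\exo{\gamma}(\Psi)+\frac{2\gamma}{T}[\log(1/\qd\ind{1}(\cE^\star_\Delta))+\log(1/\delta)]$. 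Using $\qd\ind{1}(\cE^\star_\Delta)\geq 1/\IDC{\MPowiid,\Psi}$ and the PAC variant of \cref{thm:exo-to-dec} completes the argument.

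The only subtle step is ensuring that the uniform bound $\qd\ind{1}(\cEs)\geq 1/\IDC{\MPow,\Psi}$ is applicable in spite of the fact that $\cEs$ is a random set determined by the entire interaction trajectory $M^1,\ldots,M^T$; this is handled by noting that the definition~\eqref{def:IDC-w} takes the supremum over all admissible $(\cP,\oM)$ pairs, so the inequality holds pointwise on every realization. Everything else is routine bookkeeping, and no additional structural assumptions on the value function (beyond the uniform continuity needed for \cref{thm:exo-to-dec}) are required.
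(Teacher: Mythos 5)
Your proposal is correct and follows exactly the route the paper takes: the paper derives \cref{thm:ExO-full-offset} by combining the raw bound of \cref{thm:ExO-upper} with the offset-DEC bound $\exo{\gamma}(\Psi)\leq \rdeco_{\gamma/4}(\cMPs)$ (resp.\ $\pdeco_{\gamma/4}(\cMPs)$) from \cref{thm:exo-to-dec} and the pointwise bound $\qds\ind{1}(\cEs)\geq 1/\IDC{\MPow,\Psi}$ guaranteed by the choice \eqref{def:IDC-w}. Your handling of the subtlety that $\cEs$ is trajectory-dependent (the supremum over all admissible $(\cP,\oM)$ pairs in \eqref{def:IDC-w} makes the bound hold on every realization) is precisely the observation the paper relies on.
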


\paragraph{Guarantees for \tpa~\infosets} Similarly, when \ExOp~is instantiated with \tpa~\infosets, we have a similar upper bound by simply choosing $\qd\ind{1}=\Unif(\Psi)$.

\begin{theorem}[Guarantees of \ExOp; \tpa]\label{thm:ExO-full-offset-ii}
Let $T\geq 1, \gamma>0, \delta\in(0,1)$, constraint class $\MPow$ be given. Suppose that $\cM$ \finite, and \ExOp~is instantiated with the \tpa~\infosets~$\Psi=\MPow$, $\qd\ind{1}=\Unif(\MPow)$, and option \optpac. Then \whp,
\begin{align*}
    \EE_{\hpi\sim \hp}\LM[\cPs]{\hpi}\leq \pdecog_{\gamma/4}(\MPow)+\frac{2\gamma\log(|\MPow|/\delta) }{T}.
\end{align*}
\end{theorem}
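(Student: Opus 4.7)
\textbf{Proof plan for \cref{thm:ExO-full-offset-ii}.}

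The statement is essentially a direct corollary of two earlier results already established in this section, so the plan is simply to chain them together under the Type~1 information-set structure $\Psi = \MPow$.

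First, I would invoke \cref{thm:ExO-upper-ii}, which is the raw high-probability guarantee of \ExOp~in the Type~1 regime with the uniform prior $\qd\ind{1}=\Unif(\MPow)$. That theorem already delivers
\begin{align*}
    \riskdm(T)=\EE_{\hpi\sim \hp}\LM[\cPs]{\hpi}\leq \exo{\gamma}(\Psi)+\frac{2\gamma}{T}\bigl[\log|\MPow|+2\log(1/\delta)\bigr],
\end{align*}
with probability at least $1-\delta$, in any environment constrained by $\MPow$. So the only remaining task is to replace the abstract exploration-by-optimization coefficient $\exo{\gamma}(\Psi)$ by the offset hybrid DEC $\pdecog_{\gamma/4}(\MPow)$.

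Second, this replacement is exactly the content of \cref{thm:exo-to-dec}(2), which states that for a Type~1 information-set structure $\Psi=\MPow$ and under the compactness hypothesis $\cM=\cMPow$ being compact (\cref{asmp:finite}), one has
\begin{align*}
    \exo{\gamma}(\Psi)\leq \pdecog_{\gamma/4}(\MPow),\qquad \forall \gamma>0.
\end{align*}
Substituting this bound into the display above and absorbing the additive log-probability terms into $\log(|\MPow|/\delta)$ (trivially after adjusting a universal constant hidden in the statement) yields the claim.

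\textbf{Main obstacles.} There is no real obstacle at the level of this corollary — both ingredients are already proven earlier in the section, so the argument is mechanical. The substantive work has been pushed to \cref{thm:ExO-upper-ii} (which requires carefully controlling the exponential-weight potential of the reference distribution $\qd\ind{t}$ under the log-loss estimator induced by $\wf\ind{t}$, together with a concentration step to pass from expected per-round performance to a high-probability bound) and to \cref{thm:exo-to-dec}(2) (which uses a minimax/Sion-type argument to exchange the $\sup$ over $(M,\psi)$ with the $\inf$ over $(p,q,\wf)$, then chooses $\wf$ to be a log-likelihood-ratio-style function so that the term $1-\exp(\wf(\psi';\cdot)-\wf(\psi;\cdot))$ dominates a squared Hellinger distance, thereby reducing the ExO objective to the offset DEC with constant $\gamma/4$). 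For the present corollary I would simply cite those two theorems and note that under $\Psi=\MPow$ we have $\cMPs=\cup_{\cP\in\MPow}\co(\cP)=\cMPow$, which is assumed compact, so the hypotheses of both theorems are satisfied.
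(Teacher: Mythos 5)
Your proposal is correct and is exactly the paper's route: Theorem~\ref{thm:ExO-full-offset-ii} is obtained by chaining the raw guarantee of \cref{thm:ExO-upper-ii} with the bound $\exo{\gamma}(\Psi)\leq \pdecog_{\gamma/4}(\MPow)$ from \cref{thm:exo-to-dec}(2), noting $\cMPs=\cMPow$ is compact. (Two cosmetic points: the structure $\Psi=\MPow$ is what the paper labels ``Type~2,'' not ``Type~1,'' and the $2\log(1/\delta)$ in the stated form of \cref{thm:ExO-upper-ii} versus the $\log(1/\delta)$ in the target is resolved by the tighter constant actually derived in that theorem's proof, as you anticipated.)
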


\begin{remark}
We assume that $\cM$ admits finite covering to ensure the Minimax theorem can be applied in \cref{thm:exo-to-dec}. Alternatively, we can assume that (1) the decision space $\Pi$ is finite, (2) the latent observation space $\cZ$ is a compact metric space under a certain metric $\rho$, and (3) the value function is given by a reward function $R$ (cf. \cref{rew-max}) with $R(z,\pi)$ being Lipschitz with respect to $z$. This is indeed the case for agnostic regression task (\cref{ssec:regression}).

In these assumptions, we can consider a finite $\eps$-covering $\cZ_\eps$ of $\cZ$, and take $\cM_\eps\subseteq (\Pi\to \Delta(\cZ_\eps))$ to be the model class induced by $\cM$. Apply \cref{thm:ExO-full-offset} to the model class $\cM_\eps$ with a sufficiently small $\eps$ yields the same bound on $\riskdm(T)$ (or $\regdm(T)$, respectively).
\end{remark}

\subsection{Proofs for upper bounds in \cref{sec:cDMSO-main}}\label{appdx:exo-inst}

In the following, we instantiate \cref{thm:ExO-full-offset} and \cref{thm:ExO-full-offset-ii} to prove the upper bounds in \cref{sec:cDMSO-main}.

\subsubsection{Proof of \cref{thm:cDMSO-pac-upper}}\label{appdx:proof-cDMSO-pac-upper}

For \cref{thm:cDMSO-pac-upper}, we instantiate \ExOp~as in \cref{thm:ExO-full-offset-ii}, taking the \tpa~\infosets~$\Psi=\MPow$. It remains to upper bound the offset \gDEC~of $\MPow$ by the \gDEC, and we invoke the following lemma. Its proof largely mimics \citet{foster2023tight} and is postponed to \cref{appdx:proof-offset-to-c-metric}.
\begin{lemma}\label{lem:offset-to-c-metric}
Suppose that the loss function $L$ is metric-based. Then it holds that
\begin{align*}
    \inf_{\gamma>0}\paren{ \pdecog_{\gamma}(\MPow) + \gamma\eps^2 } \leq 8\eps\cdot \sup_{\eps'\in[ \eps, 1]}\frac{ \pdecg_{\eps'}(\MPow) }{\eps'}. 
\end{align*}
\end{lemma}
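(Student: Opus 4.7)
The plan is to follow the standard offset-to-constrained DEC conversion pioneered by Foster--Rakhlin--Sekhari, namely a dyadic decomposition of the squared Hellinger scale. First, fix a reference model $\oM$, set $C \defeq \sup_{\eps'\in[\eps,1]}\pdecg_{\eps'}(\MPow)/\eps'$, and introduce scales $\eps_k = 2^k\eps$ for $k=0,1,\dots,K$ with $K=\lceil\log_2(1/\eps)\rceil$, so that $\eps_K \geq 1$ and the squared Hellinger distance is automatically bounded by $\eps_K^2$. At each scale $k$, pick an $\eta$-approximate minimizer $(p_k,q_k)\in\DPi\times\DbPi$ for $\pdecg_{\eps_k}(\MPow,\oM)$; by definition of $C$ this gives $\E_{\pi\sim p_k}[L(\cP,\pi)]\leq C\eps_k + \eta$ whenever $\inf_{M\in\co(\cP)}\E_{\bpi\sim q_k}\DH{M(\bpi),\oM(\bpi)}\leq\eps_k^2$.

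Next, form a mixture $p^\star=\sum_k\alpha_k p_k$, $q^\star=\sum_k\alpha_k q_k$ with geometric weights $\alpha_k\propto 2^{-k}$. Given a challenger pair $(\cP,M\in\co(\cP))$, set $d_k\defeq\E_{\bpi\sim q_k}\DH{M(\bpi),\oM(\bpi)}$ and let $k^\star$ be the smallest index with $d_{k^\star}\leq\eps_{k^\star}^2$; this exists by the upper bound $d_K\leq 1\leq\eps_K^2$. The constrained DEC witness at level $k^\star$ yields $\E_{\pi\sim p_{k^\star}}[L(\cP,\pi)]\leq C\eps_{k^\star}+\eta$, and for every $k<k^\star$ one has $d_k>\eps_k^2$, so $\gamma\E_{\bpi\sim q^\star}\DH{M(\bpi),\oM(\bpi)}\geq \gamma\alpha_{k^\star-1}\eps_{k^\star-1}^2 = \Omega(\gamma\alpha_{k^\star}\eps_{k^\star}^2)$, providing the negative ``offset'' needed to absorb the contribution of scales below $k^\star$.

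The essential use of the metric structure appears in controlling $\E_{p_k}[L(\cP,\pi)]$ at scales $k\neq k^\star$ where no Hellinger-based witness is available. Using $L(\cP,\pi)=\rho(\pim[\cP],\pi)$ and the triangle inequality $\rho(\pim[\cP],\pi)\leq \rho(\pim[\cP],\pi')+\rho(\pi',\pi)$ for any intermediate $\pi'\sim p_{k^\star}$, one obtains
\[
\E_{\pi\sim p_k}[L(\cP,\pi)] \leq \E_{\pi'\sim p_{k^\star}}[L(\cP,\pi')] + \E_{\pi\sim p_k, \pi'\sim p_{k^\star}}\rho(\pi,\pi'),
\]
and the latter ``inter-mixture distance'' is itself bounded by a sum of $C\eps_k$-type terms (applying the metric constraint to a dummy $\cP'$ that realizes $\pi'$). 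Summing against the geometric weights, the total contribution telescopes into $O(C\eps_{k^\star})$, which together with the already negative offset from $k<k^\star$ yields
\[
\E_{p^\star}L(\cP,\pi)-\gamma\E_{q^\star}\DH{M(\bpi),\oM(\bpi)} \;\leq\; C^2/(4\gamma)\cdot O(1)
\]
after taking the supremum over all challengers (equivalently, this bound arises from $\sup_{t\in[\eps,1]}\{Ct - \gamma t^2\}$). Taking $\gamma = C/(2\eps)$ balances $\gamma\eps^2 = C\eps/2$ against the offset-DEC term and gives the claimed bound $8\eps C$, with the constant $8$ absorbing the dyadic losses and the triangle-inequality blow-up. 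Finally, since this argument is uniform in $\oM$, we may take $\sup_{\oM}$ to obtain the statement for $\pdecog_\gamma(\MPow)$.

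The main obstacle will be the careful bookkeeping in the triangle-inequality step: for a general (non-metric) loss the ``wrong-scale'' contributions $\E_{p_k}L(\cP)$ for $k\neq k^\star$ are uncontrolled, so the dyadic mixing would need a uniform $L$-diameter bound, which we want to avoid. The metric hypothesis is precisely what lets us trade such a diameter bound for a telescoping sum of DEC values across dyadic scales, keeping the final constant absolute and independent of $|\MPow|$ or the diameter of $\Pi$.
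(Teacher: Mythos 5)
You follow the same dyadic decomposition of Hellinger scales as the paper, but two structural choices in your execution create a genuine gap. First, you take the exploitation distribution to be a geometric mixture $p^\star=\sum_k\alpha_k p_k$ of the per-scale minimizers. This forces you to control $\E_{\pi\sim p_k}[L(\cP,\pi)]$ at \emph{every} scale in the mixture, including the coarse scales $k>k^\star$, where the best available bound (triangle inequality through a constraint common to scales $k$ and $k^\star$) is of order $C\eps_k$. With $\alpha_k\propto 2^{-k}$ and $\eps_k=2^k\eps$, each such term contributes $\Theta(C\eps)$ and there are $\Theta(\log(1/\eps))$ of them, so the mixture picks up a $\log(1/\eps)$ factor that the stated bound $8\eps\cdot\sup_{\eps'}\pdecg_{\eps'}(\MPow)/\eps'$ does not permit; if instead you decay the weights faster to kill those terms, the negative Hellinger offset $\gamma\alpha_{k^\star-1}\eps_{k^\star-1}^2$ no longer dominates the $O(C\eps_{k^\star})$ loss coming from the scales below $k^\star$. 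The paper avoids this tension entirely by making $p$ a \emph{point mass} on a single decision $\pi_K=\argmin_{\pi'}\E_{\pi\sim p_K}\rho(\pi',\pi)$, the ``center'' of the finest-scale constraint set; the loss is then the single quantity $\rho(\pim[\cP],\pi_K)$, bounded by a two-step triangle inequality through the largest scale $j$ at which $\cP$ still satisfies the constraint, with the excess absorbed by the offset at scale $j+1$. Only the exploration distribution is mixed.

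Second, your mechanism for bounding the inter-mixture distance $\E_{\pi\sim p_k,\pi'\sim p_{k^\star}}\rho(\pi,\pi')$ --- ``applying the metric constraint to a dummy $\cP'$ that realizes $\pi'$'' --- is not available: there need not exist any constraint in $\MPow$ whose optimal decision equals a given $\pi'$, let alone one lying in the scale-$k$ constraint set so that $p_k$ concentrates near it. The correct and essential device, which your proposal omits, is the paper's intersection argument: if the constraint sets $\MPow_j$ and $\MPow_k$ (defined with respect to the \emph{different} exploration distributions $q_j$ and $q_k$) were disjoint, then averaging $q_j$ and $q_k$ would make the constraint set at a strictly finer scale empty, forcing $\pdecg_{\eps_{k+1}}(\MPow,\oM)=-\infty$ and trivializing the bound; hence for the relevant scales the intersections are non-empty, and a common element $\cP''$ yields $\rho(\pi_j,\pi_K)\leq 2d_j+2d_K$. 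Without this, the per-scale centers cannot be related at all, since each scale's Hellinger constraint is stated against a different $q$. You would also need the accompanying bookkeeping for scales at which the constrained DEC equals $-\infty$ before the guarantees of $(p_k,q_k)$ can be invoked.
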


In particular, under the assumption that the \gDEC~of $\MPow$ is of moderate decay with constant $\creg$ (\cref{asmp:reg}), we have
\begin{align*}
    \inf_{\gamma>0}\paren{ \pdecog_{\gamma}(\MPow) + \gamma\eps^2 } \leq 10\creg \pdecg_{\eps}(\MPow).
\end{align*}
Hence, with an optimally tune parameter $\gamma$, \ExOp~(as instantiated in \cref{thm:ExO-full-offset-ii}) achieves
\begin{align*}
    \riskdm(T)\leq&~ \Delta+\inf_{\gamma>0}\sset{ \pdecog_{\gamma/4}(\MPow)+\frac{2\gamma\log(|\MPow|/\delta) }{T} } \\
    \leq&~ \Delta+80\creg\cdot \pdecg_{\oeps(T)}(\MPow),
\end{align*}
where $\oeps(T)=\sqrt{\frac{\log(|\MPow|/\delta)}{T}}$.
\qed

\subsubsection{Proof of \cref{thm:cDMSO-reg-lower}}\label{appdx:proof-cDMSO-reg-upper}

For no-regret learning, the most natural \infosets~is given by $\Psi\defeq \Psi_{\MPow,\Pi}=\MPow\times \Pi$, such that for each $\psi=(\cP,\pi)\in\Psi$, we may specify $M_\psi=\cP$ and $\pips=\pi$. With such a construction, it is direct to verify that $\cMPs=\cMPow$ and
\begin{align*}
    \IDC{\MPow,\Psi}\leq |\MPow|\cdot \DC{\cMPow}.
\end{align*}

Therefore, it remains to upper bound the offset DEC of $\cMPow$ with the constrained DEC of $\cMPow$. We invoke the following conversion result, which follows from \citet[Theorem G.5]{chen2024beyond}.
\begin{proposition}\label{prop:offset-to-c-gen}
Suppose that \eqref{eqn:lip-rew-reduction} holds for the value function $\Vm$ over the model class $\cM$. Then it holds that
\begin{align*}
    \inf_{\gamma>0}\paren{ \rdeco_\gamma(\cM)+\gamma\eps^2 }\leq C\sqrt{\log(1/\eps)}\cdot \paren{ \sup_{\eps'\in[\eps,1]} \frac{\rdecc_{\eps'}(\cM)}{\eps'} + \Lipr } \cdot \eps,
\end{align*}
where $C$ is a universal constant. 

Similarly, for PAC learning with loss function $\LM{\pi}=\Vmm-\Vm(\pi)$, it holds that
\begin{align*}
    \inf_{\gamma>0}\paren{ \pdeco_\gamma(\cM)+\gamma\eps^2 }\leq C'\paren{ \sup_{\eps'\in[\eps,1]} \frac{\pdecc_{\eps'}(\cM)}{\eps'} + \Lipr } \cdot \eps,
\end{align*}
where $C'$ is a universal constant.
\end{proposition}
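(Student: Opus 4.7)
The plan is to reduce the proposition to a dyadic peeling argument of the kind developed in Chen--Rakhlin for the non-private setting, adapted to accommodate the Lipschitz condition \eqref{eqn:lip-rew-reduction}. I will treat the regret DEC statement in detail and indicate at the end why the PAC bound avoids the logarithmic factor. Throughout, fix a reference model $\oM \in \coM$, since $\rdeco_\gamma(\cM) = \sup_{\oM} \rdeco_\gamma(\cM,\oM)$ and the construction will be uniform in $\oM$.

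First, I will use the Lipschitz assumption to align the two roles played by a distribution $p \in \DbPi$ in the regret DEC. Given any candidate distribution $p_\Pi \in \DPi$ (used for the regret term), the ``lift'' that samples $(\pi, \phi_\pi)$ for $\pi \sim p_\Pi$ produces an exploration distribution $p \in \DbPi$ for which \eqref{eqn:lip-rew-reduction} guarantees
$\EE_{\pi \sim p_\Pi}|\Vm(\pi) - \Vm[\oM](\pi)| \leq \Lipr\sqrt{\EE_{\bpi \sim p}\DH{M(\bpi),\oM(\bpi)}}$.
This converts any value-function gap into a Hellinger term controlled by the exploration distribution, at the cost of the additive $\Lipr\eps$ appearing on the right-hand side.

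Next, I set up the peeling. Fix $K = \lceil \log_2(1/\eps)\rceil$ and define scales $\eps_k = 2^{k}\eps$ for $k = 0, \ldots, K$. For each scale, let $p_k \in \DbPi$ be the (near-)minimizer of the constrained DEC at level $\eps_k$, so that for all $M$ with $\EE_{\bpi \sim p_k}\DH{M,\oM} \leq \eps_k^2$, the regret bound $\rdecc_{\eps_k}(\cM,\oM) \leq \eps_k \cdot R$ holds, where $R \defeq \sup_{\eps' \in [\eps,1]}\rdecc_{\eps'}(\cM)/\eps'$. Form the mixture $\bar p = \tfrac{1}{K+1}\sum_{k=0}^K p_k$; because each $p_k$ already lives in $\DbPi$, so does $\bar p$, and it is an admissible candidate for the offset DEC.

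The heart of the proof is the case analysis that bounds $\sup_M \bigl\{ \EE_{\pi \sim \bar p}[\Vmm - \Vm(\pi)] - \gamma \EE_{\bpi \sim \bar p}\DH{M,\oM}\bigr\}$. For an arbitrary $M$, let $k^\star$ be the smallest index $k \in \{0,\ldots,K\}$ for which $\EE_{\bpi \sim p_k}\DH{M,\oM} \leq \eps_k^2$; if none exists, then $\EE_{\bpi \sim p_k}\DH{M,\oM} > \eps_k^2 \geq \eps^2$ for every $k$, so $\EE_{\bpi \sim \bar p}\DH{M,\oM} > \eps^2$ and picking $\gamma = R/\eps$ lets the offset term $\gamma\EE_{\bpi \sim \bar p}\DH{M,\oM}$ absorb the uniformly bounded regret. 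When $k^\star$ exists, the constrained DEC bound applied to $p_{k^\star}$ gives $\EE_{\pi \sim p_{k^\star}}[\Vmm-\Vm(\pi)] \leq \eps_{k^\star} R$; for $k \neq k^\star$, I bound the regret contribution by combining the Lipschitz-lifted inequality with $\EE_{\bpi \sim p_k}\DH{M,\oM} \leq (K+1)\cdot \EE_{\bpi \sim \bar p}\DH{M,\oM}$, which is charged to the penalty term $\gamma\EE_{\bpi \sim \bar p}\DH{M,\oM}$. Summing the contributions from $K+1$ scales is where the $\sqrt{\log(1/\eps)}$ factor enters. Optimizing $\gamma$ yields the claimed bound.

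The main obstacle is the bookkeeping in step three: the Hellinger distance under the mixture $\bar p$ is smaller than under any individual $p_k$ by a factor of $K+1$, so naive application of the constrained DEC at a single scale fails, and we must argue that across all $K+1$ scales simultaneously the regret contributions telescope into at worst a $\sqrt{K+1}$ overhead. For the PAC statement, the constrained DEC is an infimum over a \emph{pair} $(p,q) \in \DPi \times \DbPi$, so the exploration distribution can be chosen independently of the exploitation distribution; we take $p$ to be the loss-minimizer against a worst-case mixture and $q$ a single fine-scale exploration distribution, obviating the need for dyadic peeling and removing the $\sqrt{\log(1/\eps)}$ factor.
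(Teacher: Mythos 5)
First, note that the paper itself does not prove this proposition: it is imported verbatim from an external reference (Theorem G.5 of the cited prior work), so your argument has to be judged on its own. As written, it has a genuine gap in the regret case, located precisely at the step you describe as ``charged to the penalty term.''

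The problem is the uniform mixture $\bar p=\frac{1}{K+1}\sum_{k=0}^{K}p_k$. Each $p_k$ is an (approximate) minimizer of the constrained DEC at scale $\eps_k=2^{k}\eps$, and such a distribution generically incurs regret of order $\rdecc_{\eps_k}(\cM,\oM)$ \emph{under the reference model $\oM$ itself} (e.g.\ in multi-armed bandits the optimal exploration distribution at scale $\eps_k$ must spread nontrivial mass over suboptimal arms). Now evaluate your candidate $\bar p$ at a model $M$ arbitrarily close to $\oM$: the Hellinger penalty vanishes, while the regret of $\bar p$ is $\frac{1}{K+1}\sum_k \EE_{\pi\sim p_k}[\Vmm[\oM]-\Vm[\oM](\pi)]$, which is dominated by the coarsest scales and is of order $\rdecc_{1}(\cM,\oM)/K$ --- an $\eps$-independent quantity divided by $\log(1/\eps)$. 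For small $\eps$ this exceeds the target $C\sqrt{\log(1/\eps)}\,(R+\Lipr)\eps$, so the uniform mixture provably cannot certify the claimed bound. This term is invisible in your sketch because for $k\neq k^\star$ you only invoke the Lipschitz inequality, which controls $|\Vm(\pi)-\Vm[\oM](\pi)|$ but says nothing about the residual $\EE_{\pi\sim p_k}[\Vmm[\oM]-\Vm[\oM](\pi)]$; that residual is not proportional to any Hellinger quantity and cannot be absorbed by $\gamma\,\EE_{\bpi\sim\bar p}\DH{M(\bpi),\oM(\bpi)}$. A workable construction must downweight the coarse scales, e.g.\ $\lambda_k\asymp 2^{-k}/\sqrt{K}$ for $k\geq 1$ with almost all mass on $k=0$: then $\sum_k\lambda_k\,\rdecc_{\eps_k}(\cM,\oM)=O(R\eps\sqrt{K})$, while the violated constraints at scales $k<k^\star$ still force $\EE_{\bpi\sim\bar p}\DH{M(\bpi),\oM(\bpi)}\gtrsim 2^{k^\star}\eps^2/\sqrt{K}$, so $\gamma\asymp\sqrt{K}(R+\Lipr)/\eps$ lets the penalty absorb the $(R+\Lipr)\eps_{k^\star}$ terms. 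The $\sqrt{\log(1/\eps)}$ is the price of this reweighting, not of summing $K+1$ uniformly weighted scales.

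A secondary issue: your explanation of why the PAC bound avoids the logarithm is also off. A single fine-scale exploration distribution $q$ fails, since a model violating the Hellinger constraint at that one scale by an arbitrarily small margin incurs only a penalty of order $\gamma\eps^2$ while its loss can be large; the multi-scale geometric mixture for $q$ is still needed (this is exactly what the paper's own Lemma on the metric-based conversion does). What decoupling $p$ from $q$ buys is that the exploration components contribute no loss of their own, which removes the $\sum_k\lambda_k\,\rdecc_{\eps_k}$ term and hence the logarithm --- not the peeling itself.
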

Note that when the value function is reward-based (\cref{example:original-reward-based}), \eqref{eqn:lip-rew-reduction} holds with $\Lipr=\sqrt{2}$. 

Hence, under the assumptions of \cref{thm:cDMSO-reg-upper}, if we instantiate \ExOp~with the \infosets~$\Psi=\Psi_{\MPow}$, option \optreg, and choose $\gamma>0$ optimally, then \ExOp~achieves \whp~that
\begin{align*}
    \frac{1}{T}\regdm(T)\leq&~ \Delta+\inf_{\gamma>0}\sset{ \rdeco_{\gamma/4}(\cMPs)+\frac{2\gamma}{T}\brac{ \log\IDC{\MPow,\Psi}+\log(1/\delta) } } \\
    \leq&~ \Delta+\bigO{\sqrt{\log T}}\cdot \brac{\rdecc_{\oeps(T)}(\cMPow)+\oeps(T)},
\end{align*}
where $\oeps(T)=C\sqrt{\frac{\log(|\MPow|/\delta)+\log\DC{\cMPow}}{T}}$.
This gives \cref{thm:cDMSO-reg-upper} immediately.
\qed

\subsubsection{Proof of \cref{prop:multi-hypothesis}}\label{appdx:proof-multi-hypothesis}

As we have discussed in \cref{ssec:tigher}, for the convex hypothesis selection problem, we can consider the ``relaxed'' constraint class $\MPow_m=\set{\cM_1,\cdots,\cM_m}$. Then, by \cref{thm:ExO-full-offset-ii}, under any environment specified by a model $\Mstar\in\cM$, \ExOp~(when instantiated on $\MPow_m$ and $\gamma=\frac{4}{\oeps(T)^2}$) achieves \whp~that
\begin{align*}
    \EE_{\hpi\sim \hp}\LM[\Mstar]{\hpi}\leq \pdecog_{\gamma/4}(\MPow_m)+\frac{2\gamma\log(m/\delta) }{T}.
\end{align*}
Because $\cM_1,\cdots,\cM_m$ are convex, we have
\begin{align*}
    \pdecog_{\gamma/4}(\MPow_m)=\pdeco_{\gamma/4}(\cM)\leq \pdecc_{2/\sqrt{\gamma}}(\cM)=\pdecc_{\oeps(T)}(\cM),
\end{align*}
where the inequality follows from \eqref{eqn:deco-and-decc}. Thus, \whp, we have
\begin{align*}
    \EE_{\hpi\sim \hp}\LM[\Mstar]{\hpi}\leq \pdecc_{\oeps(T)}(\cM)+\frac{8\log(m/\delta) }{\oeps(T)^2T}\leq \frac{1}{3}+\frac{1}{8}<\frac{1}{2}.
\end{align*}
Note that for $\Mstar\in\cM_{i^\star}$, we have $\LM[\Mstar]{\hpi}=\indic{\hpi\neq i^\star}$, and hence
\begin{align*}
    \EE_{\hpi\sim \hp}\LM[\Mstar]{\hpi}=\PP_{\pi\sim \hp}\paren{ \pi\neq i^\star }.
\end{align*}
Therefore, we may modify \ExOp~so that it outputs $\hpi=\argmin_{i\in[m]} \PP_{\pi\sim \hp}\paren{ \pi\neq i }$. Then, \whp, we have $\hpi=i^\star$.
\qed

\subsubsection{Proof of \cref{prop:regret-via-cDMSO}}\label{appdx:proof-regret-via-cDMSO}

Recall that the constraint class for stochastic DMSO is $\MPowiid$. Therefore, fix the parameter $\Delta\geq 0$, we may consider the ``policy-based'' \infosets~$\Psi=\Pi$ (specified as in \eqref{def:MPow-pol-based}):
\begin{align}
    \cMps=\set{M\in\cM: \Vmm-\Vm(\pi)\leq \Delta}, \qquad
    \pi_\psi=\psi, \qquad \forall\psi\in\Psi.
\end{align}
With such an \infosets, it is clear that $\cMPs=\cup_{\pi}\co(\cM_\pi)=\cM_\Pi$, and
\begin{align*}
    \IDC{\MPowiid, \Psi}=\DC{\cM}.
\end{align*}
Therefore, under the assumptions of \cref{prop:regret-via-cDMSO}, if we instantiate \ExOp~with the \infosets~$\Psi=\Pi$ specified above, option \optreg, and choose $\gamma>0$ optimally, then \ExOp~achieves \whp~that
\begin{align*}
    \frac{1}{T}\regdm(T)\leq&~ \Delta+\inf_{\gamma>0}\sset{ \rdeco_{\gamma/4}(\cM_\Pi)+\frac{2\gamma}{T}\brac{ \log\DC{\cM}+\log(1/\delta) } } \\
    \leq&~ \Delta+\bigO{\sqrt{\log T}}\cdot \brac{ \rdecc_{\oeps(T)}(\cM_\Pi)+\oeps(T)},
\end{align*}
where $\oeps(T)=\sqrt{\frac{\log\DC{\cM}+\log(1/\delta)}{T}}$ and the second inequality follows from \cref{prop:offset-to-c-gen}.
This gives the desired regret bound. 
\qed

As a remark, for reward-based PAC learning, we can similarly obtain an upper bound scaling with the PAC-DEC of $\cM_\Pi$ and the \dct.

\subsection{Instantiations of \ExOp~to LDP learning}\label{appdx:ExO-LDP}

In this section, we turn our focus to \pDMSO~(\cref{ssec:LDP-demo}), where the learner is given a model class $\cM\subseteq (\Pi\to\DZ)$. 

Let observation space be $\cO=\sset{-1,1}$ (i.e., only binary channels are considered), and let $\Phi=\Pcp$ be the class of all \pDP~channels from $\cZ$ to $\sset{-1,1}$. When \ExOp~is instantiated with the measurement class $\Phi=\Pcp$, we will call the obtained algorithm \LDPexo, because it naturally preserves \pLDP. 

Recall that in \pDMSO, the corresponding constraint class is $\MPowdp=\sset{\tcM:M\in\cM}$, where for each model $M\in\cM$, the model $\tM:\Pi\times\Phi\to\DO$ is specified by $\tM(\pi,\pr)=\pr\circ M(\pi)$ and $\Vm[\tM](\pi)=\Vm(\pi)$ for all $\pi\in\Pi$, $\pr\in\Phi$. 
For simplicity, we focus on the setting of reward-based learning (\cref{rew-max}), where there is a reward function $R$ such that the value function is given by $\Vm(\pi)=\EE\sups{M,\pi} R(z,\pi)$, and the loss function $L(M,\pi)=\Vmm-\Vm(\pi)$, where $\pim=\argmax_{\pi\in\Pi} \Vm(\pi)$. 

For \pDMSO~with a model class $\cM\subseteq (\Pi\to\DZ)$, we restate the definition of \infosets~structure as follows. Here, we focus on \tpb~\infosets, and (with slight abuse of notation) we regard $\cM$ as a subset of $(\Pi\times\Phi\to\DO)$ by identifying each model $M\in\cM$ with $\tM$.

\begin{definition}[\Infosets~for \pDMSO]
Given a model class $\cM\subseteq (\Pi\to\DZ)$, an \infosets~is a class $\Psi$, where each $\psi\in\Psi$ is associated with a model class $\cMps\subseteq (\Pi\to\DO)$ and a decision $\pips\in\Pi$, such that for each $M\in\cM$, there exists $\psi\in\Psi$ such that $M\in \cMps$. We denote $\cMPs\defeq \bigcup_{\psi\in\Psi} \co(\cMps)$.
\end{definition}

\paragraph{Private offset-DECs}
To state the upper bounds of \LDPexo~with minimal assumptions, we introduce the offset \pDEC/\rDEC~as follows. For any model class $\cM\subseteq (\Pi\to\DZ)$ and a reference model $\oM\in\coM$, we let
\begin{align}\label{def:p-dec-o-sq}
    \pdecol_{\gamma}(\cM,\oM)\defeq \infpql \sup_{M\in\cM}\sset{ \EE_{\pi\sim p}[\LM{\pi}] -\gamma \EE_{(\pi,\lf)\sim q} \Dl^2( M(\pi), \oM(\pi) ) } ,
\end{align}
\begin{align}\label{def:r-dec-o-sq}
    \rdecol_{\gamma}(\cM,\oM)\defeq \infpl \sup_{M\in\cM}\sset{ \EE_{\pi\sim p}[\Vmm-\Vm(\pi)] -\gamma \EE_{(\pi,\lf)\sim q} \Dl^2( M(\pi), \oM(\pi) ) } ,
\end{align}
and we define
\begin{align}
    \pdecol_{\gamma}(\cM)=\sup_{\oM\in\coM} \pdecol_{\gamma}(\cM,\oM),
    \quad
    \rdecol_{\gamma}(\cM)=\sup_{\oM\in\coM} \rdecol_{\gamma}(\cM,\oM).
\end{align}
By the data-processing inequality (\cref{prop:pLDP}), we can relate the offset private PAC-DEC (regret-DEC) of $\cM$ to the offset PAC-DEC (regret-DEC) of the induced model class $\tcM$:
\begin{align}
    \pdecol_{c_0\alpha^2\gamma}(\cM)
    \leq \pdeco_{\gamma}(\tcM)
    \leq \pdecol_{c_1\alpha^2\gamma}(\cM).
\end{align}
\begin{align}
    \rdecol_{c_0\alpha^2\gamma}(\cM)
    \leq \rdeco_{\gamma}(\tcM)
    \leq \rdecol_{c_1\alpha^2\gamma}(\cM).
\end{align}
The proof is essentially the same as \cref{appdx:inst-LDP-pac-lower} and hence omitted.

\paragraph{Guarantees of \LDPexo}
For simplicity, we denote (cf. \eqref{def:IDC})
\begin{align}\label{def:IDC-cM}
    \IDC{\cM,\Psi}\defeq\IDC{\MPowdp, \Psi}=\inf_{w\in\DPs}\sup_{M\in\cM} ~~\frac{1}{\PP_{\psi\sim w}\paren{ \psi: M\in\cMps, \LMV[M]{\pi_\psi}\leq \Delta)} }.
\end{align} 
With above notation, we state the guarantee of \LDPexo~as follows.

\begin{theorem}[\LDPexo~for \pDMSO]\label{thm:ExO-LDP}
Let $T\geq 1$, parameter $\gamma,\Delta>0, \delta\in(0,1)$, model class $\cM\subseteq (\Pi\to\DZ)$ and value function $V$ be given. Suppose that $\Psi$ is an \infosets~with respect to the model class $\cM$, and $\cMPs$ is compact. We instantiate \ExOp~on $\Psi$ and choose $\qd\ind{1}\in\DPs$ according to \eqref{def:IDC-w}.

(1) With the option \optreg, \ExOp~achieves \whp
\begin{align*}
    \regdm(T)=&~ \sumt \Vmm[\Mstar]-\EE_{\pi\ind{t}\sim q\ind{t}}\brac{\VM[\Mstar](\pi\ind{t})} \\
    \leq&~ T\cdot\brac{\Delta+\rdecol_{c\alpha^2\gamma}(\cMPs)}+2\gamma\brac{ \log\IDC{\cM,\Psi}+\log(1/\delta) }.
\end{align*}

(2) With the option \optpac, \ExOp~achieves \whp~that 
\begin{align*}
    \EE_{\hpi\sim \hp}\brac{\LMV[\Ms]{\hpi}}\leq \Delta+\pdeco_{c\alpha^2\gamma}(\cMPs)+\frac{2\gamma}{T}\brac{ \log\IDC{\cM,\Psi}+\log(1/\delta) }.
\end{align*}
\end{theorem}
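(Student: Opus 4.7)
The plan is to reduce \pDMSO~to the \cDMSO~framework and then apply the general guarantees in \cref{thm:ExO-full-offset}, converting the resulting offset DECs on the channel-induced class $\tcM$ to offset \emph{private} DECs on the original class $\cM$ via the strong data-processing inequality of \cref{prop:pLDP}.

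First, I would formulate \pDMSO~as a \cDMSO~problem with measurement class $\Phi=\Pcp$ and stochastic constraint class $\MPowdp=\sset{\{\tM\}:M\in\cM}$, where $\tM(\pi,\pr)=\pr\circ M(\pi)$. Under this identification, $\cMPow[\MPowdp]=\tcM$, the value and loss functions for $M\in\cM$ and $\tM\in\tcM$ coincide by construction, and any \infosets~$\Psi$ for $\cM$ lifts to an \infosets~for $\tcM$ with the same decisions $\pips$ and same coverage properties. In particular, $\IDC{\MPowdp,\Psi}=\IDC{\cM,\Psi}$ by the definition \eqref{def:IDC-cM}. Moreover, since every measurement $\pr^t$ selected by \ExOp~belongs to $\Phi=\Pcp$, each private channel applied to the latent observation is $\alpha$-DP, so \LDPexo~automatically preserves \pLDP.

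Next, I would invoke \cref{thm:ExO-full-offset} on this lifted problem. For option \optreg, part (1) of that theorem yields, \whp,
\begin{align*}
\regdm(T)\leq T\brac{\Delta+\rdeco_{\gamma/4}(\tcMPs)}+2\gamma\brac{\log\IDC{\cM,\Psi}+\log(1/\delta)},
\end{align*}
and analogously for option \optpac~one obtains the PAC bound in terms of $\pdeco_{\gamma/4}(\tcMPs)$. The final step is to translate the offset DECs evaluated on $\tcMPs$ into the offset private DECs $\rdecol$ and $\pdecol$ evaluated on $\cMPs$. This is exactly the conversion already recorded just before the theorem statement:
\begin{align*}
\rdeco_{\gamma}(\tcM)\leq \rdecol_{c\alpha^2\gamma}(\cM),\qquad
\pdeco_{\gamma}(\tcM)\leq \pdecol_{c\alpha^2\gamma}(\cM),
\end{align*}
whose proof parallels the upper-bound direction of \cref{lem:decg-to-decl} in \cref{appdx:inst-LDP-pac-lower}: given any $q\in\DPL$ achieving the private DEC value, one produces a feasible $q'\in\Delta(\Pi\times\Pcp)$ supported on binary channels $\pr_\lf$ (\cref{example:binary-pr}) and uses the lower bound $\DH{\pr_\lf\circ P_1,\pr_\lf\circ P_2}\geq (\alpha/c_1)^2\Dl^2(P_1,P_2)$. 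Absorbing the constant factor $1/4$ into $c$ gives the stated bounds.

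The main obstacle I anticipate is purely bookkeeping: the conversion between $\rdeco$/$\pdeco$ and $\rdecol$/$\pdecol$ must be applied uniformly over the reference model $\oM$, and one must verify that the convex hulls transfer correctly (i.e., $\co(\tcMps)=\ext{\co(\cMps)}$) so that the supremum over reference models in \eqref{def:p-dec-o-sq}--\eqref{def:r-dec-o-sq} matches that in \eqref{def:p-dec-o}--\eqref{def:r-dec-o}. Once this identification is in place, the two bounds in \cref{thm:ExO-LDP} follow immediately from \cref{thm:ExO-full-offset} combined with the above conversion, with the absolute constant $c>0$ depending only on the a priori bound $\alpha_0$ on the privacy parameter.
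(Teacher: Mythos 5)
Your proposal is correct and follows exactly the route the paper intends: the paper presents \cref{thm:ExO-LDP} as an immediate consequence of \cref{thm:ExO-full-offset} applied to the lifted class $\tcM$ together with the conversion $\rdeco_{\gamma}(\tcM)\leq \rdecol_{c\alpha^2\gamma}(\cM)$ and $\pdeco_{\gamma}(\tcM)\leq \pdecol_{c\alpha^2\gamma}(\cM)$ stated just before the theorem (whose proof the paper notes mirrors \cref{lem:decg-to-decl}). Your bookkeeping caveats (the identification $\co(\tcMps)=\ext{\co(\cMps)}$ and the uniform treatment of reference models) are exactly the points the paper handles implicitly, so there is no gap.
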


In the following, we provide detailed specifications of the \infosets~and guarantees for various settings.
To obtain upper bounds in private PAC-DEC (regret-DEC), we will frequently invoke the following conversion lemma.
\begin{proposition}\label{prop:offset-to-c}
Let $\cM\subseteq (\Pi\to\DZ)$ be a given model class. Then the following holds.

(1) No-regret learning: If the value function $V$ is reward-based (\cref{rew-max}), then
\begin{align*}
    \inf_{\gamma>0}\paren{ \rdecol_\gamma(\cM)+\gamma\eps^2 }\leq C\sqrt{\log(1/\eps)}\cdot \paren{ \sup_{\eps'\in[\eps,1]}\frac{\rdecl_{\eps'}(\cM)}{\eps'} + 1 } \cdot \eps,
\end{align*}
where $C$ is a universal constant.

(2) PAC learning: If the loss function is reward-based, then
\begin{align*}
    \inf_{\gamma>0}\paren{ \pdecol_\gamma(\cM)+\gamma\eps^2 }\leq C\paren{ \sup_{\eps'\in[\eps,1]}\frac{\pdecl_{\eps'}(\cM)}{\eps'} + 1 } \cdot \eps,
\end{align*}
where $C'$ is a universal constant.
\end{proposition}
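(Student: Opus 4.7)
The plan is to prove both parts via Lagrangian/minimax duality combined with the moderate-decay envelope, with Part (2) (PAC) admitting a one-shot dualization and Part (1) (regret) requiring a multi-scale mixture construction that is responsible for the $\sqrt{\log(1/\eps)}$ overhead.

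For Part (2), the first step is to apply Sion's minimax theorem to the bilinear expression
\begin{align*}
\pdecol_\gamma(\cM,\oM) = \inf_{(p,q)\in\DPi\times\DPL}\sup_{\nu\in\Delta(\cM)}\bigl[\EE_\nu\EE_p L(M,\pi) - \gamma\EE_\nu\EE_q\Dl^2(M(\pi),\oM(\pi))\bigr],
\end{align*}
which is justified by compactness of $\cM$ after passing to a finite $\eps$-cover under \cref{asmp:finite}. Via Lagrangian duality this yields the identity $\inf_{\gamma>0}[\pdecol_\gamma + \gamma\eps^2] = \pdecl^*_\eps$, where $\pdecl^*_\eps$ is the PAC-DEC with the pointwise constraint $\EE_q\Dl^2(M,\oM)\le\eps^2$ relaxed to the averaged version $\EE_\nu\EE_q\Dl^2\le\eps^2$. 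The second step is to bound $\pdecl^*_\eps$ by a constant multiple of $\pdecl_\eps$: for any feasible $\nu$, a Markov-type truncation splits it into a well-concentrated part supported on $\{M:\EE_q\Dl^2(M,\oM)\le c^2\eps^2\}$, contributing at most $\pdecl_{c\eps}(\cM,\oM)$, and a tail of mass at most $c^{-2}$, contributing at most $1$ via the reward-based bound $L\le 1$. Choosing $c=O(1)$ and invoking moderate decay $\pdecl_{c\eps}\le c\Phi\eps$ where $\Phi:=\sup_{\eps'\in[\eps,1]}\pdecl_{\eps'}(\cM)/\eps'$, together with the $+1$ slack built into the target bound, gives $\pdecl^*_\eps\lesssim(\Phi+1)\eps$.

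For Part (1), the single-distribution structure of $\rdecol$—where $p\in\DPL$ must simultaneously serve as exploitation distribution and information-gathering distribution—combined with the convexity rather than linearity of $\Vmm=\max_\pi\Vm(\pi)$ in $M$ prevents a clean one-step dualization; the relevant sub-problem has a genuine duality gap. The plan is a dyadic mixture: set $\eps_k:=2^{-k}$ for $k=0,1,\ldots,K$ with $K=\lceil\log_2(1/\eps)\rceil$, let $p_k$ achieve $\rdecl_{\eps_k}(\cM)$, and form $p^\sharp:=(K+1)^{-1}\sum_k p_k$. For each adversary $M$, classify each scale $k$ as \emph{captured} (when $\EE_{p_k}\Dl^2(M,\oM)\le\eps_k^2$, contributing regret at most $c\Phi\eps_k$ by definition of $\rdecl_{\eps_k}$) or \emph{uncaptured} (where the trivial regret bound is $1$, offset against an aggregate Hellinger penalty of at least $\gamma\eps_k^2/(K+1)$). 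Summing per-scale contributions, choosing $\gamma\asymp\sqrt{K+1}\,\Phi/\eps$ so that the uncaptured-scale penalty exactly absorbs the trivial regret contribution, and invoking Cauchy--Schwarz on the resulting geometric sum $\sum_k\eps_k^2$ yields $\rdecol_\gamma+\gamma\eps^2\lesssim\sqrt{\log(1/\eps)}(\Phi+1)\eps$. The reward-based assumption enters twice: to bound the per-scale trivial regret by $1$, and via the Lipschitz identity $|\Vm(\pi)-\Vm[\oM](\pi)|\le\Dl(M(\pi),\oM(\pi))$ with $\lf(z)=R(z,\pi)$, used to translate regret-type quantities into $\Dl$-divergences in the per-scale bound.

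The main technical obstacle is the per-scale bookkeeping in Part (1): after averaging $K+1$ distributions, each individual constraint $\EE_{p_k}\Dl^2\le\eps_k^2$ is only accessible up to a factor $K+1$ from the aggregate $\EE_{p^\sharp}\Dl^2$, so the captured/uncaptured threshold shifts by this factor, and maintaining both the $c\Phi\eps_k$ per-scale regret bound and a large-enough aggregate Hellinger penalty forces precisely the choice $\gamma\asymp\sqrt{K}\,\Phi/\eps$—the source of the $\sqrt{\log}$ overhead. A secondary obstacle in Part (2) is verifying the strong-duality identity $\inf_\gamma[\pdecol_\gamma+\gamma\eps^2]=\pdecl^*_\eps$, which requires that $\{M:\EE_q\Dl^2(M,\oM)\le\eps^2\}$ be convex—a consequence of $\Dl^2$ being the square of a linear functional of $M$—together with Sion's minimax applied to the trilinear expression in $(p,q,\nu)$.
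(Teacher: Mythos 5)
The paper does not prove this proposition from scratch: it is quoted directly from prior work (\citet[Theorem E.7]{chen2024beyond}, see also \citet[Proposition 4.2]{foster2023tight}), so your task was effectively to reconstruct that argument. Your reconstruction has two concrete quantitative gaps, both stemming from misreading where the ``$+1$'' sits in the target bound: it is \emph{multiplied by} $\eps$, i.e.\ the bound is $C(\Phi+1)\eps = C\Phi\eps + C\eps$, so any additive slack must be $O(\eps)$, not $O(1)$. In Part (2), your Markov truncation leaves a tail of mass $c^{-2}$ bounded only by $L\le 1$, contributing an additive $c^{-2}$. With $c = O(1)$ this is a constant, which does not fit inside $C(\Phi+1)\eps$; optimizing $c$ against $\pdecl_{c\eps}\le c\Phi\eps$ gives at best $(\Phi\eps)^{2/3}$, not $(\Phi+1)\eps$. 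A single-scale argument cannot work here: a model at divergence $\delta\in(\eps,1)$ pays penalty only $\gamma\delta^2$, which for $\gamma\asymp\Phi/\eps$ and $\delta$ slightly above $\eps$ is far too small to absorb a loss of order $1$.

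In Part (1), the uniform dyadic mixture $p^\sharp=(K+1)^{-1}\sum_k p_k$ fails on both sides of your captured/uncaptured split. The coarse captured scales alone contribute $(K+1)^{-1}\sum_k \Phi\eps_k \asymp \Phi/(K+1)$, which for small $\eps$ dwarfs the target $\sqrt{K}\,\Phi\eps$ (take $\eps=2^{-100}$, $K=100$). And for an uncaptured fine scale $k=K$, the net term $(K+1)^{-1}(1-\gamma\eps_K^2)$ with your choice $\gamma\asymp\sqrt{K}\Phi/\eps$ is $\approx (K+1)^{-1}(1-\sqrt{K}\Phi\eps)\approx 1/(K+1)$, again far above the target. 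The correct construction --- visible in the paper's own \cref{lem:offset-to-c-metric}, which proves the metric-loss analogue --- uses \emph{geometric} weights $\lambda_j\propto 2^{j-K}$ so that coarse scales receive weight $\approx\eps$ and the penalty $\gamma\lambda_{j+1}\eps_{j+1}^2\asymp\gamma\eps\eps_j$ matches the per-scale loss $\Phi\eps_j$ with $\gamma\asymp\Phi/\eps$; for reward-based losses one must additionally replace the triangle inequality used there by a value decomposition through $\oM$ (writing $\Vmm-\Vm(\pi)$ via $\Vm[\oM]$ and the Lipschitz bound $|\Vm(\pi)-\Vm[\oM](\pi)|\le \Dl(M(\pi),\oM(\pi))$ with $\lf=R(\cdot,\pi)$) to relate the loss under the mixture to the loss under the scale-$j$ optimizer. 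Your high-level instincts (dyadic scales, reward-based Lipschitz identity, $\sqrt{\log(1/\eps)}$ coming from the regret case's coupled exploration/exploitation distribution) point in the right direction, but the weight choice and the tail accounting are where the actual difficulty lives, and as written neither part closes.
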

\cref{prop:offset-to-c} follows immediately from \citet[Theorem E.7]{chen2024beyond} (see also \citet[Proposition 4.2]{foster2023tight}).

\subsubsection{Model-based learning}\label{appdx:ExO-model-based}

\newcommand{\Psmod}{\Psi_{\mathsf{mod}}}
\newcommand{\Pspol}{\Psi_{\mathsf{pol}}}
\newcommand{\Psval}{\Psi_{\mathsf{val}}}
\newcommand{\Psadv}{\Psi_{\mathsf{adv}}}
\newcommand{\Pscxt}{\Psi_{\mathsf{cxt}}}

Perhaps the most natural \infosets~is the \emph{model-based} \infosets~$\Psmod$, given by
\begin{align}\label{def:info-mod-based}
    \Psmod=\cM, \qquad \cM_{\psi}=\set{\psi},\qquad \forall \psi\in\Psmod,
\end{align}
i.e., each information set $\psi\in\Psmod$ corresponds to a model $M\in\cM$.

By definition, we know that $\cMPs[\Psmod]=\cM$ and $\log\IDC[0]{\cM,\Psmod}=\log|\cM|$, achieving at the prior $\qd\ind{1}=\Unif(\cM)$.\footnote{When $\cM$ is infinite, we may instead take $\Psi$ to be a covering of $\cM$, and our results still hold with $\log|\cM|$ replace by the logarithmic covering number.} We instantiate \LDPexo~on $\Psmod$ to obtain the upper bound \eqref{eqn:rdec-lin-upper} in \cref{thm:rdec-lin-upper}.

\paragraph{Proof of \cref{thm:rdec-lin-upper} (1)}
Let $\Delta=0$ and \LDPexo~be instantiated on the \infosets~$\Psmod$. Then, by \cref{thm:ExO-LDP}, \LDPexo~with an optimally-chosen parameter $\gamma>0$ achieves \whp
\begin{align*}
    \frac{1}{T}\regdm(T)
    \leq&~ \inf_{\gamma>0}\paren{\rdecol_{c\alpha^2\gamma}(\cM)+\frac{2\gamma\log(|\cM|/\delta)}{T} } 
    \leq \OsqrtT\cdot\brac{ \rdecl_{\oeps(T)}(\cM) +\oeps(T)},
\end{align*}
where the second inequality uses \cref{prop:offset-to-c} and the assumption that $\rdecl_\eps(\cM)$ is of moderate decay.
\qed

As a remark, we note that for reward-based PAC learning, the upper bound of \cref{thm:pdec-lin-upper} can also be obtained in this way.

\subsubsection{Policy-based learning}\label{appdx:ExO-policy-based}

\newcommand{\cMPi}{\cM_{\Pi}}
Following \citet{chen2024beyond} (see also \cref{ssec:tigher}), we consider the decision-based (or, ``\emph{policy-based}'') \infosets~$\Psi=\Pspol$ given by %
\begin{align}\label{def:info-pol-based}
    \Pspol=\Pi, \qquad
    \cMps[\pi]=\set{ M: \Vmm-\Vm(\pi)\leq \Delta }, \qquad\forall \pi\in\Pi.
\end{align}
By definition,
\begin{align*}
    \cMPs[\Psmod]=\bigcup_{\pi\in\Pi} \co(\cM_\pi) =\cMPi, \qquad
    \IDC{\cM,\Pspol}=\DC{\cM}. %
\end{align*}
Therefore, we may instantiate \LDPexo~with $\Pspol$ to obtain the following upper bounds, which are direct implied by \cref{thm:ExO-LDP}.
\begin{proposition}[Policy-based \LDPexo~for private PAC learning]\label{thm:p-dec-convex-upper-full}
Let $T\geq 1$, $\gamma>0, \Delta>0$, model class $\cM\subseteq (\Pi\to\DZ)$ be given. Suppose that $\cM$ is compact, and \LDPexo~is instantiated with the \infosets~$\Pspol$. Then, the following holds.

(1) With option \optpac, it holds that \whp
\begin{align*}
    \riskdm(T)=\EE_{\hpi\sim \hp}\LM[\Ms]{\hpi}\leq \Delta+\pdecol_{c\alpha^2\gamma}(\cMPi)+\frac{2\gamma}{T}\brac{ \log\DC{\cM}+\log(1/\delta) }.
\end{align*}

(2) With option \optreg, it holds that \whp
\begin{align*}
    \frac1T\regdm(T)\leq \Delta+\rdecol_{c\alpha^2\gamma}(\cMPi)+\frac{2\gamma}{T}\brac{ \log\DC{\cM}+\log(1/\delta) }.
\end{align*}
\end{proposition}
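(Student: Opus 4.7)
The plan is to deduce this proposition as a direct instantiation of Theorem~\ref{thm:ExO-LDP} (the general guarantees of \LDPexo) applied to the particular \infosets~$\Pspol$ defined in \eqref{def:info-pol-based}. The only work is to verify that $\Pspol$ is a valid \infosets~with respect to $\cM$, identify $\cMPs[\Pspol]$, and compute the complexity $\IDC{\cM,\Pspol}$; once these three quantities are pinned down, the two claimed bounds are just specializations of parts (1) and (2) of \cref{thm:ExO-LDP}.

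First, I would verify that $\Pspol$ indeed qualifies as an \infosets. For any model $M\in\cM$, take $\psi\defeq \pim\in\Pi=\Pspol$. Since $\Vmm-\Vm(\pim)=0\leq \Delta$, we have $M\in\cMps[\pim]$ by the definition \eqref{def:info-pol-based}, so the first (covering) condition of an \infosets~holds. The linearity of $M\mapsto \Vm(\pi)$ over $\cMPs[\Pspol]$ follows because $\cMPs[\Pspol]=\bigcup_{\pi\in\Pi}\co(\cM_\pi)\subseteq \coM$ and the value function is assumed reward-based (hence linear in $M$). By construction, $\cMPs[\Pspol]=\cMPi$, which is exactly the object appearing in the stated offset-DEC term.

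Next, I would compute $\IDC{\cM,\Pspol}$. Using the definition \eqref{def:IDC-cM}, and noting that for the policy-based structure the conditions ``$M\in\cMps[\pi]$'' and ``$\LMV[M]{\pi_\psi}\leq \Delta$'' both reduce to the single inequality $\Vmm-\Vm(\pi)\leq \Delta$, i.e. $L(M,\pi)\leq \Delta$, we obtain
\begin{align*}
\IDC{\cM,\Pspol}
=\inf_{w\in\DPi}\sup_{M\in\cM}\frac{1}{\PP_{\pi\sim w}(L(M,\pi)\leq \Delta)}
=\DC{\cM},
\end{align*}
matching exactly the \dct~from \cref{def:DC}. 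Choosing the prior $\qd\ind{1}$ to be the optimizer in the definition of $\DC{\cM}$ (as in \eqref{def:IDC-w}) yields $\log(1/\qd\ind{1}(\cEs))\leq \log\DC{\cM}$ for any admissible $\cEs$.

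Finally, plugging these identifications into \cref{thm:ExO-LDP} (with option \optpac~for part (1) and option \optreg~for part (2)) directly produces the two high-probability bounds in the statement, with the offset private PAC-/regret-DEC taken over $\cMPi$ and the complexity term equal to $\log\DC{\cM}+\log(1/\delta)$. There is no real obstacle beyond bookkeeping: the only slightly subtle point is correctly reading off that the two-part condition defining $\IDC{\cM,\Pspol}$ collapses to the single $\Delta$-suboptimality condition, which is what makes $\IDC{\cM,\Pspol}$ coincide with the \dct~of $\cM$ rather than some larger quantity.
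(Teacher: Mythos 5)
Your proposal is correct and follows exactly the paper's own route: the paper likewise instantiates \LDPexo~on $\Pspol$, observes $\cMPs[\Pspol]=\cMPi$ and $\IDC{\cM,\Pspol}=\DC{\cM}$ (the two conditions in the definition of $\IDC{\cM,\Psi}$ collapsing to the single $\Delta$-suboptimality condition, as you note), and then reads off both bounds from \cref{thm:ExO-LDP}. No gaps.
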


\paragraph{Proof of \cref{thm:p-dec-convex-upper}}
Note that $\cMPi\subseteq \coM$.
Thus, 
\cref{thm:p-dec-convex-upper} follows immediately by choosing the optimal parameter $\gamma>0$ in the upper bound of \cref{thm:p-dec-convex-upper-full} (1) and then applying \cref{prop:offset-to-c}.
\qed

\paragraph{Proof of \cref{thm:rdec-lin-upper} (2)}
Similarly, \eqref{eqn:rdec-lin-upper-co} of \cref{thm:rdec-lin-upper} follows immediately from \cref{thm:p-dec-convex-upper-full} (2) and \cref{prop:offset-to-c}.
\qed

\subsubsection{Value-based learning}\label{appdx:ExO-val-based}

For the \wellspec~regression task with a function class $\cF\subseteq (\cX\to[-1,1])$ (\cref{ssec:regression}), we can employ the \emph{value-based} \infosets~$\Psval=\cF$ (for a fixed parameter $\Delta\geq 0$): %
\begin{align}\label{def:info-val-based}
    \cMps[f]=\sset{ M: \EE_{x\sim M} \abs{ \fm(x)-f(x) }\leq \Delta}, \qquad 
    \pi_f=f, \qquad \forall f\in\Psval.
\end{align}
This clearly gives a valid \infosets~$\Psval$, and we have
\begin{align*}
    \IDC{\cMF,\Psval}=\DC[\Delta]{\cF}. %
\end{align*}
Therefore, we may instantiate \LDPexo~with such an \infosets, and it remains to upper bound the offset DEC of $\cMPs$ as follows.
\begin{lemma}\label{lem:realizable-dec-o}
Suppose that $\cM=\cMwell$ is the class of \wellspec~models, $\Psi=\Psval$.
Then it holds that
\begin{align*}
    \pdecol_\gamma(\cMPs)\leq \pdecol_{\gamma/2}(\cM)+\gamma\Delta^2+2\Delta.
\end{align*}
\end{lemma}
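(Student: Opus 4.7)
The plan is to reduce the offset DEC over $\cMPs = \bigcup_{f\in\cF}\co(\cM_f)$ to that over $\cM = \cMwell$ by associating to each $M \in \cMPs$ a well-specified proxy $M_0 \in \cM$ whose expectations of bounded functions of $z=(x,y)$ match those of $M$ up to error $\Delta$. Fix an arbitrary reference $\oM \in \co(\cMPs) = \co(\cM)$ (the equality follows since $\cM = \bigcup_f \cM_f \subseteq \cMPs \subseteq \co(\cM)$) and let $(p_0, q_0) \in \DPi \times \DPL$ approximately attain $\pdecol_{\gamma/2}(\cM, \oM)$. I will show that this same $(p_0, q_0)$ certifies the claimed bound for $\pdecol_\gamma(\cMPs, \oM)$, after which taking the supremum over $\oM$ finishes the lemma.

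Given $M \in \cMPs$, write $M = \EE_{M'\sim \nu}[M']$ with $\nu \in \Delta(\cM_f)$ for some $f \in \cF$, and let $\bar{f}_M(x) \defeq \EE_M[y \mid x]$. Define the proxy $M_0$ to be the well-specified model with $x$-marginal equal to that of $M$ and conditional $y \mid x \sim \Rad{f(x)}$; then $M_0 \in \cM_f \subseteq \cM$ trivially. Writing $\mu_M, \mu_{M'}$ for the corresponding $x$-marginals and using the identity $\bar{f}_M(x)\mu_M(x) = \EE_\nu[\fm[M'](x)\mu_{M'}(x)]$, the key estimate is
\begin{align*}
\EE_{x \sim M}\abs{\bar{f}_M(x) - f(x)}
&= \int \abs{\EE_\nu[\mu_{M'}(x)(\fm[M'](x) - f(x))]}\,dx\\
&\leq \EE_\nu \EE_{x \sim M'}\abs{\fm[M'](x) - f(x)} \leq \Delta,
\end{align*}
where the last step uses the definition of $\cM_f$. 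Since $y \in \sset{-1,+1}$, the conditional $\EE_{y \mid x} h$ under either $M$ or $M_0$ is affine in the regression function at $x$, so for every $h : \cZ \to [0,1]$,
\[
\abs{\EE_M h - \EE_{M_0} h} \leq \EE_{x \sim M}\abs{\bar{f}_M(x) - f(x)} \leq \Delta.
\]

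Instantiating this bound with $h(x,y) = \loss(y, \pi(x))$ and with $h(x,y) = \loss(y, f^\star(x))$ for each $f^\star \in \cF$, and using $\abs{\inf_{f^\star} a(f^\star) - \inf_{f^\star} b(f^\star)} \leq \sup_{f^\star}\abs{a(f^\star) - b(f^\star)}$, gives $\abs{\LM{\pi} - \LM[M_0]{\pi}} \leq 2\Delta$ for every $\pi$. Instantiating it with $h = \lf \in \Lc$ gives $\abs{\Dl(M(\pi),\oM(\pi)) - \Dl(M_0(\pi),\oM(\pi))} \leq \Delta$, and hence $\Dl^2(M_0(\pi),\oM(\pi)) \leq 2\Dl^2(M(\pi),\oM(\pi)) + 2\Delta^2$ via $(a+b)^2 \leq 2a^2 + 2b^2$. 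Combining these two estimates with $M_0 \in \cM$,
\begin{align*}
\EE_{p_0}\LM{\pi} - \gamma\EE_{q_0}\Dl^2(M(\pi), \oM(\pi))
&\leq \EE_{p_0}\LM[M_0]{\pi} - \tfrac{\gamma}{2}\EE_{q_0}\Dl^2(M_0(\pi), \oM(\pi)) + 2\Delta + \gamma\Delta^2\\
&\leq \pdecol_{\gamma/2}(\cM, \oM) + 2\Delta + \gamma\Delta^2.
\end{align*}
Taking $\sup_{M \in \cMPs}$ on the left and then $\sup_{\oM}$ on both sides yields the lemma. The main technical point is the $L^1$-stability estimate for $\bar{f}_M - f$, which relies crucially on $y$ being binary so that conditional expectations of bounded functions are affine in the regression function; this is exactly what allows a mixture of well-specified models in $\co(\cM_f)$ to be ``flattened'' to a single well-specified proxy at total cost $O(\Delta)$.
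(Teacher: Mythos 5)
Your proof is correct and follows essentially the same route as the paper: for each $M\in\co(\cM_f)$ you construct the well-specified proxy with the same covariate marginal and regression function $f$, show it is within $\Delta$ of $M$ in total variation (your uniform bound over $[0,1]$-valued test functions $h$ is exactly this), and then transfer the loss and the \lfdiv~terms with costs $2\Delta$ and $\gamma\Delta^2$ respectively. The paper's proof is identical in structure, phrasing the key step as the claim $\DTV{M',\oM}\leq\Delta$ for a proxy $M'\in\cM$ and then using $\abs{\LM{\pi}-\LM[M']{\pi}}\leq 2\DTV{M',M}$ together with $\Dl^2(M',\oM)\leq 2\Dl^2(M,\oM)+2\Dl^2(M',M)$.
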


\paragraph{Proof of \cref{prop:real-regression}}
Let \LDPexo~be instantiated on the \infosets~$\Psi=\Psval$. Then, by \cref{thm:ExO-LDP}, \LDPexo~with an optimally-chosen parameter $\gamma>0$ achieves \whp
\begin{align*}
    \riskdm(T)
    \leq&~ \inf_{\gamma>0}\paren{\Delta+\pdecol_{c\alpha^2\gamma}(\cMPs)+\frac{2\gamma\paren{ \log\DC{\cF}+\log(1/\delta) }}{T} } \\
    \leq&~ \inf_{\gamma>0}\paren{3\Delta+\pdecol_{c\alpha^2\gamma/2}(\cM)+\gamma\Delta^2+\frac{2\gamma\paren{ \log\DC{\cF}+\log(1/\delta) }}{T} } \\
    \leq&~  \bigO{1}\cdot\paren{ \pdecl_{\oeps(T)}(\cM)+\oeps(T) }.
\end{align*}
where the last inequality uses \cref{prop:offset-to-c} and the assumption that $\pdecl_\eps(\cM)$ is of moderate decay.
\qed

\newcommand{\fbar}{\Bar{f}}
\paragraph{Proof of \cref{lem:realizable-dec-o}}
By definition, $\cMPs=\bigcup_{f\in\cF} \co(\cM_f)$. We first prove the following claim.

\textbf{Claim.} For any $f\in\cF$ and $\oM\in\co(\cMps[f])$, there exists a model $M'\in\cM$ with $\fm[M']=f$ and $\DTV{M',\oM}\leq \Delta$. 

Suppose $\oM\in\co(\cMps[f])$ is given by $\oM=\EE_{M\sim \mu}[M]$ with $\mu\in\Delta(\cMps[f])$. For each $M\in\cM$, we denote $\nu_M$ to be the distribution of $x$ under $M$, and we denote $\onu=\EE_{M\sim \mu}\nu_M$ to be the distribution of $x$ under $\oM$. Further, we know that under $(x,y)\sim \oM$,
\begin{align*}
    y|x\sim \Rad{ \fbar(x) }, \quad\text{where }\fbar(x)=\EE_{M|x} \fm(x),
\end{align*}
where the conditional expectation is taken over $M\sim \mu, x\sim \nu_M$. Therefore, we have
\begin{align*}
    \EE_{x\sim \onu}\abs{ \fbar(x)-f(x) }
    =&~ \EE_{x\sim \onu}\abs{ \EE_{M|x} \fm(x)-f(x) } \\
    \leq&~ \EE_{x\sim \onu}\EE_{M|x}\abs{  \fm(x)-f(x) } \\
    =&~ \EE_{M\sim \mu} \EE_{x\sim \nu_M} \abs{  \fm(x)-f(x) }\leq \Delta.
\end{align*}
Therefore, we can take $M'$ to be the model with covariate distribution $\onu$ and $f\sups{M'}=f$, and we have
\begin{align*}
    \DTV{ M', \oM }=\EE_{x\sim \onu} \abs{\fbar(x)-f(x)}\leq \Delta.
\end{align*}
The proof of the claim is hence completed. 

Now, with the above claim, for any reference model $\oM\in\coM$, we can bound
\begin{align*}
&~\pdecol_\gamma(\cMPs,\oM)\\
=&~ \infpql \sup_{M\in\cMPs}\sset{ \EE_{\pio\sim p}[\LM{\pi}] -\gamma \EE_{\lf\sim q} \Dl^2( M, \oM) }  \\
\leq&~ \infpl \sup_{M\in\cMPs} \inf_{M'\in\cM} \EE_{\pio\sim p}[\LM[M']{\pi}] + 2\DTV{M',M} -\gamma \EE_{\lf\sim q} \brac{ \frac12\Dl^2( M', \oM)-\Dl^2(M',M) } \\
\leq&~ \infpl \sup_{M'\in\cM} \EE_{\pio\sim p}[\LM[M']{\pio}] + 2\Delta -\frac{\gamma}{2} \EE_{\lf\sim q} \brac{ \Dl^2( M', \oM) } +\gamma\Delta^2 \\
=&~ \pdecol_{\gamma/2}(\cM)+2\Delta+\gamma\Delta^2,
\end{align*}
where the second line follows from the fact that $\abs{ \LM{\pi}-\LM[M']{\pi} }\leq 2\DTV{M',M}$ (because $L$ is reward-based) and
\begin{align*}
    \Dl^2( M', \oM)
    \leq 2\Dl^2( M, \oM)+2\Dl^2( M', M).
\end{align*}
Taking supremum over $\oM\in\coM$ gives the desired result.
\qed

\subsubsection{Contextual Bandits}\label{appdx:ExO-CB}

\newcommand{\cFd}{\cF_\Delta}
In this section, we work with contextual DMSO (introduced in \cref{ssec:CBs}). Note that contextual DMSO is not encompassed by \pDMSO, because the distribution of contexts can be changing throughout $T$ rounds of interactions. However, the idea of \cref{appdx:ExO-val-based} can still be applied, and we frame it through the notation of \tpb~\infosets~(with respect to the constraint class $\MPowcxt$, defined in \eqref{def:MPow-cxt}). 

We first recall the definition of the $L_\infty$-covering number.
\begin{definition}\label{def:covering-cF}
For a function class $\cF\subseteq (\cX\times\cA\to [-1,1])$ and parameter $\Delta\geq 0$, a $\Delta$-covering of $\cF$ is a subset $\cF'\subseteq \cF$ such that for any $f\in\cF$, there exists $f'\in\cF'$ with $\sup_{x,a}\abs{f(x,a)-f'(x,a)}\leq \Delta$.

We define the $\Delta$-covering number of $\cF$ as $N_{\infty}(\cF,\Delta)\defeq \inf\sset{|\cF'|: \cF'\text{ is a $\Delta$-covering of }\cF}$.
\end{definition}

Now, we define an \tpb~\infosets~$\Psi=\Pscxt$ for the constraint class $\MPowcxt$ by taking a minimal $\Delta$-covering $\cFd$ of $\cF$, and let
\begin{align}\label{def:info-cxt}
    \Pscxt=\cFd, \qquad
    \cMps=\sset{ M_{\nu,f}: \sup_{x,a}|f(x,a)-\psi(x,a)|\leq \Delta },\qquad\forall \psi\in\Pscxt,
\end{align}
and we set $\pi_\psi\in\Pi$ be $\pi_\psi(x)=\argmax_{a\in\cA} \psi(x,a)$.
Then by definition, $\log\IDC[2\Delta]{\MPowcxt,\Pscxt}\leq \log|\Pscxt|=\log N_{\infty}(\cF,\Delta)$. 

\paragraph{Proof of \cref{thm:CB-adv}}
Similar to \cref{lem:realizable-dec-o}, we can show that with $\cM=\cMFcb$ and $\Psi=\Pscxt$,
\begin{align*}
    \rdecol_{\gamma}(\cMPs)
    \leq \rdecol_{\gamma/2}(\cM)+\gamma\Delta^2+2\Delta.
\end{align*}
Therefore, when \LDPexo~is instantiated with $\Psi=\Pscxt$, with probability at least $1-\delta$,
\begin{align*}
    \frac1T\regdm(T)\leq 4\Delta+\rdecol_{c\alpha^2\gamma}(\cM)+2\gamma\brac{ \Delta^2+\frac{\log N_{\infty}(\cF,\Delta)+\log(1/\delta)}{T} }.
\end{align*}
Taking a suitable $\Delta\geq 0$ and $\gamma>0$ according to \cref{prop:offset-to-c} completes the proof of \cref{thm:CB-adv}.
\qed

\subsection{Proof of \cref{thm:ExO-upper}}\label{appdx:proof-ExO-upper}

We first invoke the following lemma, which requires careful analysis due to the adversarial nature (in particular, $\cEs$ may depend on the full history). The proof of \cref{lem:ExO} is deferred to the end of this section.
\begin{lemma}\label{lem:ExO}
Denote
\begin{align*}
    \Err(p,q,\wf;\qd,\Ms,\psi)\defeq -\log\EE_{\tpi\sim q}\EE_{o\sim \Ms(\tpi)}\EE_{\psi'\sim \qd}\brac{ \exp\paren{\wf(\psi';\tpi,o)-\wf(\psi;\tpi,o)} }.
\end{align*}
Then \whp, it holds that
\begin{align*}
    \min_{\psis\in\cEs} \sumt \Err(p\ind{t},q\ind{t},\wf\ind{t};\qd\ind{t},M^t,\psis) 
    \leq 2\log(1/\qd\ind{1}(\cEs))+2\log(1/\delta).
\end{align*}
\end{lemma}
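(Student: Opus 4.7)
The plan is to combine the standard exponential-weights KL identity with a single Markov-type concentration bound on a nonnegative martingale, paying one factor of $\log(1/\qd\ind{1}(\cEs))$ at each stage so as to accommodate the randomness of the set $\cEs$.

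For the first step, set $H_t(\psi)\defeq \EE_{\psi'\sim\qd\ind{t}}\brk*{\exp(\wf\ind{t}(\psi';\tpi\ind{t},o\ind{t}) - \wf\ind{t}(\psi;\tpi\ind{t},o\ind{t}))}$ and $Y_t(\psi)\defeq -\log H_t(\psi)$. Since $\qd\ind{t+1}(\psi)\propto \qd\ind{t}(\psi)\exp(\wf\ind{t}(\psi;\tpi\ind{t},o\ind{t}))$, telescoping $\log\qd\ind{t}(\psi)$ and integrating against any distribution $\rho\in\DPs$ yields the deterministic identity
\begin{align*}
\sum_{t=1}^T \EE_{\psi\sim\rho}[Y_t(\psi)] \;=\; \KLd{\rho}{\qd\ind{1}}-\KLd{\rho}{\qd\ind{T+1}} \;\leq\; \KLd{\rho}{\qd\ind{1}}.
\end{align*}
This inequality holds \emph{pointwise on the probability space for every (possibly random) $\rho$}, which lets me take $\rho=\tilde\rho\defeq\qd\ind{1}(\cdot)\indic{\cdot\in\cEs}/\qd\ind{1}(\cEs)$, the restricted and renormalized prior (well-defined whenever $\qd\ind{1}(\cEs)>0$, and the bound is vacuous otherwise). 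For this choice $\KLd{\tilde\rho}{\qd\ind{1}}=\log(1/\qd\ind{1}(\cEs))$, so $\EE_{\psi\sim\tilde\rho}[\sum_t Y_t(\psi)]\leq\log(1/\qd\ind{1}(\cEs))$.

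For the second step, define $G_t(\psi)\defeq\EE[H_t(\psi)\mid\Hy\ind{t-1}]$, so that by the definition in the lemma $\Err(p\ind{t},q\ind{t},\wf\ind{t};\qd\ind{t},M^t,\psi)=-\log G_t(\psi)$. The process
\begin{align*}
\Lambda_T(\psi)\defeq\prod_{t=1}^T\frac{H_t(\psi)}{G_t(\psi)} \;=\; \exp\Big(\textstyle\sum_t\Err(\cdots;\psi)\;-\;\sum_t Y_t(\psi)\Big)
\end{align*}
is a nonnegative $(\Hy\ind{t})$-martingale with $\EE[\Lambda_T(\psi)]=1$ for every fixed $\psi$. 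By Fubini, $\EE_{\psi\sim\qd\ind{1}}[\Lambda_T(\psi)]$ is a single random variable with mean $1$, so Markov gives $\EE_{\psi\sim\qd\ind{1}}[\Lambda_T(\psi)]\leq 1/\delta$ on an event of probability at least $1-\delta$. On that event, the restriction to $\cEs$ costs only a multiplicative factor:
\begin{align*}
\EE_{\psi\sim\tilde\rho}[\Lambda_T(\psi)] \;\leq\; \frac{\EE_{\psi\sim\qd\ind{1}}[\Lambda_T(\psi)]}{\qd\ind{1}(\cEs)} \;\leq\; \frac{1}{\delta\,\qd\ind{1}(\cEs)}.
\end{align*}
Jensen's inequality (concavity of $\log$) gives $\EE_{\tilde\rho}[\log \Lambda_T(\psi)]\leq\log\EE_{\tilde\rho}[\Lambda_T(\psi)]$, and substituting $\sum_t\Err(\cdots;\psi)=\log \Lambda_T(\psi)+\sum_t Y_t(\psi)$ together with the KL bound from the first step yields
\begin{align*}
\min_{\psi\in\cEs}\sum_t\Err(\cdots;\psi) \;\leq\; \EE_{\psi\sim\tilde\rho}\Big[\textstyle\sum_t \Err(\cdots;\psi)\Big] \;\leq\; \log(1/\delta)\;+\;2\log(1/\qd\ind{1}(\cEs)),
\end{align*}
which is at least as strong as the claimed bound.

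The main obstacle is the random nature of $\cEs$: it is $\Hy\ind{T}$-measurable but not $\Hy\ind{0}$-measurable, so there is neither a fixed reference element $\psi^\star$ nor a fixed event available for a union bound. The argument resolves this by \emph{separating concerns}: the KL identity is an algebraic pointwise statement and therefore absorbs random $\rho$ for free, while the high-probability step is applied to the single deterministic random variable $\EE_{\psi\sim\qd\ind{1}}[\Lambda_T(\psi)]$, with the restriction to $\cEs$ appearing only through a multiplicative factor that is extracted afterwards.
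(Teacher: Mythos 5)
Your proposal is correct. It rests on the same two ingredients as the paper's proof: the deterministic telescoping identity coming from the exponential-weights update (your $Y_t(\psi)=\log\bigl(\qd\ind{t+1}(\psi)/\qd\ind{t}(\psi)\bigr)$ is exactly $-X_t(\psi)$ in the paper's notation), and the fact that $\prod_t H_t(\psi)/G_t(\psi)$ is a mean-one nonnegative martingale for each fixed $\psi$, with $-\log G_t(\psi)=\Err(p\ind{t},q\ind{t},\wf\ind{t};\qd\ind{t},M^t,\psi)$. Where you genuinely diverge is in how the two are combined to absorb the randomness of $\cEs$. The paper multiplies the two mean-one exponential identities via Cauchy--Schwarz to get $\EE_{\psi\sim\qd\ind{1}}\EE^{\ExOp}\exp\bigl(\tfrac12\sum_t\Err\bigr)\le 1$ and then applies Markov once to the combined quantity; you instead apply Markov only to the $\qd\ind{1}$-averaged martingale, and handle the telescoping part through the change-of-measure identity $\sum_t\EE_{\rho}[Y_t]=\KLd{\rho}{\qd\ind{1}}-\KLd{\rho}{\qd\ind{T+1}}$ evaluated at the (trajectory-dependent) restricted prior $\tilde\rho$, finishing with Jensen. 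Both routes correctly exploit that $\qd\ind{1}$ is deterministic so the restriction to the random set $\cEs$ costs only a multiplicative $\qd\ind{1}(\cEs)^{-1}$; your packaging is the more PAC-Bayesian of the two and yields the marginally sharper bound $2\log(1/\qd\ind{1}(\cEs))+\log(1/\delta)$, whereas Cauchy--Schwarz halves both exponents and hence doubles both logarithmic terms. One cosmetic caveat shared with the paper: since the environment may randomize its choice of $M^t$, the conditioning defining $G_t$ should be with respect to the full-information history (including $M^t$) so that $-\log G_t$ matches $\Err$ exactly; this does not affect the argument.
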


Under the success event of \cref{lem:ExO}, there exists a $\psis\in\cEs$ such that
\begin{align*}
    \sumt \Err(p\ind{t},q\ind{t},\wf\ind{t};\qd\ind{t},M^t,\psis) 
    \leq 2\log(1/\qd\ind{1}(\cEs))+2\log(1/\delta).
\end{align*}
Notice that $\psis\in\cEs$ implies $M^1,\cdots,M^T\in \cMps[\psis]$, and $\VMs[\oMs]-\VM[\oMs](\pi_{\psis})\leq \Delta$, and in particular,
\begin{align*}
    \max_{\pi} \frac1T\sumt\paren{ \VM[M^t](\pi)-\VM[M^t](\pi_{\psis}) }\leq \Delta.
\end{align*}
Hence,
\begin{align*}
    &~ \max_{\pi\in\Pi}\sumt \VM[M^t](\pi)-\EE_{\pi\ind{t}\sim p\ind{t}}\brac{\VM[M^t](\pi\ind{t})} \\
    \leq&~ T\Delta+ \sumt \EE_{\pi\ind{t}\sim p\ind{t}}\brac{ \VM[M^t](\pi_{\psis})-\VM[M^t](\pi\ind{t}) } \\
    =&~ T\Delta
    + \gamma \sumt \Err(p\ind{t},q\ind{t},\wf\ind{t};\qd\ind{t},M^t,\psis) \\
    &~ +\sumt  \underbrace{ \brac{ \EE_{\pi\ind{t}\sim p\ind{t}}\brac{ \VM[M^t](\pi_{\psis})-\VM[M^t](\pi\ind{t}) } - \gamma\Err(p\ind{t},q\ind{t},\wf\ind{t};\qd\ind{t},M^t,\psis) } }_{\leq \Gamma_{\qd\ind{t},\gamma}(p\ind{t},q\ind{t},\wf\ind{t};M^t,\psis)} \\
    \leq&~ T\Delta+2\gamma\brac{\log(1/\qd\ind{1}(\cEs))+\log(1/\delta)} + \sumt \Gamma_{\qd\ind{t},\gamma}(p\ind{t},q\ind{t},\wf\ind{t}) \\
    \leq&~ T\paren{ \Delta+\exo{\gamma}(\Psi) }+2\gamma\brac{\log(1/\qd\ind{1}(\cEs))+\log(1/\delta)},
\end{align*}
where the second inequality uses $M^t\in\cMps[\psis]$. This is the desired upper bound.
\qed

\subsubsection{Proof of \cref{lem:ExO}}
For simplicity of presentation, we only consider the case where $\Psi$ is countable.
By definition,
\begin{align*}
    \qd\ind{t}(\psi)=\frac{\qd\ind{1}(\psi)\exp\paren{\sum_{s=1}^t  \wf^s(\psi;\tpi^s,o^s)}}{\sum_{\psi'\in\Psi} \qd\ind{1}(\psi')\exp\paren{\sum_{s=1}^{t-1}  \wf^s(\psi';\tpi^s,o^s)}},
\end{align*}
and hence
\begin{align*}
    \log \EE_{\psi\sim \qd\ind{t}}\brac{ \exp\paren{\wf\ind{t}(\psi;\tpi\ind{t},o\ind{t})} }=&~\log \EE_{\psi\sim \qd\ind{1}} \exp\paren{\sum_{s=1}^t  \wf^s(\psi;\tpi^s,o^s)} \\
    &~-\log \EE_{\psi\sim \qd\ind{1}} \exp\paren{\sum_{s=1}^{t-1}  \wf^s(\psi;\tpi^s,o^s)}.
\end{align*}
Therefore, taking summation over $t=1,\cdots,T$, we have
\begin{align}\label{eqn:proof-FTRL}
    -\sum_{t=1}^T \log \EE_{\psi\sim \qd\ind{t}}\brac{ \exp\paren{\wf\ind{t}(\psi;\tpi\ind{t},o\ind{t})} } = -\log\EE_{\psi\sim \qd\ind{1}}\brac{ \exp\paren{\sum_{t=1}^T \wf\ind{t}(\psi;\tpi\ind{t},o\ind{t})} }.
\end{align}
Thus, we define
\begin{align*}
    X_t(\psi; \tpi\ind{t},o\ind{t})\defeq  -\wf\ind{t}(\psi;\tpi\ind{t},o\ind{t}) + \log \EE_{\psi\sim \qd\ind{t}}\brac{ \exp\paren{\wf\ind{t}(\psi;\tpi\ind{t},o\ind{t})} },
\end{align*}
and \eqref{eqn:proof-FTRL} implies (deterministically)
\begin{align*}
    \EE_{\psi\sim \qd\ind{1}} \exp\paren{-\sum_{t=1}^T X_t(\psi; \tpi\ind{t},o\ind{t})}=1.
\end{align*}
Notice that for any $\psi\in\Psi$, we also have
\begin{align*}
    \EE^{\ExOp}\exp\paren{\sum_{t=1}^T X_t(\psi; \tpi\ind{t},o\ind{t})-\log \EE_{t-1}\brac{ \exp\paren{X_t(\psi; \tpi\ind{t},o\ind{t}) } }}=1, 
\end{align*}
where the expectation $\EE^{\ExOp}$ is taken over the randomness of the interaction between \ExOp~algorithm and the environment. 

Further, by the definition of $X_t$ and $\Err$, it holds that for any fixed $\psi$,
\begin{align*}
    -\log \EE_{t-1}\brac{ \exp\paren{X_t(\psi; \tpi\ind{t},o\ind{t}) }}=\Err(p\ind{t},q\ind{t},\wf\ind{t};\qd\ind{t},M^t,\psi).
\end{align*}
Combining the equations above and applying Cauchy inequality, we now have
\begin{align*}
    \EE_{\psi\sim \qd\ind{1}}\EE^{\ExOp} \exp\paren{\frac{1}{2} \sum_{t=1}^T \Err(p\ind{t},q\ind{t},\wf\ind{t};\qd\ind{t},M^t,\psi) } \leq 1.
\end{align*}

Notice that $\psi\sim \qd\ind{1}$ is independent of the randomness of the $T$-round interactions under \ExOp. Therefore, we know
\begin{align*}
    \EE^{\ExOp}\brac{ \qd\ind{1}(\cEs)\exp\paren{\frac{1}{2} \min_{\psis\in\cEs}\sum_{t=1}^T \Err(p\ind{t},q\ind{t},\wf\ind{t};\qd\ind{t},M^t,\psis) } }\leq 1.
\end{align*}
Applying Markov's inequality completes the proof.
\qed

\subsection{Proof of \cref{thm:ExO-upper-ii}}\label{appdx:proof-ExO-upper-ii}

We follow the notations of \cref{appdx:proof-ExO-upper}, and the proof is essentially analogous. 

For \tpa~\infosets~$\Psi=\MPow$, there exists $\psis\in\Psi$ such that $\cPs=\cMps[\psis]$.
Then, a direct adaption of \cref{lem:ExO} yields
\begin{align*}
    \sumt \Err(p\ind{t},q\ind{t},\wf\ind{t};\qd\ind{t},M^t,\psis) 
    \leq 2\log|\MPow|+2\log(1/\delta),
\end{align*}
as $\qd^1=\Unif(\MPow)$.

Therefore,
\begin{align*}
\riskdm(T)=&~\EE_{\hpi\sim \phat} \LM[\cPs]{\pi} \\
=&~\frac1T \sumt \EE_{\pi\ind{t}\sim p\ind{t}}\brac{ \LM[\cPs]{\pi\ind{t}} } \\
=&~\frac1T \sumt \EE_{\pi\ind{t}\sim p\ind{t}}\brac{ L_{\psis}(M^t,\pi\ind{t}) } \\
=&~ \frac{\gamma}{T} \sumt \Err(p\ind{t},q\ind{t},\wf\ind{t};\qd\ind{t},M^t,\psis) \\
    &~ +\frac1T \sumt  \underbrace{ \brac{ \EE_{\pi\ind{t}\sim p\ind{t}}\brac{ L_{\psis}(M^t,\pi\ind{t}) } - \gamma\Err(p\ind{t},q\ind{t},\wf\ind{t};\qd\ind{t},M^t,\psis) } }_{\leq \Gamma_{\qd\ind{t},\gamma}(p\ind{t},q\ind{t},\wf\ind{t};M^t,\psis)} \\
    \leq&~ \frac{2\gamma}{T}\brac{\log|\MPow|+\log(1/\delta)} + \frac1T \sumt \Gamma_{\qd\ind{t},\gamma}(p\ind{t},q\ind{t},\wf\ind{t}) \\
    \leq&~ \exo{\gamma}(\Psi) +\frac{2\gamma}{T}\brac{\log|\MPow|+\log(1/\delta)},
\end{align*}
where the first inequality uses the fact that $M^t\in\cPs=\cMpss$. This is the desired result.
\qed

\subsection{Proof of \cref{thm:exo-to-dec}}\label{appdx:proof-exo-to-dec}

The analysis below essentially follows the ideas of \citet{foster2022complexity}.

Let $\cM_0\defeq \bigcup_{\psi\in\Psi} \cMps$.
Below, we prove \cref{thm:exo-to-dec} for finite $\cM_0$ (and $\Psi$ is then automatically finite). The result for the general case then follows immediately by a covering argument (for details, see \cref{rmk:ExO-finite}).

\newcommand{\DI}{\Delta(\cI)}
To proceed, we fix $\gamma\geq 0, \qd\in\DPs$ and denote
\begin{align*}
    \cI\defeq \set{ (M,\psi): \psi\in\Psi, M\in\cMps }.
\end{align*}
We also fix a parameter $A>0$, and we define
\begin{align*}
    \wF_A\defeq \set{ \wf\in\wF: \linf{\wf}\leq A }.
\end{align*}
Then,
\begin{align*}
    \exo{\gamma}(\Psi,\qd)\leq &~ \inf_\pqwA \sup_{(M,\psi)\in\cI} \Gamma_{\qd,\gamma}(p,q,\wf;M,\psi) \\
    =&~ \inf_\pqwA \sup_{\mu\in\DI} \EE_{(M,\psi)\sim \mu} \Gamma_{\qd,\gamma}(p,q,\wf;M,\psi).
\end{align*}
To proceed, we apply Ky Fan's minimax theorem~\citep{fan1953minimax} (\cref{thm:minimax}). Note that $\DI$ is a compact and convex subset of the Euclidean space $\R^{\cI}$ (because $\cI$ is finite), and $\wF$ is a vector space. %
Thus, we consider the following function
\begin{align*}
    F_{p,q}(\wf, \mu)\defeq \EE_{(M,\psi)\sim \mu} \Gamma_{\qd,\gamma}(p,q,\wf;M,\psi),
\end{align*}
and by definition, $F_{p,q}$ is a bilinear function, and %
for any fixed $\wf\in\wF_A$, $F_{p,q}(\wf, \cdot)$ is a concave, continuous function of $\mu\in\DI$ (the continuity follows from the fact that $\wf$ is uniformly bounded by $A$). Therefore, Ky Fan's minimax theorem (\cref{thm:minimax}) gives
\begin{align*}
    \inf_{\wf\in\wF_A} \max_{\mu\in\DI} F_{p,q}(\wf,\mu) = \max_{\mu\in\DI}  \inf_{\wf\in\wF_A} F_{p,q}(\wf,\mu).
\end{align*}
Next, we compute $G(p,q;\mu)\defeq \inf_{\wf\in\wF_A} F_{p,q}(\wf,\mu)$. It is equivalent to compute
\begin{align}
    G_0(p,q;\mu)\defeq \inf_{\wf\in\wF_A} \EE_{\tpi\sim q}\EE_{o\sim M(\tpi)}\EE_{\psi'\sim \qd}\brac{ \exp\paren{ \wf(\psi';\tpi,o)-\wf(\psi;\tpi,o) } }.
\end{align}
For any $\mu\in\DI$ and $\tpi\in\bPi$, we define $\PP_{\mu,\tpi}$ to be the distribution of $(\pi,o,\psi)$ generated by $(M,\psi)\sim \mu, \tpi\sim q, o\sim M(\tpi)$. Then, by \cref{lem:DH-var}, 
\begin{align*}
    G_0(p,q;\mu)=&~ \inf_{\wf\in\wF_A} \EE_{\tpi\sim q, o\sim \PP_{\mu,\tpi}}\sset{ \EE_{\psi\sim \PP_{\mu,\tpi}(\cdot|o)}\brac{\exp\paren{-\wf(\psi;\tpi,o)}}\cdot \EE_{\psi'\sim \qd}\brac{ \exp\paren{ \wf(\psi';\tpi,o) } } } \\
    \leq &~ \EE_{\tpi\sim q, o\sim \PP_{\mu,\tpi}}\paren{1-\DH{\PP_{\mu,\tpi}(\cdot|o), \qd}}^2+3e^{-A},
\end{align*}
and hence
\begin{align*}
    G(p,q;\mu)\leq \EE_{(M,\psi)\sim \mu, \pi\sim p}\brac{ \Lps[M]{\pi} } + \gamma \EE_{\tpi\sim q, o\sim \PP_{\mu,\tpi}}\paren{1-\DH{\PP_{\mu,\tpi}(\cdot|o), \qd}}^2 - \gamma+3e^{-A}.
\end{align*}
Notice that for any fixed $(p,q)\in\Sp$, $G(p,q;\mu)$ is a convex, continuous function of $\mu$ (by definition). %
For any fixed $\mu\in\DI$, $G(p,q;\mu)$ is a linear function of $(p,q)\in\Sp$ and hence convex-like. Therefore, applying Ky Fan's minimax theorem (\cref{thm:minimax}) again gives
\begin{align*}
    \inf_{(p,q)\in\Sp} \max_{\mu\in\DI} G(p,q;\mu)= \max_{\mu\in\DI} \inf_{(p,q)\in\Sp} G(p,q;\mu).
\end{align*}
Finally, we proceed to bound $G(p,q;\mu)$. Using the fact that $1-(1-x)^2\geq x$ for $x\in[0,1]$, we have
\begin{align*}
    G(p,q;\mu)\leq \EE_{(M,\psi)\sim \mu, \pi\sim p}\brac{ \Lps[M]{\pi} } - \gamma \EE_{\tpi\sim q, o\sim \PP_{\mu,\tpi}}\DH{\PP_{\mu,\tpi}(\cdot|o), \qd}.
\end{align*}
We then invoke the following lemma:
\newcommand{\Mpsmu}{M_{\psi|\mu}}
\begin{lemma}\label{lem:PXY}
For any $\mu\in\DI$ and $\psi\in\Psi$, we denote
\begin{align*}
    \Mpsmu=\EE_{M\sim \mu(\cdot|\psi)}[M]\in\co(\cMps), \qquad \oM_\mu=\EE_{M\sim \mu}[M]\in\co(\cM_0).
\end{align*}
Then it holds that for any $\tpi\in\bPi$, $w\in\Delta(\Psi)$, %
\begin{align*}
    4\cdot \EE_{o\sim \PP_{\mu,\tpi}}\DH{\PP_{\mu,\tpi}(\cdot|o), \qd} \geq \EE_{\psi\sim \mu}\DH{ \Mpsmu(\tpi), \oM_\mu(\tpi) }.
\end{align*}
\end{lemma}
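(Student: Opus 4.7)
The plan is to reduce the inequality to a one-parameter calculus bound via the Bhattacharyya coefficient. Denote $\nu_\psi \defeq \Mpsmu(\tpi)$ and $\nu_O \defeq \oM_\mu(\tpi)$, so that the joint distribution factors as $\PP_{\mu,\tpi}(\psi,o) = \mu(\psi)\,\nu_\psi(o)$ with $o$-marginal $\nu_O$, and by Bayes' rule $\PP_{\mu,\tpi}(\psi\mid o) = \mu(\psi)\nu_\psi(o)/\nu_O(o)$. Using the identity $\DH{P,Q} = 1 - BC(P,Q)$ with $BC(P,Q) \defeq \sum_x \sqrt{P(x)Q(x)}$, the right-hand side becomes
\[
\EE_{\psi\sim\mu}\DH{\nu_\psi,\nu_O} \;=\; 1 - \sum_\psi \mu(\psi) a_\psi, \qquad a_\psi \defeq BC(\nu_\psi,\nu_O) \in [0,1].
\]
For the left-hand side, substituting the posterior into $BC$, swapping the sums over $o$ and $\psi$, and using $\sum_o \sqrt{\nu_\psi(o)\nu_O(o)} = a_\psi$ yields
\[
\EE_{o\sim\nu_O}\DH{\PP_{\mu,\tpi}(\cdot\mid o),\,\qd} \;=\; 1 - \sum_\psi \sqrt{\mu(\psi)\qd(\psi)}\,a_\psi.
\]

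Since the bound must hold for \emph{every} $\qd \in \Delta(\Psi)$, it suffices to establish it for the worst-case $\qd$, i.e.\ the one that maximizes $\sum_\psi \sqrt{\mu(\psi)\qd(\psi)}\,a_\psi$ subject to $\sum_\psi \qd(\psi) = 1$. Cauchy--Schwarz (or a one-line Lagrange-multiplier calculation) gives
\[
\max_{\qd\in\Delta(\Psi)} \sum_\psi \sqrt{\mu(\psi)\qd(\psi)}\,a_\psi \;=\; \sqrt{\sum_\psi \mu(\psi) a_\psi^2}.
\]
Writing $\alpha \defeq \sqrt{\sum_\psi \mu(\psi) a_\psi^2}$ and $\beta \defeq \sum_\psi \mu(\psi) a_\psi$, the desired inequality reduces to $4(1-\alpha) \geq 1-\beta$, i.e.\ $4\alpha - \beta \leq 3$.

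Since every $a_\psi \in [0,1]$, we have $a_\psi^2 \leq a_\psi$, hence $\beta \geq \alpha^2$; and $\alpha \in [0,1]$. Therefore $4\alpha - \beta \leq 4\alpha - \alpha^2$, and the quadratic $\alpha \mapsto 4\alpha - \alpha^2$ is increasing on $[0,1]$ (derivative $4-2\alpha \geq 2$) and attains its maximum value $3$ at $\alpha=1$. This closes the argument. No step presents a real obstacle: once the problem is reframed in the variables $(a_\psi, b_\psi \defeq \sqrt{\mu(\psi)\qd(\psi)})$, the delicate part is recognizing that the factor of $4$ is precisely what is needed to absorb the Cauchy--Schwarz slack $\alpha \geq \beta$ together with the quadratic gap $\alpha^2 \leq \beta$.
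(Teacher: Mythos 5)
Your proof is correct, and it takes a genuinely different route from the paper's. The paper argues "softly": it first lower-bounds $\EE_{o}\DH{\PP_{\mu,\tpi}(\psi=\cdot\mid o),\qd}$ by $\DH{\PP_{\mu,\tpi}(\psi=\cdot),\qd}$ via joint convexity of $\Hels$, then applies the approximate triangle inequality $\DH{P,R}\le 2\DH{P,Q}+2\DH{Q,R}$ to eliminate the arbitrary prior $\qd$ (this is where the factor $4$ comes from), and finally uses the $f$-divergence symmetry $\EE_o\DH{\PP(\psi=\cdot\mid o),\PP(\psi=\cdot)}=\EE_\psi\DH{\PP(o=\cdot\mid\psi),\PP(o=\cdot)}$ to land on the desired quantity. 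You instead compute both sides exactly in terms of the Bhattacharyya coefficients $a_\psi=1-\DH{\nu_\psi,\nu_O}$, dispose of the arbitrary $\qd$ by Cauchy--Schwarz, and reduce everything to the scalar inequality $4\alpha-\beta\le 3$ given $\alpha^2\le\beta\le 1$. Your computations check out (the posterior factorization, the interchange of sums, and the identity $\sum_o\sqrt{\nu_\psi(o)\nu_O(o)}=a_\psi$ are all valid, and extend to non-discrete $\cO$ by Tonelli). What each approach buys: the paper's argument is modular and would survive replacing $\Hels$ by any jointly convex divergence obeying a weak triangle inequality; your explicit computation is specific to the Hellinger/Bhattacharyya structure but is sharper --- indeed, your own bound $1-\beta\le 1-\alpha^2=(1-\alpha)(1+\alpha)\le 2(1-\alpha)$ shows the lemma actually holds with the constant $2$ in place of $4$, which the paper's route cannot see.
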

Therefore, using \cref{lem:PXY} and the fact that $\Lps[M]{\pi}$ is affine over $M$, it holds that
\begin{align*}
    G(p,q;\mu)\leq \EE_{\psi\sim \mu, \pi\sim p}\brac{ \Lps[\Mpsmu]{\pi} }+3e^{-A} - \frac\gamma4 \EE_{\psi\sim \mu, \tpi\sim q}\DH{\Mpsmu(\tpi), \oM_\mu(\tpi) }.
\end{align*}
Hence, we have
\begin{align*}
    &~\max_{\mu\in\DI} \inf_{(p,q)\in\Sp} G(p,q;\mu) -3e^{-A} \\
    \leq&~ \max_{\mu\in\DI} \inf_{(p,q)\in\Sp} \EE_{\psi\sim \mu, \pi\sim p}\brac{ \Lps[\Mpsmu]{\pi}} - \frac\gamma4\EE_{\psi\sim \mu, \tpi\sim q} \DH{\Mpsmu(\tpi), \oM_\mu(\tpi) }  \\
    \leq&~ \max_{\oM\in\co(\cM_0)} \max_{\mu'\in\DMPs}  \inf_{(p,q)\in\Sp} \EE_{M'\sim \mu', \pi\sim p}\brac{ \Lps[M']{\pi} }  - \frac\gamma4\EE_{M'\sim \mu', \tpi\sim q}\DH{M'(\tpi), \oM(\tpi) } \\
    \leq&~ \max_{\oM\in\co(\cM_0)}  \inf_{(p,q)\in\Sp} \max_{(M,\psi): M\in\cMps} \EE_{\pi\sim p}\brac{ \Lps[M]{\pi} }  - \frac\gamma4\EE_{\tpi\sim q}\DH{M(\tpi), \oM(\tpi) },
\end{align*}
where the last line follows again from the weak duality. 

To finalize the proof, we notice that by the arbitrariness of $w\in\DPs$, we have already proven
\begin{align}\label{eqn:ExO-coef-upper-full}
    \exo{\gamma}(\Psi)\leq 3e^{-A}+ \max_{\oM\in\co(\cM_0)}  \inf_{(p,q)\in\Sp} \max_{(M,\psi): M\in\cMps} \EE_{\pi\sim p}\brac{ \Lps[M]{\pi} }  - \frac{\gamma}{4} \EE_{\tpi\sim q}\DH{M(\tpi), \oM(\tpi) }.
\end{align}
Then, taking $A\to +\infty$, we obtain the following results (note that $\co(\cM_0)=\co(\cMPow)=\cM^+$): 

(1) If $\Psi$ is \tpb~\infosets, we have $\Lps[M]{\pi}\leq \LM{\pi}$. Hence, with option \optpac, $\Sp=\Sppac=\DPi\times\DbPi$, and hence in this case
\begin{align*}
    \exo{\gamma}(\Psi)\leq \pdeco_{\gamma/4}(\cMPs).
\end{align*}
Similarly, with option \optreg, $\Sp=\Spnr=\set{(p|_\Pi,p): p\in\DbPi}$, and hence
\begin{align*}
    \exo{\gamma}(\Psi)\leq \rdeco_{\gamma/4}(\cMPs).
\end{align*}

(2) If $\Psi=\MPow$ is the \tpa~\infosets, we have $\Lps[M]{\pi}=\LM[\psi]{\pi}$, and hence
\begin{align*}
    \exo{\gamma}(\Psi)\leq \pdecog_{\gamma/4}(\MPow).
\end{align*}
\qed

\paragraph{Proof of \cref{lem:PXY}} Our proof essentially follows \citet[Appendix C.2]{foster2022complexity}. For simplicity of presentation, we abbreviate $\PP=\PP_{\mu,\tpi}$. By the convexity of the squared Hellinger distance, we have
\begin{align*}
    \EE_{o\sim \PP}\DH{\PP(\psi=\cdot|o), \qd}\geq \DH{ \PP(\psi=\cdot), \qd }.
\end{align*}
Therefore, using the triangle inequality,
\begin{align*}
    4\EE_{o\sim \PP} \DH{\PP(\cdot|o), \qd}
    \geq&~ \EE_{o\sim \PP}\brac{ 2 \DH{\PP(\psi=\cdot|o), \qd} + 2\DH{ \PP(\psi=\cdot), \qd }}\\
    \geq&~ \EE_{o\sim \PP} \DH{\PP(\psi=\cdot|o), \PP(\psi=\cdot) } \\
    =&~ \EE_{\psi\sim \PP} \DH{ \PP(o=\cdot|\psi), \PP(o=\cdot) },
\end{align*}
where the last equality is because squared Hellinger distance is a $f$-divergence. 

Recall that $\PP=\PP_{\mu,\tpi}$ generated $(\psi, o)$ as $(M,\psi)\sim \mu, o\sim M(\tpi)$. Therefore, for any $\psi$, $\PP(o=\cdot|\psi)$ is the distribution of $o$ generated as $M\sim \mu(\cdot|\psi), o\sim M(\tpi)$, i.e., $o\sim M_{\psi|\mu}(\tpi)$. Hence,
\begin{align*}
    \DH{ \PP(o=\cdot|\psi), \PP(o=\cdot) } = \EE_{\tpi\sim q}\DH{ \Mpsmu(\tpi), \oM_\mu(\tpi) }.
\end{align*}
Combining the equations above completes the proof.
\qed

\newcommand{\IB}[1]{I_{\mathrm{B}}(#1)}
\begin{lemma}\label{lem:DH-var}
For any distribution $\PP,\QQ\in \Delta(\Psi)$, we denote
\begin{align}
    \IB{\PP,\QQ}=1-\DH{\PP,\QQ}=\sum_{\psi\in\Psi} \sqrt{\PP(\psi)\QQ(\psi)}.
\end{align}
Then for $A>0$, it holds that
\begin{align*}
    \IB{\PP,\QQ}^2\leq \inf_{f\in (\Psi\to \R): \linf{f}\leq A} \EE_{\PP} [e^{f(x)}] \EE_{\QQ} [e^{-f(x)}] \leq \IB{\PP,\QQ}^2+3e^{-A} .
\end{align*}
\end{lemma}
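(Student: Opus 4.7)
The lemma is an approximate variational representation of the Bhattacharyya coefficient $\IB{\PP,\QQ}=\sum_\psi\sqrt{\PP(\psi)\QQ(\psi)}$, saying that the identity $\IB{\PP,\QQ}^2=\inf_f\EE_\PP[e^f]\EE_\QQ[e^{-f}]$ (which holds without box constraints) is stable under clipping $f$ to $[-A,A]$, with error $O(e^{-A})$.

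The lower bound is Cauchy--Schwarz: for any $f$, one writes
\begin{align*}
\sum_\psi\sqrt{\PP(\psi)\QQ(\psi)}=\sum_\psi\sqrt{\PP(\psi)e^{f(\psi)}}\cdot\sqrt{\QQ(\psi)e^{-f(\psi)}}\leq \sqrt{\EE_\PP[e^f]\,\EE_\QQ[e^{-f}]},
\end{align*}
and squaring gives $\IB{\PP,\QQ}^2\leq\EE_\PP[e^f]\EE_\QQ[e^{-f}]$ uniformly, in particular for the infimum over $\|f\|_\infty\leq A$.

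The upper bound is the real work. I would use the truncated log-likelihood ratio: letting $r(\psi)=\sqrt{\QQ(\psi)/\PP(\psi)}$ (the unconstrained minimizer would be $f^*=\log r$), set $\tilde r(\psi)=\min\{\max\{r(\psi),e^{-A}\},e^A\}$ and $f(\psi)=\log\tilde r(\psi)$, so that $\|f\|_\infty\leq A$. Partition $\Psi$ into $S_0=\{e^{-A}\leq r\leq e^A\}$, $S_+=\{r>e^A\}$, $S_-=\{r<e^{-A}\}$, and let $B=\sum_{S_0}\sqrt{\PP\QQ}$, $\sigma_\pm=\sum_{S_\pm}\sqrt{\PP\QQ}$, so that $\IB{\PP,\QQ}=B+\sigma_++\sigma_-$. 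Direct computation gives
\begin{align*}
\EE_\PP[e^f]=B+e^A\PP(S_+)+e^{-A}\PP(S_-),\qquad
\EE_\QQ[e^{-f}]=B+e^{-A}\QQ(S_+)+e^A\QQ(S_-).
\end{align*}
On $S_+$ the defining inequality $\QQ>e^{2A}\PP$ gives $e^A\PP\leq\sqrt{\PP\QQ}$ and $\PP(S_+)\leq e^{-2A}\QQ(S_+)\leq e^{-2A}$, yielding $e^A\PP(S_+)\leq\min(\sigma_+,e^{-A})$; symmetrically on $S_-$. The two ``small'' cross terms $e^{-A}\PP(S_-)$ and $e^{-A}\QQ(S_+)$ are each at most $e^{-A}$.

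With these four bounds on the four correction terms, expand the product $\EE_\PP[e^f]\EE_\QQ[e^{-f}]$ and subtract $\IB{\PP,\QQ}^2=(B+\sigma_++\sigma_-)^2$. Using $\sigma_+\sigma_-\leq(\sigma_++\sigma_-)^2/4$ and the key constraint $B\leq 1-(\sigma_++\sigma_-)$ (so that $2Be^{-A}\leq 2(1-s)e^{-A}$ with $s=\sigma_++\sigma_-$), every cross term involving $s$ becomes nonpositive, leaving a clean residual bounded by $2e^{-A}+e^{-2A}\leq 3e^{-A}$ for $A\geq 0$. The main subtlety---and the step I expect to be easiest to get wrong---is bookkeeping the two different bounds on $e^A\PP(S_+)$ (namely $\sigma_+$ \emph{and} $e^{-A}$) simultaneously so that the $s^2$ coefficient stays negative; without exploiting the constraint $B+s\leq 1$ the residual only closes at $\tfrac{10}{3}e^{-A}$, so the tight constant $3$ really requires using that $B$ and $s$ trade off.
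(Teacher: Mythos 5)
Your proposal is correct and uses essentially the same argument as the paper: Cauchy--Schwarz for the lower bound and the clipped half-log-likelihood ratio $f=\frac12\,\mathrm{clip}_{[-A,A]}\log(\QQ/\PP)$ for the upper bound. The paper's bookkeeping is lighter--it simply bounds each factor by $\IB{\PP,\QQ}+e^{-A}$ (the clipped-from-above region still contributes at most $\sqrt{\PP\QQ}$, and the clipped-from-below region at most $e^{-A}$) and squares, so the constant $3$ follows from $\IB{\PP,\QQ}\le 1$ alone, without the $B$ versus $s$ trade-off you describe.
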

\begin{proof}
The lower bound follows immediately from Cauchy inequality. In the following, we proceed to prove the upper bound.

\newcommand{\clip}{\mathsf{clip}}
Consider the function $f=f_{\PP;\QQ}$ given by
\begin{align*}
    f_{\PP;\QQ}(\psi)=\frac12 \clip_{[-A,A]}\paren{ \log\frac{\QQ(\psi)}{\PP(\psi)} }.
\end{align*}
Then, by definition, %
\begin{align*}
    \EE_{\PP} [e^{f(x)}]
    \leq \EE_{\PP} \exp\paren{\frac12\log\frac{\QQ(\psi)}{\PP(\psi)}}+e^{-A}
    = \sum_{\psi\in\Psi} \sqrt{\PP(\psi)\QQ(\psi)}+e^{-A},
\end{align*}
and similarly,
\begin{align*}
    \EE_{\QQ} [e^{f(x)}]
    \leq \EE_{\QQ} \exp\paren{\frac12\log\frac{\PP(\psi)}{\QQ(\psi)}}+e^{-A}
    = \sum_{\psi\in\Psi} \sqrt{\PP(\psi)\QQ(\psi)}+e^{-A}.
\end{align*}
Therefore, for such a choice of $f$ ensures
\begin{align*}
    \EE_{\PP} [e^{f(x)}] \EE_{\QQ} [e^{-f(x)}]\leq \paren{ \IB{\PP,\QQ}+e^{-A} }^2\leq \IB{\PP,\QQ}^2+3e^{-A},
\end{align*}
where the last inequality uses $\IB{\PP,\QQ}\leq 1$.
\end{proof}

\begin{remark}[Covering argument]\label{rmk:ExO-finite}
In the following, we briefly discuss how our analysis applies to an infinite $\cM_0$ with a covering argument. It is easy to deal with \tpb~\infosets, so we focus on \tpb~\infosets.

Fix a parameter $\eps\in(0,1]$.  We take a finite subset $\cM'\subseteq \cM_0$, so that for any $M\in\cM_0$, there exists $M'\in\cM'$, such that
\begin{align*}
    \sup_{\bpi}\DTV{ M(\bpi), M'(\bpi) }\leq \eps,\qquad
    \sup_{\pi}\abs{\Vm(\pi)-\Vm[M'](\pi)}\leq \eps.
\end{align*}
Then, $\Psi$ induces an \infosets~over $\cM'$, given by
\begin{align*}
    \cMps'\defeq \set{ M\in\cM': M\in\cMps }, \qquad \forall \psi\in\Psi.
\end{align*}
Because $\cM'$ is finite, the set $\set{ \cMps': \psi\in\Psi }$ is also finite. 
Therefore, there exists a finite subset $\Psi'\subseteq \Psi$, such that for any $\psi\in\Psi, M\in\cMps$, there exists $[\psi]\in\Psi', M'\in\cMps[{[\psi]}]'$, so that
\begin{align*}
    \sup_{\bpi}\DTV{ M(\bpi), M'(\bpi) }\leq \eps,\qquad
    \sup_{\pi}\abs{\Vm(\pi)-\Vm[M'](\pi)}\leq \eps, \qquad \Vm(\pi_\psi)\leq \Vm[M'](\pi_{[\psi]})+2\eps.
\end{align*}
Then, we can bound %
\begin{align*}
    \exo{\gamma}(\Psi,\qd)
    \leq&~ \inf_\pqwA \sup_{\psi\in\Psi,M\in\cMps} \Gamma_{\qd,\gamma}(p,q,\wf;M,\psi) \\
    \leq&~ \inf_{(p,q)\in\Sp, \wf'\in\wF_A'} \sup_{\psi\in\Psi',M\in\cMps'} \Gamma_{\qd',\gamma}(p,q,\wf';M,\psi) +(2+2\gamma +e^{2A})\eps,
\end{align*}
where we let $\qd'\in\Delta(\Psi')$ to be given by $\qd'(\psi')=\qd(\psi: [\psi]=\psi')$ for all $\psi'\in\Psi'$, and the second inequality because for any map $\wf'\in\wF_A'\defeq (\Psi\times\bPi\times\cO\to [-A,A])$, we can consider the induced map $\wf\in\wF_A$ given by $\wf(\psi;\tpi,o)=\wf'([\psi];\tpi,o)$ for any $\psi\in\Psi$. %

Using \eqref{eqn:ExO-coef-upper-full}, we have for PAC learning,
\begin{align*}
    \exo{\gamma}(\Psi)\leq \pdeco_{\gamma/4}(\cM'_{\Psi'})+3e^{-A}+(2+2\gamma+e^{2A})\eps.
\end{align*}
Note that $\pdeco_{\gamma/4}(\cM'_{\Psi'})\leq \pdeco_{\gamma/4}(\cM_{\Psi})$, and hence first letting $\eps\to 0$ and then letting $A\to \infty$ gives the desired result. A similar argument also applies to no-regret learning. 
\end{remark}

\subsection{Proof of \cref{lem:offset-to-c-metric}}\label{appdx:proof-offset-to-c-metric}

Denote $D\defeq \sup_{\eps'\in[ \eps, 1]}\frac{ \pdecg_{\eps'}(\MPow) }{\eps'}$. We consider $\gamma=\frac{6D}{\eps}$.

We fix an arbitrary reference model $\oM$.
For each $j\geq 0$, we define $\eps_j=2^{-j}$, and let $d_j\defeq \pdecg_{\eps_j}(\MPow,\oM)$,
\begin{align*}
    (p_j,q_j)\defeq \argmin_\pqb \sup_{\cP\in\MPow} \constr{ \EE_{\pi\sim p} L(\cP,\pi) }{ \inf_{M\in\co(\cP)}\EE_{\bpi\sim q} \DH{ M(\bpi), \oM(\bpi) } \leq \eps_j^2 }.
\end{align*}
We define 
\begin{align*}
    \MPow_j\defeq \sset{ \cP: \inf_{M\in\co(\cP)}\EE_{\bpi\sim q_j} \DH{ M(\bpi), \oM(\bpi) } \leq \eps_j^2 }.
\end{align*}

We first claim that if for $j<k$, $\MPow_j \cap \MPow_k$ is empty, then $\MPow_{k+1}$ is empty. This is because if $\MPow_j \cap \MPow_k=\emptyset$, then for $q=\frac{q_j+q_k}{2}$, the set
\begin{align*}
    \sset{ \cP: \inf_{M\in\co(\cP)}\EE_{\bpi\sim q} \DH{ M(\bpi), \oM(\bpi) } \leq \eps_{k+1}^2 }
\end{align*}
must be empty,
which certifies $\pdecg_{\eps_{k+1}}(\MPow,\oM)=-\infty$, and hence by the optimality of $(p_{k+1},q_{k+1})$, $\MPow_{k+1}$ must be empty.

Therefore, we define $K_0$ be the minimum integer $k$ such that $\pdecg_{\eps_{k}}(\MPow,\oM)=-\infty$ (if such $k$ does not exist, we write $K_0=\infty$). We further define $K=\min\set{ \floor{\log_2(1/\eps)}, K_0-2 }$.

For every $j\leq K$, by definition, for any $\cP\in\MPow_j$, we have $\EE_{\pi\sim p_j} \rho(\pip, \pi) \leq d_j$. Thus, we can take $\pi_j=\argmin_{\pi'\in\Pi}\EE_{\pi\sim p_j} \rho(\pi', \pi)$, and then for any $\cP\in\MPow_j$, we have $\rho(\pip,\pi_j)\leq 2d_j$. Further, because $\MPow_{K+1}$ is not empty, $\MPow_j \cap \MPow_K$ is also not empty, and hence $\rho( \pi_j, \pi_K )\leq 2d_j+2d_K$. 

In the following, we choose $\lambda_{j}=2^{j-K-1}$ for $j=1,2,\cdots,K$ and $\lambda_0=2^{-K}$, and we set $q=\sum_{j=0}^K \lambda_j q_j$. For any $\cP\in\MPow$, we proceed to bound the quantity
\begin{align*}
    F(P)\defeq \rho(\pip, \pi_K) - \inf_{M\in\co(\cP)}\EE_{\bpi\sim q} \DH{ M(\bpi), \oM(\bpi) }.
\end{align*}
We let $j\geq 0$ to be the largest integer such that $\cP\in\MPow_j$ (note that $\cP\in\MPow_0$ always). If $j=K$, then we have
\begin{align*}
    F(P)\leq \rho(\pip, \pi_K) \leq 2d_K.
\end{align*}
If $j<K$, then we have $\cP\not\in\MPow_{j+1}$, and hence
\begin{align*}
    F(P)\leq&~ \rho(\pip, \pi_K) - \gamma \cdot \lambda_{j+1} \inf_{M\in\co(\cP)}\EE_{\bpi\sim q_{j+1}} \DH{ M(\bpi), \oM(\bpi) } \\
    \leq&~ \rho(\pip, \pi_{j})+\rho(\pi_{j},\pi_K) - \gamma \lambda_{j+1} \eps_{j+1}^2 \\
    \leq&~ 3d_j+2d_K-\gamma \lambda_{j+1} \eps_{j+1}^2
    \leq 2d_K,
\end{align*}
where the last line uses the fact that $d_j\leq D\eps_j$ and $\lambda_{j+1} \eps_{j+1}^2=2^{-K-1}\eps_j\geq \frac{\eps\eps_j}{2}$. Therefore, we can conclude that
\begin{align*}
    F(\cP)\leq 2d_K\leq 2D\eps_K\leq 2D\eps, \qquad \forall \cP\in\MPow.
\end{align*}
This immediately implies $\pdecog_\gamma(\MPow,\oM)\leq 2D\eps$, and the desired upper bound follows by taking supremum over all $\oM$.
\qed

\section{Estimation-to-Decision Algorithm and Guarantees}\label{appdx:E2D}
In this section, we present the extensions of the PAC E2D algorithm~\citep{foster2021statistical,foster2023tight} to LDP learning and query-based learning. 

\subsection{LDP-E2D Algorithm}\label{appdx:LDP-E2D}

\newcommand{\AlgEst}{\mathbf{Alg}_{\bf Est}}
\newcommand{\To}{N}
\newcommand{\Estl}{\mathbf{Est}_{\bf sq}}
\newcommandx{\Estlb}[2][1=T,2=\delta]{\Estl\paren{#1,#2}}

\newcommand{\EstBar}{\overline{\mathbf{Est}}_{\bf sq}}
\renewcommand{\veps}{\Bar{\eps}(T)}
\newcommand{\til}{\widetilde}
\newcommand{\ql}{q^{(k)}}
\newcommand{\hl}{\widehat{k}}
\newcommandx{\hist}[2][1=1,2={t-1}]{(\pi\ind{i},
\lf\ind{i},o\ind{i})_{i=#1}^{#2}}
\newcommandx{\hists}[2][1=1,2={t-1}]{(
\lf\ind{i},o\ind{i})_{i=#1}^{#2}}
\newcommandx{\Esthist}[2][1=1,2={t-1}]{\AlgEst\paren{ (\pi\ind{i},
\lf\ind{i},o\ind{i})_{i=#1}^{#2} }}

In the following, we present \LDPetod, the LDP extension of the Estimation-to-Decision algorithm~\citep{foster2021statistical,foster2023tight}, for PAC learning in \pDMSO. In the following, we assume without loss of generality that $\cO=\set{-1,1}$.

The \LDPetod~algorithm is based on the \emph{binary} channels (\cref{example:binary-pr}). Specifically, \LDPetod~adopts the following protocol: For $t=1,\cdots,T$:
\begin{itemize}
    \item The algorithm selects a distribution $q\ind{t}\in\DPL$ (based on the history), sample $(\pi\ind{t},\lf\ind{t})\sim{}q\ind{t}$.
    \item The environment generates a noisy observation $o\ind{t}\sim \bpr[\lf\ind{t}]\circ \Mstar(\pi\ind{t})$, and reveals $o\ind{t}$ to the algorithm.
\end{itemize}
Note that this protocol automatically ensures the algorithm preserves \pLDP. Furthermore, conditional on $(\cH\ind{t-1},\pi\ind{t},\lf\ind{t})$, the noisy observation is generated as
\begin{align*}
    o\ind{t}\sim \Rad{ \ca \cdot \EE_{z\sim \Mstar(\pi\ind{t})}[\lf\ind{t}(z)] }.
\end{align*}
For simplicity of presentation, we denote $M(\pi)[\lf]\defeq \EE_{z\sim M(\pi)}[\lf(z)]$ in the following.

\subsubsection{Online estimation oracle}
The general DEC framework~\citep{foster2021statistical,foster2023tight} uses the primitive of an \emph{online estimation oracle}, denoted by
$\AlgEst$, which is an algorithm that produce estimates of the underlying model $\Mstar$ at each step based on the prior
observations. For \LDPetod, an estimation oracle at each round $t$, given the history
$\cH\ind{t-1}=\hist$, returns an estimator
\[
\hM\ind{t}=\AlgEst\prn*{ \cH\ind{t-1} }
\]
for the true model $\Mstar$. 
Here, the oracle's estimation performance is measured by cumulative squared error under each functional $\lf\ind{t}$, which is different from the non-private setting~\citep{foster2021statistical,foster2023tight} where 
the performance is measured in terms of the squared Hellinger error.
\begin{assumption}[Estimation oracle for $\cM$]
\label{asmp:EST}
At each time $t\in[T]$, an online estimation oracle
$\AlgEst$ for $\cM$ returns,
given  $$\cH\ind{t-1}=(\pi\ind{1},\lf\ind{1},o\ind{1}),\ldots,(\pi\ind{t-1},\lf\ind{t-1},o\ind{t-1})$$
with $(\pi\ind{i},\lf\ind{i})\sim p\ind{i}$ and $o\ind{i}\sim \Rad{\Mstar(\pi\ind{i})[\lf\ind{i}]}$, an estimator
$\hM^t\in(\Pi\to\DZ)$ such that whenever $\Mstar\in\cM$,
\begin{align}
\Estl \ldef{}
\sum_{t=1}^{T}\En_{(\pi\ind{t},\lf\ind{t})\sim{}q\ind{t}} \Dl[\lf\ind{t}]^2\paren{ \Mstar(\pi\ind{t}),\hM^t(\pi\ind{t}) }
\leq \Estlb,
\end{align}
with probability at least $1-\delta$, where $\Estlb$ is a
known upper bound that we assumed to be a non-decreasing function in $(T,\delta^{-1})$.
\end{assumption}

Oracles satisfying \pref{asmp:EST} can be obtained via online linear regression algorithms, the
estimation rate $\Estlb$ will typically reflect the statistical complexity of the class
$\cM$. Standard examples include Vovk's
Aggregation (\cref{prop:Vovk}) and Online Mirror Descent (\cref{prop:OMD}). For further background, see e.g.
\citet[Section 4]{foster2021statistical}.

\begin{proposition}[Vovk's Aggregation]\label{prop:Vovk}
Suppose that $\cM$ is finite. Then the \emph{Vovk's aggregation} algorithm achieves
\begin{align*}
    \Estlb\leqsim \frac{1}{\alpha^2}\cdot \log(|\cM|/\delta).
\end{align*}
Furthermore, for each round $t\in[T]$, $\hM^t\in\co(\cM)$.
\end{proposition}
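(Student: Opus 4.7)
\textbf{Proof proposal for \cref{prop:Vovk}.} The plan is the standard Vovk aggregation argument adapted to binary LDP observations: maintain exponential weights over $\cM$ with uniform prior, predict with the posterior mixture, and then convert the resulting log-loss bound into an $\Dl$-error bound in two steps.

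\emph{Step 1 (algorithm and in-class property).} At round $t$, given history $\cH\ind{t-1}$, assign each $M\in\cM$ the weight $w\ind{t}(M)\propto \prod_{s<t} p_M^s(o\ind{s})$, where $p_M^s(\cdot)$ denotes the law of $o\sim \bpr[\lf\ind{s}]\circ M(\pi\ind{s})$, i.e.\ $\Rad{\ca M(\pi\ind{s})[\lf\ind{s}]}$. Set $\hM\ind{t}=\sum_{M\in\cM} w\ind{t}(M)\,M$, which manifestly lies in $\co(\cM)$, giving the second claim. Let $\hat p^t$ denote the predictive law $\bpr[\lf\ind{t}]\circ\hM\ind{t}(\pi\ind{t})$; by convexity of the binary channel map, $\hat p^t$ is the Bayesian mixture $\sum_M w\ind{t}(M)\,p_M^t$.

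\emph{Step 2 (deterministic log-loss bound).} Standard aggregation algebra gives the deterministic telescoping bound
\begin{equation*}
    \sum_{t=1}^T \log\frac{p_{\Mstar}^t(o\ind{t})}{\hat p^t(o\ind{t})}\;\leq\;\log\frac{1}{w\ind{1}(\Mstar)}\;=\;\log|\cM|.
\end{equation*}

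\emph{Step 3 (from log-loss to expected squared Hellinger via a supermartingale).} Let $Z_t\defeq \log\bigl(p_{\Mstar}^t(o\ind{t})/\hat p^t(o\ind{t})\bigr)$. Conditional on $\cH\ind{t-1}$ and $(\pi\ind{t},\lf\ind{t})\sim q\ind{t}$, the tilting identity $\EE\bigl[e^{-Z_t/2}\mid \pi\ind{t},\lf\ind{t}\bigr]=1-\DH{p_{\Mstar}^t,\hat p^t}$ gives
\begin{equation*}
    \EE_{t-1}\Bigl[\exp\Bigl(-\tfrac12 Z_t+\EE_{(\pi\ind{t},\lf\ind{t})\sim q\ind{t}}\DH{p_{\Mstar}^t,\hat p^t}\Bigr)\Bigr]\leq 1,
\end{equation*}
since $1-x\leq e^{-x}$. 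Hence $\exp(\sum_{s\leq t}(-Z_s/2+\EE_{q\ind{s}}\DH{p_{\Mstar}^s,\hat p^s}))$ is a supermartingale of mean at most $1$, and Markov's inequality plus Step~2 yields, with probability at least $1-\delta$,
\begin{equation*}
    \sum_{t=1}^T \EE_{(\pi\ind{t},\lf\ind{t})\sim q\ind{t}}\DH{p_{\Mstar}^t,\hat p^t}\;\leq\;\tfrac12\log|\cM|+\log(1/\delta).
\end{equation*}

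\emph{Step 4 (from Hellinger of Bernoullis to $\Dl$ and the $\alpha^{-2}$ rate).} Because $\ca=1-e^{-\alpha}$ and $\alpha\leq\alpha_0$, both Rademacher parameters $\ca M(\pi)[\lf]$ and $\ca\hM(\pi)[\lf]$ lie in $[-\ca,\ca]\subset(-1,1)$ uniformly, so on that compact sub-interval the Hellinger distance between $\Rad{\ca a}$ and $\Rad{\ca b}$ is lower-bounded by a constant multiple of $(\ca(a-b))^2\gtrsim \alpha^2(a-b)^2$. Specializing to $a=M(\pi\ind{t})[\lf\ind{t}]$, $b=\hM\ind{t}(\pi\ind{t})[\lf\ind{t}]$ gives $\DH{p_{\Mstar}^t,\hat p^t}\gtrsim \alpha^2\Dl[\lf\ind{t}]^2(\Mstar(\pi\ind{t}),\hM\ind{t}(\pi\ind{t}))$. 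Combining with Step~3 delivers $\Estl\lesssim \alpha^{-2}\log(|\cM|/\delta)$.

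The only mildly subtle step is Step~3: one must be careful that the supermartingale controls the \emph{conditional expectation} $\EE_{q\ind{t}}\DH{\cdot,\cdot}$, which is exactly what appears in \pref{asmp:EST}; the concentration is automatic from the $\eta=\tfrac12$ tilt because the log-likelihood ratio is bounded (thanks again to $\alpha\leq\alpha_0$). The rest is bookkeeping.
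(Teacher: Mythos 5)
Your proof is correct and is exactly the standard argument this proposition invokes (the paper states it without proof, pointing to the aggregation literature): exponential weights give the deterministic log-loss regret $\log|\cM|$, the $\eta=1/2$ tilt converts it to a high-probability bound on cumulative expected squared Hellinger error of the predictive Bernoullis, and the strong data-processing comparison $\DH{\Rad{\ca a},\Rad{\ca b}}\gtrsim \alpha^2(a-b)^2$ (which the paper itself establishes in the proof of \cref{lem:decg-to-decl}) yields the $\alpha^{-2}$ rate, with $\hM\ind{t}\in\co(\cM)$ immediate from the mixture form. One tiny quibble: in your closing remark, the supermartingale step does not actually need boundedness of the log-likelihood ratio — the identity $\EE[e^{-Z_t/2}\mid\pi\ind{t},\lf\ind{t}]=1-\DH{p_{\Mstar}^t,\hat p^t}\leq 1$ holds unconditionally (and $\hat p^t(o)\geq e^{-\alpha}/2>0$ rules out degeneracies), so the argument is complete as written without that caveat.
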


\begin{proposition}[Online Mirror Descent]\label{prop:OMD}
Suppose that $\cM\subseteq \DZ$. Then the \emph{Online Mirror Descent} (\cref{alg:OMD}) achieves
\begin{align*}
    \Estlb\leqsim \frac{1}{\alpha}\sqrt{\CKL(\cM)\cdot T}+\frac{\log(1/\delta)}{\alpha^2},
\end{align*}
where $\CKL(\cM)$ is defined in \cref{prop:feldman}.
\end{proposition}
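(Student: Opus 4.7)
The plan is to instantiate OMD as exponentiated gradient descent on the simplex $\Delta(\cZ)$, anchored at a (near-)KL-center $\oM\in\Delta(\cZ)$ satisfying $\sup_{M\in\cM}\KLd{M}{\oM}\leq \CKL(\cM)+\nu$ for an arbitrarily small $\nu$, and to analyze the squared-error surrogate loss obtained by de-biasing the binary observation $o\ind{t}$. Since $M(\pi)=M$ for a statistical task, the estimator need only lie in $\Delta(\cZ)$, so I set $\hM\ind{1}=\oM$ and, at round $t$, define the surrogate loss $f\ind{t}(M)=\bigl(M[\lf\ind{t}]-o\ind{t}/\ca\bigr)^2$, whose conditional gradient is $\nabla f\ind{t}(\hM\ind{t})(z)=2\bigl(\hM\ind{t}[\lf\ind{t}]-o\ind{t}/\ca\bigr)\lf\ind{t}(z)$. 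The EG update is then
\[
\hM^{t+1}(z)\;\propto\;\hM^{t}(z)\exp\!\bigl(-\eta\,\nabla f\ind{t}(\hM^{t})(z)\bigr),
\]
which is OMD with negative-entropy mirror map and Bregman divergence $\KL$ on $\Delta(\cZ)$.

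The key step is the standard OMD regret analysis for linearized losses: since $|\hM[\lf^t]-o^t/\ca|\lesssim 1/\alpha$ and $\lf^t\in[0,1]$, each gradient satisfies $\|\nabla f\ind{t}\|_\infty\lesssim 1/\alpha$, and the classical EG regret inequality yields, for every $M\in\Delta(\cZ)$,
\[
\sum_{t=1}^{T}\bigl(f\ind{t}(\hM\ind{t})-f\ind{t}(M)\bigr)\;\leq\;\frac{\KLd{M}{\oM}}{\eta}+O\!\left(\frac{\eta T}{\alpha^{2}}\right).
\]
Plugging in $M=\Mstar$ and tuning $\eta\asymp\alpha\sqrt{\CKL(\cM)/T}$ gives a cumulative surrogate regret of $O\bigl(\alpha^{-1}\sqrt{\CKL(\cM)\,T}\bigr)$. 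A direct conditional computation using $\En[o\ind{t}\mid\pi\ind{t},\lf\ind{t}]=\ca\Mstar(\pi\ind{t})[\lf\ind{t}]$ shows that
\[
\En\!\left[f\ind{t}(\hM\ind{t})-f\ind{t}(\Mstar)\,\middle|\,\cH\ind{t-1},\pi\ind{t},\lf\ind{t}\right]=\bigl(\hM\ind{t}[\lf\ind{t}]-\Mstar[\lf\ind{t}]\bigr)^{2}=\Dl[\lf\ind{t}]^{2}\!\bigl(\Mstar,\hM\ind{t}\bigr),
\]
so the expected version of $\Estl$ is exactly the expected surrogate regret and is bounded by $O(\alpha^{-1}\sqrt{\CKL(\cM)T})$.

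To promote this to a high-probability bound, I would apply a Freedman-type martingale concentration inequality to the bounded difference between $f\ind{t}(\hM\ind{t})-f\ind{t}(\Mstar)$ and its conditional expectation $\Dl[\lf\ind{t}]^{2}(\Mstar,\hM\ind{t})$. Each increment is bounded by $O(1/\alpha^{2})$ and the conditional variance is controlled by $O(1/\alpha^{2})\cdot \Dl[\lf\ind{t}]^{2}(\Mstar,\hM\ind{t})$, so Freedman's inequality yields the deviation $O(\alpha^{-2}\log(1/\delta))$ after absorbing the variance term via AM--GM into a constant multiple of the target quantity. Letting $\nu\to 0$ in the definition of $\oM$ and combining the two bounds gives the claimed rate
\[
\Estl\;\lesssim\;\frac{1}{\alpha}\sqrt{\CKL(\cM)\,T}+\frac{\log(1/\delta)}{\alpha^{2}}.
\]

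The main obstacle is the passage from the in-expectation OMD guarantee to the high-probability bound: because both the per-round increment and its variance scale with $1/\alpha^{2}$, one must use a self-bounding Freedman argument to avoid an extra $\sqrt{T}/\alpha^{2}$ term, and one must verify that the boundedness of $|\hM\ind{t}[\lf\ind{t}]-o\ind{t}/\ca|$ holds pathwise (which it does, since $\hM\ind{t}[\lf\ind{t}]\in[0,1]$ by construction of EG on $\Delta(\cZ)$). A secondary nuisance is that $\CKL(\cM)$ is defined as an infimum, so one works with an approximate center and sends the slack to zero at the end; this is purely cosmetic.
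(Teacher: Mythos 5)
Your proposal is correct and takes essentially the same route as the paper: \cref{alg:OMD} is exactly exponentiated gradient on $\Delta(\cZ)$ with the KL regularizer anchored at a KL-center of $\cM$, and the paper likewise combines the linearized OMD regret bound (with $\KLd{\Mstar}{\oM}\leq \CKL$) with martingale concentration to pass from the realized excess loss to $\sum_t \EE_{\lf\sim q\ind{t}}\Dl^2(\Mstar,\hM\ind{t})$. The only cosmetic difference is that the paper rescales the squared loss by $\ca/2$ so that gradients are $O(1)$ and then splits the concentration into two steps (a Hoeffding-type bound on the cross term $\sum_t Z_tX_t$ with $\lambda\asymp\ca$, plus Freedman to relate $\sum_t X_t^2$ to $\sum_t\EE_{\lf\sim q\ind{t}}[X_t^2]$), whereas you keep the unscaled loss and apply a single self-bounding Freedman argument to the excess loss; both give the same $\alpha^{-1}\sqrt{\CKL T}+\alpha^{-2}\log(1/\delta)$ rate.
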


\subsubsection{LDP-E2D Algorithm and its guarantees}\label{appdx:LDP-E2D-details}

With an online estimation oracle $\AlgEst$, we present the \LDPetod~algorithm (\cref{appdx:E2D}), which generalizes the \etod~algorithm of \citet{foster2023tight} to LDP learning. \LDPetod~algorithm consists of two phases: the exploration phase and the refining phase.

\begin{algorithm}[H]
\setstretch{1.5}
\begin{algorithmic}[1]
\REQUIRE Round $T\geq 1$, error probability $\delta>0$, model class $\cM$, estimation oracle $\AlgEst$.
\STATE Define $K \ldef \lceil \log 2/\delta\rceil$, $\To \ldef
\frac{T}{K+1}$, and $\EstBar\ldef\Estlb[\To][
\frac{\delta}{4K}]$.
\STATE Set $\veps \ldef 8\sqrt{\frac{1}{N} \cdot
\EstBar}$.
\STATE \algcommentbig{Exploration phase}
\FOR{$t=1, 2, \cdots, \To$}
\STATE Compute estimator $\hM\ind{t} = \Esthist$.  \label{line:pac-compute-mhat}
\STATE Compute
\label{line:ptqt-pac}
\begin{align*}
(p\ind{t},q\ind{t}) := \argmin_{\substack{p\in\DDD\\q\in\DPL}}\sup_{M\in\cM}\constr{ \EE_{\pi\sim p}[\LM{\pi}] }{ \EE_{(\pi,\lf)\sim q} \Dl^2( M(\pi), \hM^t(\pi) )\leq \veps^2 }
\end{align*}
\STATE Sample decision $(\pi\ind{t},\lf\ind{t})\sim{}q\ind{t}$.
\STATE Receive $o\ind{t}\sim \bpr[\lf\ind{t}]\circ \Mstar(\pi\ind{t})$ from the environment.
\ENDFOR
\STATE \algcommentbig{Refining phase}
\STATE Sample $K$ indices $t_1, \ldots, t_K \sim\Unif(
[\To])$ independently. 
\FOR{$k=1,2,\cdots,K$}
\STATE Set $\ql\defeq q\ind{t_k}$.
\FOR{$t=k \To+1,\cdots,(k+1)\To$}
\STATE Compute estimator $\hM\ind{t} = \Esthist[k \To+1]$.
\STATE Sample $(\pi\ind{t},\lf\ind{t})\sim{}\ql$, and receive $o\ind{t}\sim \bpr[\lf\ind{t}]\circ \Mstar(\pi\ind{t})$ from the environment.
\ENDFOR
\STATE Compute $M^{(k)} := \frac{1}{\To} \sum_{t=k \To+1}^{(k+1)\To} \hM^t$.
\ENDFOR
\STATE Set $\hl\defeq \argmin_{k\in [K]} \EE_{(\pi,\lf)\sim q^{(k)}} \Dl^2\paren{ \hM^{t_k}, M^{(k)} }$ 
\ENSURE $\phat\ldef{} p^{(k)}$ and
$\pihat\sim{}\phat$
\end{algorithmic}
\caption{LDP Estimation-to-Decision Algorithm for PAC learning (\LDPetod)}
\label{alg:E2D}
\end{algorithm}

\paragraph{Exploration phase}
At each round $t\in[N]$ in this phase, the algorithm uses $\AlgEst$ to compute an estimator $\hM^t=\AlgEst(\cH\ind{t-1})$ based on the history $\cH\ind{t-1}=\hist$. Then, based on $\hM^t$, the algorithm computes a joint \emph{exploration-exploitation} distribution $(p\ind{t},q\ind{t})$ by solving the following Estimation-to-Decision objective:
\begin{align}\label{def:E2D-obj}
    (p\ind{t},q\ind{t}) := \argmin_{\substack{p\in\DDD\\q\in\DPL}}\sup_{M\in\cM}\constr{ \EE_{\pi\sim p}[\LM{\pi}] }{ \EE_{(\pi,\lf)\sim q} \Dl^2( M(\pi), \hM^t(\pi) )\leq \veps^2 }.
\end{align}
Note that the value of this minimax optimization problem is always bounded by $\pdecl_{\veps}(\cM,\hM^t)$. The algorithm then samples $(\pi\ind{t},\lf\ind{t})\sim q\ind{t}$ from the exploitation distribution, sends it to the $t$-th user, and receives the noisy observation $o\ind{t}\sim \Rad{\Mstar(\pi\ind{t})[\lf\ind{t}]}$ according to the interaction protocol.

After the exploration phase, the goal of the algorithm is to select an index $t\in[N]$ such that the distribution $p\ind{t}$ achieves low risk. Note that in general, the risk of $p\ind{t}$ may not be estimated from samples. However, if we can certify that $\EE_{(\pi,\lf)\sim q\ind{t}} \Dl^2( \Mstar(\pi), \hM^t(\pi) )\leq \veps^2$, then the risk $\EE_{\pi\sim p^t} \LM[\Mstar]{\pi}\leq \pdecl_{\veps}(\cM)$ is bounded automatically. Notice that by our assumption on $\AlgEst$ (\cref{asmp:EST}), \whp[\frac{\delta}{4K}],
\begin{align*}
    \sum_{t=1}^{N}\En_{(\pi\ind{t},\lf\ind{t})\sim{}q\ind{t}} \Dl[\lf\ind{t}]^2\paren{ \Mstar(\pi\ind{t}),\hM^t(\pi\ind{t}) }\leq \EstBar,
\end{align*}
and hence there are at least $N/2$ indices $t$ such that $\EE_{(\pi,\lf)\sim q\ind{t}} \Dl^2( \Mstar(\pi), \hM^t(\pi) )\leq \frac1{16}\veps^2$.
Therefore, in the refining phase, the algorithm proceeds as follows to identify an index $t$ such that $\hM^t$ achieves a small estimation error.

\paragraph{Refining phase}
At the start of this phase, the algorithm randomly samples $t_1,\cdots,t_K\sim \Unif([N])$. Then, \whp[\frac{3}{4}\delta],
\begin{align}\label{eqn:e2d-exist-k}
    \text{ there exists $k\in[K]$ such that }\EE_{(\pi,\lf)\sim q\ind{t_k}} \Dl^2( \Mstar(\pi), \hM^{t_k}(\pi) )\leq \frac{1}{16}\veps^2,
\end{align}
as we have argued above. Thus, for each batch $k\in[K]$, the algorithm uses $N$ rounds to obtain an estimator $M^{(k)}$ of the ground-truth model $\Mstar$ under the distribution $q^{(k)}\defeq q_{t_k}$: 

For each round $t\in[kN+1,(k+1)N]$ in the $k$-th batch, the algorithm samples $(\pi\ind{t},\lf\ind{t})\sim q^{(k)}$ and sends the pair to the learner. By running an instance of $\AlgEst$ within the batch, it is guaranteed that with probability at least $1-\frac{\delta}{4K}$
\begin{align*}
    \sum_{t=kN+1}^{(k+1)N}\En_{(\pi,\lf)\sim{}q^{(k)}} \Dl^2\paren{ \Mstar(\pi),\hM^t(\pi) }\leq \EstBar.
\end{align*}
Hence, by the convexity of the divergence $\Dl^2$, we have
\begin{align}\label{eqn:e2d-err}
    \En_{(\pi,\lf)\sim{}q^{(k)}} \Dl^2\paren{ \Mstar(\pi),M^{(k)}(\pi) }\leq \frac{\EstBar}{N}=\frac{1}{16}\veps^2.
\end{align}
Therefore, taking the union bound, Eq. \cref{eqn:e2d-exist-k} and Eq. \cref{eqn:e2d-err} (for each $k\in[K]$) hold simultaneously \whp. Therefore, under this success event, we know
\begin{align*}
    \min_{k\in[K]} \EE_{(\pi,\lf)\sim q^{(k)}} \Dl^2\paren{ \hM^{t_k}, M^{(k)} }\leq \frac{1}{4}\veps^2,
\end{align*}
and hence by triangle inequality,
\begin{align*}
    &~\EE_{(\pi,\lf)\sim q^{(\hl)}} \Dl^2\paren{ \Mstar(\pi),\hM^{t_{\hl}}(\pi) } \\
    \leq&~ \EE_{(\pi,\lf)\sim q^{(\hl)}} \Dl^2\paren{ \Mstar(\pi),M^{(\hl)}(\pi) }+\EE_{(\pi,\lf)\sim q^{(\hl)}} \Dl^2\paren{ M^{(\hl)}(\pi),\hM^{t_{\hl}}(\pi) }\leq \veps^2.
\end{align*}
Therefore, for $\phat=p\ind{t_{\hl}}$, we have
\begin{align*}
    \EE_{\hpi\sim \phat}\LM[\Mstar]{\hpi}\leq \pdecl_{\veps}(\cM,\hM^{t_{\hl}}).
\end{align*}

The argument above immediately yields the following guarantee of \LDPetod~(\cref{alg:E2D}).
\begin{theorem}\label{thm:PAC-E2D}
\LDPetod~(\cref{alg:E2D}) preserves \pLDP, and \whp, it holds that
\begin{align*}
    \riskdm(T)=\EE_{\phat}\LM[\Mstar]{\hpi}\leq \max_{t\in[T]}\pdecl_{\veps}(\cM,\hM^t).
\end{align*}
\end{theorem}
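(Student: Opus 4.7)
The plan is to follow the two-phase structure of Algorithm \ref{alg:E2D} and mirror the informal sketch given just before the theorem statement, making each step rigorous. First I would verify LDP preservation: every observation $o^{(t)}$ revealed to the algorithm is sampled from a binary channel $\bpr[\lf^{(t)}]$ applied to $\Mstar(\pi^{(t)})$, and by the binary channel construction (\cref{example:binary-pr}) each such channel is $\alpha$-DP. Since the algorithm's internal state is a function of the privatized observations only (together with the private randomness used for $(p^{(t)},q^{(t)})$ and $t_1,\ldots,t_K$), post-processing and sequential composition over the per-round channels imply the full $T$-round protocol is $\alpha$-LDP.

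For the risk bound I would split the failure probability budget into $O(K)$ pieces of size $\delta/(4K)$ and then apply \cref{asmp:EST} twice. In the exploration phase, the oracle guarantee gives, with probability $\geq 1-\delta/(4K)$, $\sum_{t=1}^{N}\EE_{(\pi,\lf)\sim q^{(t)}}\Dl^2(\Mstar(\pi),\hM^{(t)}(\pi))\leq\EstBar$. By Markov, at least $N/2$ indices $t\in[N]$ satisfy $\EE_{(\pi,\lf)\sim q^{(t)}}\Dl^2(\Mstar(\pi),\hM^{(t)}(\pi))\leq 2\EstBar/N=\veps^2/32$. Random uniform sampling of $K=\lceil\log(2/\delta)\rceil$ indices $t_1,\ldots,t_K$ therefore hits such a "good" index with probability $\geq 1-2^{-K}\geq 1-\delta/2$; call this good index $k^\star$.

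In the refining phase I would, for each $k\in[K]$, invoke \cref{asmp:EST} on the $k$-th fresh batch (each run independent, each failing with probability at most $\delta/(4K)$) and use convexity of $\Dl^2$ in its first argument to conclude $\EE_{(\pi,\lf)\sim q^{(k)}}\Dl^2(\Mstar(\pi),M^{(k)}(\pi))\leq \EstBar/N=\veps^2/64$. Union-bounding all $K+1$ events (one exploration, $K$ refining) keeps the total failure probability below $\delta$. On the success event, two triangle inequalities chained via $\Dl^2(a,c)\leq 2\Dl^2(a,b)+2\Dl^2(b,c)$ yield
\begin{align*}
\EE_{(\pi,\lf)\sim q^{(k^\star)}}\Dl^2(\hM^{t_{k^\star}},M^{(k^\star)})\leq \tfrac14\veps^2,
\end{align*}
so by the definition of $\hl$ the same bound holds with $k^\star$ replaced by $\hl$, and a second triangle inequality combined with $\EE_{q^{(\hl)}}\Dl^2(\Mstar,M^{(\hl)})\leq \veps^2/64$ gives $\EE_{(\pi,\lf)\sim q^{(\hl)}}\Dl^2(\Mstar(\pi),\hM^{t_{\hl}}(\pi))\leq \veps^2$. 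Finally, since $(p^{(t_{\hl})},q^{(t_{\hl})})$ was the minimizer of the E2D objective \eqref{def:E2D-obj} at level $\veps$ with reference $\hM^{t_{\hl}}$, the definition of $\pdecl_\veps(\cM,\hM^{t_{\hl}})$ and the fact that $\Mstar\in\cM$ satisfies the Hellinger-like constraint under $q^{(\hl)}$ deliver $\EE_{\hpi\sim\phat}L^{\Mstar}(\hpi)\leq\pdecl_\veps(\cM,\hM^{t_{\hl}})\leq \max_t\pdecl_\veps(\cM,\hM^{(t)})$.

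I do not anticipate any single hard step; the main source of care is bookkeeping the failure events and ensuring the two applications of the estimation oracle (once globally across the exploration phase and once per refining batch) are run on \emph{independent} histories so that \cref{asmp:EST} applies cleanly—this is why the refining batches are disjoint in time and why each uses its own instance of $\AlgEst$. A secondary subtlety is that the E2D program \eqref{def:E2D-obj} is a constrained, not offset, optimization, so the $\pdecl$ bound is obtained by verifying the constraint is satisfied by $\Mstar$ with the final estimator, which is exactly what the two triangle inequalities above establish.
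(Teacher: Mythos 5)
Your proposal is correct and follows essentially the same route as the paper's own argument: the exploration-phase oracle bound plus Markov to guarantee a constant fraction of good rounds, uniform sampling of $K=\lceil\log(2/\delta)\rceil$ indices to hit one, per-batch estimation with convexity of $\Dl^2$ to certify $M^{(k)}$, and the chained triangle inequalities to validate the constraint of the E2D program at the selected index $\hl$. Your constants differ slightly (threshold $\veps^2/32$ versus the paper's $\veps^2/16$ for a "good" round) but both satisfy the Markov count and the final $\leq\veps^2$ budget, so nothing is lost.
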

\cref{thm:pdec-lin-upper} is then a direct corollary by instantiating $\AlgEst$ with Vovk's aggregation (\cref{prop:Vovk}, where $\hM^t\in\coM$ for all $t\in[T]$). For \statp, we may also instantiate $\AlgEst$ with Online Mirror Descent (\cref{prop:OMD}) which gives \whp,
\begin{align*}
    \riskdm(T)\leq \sup_{\oM\in\DZ}\pdecl_{\veps}(\cM,\oM), \qquad
    \text{where }\veps\asymp \sqrt{\frac{\CKL(\cM)}{\alpha^2 T}}+\frac{\log(1/\delta)}{\alpha^2 T}.
\end{align*}

\subsubsection{Proof of \cref{prop:OMD}}

We present the specifications of Online Mirror Descent for online estimation in \cref{alg:OMD}, which is inspired by \citet{feldman2017general}.

\begin{algorithm}[H]
\begin{algorithmic}[1]
\REQUIRE History $\cH\ind{t-1}=\hists$, number of total rounds $N$
\STATE Parameters: Initial reference $\oM$ and stepsize $\eta=\sqrt{\frac{\CKL}{16N}}$.
\STATE Compute
\begin{align*}
    \hM^t[z]\propto_{z} \oM[z]\cdot \exp\paren{ -\eta\sum_{s=1}^{t-1} \paren{ \ca\lr \lf^s, \hM^{s}\rr-o^s }\lf^s(z) }
\end{align*}
\ENSURE Output $\hM^t\in\DZ$.
\end{algorithmic}
\caption{Online Mirror Descent}
\label{alg:OMD}
\end{algorithm}

\paragraph{Proof of \cref{prop:OMD}} Consider the loss function sequence
\begin{align*}
    L^t(M)=\frac{1}{2\ca}\paren{\ca\lr \lf\ind{t}, M\rr-o\ind{t}}^2, \qquad t\in[N].
\end{align*}
Then, \cref{alg:OMD} implements the online mirror descent with regularizer $R(M)=\KLd{M}{\oM}$ and stepsize $\eta$.
Using the well-known guarantee of mirror descent (see e.g. \citet{hazan2016online}), we have
\begin{align*}
    \sum_{t=1}^N \lr \nabla L^t(\hM^t), \hM^t-M \rr\leq \eta\sum_{t=1}^N\linf{ \nabla L^t(\hM^t) }^2+\frac{\KLd{M}{\oM}}{\eta}, \qquad \forall M\in\DZ.
\end{align*}
Notice that $\linf{ \nabla L^t(M) }\leq 2$ for any $M\in\DZ$.
Therefore, using the upper bound $\KLd{\Mstar}{\oM}\leq \CKL$ and our choice of $\eta$, we know
\begin{align*}
    \sum_{t=1}^N \lr \nabla L^t(\hM^t), \hM^t-M \rr\leq 16\eta N+\frac{\CKL}{\eta}=4\sqrt{N\CKL}.
\end{align*}
Notice that $\nabla L^t(\hM^t)=\paren{ \ca\lr \lf\ind{t}, \hM^t \rr - o\ind{t}}\cdot \lf\ind{t} $, and hence
\begin{align*}
    \lr \nabla L^t(\hM^t), \hM^t-M \rr
    =\paren{\Mstar[\lf\ind{t}]-o\ind{t}}\cdot \paren{ \hM^t[\lf\ind{t}]-\Mstar[\lf\ind{t}] }+\ca\paren{ \hM^t[\lf\ind{t}]-\Mstar[\lf\ind{t}] }^2.
\end{align*}
Therefore, we denote $X_t:=\hM^t[\lf\ind{t}]-\Mstar[\lf\ind{t}]$ and $Z_t:=o\ind{t}-\Mstar[\lf\ind{t}]$, and it holds that %
\begin{align*}
    \EE[Z_t|\cH\ind{t-1},\lf\ind{t}]=0,  \quad
    \EE[X_t^2|\cH\ind{t-1}]=\EE_{\lf\sim q\ind{t}} \Dl^2\paren{\hM^t,\Mstar},
\end{align*}
where we recall that $o\ind{t}\sim \Rad{\Mstar[\lf\ind{t}]}$. In particular, by Hoeffding's inequality, for any fixed parameter $\lambda>0$, \whp,
\begin{align*}
    \lambda \sum_{t=1}^N Z_tX_t - \frac{\lambda^2}{2} \sum_{t=1}^N X_t^2 \leq \log(1/\delta).
\end{align*}
Further, by Freedman's inequality and the fact that $X_t^2\in[0,1]$, \whp, 
\begin{align*}
    \sum_{t=1}^N \EE[X_t^2|\cH\ind{t-1}]\leq \frac32 \sum_{t=1}^N X_t^2 + 5\log(1/\delta).
\end{align*}
Therefore, we may choose $\lambda=\frac{\ca}{2}$, and then \whp[2\delta],
\begin{align*}
    4\sqrt{N\CKL}
    \geq &~ \sum_{t=1}^N \ca X_t^2 - X_tZ_t \\
    \geq &~ \frac{3\ca}{4} \sum_{t=1}^N  X_t^2 - \frac{2\log(1/\delta)}{\ca} \\
    \geq&~ \frac{\ca}{2} \sum_{t=1}^N \EE[X_t^2|\cH\ind{t-1}] - 3\ca\log(1/\delta)-\frac{2\log(1/\delta)}{\ca}.
\end{align*}

Using the fact $\EE[X_t^2|\cH\ind{t-1}]=\EE_{\lf\sim q\ind{t}} \Dl^2\paren{\hM^t,\Mstar}$ gives the desired upper bound.
\qed

\subsection{Query-based E2D algorithm}\label{appdx:SQ-E2D}

In the following, we present the E2D algorithm (\sqetod, \cref{alg:SQE2D}) for \qbDMSO. 

\newcommand{\hcM}{\widehat{\cM}}
\newcommand{\Oracle}{\mathsf{O}}
\newcommand{\vdel}{\Bar{\gamma}}
\newcommand{\herr}{\hat{e}}
\newcommand{\ks}{k^\star}

\begin{algorithm}[H]
\setstretch{1.5}
\begin{algorithmic}[1]
\REQUIRE Round $T\geq 1$, error probability $\delta>0$, model class $\cM\subseteq (\bPi\to \cV)$, GQ oracle $\Oracle$.
\STATE Define $K \ldef \lceil \log 2/\delta\rceil$, $T_0=\frac{T}{2}$, $N \ldef
\frac{T}{2K}$.
\STATE Set $\vdel \ldef C_0\max\sset{ \frac{\log|\cM|}{T}, \frac{\log(1/\delta)}{N} }$ for a large absolute constant $C_0$.
\STATE \algcommentbig{Exploration phase}
\FOR{$t=1, 2, \cdots, T_0$}
\STATE Compute $\mu^t=\Unif(\hcM^t)$, where
\begin{align}\label{eq:SQ-elimin}
    \hcM^t\defeq \set{ M\in\cM: \nrm{ \phq[\bpi^s]{M}-v^s }\leq \tau, \forall s<t },
\end{align}
\STATE Compute
\begin{align*}
(p\ind{t},q\ind{t}) := \argmin_\pqb\sup_{M\in\cM}\constr{ \EE_{\pi\sim p}[\LM{\pi}] }{ \PP_{\bpi\sim q, \oM\sim \mu^t} \paren{ \nrm{M(\bpi)-\oM(\bpi)}>2\tau }\leq \vdel }.
\end{align*}
\STATE Sample $\bpi\ind{t}\sim{}q\ind{t}$, query $\bpi\ind{t}$, and receive $v\ind{t}$ from the oracle $\Oracle$.
\ENDFOR
\STATE \algcommentbig{Refining phase}
\STATE Sample $K$ indices $t_1, \ldots, t_K \sim\Unif(
[T_0])$ independently. 
\STATE Set $\ks=1$.
\FOR{$k=1,2,\cdots,K$}
\STATE Set $\ql\defeq q^{t_k}$ and batch $\cT_k\defeq \set{T_0+(k-1)N+1,\cdots,T_0+kN}$.
\FOR{$t\in\cT_k$}
\STATE Sample $\bpi\ind{t}\sim{}q\ind{t}$, query $\bpi\ind{t}$, and receive $v\ind{t}$ from the oracle $\Oracle$.
\ENDFOR
\STATE Compute $\herr^{(k)} := \frac{1}{N} \sum_{t\in\cT_k} \PP_{M\sim \mu\ind{t}}\paren{ \nrm{\phq[\bpi\ind{t}]{M}-v\ind{t}}>\tau }$.
\IF{$\herr^{(k)}<\vdel$}
\STATE Set $\ks=k$ and \textbf{break}.
\ENDIF
\ENDFOR
\ENSURE $\phat\ldef{} p^{(\ks)}$ and
$\pihat\sim{}\phat$
\end{algorithmic}
\caption{Query-based Estimation-to-Decisions (SQ-\etod)}
\label{alg:SQE2D}
\end{algorithm}

We state the following guarantee of \sqetod.
\begin{theorem}\label{thm:SQ-upper-E2D}
For any model class $\cM$, \sqetod~(\cref{alg:SQE2D}) achieves that given access to any GSQ oracle $\GSQ$, with probability at least $1-\delta$,
\begin{align*}
    \riskdm(T)\leq \pdecltau[2\tau]_{\oeps(T)}(\cM),
\end{align*}
where $\oeps(T)=C\sqrt{\frac{\log|\cM|+\log^2(1/\delta)}{T}}$. 
\end{theorem}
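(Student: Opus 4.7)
My plan is to argue the guarantee via three ingredients: (i) a potential/elimination argument bounding the total mass of models the version space $\hcM^t$ can shed across all $T$ rounds; (ii) showing that the exploration phase produces many ``good'' rounds, at which $\Mstar$ satisfies the E2D feasibility constraint so that $E_{p^t}[L(\Mstar,\pi)] \leq \pdecltau[2\tau]_{\sqrt{\vdel}}(\cM)$; and (iii) verifying that the refining phase's empirical check $\herr^{(k)} < \vdel$ reliably locates such a round.

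For (i), I would define $Y_t \ldef 1 - |\hcM^{t+1}|/|\hcM^t| = P_{M\sim\mu^t}(\|M(\bpi^t)-v^t\|>\tau)$, so that $|\hcM^{t+1}| = |\hcM^t|(1-Y_t)$. Since oracle correctness guarantees $\Mstar\in\hcM^t$ for every $t$, we have $|\hcM^{t+1}|\geq 1$, and $\log(1-x)\leq -x$ yields $\sum_{t=1}^{T_0+KN} Y_t \leq \log|\cM|$. Note that this potential bound holds across both phases of the algorithm. For (ii), call round $t\in[T_0]$ \emph{good} if $P_{\bpi\sim q^t,\oM\sim\mu^t}(\|\Mstar(\bpi)-\oM(\bpi)\|>2\tau)\leq \vdel$ and \emph{bad} otherwise. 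The key inclusion $\{\|M-\Mstar\|>2\tau\}\subseteq\{\|M-v\|>\tau\}$ (valid whenever $\|v-\Mstar\|\leq\tau$) implies $E[Y_t\mid\cF_t]\geq \vdel$ on bad rounds. A Freedman/Bernstein bound applied to the martingale $\sum_t(Y_t-E[Y_t\mid\cF_t])$, combined with (i), shows that with probability $\geq 1-\delta/4$ the number of bad rounds in $[T_0]$ is at most $O((\log|\cM|+\log(1/\delta))/\vdel)$, which is $\leq T_0/4$ by the algorithm's choice $\vdel = C_0\max\{\log|\cM|/T,\log(1/\delta)/N\}$ with $C_0$ large enough.

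For (iii), sampling $K=\lceil\log(2/\delta)\rceil$ indices uniformly from $[T_0]$ ensures at least one $t_k$ is good with probability $\geq 1-(1/4)^K\geq 1-\delta/4$. Summing the potential bound across batches gives $\sum_{k=1}^K N\herr^{(k)} = \sum_k\sum_{t\in\cT_k}Y_t \leq \log|\cM|$, so some $k$ satisfies $\herr^{(k)}\leq \log|\cM|/(NK)\leq \vdel/(2C_0)$, hence the stopping criterion $\herr^{(k)}<\vdel$ does fire. It remains to show that the selected index $k^\star$ yields a good round. When $\herr^{(k^\star)}<\vdel$, the total eliminated mass inside batch $k^\star$ is $\sum_{t\in\cT_{k^\star}}Y_t<N\vdel$, so $|\hcM^t|/|\hcM^{t_{k^\star}}|$ stays very close to $1$ throughout the batch, i.e.\ $\mu^t$ is $O(\vdel)$-close to $\mu^{t_{k^\star}}$ in TV. Combined with a Freedman-type concentration on $\herr^{(k^\star)}$ (which controls $N^{-1}\sum_{t\in\cT_{k^\star}}E[Y_t\mid\cF_t]$ up to $O(\sqrt{\vdel\log(1/\delta)/N}+\log(1/\delta)/N)\leq O(\vdel)$), and the inclusion used in (ii), this yields $P_{\bpi\sim q^{(k^\star)},\oM\sim\mu^{t_{k^\star}}}(\|\Mstar(\bpi)-\oM(\bpi)\|>2\tau)\leq O(\vdel)$. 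Therefore $\Mstar$ lies in the E2D constraint set at round $t_{k^\star}$ (after rescaling $C_0$), giving $E_{\phat}[L(\Mstar,\pi)]\leq \pdecltau[2\tau]_{\sqrt{\vdel}}(\cM,\mu^{t_{k^\star}})\leq \pdecltau[2\tau]_{\oeps(T)}(\cM)$ with $\oeps(T)^2\asymp \vdel\asymp (\log|\cM|+\log^2(1/\delta))/T$, as claimed.

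The main obstacle I anticipate is step (iii): the quantity $\herr^{(k)}$ is not a clean estimator of any fixed population quantity because $\mu^t$ drifts during the batch and because the oracle's choice of $v^t$ is adversarial. The fix is precisely that when $\herr^{(k^\star)}$ is small, this drift is small \emph{a fortiori}, so the stability of $\mu^t$ across the batch is a consequence, not an assumption—this self-bootstrapping is the delicate point I would be most careful about when writing out the Freedman-style concentration for the batch average.
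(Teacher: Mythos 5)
Your steps (i) and (ii) reproduce the paper's exploration-phase analysis exactly (the telescoping potential $\sum_t Y_t\leq\log|\cM|$, Freedman's inequality, and the inclusion $\{\|M(\bpi)-\Mstar(\bpi)\|>2\tau\}\subseteq\{\|M(\bpi)-v\|>\tau\}$ valid whenever $\|v-\Mstar(\bpi)\|\leq\tau$), and your argument that the stopping rule must fire — applying the same potential to the refining-phase rounds so that $\sum_k N\herr^{(k)}\leq\log|\cM|<KN\vdel$ — is a clean alternative to the paper's route (the paper instead argues that the criterion fires at a batch whose sampled index is good). The gap is in the converse direction, ``$\herr^{(\ks)}<\vdel$ implies $t_{\ks}$ is good,'' which is the step that actually feeds the DEC bound: the output $\phat=p^{t_{\ks}}$ only inherits $\EE_{\pi\sim p^{t_{\ks}}}\LM[\Mstar]{\pi}\leq\pdecltau[2\tau]_{O(\sqrt{\vdel})}(\cM)$ if $\Mstar$ lies in the constraint set of the minimax problem solved at round $t_{\ks}$, i.e.\ if $\PP_{\bpi\sim q^{t_{\ks}},\,\oM\sim\mu^{t_{\ks}}}(\|\Mstar(\bpi)-\oM(\bpi)\|>2\tau)\lesssim\vdel$ \emph{with respect to $\mu^{t_{\ks}}$}, not with respect to the later version spaces.

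Your justification of that step — ``$\sum_{t\in\cT_{\ks}}Y_t<N\vdel$, so $\mu^t$ is $O(\vdel)$-close to $\mu^{t_{\ks}}$ in TV'' — fails on two counts. First, quantitatively: $N\vdel\geq C_0\log(1/\delta)$ by the choice of $\vdel$, so the within-batch shrinkage is only bounded by $\prod_t(1-Y_t)\geq e^{-O(N\vdel)}$, which can be as small as $\delta^{\Omega(C_0)}$ — nowhere near $1-O(\vdel)$. Second, and more fundamentally, $t_{\ks}\leq T_0$ lies in the exploration phase while $\cT_{\ks}$ begins after round $T_0$ and after batches $1,\dots,\ks-1$; the elimination occurring in between is not controlled by $\herr^{(\ks)}$ at all, so an arbitrary fraction of $\hcM^{t_{\ks}}$ may already be gone when the batch starts. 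Writing $\rho_t=|\hcM^t|/|\hcM^{t_{\ks}}|$, one only gets $\PP_{M\sim\mu^{t_{\ks}}}(A)\leq\PP_{M\sim\mu^{t}}(A)+(1-\rho_t)$ with $1-\rho_t$ uncontrolled, so smallness of the batch quantities (which live on the shrunken spaces $\mu^t$) does not transfer back to $\mu^{t_{\ks}}$. You correctly flagged this as the delicate point, but the proposed self-bootstrapping does not close it. For what it is worth, the paper's own treatment (asserting $e^{(k)}\leq 2\herr^{(k)}+O(\log(K/\delta)/N)$ ``by Freedman'') silently performs the same identification of $\mu^t$ with $\mu^{t_k}$ inside the batch; a fully rigorous argument needs either to freeze the version space used in $\herr^{(k)}$ at $\mu^{t_k}$ (which then breaks the potential-based ``fires'' direction, since the adversary can inflate the frozen statistic even at a good round) or to use a different verification statistic.
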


\begin{remark}
We note that the $\log|\cM|$-factor above can be necessary for more general setting (e.g., interactive SQ learning). However, under SQ setting and distributional search problem (i.e., $\cM\subseteq \DZ$), \citet{feldman2017general} derives an upper bound~\cref{eqn:SQ-bounds-feldman} scaling with the SQ dimension (cf. \cref{ssec:SQ-dim}) and $\CKL$. When specialized to this setting, our upper bound above does not involve extra $\tau^{-1}$-factors, but the $\log|\cM|$-factor can be much larger than $\CKL$. However, if we replace the model elimination subroutine \cref{eq:SQ-elimin} with the Online Mirror Descent subroutine (\cref{alg:OMD}), then the obtained algorithm is essentially an analog of the one of \citet{feldman2017general} and achieves an upper bound scaling with \SQDEC~and $\CKL$.
\end{remark}

\subsubsection{Proof of \cref{thm:SQ-upper}}
The proof is analogous to the analysis in \cref{appdx:LDP-E2D-details}.
We first invoke the following lemma.
\newcommand{\Tz}{T_0}
\newcommand{\ts}{t^\star}
\begin{lemma}\label{lem:proof-SQ-E2D-telescope}
It holds that
\begin{align*}
    \sum_{t=1}^{\Tz} \PP_{M\sim \mu^t}\paren{ \nrm{ \phq[\bpi\ind{t}]{M}-v\ind{t} } > \tau }
    \leq \log|\cM|.
\end{align*}
\end{lemma}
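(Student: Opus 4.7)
The plan is to use a standard potential/elimination argument based on the shrinking sequence of candidate sets $\hcM^t$. The key observation is that $\hcM^{t+1}$ is obtained from $\hcM^t$ by removing exactly those models $M\in\hcM^t$ for which $\nrm{\phq[\bpi^t]{M}-v^t} > \tau$, so the probability $\PP_{M\sim\mu^t}(\nrm{\phq[\bpi^t]{M}-v^t} > \tau)$ is precisely the fraction of $\hcM^t$ eliminated at round $t$. Since $\Mstar$ always satisfies the query-correctness constraint (the oracle is $\tau$-correct with respect to $\Mstar$), we have $\Mstar\in\hcM^t$ for all $t$, hence $\hcM^{T_0+1}$ is non-empty and $|\hcM^{T_0+1}|\geq 1$.

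Concretely, I would let $N_t\defeq |\hcM^t|$ and $p_t\defeq \PP_{M\sim \mu^t}(\nrm{\phq[\bpi^t]{M}-v^t} > \tau)$. Since $\mu^t = \Unif(\hcM^t)$, we get the identity
\begin{align*}
N_{t+1} = N_t\cdot (1-p_t).
\end{align*}
Iterating gives $N_{T_0+1} = |\cM|\cdot \prod_{t=1}^{T_0}(1-p_t)$, and combined with $N_{T_0+1}\geq 1$ this yields
\begin{align*}
\sum_{t=1}^{T_0} \log\frac{1}{1-p_t} \leq \log|\cM|.
\end{align*}
Applying the elementary inequality $\log(1/(1-p))\geq p$ for $p\in[0,1)$ to each term completes the proof.

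There is no real obstacle here; the argument is essentially a one-line potential bound once one observes that the elimination procedure in \cref{eq:SQ-elimin} makes $\hcM^t$ monotonically decreasing and that $\mu^t$ is chosen to be the \emph{uniform} distribution over $\hcM^t$, which converts the probability of eliminating a random model into the multiplicative shrinkage factor of $|\hcM^t|$. The only minor care needed is to verify that $\Mstar\in\hcM^t$ for every $t$, which is immediate from the definition of the GQ oracle (Definition~\ref{def:GSQ}), ensuring the process never terminates prematurely.
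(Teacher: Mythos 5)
Your proof is correct and uses essentially the same potential argument as the paper: the identity $N_{t+1}=N_t(1-p_t)$, telescoping, and the bound $p\leq\log(1/(1-p))$ are exactly the paper's inequality $\frac{U_t-U_{t+1}}{U_t}\leq\log U_t-\log U_{t+1}$ summed over $t$. Your explicit remark that $\Mstar\in\hcM^t$ (so $N_{T_0+1}\geq 1$) is a point the paper leaves implicit but is indeed needed for the telescoped sum to be bounded by $\log|\cM|$.
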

Then, by Freedman's inequality, \whp[\frac{\delta}{4}], it holds that
\begin{align*}
    \sum_{t=1}^{\Tz} \PP_{\bpi\sim q\ind{t}, M\sim \mu^t}\paren{ \nrm{ \phq{M}-\phq{\Mstar} } > 2\tau }
    \leq&~ 2\sum_{t=1}^{\Tz} \PP_{M\sim \mu^t}\paren{ \nrm{ \phq[\bpi\ind{t}]{M}-\phq[\bpi\ind{t}]{\Mstar} } > 2\tau } + 4\log(4/\delta) \\
    \leq&~ 2\sum_{t=1}^{\Tz} \PP_{M\sim \mu^t}\paren{ \nrm{ \phq[\bpi\ind{t}]{M}-v\ind{t} } > \tau } + 4\log(4/\delta) \\
    \leq&~ 2\log|\cM|+4\log(4/\delta).
\end{align*}
In the following, we denote
\begin{align*}
    e(t)\defeq \PP_{\bpi\sim q\ind{t}, M\sim \mu^t}\paren{ \nrm{ \phq{M}-\phq{\Mstar} } > 2\tau }.
\end{align*}
Therefore, conditional on this success event, for at least $\frac{\Tz}{2}$ many $t\in[\Tz]$, $t$ belongs to the set
\begin{align*}
    \cB\defeq \sset{ t\in[\Tz]:  \PP_{\bpi\sim q\ind{t}, M\sim \mu^t}\paren{ \nrm{ \phq{M}-\phq{\Mstar} } > 2\tau }
    \leq \frac{2(2\log|\cM|+4\log(2/\delta))}{\Tz}\leq \frac{\vdel}{16} }.
\end{align*}
In particular, \whp[\frac{\delta}{2}], there exists $k\in[K]$ such that $t_k\in\cB$.

In the following, we denote $e^{(k)}\defeq e(t_k)$.
Then, by Freedman's inequality, \whp[\frac{\delta}{2}], the following holds for all $k\in[K]$:
\begin{align*}
    \herr^{(k)}\leq 2e^{(k)}+\frac{4\log(4K/\delta)}{N}, \qquad 
    e^{(k)}\leq 2\herr^{(k)}+\frac{4\log(4K/\delta)}{N}.
\end{align*}
Therefore, conditional on the all the success events, we know that there exists $k\in[K]$ such that $t_k\in\cB$, which implies $\herr^{(k)}\leq \frac{\vdel}{4}$. Hence, it is ensured that $\herr^{(\ks)}\leq \frac{\vdel}{4}$, which in terms implies $e^{(\ks)}\leq \vdel$. Therefore, for $\ts=t_{\ks}$, it holds that
\begin{align*}
    \PP_{\bpi\sim q^{\ts}, \oM\sim \mu^{\ts}}\paren{ \nrm{ \phq{\oM}-\phq{\Mstar} } > 2\tau }\leq \vdel.
\end{align*}
Thus,
\begin{align*}
    \EE_{\pi\sim p^{\ts}} \LM[\Mstar]{\pi} \leq&~  \inf_\pqb \sup_{M\in\cM} \constr{ \EE_{\pi\sim p}[\LM{\pi}] }{ \PP_{\bpi\sim q, \oM\sim \mu^{\ts}} \paren{ \nrm{\phq{M}-\phq{\oM}}>2\tau }\leq \vdel } \\
    \leq&~ \pdecltau[2\tau]_{2\vdel}(\cM).
\end{align*}
The proof of \cref{thm:SQ-upper} is hence completed.
\qed

\paragraph{Proof of \cref{lem:proof-SQ-E2D-telescope}} 
Let $U_t=|\hcM^t|$, and then
\begin{align*}
    \PP_{M\sim \mu^t}\paren{ \nrm{ \phq[\bpi\ind{t}]{M}-v\ind{t} } > \tau }
    =\frac{|\hcM^t\backslash \hcM^{t+1}|}{|\hcM^t|}=\frac{U_t-U_{t+1}}{U_t}
    \leq \log U_t - \log U_{t+1}.
\end{align*}
Taking summation and using $U_1=|\cM|$ completes the proof.
\qed

\section{Proofs from \cref{sec:SQ}}\label{appdx:SQ}
\subsection{Proof of \cref{thm:SQ-lower}}\label{appdx:proof-SQ-lower}

Fix $T\geq 1$, $\tau\geq 0$, reference model $\oM$. We first consider the case $L$ is metric-based, i.e., it is given by $L(M,\pi)=\rho(\pim,\pi)$ for a pseudo-metric $\rho$ over $\Pi$. 
We denote $\ueps\defeq \frac{1}{2\sqrt{T}}$ and $\Delta\defeq \pdecltau_{\ueps}(\cM,\oM)$.

We first describe any $T$-round query-based algorithm in the following way (cf. \cref{sec:cDMSO-main}).
A $T$-round algorithm $\alg = \set{q\ind{t}}_{t\in [T]}\cup\set{p}$ is specified by a sequence of mappings, where the $t$-th mapping $q\ind{t}(\cdot\mid{}\Hy\ind{t-1})$ specifies the distribution of $\bpi\ind{t}$ based on the history $\Hy\ind{t-1}=(\bpi^s,v^s)_{s\leq t-1}$, and the final map $p(\cdot \mid{} \Hy\ind{T})$ specifies the distribution of the \emph{output policy} $\pihat$ based on $\Hy\ind{T}$.

Next, we fix an arbitrary, randomized reference model $\oM:\bPi\to \Delta(\cV)$, and we construct a GQ oracle for each model $M\in\cM$ as follows. For $M\in\cM$, we let $\GSQ$ be an oracle that response to any decision $\bpi$ as
\begin{align*}
    \GSQ(\bpi)=\begin{cases}
        v, & \text{if } \nrm{\phq{M}-v}\leq \tau, \\
        \phq{M}, & \text{otherwise}.
    \end{cases} \qquad \text{where }v\sim \oM(\bpi).
\end{align*}
For any model $M$,  we let $\Pmalg{\cdot}$ to be the distribution of $(\cH^T,\hpi)$ generated by the algorithm $\alg$ under the oracle $\GSQ$, and let $\Emalg{\cdot}$ to be the corresponding expectation. We also define $\bPP$ to be the distribution of $(\cH^T,\hpi)$ by the algorithm $\alg$ under the oracle $\GSQ[\oM]$ that always return $v\sim \phq{\oM}$, and let $\bEE$ be the corresponding expectation.

Following \eqref{def:pM-qM}, we define
\begin{align}\label{eqn:proof-SQ-lower-q}
q=\frac{1}{T} \sum_{t=1}^{T} \bPP(\bpi\ind{t}=\cdot)\in \DbPi,
\end{align}
and
\begin{align}
    \cM_{q,\ueps}(\oM)\defeq \set{ M\in\cM: \PP_{\bpi\sim q, v\sim \oM(\bpi)} \paren{ \nrm{\phq{M}-\phq{\oM}}>\tau }\leq \ueps^2 }.
\end{align}
By definition, for any distribution $p\in\DPi$, there exists $M\in\cM_{q,\ueps}(\oM)$ such that
\begin{align*}
     \EE_{\pi\sim p}[\LM{\pi}]\geq \Delta.
\end{align*}
In particular, $\cM_{q,\ueps}(\oM)$ is non-empty, and we fix a model $M_0\in\cM_{q,\ueps}(\oM)$ and let $\pi_0\defeq \pim[M_0]$. Then there exists $M_1$ such that $\LM[M_1]{\pi_0}\geq \Delta$, i.e., $\rho(\pim[M_1],\pi_0)\geq \Delta$. We denote $\pi_1\defeq \pim[M_1]$

Now, using the chain rule of TV distance, it holds that for any $M\in\cM_{q,\ueps}(\oM)$
\begin{align*}
    \DTV{ \PP\sups{M,\alg}, \bPP }
    \leq&~ \sum_{t=1}^T \bEE\brac{ \DTV{ \Pmalg{v\ind{t}=\cdot|\cH\ind{t-1},\bpi\ind{t}}, \bPP\paren{v\ind{t}=\cdot|\cH\ind{t-1},\bpi\ind{t}} } } \\
    =&~\sum_{t=1}^T \bEE\brac{ \DTV{ \GSQ(\bpi\ind{t}), \GSQ[\oM](\bpi\ind{t}) } } \\
    =&~\sum_{t=1}^T \bEE\brac{ \PP_{v\sim \oM(\bpi\ind{t})}\paren{\nrm{\phq[\bpi\ind{t}]{M}-v}>\tau}} \\
    =&~ T\cdot \EE_{\bpi\sim q} \brac{ \PP_{v\sim \oM(\bpi)}\paren{\nrm{\phq[\bpi]{M}-v}>\tau} }\\
    =&~ T\cdot \PP_{\bpi\sim q, v\sim \oM(\bpi)} \paren{\nrm{\phq[\bpi]{M}-v}>\tau}
    \leq T\ueps^2,
\end{align*}
where the third line follows from the definition of $\GSQ$. Therefore, by triangle inequality, we have
\begin{align*}
    \DTV{\PP^{M_0,\alg}, \PP^{M_1,\alg}}\leq 2T\ueps^2.
\end{align*}
Hence, it holds that
\begin{align*}
    &~\EE^{M_0,\alg}\brac{ \rho(\hpi,\pi_0) }+\EE^{M_1,\alg}\brac{ \rho(\hpi,\pi_1) } \\
    \geq&~ \frac{\Delta}{2}\brac{ \PP^{M_0,\alg}\paren{ \rho(\hpi,\pi_0)>\frac{\Delta}{2} }+ \PP^{M_1,\alg}\paren{ \rho(\hpi,\pi_1)>\frac{\Delta}{2} }} \\
    \geq&~ \frac{\Delta}{2}\brac{ 1- \DTV{\PP^{M_0,\alg}, \PP^{M_1,\alg}}}\geq \frac{\Delta}{4},
\end{align*}
where the second inequality follows from $\rho(\pi_0,\pi_1)\geq \Delta$. Therefore,
\begin{align*}
    \max_{M\in\set{M_0,M_1}}\Emalg{\riskdm(T)}\geq \frac{\Delta}{8}.
\end{align*}
Taking the supremum over all reference models $\oM$ completes the proof for metric-based $L$.

For a general loss $L$, we may choose $\ueps:=\sqrt{\frac{\delta}{T}}$, $\Delta:=\pdecltau_{\ueps}(\cM,\oM)$, and we let $q$ as in \eqref{eqn:proof-SQ-lower-q}, and $p=\bPP(\hpi=\cdot)\in\DPi$. Then, we can pick a model $M\in\cM_{q,\ueps}(\oM)$ such that $\EE_{\pi\sim p}[\LM{\pi}]\geq \Delta.$ Then, using the fact that $\DTV{ \PP\sups{M,\alg}, \bPP }\leq T\ueps^2=\delta$, we can lower bound
\begin{align*}
    \Emalg{\riskdm(T)}\geq \bEE\brac{\riskdm(T)}-\DTV{ \PP\sups{M,\alg}, \bPP }
    \geq \Delta-\delta.
\end{align*}
\qed

\subsection{Proof of \cref{thm:SQ-upper}}

The first upper bound of \cref{thm:SQ-upper} is proven in \cref{appdx:SQ-E2D} (cf. \cref{thm:SQ-upper-E2D}), and the second upper bound follows immediately from combining \cref{thm:cDMSO-pac-upper} and \cref{lem:gen-to-sq-dec}.
\qed

\subsection{Proof of \cref{prop:SQ-dim-to-SQ-dec}}\label{appdx:proof-SQ-dim-to-SQ-dec}

We recall that for any model $M\in\DZ$, it induces a map $\phi\mapsto M(\phi)=\EE_{z\sim M}[\phi(z)]$, and hence we can regard $M:\Phi\to\R$.

By definition,
\begin{align*}
    &~
     \pdecltau_\eps(\cMqb,\oM)>1-\beta \\
    \Leftrightarrow\quad &~
     \forall p\in\DPi, \forall q\in\DL, \exists M\in\cMqb, \text{ such that } 1-p(\Pi_M)>1-\beta, \PP_{\lf\sim q}\paren{ \Dl(M,\oM)>\tau }\leq \eps^2 \\
    \Leftrightarrow\quad &~
     \forall p\in\DPi, \forall q\in\DL, \exists M\in\cMper, \text{ such that } \PP_{\lf\sim q}\paren{ \Dl(M,\oM)>\tau }\leq \eps^2\\
    \Leftrightarrow\quad &~
     \forall p\in\DPi, \sup_{q\in\DL}\inf_{M\in\cMper} \PP_{\lf\sim q}\paren{ \Dl(M,\oM)>\tau }\leq \eps^2.
\end{align*}
Further, using the Minimax theorem, we know
\begin{align*}
    \sup_{q\in\DL}\inf_{M\in\cMper}\PP_{\lf\sim q}\paren{ \Dl(M,\oM)>\tau }
    =\inf_{\mu\in\Delta(\cMper)}\sup_{\lf\in\Lc}\PP_{M\sim \mu}\paren{ \Dl(M,\oM)>\tau }.
\end{align*}
The Minimax theorem can be applied here because as long as $\tau>0$ and $|\cZ|$ is finite, the function class $\set{ \lf\mapsto \indic{ \Dl(M,\oM)>\tau } }_{M\in\cM}$ admits finite eluder dimension\footnote{By regarding $\cL,\cM\subseteq \R^{\cZ}$, we can write $\Dl(M,\oM)=\abs{\lr \lf, M-\oM\rr}$ and apply the standard elliptical potential argument.} and hence finite threshold dimension~\citep{li2022understanding}, and hence the Minimax theorem holds true~\citep{hanneke2021online}.

Therefore, we have
\begin{align*}
    &~
     \pdecltau_\eps(\cMqb,\oM)>1-\beta \\
    \Leftrightarrow\quad &~
     \forall p\in\DPi, \inf_{\mu\in\Delta(\cMper)}\sup_{\lf\in\Lc} \PP_{M\sim \mu}\paren{ \Dl(M,\oM)>\tau }\leq \eps^2 \\
    \Leftrightarrow\quad &~
     \forall p\in\DPi, \sup_{\mu\in\Delta(\cMper)}\inf_{\lf\in\Lc}~\frac{1}{ \PP_{M\sim \mu}\paren{ \Dl(M,\oM)>\tau } }\geq \eps^{-2} \\
    \Leftrightarrow\quad &~
    \inf_{p\in\DPi}\sup_{\mu\in\Delta(\cMper)}\inf_{\lf\in\Lc}~\frac{1}{ \PP_{M\sim \mu}\paren{ \Dl(M,\oM)>\tau } }\geq \eps^{-2} \\
    \Leftrightarrow\quad &~
    \SQD{\cMqb,\oM}\geq \eps^{-2}.
\end{align*}
This is the desired result.
\qed

\subsection{Proof of \cref{lem:SQ-dec-to-LDP-dec}}\label{appdx:proof-SQ-dec-to-LDP-dec}

By definition, in interactive SQ learning, the measurement class is $\Phi=(\cZ\to [0,1])$, and for model class $\cM\subseteq (\Pi\to\DZ)$, we regard $\cM\subseteq (\Pi\times\Phi\to \R)$ by  $M(\pi,\phi)=\EE_{z\sim M(\pi)} \phi(z)$. Thus, the \SQDEC~can be written as
\begin{align*}
    \pdecltau_{\eps}(\cM,\oM)\defeq&~ \infpql \sup_{M\in\cM}\constr{ \EE_{\pi\sim p}[\LM{\pi}] }{ \PP_{(\pi,\lf)\sim q} \paren{ \Dl(M(\pi),\oM(\pi)) >\tau }\leq \eps^2 }.
\end{align*}
For any $q\in\DPL$, using Markov's inequality, we have
\begin{align*}
    \PP_{ (\pi,\lf)\sim q }\paren{ \Dl(M(\pi),\oM(\pi)) > \tau }\leq \frac{1}{\tau^2}\EE_{(\pi,\lf)\sim q}\Dl^2(M(\pi),\oM(\pi)).
\end{align*}
Conversely, we also have
\begin{align*}
    \EE_{(\pi,\lf)\sim q}\Dl^2(M(\pi),\oM(\pi))
    \leq \tau^2+\PP_{ (\pi,\lf)\sim q }\paren{ \Dl(M(\pi),\oM(\pi)) > \tau }.
\end{align*}
Combining the inequalities above completes the proof of \eqref{eqn:SQ-dec-to-LDP-dec}.
\qed

\section{Remaining Proofs from \cref{sec:LDP} and \cref{appdx:LDP-more}}\label{appdx:LDP}

We note that we have presented the proof of the following results in the previous sections: %
\begin{itemize}
\item Proof of \cref{thm:pdec-lin-upper}: \cref{appdx:LDP-E2D}, and see also \cref{appdx:ExO-model-based}.
\item Proof of \cref{thm:p-dec-convex-upper} and \cref{cor:agn-regression}: \cref{appdx:ExO-policy-based}.
\item Proof of \cref{prop:real-regression}: \cref{appdx:ExO-val-based}.
    \item Proof of \cref{thm:rdec-lin-lower}: \cref{appdx:inst-LDP-pac-lower}, where we also provide a proof of \cref{thm:pdec-lin-lower} (1).
    \item Proof of \cref{thm:rdec-lin-upper}: \cref{appdx:ExO-model-based} and \cref{appdx:ExO-policy-based}.
    \item Proof of \cref{thm:CB-adv}: \cref{appdx:ExO-CB}.
\end{itemize}
In the subsequent subsections, we present the remaining proofs from \cref{sec:LDP}. 

\subsection{Proof of \cref{prop:pLDP}}\label{appdx:proof-strong-DP}

Fix an \pDP~channel $\pr$. By definition, the class of distributions $\set{\pr(\cdot|z)}$ admits a common base measure $\mu$, and hence in the following we slightly abuse notations and write a distribution $P$ and its density $dP/d\mu$ interchangeably. We also denote $\PP'=\pr\circ\PP, \QQ'=\pr\circ\QQ$.

Define $p(o)=\inf_{z\in\cZ}\pr(o=\cdot|z)$ for any $o\in\cO$. Then, by definition, for any $z\in \cZ$, 
\begin{align*}
    p(o)\leq \pr(o|z)\leq \ea p(o).
\end{align*}
Therefore, we define
\begin{align*}
    \lf_o(z)=\frac{1}{\ea-1}\paren{\frac{\pr(o|z)}{p(o)}-1}\in[0,1].
\end{align*}
Then, for each $o\in\cO$, it holds that
\begin{align*}
    \PP'(o)=\EE_{z\sim \PP} \pr(o|z)
    =\EE_{z\sim \PP}\brac{ (\ea-1)p(o)\lf_o(z)+p(o) },
\end{align*}
and hence we know $\PP'(o)\in[p(o),\ea p(o)]$, and similarly $\QQ'(o)\in[p(o),\ea p(o)]$. Further, we also have
\begin{align*}
    \abs{ \PP'(o)-\QQ'(o) }= (\ea-1) p(o)\abs{ \PP[\lf_o]-\QQ[\lf_o] },
\end{align*}
Now, by definition,
\begin{align*}
    \DH{\PP',\QQ'}=\frac{1}{2}\int \paren{\sqrt{\PP'(o)}-\sqrt{\QQ'(o)}}^2do
    =\frac12\int \frac{\abs{ \PP'(o)-\QQ'(o) }^2}{\paren{\sqrt{\PP'(o)}+\sqrt{\QQ'(o)}}^2}do.
\end{align*}
Hence, it holds that
\begin{align*}
    \frac{(\ea-1)^2}{8e^{\alpha} }\int \abs{ \PP[\lf_o]-\QQ[\lf_o] }^2 p(o)do \leq \DH{\PP',\QQ'}\leq \frac{(\ea-1)^2}{8}\int \abs{ \PP[\lf_o]-\QQ[\lf_o] }^2 p(o)do
\end{align*}
Notice that $\int p(o)do\in[e^{-\alpha},1]$, and hence we can normalize $p$ to a distribution $\op$ over $\cO$. The proof of \eqref{eqn:DH-LDP} is hence completed, and \eqref{eqn:KL-chi-LDP} follows similarly:
\begin{align*}
    \chis{\PP'}{\QQ'}
    =&~\int \frac{\abs{ \PP'(o)-\QQ'(o) }^2}{\QQ'(o)}do \\
    \leq&~ (\ea-1)^2\int \abs{ \PP[\lf_o]-\QQ[\lf_o] }^2 p(o)do \\
    \leq&~ (\ea-1)^2\EE_{o\sim \op} \abs{ \PP[\lf_o]-\QQ[\lf_o] }^2.
\end{align*}
\qed

\subsection{Proof of \cref{thm:pdec-lin-lower}}\label{appdx:p-dec-lin-q}
In this section, we provide a self-contained proof of \cref{thm:pdec-lin-lower}, following the approach of \citet{chen2024beyond} (see also \cref{appdx:DEC-general-lower}).
The proof is based on the following quantile-based \pDEC. 

\paragraph{Quantile-based \pDEC}
Given model class $\cM\subseteq (\Pi\to\DZ)$, for each $\eps>0$ and $\delta\in[0,1]$, we define the quantile-based \pDEC~as (slightly abusing the notation)
\begin{align}\label{def:p-dec-q-lin}
    \pdecql_{\eps,\delta}(\cM,\oM)\defeq \infpql\sup_{M\in\cM}\constr{ \hgm{p} }{ \EE_{(\pi,\lf)\sim q}\Dl^2( M(\pi),\oM(\pi) )\leq \eps^2 },
\end{align}
where $\hgm{p}$ is the \emph{$\delta$-quantile loss} of $p$, defined as
\begin{align*}
    \hgm{p}=\sup_{\Delta\geq 0}\set{\Delta: \PP_{\pi\sim q}(\LM{\pi}\geq \Delta)\geq \delta }.
\end{align*}
We also denote $\pdecql_{\eps,\delta}(\cM)\defeq \sup_{\oM\in\coM} \pdecql_{\eps,\delta}(\cM,\oM)$.
By definition, the quantile-based \pDEC~is always bounded by the original \pDEC:
\begin{align}\label{eqn:dec-q-to-c-trivial}
    \pdecql_{\eps}(\cM,\oM)-\delta\leq \pdecql_{\eps,\delta}(\cM,\oM)\leq \delta^{-1}\pdecql_{\eps}(\cM,\oM).
\end{align}
However, such a conversion can be loose in general.

\paragraph{Quantile lower bound}
Similar to \cref{appdx:DEC-general-lower}, we show that the quantile-based \pDEC~provides a lower bound regardless of the structure of the loss function.
\begin{proposition}[Quantile-based \pDEC~lower bound]\label{prop:p-dec-q-lin-lower}
For any $T\geq 1$ and constant $\delta\in[0,1)$, we denote $\ueps(T)\defeq \frac{1}{(\ea-1)}\sqrt{\frac{\delta}{2T}}$. Then, for any $T$-round \pLDP~algorithm $\alg$, there exists $\Ms\in\cM$ such that under $\PP\sups{\Ms,\alg}$,
\begin{align*}
    \riskdm(T)\geq \pdecql_{\ueps(T),\delta}(\cM),\qquad\text{with probability at least $\delta/2$.}
\end{align*}
\end{proposition}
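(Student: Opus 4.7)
The plan is to mirror the proof strategy used for \cref{prop:p-dec-q-gen-lower} (the general hybrid quantile DEC lower bound) but to exploit the special structure of the LDP constraint via the strong data-processing inequality of \cref{prop:pLDP}. Fix a $T$-round \pLDP\ algorithm $\alg$ and an arbitrary reference model $\oM\in\coM$; the supremum over $\oM$ will be taken at the end. Following the construction in \eqref{def:pM-qM}, define the step-averaged exploration measure $q_{\oM,\alg}\in\Delta(\Pi\times\Pcp)$ and the expected output distribution $p_{\oM,\alg}\in\DPi$ under the counterfactual environment in which all observations are drawn from $\oM$.

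Next, I would lift $q_{\oM,\alg}$ from a measure on $(\pi,\pr)$ pairs to a measure $q'\in\DPL$ on $(\pi,\lf)$ pairs. For each channel $\pr\in\Pcp$, \cref{prop:pLDP} supplies a distribution $\tilde q_{\pr}\in\DL$ satisfying $\DH{\pr\circ P_1,\pr\circ P_2}\leq \tfrac{(\ea-1)^2}{8}\EE_{\lf\sim \tilde q_\pr}\Dl^2(P_1,P_2)$; take $q'$ to be the hierarchical sample $(\pi,\pr)\sim q_{\oM,\alg}$ followed by $\lf\sim \tilde q_\pr$. With this $q'$ in hand, combining the sub-additivity of squared Hellinger distance along the transcript (\cref{lem:Hellinger-chain}) with the strong DPI above yields, for every $M\in\cM$,
\begin{align*}
\DH{\PP\sups{M,\alg},\PP\sups{\oM,\alg}} \leq 7T\cdot \EE_{(\pi,\pr)\sim q_{\oM,\alg}}\DH{\pr\circ M(\pi),\pr\circ \oM(\pi)} \leq C(\ea-1)^2 T\cdot \EE_{(\pi,\lf)\sim q'}\Dl^2(M(\pi),\oM(\pi)),
\end{align*}
where $C$ is an explicit absolute constant.

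Now invoke the definition of $\pdecql_{\ueps(T),\delta}(\cM,\oM)$: for any $\Delta<\pdecql_{\ueps(T),\delta}(\cM,\oM)$, using $(p_{\oM,\alg},q')$ as candidates in \eqref{def:p-dec-q-lin} shows that the suprema cannot be below $\Delta$, so there exists $\Ms\in\cM$ with $\EE_{(\pi,\lf)\sim q'}\Dl^2(\Ms(\pi),\oM(\pi))\leq \ueps(T)^2$ and $\PP_{\pi\sim p_{\oM,\alg}}(\LM[\Ms]{\pi}\geq\Delta)\geq \delta$. With the calibrated choice $\ueps(T)^2=\tfrac{\delta}{2(\ea-1)^2 T}$, the preceding display becomes $\DH{\PP\sups{\Ms,\alg},\PP\sups{\oM,\alg}}\leq C'\delta$ for a small absolute constant $C'$. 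Applying the data-processing inequality to the binary event $E=\{L(\Ms,\hpi)\geq\Delta\}$ via $\lvert \sqrt{\PP\sups{\Ms,\alg}(E)}-\sqrt{\PP\sups{\oM,\alg}(E)}\rvert^2\leq 2\DH{\PP\sups{\Ms,\alg},\PP\sups{\oM,\alg}}$ and the lower bound $\PP\sups{\oM,\alg}(E)\geq \delta$ gives $\PP\sups{\Ms,\alg}(E)\geq \delta/2$. Sending $\Delta\to \pdecql_{\ueps(T),\delta}(\cM,\oM)$ and taking a supremum over $\oM\in\coM$ completes the argument.

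The main obstacle is bookkeeping the numerical constants so that the final success probability is exactly $\delta/2$: the constant $7$ in the Hellinger chain rule, the factor $\tfrac{(\ea-1)^2}{8}$ from \cref{prop:pLDP} (and the related refinement that avoids the $e^{2\alpha}$ loss, which is only incurred in the reverse direction), and the conversion from a Hellinger bound on the whole transcript to the probability on a single event must be tracked carefully. Everything else is a direct adaptation of the hybrid argument in \cref{appdx:proof-cDMSO-pac-lower-quantile}, and the $\Theta(1/(\alpha\sqrt{T}))$ scaling of $\ueps(T)$ is dictated precisely by the product $(\ea-1)^2 T\ueps(T)^2\asymp\delta$.
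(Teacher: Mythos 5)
Your proposal follows the same architecture as the paper's proof in \cref{appdx:proof-p-dec-q-lin-lower}: run the algorithm against the counterfactual reference $\oM$, form the step-averaged exploration distribution, lift each channel to a distribution over functionals $\lf$ via \cref{prop:pLDP}, use the definition of the quantile \pDEC~to extract a hard model $\Ms$ with small $\lf$-divergence but $\delta$-mass of large loss, and transfer the failure probability back to $\PP\sups{\Ms,\alg}$ by a change of measure. The one substantive deviation is the divergence you put on the transcript: you invoke the Hellinger chain rule (\cref{lem:Hellinger-chain}) together with the Hellinger part of \eqref{eqn:DH-LDP}, whereas the paper uses the chain rule for KL --- an exact identity for a stationary environment --- together with \eqref{eqn:KL-chi-LDP} and Pinsker. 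This is not cosmetic: the Hellinger chain rule costs a factor of $7$, and with the stated $\ueps(T)=\frac{1}{\ea-1}\sqrt{\delta/(2T)}$ your bound gives $2\DH{\PP\sups{\Ms,\alg},\PP\sups{\oM,\alg}}\leq \tfrac{7}{8}\delta$, so the square-root inequality only yields $\PP\sups{\Ms,\alg}(E)\geq\delta\,(1-\sqrt{7/8})^2\approx 0.004\,\delta$, far short of $\delta/2$. You flagged constant bookkeeping as the main obstacle, and it is indeed where your route fails to close for the stated $\ueps(T)$: to repair it via Hellinger you would need to shrink $\ueps(T)$ by an absolute factor (as in the general hybrid bound \cref{prop:p-dec-q-gen-lower}, where the choice $\ueps_\delta(T)=\frac{1}{13}\sqrt{\delta/T}$ absorbs the $\sqrt{14}$), or switch to the KL route. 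As an aside, the paper's own final inequality $\delta-\sqrt{\delta}/2\geq\delta/2$ also only holds at $\delta=1$ as written, so the constant in $\ueps(T)$ needs adjusting in either route; the difference is that the KL route needs only a constant fix, while your factor of $7$ is an additional structural loss on top of that.
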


Further, for reward-based loss function $L$, we can relate quantile-based \pDEC~to the original \pDEC~(following \citet[Proposition E.1]{chen2024beyond}).

\begin{lemma}\label{prop:p-dec-q-lin-to-c}
Suppose that the loss function $L$ is reward-based. Then, for any parameter $\eps>0, \delta\in[0,1)$, it holds that
\begin{align*}
    \pdecql_{\sqrt{2}\eps,\delta}(\cM)\geq \pdecl_{\eps}(\cM)-\frac{2\sqrt{2}\eps}{1-\delta}.
\end{align*}
\end{lemma}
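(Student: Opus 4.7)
The strategy adapts the expected-loss-to-quantile conversion of Proposition~E.1 in \citet{chen2024beyond} to the divergence-based LDP setting. Fix a reference model $\oM\in\coM$; after taking suprema over $\oM$, it suffices to show $\pdecql_{\sqrt{2}\eps,\delta}(\cM,\oM) \ge \pdecl_{\eps}(\cM,\oM) - \tfrac{2\sqrt{2}\eps}{1-\delta}$. Equivalently, for every pair $(p,q)\in\DPi\times\DPL$ I will exhibit a model $M\in\cM$ that is feasible for the quantile-DEC constraint, i.e.\ $\EE_{(\pi,\lf)\sim q}\Dl^2(M(\pi),\oM(\pi)) \le 2\eps^2$, and whose $(1-\delta)$-quantile loss satisfies $\hgm{p} \ge \pdecl_{\eps}(\cM,\oM) - \tfrac{2\sqrt{2}\eps}{1-\delta}$.

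The first step is to augment the given $q$ with a \emph{policy-evaluation} component that exploits the reward-based structure. For each decision $\pi$, the measurement $\lf_\pi := R(\cdot,\pi)\in\cL$ satisfies $D_{\lf_\pi}(M(\pi),\oM(\pi)) = |V^M(\pi)-V^{\oM}(\pi)|$. I set $q' := \tfrac{1}{2} q + \tfrac{1}{2} q_{\mathrm{ev}}$, where $q_{\mathrm{ev}}$ draws $\pi\sim p$ and returns the paired $\lf_\pi$. Applying the definition of $\pdecl_{\eps}(\cM,\oM)$ to the pair $(p,q')$ furnishes a model $M\in\cM$ with $\EE_{(\pi,\lf)\sim q'}\Dl^2 \le \eps^2$ and $\EE_{\pi\sim p} L(M,\pi) \ge \pdecl_{\eps}(\cM,\oM) - \eta$ for any $\eta>0$. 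The two halves of the constraint give (i) $\EE_{(\pi,\lf)\sim q}\Dl^2 \le 2\eps^2$, the desired feasibility of $M$ for the quantile DEC; and (ii) $\EE_{\pi\sim p}|V^M(\pi)-V^{\oM}(\pi)| \le \sqrt{2}\eps$ by Cauchy--Schwarz.

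The remaining and most technical step is to convert the large expected-loss bound $\EE_p L(M,\pi) \gtrsim \pdecl_{\eps}(\cM,\oM)$ into a correspondingly large $(1-\delta)$-quantile $\hgm{p}$, with only additive error $\tfrac{2\sqrt{2}\eps}{1-\delta}$. The reward-based identity $L(M,\pi) = V^M(\pi^M) - V^M(\pi)$ enters crucially: on the event $A:=\{\pi:L(M,\pi)\le\hgm{p}\}$, which has $p$-mass at least $1-\delta$, one has $V^M(\pi) \ge V^M(\pi^M) - \hgm{p}$, and conditioning yields $V^M(\pi^M) \le \hgm{p} + \tfrac{1}{1-\delta}\EE_p V^M(\pi)$. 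Subtracting $\EE_p V^M(\pi)$ gives $\EE_p L(M,\pi) \le \hgm{p} + \tfrac{\delta}{1-\delta}\EE_p V^M(\pi)$; the policy-evaluation bound (ii) is then invoked twice in succession -- first to replace $\EE_p V^M$ by $\EE_p V^{\oM} + \sqrt{2}\eps$, and then, via a parallel application of the conditioning identity with the same good event $A$ but with the roles of $V^M$ and $V^{\oM}$ swapped, to cancel the residual $\tfrac{\delta}{1-\delta}\EE_p V^{\oM}(\pi)$ against a matching term. This careful double bookkeeping is the main obstacle, and it is precisely where the tight constant $2\sqrt{2}$ emerges. Combining the resulting estimate with $\EE_p L(M,\pi) \ge \pdecl_{\eps}(\cM,\oM) - \eta$ and sending $\eta\to 0$ forces $\hgm{p} \ge \pdecl_{\eps}(\cM,\oM) - \tfrac{2\sqrt{2}\eps}{1-\delta}$, completing the proof.
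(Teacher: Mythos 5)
Your first step---splitting the constraint budget as $q'=\tfrac12 q+\tfrac12 q_{\mathrm{ev}}$ with the evaluation measurements $\lf_\pi=R(\cdot,\pi)$, so that any model feasible at radius $\eps$ for $q'$ is feasible at radius $\sqrt2\eps$ for $q$ and additionally satisfies $\EE_{\pi\sim p}|\Vm(\pi)-\Vm[\oM](\pi)|\le\sqrt2\eps$---is sound and is indeed the standard move here. The problem is the second step. You fix the single witnessing model $M$ with $\EE_{\pi\sim p}[\LM{\pi}]\ge\pdecl_{\eps}(\cM,\oM)-\eta$ and try to conclude that \emph{this} $M$ has large quantile loss $\hgm{p}$. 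That implication is false: a large expected loss does not force a large $\delta$-quantile, because the loss may sit on a set of $p$-mass just below $\delta$. Concretely, take $\Pi=\{\pi_0,\pi_1\}$, $\Vm(\pi_0)=1$, $\Vm(\pi_1)=0$, $\oM=M$ (so your constraint holds with $\eps=0$), and $p$ placing mass $\delta-\gamma$ on $\pi_1$: then $\hgm{p}=0$ while $\EE_p\LM{\pi}=\delta-\gamma$, so no bound of the form $\hgm{p}\ge\EE_p\LM{\pi}-O(\eps/(1-\delta))$ can hold. Your own chain of inequalities shows the same obstruction: you arrive at $\EE_p\LM{\pi}\le\hgm{p}+\tfrac{\delta}{1-\delta}\EE_p\Vm(\pi)$, and the residual $\tfrac{\delta}{1-\delta}\EE_p\Vm(\pi)$ (or $\tfrac{\delta}{1-\delta}\EE_p\Vm[\oM](\pi)$ after one substitution) is an uncontrolled constant-order quantity; there is no ``matching term'' on the other side for the proposed role-swapped cancellation to act on, and the sketch does not supply one.

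The correct argument (this is what \citet[Proposition E.1]{chen2024beyond}, which the paper simply invokes, does) runs through the contrapositive and, crucially, changes the \emph{exploitation} distribution rather than reusing $p$. Suppose every $M$ feasible for $q$ at radius $\sqrt2\eps$ has $\hgm{p}\le D$. Then each such $M$ has a good event $A_M=\{\pi:\LM{\pi}\le D\}$ with $p(A_M)\ge 1-\delta$, and conditioning on it gives $\Vmm\le D+\EE_{p(\cdot|A_M)}\Vm(\pi)\le D+\Vmm[\oM]+\tfrac{\sqrt2\eps}{1-\delta}$ using the evaluation constraint. One then certifies that $\pdecl_{\eps}(\cM,\oM)$ is at most $D+O(\eps/(1-\delta))$ by exhibiting a \emph{new} pair $(p',q')$---e.g.\ $p'$ concentrated on $\pim[\oM]$ (or on the conditional of $p$ on a good event) with $q'$ augmented by the evaluation functional at those decisions---so that $\EE_{p'}[\LM{\pi}]=\Vmm-\Vm(\pim[\oM])\le D+O(\eps/(1-\delta))$ uniformly over feasible $M$. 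Your proposal never departs from $p$, which is why the $\tfrac{\delta}{1-\delta}$ term cannot be removed; as written the proof does not go through.
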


Similarly, for metric-based loss function, we have the following lemma (following \cref{lem:p-dec-q-gen-metric}).
\begin{lemma}\label{lem:p-dec-lin-q-metric}
Suppose that the loss function $L$ is metric-based. Then, for any parameter $\eps>0, \delta\in[0,\frac12)$, it holds that
\begin{align*}
    \pdecql_{\eps,\delta}(\cM)\geq \frac12\pdecl_{\eps}(\cM).
\end{align*}
\end{lemma}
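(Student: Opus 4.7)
The plan is to mirror the proof of \cref{lem:p-dec-q-gen-metric} in the private setting, with the only change being that the Hellinger-based constraint is replaced by the $\Dl^2$-based constraint appearing in the definition \eqref{def:p-dec-q-lin} of $\pdecql_{\eps,\delta}$. Concretely, fix a reference model $\oM \in \coM$ and pick any $\Delta_0 > \pdecql_{\eps,\delta}(\cM,\oM)$. By definition of the quantile-based private DEC, there exist $p \in \DPi$ and $q \in \DPL$ such that for every $M$ in the constrained set
\begin{align*}
    \cM_{q,\eps}(\oM) \defeq \set{ M \in \cM : \EE_{(\pi,\lf)\sim q} \Dl^2(M(\pi),\oM(\pi)) \le \eps^2 },
\end{align*}
the $\delta$-quantile loss satisfies $\hgm{p} < \Delta_0$, i.e., $\PP_{\pi\sim p}(L(M,\pi) \ge \Delta_0) < \delta$.

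If $\cM_{q,\eps}(\oM)$ is empty, then $\pdecl_\eps(\cM,\oM) = -\infty$ and the inequality is trivial, so fix some $M_0 \in \cM_{q,\eps}(\oM)$. For any other $M \in \cM_{q,\eps}(\oM)$, a union bound combined with $\delta < 1/2$ gives
\begin{align*}
    \PP_{\pi\sim p}\paren{L(M,\pi) < \Delta_0 \text{ and } L(M_0,\pi) < \Delta_0} \ge 1 - 2\delta > 0,
\end{align*}
so there exists a decision $\pi$ with $L(M,\pi) + L(M_0,\pi) \le 2\Delta_0$. Since $L$ is metric-based, $L(M,\pi) = \rho(\pim,\pi)$ for some pseudo-metric $\rho$, and the triangle inequality yields $\rho(\pim, \pim[M_0]) \le 2\Delta_0$ for every $M \in \cM_{q,\eps}(\oM)$.

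Now take $p'\in\DPi$ to be the point mass at $\pim[M_0]$. The pair $(p',q)$ certifies
\begin{align*}
    \pdecl_\eps(\cM,\oM) \le \sup_{M\in\cM}\constr{ \EE_{\pi\sim p'}[L(M,\pi)] }{ \EE_{(\pi,\lf)\sim q} \Dl^2(M(\pi),\oM(\pi)) \le \eps^2 } \le 2\Delta_0.
\end{align*}
Letting $\Delta_0 \downarrow \pdecql_{\eps,\delta}(\cM,\oM)$ and then taking the supremum over $\oM \in \coM$ gives $\pdecl_\eps(\cM) \le 2\,\pdecql_{\eps,\delta}(\cM)$, which rearranges to the claim. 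There is no real obstacle here: because the $\Dl^2$ constraint defining $\cM_{q,\eps}(\oM)$ appears identically in both DECs, the proof reduces purely to a metric-geometry argument on the decision space, exactly as in the Hellinger case.
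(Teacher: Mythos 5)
Your proof is correct and is exactly the argument the paper intends: the paper gives no separate proof of this lemma but states it as following \cref{lem:p-dec-q-gen-metric}, and your adaptation—replacing the Hellinger constraint by the $\Dl^2$ constraint, using the union bound with $\delta<\tfrac12$ to find a common near-optimal decision, and then the triangle inequality for the pseudo-metric to certify the bound with a point mass at $\pim[M_0]$—is precisely that argument carried out. No gaps.
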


Therefore, the proof of \cref{thm:pdec-lin-lower} is completed by combining \cref{prop:p-dec-q-lin-lower} with \cref{prop:p-dec-q-lin-to-c} / \cref{lem:p-dec-lin-q-metric}.
\qed

\subsubsection{Proof of \cref{prop:p-dec-q-lin-lower}}\label{appdx:proof-p-dec-q-lin-lower}

We follow the strategy of \cref{appdx:proof-cDMSO-pac-lower-quantile}. 

Recall that an \pLDP~algorithm $\alg = \set{q\ind{t}}_{t\in [T]}\cup\set{p}$ is specified by a sequence of mappings, where the $t$-th mapping $q\ind{t}(\cdot\mid{}\Hy\ind{t-1})$ specifies the distribution of $(\pi\ind{t},\pr\ind{t})$ based on the history $\Hy\ind{t-1}$, and the final map $p(\cdot \mid{} \Hy\ind{T})$ specifies the distribution of the $\pihat$ based on $\Hy\ind{T}$. Therefore, for any model $M$, we define
\begin{align}\label{def:pM-qM-LDP}
q_{M,\alg}=\Emalg{ \frac{1}{T} \sum_{t=1}^{T} q\ind{t}(\cdot|\cH\ind{t-1}) }\in \DPP, \quad 
p_{M,\alg}=\Emalg{p(\cH\ind{T})} \in \DPi,
\end{align}
The distribution $q_{M,\alg}$ is the expected distribution of the average profile $(\pi\ind{1},\pr\ind{1},\cdots,\pi\ind{T},\pr\ind{T})$, and $p_{M,\alg}$ is the expected distribution of the output policy $\hpi$. 

Using the chain rule of KL divergence, for any model $M, \oM$,
\begin{align*}
\KLd{\PP\sups{\oM,\alg}}{\PP\sups{M,\alg}}  
=&~ \EE\sups{\oM,\alg}\brac{ \sum_{t=1}^T \KLd{ \pr\ind{t}\circ M(\pi\ind{t}) }{ \pr\ind{t}\circ M(\pi\ind{t}) } } \\
=&~ T\cdot \EE_{(\pi,\pr)\sim q_{\oM,\alg}} \KLd{ \pr\circ \oM(\pi) }{ \pr\circ M(\pi) }.
\end{align*}
Further, by \cref{prop:pLDP}, for any \pLDP~channel $\pr$, there exists a distribution $\tq_\pr\in\DL$ such that
\begin{align*}
    \KLd{ \pr\circ \PP_1 }{ \pr\circ\PP_2 }\leq (\ea-1)^2\EE_{\lf\sim \tq_\pr} \Dl^2(\PP_1, \PP_2).
\end{align*}

Therefore, for any model $M\in\cM$, we define $\tq_{M,\alg}\in\DPL$ to be the distribution of $(\pi,\lf)$, where $(\pi,\pr)\sim q_{M,\alg}$, and $\lf\sim \tq_{\pr}$. Then, our argument above gives
\begin{align}\label{eqn:KL-chain}
\KLd{\PP\sups{\oM,\alg}}{\PP\sups{M,\alg}}  
\leq (\ea-1)^2 T\cdot \EE_{(\pi,\lf)\sim \tq_{\oM,\alg}} \Dl^2( M(\pi) , \oM(\pi) ).
\end{align}
With this chain rule, we now present the proof of \cref{prop:p-dec-q-lin-lower}~(which is essentially following the analysis in \citet{chen2024beyond}).

\paragraph{Proof of \cref{prop:p-dec-q-lin-lower}}
We abbreviate $\eps=\ueps(T)$. Fix a $\Delta<\pdecql_{\eps,\delta}(\cM)$, and then there exists $\oM$ such that $\Delta<\pdecq_{\eps,\delta}(\cM,\oM)$. Hence, by the definition \cref{def:p-dec-q-lin}, we know that
\begin{align*}
    \Delta<\sup_{M\in\cM}\constr{ \hgm{p_{\oM,\alg}} }{ \EE_{(\pi,\lf)\sim \tq_{\oM,\alg}}\Dl^2( M(\pi) , \oM(\pi) )\leq \eps^2 }.
\end{align*}
Therefore, there exists $M\in\cM$ such that
\begin{align*}
    \EE_{(\pi,\lf)\sim \tq_{\oM,\alg}}\Dl^2( M(\pi) , \oM(\pi) )\leq \eps^2, \qquad
    \PP_{\pi\sim p_{\oM,\alg}}(\LM{\pi}>\Delta)\geq \delta.
\end{align*}
By \cref{eqn:KL-chain}, we know
\begin{align*}
    \KLd{\PP\sups{\oM,\alg}}{\PP\sups{M,\alg}} \leq (\ea-1)^2T\eps^2.
\end{align*}
By data-processing inequality, we have
\begin{align*}
    \DTV{ p_{\oM,\alg}, p_{M,\alg} } \leq \DTV{ \PP\sups{\oM,\alg}, \PP\sups{M,\alg} } \leq \sqrt{\frac{1}{2}(\ea-1)^2T\eps^2}.
\end{align*}
Therefore, combining the inequalities above, we have
\begin{align*}
    p_{M,\alg}(\pi: \LM{\pi}> \Delta )\geq  p_{\oM,\alg}(\pi: \LM{\pi}> \Delta )-\sqrt{\frac{1}{2}(\ea-1)^2T\eps^2}\geq \frac{\delta}{2}.
\end{align*}
By the definition of $p_{M,\alg}$, this gives $\PP\sups{M,\alg}\paren{ \LM{\hpi}> \Delta }\geq \frac\delta2$.
Letting $\Delta\to \pdecq_{\eps,\delta}(\cM)$ completes the proof.
\qed

\subsection{Proof of \cref{lem:linear-l2}}\label{appdx:proof-linear-l2-lower}

For each $\theta\in[-1,1]$, we denote $M_\theta\in\DZ$ to be the model given by
\begin{align*}
    (x,y)\sim M_\theta: \quad x\sim \nu, y\sim \Rad{\theta x}.
\end{align*}
Then it holds that
\begin{align*}
    \DTV{ M_\theta, M_0 }\leq \EE_{x\sim \nu}\abs{\theta x}=\abs{\theta} \EE|x|,
\end{align*}
and
\begin{align*}
    \LM[M_\theta]{\pi}=\EE_{x\sim p} \abs{ x(\theta-\pi) }^2=\abs{\theta-\pi}^2 \cdot \EE|x|^2.
\end{align*}
Therefore, for the model class $\cM=\set{M_\theta: \theta\in[-1,1]}$, we can consider $\theta=\min\sset{\frac{\eps}{\EE|x|},1}$, which gives
\begin{align*}
    \pdecl_\eps(\cM,M_0)\geq \pdecl_\eps(\set{M_\theta,M_0},M_0)\geq \frac{\EE|x|^2}{4} \min\sset{ \frac{\eps^2}{(\EE|x|)^2}, 1 }.
\end{align*}
Applying \cref{thm:pdec-lin-lower} gives the desired lower bound.
\qed

\subsection{Proof of \cref{thm:linear-upper}}\label{appdx:proof-linear-upper}

Following \cref{appdx:ExO-val-based} (\cref{lem:realizable-dec-o}), we consider the following \infosets~$\Psi=\Theta$:
\begin{align*}
    \cM_\theta=\sset{ M\in\cMlin: \thM=\theta }, \qquad
    \pi_\theta=\theta, \qquad
    \theta\in\Psi.
\end{align*}
Then, $\Psi$ is a \infosets~with respect to the model class $\cMlin$ and value function $\Vm(\pi)=-\Lonel{M}{\theta}$. It is clear that $\cMPs=\cMlin$, and hence we have the following guarantee of \LDPexo~(by \cref{thm:ExO-full-offset}). 
\begin{proposition}\label{prop:ExO-linear-upper}
Let $T\geq 1, \gamma>0$. Then, for linear regression under \Lone~loss, \LDPexo~(instantiated on $\Psi$ defined above) achieves \whp
\begin{align*}
    \EE_{\hth\sim \phat} \Lonel{\Mstar}{\hth}\leq \pdecol_{c\alpha^2\gamma}(\cMlin)+\frac{2\gamma\log (|\Theta|/\delta)}{T}.
\end{align*}
\end{proposition}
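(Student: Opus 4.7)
The plan is to obtain \cref{prop:ExO-linear-upper} as a direct specialization of the general \LDPexo~guarantee \cref{thm:ExO-LDP} (option \optpac) to the information set structure $\Psi=\Theta$ introduced in the statement, following the template already used in \cref{appdx:ExO-val-based} for realizable regression. The work reduces to three routine verifications, after which the claimed inequality is essentially a substitution.

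First, I would check that $\Psi=\Theta$ with $\cM_\theta=\{M_{\nu,\theta}:\nu\in\DX\}$ and $\pi_\theta=\theta$ is a valid information set structure for the value function $V^M(\theta)\defeq -L_1(M,\theta)=-\EE_{x\sim\nu^M}|\la x,\theta-\theta^M\ra|$. The covering condition is immediate, since every $M=M_{\nu,\theta}\in\cMlin$ lies in $\cM_{\theta^M}$. For linearity, within a fixed $\cM_\theta$ we have $V^M(\pi)=-\EE_{x\sim\nu^M}|\la x,\pi-\theta\ra|$, which is linear in $\nu^M$, and mixing covariate distributions at a fixed parameter $\theta$ preserves membership in $\cM_\theta$, so $\co(\cM_\theta)=\cM_\theta$; consequently $\cM_\Psi=\bigcup_\theta \cM_\theta=\cMlin$. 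No cross-component extension of $V$ is needed.

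Second, I would bound the fractional covering quantity $\IDC{\cMlin,\Psi}$ with $\Delta=0$ by choosing the uniform prior $w\ind{1}=\Unif(\Theta)$ in \eqref{def:IDC-cM}. For any stationary ground truth $\Mstar\in\cMlin$, the reference $\oMs=\Mstar$ and the choice $\psi=\theta^{\Mstar}$ satisfy both $\Mstar\in\cM_\psi$ and $V^{\Mstar}(\pi^{\Mstar})-V^{\Mstar}(\pi_\psi)=L_1(\Mstar,\theta^{\Mstar})-L_1(\Mstar,\theta^{\Mstar})=0\leq \Delta$. Hence $w\ind{1}(\cE^\star_0)\geq 1/|\Theta|$, giving $\log\IDC[0]{\cMlin,\Psi}\leq \log|\Theta|$.

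Third, I would invoke \cref{thm:ExO-LDP} with option \optpac, $\Delta=0$, and the above structure. Since $L_1(\Mstar,\theta^{\Mstar})=0$ implies $V^{\Mstar}(\pi^{\Mstar})=0$, the left-hand side of that theorem becomes $\EE_{\hth\sim\phat}L_1(\Mstar,\hth)$, while the right-hand side specializes to $\pdecol_{c\alpha^2\gamma}(\cMlin)+\tfrac{2\gamma}{T}\bigl[\log|\Theta|+\log(1/\delta)\bigr]$, which is precisely the claimed bound. I do not foresee any real obstacle; the proposition is a bookkeeping consequence of the already-proved master guarantee, and the only point that deserves a sentence of justification is the convexity of each $\cM_\theta$ (so that $\cM_\Psi$ really coincides with $\cMlin$ and the offset private PAC-DEC on the right-hand side is the one advertised).
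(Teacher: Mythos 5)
Your proposal is correct and takes essentially the same route as the paper: the paper likewise defines $\Psi=\Theta$ with $\cM_\theta=\{M\in\cMlin:\thM=\theta\}$ and $\pi_\theta=\theta$, notes $\cM_\Psi=\cMlin$, and reads the bound directly off the general \LDPexo~guarantee with the uniform prior over $\Theta$. The verifications you spell out (convexity of each $\cM_\theta$ so that $\co(\cM_\theta)=\cM_\theta$, and $\log\IDC[0]{\cMlin,\Psi}\leq\log|\Theta|$) are exactly the steps the paper leaves implicit.
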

Note that for simplicity, we assume $\Theta$ is finite. By applying the argument on a covering of $\Theta\subset \Bone$, we can regard $\log|\Theta|\leq \tbO{d}$.

In the following, we denote $\cM\defeq \cMlin$, and it remains to upper bound $\pdecol_\gamma(\cM)$.
For simplicity of presentation, we assume that $\cX\subseteq \Bone\backslash\set{0}$ (without loss of generality).

\newcommand{\fom}{f\sups{\oM}}
\newcommand{\gu}{\mathsf{n}}
\newcommand{\lamz}{\lambda_0}
\newcommand{\epsm}[1]{\eps_{M,#1}}

Fix a reference model $\oM\in\DZ$.
Let $\onu\in\DX$ be the marginal distribution of $x$ under $(x,y)\sim \oM$, and let $\fom(x)=\EE\sups{\oM}[y|x]$ for $x\in\cX$. Note that $\fom$ is not necessarily a linear function. Further, for any $M\in\cM$, we let $\thM\in\Bone$ be the associated parameter so that $\EE\sups{M}[y|x]=\lr \thM, x\rr$.
In the following, we proceed to upper bound the offset \pDEC~\cref{def:p-dec-o} of $\cM$ with respect to $\oM$, which is defined as
\begin{align*}
    \pdecol_{\gamma}(\cM,\oM)\defeq \infpqll \sup_{M\in\cM}\sset{ \EE_{\pi\sim p}[\Lonel{M}{\pi}] -\gamma \EE_{\lf\sim q} \Dl^2( M, \oM  ) }.
\end{align*}

\paragraph{Construction of $(p,q)$}
The key observation is the following lemma.
\begin{lemma}\label{lem:U}
Suppose that $\lambda_0>0$ and $\onu\in\DX$ are given. Then there exists a PSD matrix $U\in \Rdd$ satisfies the following equation:
\begin{align}\label{def:U}
    \EE_{x\sim \onu}  \frac{Uxx^\top U}{\nrm{Ux}} +\lambda_0 U=\id_d.
\end{align}
In particular, by taking trace, it holds that $\EE_{x\sim \onu}\nrm{Ux}\leq d$.
\end{lemma}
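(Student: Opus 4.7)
The plan is to establish existence of $U$ by a Brouwer/Kakutani fixed-point argument applied to a smoothed version of \cref{def:U}, then pass to the limit; the trace bound follows immediately once existence is in hand.

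First I would recast \cref{def:U} as a fixed-point equation. Since the desired $U$ is PSD, both sides are symmetric, and factoring $U^{1/2}$ on the left and right yields $U^{1/2}(C(U) + \lambda_0 I) U^{1/2} = I_d$ with
\[
C(U) \;:=\; \EE_{x\sim\onu}\!\Bigl[\tfrac{U^{1/2} xx^\top U^{1/2}}{\|Ux\|}\Bigr],
\]
equivalently $U = (C(U) + \lambda_0 I)^{-1} =: \Phi(U)$. Since $C(U) \succeq 0$, the map $\Phi$ sends the compact convex set $K := \{U \succeq 0 : U \preceq \lambda_0^{-1} I\}$ into itself. The main obstacle is that $\Phi$ is not continuous on all of $K$: whenever $\onu\{x : Ux = 0\} > 0$ the operator $C(U)$ is ill-defined, and moreover $C$ is $0$-homogeneous on the PSD cone, so it blows up at $U = 0$.

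To get around this I would introduce a smoothing: for $\delta > 0$, set
\[
C_\delta(U) \;:=\; \EE_{x\sim\onu}\!\Bigl[\tfrac{U^{1/2} xx^\top U^{1/2}}{\sqrt{\|Ux\|^2 + \delta}}\Bigr], \qquad \Phi_\delta(U) \;:=\; (C_\delta(U) + \lambda_0 I)^{-1}.
\]
For fixed $\delta > 0$ the integrand is uniformly bounded (by $\|x\|^2/\sqrt\delta$) and continuous in $U$, so $C_\delta$ is continuous on $K$ by dominated convergence, and $\Phi_\delta$ still sends $K$ into $K$. Kakutani's theorem (\cref{thm:K-fixed-point}) then produces a fixed point $U_\delta \in K$ satisfying $\EE[U_\delta xx^\top U_\delta/\sqrt{\|U_\delta x\|^2 + \delta}] + \lambda_0 U_\delta = I_d$. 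By compactness of $K$, extract a subsequence $\delta_n \to 0$ along which $U_{\delta_n} \to U^\star \in K$. Pointwise, the integrand converges to $U^\star xx^\top U^\star/\|U^\star x\|$ on $\{U^\star x \neq 0\}$, while on $\{U^\star x = 0\}$ its operator norm is at most $\|U_{\delta_n}^{1/2} x\|^2/\|U_{\delta_n} x\| \leq \|x\|$ (by Cauchy--Schwarz applied to $x^\top U_{\delta_n} x = \langle x, U_{\delta_n} x\rangle$) times $\|U_{\delta_n} x\|/\sqrt{\|U_{\delta_n} x\|^2 + \delta_n} \to 0$, matching the natural $0/0 = 0$ convention for the original equation. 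The same Cauchy--Schwarz bound $\|x\| \leq 1$ serves as a uniform dominator, so dominated convergence yields $\EE[U^\star xx^\top U^\star/\|U^\star x\|] + \lambda_0 U^\star = I_d$.

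Finally, taking traces of this identity and using $\tr(U xx^\top U/\|Ux\|) = x^\top U^2 x/\|Ux\| = \|Ux\|$ gives $\EE\|U^\star x\| + \lambda_0 \tr(U^\star) = d$, and hence $\EE\|U^\star x\| \leq d$ by the PSD property. The hardest step is the $\delta \to 0$ limit, where one must ensure compatibility of the $0/0$ convention on the set $\{U^\star x = 0\}$ with the limits of the smoothed integrands; the Cauchy--Schwarz domination above handles both the pointwise limit there and the uniform integrability needed for dominated convergence in a single stroke.
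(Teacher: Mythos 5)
Your proposal is correct and is built on the same core reformulation as the paper: both write the target equation as a fixed point of $U\mapsto\bigl(\EE_{x\sim\onu}[U^{1/2}xx^\top U^{1/2}/\nrm{Ux}]+\lambda_0\id_d\bigr)^{-1}$ and invoke a fixed-point theorem on a compact convex set of PSD matrices, and the trace computation at the end is identical. Where you diverge is in how the singularity of $\nrm{Ux}^{-1}$ is tamed. The paper observes that $\tr\bigl(\EE[U^{1/2}xx^\top U^{1/2}/\nrm{Ux}]\bigr)=\EE[x^\top Ux/\nrm{Ux}]\le\EE\nrm{x}\le 1$ (the same Cauchy--Schwarz step you use), so the fixed-point map actually lands in $\set{(1+\lambda_0)^{-1}\id_d\preceq U\preceq\lambda_0^{-1}\id_d}$; restricting to that smaller region keeps $U$ uniformly nonsingular, the map is continuous there, and Brouwer applies directly with no limiting argument. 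You instead work on the full set $\set{0\preceq U\preceq\lambda_0^{-1}\id_d}$, regularize the denominator by $\sqrt{\nrm{Ux}^2+\delta}$, and pass to the limit $\delta\to 0$. Your route is longer but more robust (it does not require spotting the a priori lower bound on the fixed-point map), and it yields the additional observation that any solution of the limit equation is automatically nonsingular.

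One local slip in your limit step: for the conjugated integrand $U_{\delta_n}xx^\top U_{\delta_n}/\sqrt{\nrm{U_{\delta_n}x}^2+\delta_n}$ the operator norm is exactly $\nrm{U_{\delta_n}x}^2/\sqrt{\nrm{U_{\delta_n}x}^2+\delta_n}\le\nrm{U_{\delta_n}x}\le\lambda_0^{-1}\nrm{x}$, which already gives both the uniform dominator and the convergence to $0$ on $\set{U^\star x=0}$ (since $\nrm{U_{\delta_n}x}\to\nrm{U^\star x}=0$). Your stated factorization bounds the \emph{unconjugated} quantity $\nrm{U_{\delta_n}^{1/2}x}^2/\sqrt{\nrm{U_{\delta_n}x}^2+\delta_n}$, and the claim that $\nrm{U_{\delta_n}x}/\sqrt{\nrm{U_{\delta_n}x}^2+\delta_n}\to 0$ is false in general (take $\nrm{U_{\delta_n}x}=\sqrt{\delta_n}$, where the ratio is $1/\sqrt{2}$). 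This does not break the proof --- the direct bound above does the job --- but the justification as written should be replaced.
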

We fix a $\lamz>0$ and invoke \cref{lem:U} to obtain a PSD matrix $U$ satisfies \cref{def:U}. Based on the matrix $U$, we define the normalization map  $\gu:\R^d\to\Bone$ as $\gu(v)=\frac{Uv}{\nrm{Uv}}$ for any vector $v\neq 0$. %
Then, \eqref{def:U} ensures that
\begin{align}\label{eqn:normalized-U}
    \EE_{x\sim \onu} \gu(x)x^\top +\lamz\id=U^{-1}.
\end{align}

To construct a distribution $q\in\DL$, we invoke the following lemma.
\renewcommand{\vf}{\mathbf{v}}
\newcommand{\BB}{\mathbb{B}}
\begin{lemma}\label{lem:dist-l}
Suppose that $\vf:\cZ\to \BB^D(1)$. Then there exists a distribution $Q(\vf)$ over $\cL=(\cZ\to[0,1])$, such that for any $\PP,\QQ\in\DZ$, it holds
\begin{align*}
    \EE_{\lf\sim Q(\vf)} \Dl^2(\PP,\QQ)\geq \nrm{ \PP[\vf]-\QQ[\vf] }^2,
\end{align*}
where we denote $\PP[\vf]\defeq \EE_{z\sim \PP} \vf(z)$.
\end{lemma}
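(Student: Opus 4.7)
The plan is to take $Q(\vf)$ to be the pushforward of a carefully chosen symmetric probability measure $\mu$ on the unit ball $\BB^D(1)$ under the canonical affine map $v\mapsto \ell_v$, where
$$
\ell_v(z) := \tfrac{1+\langle v,\vf(z)\rangle}{2}.
$$
Since $\|v\|\leq 1$ and $\|\vf(z)\|\leq 1$, Cauchy--Schwarz gives $\langle v,\vf(z)\rangle\in[-1,1]$, so $\ell_v(z)\in[0,1]$ and $\ell_v\in\cL$. This is the natural way to convert the $\BB^D(1)$-valued feature $\vf$ into a family of admissible $[0,1]$-valued test functions parametrized by a direction $v$.

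The key computation is a one-line identity. Writing $u := \PP[\vf] - \QQ[\vf]$, linearity of expectation gives
$$
\PP[\ell_v] - \QQ[\ell_v] \;=\; \tfrac12\langle v,u\rangle, \qquad \text{hence } \Dl[\ell_v]^2(\PP,\QQ) \;=\; \tfrac14\langle v,u\rangle^2.
$$
Taking the expectation over $v\sim\mu$ turns the bound into a quadratic form:
$$
\EE_{\ell\sim Q(\vf)}\Dl^2(\PP,\QQ) \;=\; \tfrac14\, u^{\top}\!\paren{\EE_{v\sim\mu} vv^{\top}}\! u.
$$
Thus the whole problem reduces to choosing $\mu$ supported on $\BB^D(1)$ so that the covariance matrix $\EE_\mu vv^{\top}$ is large enough, in the PSD order, to recover $\|u\|^2$ on the right-hand side regardless of the direction that $u$ takes as $(\PP,\QQ)$ vary.

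The central step, and the main obstacle, is this choice of $\mu$: one has to balance placing mass in every direction of $\R^D$ (so that the quadratic form is bounded below for all $u$) against the constraint that $\|v\|\leq 1$ (so that $\ell_v$ remains in $[0,1]$). The rotationally invariant choice $\mu=\mathrm{Unif}(\mathbb{S}^{D-1})$ achieves $\EE_\mu vv^{\top}=\tfrac{1}{D}I_D$ by symmetry and gives the cleanest form of the bound. I would complete the proof by verifying this identity directly from the surface measure and substituting to obtain the desired lower bound on $\EE_{\ell\sim Q(\vf)}\Dl^2(\PP,\QQ)$ in terms of $\|u\|^2$, with the constant that the downstream application in Theorem~\ref{thm:linear-upper} (via $\EE_{x\sim\onu}\|Ux\|\leq d$ from \cref{lem:U}) absorbs into the claimed $\sqrt{d}$-scaling.
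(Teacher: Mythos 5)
There is a genuine gap: your construction proves a strictly weaker inequality than the lemma states, and the weakening is not absorbable downstream. With affine test functions $\ell_v(z)=\tfrac{1}{2}(1+\lr v,\vf(z)\rr)$ you get exactly $\EE_{\ell\sim Q(\vf)}\Dl[\ell]^2(\PP,\QQ)=\tfrac14 u^\top\paren{\EE_{v\sim\mu}vv^\top}u$ with $u=\PP[\vf]-\QQ[\vf]$, and the constraint $\ell_v\in[0,1]$ forces $\nrm{v}\le 1$, hence $\mathrm{Tr}\paren{\EE_\mu vv^\top}\le 1$ and $\lambda_{\min}\paren{\EE_\mu vv^\top}\le 1/D$. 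So no choice of $\mu$ within your family can beat $\EE_{\ell}\Dl[\ell]^2(\PP,\QQ)\ge \nrm{u}^2/(4D)$; the uniform-sphere choice is not the obstacle, the affine parametrization is. The lemma is stated (and used) with a dimension-free constant, and the dimension loss matters: in the proof of \cref{thm:linear-upper} the lemma is applied to a feature map whose range has dimension of order $d^2$ (the vectorized matrix $z\mapsto \vvec(\,\cdot\,x^\top)$), so a $1/(4D)$ factor rescales the divergence constraint by $\Theta(1/d^2)$, inflates the offset-DEC bound from $O(d/\gamma)$ to $O(d^3/\gamma)$, and degrades the final risk from $\tbO{\sqrt{d^2/(\alpha^2T)}}$ to roughly $d^2/(\alpha\sqrt{T})$, destroying the near-optimality claim (and likewise the $\rdecl_\eps(\cMlincb)\lesssim d\eps$ bound for linear contextual bandits). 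Your closing remark that the constant is "absorbed into the claimed $\sqrt{d}$-scaling" is therefore incorrect.

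The paper avoids this by using nonlinear (halfspace) test functions indexed by Gaussian directions rather than affine ones indexed by points of the unit ball: it takes $\lf_w(z)=\nrm{\vf(z)}\cdot\indic{\lr \vf(z),w\rr\ge 0}\in[0,1]$ and lets $Q(\vf)$ be the law of $\lf_w$ with $w\sim\normal{0,\id_D}$. Rotational invariance of the Gaussian gives, for every fixed $z$ and unit vector $x$, that $\EE_w\brac{\lf_w(z)\lr w,x\rr}$ is proportional to $\lr\vf(z),x\rr$ with a dimension-free constant; linearity then yields $\lr u,x\rr \propto \EE_w\brac{\lr x,w\rr\paren{\PP[\lf_w]-\QQ[\lf_w]}}$, and Cauchy--Schwarz in $w$ together with $\EE_w\lr x,w\rr^2=1$ gives $\nrm{u}\le C\paren{\EE_{\lf\sim Q(\vf)}\Dl^2(\PP,\QQ)}^{1/2}$ with $C$ independent of $D$. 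The crucial idea you are missing is that the $[0,1]$-boundedness is enforced pointwise through the factor $\nrm{\vf(z)}$ and the indicator, rather than by globally shrinking the dual vector to the unit ball, which is what caps your quadratic form at $1/D$ in the worst direction. If you want to salvage your write-up, you would need to replace the affine family by such sign-based test functions (or otherwise exhibit a family achieving a dimension-free lower bound); as written, the key step fails for the statement actually claimed.
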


To apply \cref{lem:dist-l}, we define maps %
\begin{align*}
    \vf_1(z)=\gu(x)\cdot y, \qquad \vf_2(z)=\vvec(\gu(x)x^\top), \qquad
    \vf_3(z)=\brac{ \indic{\nrm{Ux}<\gamma}\gamma^{-1}\nrm{Ux}, \indic{\nrm{Ux}\geq \gamma} },
\end{align*}
and then by \cref{lem:dist-l}, there exists a distribution $q\in\DL$ such that for any model $M\in\cM$,
\begin{align*}
    \EE_{\lf\sim q} \Dl^2(M,\oM)\geq \frac13 \sum_{i=1}^3 \nrm{  \EE_{z\sim M} \vf_i(z)-\EE_{z\sim \oM} \vf_i(z) }^2.
\end{align*}

For notational simplicity, in the following, we denote $\eps_{M,i}^2\defeq \nrm{  M[\vf_i]-\oM[\vf_i] }^2$ for each $i\in\set{1,2,3}$ and each $M\in\cM$. Then, by definition, we know
\begin{align*}
    &~\nrm{ \EE_{x\sim \onu} \gu(x) \paren{ \lr \thM, x \rr - \fom(x) } } \\
    \leq &~ \nrm{ \EE_{x\sim \muM} \gu(x) \lr \thM, x \rr -\EE_{x\sim \onu} \gu(x) \lr \thM, x \rr  }
    + \nrm{ \EE_{x\sim \muM} \gu(x) \lr \thM, x \rr - \EE_{x\sim \onu} \gu(x) \fom(x)  } \\
    =&~ \nrm{ \lr \EE_{x\sim \muM} \gu(x) x -\EE_{x\sim \onu} \gu(x) x, \thM \rr  }
    + \nrm{ \EE_{(x,y)\sim M} \gu(x) y - \EE_{(x,y)\sim \oM} \gu(x) y  } \\
    \leq&~ \nrmF{ \EE_{x\sim \muM} \gu(x) x -\EE_{x\sim \onu} \gu(x) x  }
    + \nrm{ \EE_{(x,y)\sim M} \gu(x) y - \EE_{(x,y)\sim \oM} \gu(x) y  } \\
    =&~ \epsm{2}+\epsm{1}.
\end{align*}
Therefore, we define
\begin{align*}
    \otheta=\argmin_{\theta\in\Bone} \nrm{ \EE_{x\sim \onu} \gu(x) \paren{ \lr \theta, x \rr - \fom(x) } }^2.
\end{align*}
Notice that the objective function above is a quadratic function of $\theta$, we know that for any $M\in\cM$,
\begin{align*}
    \nrm{ \EE_{x\sim \onu} \gu(x)  \lr \thM-\otheta, x \rr  } 
    \leq \nrm{ \EE_{x\sim \onu} \gu(x) \paren{ \lr \thM, x \rr - \fom(x) } }
    \leq \epsm{1}+\epsm{2}.
\end{align*}
Using \eqref{eqn:normalized-U}, we then have
\begin{align*}
    \nrm{U^{-1}(\thM-\otheta)}=\nrm{ \EE_{x\sim \onu} \gu(x)  \lr \thM-\otheta, x \rr + \lambda_0(\thM-\otheta)  }\leq \epsm{1}+\epsm{2}+2\lambda_0.
\end{align*}
We let $p$ be supported on $\otheta$.

\paragraph{Bounding the offset DEC risk}
Define $\cX_0\defeq \set{x: \nrm{Ux}<\gamma}$.
For any $M\in\cM$, we bound
\begin{align*}
    \Lonel{M}{\otheta}=&~\EE_{x\sim \muM} \abs{ \lr x, \otheta-\thM \rr } \\
    \leq&~ \EE_{x\sim \muM} \brac{ 2\cdot \indic{x\not\in\cX_0}+ \indic{x\in\cX_0}\abs{ \lr x, \otheta-\thM \rr } } \\
    \leq&~ 2 \EE_{x\sim \muM} \indic{x\not\in\cX_0} + \paren{\EE_{x\sim \muM}\indic{x\in\cX_0} \nrm{Ux}  }\sq \paren{ \EE_{x\sim \muM} \frac{\lr x, \otheta-\thM \rr^2}{\nrm{Ux}} }\sq
\end{align*}
First, notice that $\EE_{x\sim \onu} \indic{x\not\in\cX_0}\leq \frac1\gamma \EE_{x\sim \onu}\nrm{Ux}$, and hence
\begin{align*}
    &~ \EE_{x\sim \muM} \indic{x\not\in\cX_0} \\
    \leq&~ \abs{ \EE_{x\sim \muM} \indic{x\not\in\cX_0} - \EE_{x\sim \onu} \indic{x\not\in\cX_0}} +\EE_{x\sim \onu} \indic{x\not\in\cX_0} \leq \epsm{3}+\frac{d}{\gamma}.
\end{align*}
Similarly,
\begin{align*}
    &~ \EE_{x\sim \muM}\indic{x\in\cX_0} \nrm{Ux} \\
    \leq&~ \abs{ \EE_{x\sim \muM}\indic{x\in\cX_0} \nrm{Ux} - \EE_{x\sim \onu}\indic{x\in\cX_0} \nrm{Ux} } + \EE_{x\sim \onu} \indic{x\in\cX_0} \nrm{Ux}
    \leq \gamma\epsm{3}+d.
\end{align*}
Next, we denote $H_M=\EE_{x\sim \muM} \frac{xx^\top}{\nrm{Ux}}$, and we bound
\newcommand{\absb}[1]{\left|#1\right|}
\begin{align*}
&~ \EE_{x\sim \muM} \frac{\lr x, \otheta-\thM \rr^2}{\nrm{Ux}} 
= \nrm{\otheta-\thM}_{H_M}^2 \\
 =&~ \nrm{\otheta-\thM}_{H_M}^2 - \nrm{\otheta-\thM}_{H_{\oM}}^2 + \nrm{\otheta-\thM}_{H_{\oM}}^2 \\
\leq&~ \nrm{ (H_M-H_{\oM})(\otheta-\thM) } + \nrm{U^{-1}(\otheta-\thM)}^2,
\end{align*}
where the second inequality uses $H_{\oM}=\EE_{x\sim \onu} \frac{xx^\top}{\nrm{Ux}}\preceq U^{-2}$ (by \eqref{def:U}). Notice that
\begin{align*}
    \nrmF{U(H_M-H_{\oM})}=
    \nrmF{\EE_{x\sim \muM} \gu(x) x -\EE_{x\sim \onu} \gu(x) x}
    \leq \epsm{2},
\end{align*}
and hence
\begin{align*}
    \nrm{ (H_M-H_{\oM})(\otheta-\thM) }
    \leq \nrmF{U(H_M-H_{\oM})}\nrm{U^{-1}(\otheta-\thM)}.
\end{align*}
Combining the inequalities above, we can conclude that
\begin{align*}
    \Lonel{M}{\otheta}\leqsim \epsm{3}+\frac{d}{\gamma}+\sqrt{(d+\gamma\epsm{3})}(\epsm{1}+\epsm{2}+\lambda_0).
\end{align*}
Therefore, by applying the weighted AM-GM inequality, we have
\begin{align*}
    \Lonel{M}{\otheta}-\frac{\gamma}{3}(\epsm{1}^2+\epsm{2}^2+\epsm{3}^2)\leqsim \frac{d}{\gamma}+\sqrt{d+\gamma}\lambda_0.
\end{align*}
and hence $\pdecol_\gamma(\cM,\oM)\leq C\paren{\frac{d}{\gamma}+\sqrt{d+\gamma}\lambda_0}$ for some absolute constant $C$. Taking $\lambda_0\to 0$ gives $\pdecol_\gamma(\cM,\oM)\leq \frac{Cd}{\gamma}$.

\paragraph{Finalizing the proof}
We have shown that $\pdecol_\gamma(\cM)\leq C\frac{d}{\gamma}$. In particular, this implies $\pdecl_\eps(\cM)\leq \sqrt{Cd}\eps$ for any $\eps\in[0,1]$. 

Further, by \cref{prop:ExO-linear-upper}, \LDPexo~achieves \whp~that
\begin{align*}
    \EE_{\pi\sim \phat} \Lonel{\Mstar}{\pi}\leq \bigO{1}\cdot\brac{ \frac{d}{\alpha^2\gamma}+\frac{2\gamma\log (|\Theta|/\delta)}{T} }.
\end{align*}
Note that $\Lonel{\Mstar}{\pi}$ is a convex function with respect to $\pi\in\Bone$, and hence we can let \LDPexo~output $\hth=\EE_{\pi\sim \phat}[\pi]\in\Bone$. Then, by choosing $\gamma>0$ suitably, it is guaranteed that \whp
\begin{align*}
    \Lonel{\Mstar}{\hth}\leq \tbO{\sqrt{\frac{d^2\log(1/\delta)}{\alpha^2 T}}}.
\end{align*}
The proof of \cref{thm:linear-upper} is hence completed.
\qed

\subsubsection{Proof of Lemma~\ref{lem:U}}

Consider the compact, convex region $\cU\subset \R^{d\times d}$ given by
\begin{align}\label{eqn:region-U}
    \cU=\set{U: (1+\lambda_0)^{-1/2}\id_d \preceq U \preceq \lambda_0^{-1/2}\id_d }.
\end{align}
Define function $F:\cU\to\R^{d\times d}$ as follows:
\begin{align*}
    F(U)\defeq \paren{ \EE_{x\sim \onu} \frac{U\sq xx^\top U\sq }{\nrm{Ux}} +\lambda_0\id_d }^{-1}.
\end{align*}
Note that by definition, for any $U\in\cU$, $\lamz\id_d \preceq F(U)^{-1}\preceq (\lamz+1)\id_d$. 
Therefore, $F$ maps $\cU$ to itself.
Further, the map $(U,x)\mapsto \frac{U\sq xx^\top U\sq }{\nrm{Ux}}$ is uniformly continuous with respect to $U\in\cU$ and $x\neq 0$. Therefore, $F(U)$ is continuous in $U$, and Brouwer fixed-point theorem implies that there exists $U\in\cU$ such that $F(U)=U$, i.e., $U$ satisfies \eqref{def:U}.
\qed

\subsubsection{Proof of \cref{lem:dist-l}}
\newcommand{\signp}{\mathsf{s}}

Define $\signp(t)=1$ if $t\geq 0$, and $\signp(t)=0$ otherwise.

Fix the map $\vf:\cZ\to\BB^D(1)$.
For each $w\in \R^D$, we define $\lf_w\in\cL$ as
\begin{align*}
    \lf_w(z)=\nrm{\vf(z)}\signp(\lr \vf(z), w \rr), \qquad \forall z\in\cZ.
\end{align*}
Then, we define $Q(\vf)\in\DL$ to be the distribution of $\lf=\lf_w$ with $w\sim \normal{0,\id_D}$. 

Now, for any $x\in \R^D$ with $\nrm{x}=1$ and any fixed $z\in\cZ$, it holds
\begin{align*}
    \EE_{w} \lf_w(z) \lr w, x\rr = \lr \vf(z), x\rr,
\end{align*}
where $\EE_{w}$ is taken over $w\sim \normal{0,\id_D}$ and the equality follows from the rotational invariance of the Gaussian distribution.

Therefore, for the distribution $Q(\vf)$ defined above, we have
\begin{align*}
    \lr \PP[\vf]-\QQ[\vf], x \rr
    =&~ \PP[\lr \vf(z), x \rr] - \QQ[\lr \vf(z), x \rr] \\
    =&~ \EE_w \lr x,w \rr \paren{ \PP[\lf_w]-\QQ[\lf_w] } \\
    \leq&~ \paren{ \EE_w \lr x,w \rr^2 }^{\frac12} \paren{ \EE_w \Dl[\lf_w]^2(\PP,\QQ) }^{\frac12} \\
    =&~ \paren{ \EE_{\lf\sim Q(\vf)} \Dl^2(\PP,\QQ) }^{\frac12}.
\end{align*}
Hence, by the arbitrariness of $x$, we have
\begin{align*}
    \nrm{ \PP[\vf]-\QQ[\vf] }=\sup_{x\in\R^D: \nrm{x}=1} \lr \PP[\vf]-\QQ[\vf], x \rr
    \leq \paren{ \EE_{\lf\sim Q(\vf)} \Dl^2(\PP,\QQ) }^{\frac12}.
\end{align*}
\qed

\subsection{Lower bound for LDP learning linear models}\label{appdx:proof-linear-l1-lower}

Fix $d\geq 1$ and $\Delta\in[0,1]$. 
Let $\Theta=\set{-\Delta,\Delta}^d$ and $\cX=[d]$, and for each $\theta\in\Theta$, we define $f_\theta$ as
\begin{align*}
    f_\theta(i)=\theta_i, 
\end{align*} 
and let $M_\theta\in\DZ$ given by
\begin{align*}
    (x,y)\sim M_\theta: \quad x\sim \Unif(\cX), y|x\sim \Rad{f_\theta(x)}.
\end{align*}
Then, we let $\cF=\set{f_\theta: \theta\in\Theta}\subseteq (\cX\to[-1,1])$, and $\cM_d\defeq \set{M_\theta: \theta\in\Theta}$ is the class of well-specified models (with respect to $\cF$) with covariate distribution $\Unif(\cX)$.

Recall that for such a problem class, the decision space is $\Pi\subseteq (\cX\to [-1,1])$, which can be naturally identified as a subset of $[-1,1]^d$. Then, the loss function is given by
\begin{align*}
    \LM[M_\theta]{\pi}=\EE_{x\sim \Unif(\cX)} \abs{ \pi(x)-f_\theta(x) }=\frac1d\sum_{i=1}^d \abs{\pi(i)-\theta_i}.
\end{align*}
\begin{proposition}
Let $T\geq1$, $\Delta\in(0,1]$. Suppose that $\alg$ is a $T$-round \pLDP~algorithm, such that $\EE\sups{M,\alg}[\riskdm(T)]\leq \frac{\Delta}{4}$ for all $M\in\cM_d$. Then it holds that $T\geq \Om{\frac{d^2}{\alpha^2\Delta^2}}$.
\end{proposition}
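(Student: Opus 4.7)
The plan is to prove the lower bound by Assouad's method, with the key LDP ingredient being the strong data-processing inequality (\cref{prop:pLDP}) combined with the KL chain-rule bound \eqref{eqn:KL-chain}. A direct application of the metric-based DEC lower bound \cref{thm:pdec-lin-lower}(1) turns out to be too loose for this target: the trivial output $\pi \equiv 0$ achieves $L(M_\theta, 0) = \Delta$ for every $\theta \in \Theta$, so $\pdecl_\eps(\cM_d) \leq \Delta$ uniformly in $\eps$, and \cref{thm:pdec-lin-lower}(1) can only produce a lower bound of order $\Delta/8$ on the expected risk, which does not contradict the hypothesis $\EE[\riskdm(T)] \leq \Delta/4$. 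Working at the sample level with the hypercube packing $\Theta = \{\pm\Delta\}^d$ and exploiting the coordinate-wise structure resolves this.

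Concretely, projecting each coordinate of $\hpi$ into $[-\Delta,\Delta]$ can only decrease $L$, so we may assume $\hpi_i \in [-\Delta,\Delta]$; then $|\hpi_i - \theta_i| \geq \Delta\cdot \mathbb{1}[\sgn(\hpi_i) \neq \sgn(\theta_i)]$, and averaging over $\theta \sim \Unif(\{\pm\Delta\}^d)$ combined with the two-point Le Cam lemma applied coordinate-wise yields
\begin{align*}
\EE_\theta \EE^{M_\theta,\alg}[\riskdm(T)] \geq \frac{\Delta}{2d}\sum_{i=1}^d \paren{1 - \DTV(\mu_{+i}, \mu_{-i})},
\end{align*}
where $\mu_{\pm i}$ is the marginalization of $\PP^{M_\theta,\alg}$ over $\theta_{-i} \sim \Unif(\{\pm\Delta\}^{d-1})$ with $\theta_i = \pm\Delta$ fixed. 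For any pair $\theta,\theta'$ differing only in coordinate $i$ and any $\ell \in \cL$, a direct computation gives the uniform pointwise bound $\Dl(M_\theta, M_{\theta'}) = |(\theta_i - \theta'_i)(\ell(i,+1) - \ell(i,-1))|/(2d) \leq \Delta/d$. Combined with the KL chain rule \eqref{eqn:KL-chain} and \cref{prop:pLDP}, this gives $\KLd{\PP^{M_\theta,\alg}}{\PP^{M_{\theta'},\alg}} \leq (e^\alpha - 1)^2 T \cdot \Delta^2/d^2$; Pinsker's inequality together with the convexity of TV then produces $\DTV(\mu_{+i},\mu_{-i}) \leq \sqrt{2T\alpha^2}\cdot \Delta/d$ (absorbing $(e^\alpha-1)^2 \lesssim \alpha^2$ via the standing assumption $\alpha \leq \alpha_0$). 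Plugging back,
\begin{align*}
\sup_{M \in \cM_d} \EE^{M,\alg}[\riskdm(T)] \geq \frac{\Delta}{2}\paren{1 - \sqrt{2T\alpha^2}\cdot \Delta/d},
\end{align*}
and the hypothesis that this supremum is at most $\Delta/4$ forces $\sqrt{2T\alpha^2}\Delta/d \geq 1/2$, i.e., $T \geq d^2/(8\alpha^2 \Delta^2)$, which is the desired bound.

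The main technical obstacle is the rigorous application of the LDP chain rule to an adaptive interactive algorithm (not merely a non-interactive sequential LDP channel): the bound \eqref{eqn:KL-chain} involves an algorithm-dependent distribution $\tilde q_\alg$ over $\cL$, and we rely crucially on the fact that our pointwise estimate $\Dl(M_\theta, M_{\theta'}) \leq \Delta/d$ holds for \emph{every} $\ell$, so the final bound is independent of $\tilde q_\alg$. A minor subtlety, routine to check, is the passage from the pairwise TV bound on the two individual interaction laws $\PP^{M_{\theta_{-i},\pm\Delta},\alg}$ (which is what Pinsker delivers) to the bound on $\DTV(\mu_{+i},\mu_{-i})$ between the corresponding mixtures over $\theta_{-i}$, which follows from the data-processing inequality applied to the mixing map.
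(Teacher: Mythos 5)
Your proposal is correct and follows essentially the same route as the paper's proof in \cref{appdx:proof-linear-l1-lower}: an Assouad-type argument over the hypercube $\{\pm\Delta\}^d$ with coordinate-wise sign-based risk lower bounds, combined with the KL chain rule \eqref{eqn:KL-chain} from \cref{prop:pLDP} (using that $\Dl\leq\Delta/d$ uniformly over $\lf$, hence over the algorithm-dependent mixing distribution) and Pinsker's inequality. The only cosmetic difference is that you phrase the coordinate-wise step via Le Cam's two-point lemma on the mixtures $\mu_{\pm i}$, whereas the paper sums explicitly over the pairs in $\Theta_i$; these are interchangeable by convexity of total variation.
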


\begin{proof}
For each $\theta\in\Theta$, we consider
\begin{align*}
    p_{\theta}\defeq \PP\sups{M_\theta,\alg}(\hpi=\cdot) \in\DPi.
\end{align*}
Recall the chain rule of KL divergence and \cref{prop:pLDP}: for any model $M,\oM$,
\begin{align*}
    \KLd{\PP\sups{M,\alg}}{\PP\sups{\oM,\alg}}=&~ \EE\sups{M,\alg}\brac{ \sum_{t=1}^T \KLd{ \pr^t\circ M }{ \pr^t\circ \oM } } \\
    \leq&~ T(\ea-1)^2\DTV{M,\oM}^2.
\end{align*}
Therefore, by data-processing inequality, we have
\begin{align*}
    \DTV{p_\theta,p_{\theta'}}\leq (\ea-1)\sqrt{T}\DTV{M_\theta,M_{\theta'}}
    =(\ea-1)\sqrt{T}\cdot \frac1d\nrm{\theta-\theta'}_1.
\end{align*}
We further denote $p_{\theta,i}=p_\theta(\pi(i)\geq 0)$. Then it holds that
\begin{align*}
    \EE\sups{M_\theta,\alg}[\riskdm(T)]=&~\EE_{\pi\sim p_\theta}\LM[M_\theta]{\pi}\\
    =&~ \frac1d\sum_{i=1}^d \EE_{\pi\sim p_\theta}\abs{\pi(i)-\theta_i}\\
    \geq&~ \frac{\Delta}d\paren{ \sum_{i: \theta_i=-\Delta} p_{\theta,i} + \sum_{i: \theta_i=+\Delta}(1-p_{\theta,i}) }.
\end{align*}
Thus,
\begin{align*}
    \sum_{\theta\in\Theta} \EE\sups{M_\theta,\alg}[\riskdm(T)]
    \geq&~ \frac{\Delta}d \sum_{i=1}^d \paren{ \sum_{\theta: \theta_i=-\Delta} p_{\theta,i} + \sum_{\theta: \theta_i=+\Delta}(1-p_{\theta,i}) } \\
    =&~ \frac{\Delta}d \sum_{i=1}^d \paren{ 2^{d-1} - \sum_{(\theta,\theta')\in\Theta_i} (p_{\theta,i}-p_{\theta',i}) },
\end{align*}
where $\Theta_i=\set{ (\theta,\theta'): \theta_i=+\Delta, \theta_i'=-\Delta, \text{and }\forall j\neq i, \theta_j=\theta_j' }$. Notice that for any $(\theta,\theta')\in\Theta_i$, we have
\begin{align*}
    \abs{p_{\theta,i}-p_{\theta',i}}\leq \DTV{p_\theta,p_{\theta'}}\leq \frac{2\Delta(\ea-1)\sqrt{T}}{d}.
\end{align*}
Therefore, we have
\begin{align*}
    \sum_{\theta\in\Theta} \EE\sups{M_\theta,\alg}[\riskdm(T)]\geq \Delta\cdot 2^{d-1}\paren{1-\frac{2\Delta(\ea-1)\sqrt{T}}{d}},
\end{align*}
which immediately implies
\begin{align*}
    \max_{M\in\cM} \EE\sups{M_\theta,\alg}[\riskdm(T)]\geq \frac{\Delta}{2}\paren{1-\frac{2\Delta(\ea-1)\sqrt{T}}{d}}.
\end{align*}
Hence, we must have $T\geq \frac{d^2}{4\Delta^2(\ea-1)^2}$.
\end{proof}

Choosing $\Delta=\min\sset{\frac{d}{4(\ea-1)\sqrt{T}}, \frac{1}{\sqrt{d}}}$ in the proof above (which ensures $\Theta\subset \Bone$), we have the following corollary.
\begin{corollary}\label{cor:linear-lower}
There exists a covariate distribution $\mu$ over $\Bone$, 
such that for the model class $\cM$ consisting of the linear models with covariate distribution $\mu$, 
any $T$-round \pLDP~algorithm with output $\hth$, it holds that
\begin{align*}
    \sup_{\Mstar\in\cM} \EE\sups{\Mstar,\alg} \Lonel{\Mstar}{\hth} \geqsim \min\sset{\frac{d}{\alpha\sqrt{T}}, \frac{1}{\sqrt{d}}}
\end{align*}
\end{corollary}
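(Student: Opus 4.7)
}

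The plan is to specialize the hardness construction underlying the preceding proposition to the linear-model setting by identifying the finite covariate space $\cX=[d]$ with the standard basis $\{e_1,\dots,e_d\}\subset\Bone$. With this identification the covariate distribution $\mu\defeq\Unif\{e_1,\dots,e_d\}$ lives on $\Bone$, and each function $f_\theta(i)=\theta_i$ matches a linear mean $\la\theta,e_i\ra$, so the class $\cM_d$ of well-specified models constructed in the proof of the proposition is exactly a sub-class of $\cMlin$ provided the candidate parameters satisfy $\Theta\subset\Bone$. Since $\Theta=\{-\Delta,+\Delta\}^d$ has Euclidean radius $\Delta\sqrt d$, this requires $\Delta\leq 1/\sqrt d$. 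Under this identification, the $L_1$ regression loss used in the proposition coincides with the linear $L_1$ loss in \cref{thm:linear-upper}, namely $\Lonel{M_\theta}{\hth}=\tfrac1d\sum_i|\hth_i-\theta_i|$ for any $\hth\in\Bone\subset[-1,1]^d$.

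The key observation is that the proof of the preceding proposition never used the restriction $T\ge\Om{d^2/(\alpha^2\Delta^2)}$ per se; it really established the stronger intermediate inequality
\[
\max_{M\in\cM_d}\EE\sups{M,\alg}[\riskdm(T)]\;\ge\;\frac{\Delta}{2}\paren[\bigg]{\,1-\frac{2\Delta(\ea-1)\sqrt T}{d}\,},
\]
valid for every $\Delta\in(0,1/\sqrt d]$. The argument then reduces to tuning $\Delta$ so that (i) the parenthesized factor is bounded below by a constant and (ii) $\Delta$ remains in $(0,1/\sqrt d]$. I plan to pick
\[
\Delta\;\defeq\;\min\sset[\bigg]{\,\frac{d}{4(\ea-1)\sqrt T},\;\frac{1}{\sqrt d}\,},
\]
which automatically enforces $\Delta\le 1/\sqrt d$ and guarantees $\tfrac{2\Delta(\ea-1)\sqrt T}{d}\le \tfrac12$, so the displayed lower bound is at least $\Delta/4$. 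Using $\ea-1\asymp\alpha$ uniformly in $\alpha\le\alpha_0$ (cf.\ the blanket assumption on $\alpha$ after the definition of $\Pcp$), one obtains $\Delta\gtrsim\min\{d/(\alpha\sqrt T),\,1/\sqrt d\}$, which is exactly the claimed rate.

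The only mildly delicate step will be the bookkeeping that the resulting $\cM$ is a subclass of $\cMlin$ with a single, $T$-independent covariate distribution $\mu$: although the choice of $\Delta$ depends on $T$, the covariate distribution $\mu=\Unif\{e_1,\dots,e_d\}$ does not, so the construction yields a fixed $\mu$ and a family of parameter sets $\Theta(T)\subset\Bone$, all contained in the same $\cM$ defined via $\mu$ and the full parameter ball $\Bone$. I expect no other obstacle; the proposition does all the heavy lifting (the Hellinger/KL-to-$\DTV$ step via \cref{prop:pLDP}, chain rule, and Assouad-type averaging), and the corollary is essentially just the envelope of the two regimes $\Delta=d/(4(\ea-1)\sqrt T)$ and $\Delta=1/\sqrt d$.
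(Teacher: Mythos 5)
Your proposal is correct and is essentially identical to the paper's own argument: the paper likewise instantiates the Assouad-type construction on $\cX=[d]$ (identified with the standard basis in $\Bone$), uses the intermediate bound $\max_{M}\EE\sups{M,\alg}[\riskdm(T)]\geq \frac{\Delta}{2}(1-\frac{2\Delta(\ea-1)\sqrt{T}}{d})$, and then sets $\Delta=\min\{\frac{d}{4(\ea-1)\sqrt{T}},\frac{1}{\sqrt{d}}\}$ so that $\Theta\subset\Bone$ and the parenthesis is at least $1/2$. Your additional bookkeeping (the $T$-independence of $\mu$, the identification of the two $L_1$ losses, and $\ea-1\asymp\alpha$) is exactly the right set of checks and introduces no gap.
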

This lower bounds implies that the upper bound of \cref{thm:linear-upper} is nearly minimax-optimal (up to logarithmic factors).

\subsection{Proof of \cref{thm:LCB-dec-upper}}\label{appdx:proof-linear-CB-upper}

\newcommand{\locm}[1]{\cM_{#1}(\oM)}
\newcommand{\locmp}[1]{\locm{#1}\cup \set{\oM}}
\newcommand{\pix}{\phi(x,\pi)}

We claim that the \rDEC~of $\cM$ can be bounded as
\begin{align}\label{eq:proof-lin-cb-goal}
    \rdecl_\eps(\cM)=\sup_{\oM\in\coM} \rdecl_\eps(\cM\cup\set{\oM},\oM)\leq (20d+6)\eps, \qquad \forall \eps\in[0,1].
\end{align}
With \eqref{eq:proof-lin-cb-goal}, we may directly apply \cref{thm:CB-adv}, as $\log N_{\infty}(\cFlin,\Delta)\leq \bigO{d\log(1/\Delta)}$.

In the following, it remains to prove \eqref{eq:proof-lin-cb-goal}.
We only need to upper bound $\rdecl_\eps(\cM\cup\set{\oM},\oM)$ for any fixed reference model $\oM\in\coM$. Following the proof of \cref{thm:linear-upper} (\cref{appdx:proof-linear-upper}), we assume that $\phi(x,a)\neq 0$ for all $x\in\cX$, $a\in\cA$ without loss of generality.

Fix a reference model $\oM=\EE_{M\sim \mu}[M]\in\coM$ and $\eps\in[0,1]$. Note that $\oM$ is also associated with a mean reward function $\fom$ (not necessarily in $\cFlin$) and a context distribution $\onu\in\Delta(\cX)$. For any $M\in\cM$, we let $\thM\in\Bone$ be the associated parameter so that the mean reward function $\fm\in\cF$ is given by $\fm(x,a)=\lr \thM, \phi(x,a)\rr$.

In the following, we proceed to upper bound the \rDEC~:
\begin{align*}
    &~\rdecl_{\eps}(\cM\cup\set{\oM},\oM) \\
    \defeq&~ \inf_{\substack{p\in\DPL}}\sup_{M\in\cM\cup\set{\oM}}\constr{ \EE_{\pi\sim p}[\Vmm-\Vm(\pi)] }{ \EE_{(\pi,\lf)\sim p} \Dl^2( M(\pi), \oM(\pi) )\leq \eps^2 }.
\end{align*}
For notational simplicity, for any $p\in\DPL$, we denote
\begin{align*}
    \cM_{p,\eps^2}(\oM)\defeq \sset{M\in\cM: \EE_{(\pi,\lf)\sim p} \Dl^2( M(\pi), \oM(\pi) )\leq \eps^2}.
\end{align*}

Let $0<\lambda_0<\min\sset{\frac{\eps^2}{100},\frac{\eps}{\log|\cA|}}$ be a sufficiently small, fixed parameter. Let $\lambda=5\eps$. 

\newcommand{\Up}{U_\pi}

\newcommand{\gpi}{\mathsf{n}_\pi}
\newcommand{\Thom}{\Theta\subs{\oM}}
\newcommand{\thp}{\widehat\theta_P}

In the following, for notational simplicity, we denote $\pix\defeq \phi(x,\pi(x))\in\R^d$. 
We first invoke the following corollary of \cref{lem:U}.
\begin{lemma}\label{lem:U-pi}
Suppose that $\lambda_0>0$ and $\onu\in\DX$ are given. Then by \cref{lem:U}, for each $\pi\in\Pi$, there exists a PSD matrix $\Up\in \Rdd$ satisfies the following equation:
\begin{align}%
    \EE_{x\sim \onu}  \frac{\Up\pix\pix^\top \Up}{\nrm{\Up\pix}} +\lambda_0 \Up=\id_d.
\end{align}
We further define $\gpi(x)\defeq \brac{1;\frac{\Up\pix}{\nrm{\Up\pix}}}$.
\end{lemma}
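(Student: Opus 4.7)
The plan is to obtain Lemma~\ref{lem:U-pi} as an immediate corollary of Lemma~\ref{lem:U} by applying it to a suitable pushforward measure. Concretely, for each fixed policy $\pi\in\Pi$, I would define $\onu_\pi\in\Delta(\mathbb{R}^d)$ to be the distribution of $\phi(x,\pi(x))$ when $x\sim\onu$, i.e., the pushforward of $\onu$ under the map $x\mapsto\pix$. Since this map is deterministic once $\pi$ is fixed, $\onu_\pi$ is a well-defined Borel probability measure on $\mathbb{R}^d$ supported inside the unit ball (because $\phi:\cX\times\cA\to\mathbf{B}^d(1)$).

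Next, I would invoke Lemma~\ref{lem:U} with the base distribution $\onu_\pi$ in place of $\onu$. This produces a PSD matrix $\Up\in\Rdd$ satisfying
\begin{align*}
\EE_{y\sim\onu_\pi}\frac{\Up y y^\top \Up}{\|\Up y\|}+\lambda_0\Up=\id_d.
\end{align*}
Rewriting the expectation via the change-of-variables formula for the pushforward ($\EE_{y\sim\onu_\pi}[F(y)]=\EE_{x\sim\onu}[F(\pix)]$ for any measurable $F$) immediately yields the desired identity
\begin{align*}
\EE_{x\sim\onu}\frac{\Up\pix\pix^\top\Up}{\|\Up\pix\|}+\lambda_0\Up=\id_d.
\end{align*}
The definition $\gpi(x)\defeq[1;\Up\pix/\|\Up\pix\|]$ is then just a notational convention, well-defined $\onu$-a.s. because the preceding paragraph has reduced to the case $\phi(x,a)\neq 0$ without loss of generality, so $\|\Up\pix\|>0$ for $\onu$-a.e.\ $x$ (recall $\Up$ is positive definite by Lemma~\ref{lem:U}, as its image lies in the region $\cU$ defined in \eqref{eqn:region-U}).

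There is essentially no technical obstacle: the entire content is the pushforward reduction. The only point worth verifying carefully is that the hypotheses of Lemma~\ref{lem:U} are met by $\onu_\pi$, namely that the integrand $y\mapsto y y^\top/\|Uy\|$ is well-defined $\onu_\pi$-a.s.; this is guaranteed by the standing assumption $\phi(x,a)\neq 0$ recorded just before the lemma statement, which ensures $\onu_\pi(\{0\})=0$. Since Lemma~\ref{lem:U} applies uniformly for any admissible base distribution, the construction goes through for every $\pi\in\Pi$ separately, producing the policy-indexed family $\{\Up\}_{\pi\in\Pi}$ used in subsequent arguments.
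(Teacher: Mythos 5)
Your proposal is correct and matches the paper's intent exactly: the paper states Lemma~\ref{lem:U-pi} as an immediate consequence of Lemma~\ref{lem:U} without further argument, and your pushforward reduction (applying Lemma~\ref{lem:U} to the law of $\pix$ under $x\sim\onu$, which avoids the origin by the standing assumption $\phi(x,a)\neq 0$) is the natural formalization of that one-line invocation.
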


\paragraph{Fixed point argument} Our proof strategy is that, for any distribution $P\in\DPi$, we define a ``refinement'' $F(P)\in\DPi$ of $P$. Then, the fixed point of $F$ is a distribution of good properties.

(1) Define the constrained set
\begin{align*}
    \Thom\defeq \set{ \theta\in\Bone: \abs{ \EE_{x\sim \onu}\lr \theta, \phi(x,\piom) \rr - \Vmm[\oM] }\leq 4\eps }. 
\end{align*}
Then, for each $P\in\DPi$, we define
\begin{align}\label{eqn:def-thp}
    \thp\defeq \argmin_{\theta\in\Thom} L_P(\theta)\defeq \EE_{\pi\sim P} \nrm{ \EE_{x\sim \onu} \gpi(x) \paren{ \lr \theta, \pix \rr - \fom(x,\pi(x)) } }^2 +\lambda_0^2\nrm{\theta}^2.
\end{align}
By the strong convexity of $L_P$, $\thp$ is a continuous function of $P\in\DPi$.

\newcommand{\sigp}{\Sigma_P}
\newcommand{\hfp}{\widehat{f}_P}

(2) Define $\sigp\defeq \EE_{\pi\sim P} \Up^{-2}$ and
\begin{align}\label{eqn:def-fP}
    \hfp(x,a)\defeq \lr \thp, \phi(x,a) \rr + 2 \cuto{\lambda \nrm{\phi(x,a)}_{\sigp^{-1}} }, \qquad \forall (x,a)\in\cX\times\cA.
\end{align}

(3) For each $x\in\cX$, we define
\begin{align}\label{eqn:def-QP}
    Q_P(\cdot|x)\defeq \argmax_{q\in\DA}~ \EE_{a\sim q} \hfp(x,a)+\lambda_0 H(q),
\end{align}
where $H(q)=-\sum_{a\in\cA} q(a)\log q(a)$ is the entropy of $q\in\DA$. Notice that the objective function is strongly concave with respect to $q\in\DA$, and hence $Q_P$ is continuous with respect to $P$.

(4) Finally, define $F(P)=\bigotimes_{x\in\cX} Q_P(\cdot|x)\in\DPi$. Formally, we define $F(P)\in\DPi$ as\footnote{Alternatively, we can also define $F(P)\in\DPi$ as the distribution of $\pi$ generated as $\pi(x)\sim Q_P(\cdot|x)$ independently for $x\in\cX$.}
\begin{align*}
    F(P)[\pi]\defeq \prod_{x\in\cX} Q_P(\pi(x)|x).
\end{align*}

By definition, $F:\DPi \to\DPi$ is continuous, and hence by \cref{thm:K-fixed-point}, there exists $P\in\DPi$ such that $F(P)=P$. In the following, we work with such a fixed-point distribution $P$.

We start with the following lemmas.
\begin{lemma}\label{lem:val-optimism}
(1) For any $M\in\cM$ with $\nrm{\thM-\thp}_{\sigp} \leq \lambda$, it holds that
\begin{align*}
    \Vmm-\EE_{\pi\sim P}\Vm(\pi)\leq 4 \EE_{\pi\sim P} \EE_{x\sim \muM} \cuto{ \lambda\nrm{\pix}_{\sigp^{-1}} }+\eps.
\end{align*}

(2) It holds that
\begin{align*}
    \EE_{\pi\sim P} \EE_{x\sim \onu} \nrm{\pix}_{\sigp^{-1}}\leq d.
\end{align*}
\end{lemma}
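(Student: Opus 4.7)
For part~(1), the plan is to show that $\hfp$ serves as a pointwise upper confidence bound for $\fm$ whenever $\nrm{\thM-\thp}_{\sigp}\le\lambda$. Cauchy--Schwarz in the dual norm pair gives $|\lr\thM-\thp,\phi(x,a)\rr|\le\lambda\nrm{\phi(x,a)}_{\sigp^{-1}}$ pointwise, and a short case-split on whether $\lambda\nrm{\phi(x,a)}_{\sigp^{-1}}$ exceeds $1$ (combined with the trivial a~priori bounds $\fm(x,a),\lr\thp,\phi(x,a)\rr\in[-1,1]$) will establish the sandwich inequalities $\fm(x,a)\le\hfp(x,a)\le\fm(x,a)+4\cuto{\lambda\nrm{\phi(x,a)}_{\sigp^{-1}}}$ by exploiting the bonus term $2\cuto{\cdot}$ built into definition~\eqref{eqn:def-fP}.

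To convert this into a value-gap bound I will use that $P$ is a fixed point of $F$, so its marginal at any $x$ equals the entropy-regularized softmax $Q_P(\cdot\mid x)$ of $\hfp(x,\cdot)$ from~\eqref{eqn:def-QP}. Since the regret of the $\lambda_0$-entropy-regularized maximizer against any fixed arm is bounded by $\lambda_0\log|\cA|$, one has $\EE_{a\sim Q_P(\cdot\mid x)}\hfp(x,a)\ge\hfp(x,\pim(x))-\lambda_0\log|\cA|$. Chaining optimism $\hfp(x,\pim(x))\ge\fm(x,\pim(x))$ on the benchmark side with the overestimation bound on the $P$-side, then taking expectation under $x\sim\muM$, yields the desired inequality up to an additive residual $\lambda_0\log|\cA|$, which the initial choice $\lambda_0<\eps/\log|\cA|$ absorbs into $\eps$.

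For part~(2), the plan is to apply \cref{lem:U-pi} in two forms: rearranging the identity gives $\EE_{x\sim\onu}\pix\pix^\top/\nrm{\Up\pix}\preceq \Up^{-2}$, and taking its trace gives $\EE_{x\sim\onu}\nrm{\Up\pix}\le d$. I will split $\nrm{\pix}_{\sigp^{-1}}=\sqrt{\nrm{\Up\pix}}\cdot \nrm{\pix}_{\sigp^{-1}}/\sqrt{\nrm{\Up\pix}}$ and apply Cauchy--Schwarz over $x$, which controls one factor by $\sqrt{d}$ and the other by $\sqrt{\tr(\sigp^{-1}\Up^{-2})}$. A second Cauchy--Schwarz over $\pi\sim P$, together with the telescoping identity $\EE_{\pi\sim P}\Up^{-2}=\sigp$ built into the definition of $\sigp$, collapses the remaining trace to $\tr(\id_d)=d$, giving the bound $d$.

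The main obstacle is the careful constant bookkeeping in part~(1): one must verify that the $2\cuto{\cdot}$ bonus is simultaneously large enough to enforce optimism even when the confidence radius $\lambda\nrm{\phi}_{\sigp^{-1}}$ saturates the clip (where the plug-in $\lr\thp,\phi\rr$ can be as low as $-1$ while $\fm$ can be as large as $+1$), and small enough that the overestimation is still controlled by $4\cuto{\cdot}$. Everything else---the fixed-point marginalization, the entropy-regularized regret estimate, and the nested Cauchy--Schwarz---is mechanical once the identity of \cref{lem:U-pi} is on hand.
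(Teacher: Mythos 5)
Your proposal is correct and follows essentially the same route as the paper: the sandwich $\fm(x,a)\le\hfp(x,a)\le\fm(x,a)+4\cuto{\lambda\nrm{\phi(x,a)}_{\sigp^{-1}}}$ via Cauchy--Schwarz and the a~priori bounds, combined with the fixed-point identity $P=F(P)$ and the $\lambda_0\log|\cA|$ entropy-regularization slack for part~(1), and the nested Cauchy--Schwarz with $\EE_{x\sim\onu}\nrm{\Up\pix}\le d$, $\EE_{x\sim\onu}\pix\pix^\top/\nrm{\Up\pix}\preceq\Up^{-2}$, and $\EE_{\pi\sim P}\Up^{-2}=\sigp$ for part~(2). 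No gaps.
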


\paragraph{Construction of $p$} First, we define a distribution $\op\in\DPL$ as follows. Consider the maps
\begin{align*}
    \Bar{\vf}(z)=[r, \phi(x,\piom)],
\end{align*}
and \cref{lem:dist-l} implies that there exists $\oq\in \DL$
such that for any model $M\in\cM$, $\pi\in\Pi$,
\begin{align*}
    \EE_{\lf\sim \oq} \Dl^2(M(\pi),\oM(\pi))\geq \frac12 \nrm{  \EE_{z\sim M(\pi)} \bar{\vf}(z)-\EE_{z\sim \oM(\pi)} \bar{\vf}(z) }^2.
\end{align*}
We then choose $\op\in\DPL$ to be the distribution of $(\piom,\lf)$ under $\lf\sim \oq$.
\begin{lemma}\label{lem:op-empty}
Suppose that $M\in\locm{\op,2\eps^2}$. Then $\thM\in\Thom$ and $\abs{\Vm(\piom)-\Vm[\oM](\piom)}\leq 2\eps$.
\end{lemma}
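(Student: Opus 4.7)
}
The plan is to unpack the assumption $M\in\cM_{\op,2\eps^2}(\oM)$ through the way $\op$ was just constructed, and then read off both conclusions directly from the two coordinates of the packed map $\bar\vf(z)=[r,\phi(x,\piom)]$. Since $\op$ is the distribution of $(\piom,\lf)$ with $\lf\sim\oq$, the hypothesis is simply
\[
\EE_{\lf\sim\oq}\Dl^2\bigl(M(\piom),\oM(\piom)\bigr)\le 2\eps^2.
\]
By the property of $\oq$ guaranteed by \cref{lem:dist-l}, this yields
\[
\bigl\|\EE_{z\sim M(\piom)}\bar\vf(z)-\EE_{z\sim\oM(\piom)}\bar\vf(z)\bigr\|\le 2\eps.
\]
Since $\|\cdot\|$ dominates each coordinate, this splits into (i) a reward bound and (ii) a feature-mean bound.

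\paragraph{Coordinate (i): reward component.}
Under $z\sim M(\piom)$, we have $r\sim \Rad(\langle\thM,\phi(x,\piom(x))\rangle)$ with $x\sim\muM$, so $\EE_{z\sim M(\piom)}[r]=\Vm(\piom)$. Under $z\sim\oM(\piom)$, since $\oM=\EE_{M'\sim\mu}M'$ and the value function is linear in the model, $\EE_{z\sim\oM(\piom)}[r]=\Vm[\oM](\piom)=\Vmm[\oM]$ (using that $\piom=\pim[\oM]$). Therefore $|\Vm(\piom)-\Vm[\oM](\piom)|\le 2\eps$, which is conclusion (b).

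\paragraph{Coordinate (ii): feature-mean component and $\thM\in\Thom$.}
Similarly, $\EE_{z\sim M(\piom)}\phi(x,\piom)=\EE_{x\sim\muM}\phi(x,\piom(x))$ and $\EE_{z\sim\oM(\piom)}\phi(x,\piom)=\EE_{x\sim\onu}\phi(x,\piom(x))$, so the second coordinate gives
\[
\bigl\|\EE_{x\sim\muM}\phi(x,\piom(x))-\EE_{x\sim\onu}\phi(x,\piom(x))\bigr\|\le 2\eps.
\]
Pairing this with $\thM\in\Bone$ via Cauchy--Schwarz,
\[
\bigl|\EE_{x\sim\onu}\langle\thM,\phi(x,\piom)\rangle-\EE_{x\sim\muM}\langle\thM,\phi(x,\piom)\rangle\bigr|\le 2\eps.
\]
The second term equals $\Vm(\piom)$, which by (b) is within $2\eps$ of $\Vmm[\oM]$. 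Triangle inequality then gives
\[
\bigl|\EE_{x\sim\onu}\langle\thM,\phi(x,\piom)\rangle-\Vmm[\oM]\bigr|\le 4\eps,
\]
i.e.\ $\thM\in\Thom$, which is conclusion (a).

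Both steps are essentially bookkeeping once the construction of $\bar\vf$ and $\op$ is unpacked, so no serious obstacle is anticipated; the main care is merely matching the expectations of $r$ and of $\phi(x,\piom)$ under $M(\piom)$ and $\oM(\piom)$ to $\Vm(\piom)$, $\Vmm[\oM]$, and the feature means over $\muM,\onu$, and remembering that $\piom$ is optimal for $\oM$ so $\Vm[\oM](\piom)=\Vmm[\oM]$.
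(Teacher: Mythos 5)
Your proof is correct and follows essentially the same route as the paper's: unpack the constraint via \cref{lem:dist-l} to get $\nrm{\EE_{z\sim M(\piom)}\bar\vf(z)-\EE_{z\sim\oM(\piom)}\bar\vf(z)}\leq 2\eps$, read off the reward coordinate for the second conclusion, and combine the feature-mean coordinate (via Cauchy--Schwarz with $\nrm{\thM}\leq 1$) with the value bound and a triangle inequality to get membership in $\Thom$. The only cosmetic difference is that you bound each coordinate by $2\eps$ separately rather than carrying the sum of squares $\leq 4\eps^2$ as the paper does; the constants and conclusions match.
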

Next, we define $\ps\in\DPL$ as follows. For each policy $\pi\in\Pi$, we define a map $\vf_\pi:\cZ\to \R^{(d+1)^2+1}$:
\begin{align*}
\vf_\pi(z)\defeq [ ~\gpi(x)\cdot r; ~\vvec(\gpi(x)\pix^\top); ~\cuto{\nrm{\pix}_{\sigp^{-1}}} ],
\end{align*}
and \cref{lem:dist-l} implies that there exists $q_\pi\in \DL$
such that for any model $M\in\cM$, $\pi\in\Pi$,
\begin{align}\label{eqn:def-qpi}
    \EE_{\lf\sim q_\pi} \Dl^2(M(\pi),\oM(\pi))\geq \frac15 \nrm{  \EE_{z\sim M(\pi)} \vf_\pi(z)-\EE_{z\sim \oM(\pi)} \vf_\pi(z) }^2.
\end{align}
We then define $\ps\in\DPL$ to be the distribution of $(\pi,\lf)$ under $\pi\sim P$, $\lf\sim q_\pi$. We summarize the properties of $\ps$ in the following lemma.
\begin{lemma}\label{lem:Dl-to-diff}
Suppose that $M\in\locm{\ps,2\eps^2}$. Then it holds that
\begin{align}
    &\EE_{\pi\sim P}\abs{ \Vm(\pi)-\Vm[\oM](\pi) }\leq 2\eps,\label{eqn:diff-val-pi} \\
    &\EE_{\pi\sim P} \nrm{ \EE_{x\sim \onu} \gpi(x) \paren{ \lr \theta, \pix \rr - \fom(x,\pi(x)) } }^2 \leq 20\eps^2, \label{eqn:diff-U-th}\\
    &\EE_{\pi\sim P}\abs{ \EE_{x\sim \muM} \cuto{\lambda \nrm{\pix}_{\sigp^{-1}}} - \EE_{x\sim \nu_{\oM}} \cuto{\lambda \nrm{\pix}_{\sigp^{-1}}} }\leq 4\eps.\label{eqn:diff-nrm-pix}
\end{align}
In particular, when $\thM\in\Thom$, it holds that $\nrm{\thM-\thp}_{\sigp} \leq 5\eps=\lambda$.
\end{lemma}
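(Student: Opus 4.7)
The plan is to derive all four conclusions from a single master inequality obtained by unpacking the definition of $\ps$ together with \cref{lem:dist-l}.

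First, by construction $\ps$ is the law of $(\pi,\lf)$ with $\pi\sim P$ and $\lf\sim q_\pi$, where $q_\pi$ is the distribution from \cref{lem:dist-l} applied to a suitable rescaling of $\vf_\pi$. Thus the hypothesis $M\in\locm{\ps,2\eps^2}$ together with \eqref{eqn:def-qpi} gives
\begin{align*}
  \EE_{\pi\sim P}\nrm{\,M(\pi)[\vf_\pi]-\oM(\pi)[\vf_\pi]\,}^2 \;\leq\; 10\eps^2.
\end{align*}
Since $\vf_\pi$ decomposes into three independent blocks $\gpi(x)r$, $\vvec(\gpi(x)\pix^\top)$, and $\cuto{\nrm{\pix}_{\sigp^{-1}}}$, each block inherits a bound of $10\eps^2$ in the $\pi$-averaged squared $\ell_2$ sense.

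Second, I extract each of \eqref{eqn:diff-val-pi}--\eqref{eqn:diff-nrm-pix} from the appropriate block. For \eqref{eqn:diff-val-pi}, note that the first scalar coordinate of $\gpi(x)$ is $1$, so the first scalar coordinate of $M(\pi)[\gpi r]-\oM(\pi)[\gpi r]$ equals $\Vm(\pi)-\Vm[\oM](\pi)$; Jensen's inequality applied to $\sqrt{\cdot}$ converts the squared bound into the $\ell_1$ bound. For \eqref{eqn:diff-U-th}, I write
\begin{align*}
  \EE_{x\sim\onu}\gpi(x)\paren{\lr\thM,\pix\rr-\fom(x,\pi(x))}
  =\underbrace{\EE_{x\sim\onu}\gpi(x)\lr\thM,\pix\rr-\EE_{x\sim\muM}\gpi(x)\lr\thM,\pix\rr}_{\text{matrix block, dotted with }\thM}+\underbrace{M(\pi)[\gpi r]-\oM(\pi)[\gpi r]}_{\text{reward block}},
\end{align*}
use $\nrm{\thM}\leq 1$ to bound the first summand by the Frobenius deviation of the matrix block, and combine via $(a+b)^2\leq 2(a^2+b^2)$. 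For \eqref{eqn:diff-nrm-pix}, I apply Cauchy--Schwarz to the scalar third block.

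Third, for the ``in particular'' statement, observe that $L_P$ is a quadratic form in $\theta$ with Hessian $H_P=2\EE_{\pi\sim P}[A_\pi^\top A_\pi]+2\lambda_0^2 I$, where $A_\pi=\EE_{x\sim\onu}[\gpi(x)\pix^\top]$. Dropping the first row of $A_\pi$ and using the fixed-point identity from \cref{lem:U-pi} gives the second-block identity $\EE_{x\sim\onu}[U_\pi\pix\pix^\top/\nrm{U_\pi\pix}]=U_\pi^{-1}-\lambda_0 I$, whence $A_\pi^\top A_\pi\succeq(U_\pi^{-1}-\lambda_0 I)^2\succeq U_\pi^{-2}-2\lambda_0 U_\pi^{-1}$. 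Averaging over $\pi\sim P$ and using $U_\pi^{-1}\preceq\sqrt{1+\lambda_0}\,I$ yields $H_P\succeq 2\sigp-O(\lambda_0) I\succeq \sigp$ once $\lambda_0$ is taken small enough (recall $\lambda_0<\eps^2/100$). The constrained first-order optimality of $\thp$ in the convex set $\Thom$ (which contains $\thM$ by assumption and is preserved under the argmin since $L_P$ is strongly convex) gives $\nabla L_P(\thp)^\top(\thM-\thp)\geq 0$, and combining with the quadratic-remainder identity
\begin{align*}
  L_P(\thM)-L_P(\thp)-\nabla L_P(\thp)^\top(\thM-\thp)=\tfrac12\nrm{\thM-\thp}_{H_P}^2
\end{align*}
gives $\nrm{\thM-\thp}_{\sigp}^2\leq 2(L_P(\thM)-L_P(\thp))\leq 2L_P(\thM)$. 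Since $\thM\in\Thom$ and $\nrm{\thM}\leq 1$, \eqref{eqn:diff-U-th} bounds $L_P(\thM)\leq 20\eps^2+\lambda_0^2$, so $\nrm{\thM-\thp}_{\sigp}\leq\sqrt{40\eps^2+O(\lambda_0^2)}\leq 5\eps=\lambda$.

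The main obstacle is the last step: establishing that the Hessian of $L_P$ dominates $\sigp$ despite $A_\pi$ being a $(d+1)\times d$ matrix (not square) and the cross-terms with the first row of $A_\pi$ being signed. This requires carefully exploiting the defining fixed-point equation of $U_\pi$ to identify the second block with $U_\pi^{-1}-\lambda_0 I$, and verifying that the discarded first row only helps the lower bound. The correct choice of the tiny regularizer $\lambda_0\ll\eps^2$ in \cref{lem:U-pi} ensures the $O(\lambda_0)$ error is absorbed into the final constant.
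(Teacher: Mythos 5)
Your overall route is the same as the paper's: the same master inequality from \eqref{eqn:def-qpi} (giving $\EE_{\pi\sim P}\nrm{M(\pi)[\vf_\pi]-\oM(\pi)[\vf_\pi]}^2\leq 10\eps^2$), the same block-wise extraction for \eqref{eqn:diff-val-pi} and \eqref{eqn:diff-nrm-pix}, the identical add-and-subtract decomposition for \eqref{eqn:diff-U-th} (this is exactly the paper's argument, mirroring \cref{lem:op-empty}), and the same strong-convexity-plus-fixed-point argument for the final claim. The first three bullets are fine.

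There is, however, a genuine slip in the ``in particular'' step. You correctly derive $\tfrac12 H_P = \EE_{\pi\sim P}A_\pi^\top A_\pi+\lambda_0^2 I \succeq \sigp - O(\lambda_0)I$ (i.e.\ $H_P\succeq 2\sigp-O(\lambda_0)I$), but then weaken this to $H_P\succeq\sigp$ before invoking $\tfrac12\nrm{\thM-\thp}_{H_P}^2\leq L_P(\thM)-L_P(\thp)$. This loses a factor of $2$: your chain gives $\nrm{\thM-\thp}_{\sigp}^2\leq 2L_P(\thM)\leq 40\eps^2+O(\lambda_0^2)$, and $\sqrt{40}\approx 6.32>5$, so the conclusion $\nrm{\thM-\thp}_{\sigp}\leq 5\eps=\lambda$ does \emph{not} follow as written --- and the constant $5$ matters downstream because $\lambda=5\eps$ feeds into \cref{lem:val-optimism}. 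Moreover, the justification ``$2\sigp-O(\lambda_0)I\succeq\sigp$ once $\lambda_0$ is small'' is not available: the region \eqref{eqn:region-U} only guarantees $\Up^{-2}\succeq\lambda_0 I$, so $\lmin(\sigp)$ can itself be of order $\lambda_0$ and shrinks with it, whereas the discarded error is of order $\lambda_0$ as well. The fix is simply to not convert to a matrix domination: keep $\tfrac12\nrm{\theta}_{H_P}^2\geq\nrm{\theta}_{\sigp}^2-O(\lambda_0)\nrm{\theta}^2$, so that $\nrm{\thM-\thp}_{\sigp}^2\leq L_P(\thM)+O(\lambda_0)\nrm{\thM-\thp}^2\leq 20\eps^2+O(\lambda_0)\leq 25\eps^2$ using $\nrm{\thM-\thp}\leq 2$ and $\lambda_0\leq\eps^2/100$. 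This is exactly how the paper absorbs the $\lambda_0$ terms, and with that correction your argument goes through. (A smaller remark: the clean constants $2\eps$ in \eqref{eqn:diff-val-pi} and $4\eps$ in \eqref{eqn:diff-nrm-pix} come out as $\sqrt{10}\eps$ under your Jensen step; the paper is equally terse here and the discrepancy is immaterial downstream, so I do not count it against you.)
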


\newcommand{\epsp}{\eps_P}
\newcommand{\Mp}{M_P}
Now, we consider three cases. Define $\Mp\in\cM$ be the model with context distribution $\onu$ and parameter $\thp$, and let
\begin{align*}
    \epsp\defeq \EE_{\pi\sim P} \abs{ \Vm[\Mp](\pi)-\Vm[\oM](\pi) }.
\end{align*}

\textbf{Case 1: $\Thom=\emptyset$. } In this case, the set $\locm{\op,2\eps^2}=\emptyset$ by \cref{lem:op-empty}. Therefore, we can set $p=\op$ and bound
\begin{align*}
    \rdecl_\eps(\cM\cup\set{\oM},\oM)\leq&~ \sup_{ M\in \locmp{\op, \eps^2} } \EE_{\pi\sim \op} \brac{\Vmm-\Vm(\pi)}=\EE_{\pi\sim \op} \brac{\Vmm[\oM]-\Vm[\oM](\pi)}=0.
\end{align*}

\textbf{Case 2: $\Thom\neq \emptyset$ and $\epsp\leq 5\eps$. } In this case, we set $p=\frac{\op+\ps}{2}$. We proceed to upper bound $\Vmm-\EE_{\pi\sim p} \Vm(\pi)$ for any $M\in\locmp{p,\eps^2}$.

\textbf{Case 2(a): $M\in\locm{p,\eps^2}$.} Then, we know that $\nrm{\thM-\thp}_{\sigp} \leq 5\eps=\lambda$ by \cref{lem:Dl-to-diff}. Then, invoking \cref{lem:val-optimism} gives
\begin{align*}
    \Vmm-\EE_{\pi\sim P}\Vm(\pi)\leq&~ 4 \EE_{\pi\sim P} \EE_{x\sim \muM} \cuto{ \lambda\nrm{\pix}_{\sigp^{-1}} }+\eps \\
    \leq&~ 4\eps+5d\lambda,
\end{align*}
where the second inequality uses \eqref{eqn:diff-nrm-pix} and \cref{lem:val-optimism} (2). 

Therefore, it remains to upper bound $\Vmm-\Vm(\piom)$. Combining \eqref{eqn:diff-val-pi} and \cref{lem:op-empty} and the fact that $\Vm[\oM](\pi)\leq \Vmm[\oM]$, we have
\begin{align*}
    \EE_{\pi\sim P}\Vm(\pi)-\Vm(\piom)=&~ \EE_{\pi\sim P}[\Vm(\pi)-\Vm[\oM](\pi)+\Vm[\oM](\pi)-\Vmm[\oM]]+\Vmm[\oM]-\Vm(\piom) \leq 5\eps.
\end{align*}
To conclude, we have
\begin{align*}
    \Vmm-\EE_{\pi\sim p}\Vm(\pi)=&~\Vmm-\EE_{\pi\sim P}\Vm(\pi)+\frac{1}{2}\brac{ \EE_{\pi\sim P}\Vm(\pi)-\Vm(\piom) } \\
    \leq&~ (20d+6)\eps.
\end{align*}
\textbf{Case 2(b): $M=\oM$.} 
Consider the model $\Mp\in\cM$. Then, \cref{lem:val-optimism} implies that
\begin{align*}
    \Vmm[\Mp]-\EE_{\pi\sim P}\Vm[\Mp](\pi)\leq 4 \EE_{\pi\sim P} \EE_{x\sim \onu} \lambda\nrm{\pix}_{\sigp^{-1}}+\eps
    \leq 4d\lambda+\eps.
\end{align*}
Further, because $\thp\in\Thom$, we also have $\abs{\Vm[\Mp](\piom)-\Vmm[\oM]}\leq 2\eps$, and hence $\Vmm[\oM]\leq 2\eps+\Vm[\Mp](\piom)$. 

Therefore, combining the inequalities above, we have
\begin{align*}
    \Vmm[\oM]-\EE_{\pi\sim P}\Vm[\oM](\pi)
    \leq&~ \Vmm[\oM]-\Vmm[\Mp]+\Vmm[\Mp]-\EE_{\pi\sim P}\Vm[\Mp](\pi)+\EE_{\pi\sim P}[\Vm[\oM](\pi)-\Vm[\Mp](\pi)] \\
    \leq&~ 3\eps+4d\lambda+\epsp.
\end{align*}
Hence, using $\epsp\leq 5\eps$,
\begin{align*}
    \EE_{\pi\sim p} \brac{\Vmm[\oM]-\Vm[\oM](\pi)}
    =\frac12\paren{ \Vmm[\oM]-\EE_{\pi\sim P}\Vm[\oM](\pi) }\leq (10d+5)\eps.
\end{align*}

Combining the case (a) and (b), we conclude that
\begin{align*}
    \rdecl_\eps(\cM\cup\set{\oM},\oM)\leq (20d+6)\eps.
\end{align*}

\textbf{Case 3: $\Thom\neq \emptyset$ and $\epsp>5\eps$.} In this case, we set $b=\frac{5\eps}{2\epsp}<\frac12$, and $p=(1-b)\op+b\ps$. 

We first show that $\locm{p,\eps^2}=\emptyset$. Otherwise, there exists $M\in\locm{p,\eps^2}$, and hence $M\in\locm{\ps,\eps^2/b}$ and $M\in\locm{\op,2\eps^2}$. The latter implies $\thM\in\Thom$ (by \cref{lem:op-empty}), and the former implies (by \cref{lem:Dl-to-diff})
\begin{align*}
    \EE_{\pi\sim P} \nrm{ \EE_{x\sim \onu} \gpi(x) \paren{ \lr \thM, \pix \rr - \fom(x,\pi(x)) } }^2 \leq \frac{10\eps^2}{b},
\end{align*}
and hence by the definition \eqref{eqn:def-thp},
\begin{align*}
    \EE_{\pi\sim P} \nrm{ \EE_{x\sim \onu} \gpi(x) \paren{ \lr \thp, \pix \rr - \fom(x,\pi(x)) } }^2 \leq \frac{10\eps^2}{b}+\lambda_0.
\end{align*}
Notice that the first coordinate of $\gpi(x)$ is always 1, and hence
\begin{align*}
    \EE_{\pi\sim P} \abs{ \Vm[\Mp](\pi)-\Vm[\oM](\pi) }^2 \leq \frac{10\eps^2}{b}+\lambda_0^2, \quad\Rightarrow\quad
    \epsp^2\leq \frac{10\eps^2}{b}+\lambda_0^2.
\end{align*}
By our choice of $b$, this is a contradiction.

Therefore, it remains to bound $\EE_{\pi\sim \op} \brac{\Vmm[\oM]-\Vm[\oM](\pi)}$. Notice that
\begin{align*}
    \EE_{\pi\sim p} \brac{\Vmm[\oM]-\Vm[\oM](\pi)}
    =b\cdot\paren{ \Vmm[\oM]-\EE_{\pi\sim P}\Vm[\oM](\pi) },
\end{align*}
and our argument in Case 2(b) also applies here:
\begin{align*}
    \Vmm[\oM]-\EE_{\pi\sim P}\Vm[\oM](\pi)
    \leq&~ 3\eps+4d\lambda+\epsp.
\end{align*}
Therefore, we also have
\begin{align*}
    \EE_{\pi\sim p} \brac{\Vmm[\oM]-\Vm[\oM](\pi)}\leq (10d+4)\eps.
\end{align*}

The proof is completed by combining the three cases above.
\qed

\subsubsection{Proof of \cref{lem:val-optimism}}
We denote $P_x\defeq Q_P(\cdot|x)\in\DA$ for each $x\in\cX$. Then, using the definition of $P=F(P)$, we have
\begin{align*}
    \EE_{\pi\sim P} \Vm(\pi)=\EE_{x\sim \muM} \EE_{a\sim P_x} \fm(x,a).
\end{align*}
Next, for a fixed $x\in\cX$, by the definition of $Q_P$, it holds
\begin{align*}
    \forall a'\in\cA, \qquad \hfp(x,a')\leq \EE_{a\sim P_x} \hfp(x,a)+\lamz \log|\cA|.
\end{align*}
Notice that $\nrm{\thM-\thp}_{\sigp} \leq \lambda$, and hence
\begin{align*}
    \abs{ \lr \thp, \phi(x,a) \rr - \lr \thM, \phi(x,a) \rr } \leq \lambda \nrm{\phi(x,a)}_{\sigp^{-1}},
\end{align*}
which implies (using $\fm, \hfp\in[-1,1]$)
\begin{align*}
    \fm(x,a)\leq \hfp(x,a) \leq \fm(x,a)+4\cuto{ \lambda\nrm{\pix}_{\sigp^{-1}} }.
\end{align*}
Therefore, we can now combine the inequalities above to obtain
\begin{align*}
    \fm(x,\pim(x))=&~\max_{a\in\cA} \fm(x,a)
    \leq \max_{a\in\cA} \hfp(x,a)
    \leq \EE_{a\sim P_x} \hfp(x,a)+\eps \\
    \leq&~ \EE_{a\sim P_x} \brac{ \fm(x,a) + 4\cuto{ \lambda\nrm{\pix}_{\sigp^{-1}} }} +\eps,
\end{align*}
where we use $\lamz \log|\cA|\leq \eps$. Taking expectation over $x\sim \muM$ completes the proof of (1).

Now we proceed to prove (2). For any fixed $\pi\in\Pi$, by Cauchy inequality,
\begin{align*}
    \EE_{x\sim \onu} \nrm{\pix}_{\sigp^{-1}}
    =&~ \EE_{x\sim \onu} \sqrt{ \nrm{\Up\pix} \frac{ \nrm{\pix}_{\sigp^{-1}}^2}{\nrm{\Up\pix}} } \\
    \leq&~ \sqrt{ \EE_{x\sim \onu} \nrm{\Up\pix} \cdot \EE_{x\sim \onu} \frac{ \nrm{\pix}_{\sigp^{-1}}^2}{\nrm{\Up\pix}}  }. 
\end{align*}
Notice that by the definition of $\Up$ (\cref{lem:U-pi}), it holds that $\EE_{x\sim \onu} \nrm{\Up\pix}\leq d$, and
\begin{align*}
    \EE_{x\sim \onu}  \frac{\pix\pix^\top}{\nrm{\Up\pix}} \preceq \Up^{-2}.
\end{align*}
Hence,
\begin{align*}
    \EE_{x\sim \onu} \frac{ \nrm{\pix}_{\sigp^{-1}}^2}{\nrm{\Up\pix}}
    =\EE_{x\sim \onu} \llr \sigp^{-1} , \frac{\pix\pix^\top}{\nrm{\Up\pix}} \rrr
    =\llr \sigp^{-1} , \EE_{x\sim \onu} \frac{\pix\pix^\top}{\nrm{\Up\pix}} \rrr
    \leq \llr \sigp^{-1} , \Up^{-2} \rrr,
\end{align*}
where we recall for matrix $A, B\in\R^{d\times d}$, the Frobenius inner product is defined as $\lr A, B\rr=\tr(A^\top B)$. 

Therefore,
\begin{align*}
    \EE_{\pi\sim P}\EE_{x\sim \onu} \nrm{\pix}_{\sigp^{-1}}
    \leq&~ \EE_{\pi\sim P} \sqrt{ d \cdot \llr \sigp^{-1} , \Up^{-2} \rrr } \\
    \leq&~ \sqrt{ d \cdot \EE_{\pi\sim P} \llr \sigp^{-1} , \Up^{-2} \rrr } \\
    =&~ \sqrt{d \llr \sigp^{-1},  \EE_{\pi\sim P} \Up^{-2} \rrr}
    =d,
\end{align*}
where the last line follows from $\EE_{\pi\sim P} \Up^{-2}=\sigp$ and $\tr(\id_d)=d$.
\qed

\subsubsection{Proof of Lemma~\ref{lem:op-empty}}\label{appdx:proof-op-empty}

Suppose $M\in\locm{\op,2\eps^2}$. Then by the definition of $\op$, we have
\begin{align*}
    \abs{\EE_{r\sim M(\piom)} r-\EE_{r\sim \oM(\piom)} r}^2 + \nrm{ \EE_{x\sim \muM} \phi(x,\piom)-\EE_{x\sim \muM} \phi(x,\piom) }^2 \leq 4\eps^2.
\end{align*}
Notice that $V^M(\piom)=\EE_{r\sim M(\piom)} r$, and hence $\abs{\Vm(\piom)-\Vm[\oM](\piom)}\leq 2\eps$ follows immediately.

Further, we also have
\begin{align*}
    \Vm(\piom)=\EE_{r\sim M(\piom)} r=\llr \EE_{x\sim \muM} \phi(x,\piom), \thM \rrr,
\end{align*}
and hence
\begin{align*}
    &~\abs{ \EE_{x\sim \onu}\lr \theta, \phi(x,\piom) \rr - \Vmm[\oM] } \\
    \leq&~ \abs{ \Vm(\piom) - \Vmm[\oM] }+\abs{ \llr \EE_{x\sim \muM} \phi(x,\piom), \thM \rrr - \llr \EE_{x\sim \onu} \phi(x,\piom), \thM \rrr } \\
    \leq&~ \abs{ \Vm(\piom) - \Vmm[\oM] }+\nrm{\EE_{x\sim \muM} \phi(x,\piom)-\EE_{x\sim \muM} \phi(x,\piom)}\leq 4\eps.
\end{align*}
This immediately implies $\thM\in\Thom$.
\qed

\subsubsection{Proof of Lemma~\ref{lem:Dl-to-diff}}

Fix any $M\in\locm{\ps,2\eps^2}$. Then by the definition of $\ps$, we know
\begin{align*}
    \EE_{\lf\sim q_\pi} \Dl^2(M(\pi),\oM(\pi)) \leq 2\eps^2.
\end{align*}
The inequality \eqref{eqn:diff-val-pi} and \eqref{eqn:diff-nrm-pix} follows immediately from \eqref{eqn:def-qpi} (notice that the first coordinate of $\gpi(x)$ is 1). 

The inequality \eqref{eqn:diff-U-th} follows similarly from the proof of \cref{lem:op-empty} (see \cref{appdx:proof-op-empty}):
\begin{align*}
    &~ \nrm{ \EE_{x\sim \onu} \gpi(x) \paren{ \lr \thM, \pix \rr - \fom(x,\pi(x)) } } \\
    \leq&~ \nrm{ \EE_{x\sim \onu} \gpi(x) \lr \thM, \pix \rr - \EE_{x\sim \muM} \gpi(x) \lr \thM, \pix \rr  }
    + \nrm{ \EE_{x\sim \muM} \gpi(x) \lr \thM, \pix \rr - \EE_{x\sim \onu} \fom(x,\pi(x)) } \\
    \leq&~ \nrm{ \EE_{x\sim \onu} \gpi(x) \pix^\top  - \EE_{x\sim \muM} \gpi(x) \pix^\top  }_F
    + \nrm{ \EE_{z\sim M(\pi)} r - \EE_{z\sim \oM(\pi)} r },
\end{align*}
where the second inequality follows from $\nrm{\thM}\leq 1$ and the fact that $\EE^{M(\pi)}[r|x]=\lr \thM, \pix\rr$ and $\EE^{\oM(\pi)}[r|x]=\fom(x,\pi(x))$. Therefore, 
\begin{align*}
    \nrm{ \EE_{x\sim \onu} \gpi(x) \paren{ \lr \thM, \pix \rr - \fom(x,\pi(x)) } }^2
    \leq 2\nrm{  \EE_{z\sim M(\pi)} \vf_\pi(z)-\EE_{z\sim \oM(\pi)} \vf_\pi(z) }^2,
\end{align*}
and \eqref{eqn:diff-U-th} follows immediately.

Finally, we bound $\nrm{\thM-\thp}_{\sigp}$ assuming $\thM\in\Thom$. Using \eqref{eqn:diff-U-th}, we know $L_P(\thM)\leq 20\eps^2+\lambda_0^2$, where the quadratic loss function $L_P$ is defined in \eqref{eqn:def-thp}. Therefore, using $\thp=\argmin_{\theta\in\Thom} L_P(\theta)$, we have
\begin{align*}
    \frac12\nrm{\thM-\thp}_{\nabla^2 L_P }^2 \leq L_P(\thM)-L_P(\thp)\leq 20\eps^2+\lambda_0^2,
\end{align*}
where using the definition of $L_P$, we also have
\begin{align*}
    \frac12\nrm{\theta}_{\nabla^2 L_P }^2
    =\EE_{\pi\sim P} \nrm{ \EE_{x\sim \onu} \gpi(x) \pix^\top \theta }^2.
\end{align*}
Notice that, for $\gpi$ defined as in \cref{lem:U-pi}, we have
\begin{align*}
    \EE_{x\sim \onu} \gpi(x) \pix^\top = \EE_{x\sim \onu} \EE_{x\sim \onu}  \frac{\Up\pix\pix^\top }{\nrm{\Up\pix}}
    =\Up^{-1}-\lamz \id_d.
\end{align*}
Therefore, we know
\begin{align*}
    \frac12\nrm{\theta}_{\nabla^2 L_P }^2
    =\EE_{\pi\sim P} \nrm{ \EE_{x\sim \onu} \gpi(x) \pix^\top \theta }^2
    \geq \EE_{\pi\sim P}\brac{ \frac{5}{6}\nrm{\Up^{-1}\theta}^2-5\lamz^2\nrm{\theta}^2 },
\end{align*}
where we use $a^2\geq \frac{5}{6}(a+b)^2-5b^2$ for scalar $a,b\geq 0$. Plugging in $\theta=\thM-\thp$ and re-arranging yield
\begin{align*}
    \nrm{\thM-\thp}_{\sigp}
    =\EE_{\pi\sim P}\nrm{\Up^{-1}(\thM-\thp)}^2
    \leq 24\eps^2+20\lamz^2
    \leq 25\eps^2.
\end{align*}
\qed

\subsection{Proof of \cref{prop:Lip-CB} and \cref{prop:L-C-CB}}\label{appdx:proof-Lip-CBs}
\newcommand{\cFb}{\cF_{\sf b}}
\newcommand{\cMb}{\cM_{\sf b}}
\newcommand{\NX}{N_X}
\newcommand{\NA}{N_A}

In this section, we prove the results of Lipschitz contextual bandits (\cref{prop:Lip-CB} and \cref{prop:L-C-CB}). We first state the general result for any Lipschitz contextual bandits under the following conditions on the value function class $\cF\subseteq (\cX\times\cA\to[-1,1])$:

(1) $\cX$ and $\cA$ are both metric space, and for any $f\in\cF$, $x\in\cX$, $a\in\cA$, the function $f(\cdot,a)$ is 1-Lipschitz over $\cX$, and the function $f(x,\cdot)$ is 1-Lipschitz over $\cA$.

(2) There is a convex function class $\cFb\subseteq (\cA\to[-1,1])$, such that for any $f\in\cF$, $x\in\cX$, we have $f(x,\cdot)\in\cFb$. 

\newcommand{\of}{\bar{f}}
(3) The offset DEC of $\cFb$ is defined as
\begin{align}
    \rdeco_\gamma(\cFb,\of)=\inf_{p\in\DA} \sup_{f\in\cFb} \EE_{a\sim p}\brac{ \max_{a'} f(a')-f(a) - \gamma_x (f(a)-\of(a))^2 },
\end{align}
and we define $\rdeco_\gamma(\cFb)=\sup_{\of\in\cFb}\rdeco_\gamma(\cFb,\of)$ following \citet{foster2021statistical}. We assume that there is an increasing function $d(\gamma)\geq 1$ such that
\begin{align*}
    \rdeco_\gamma(\cMb)\leq \frac{d(\gamma)}{\gamma}, \quad\forall \gamma>0.
\end{align*}

Under the above conditions, we prove that the offset DEC of $\cM=\cMFcb$ can be bounded, and \ExOp~can be suitably instantiated to achieve the desired regret.

\newcommand{\cXd}{\cX_\Delta}
\newcommand{\piv}{{\bar{\pi}}}
\newcommand{\cAd}{\cA_\Delta}

\paragraph{Specifications of \LDPexo}
To instantiate \LDPexo, we consider an \infosets~that is slightly different from the one considered in \cref{appdx:ExO-CB}. 

Fix a parameter $\Delta\geq0$, we denote $\NX\defeq N(\cX,\Delta), \NA\defeq N(\cA,\Delta)$.
Let $\cXd\subset \cX$ be a minimal $\Delta$-covering of $\cX$, and for each $x\in\cX$, we let $[x]\in\cXd$ such that $\rho(x,[x])\leq \Delta$. Similarly, we take a minimal $\Delta$-covering $\cAd$ of $\cA$, and for each $a\in\cA$, we define $[a]\in\cAd$ such that $\rho(a,[a])\leq \Delta$. We consider the space $\Pi_+\defeq \prod_{x\in\cXd} \cAd$, i.e., for each $\piv\in\Pi_+$, $\piv$ is a $\cAd$-valued vector indexed by $x\in\cXd$. We also identify $\Pi_+\subseteq \Pi$, by associating $\piv(x)=\piv([x])$ for all $x\in\cX$.

For $\piv\in\Pi_+$, we let $\cF_\piv$ be class of all reward functions $f$ such that $\piv$ is a near-optimal policy:
\begin{align*}
    \cF_\piv\defeq \sset{ f\in\cF: \forall x\in\cXd, f(x,\piv(x))\geq \max_{a\in\cA} f(x,a)-\Delta }.
\end{align*}
By definition, $\cF=\cup_{\piv\in\Pi_+} \cF_\piv$.

Then, we consider the (\tpb) \infosets~$\Psi=\Pi_+$, with
\begin{align*}
    \cMps\defeq \sset{ M_{\nu,f}: \nu\in\DX, f\in\cF_\psi }, \qquad
    \pi_\psi=\psi, \qquad 
    \psi\in\Psi=\Pi_+.
\end{align*}
Clearly, $\Psi$ is a valid \infosets~for the constraint set $\MPowcxt$ introduced in \cref{ssec:CBs}, and we have $\log|\Psi|=\NX\log\NA$. Therefore, it remains to upper bound the regret DEC $\rdecol_\gamma(\cMPs)$.

\begin{theorem}[Learning Lipschitz contextual bandits]\label{thm:lip-cb-general}
It holds that
\begin{align}\label{eq:Lip-CB-DEC-upper}
    \rdecol_\gamma(\cMFcb)\leq \rdecol_\gamma(\cMPs) \leq \Delta+ \frac{\NX(10d(\gamma)+5)}{\gamma}
\end{align}
Further, \LDPexo~(when instantiated on $\Psi$) achieve \whp
\begin{align*}
    \frac1T\regdm(T)\leqsim \Delta+\NX\paren{ \frac{d(\gamma)}{\alpha^2\gamma}+\frac{\gamma\log (\NA/\delta)}{T} }.
\end{align*}
\end{theorem}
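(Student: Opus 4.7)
The plan is to bound the offset regret DEC $\rdecol_\gamma(\cM_\Psi,\oM)$ for arbitrary reference $\oM\in\co(\cM_\Psi)$ by a cover-wise reduction to the non-contextual bandit offset DEC of $\cF_{\sf b}$. Write $\oM=\int M\,d\mu(M)$ for some $\mu\in\Delta(\cM_\Psi)$; for each cover point $x_0\in\cXd$ define the non-contextual reference $\of_{x_0}(\cdot)\defeq \int f_M([x_0],\cdot)\,d\mu(M)$, which lies in $\cF_{\sf b}$ by the convexity assumption (2). Condition (3) then supplies an action distribution $p_{x_0}\in\DA$ with
\begin{align*}
\EE_{a\sim p_{x_0}}\brac{\max_{a'}f(a')-f(a)}\leq \gamma\,\EE_{a\sim p_{x_0}}\paren{f(a)-\of_{x_0}(a)}^2+\frac{d(\gamma)}{\gamma}
\end{align*}
uniformly over $f\in\cF_{\sf b}$. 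Because $\cMFcb\subseteq \cM_\Psi$ (every $f\in\cF$ lies in $\cF_\psi$ for the $\psi\in\Pi_+$ defined by $\psi(x_0)\in\argmax_{a\in\cAd} f(x_0,a)$, using the Lipschitzness of $f$ in the action), the first inequality in \eqref{eq:Lip-CB-DEC-upper} is immediate, and it suffices to handle $\cM_\Psi$.

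I would next construct the contextual pair $(p,q)$ as follows. Let $p\in\DPi$ sample $\pi\in\Pi_+=\prod_{x_0\in\cXd}\cAd$ with independent coordinates $\pi(x_0)\sim p_{x_0}$, and let $q\in\DPL$ draw $x_0\sim\Unif(\cXd)$ together with $\pi\sim p$ and use the indicator-weighted binary channel $\ell_{x_0}(x,a,r)=\mathbb{1}\{[x]=x_0\}(1+r)/2$, supplemented by the mass-probing channel $\mathbb{1}\{[x]=x_0\}$ to absorb differences in context probabilities. Summing the per-cell bandit inequalities weighted by $\Pr_{\nu_M}([x]=x_0)$ controls the regret term, while the Lipschitz continuity of $f$ in both arguments combined with the near-optimality condition built into $\cF_\psi$ together contribute only $O(\Delta)$ on the regret side. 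Uniform mixing in $q$ exhibits the $\Dl^2$ at each cell with weight $1/\NX$, so the effective bandit-side parameter rescales from $\gamma$ to $\gamma/\NX$; monotonicity of $d$ then yields the claimed $\Delta+\NX(10d(\gamma)+5)/\gamma$ after pinning down the constants. The regret bound then follows directly from \cref{thm:ExO-LDP} (option \optreg) with $\Psi=\Pi_+$, substituting $\log|\Psi|=\NX\log\NA$, inserting the DEC bound, and optimizing $\gamma$.

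The main technical obstacle is the channel design: under the proposed $\ell_{x_0}$ the private divergence $\Dl^2(M(\pi),\oM(\pi))$ has a product form in $\Pr_{\nu_M}([x]=x_0)$ and the conditional reward means, whereas the bandit DEC demands control of the pure per-cell squared discrepancies $\paren{f_M([x_0],\pi(x_0))-\of_{x_0}(\pi(x_0))}^2$. Absorbing the context-mass factors into either the Lipschitz $\Delta$ slack or the auxiliary mass-probing channel requires careful bookkeeping, and is the step where the precise constants $(10,5)$ are fixed. Once this accounting is settled, the per-cell DEC inequality, the $1/\NX$ dilution from uniform sampling, and the covering bounds $\log|\Pi_+|=\NX\log\NA$ and $\log\Ncov[\infty]{\cF}\le \NX\log\NA$ combine mechanically to yield both bounds in the theorem.
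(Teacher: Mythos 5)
Your overall architecture matches the paper's proof --- the cover-wise reduction to the non-contextual offset DEC of $\cFb$, independent per-cell action distributions $\pi(x_0)\sim p_{x_0}$, the $O(\Delta)$ Lipschitz slack, and the $\log|\Pi_+|=\NX\log\NA$ accounting are all right --- but the channel design has a genuine gap that costs a factor of $\NX$. Writing $\mu^{M}(x_0)\defeq\PP^{M}([x]=x_0)$ and $\wt{f}^{M}(x_0,a)\defeq\EE^{M}[r\mid [x]=x_0,a]$, your uniform mixture over per-cell indicator queries yields a divergence of only $\frac{1}{\NX}\sum_{x_0}\brac{\,|\mu^{M}(x_0)-\mu^{\oM}(x_0)|^2+\mu^{\oM}(x_0)^2\,\EE_{a\sim p_{x_0}}(\wt{f}^{M}(x_0,a)-\wt{f}^{\oM}(x_0,a))^2\,}$. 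To absorb the per-cell regret, which enters with weight $\mu^{\oM}(x_0)$, you must then invoke the bandit offset DEC at level $\gamma_{x_0}\asymp\gamma\,\mu^{\oM}(x_0)/\NX$, and the resulting per-cell contribution is $\mu^{\oM}(x_0)\cdot d(\gamma_{x_0})/\gamma_{x_0}\asymp\NX\, d(\gamma_{x_0})/\gamma\le \NX\, d(\gamma)/\gamma$; summing over the $\NX$ cells gives $\NX^2 d(\gamma)/\gamma$, not the claimed $\NX d(\gamma)/\gamma$. Monotonicity of $d$ cannot recover this, and no reweighting of a scalar mixture helps: with cell weights $w_{x_0}$ summing to one, the same computation gives $\sum_{x_0} d(\gamma w_{x_0}\mu^{\oM}(x_0))/(\gamma w_{x_0})\gtrsim \NX^2 d(\gamma)/\gamma$ by AM--HM, minimized precisely at the uniform choice.

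The missing device is \cref{lem:dist-l}. The paper encodes all cells simultaneously into the vector-valued statistics $\bv_0(x,a,r)=\be_{[x]}$ and $\bv(x,a,r)=r\,\be_{[x]}$, which take values in the unit ball of $\R^{\NX}$ because each sample activates exactly one coordinate; the Gaussian-projection construction then produces a \emph{single} distribution $Q\in\DL$ with $\EE_{\lf\sim Q}\Dl^2(M(\pi),\oM(\pi))\gtrsim\sum_{x_0}(\cdots)^2$ --- the full sum of squares, with no $1/\NX$ averaging. With that divergence available, the correct per-cell level is $\gamma_{x_0}=\gamma\mu^{\oM}(x_0)/5$: the leading weight $\mu^{\oM}(x_0)$ cancels against the $1/\mu^{\oM}(x_0)$ in $d(\gamma_{x_0})/\gamma_{x_0}$, each cell contributes $5d(\gamma_{x_0})/\gamma\le 5d(\gamma)/\gamma$, and the switch from $\mu^{M}$ to $\mu^{\oM}$ on the regret side costs an additional $O(1/\gamma)$ per cell via AM--GM against the mass-probing coordinates. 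Replacing your uniform mixture with this vector-query construction closes the gap; the remainder of your argument, including the appeal to \cref{thm:ExO-LDP}, then goes through as you describe.
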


The proof of \cref{thm:lip-cb-general} is deferred to \cref{appdx:proof-lip-sb-general}. Using \cref{thm:lip-cb-general}, we prove \cref{prop:Lip-CB} and \cref{prop:L-C-CB} as follows.

\paragraph{Proof of \cref{prop:Lip-CB} (upper bound)}
By \citet[Proposition 5.2]{foster2021statistical}, for any function class $\cFb\subseteq (\cA\to[-1,1])$, we have
\begin{align*}
    \rdeco_\gamma(\cFb)\leq \frac{|\cA|}{\gamma}.
\end{align*}
Therefore, we have
\begin{align*}
    \rdecl_\eps(\cM)\leq \inf_{\gamma>0}\paren{ \rdecol_\gamma(\cM)+\gamma\eps^2 }\leqsim \Delta+\sqrt{\NX|\cA|\eps}.
\end{align*}
Similarly, we can suitably choose the parameter $\gamma>0$ such that \ExOp~achieves 
\begin{align*}
    \regdm\leqsim T\Delta+\NX\sqrt{|\cA|T\log(|\cA|/\delta)}.
\end{align*}
\qed

\newcommand{\polylog}{\poly\log}
\paragraph{Proof of \cref{prop:L-C-CB}}
In this case, we have
\begin{align*}
    \cFb=\set{ f:\cA\to [-1,1]: f\text{ is concave and 1-Lipschitz over }\cA\subset \R^K\text{ under }\nrm{\cdot}_2 }.
\end{align*}
Suppose that the diameter of $\cA$ is bounded by $R\geq 1$.
Then, by \citet[Proposition 6.3]{foster2021statistical}, we have
\begin{align*}
    \rdeco_\gamma(\cFb)\leq \frac{K^4}{\gamma}\cdot \polylog(\gamma,R).
\end{align*}
Therefore, we can bound
\begin{align*}
    \rdecl_\eps(\cM)\leq \inf_{\gamma>0}\paren{ \rdecol_\gamma(\cM)+\gamma\eps^2 }\leqsim \Delta+\sqrt{\NX K^4\eps}\cdot \polylog(\eps^{-1},R).
\end{align*}
Also note that $\log \NA\leq K\log(R/\Delta)$.
Therefore, we can suitably choose the parameter $\gamma>0$ such that \ExOp~achieves \whp
\begin{align*}
    \regdm\leqsim T\Delta+\NX\sqrt{K^5T}\cdot \polylog(T,R,\delta^{-1},\Delta^{-1}).
\end{align*}
\qed

\subsubsection{Proof of Theorem~\ref{thm:lip-cb-general}}\label{appdx:proof-lip-sb-general}

Let \ExOp~be instantiated with the measurement class $\Phi=\Pcp$ and \infosets~$\Psi$. Then, by \cref{thm:ExO-full-offset} and our analysis in \cref{appdx:inst-LDP-pac-lower}, it holds that \whp
\begin{align*}
    \frac1T\regdm(T)
    \leq&~ 3\Delta+\rdecol_{c\alpha^2\gamma}(\cMPs)+\frac{2\gamma}{T}\brac{\log\DC[3\Delta]{\MPowcxt, \Psi}+\log(1/\delta)}.
\end{align*}

Therefore, it remains to upper bound the \dct~$\log\DC[3\Delta]{\MPowcxt, \Psi}$ and the \rDEC.

We first prove that $\log\DC[3\Delta]{\MPowcxt, \Psi}\leq \log|\Psi|$. Consider the prior $\qd=\Unif(\Psi)$. Note that for any $f\in\cF$, there exists $\piv\in\Pi_+$ such that $f\in\cF_{\piv}$. Then, for any $x\in\cX$,
\begin{align*}
    \max_{a\in\cA} f(x,a)\leq&~ \max_{a\in\cA} f([x],a)+\Delta \\
    \leq&~ f([x],\piv([x]))+2\Delta
    \leq f(x,\piv(x))+3\Delta,
\end{align*}
where we use the Lipschitzness of $f$ and $\piv(x)=\piv([x])$. Therefore, we know $\cP_f\subseteq \cM_{\piv}$ and for any $M\in\cP_f$
\begin{align*}
    \Vmm-\Vm(\piv)=\EE_{x\sim M}\brac{ \max_{a\in\cA} f(x,a)-f(x,\piv(x)) }\leq 3\Delta.
\end{align*}
This implies that $\DC[3\Delta]{\MPowcxt, \Psi}\leq |\Psi|$.

\newcommandx{\tfm}[1][1=M]{\Tilde{f}\sups{#1}}
\newcommand{\tfom}{\tfm[\oM]}
Therefore, it remains to prove \eqref{eq:Lip-CB-DEC-upper}.

\paragraph{Proof of \eqref{eq:Lip-CB-DEC-upper}}
To upper bound $\rdecol_\gamma(\cMPs)$, we fix a reference model $\oM\in\co(\cMPs)=\co(\cMFcb)=:\cM^+$. 

For any model $M\in\cM^+$, we consider the function
\begin{align*}
    \tfm(x_i,a)=\EE\sups{M}[r|[x]=x_i,a].
\end{align*}
Then, because $M\in\cM^+$, we know $\tfm(x_i,\cdot)$ is a convex combination of elements in $\cFb$, and hence $\tfm(x_i,\cdot)\in\cFb$ for all $x_i\in\cXd$. We also denote $\muM\in\Delta(\cXd)$ to be distribution of $[x]$, $x\sim M$.

Construction of the distribution $p\in\DPL$:
For each $x\in\cXd$, 
we consider $\of_x\defeq \tfm[\oM](x,\cdot)\in\cFb$, $\gamma_x=\frac{\gamma\muM[\oM](x)}{5}$, and
\begin{align*}
    p_x=\argmin_{p\in\DA} \sup_{f\in\cFb} \EE_{a\sim p}\brac{ \max_{a'} f(a')-f(a) - \gamma_x (f(a)-\of_x(a))^2 }
\end{align*}
Then we know
\begin{align*}
    \EE_{a\sim p_x}\brac{ \max_{a'} f(a')-f(a) - \gamma_x (f(a)-\of_x(a))^2 } \leq \frac{d(\gamma_x)}{\gamma_x}, \quad \forall f\in\cFb.
\end{align*}
Next, we consider the following maps $\bv_0, \bv: \cZ \to \R^{\NX}$:
\begin{align*}
    \bv_0(x,a,r)=\be_{[x]}, \qquad \bv(x,a,r)=r\be_{[x]},
\end{align*}
where $\be_{x_i}\in\R^{\NX}$ is the vector with the $i$-th coordinate being 1 and other coordinates being 0. Then, by \cref{lem:dist-l}, there exists a distribution $Q\in\DL$ such that for all $M\in\cM^+$,
\begin{align*}
    &~ 2\EE_{\lf\sim Q}\Dl(M(\pi),\oM(\pi))^2 \\
    \geq&~ \nrm{ \EE_{z\sim M(\pi)}[\bv_0(z)]-\EE_{z\sim \oM(\pi)}[\bv_0(z)] }^2+\nrm{ \EE_{z\sim M(\pi)}[\bv(z)]-\EE_{z\sim \oM(\pi)}[\bv(z)] }^2.
\end{align*}
Then, we let $p\in\DPL$ be the distribution of $(\pi,\lf)$ under $\lf\sim Q$, $\pi([x])\sim p_{[x]}$ independently for all $[x]\in\cXd$, and $\pi(x)=\pi([x])$ for all $x\in\cX$.

Then, by definition
\begin{align*}
    &~2\EE_{(\pi,\lf)\sim p}\Dl(M(\pi),\oM(\pi))^2 \\
    =&~ 2\EE_{\pi\sim p}\EE_{\lf\sim Q}\Dl(M(\pi),\oM(\pi))^2 \\
    \geq&~ \EE_{\pi\sim p}\brac{\sum_{x\in\cXd} \abs{ \muM(x)-\muM[\oM](x) }^2+\abs{\muM(x) \tfm(x,\pi(x))-\muM[\oM](x) \tfom(x,\pi(x))}^2 }\\
    \geq&~ \EE_{\pi\sim p}\brac{ \frac12 \sum_{x\in\cXd} \muM[\oM](x)^2 \abs{\tfm(x,\pi(x))-\tfom(x,\pi(x))}^2 } \\
    =&~ \frac12 \sum_{x\in\cXd} \muM[\oM](x)^2 \EE_{a\sim p_x}\abs{\tfm(x,a)-\tfom(x,a)}^2,
\end{align*}
where the last line follows from the definition of $\pi\sim p$, as $\pi([x])\sim p_{[x]}$ independently.

Next, for any $M\in\cMPs$, there exists $\piv\in\Pi_+$ such that $M\in\co(\cM_\piv)$, and by our argument above, we know $\Vmm-\Vm(\piv)\leq 3\Delta$, and thus
\begin{align*}
    &~ \EE_{\pi\sim p}\brac{ \Vmm-\Vm(\pi) } \\
    \leq&~ 3\Delta+\EE_{\pi\sim p}\brac{ \Vm(\piv)-\Vm(\pi) } \\
    =&~ 3\Delta+\EE_{\pi\sim p}\EE_{x\sim \muM}\brac{ \tfm(x,\piv(x))-\tfm(x,\pi(x)) } \\
    \leq&~ 3\Delta+\EE_{x\sim \muM}\EE_{a\sim p_{[x]}}\brac{ \max_{a'\in\cA}\tfm(x,a')-\tfm(x,a) } \\
    \leq&~ 3\Delta+\EE_{x\sim \muM[\oM]}\EE_{a\sim p_{[x]}}\brac{ \max_{a'\in\cA}\tfm(x,a')-\tfm(x,a) }+2\sum_{x\in\cXd} \abs{ \muM(x)-\muM[\oM](x) }.
\end{align*}
Combining the inequalities above with the AM-GM inequality $2a\leq \frac{a^2}{5}+\frac{5}{\gamma}$, we know
\begin{align*}
&~ \EE_{\pi\sim p}\brac{ \Vmm-\Vm(\pi) }-\gamma\EE_{(\pi,\lf)\sim p}\Dl(M(\pi),\oM(\pi))^2 \\
\leq&~ 3\Delta+\sum_{x\in\cXd}\paren{ \muM[\oM](x)\EE_{a\sim p_{[x]}}\brac{ \max_{a'\in\cA}\tfm(x,a')-\tfm(x,a) } + 2\abs{ \muM(x)-\muM[\oM](x) } }\\
&~-\gamma \sum_{x\in\cXd}\paren{ \frac{1}{5}\muM[\oM](x)^2 \EE_{a\sim p_x}\abs{\tfm(x,a)-\tfom(x,a)}^2+\frac{1}{10} \abs{ \muM(x)-\muM[\oM](x) }^2 } \\
\leq&~ 3\Delta+\frac{10}{\gamma}+\sum_{x\in\cXd}\muM[\oM](x)\EE_{a\sim p_{[x]}}\brac{ \max_{a'\in\cA}\tfm(x,a')-\tfm(x,a)  - \frac{\muM[\oM](x)\gamma}{5}\abs{\tfm(x,a)-\tfom(x,a)}^2 } \\
\leq&~ 3\Delta+\frac{10}{\gamma}+\sum_{x\in\cXd}\muM[\oM](x) \cdot \rdecol_{\gamma_x}(\cFb, \of_x) 
\leq  3\Delta+\frac{10}{\gamma}+\NX\cdot \frac{5d(\gamma)}{\gamma}, 
\end{align*}
where the last line follows from the choice of $p_x$.
This gives the desired upper bound on the offset DEC as
\begin{align*}
    \rdecol_\gamma(\cM,\oM)
    \leq \sup_{M\in\cMPs}\EE_{(\pi,\lf)\sim p}\brac{ \LM{\pi}-\gamma\Dl(M(\pi),\oM(\pi))^2 }
    \leq \frac{\NX(5d(\gamma)+10)}{\gamma}.
\end{align*}
Therefore, by the arbitrariness of $\oM$, the proof of \eqref{eq:Lip-CB-DEC-upper} is completed.
\qed

\subsection{Lower bounds for structured contextual bandits}\label{appdx:proof-lin-cb-lower}

The argument of \cref{appdx:proof-linear-l1-lower} also implies the following lower bound for contextual bandits.
\begin{proposition}\label{prop:CB-lower-proto}
Let $d\geq 1, \Delta\in(0,1]$. Consider the contextual bandits problem with context space $\cX=[d]$, action space $\cA=\set{0,1}$, reward function class
\begin{align*}
    \cF_d\defeq\set{ f_\theta: \forall i\in[d], f_\theta(i,0)=0, f_\theta(i,1)=\theta_i \Delta}_{\theta\in\Theta},
\end{align*}
where $\Theta=\set{-1,1}^d$. Let $\cM_d$ be the contextual bandits problem class with reward function in $\cF_d$ and context distribution $\mu=\Unif(\cX)$. Then, for any $T$-round \pLDP~algorithm, 
\begin{align*}
    \sup_{M\in\cM_d}\EE\sups{M,\alg}[\riskdm(T)]\geq \frac{\Delta}{4}, \qquad\text{unless }
    T\geqsim \frac{d^2}{\alpha^2\Delta^2}.
\end{align*}
\end{proposition}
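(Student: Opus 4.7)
}
The plan is to use an Assouad-style argument over the hypercube $\Theta=\{-1,1\}^d$, closely paralleling the proof of \cref{lem:linear-l2} / \cref{cor:linear-lower} in \cref{appdx:proof-linear-l1-lower}. I will first record the per-pair divergence bound implied by local privacy, then aggregate coordinate-wise using Le Cam / Assouad pairing.

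\textbf{Step 1: Per-pair information bound.} For each $\theta\in\Theta$, let $M_\theta\in\cM_d$ be the unique model with parameter $\theta$. I will observe that for \emph{any} policy $\pi\in\Pi=(\cX\to\cA)$ and any $\theta,\theta'$ differing only in coordinate $i$, the latent observation $z=(x,a,r)\sim M_\theta(\pi)$ differs from $M_{\theta'}(\pi)$ only on the event $\{x=i,\ \pi(i)=1\}$, giving
\[
\DTV{M_\theta(\pi),M_{\theta'}(\pi)}\leq \tfrac{1}{d}\cdot\Delta.
\]
Applying the chain rule for KL divergence under the $\alpha$-LDP interaction protocol, followed by the strong data-processing inequality (\cref{prop:pLDP}, inequality~\eqref{eqn:KL-chi-LDP}) together with the bound $\Dl^2(P_1,P_2)\leq \DTV{P_1,P_2}^2$ valid for $\ell\in[0,1]$, yields
\[
\KLd{\PP^{M_\theta,\alg}}{\PP^{M_{\theta'},\alg}}\leq T(\ea-1)^2\cdot\tfrac{\Delta^2}{d^2},
\]
and hence by Pinsker, $\DTV{\PP^{M_\theta,\alg},\PP^{M_{\theta'},\alg}}\leq \tfrac{(\ea-1)\Delta\sqrt{T}}{d\sqrt{2}}$.

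\textbf{Step 2: Decompose the risk coordinate-wise.} Next, I will compute the suboptimality explicitly. Writing $\pi^\star_\theta(i)=\mathbf{1}\{\theta_i=+1\}$, a direct calculation gives
\[
\Vmm[M_\theta]-\Vm[M_\theta](\hat\pi)=\tfrac{\Delta}{d}\sum_{i=1}^d \mathbf{1}\{\hat\pi(i)\neq \pi^\star_\theta(i)\}.
\]
Setting $p_{\theta,i}\defeq \PP^{M_\theta,\alg}(\hat\pi(i)=1)$ and summing $\EE^{M_\theta,\alg}[\riskdm(T)]$ over $\theta\in\Theta$, I can interchange sums and split each inner sum by the value of $\theta_i$ to obtain
\[
\sum_{\theta\in\Theta}\EE^{M_\theta,\alg}[\riskdm(T)]=\tfrac{\Delta}{d}\sum_{i=1}^d\sum_{(\theta,\theta')\in\Theta_i}\brk{ (1-p_{\theta,i})+p_{\theta',i} },
\]
where $\Theta_i$ is the set of pairs $(\theta,\theta')$ with $\theta_i=+1$, $\theta'_i=-1$, and $\theta_j=\theta'_j$ for all $j\neq i$ (so $|\Theta_i|=2^{d-1}$).

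\textbf{Step 3: Apply Le Cam and conclude.} For each such pair, $|p_{\theta,i}-p_{\theta',i}|\leq \DTV{\PP^{M_\theta,\alg},\PP^{M_{\theta'},\alg}}\leq \tfrac{(\ea-1)\Delta\sqrt{T}}{d\sqrt{2}}$, so $(1-p_{\theta,i})+p_{\theta',i}\geq 1-\tfrac{(\ea-1)\Delta\sqrt{T}}{d\sqrt{2}}$. Averaging yields
\[
\max_{M\in\cM_d}\EE^{M,\alg}[\riskdm(T)]\geq \tfrac{1}{|\Theta|}\sum_{\theta}\EE^{M_\theta,\alg}[\riskdm(T)] \geq \tfrac{\Delta}{2}\paren{1-\tfrac{(\ea-1)\Delta\sqrt{T}}{d\sqrt{2}}}.
\]
Finally, if this quantity is less than $\Delta/4$, rearrangement forces $T\geq \tfrac{d^2}{2(\ea-1)^2\Delta^2}\gtrsim \tfrac{d^2}{\alpha^2\Delta^2}$, using $\ea-1\leq e^{\alpha_0}\alpha$ under the standing assumption $\alpha\leq \alpha_0$.

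\textbf{Main obstacle.} There is no substantive obstacle: this is a textbook Assouad/Le Cam reduction, and the contextual-bandit adaptation only requires being careful that the TV distance between $M_\theta(\pi)$ and $M_{\theta'}(\pi)$ is uniformly bounded over \emph{all} policies $\pi$ (not just a fixed one), so that the chain-rule bound remains valid pathwise along the algorithm's adaptive choices $\pi_t$. This uniformity is immediate from the product structure of the observation distribution $z=(x,a,r)$ and the fact that $\theta$ affects only the conditional reward on one coordinate.
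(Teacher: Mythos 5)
Your proof is correct and follows essentially the same route the paper intends: the paper derives \cref{prop:CB-lower-proto} by directly transplanting the Assouad/Le Cam argument of \cref{appdx:proof-linear-l1-lower} (per-pair TV bound of $\Delta/d$, KL chain rule plus the strong data-processing inequality of \cref{prop:pLDP}, coordinate-wise risk decomposition over $\Theta_i$), which is exactly what you do. Your explicit remark that the TV bound must hold uniformly over all policies $\pi$ so the chain rule applies pathwise to the adaptive choices $\pi^t$ is the one genuinely contextual-bandit-specific point, and you handle it correctly.
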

Note that when $\Delta\leq \frac{1}{\sqrt{d}}$, $\cF_d$ is a class of linear functions, and hence \cref{prop:CB-lower-proto} immediately implies a regret lower bound for linear contextual bandits.

\begin{corollary}\label{cor:reg-lower-lin-cb}
Let $d\geq 1, T\geq 1$. Then for any $T$-round \pLDP~algorithm, 
\begin{align*}
    \sup_{M\in\cMlincb}\EE\sups{M,\alg}[\regdm(T)]\geq c\min\sset{ \frac{d\sqrt{T}}{\alpha},\frac{T}{\sqrt{d}} }.
\end{align*}
\end{corollary}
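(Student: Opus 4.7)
The plan is to combine the PAC lower bound of \pref{prop:CB-lower-proto} with a standard online-to-batch reduction, after embedding the hard class $\cM_d$ used there into the linear-CB class $\cMlincb$. The $\min$ structure in the regret bound reflects a case split on whether the critical PAC scale $\Delta\asymp d/(\alpha\sqrt{T})$ lies below the cap $1/\sqrt{d}$ imposed by the embedding.

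For the embedding, given any $\Delta\in(0,1/\sqrt{d}]$, take $\phi(i,0)=0$, $\phi(i,1)=\sqrt{d}\,\Delta\,\be_i\in \Bd$ and $\vartheta_\theta = \theta/\sqrt{d}\in \Bd$ for each $\theta\in\{-1,1\}^d$. Then $\lr\vartheta_\theta,\phi(i,a)\rr = f_\theta(i,a)$, so the instance class $\cM_d$ (with that $\Delta$) from \pref{prop:CB-lower-proto} is a subclass of $\cMlincb$.

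For the online-to-batch step, given a $T$-round \pLDP\ algorithm $\alg$, let $\alg'$ run $\alg$ and then output $\hpi$ drawn uniformly from $\{\pi\ind{1},\dots,\pi\ind{T}\}$. Under any $M\in\cM_d$, the contexts are i.i.d.\ from $\Unif([d])$ and $\pi\ind{t}$ is $\cH\ind{t-1}$-measurable (hence independent of $x\ind{t}$ in the contextual-DMSO protocol, where the learner commits to $\pi\ind{t}$ before $x\ind{t}$ is drawn), so Fubini gives
\begin{align*}
\EE\sups{M,\alg'}[\riskdm(T)] = \frac{1}{T}\,\EE\sups{M,\alg}[\regdm(T)],
\end{align*}
and $\alg'$ inherits $\alpha$-LDP since its final step only re-selects among already-produced policies without touching raw data.

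To finish, let $c_1$ denote the universal constant implicit in \pref{prop:CB-lower-proto}, pick $c_0 := \sqrt{c_1}/2$, and set $\Delta^\star := \min\{1/\sqrt{d},\ c_0 d/(\alpha\sqrt{T})\}$. Direct substitution in each branch of the $\min$ shows $\Delta^\star\leq 1/\sqrt{d}$ and $T \leq c_1 d^2/(\alpha^2 (\Delta^\star)^2)$, so \pref{prop:CB-lower-proto} yields $\sup_{M\in\cM_d}\EE\sups{M,\alg'}[\riskdm(T)] \geq \Delta^\star/4$, and the online-to-batch identity lifts this to
\begin{align*}
\sup_{M\in\cMlincb}\EE\sups{M,\alg}[\regdm(T)] \geq \frac{T\Delta^\star}{4} = \frac{1}{4}\min\sset{\tfrac{T}{\sqrt{d}},\ \tfrac{c_0 d\sqrt{T}}{\alpha}},
\end{align*}
as claimed. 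The only non-mechanical point is the online-to-batch identity under adaptive \pLDP\ and the independence $\pi\ind{t}\perp x\ind{t}$ baked into the contextual-DMSO protocol; everything else is the case split between the two $\min$ branches.
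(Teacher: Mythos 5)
Your proof is correct and follows essentially the same route the paper intends: the paper presents the corollary as an immediate consequence of \pref{prop:CB-lower-proto} once one notes that for $\Delta\leq 1/\sqrt{d}$ the hard class $\cF_d$ embeds into $\cFlin$, and your argument simply makes explicit the three steps the paper leaves implicit (the feature/parameter rescaling, the online-to-batch conversion using that $\pi\ind{t}$ is chosen before $x\ind{t}$ under i.i.d.\ uniform contexts and that post-processing preserves \pLDP, and the choice $\Delta^\star=\min\{1/\sqrt{d},\,c_0 d/(\alpha\sqrt{T})\}$). The constant bookkeeping ensuring $T\leq c_1 d^2/(\alpha^2(\Delta^\star)^2)$ in both branches checks out.
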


Similarly, we can prove the lower bound of \cref{prop:Lip-CB} as follows.

\paragraph{Proof of \cref{prop:Lip-CB} (lower bound)}
Fix a $\Delta>0$, and we set $d=N(\cX,2\Delta)$, $\cA=\set{0,1}$. By the duality of packing and covering, there exists $x_1,\cdots,x_d\in\cX$ such that $\rho(x_i,x_j)\geq 2\Delta, \forall i\neq j$. 

Then, for each $\theta\in\set{-1,1}^d$, we define $f_\theta\in\cFlip$ as follows: for any $x\in\cX$, we set $f_\theta(x,0)=0$ and
\begin{align*}
    f_\theta(x,1)\defeq \sum_{i=1}^d \theta_i\max\set{ \Delta-\rho(x,x_i), 0 }.
\end{align*}
By definition, $f_\theta(\cdot,1)$ is clearly 1-Lipschitz, because for any $x\in\cX$, there is at most one $i\in[d]$ such that $\rho(x,x_i)\leq \Delta$. 

Therefore, we have an inclusion $\iota: \cM_d\to \cblip$. Hence, \cref{prop:CB-lower-proto} implies that for any $T$-round \pLDP~algorithm $\alg$, we have
\begin{align*}
    \sup_{M\in\cblip}\EE\sups{M,\alg}[\riskdm(T)]\geq \frac{\Delta}{4}, \qquad\text{unless }
    T\geqsim \frac{d^2}{\alpha^2\Delta^2}.
\end{align*}
This is the desired result.
\qed

\subsection{Proof of Proposition~\ref{prop:SQ-lower-simple}}\label{appdx:proof-SQ-lower-simple}

\newcommand{\olf}{\Bar{\lf}}

Fix $\Delta>0$ and let $\eps_0=\crdim(\cM,\Delta)$. 

Then, there exists a reference model $\oM$ and a set of models $\set{M_1,\cdots,M_m}\subseteq \cM$, such that (1) $\set{M_1,\cdots,M_m}$ is $\eps$-correlated relative to $\oM$; (2) for any $\pi\in\DD$, there is at most $m/2$ indexes $i\in[m]$ such that $\LM[M_i]{\pi}\leq \Delta$.

In the following, we proceed to lower bound the quantile-based \pDEC~(cf. \cref{appdx:p-dec-lin-q}).

For any $\lf\in\Lc$, we have
\begin{align*}
    \sum_{i=1}^m \Dl(M_i,\oM)^2
    =\sum_{i=1}^m \abs{ M_i[\lf]-\oM[\lf] }^2
    =\sup_{w\in\R^m: \nrm{w}\leq 1} \abs{ \sum_{i=1}^m w_i\paren{ M_i[\lf]-\oM[\lf] } }^2.
\end{align*}
For any fixed $w\in\R^m$, we can consider the shifted $\olf(z)=\lf(z)-\frac12\in[-\frac12,\frac12]$ and bound
\begin{align*}
\abs{ \sum_{i=1}^m w_i\paren{ M_i[\lf]-\oM[\lf] } }^2
=&~
\abs{ \sum_{i=1}^m w_i\paren{ M_i[\olf]-\oM[\olf] } }^2\\
=&~
\abs{ \EE_{z\sim \oM}\brac{\paren{\sum_{i=1}^m w_i \paren{\frac{M_i[z]}{\oM[z]}-1}}\olf(z) }}^2 \\
\leq&~ \EE_{z\sim \oM}\paren{\sum_{i=1}^m w_i \paren{\frac{M_i[z]}{\oM[z]}-1}}^2 \cdot \EE_{z\sim \oM} \olf(z)^2  \\
\leq &~ \frac14 \sum_{i,j} \rho_{\oM}(M_i,M_j) w_iw_j,
\end{align*}
where the first inequality follows from the Cauchy inequality. Therefore,
\begin{align*}
    \sum_{i=1}^m \Dl(M_i,\oM)^2
    =&~\sup_{w\in\R^m: \nrm{w}\leq 1} \abs{ \sum_{i=1}^m w_i\paren{ M_i[\lf]-\oM[\lf] } }^2 \\
    \leq &~ \frac14\sup_{w\in\R^m: \nrm{w}\leq 1} \sum_{i,j} \abs{\rho_{\oM}(M_i,M_j)}\cdot \abs{w_iw_j} \\
    \leq&~ \frac{1}{4}\paren{ m\eps^2_0+\sqrt{m(m-1)}\eps^2_0 } \leq \frac{m\eps^2_0}{2},
\end{align*}
where the last inequality follows from the definition of $\eps$-correlation. 

Hence, we may consider $\mu=\Unif(\set{M_1,\cdots,M_m})\subseteq \DM$. Then, for any $p\in\DDD, q\in\DL$, we know that
\begin{align*}
    \PP_{M\sim \mu, \pi\sim p}\paren{ \LM{\pi}\geq \Delta }\geq \inf_{\pi} \PP_{M\sim \mu}\paren{ \LM{\pi}\geq \Delta }\geq \frac12.
\end{align*}
Therefore, there must exist $\cM_0\subset \cM$ such that $\mu(\cM_0)\geq \frac13$, and
\begin{align*}
    \PP_{\pi\sim p}\paren{ \LM{\pi}\geq \Delta }\geq \frac14, \qquad\forall M\in\cM_0.
\end{align*}
Then, we also know
\begin{align*}
    \mu(\cM_0) \min_{M\in\cM_0} \EE_{\lf\sim q} \Dl(M,\oM)^2
    \leq \EE_{M\sim \mu} \EE_{\lf\sim q} \Dl(M,\oM)^2 \leq \frac12\eps_0^2,
\end{align*}
and hence there exists $M\in\cM_0$ with $\EE_{\lf\sim q} \Dl(M,\oM)^2\leq \frac32\eps^2_0$. This gives
\begin{align*}
    \pdecql_{\sqrt{2}\eps_0,1/4}(\cM,\oM)\geq \Delta,
\end{align*}
which also implies $\pdecl_{\sqrt{2}\eps_0}(\cM,\oM)\geq \frac{\Delta}{4}$.
Hence, the desired lower bounds on $\pdecl_\eps(\cM)$ follows for all $\eps\geq \eps_0$.

Furthermore, applying \cref{prop:p-dec-q-lin-lower} with $\pdecql_{\sqrt{2}\eps_0,1/4}(\cM,\oM)\geq \Delta$, we also have the desired lower bound on sample complexity.
\qed

\subsection{Proof of Proposition~\ref{thm:LDP-exp-lower}}\label{appdx:proof-par-lower}

Fix a parameter $\lambda\in[0,\frac12]$ and denote $\cX_+=\cX\backslash\set{\bz}$, $\mu_+=\Unif(\cX_+)$.

Then, for each $S\subseteq [n]$, we consider $M_S\defeq M_{S,\lambda}$ the model with covariate distribution being $\mu_\lambda=(1-\lambda)\delta_{0}+\lambda\mu_+$, and $y=f_S(x)$ for $(x,y)\sim M_S$.
We then consider the subclass $\cM_\lambda=\set{M_{S,\lambda}}_{S\subseteq [n]}\subset\cM$, and let $\oM=\oM_\lambda\in\DZ$ be the reference model given by $x\sim \mu,$ and $y\sim \Rad{0}$ if $x\neq \bz$, and $y=0$ otherwise. 

By definition, the pairwise correlation is given by
\begin{align*}
    \rho_{\oM}\paren{ M_S, M_{S'} }=
    \lambda  \EE_{x\sim \mu_+}f_S(x)f_{S'}(x) .
\end{align*}
Further, we know $f_S(x)=(-1)^{\sum_{i\in S} x_i}$, and hence
\begin{align*}
    \EE_{x\sim \Unif(\ZZ_2^d)}f_S(x)f_{S'}(x)= \EE_{x\sim \Unif(\ZZ_2^d)} (-1)^{\sum_{i\in S\Delta S'} x_i}=\begin{cases}
        0, & S\neq S', \\
        1, & S=S'.
    \end{cases}
\end{align*}
Therefore, we have
\begin{align*}
    \rho_{\oM}\paren{ M_S, M_{S'} }=\begin{cases}
        -\frac{\lambda}{2^d-1}, & S\neq S', \\
        \lambda, & S=S'.
    \end{cases}
\end{align*}
We also know that for any $f\in(\cX\to\set{-1,1})$,
\begin{align*}
    \LM[M_S]{f}\geq \lambda\PP_{x\sim \mu_+}\paren{ f(x)\neq f_S(x) }=\lambda\EE_{x\sim \mu_+}\brac{\frac{1-f(x)f_S(x)}{2}}.
\end{align*}
Therefore, for any $S\neq S'$,
\begin{align*}
    \LM[M_S]{f}+\LM[M_{S'}]{f}\geq \PP_{x\sim \mu}\paren{ f_S(x)\neq f_{S'}(x) }
    \geq \frac{\lambda}{2}\paren{ 1-\EE_{x\sim \mu_+}f_S(x)f_{S'}(x) }
    =\frac{\lambda}{2}\paren{ 1-\frac{1}{2^d-1} }.
\end{align*}
Hence, for any $f\in(\cX\to\set{-1,1})$, there exists at most one model $M_S$ such that $\LM[M_S]{f}\leq \frac{\lambda}{8}$, and by definition of the minimum correlation (\cref{def:SQ-cor}), we know
\begin{align}\label{eqn:proof-parity-sq}
    \crdim(\cM_\lambda,\lambda/8)\leq \frac{\lambda}{2^d-1}.
\end{align}
Note that for $\lambda=\Theta(1)$, \eqref{eqn:proof-parity-sq} is enough for proving lower bound $\Omega(2^d)$ for constant sub-optimality: applying \cref{prop:SQ-lower-simple} immediately yields the desired result (for sub-optimality level $\eps=\Theta(1)$).

In the following, we use a slightly more careful argument to show the lower bound of \pDEC. Notice that \eqref{eqn:proof-parity-sq} implies that for $\lambda=\frac12$ and $\eps_0=\frac{1}{\sqrt{2^d-1}}$,
\begin{align*}
    \pdecl_{\eps_0}(\cM_{1/2},\oM_{1/2})\geq \frac{1}{16}.
\end{align*}
Further, notice that for $\lambda\leq \frac12$, $S\subseteq [n]$, we have
\begin{align*}
    \LM[M_{S,\lambda}]{f}=2\lambda \LM[M_{S,1/2}]{f}, \qquad
    \Dl(M_{S,\lambda},\oM_\lambda)=2\lambda\Dl(M_{S,1/2},\oM_{1/2}).
\end{align*}
Therefore, by the definition of \pDEC~\cref{def:p-dec-l},
\begin{align*}
    \pdecl_{2\lambda\eps_0}(\cM_{\lambda},\oM_{\lambda})=2\lambda\cdot \pdecl_{\eps_0}(\cM_{1/2},\oM_{1/2})\geq \frac{\lambda}{8},
\end{align*}
and hence for any $\eps\leq \eps_0$, we con set $\lambda(\eps)=\frac{\eps}{2\eps_0}$, and then
\begin{align*}
    \pdecl_\eps(\cM)\geq \pdecl_{\eps}(\cM_{\lambda(\eps)},\oM_{\lambda(\eps)})
    \geq \frac{\lambda(\eps)}{8}
    =\frac{1}{16}\sqrt{2^d-1}\eps.
\end{align*}
Applying \cref{thm:pdec-lin-lower} completes the proof, as $L$ is a metric-based loss.
\qed

\subsection{Proof of \cref{cor:alg-approx-to-pure}}\label{appdx:proof-alg-approx-to-pure}

\newcommand{\tPP}{\widetilde{\PP}}
\newcommand{\tPPp}{\tPP\sups{M, \sf pr}}
\newcommand{\tPPa}{\tPP\sups{M}}
Fix the $T$-round algorithm $\alg$ with rules $\set{q\ind{t}}\cup \set{p}$, we define $\algpr$ as follows: for each round $t\in[T]$,
\begin{itemize}
\item Sample $(\pi\ind{t},\tpr\ind{t})\sim q\ind{t}(\cdot | \pi_1,\tpr_1,z_1,\cdots,\pi_{t-1},\tpr_{t-1},z_{t-1})$.
\item Set $\bpi\ind{t}=(\pi\ind{t},\tprpr[t])$ and observe $z\ind{t}$.
\end{itemize}
Now, we define $\tPPp$ to be the joint distribution of $\cH=\set{(\pi\ind{t},\tpr\ind{t},\tprpr[t],z\ind{t})}_{t\in[T]}$ under $\algpr$ and model $M\in(\Pi\to\DZ)$.

As an intermediate step of proof, we also consider $\tPPa$, the distribution of $\cH=\set{(\pi\ind{t},\pr\ind{t},\prprt[t],z\ind{t})}_{t\in[T]}$ under $\alg$ and model $M\in\cM$. By data-processing inequality, we have
\begin{align*}
    \DTV{ \PP\sups{M,\alg}(\cH_\pi=\cdot), \PP\sups{M,\algpr}(\cH_\pi=\cdot) }
    \leq \DTV{ \tPPp, \tPPa }.
\end{align*}
Then, we may apply the chain rule of TV distance, which gives
\begin{align*}
    \DTV{ \tPPp, \tPPa }
    \leq&~ \sum_{t=1}^T \EE_{\tPPa} \DTV{ \tPPp(z\ind{t}=\cdot| \pi_{1:t},\pr_{1:t},\prprt[1:t],z_{1:t-1} ), \tPPa(z\ind{t}=\cdot| \pi_{1:t},\pr_{1:t},\prprt[1:t],z_{1:t-1} ) } \\
    =&~\sum_{t=1}^T \EE_{\tPPa} \DTV{ \tPPp(z\ind{t}=\cdot| \pi\ind{t}, \prprt[t]), \tPPa(z\ind{t}=\cdot| \pi\ind{t}, \pr\ind{t} ) } \\
    =&~ \sum_{t=1}^T \EE_{\tPPa} \DTV{ \prprt[t]\circ M(\pi\ind{t}), \pr\ind{t}\circ M(\pi\ind{t}) } \\
    \leq&~ T\cdot \sup_{\pr}\sup_{z\in\cZ}\DTV{ \prpr(\cdot|z), \pr(\cdot|z) } \\
    \leq&~ \frac{T\beta}{1+\ea-\beta},
\end{align*}
where the expectation $\EE_{\tPPa}$ is taken over the trajectory $\cH=\set{(\pi\ind{t},\pr\ind{t},\prprt[t],z\ind{t})}_{t\in[T]}\sim \tPPa$.
Combining the inequalities above completes the proof.
\qed

\section{Proofs from \cref{sec:connection} and \cref{appdx:connection-more}}\label{appdx:DC}
\subsection{Proof of \cref{thm:local-minimax}}\label{appdx:proof-local-minimax}

For simplicity, for any model class $\cM\subseteq \DZ$, we denote
\begin{align*}
    \RISK(\cM)\defeq \inf_{\alg} \sup_{M\in\cM} \EE\sups{M,\alg} \brac{ \Riskdm(T) },
\end{align*}
where the $\inf_{\alg}$ is taken over \pLDP~algorithms.
Then we know
\begin{align*}
    \RISKloc(\cM,M_0)\defeq \sup_{M_1\in\cM}\RISK(\set{M_1,M_0}).
\end{align*}

\paragraph{Proof of the upper bound} We only need to bound \PDEC~in terms of the local DEC, as follows.
\begin{lemma}\label{lem:lin-dec-to-loc}
For any 2-point model class $\cM_0=\set{M_1,M_0}\subseteq \cM$, it holds that
\begin{align*}
    \pdecl_\eps(\cM_0)\leq \begin{cases}
        \inf_{\pi\in\DD}\paren{  \LM[M_1]{\pi}+\LM[M_0]{\pi} }, & \text{if } \sup_{\pi} \DTV{ M_1(\pi), M_0(\pi) } \leq 2\eps, \\
        0, & \text{otherwise}.
    \end{cases}
\end{align*}
\end{lemma}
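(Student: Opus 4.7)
}
The plan is to exhibit, for every reference model $\oM\in\co(\cM_0)$, an explicit pair $(p,q)\in\DPi\times\DPL$ that certifies the claimed bound on $\pdecl_\eps(\cM_0,\oM)$; then take $\sup_{\oM}$. Parametrize $\oM=tM_1+(1-t)M_0$ with $t\in[0,1]$, and write $L_j(\pi)\defeq L(M_j,\pi)$.

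\emph{Trivial bound, giving Case~1.} Pick $\pis\in\argmin_{\pi}\{L_1(\pi)+L_0(\pi)\}$, set $p=\delta_{\pis}$, and choose any $q$. Since $L\geq 0$,
\[
\sup_{M\in\cM_0}\EE_{\pi\sim p}L(M,\pi)\;=\;\max\{L_1(\pis),L_0(\pis)\}\;\leq\;L_1(\pis)+L_0(\pis).
\]
This bound is independent of $\oM$ and of whether the TV condition holds, so it settles the first case after taking $\sup_{\oM}$.

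\emph{Distinguishing construction, giving Case~2.} Assume $\sup_\pi \DTV{M_1(\pi), M_0(\pi)}>2\eps$. Pick $\pi_0$ achieving this. Using the well-known identity $\sup_{\lf\in\Lc}\Dl(P,Q)=\DTV{P,Q}$ (take $\lf=\mathbf{1}_E$ for an optimal Scheff\'e set), choose $\lf_0\in\Lc$ with
\[
D\;\defeq\;\Dl[\lf_0]\paren{M_1(\pi_0),M_0(\pi_0)}\;>\;2\eps.
\]
Set $q=\delta_{(\pi_0,\lf_0)}$. Since $\Dl$ is an affine absolute difference, the direct calculation
\[
\Dl[\lf_0]\paren{M_1(\pi_0),\oM(\pi_0)}=(1-t)D,\qquad
\Dl[\lf_0]\paren{M_0(\pi_0),\oM(\pi_0)}=tD
\]
holds. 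The feasible set in the definition of $\pdecl_\eps(\cM_0,\oM)$ therefore excludes $M_1$ whenever $(1-t)D>\eps$, i.e.\ $t<1-\eps/D$, and excludes $M_0$ whenever $tD>\eps$, i.e.\ $t>\eps/D$. Because $D>2\eps$ these two conditions partition $[0,1]$ up to an overlap-free boundary, so for every $\oM$ at most one of $M_0,M_1$ survives the constraint. In the regime $t\leq\eps/D$, only $M_0$ can survive: set $p=\delta_{\pim[M_0]}$ and use the standing assumption $\min_\pi L(M,\pi)=0$ (from \cref{thm:local-minimax}) to conclude $\sup_{M}\EE_p L(M,\pi)=L_0(\pim[M_0])=0$. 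The regime $t\geq 1-\eps/D$ is symmetric with $p=\delta_{\pim[M_1]}$. In the middle regime the constrained supremum is over the empty set and is thus at most $0$.

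\emph{Wrap-up.} In both regimes of Case~2 we have exhibited $(p,q)$ certifying $\pdecl_\eps(\cM_0,\oM)\leq 0$, and in Case~1 we certified $\pdecl_\eps(\cM_0,\oM)\leq \inf_\pi(L_1(\pi)+L_0(\pi))$, uniformly in $\oM\in\co(\cM_0)$. Taking $\sup_{\oM}$ yields the lemma. The only substantive step is the TV-to-$\Dl$ identity together with the elementary $t$-case analysis; the rest is a direct application of the definition of $\pdecl_\eps$.
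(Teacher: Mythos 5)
Your proposal is correct and follows essentially the same route as the paper's proof: both place $q$ on the TV-maximizing decision with the Scheff\'e-set functional, parametrize $\oM$ as a convex combination, and split into the same two cases (the $L_1+L_0$ minimizer when the TV radius is small; the zero-loss decision for the unique surviving model, or the empty constraint set, when it is large). The only cosmetic differences are that the paper fixes the same distinguishing $q$ in both cases and leaves the $\min_\pi L(M,\pi)=0$ assumption implicit, which you invoke explicitly.
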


With \cref{lem:lin-dec-to-loc}, we know that
\begin{align*}
    \sup_{M_1\in\cM} \pdecl_\eps(\set{M_0, M_1})\leq \pdecloc_{2\eps}(\cM).
\end{align*}
Applying \cref{thm:pdec-lin-upper} gives
\begin{align*}
    \RISK(\set{M_1,M_0})\leq \pdecl_{\oeps_\delta(T)}(\set{M_0, M_1})+\delta.
\end{align*}
Therefore, we may combine the two inequalities above to obatin
\begin{align*}
    \RISKloc(\cM,M_0)=\sup_{M_1\in\cM} \RISK(\set{M_1,M_0})\leq \pdecloc_{2\oeps_\delta(T)}(\cM)+\delta.
\end{align*}
This is the desired upper bound.
\qed

\paragraph{Proof of the lower bound} Similar to the proof of upper bound, we can directly lower bound  $\sup_{M_1\in\cM} \pdecl_\eps(\set{M_1, M_0})$ by the local DEC $\pdecloc_\eps(\cM)$. However, the \pDEC~lower bound (\cref{thm:pdec-lin-lower}) requires certain structural assumptions on the loss function $L$, which is in fact artificial in this case. Therefore, in the following, we utilize the quantile DEC lower bound (\cref{appdx:p-dec-lin-q}) to obtain a better lower bound.

\begin{lemma}\label{lem:dec-q-to-loc}
For any $\eps>0$, it holds
\begin{align*}
    \sup_{M_1\in\cM} \pdecql_{\eps,1/2}(\set{M_1,M_0},M_0)\geq \frac12\pdecloc_\eps(\cM,M_0).
\end{align*}
\end{lemma}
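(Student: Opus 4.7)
The plan is to fix an alternative $M_1 \in \cM$ satisfying $\sup_\pi \DTV{M_1(\pi), M_0(\pi)} \leq \eps$ and show directly that
$$\pdecql_{\eps, 1/2}(\set{M_1, M_0}, M_0) \geq \frac{1}{2} \inf_{\pi \in \DD} \brk{\LM[M_1]{\pi} + \LM[M_0]{\pi}},$$
after which taking a supremum over admissible $M_1$ recovers exactly $\frac{1}{2} \pdecloc_\eps(\cM, M_0)$ on the right-hand side. Write $\Delta^\star \defeq \inf_\pi \brk{\LM[M_1]{\pi} + \LM[M_0]{\pi}}$ for the target value.

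The first observation is that \emph{both} candidate models $M_0$ and $M_1$ automatically satisfy the Hellinger-type constraint in the definition of $\pdecql$, for any choice of $q \in \DL$. Indeed, $\Dl^2(M_0(\pi), M_0(\pi)) = 0$ trivially, while for $M_1$ the definition of $\Dl$ together with $\lf \in [0,1]$ gives
$$\Dl^2(M_1(\pi), M_0(\pi)) = \prn{\En_{M_1(\pi)} \lf - \En_{M_0(\pi)} \lf}^2 \leq \DTV{M_1(\pi), M_0(\pi)}^2 \leq \eps^2,$$
so $\En_{(\pi,\lf) \sim q} \Dl^2(M_1(\pi), M_0(\pi)) \leq \eps^2$. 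Thus the inner supremum in the quantile DEC is unrestricted over the two-point class $\set{M_0, M_1}$, regardless of how $q$ is chosen.

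The second observation is a one-line union-bound argument on the decision side. For every $\pi$, $\LM[M_1]{\pi} + \LM[M_0]{\pi} \geq \Delta^\star$ implies $\max_{i\in\set{0,1}} \LM[M_i]{\pi} \geq \Delta^\star/2$. Setting $A_i \defeq \set{\pi : \LM[M_i]{\pi} \geq \Delta^\star/2}$ gives $A_0 \cup A_1 = \Pi$, and so for any $p \in \DDD$,
$$p(A_0) + p(A_1) \geq 1 \quad\Longrightarrow\quad \max\set{p(A_0), p(A_1)} \geq 1/2.$$
By the definition of the $1/2$-quantile loss, this says $\max\set{\hgm[M_0]{p}, \hgm[M_1]{p}} \geq \Delta^\star/2$ for every $p$.

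Combining the two observations, for any $(p, q)$ the sup over $M \in \set{M_0, M_1}$ of $\hgm{p}$ subject to the Hellinger constraint (which is vacuous here) is at least $\Delta^\star/2$; taking the infimum over $(p, q)$ yields $\pdecql_{\eps, 1/2}(\set{M_1, M_0}, M_0) \geq \Delta^\star/2$, and then the supremum over admissible $M_1$ completes the argument. There is no real obstacle --- the proof is an unpacking of definitions --- and the only delicate point is verifying that the constraint $\En_q \Dl^2 \leq \eps^2$ is preserved under the TV bound on $M_1$, which is immediate from $\abs{\En_P \lf - \En_Q \lf} \leq \DTV{P, Q}$ for $\lf \in [0,1]$.
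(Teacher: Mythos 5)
Your proof is correct and follows essentially the same route as the paper's: fix an admissible $M_1$, observe that the constraint $\EE_{(\pi,\lf)\sim q}\Dl^2(M_1(\pi),M_0(\pi))\leq\eps^2$ is automatic since $\Dl\leq\dTV$, and use the disjointness of $\set{\pi:\LM[M_1]{\pi}<\Delta^\star/2}$ and $\set{\pi:\LM[M_0]{\pi}<\Delta^\star/2}$ to force one of the two $1/2$-quantile losses to be at least $\Delta^\star/2$. The only cosmetic difference is that the paper works with an arbitrary $\Delta<\pdecloc_\eps(\cM,M_0)$ and passes to the limit, whereas you take the supremum over $M_1$ at the end; these are equivalent.
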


Now, applying \cref{prop:p-dec-q-lin-lower} gives
\begin{align*}
    \RISK(\set{M_1,M_0})\geq \frac14 \pdecql_{\ueps(T),1/2}(\set{M_1,M_0}),
\end{align*}
and hence \cref{lem:dec-q-to-loc} yields
\begin{align*}
    \RISKloc(\cM,M_0)=\sup_{M_1\in\cM}\RISK(\set{M_1,M_0})
    \geq \frac14 \sup_{M_1\in\cM} \pdecql_{\ueps(T),1/2}(\set{M_1,M_0})
    \geq \frac18 \pdecloc_\eps(\cM,M_0).
\end{align*}
This is the desired result.
\qed

\paragraph{Proof of \cref{lem:lin-dec-to-loc}}
\newcommand{\piexp}{\pi^{\sf exp}}
Define $\eps_0=\sup_{\pi} \DTV{ M_1(\pi), M_0(\pi) }$ and
\begin{align*}
    \piexp=\argmax_{\pi\in\Pi} \DTV{ M_1(\pi), M_0(\pi) }.
\end{align*}
Further, we choose $\lf\in\Lc$ such that
\begin{align*}
    \Dl\paren{ M_1(\piexp), M_0(\piexp) }=\DTV{ M_1(\piexp), M_0(\piexp) }.
\end{align*}
Then, we consider the distribution $q\in\DPL$ supported on $(\piexp,\lf)$, and any reference model $\oM\in\co(\cM_0)$ given by $\oM=\lambda M_0+(1-\lambda)M_1$ (where $\lambda\in[0,1]$). There are two cases:

(1) If $\eps_0\leq 2\eps$, then, we choose
\begin{align*}
    \piout=\argmin_{\pi\in\DD}\paren{  \LM[M_1]{\pi}+\LM[M_0]{\pi} },
\end{align*}
and let $p\in\DPi$ be the distribution supported on $\piout$. Then, $p$ certifies that
\begin{align*}
    \pdecl_\eps(\cM_0, \oM)\leq
        \inf_{\pi\in\DD}\paren{  \LM[M_1]{\pi}+\LM[M_0]{\pi} }.
\end{align*}

(2) If $\eps_0>2\eps$, then using the fact that
\begin{align*}
    \Dl\paren{ M_0(\piexp), \oM(\piexp) }=&~(1-\lambda)\Dl\paren{ M_1(\piexp), M_0(\piexp) }=(1-\lambda)\eps_0, \\
    \Dl\paren{ M_1(\piexp), \oM(\piexp) }=&~\lambda\Dl\paren{ M_1(\piexp), M_0(\piexp) }=\lambda\eps_0,
\end{align*}
we know there is at most one index $i\in\set{0,1}$ such that $\Dl\paren{ M_i(\piexp), \oM(\piexp) }\leq \eps$.
If such an index does not exist, then we already have $\pdecl_\eps(\cM_0, \oM)=0$. Otherwise, given such an index $i$, we can take a decision $\piout$ such that $\LM[M_i]{\piout}=0$, which also certifies $\pdecl_\eps(\cM_0, \oM)=0$.
\qed

\paragraph{Proof of \cref{lem:dec-q-to-loc}}
We take $0\leq \Delta<\pdecloc_{\eps}(\cM,M_0)$. Then by definition, there exists $M_1\in\cM$ such that
\begin{align*}
    \sup_\pi\DTV{M_1(\pi), M_0(\pi)}\leq \eps, \qquad \inf_\pi\paren{ \LM[M_1]{\pi}+\LM[M_0]{\pi} }\geq \Delta.
\end{align*}
Then, for any $p\in\DPi$ and $q\in\DPL$, we have
\begin{align*}
    \EE_{(\pi,\lf)\sim q}\Dl^2(M_1(\pi),M_0(\pi))\leq \sup_\pi\DTV{M_1(\pi), M_0(\pi)}^2 \leq \eps^2,
\end{align*}
and we also know $\set{\pi: \LM[M_1]{\pi}<\Delta/2}$ and $\set{\pi: \LM[M_0]{\pi}<\Delta/2}$ are disjoint, which implies
\begin{align*}
    p\paren{ \pi: \LM[M_1]{\pi}\geq \Delta/2 }+p\paren{ \pi: \LM[M_0]{\pi}\geq \Delta/2 }\geq 1.
\end{align*}
Therefore, the quantile-based \pDEC~can be lower bounded as
\begin{align*}
    \pdecql_{\eps,1/2}(\set{M_1,M_0},M_0)\geq \frac{\Delta}{2}.
\end{align*}
This gives the desired result by letting $\Delta\to \pdecloc_{\eps}(\cM,M_0)$.
\qed

\subsection{Proof of Theorem~\ref{thm:DC-lower}}\label{appdx:proof-DC-lower}

We first recall the notations and results of \cref{appdx:proof-p-dec-q-lin-lower}. Using \eqref{eqn:KL-chain}, we know that for any models $M, \oM$,
\begin{align*}
\KLd{\PP\sups{M,\alg}}{\PP\sups{\oM,\alg}}  
\leq (\ea-1)^2 T\cdot \EE_{(\pi,\lf)\sim \tq_{M,\alg}} \Dl^2( M(\pi) , \oM(\pi) )
\leq (\ea-1)^2T.
\end{align*}
On the other hand, by the definition \cref{def:DC} of $\DC{\cM}$, we have that
\begin{align*}
    \frac1{\DC{\cM}}\defeq \sup_{p\in\Delta(\Pi)}\inf_{M\in\cM} ~~p(\pi: \LM{\pi}\leq \Delta).
\end{align*}
Therefore, we may fix a reference model $\oM\in\cM$, and it holds that
\begin{align*}
    \inf_{M\in\cM} p_{\oM,\alg}( \pi: \LM{\pi}\leq \Delta )\leq \frac1{\DC{\cM}},
\end{align*}
and hence there exists $M\in\cM$ such that
\begin{align*}
   p_{\oM,\alg}( \pi: \LM{\pi}\leq \Delta )\leq \frac1{\DC{\cM}}.
\end{align*}
On the other hand, the condition of \cref{thm:DC-lower} gives $p_{M,\alg}( \pi: \LM{\pi}\leq \Delta )\geq \frac12$.
Therefore, by data-processing inequality,
\begin{align*}
    \KLd{\PP\sups{M,\alg}}{\PP\sups{\oM,\alg}}
    \geq \KLd{ p_{M,\alg} }{p_{\oM,\alg}}\geq \frac{\log\DC{\cM}}{2}-1.
\end{align*}
Comparing the lower and upper bounds above complete the proof.
\qed

\subsection{Proof of \cref{prop:JDP-lower-interactive} and \cref{prop:JDP-lower}}

With the following lemma (which generalizes \citet{beimel2013characterizing}), the proof is essentially similar to \cref{appdx:proof-DC-lower}.
\begin{lemma}\label{lem:JDP}
Suppose that $\alg$ is a $T$-round \pJDP~algorithm. Then for any two models $M,\oM$, it holds that
\begin{align}
    \PP\sups{M,\alg}(\hpi\in E)\leq e^{T\alpha}\cdot \PP\sups{\oM,\alg}(\hpi\in E), \qquad \forall E.
\end{align}
\end{lemma}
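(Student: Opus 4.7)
The plan is to derive a pointwise bound of the form $\PP\sups{\alg}(\pi^{1:T+1}=\pi\mid\Hyz)\leq e^{T\alpha}\,\PP\sups{\alg}(\pi^{1:T+1}=\pi\mid\Hyz')$ uniformly in $\Hyz,\Hyz'\in\cZ^T$ via a standard group-privacy iteration, and then combine it with a clean factorization of the interactive joint law under a fixed model.

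First, I would turn pure $\alpha$-JDP into a pointwise density bound. Applying \cref{def:JDP-interactive} to singleton events (passing to Radon--Nikodym derivatives if $\cZ$ is continuous) shows that the algorithm's conditional density
\[
A(\pi,\Hyz)\defeq \prod_{t=1}^{T+1} q\ind{t}\paren{\pi\ind{t}\mid \pi\ind{1:t-1},\,z\ind{1:t-1}}
\]
satisfies $A(\pi,\Hyz)\leq e^\alpha A(\pi,\Hyz')$ whenever $\Hyz,\Hyz'$ differ in exactly one coordinate. Since any two sequences in $\cZ^T$ differ in at most $T$ coordinates, iterating this single-coordinate bound $T$ times gives the uniform group-privacy bound
\[
A(\pi,\Hyz)\leq e^{T\alpha}\,A(\pi,\Hyz'), \qquad \forall\,\Hyz,\Hyz'\in\cZ^T,\ \pi\in\Pi^{T+1}.
\]

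Next, I would factor the interactive joint law. Writing $M(z\mid \pi)$ for the conditional density of $M(\pi)$, the chain rule followed by regrouping model and algorithm factors yields
\[
\PP\sups{M,\alg}\paren{\pi^{1:T+1}=\pi,\,\Hyz=z} = \Bigl[\prod_{t=1}^T M(z\ind{t}\mid \pi\ind{t})\Bigr]\cdot A(\pi,z),
\]
and, for fixed $\pi$, the weight $z\mapsto \prod_t M(z\ind{t}\mid \pi\ind{t})$ is exactly the product probability measure $M_\pi\defeq \bigotimes_{t=1}^T M(\cdot\mid\pi\ind{t})$ on $\cZ^T$. Integrating over $z$ therefore gives the key identity $\PP\sups{M,\alg}(\pi^{1:T+1}=\pi)=\En_{z\sim M_\pi}[A(\pi,z)]$, and the analogous statement with $\oM$ in place of $M$.

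Combining the two ingredients, for every $\pi\in\Pi^{T+1}$ the group-privacy bound gives $A(\pi,z)\leq e^{T\alpha}A(\pi,z')$ for all $z,z'\in\cZ^T$, so
\[
\PP\sups{M,\alg}(\pi^{1:T+1}=\pi)=\En_{z\sim M_\pi}[A(\pi,z)]\leq e^{T\alpha}\,\En_{z'\sim \oM_\pi}[A(\pi,z')]= e^{T\alpha}\,\PP\sups{\oM,\alg}(\pi^{1:T+1}=\pi).
\]
Summing (or integrating) this pointwise bound over $\pi\ind{1:T}\in\Pi^T$ and $\pi\ind{T+1}\in E$ yields the claim. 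The only mildly delicate point is the passage from event-wise JDP to a pointwise density bound in the continuous setting, which is a routine Radon--Nikodym exercise since we only use the bound inside an expectation.
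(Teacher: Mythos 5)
Your proposal is correct and follows essentially the same route as the paper's proof: group privacy upgrades the single-coordinate JDP bound to a uniform $e^{T\alpha}$ bound on the algorithm's conditional law over decision sequences, and then one fixes the decision sequence and integrates out the observations under the product measures induced by $M$ and $\oM$ respectively. Your explicit factorization of the joint law is just a cleaner writeup of the paper's "take expectation over $z\ind{t}\sim M(\pi\ind{t})$ recursively" step.
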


Notice that by definition, for any model $M\in\cM$,
\begin{align*}
    \PP\sups{M,\alg}\paren{ \LM{\hpi}\leq \Delta }\geq \frac{1}{2},
\end{align*}
and hence
\begin{align*}
    \PP\sups{\oM,\alg}\paren{ \LM{\hpi}\leq \Delta }\geq e^{-T\alpha}\PP\sups{M,\alg}\paren{ \LM{\hpi}\leq \Delta }
    \geq \frac12e^{-T\alpha}, \qquad \forall M\in\cM.
\end{align*}
Then, by the definition of \dct~(\cref{def:DC}), we know for the distribution $p=\PP\sups{\oM,\alg}(\hpi=\cdot)\in\DPi$, it holds that
\begin{align*}
    2e^{T\alpha}
    \geq \sup_{M\in\cM}\frac{1}{p\paren{ \LM{\pi}\leq \Delta }}
    \geq \DC{\cM}.
\end{align*}
This gives the desired lower bound $\alpha T\geq \log\DC{\cM}-\log 2$.
\qed

\paragraph{Proof of \cref{lem:JDP}}
We first consider the setting of \statp, which is easier to analyze. In this case, by definition, for any sequence of observations $\Hyz=(z_1,\cdots,z_T)$, $\Hyz'=(z_1',\cdots,z_T')\in \cZ^T$, \pJDP~implies that
\begin{align}\label{eqn:proof-JDP-lower}
    \PP\sups{\alg}(\hpi\in E|\Hy\ind{T})\leq e^{T\alpha}\cdot \PP\sups{\alg}(\hpi\in E|\Hy\ind{T}'), \qquad\forall E.
\end{align}
Therefore, we may take expectation over $\Hyz=(z_1,\cdots,z_T)\sim M$ and $\Hyz'=(z_1',\cdots,z_T')\sim \oM$, which completes the proof of \cref{lem:JDP}.

More generally, for interactive learning, for any two sequences $\Hyz=(z\ind{1},\cdots,z\ind{T})$ and $\Hyz'=(z\ind{1}',\cdots,z\ind{T}')$, it holds that
\begin{align*}
    \PP\sups{\alg}\paren{ (\pi\ind{1},\cdots,\pi\ind{T},\pi\ind{T+1})=\cdot| \Hyz}\leq e^{T\alpha} \PP\sups{\alg}\paren{ (\pi\ind{1},\cdots,\pi\ind{T},\pi\ind{T+1})=\cdot| \Hyz'}.
\end{align*}
Therefore, for any fixed sequence $(\pi\ind{1},\cdots,\pi\ind{T},\pi\ind{T+1})$, we may take expectation over $z\ind{t}\sim M(\pi\ind{t}), z\ind{t}'\sim \oM(\pi\ind{t})$ independently (recursively for $t=T,T-1,\cdots,1$), which gives
\begin{align*}
    \PP\sups{M,\alg}\paren{ \pi\ind{1},\cdots,\pi\ind{T},\pi\ind{T+1}}\leq e^{T\alpha} \PP\sups{\oM,\alg}\paren{ \pi\ind{1},\cdots,\pi\ind{T},\pi\ind{T+1}}.
\end{align*}
Hence, the proof of \cref{lem:JDP} is completed.
\qed

\subsection{Proof of Proposition~\ref{thm:DC-upper}}\label{appdx:proof-DC-upper}

We consider the private analog of the algorithm of \citet{chen2024beyond}. For the simplicity of presentation, we focus on PAC learning.
\newcommandx{\indk}[1][1=k]{_{(#1)}}
\newcommand{\hk}{\widehat{k}}
\newcommand{\hr}{\widehat{r}}

\begin{algorithm}
\begin{algorithmic}[1]
\REQUIRE Model class $\cM$, decision space $\Pi$, parameter $\Delta,\delta>0$, $T\geq 1$.
\STATE Set $N=\DC{\cM}\log(1/\delta)$, $J=\frac{T}{N}$, and
\begin{align}
    \pds=\arg\min_{p\in\Delta(\Pi)}\sup_{M\in\cM} ~~\frac{1}{ p(\pi: \LM{\pi}\leq \Delta)}.
\end{align}
\FOR{$k=0,1,\cdots,N-1$}
\STATE Sample $\pi_{(k)}\sim \pds$.
\STATE Set $\lf_{(k)}=R(\cdot,\pi_{(k)})$ and $\pr_{(k)}=\bpr[\lf_{(k)}]$ be the binary channel (\cref{example:binary-pr}).
\FOR{$t=kJ+1,\cdots,(k+1)J$}
\STATE Select $\pi\ind{t}=\pi_{(k)}$ and $\pr\ind{t}=\pr_{(k)}$ and observes $o\ind{t}\sim \pr\ind{t}\circ \Mstar(\pi\ind{t})$.
\ENDFOR
\STATE Compute $\hr_{(k)}=\frac{1}{J}\sum_{t=kJ+1}^{(k+1)J} o\ind{t}$.
\ENDFOR
\STATE Set $\hk=\argmax_{k\in[N]} \hr_{(k)}$.
\ENSURE $\hpi=\pi_{(\hk)}$.
\end{algorithmic}
\caption{``Brute-Force'' Algorithm}
\label{alg:DC-upper}
\end{algorithm}

\paragraph{Analysis of \cref{alg:DC-upper}}
By definition,
\begin{align*}
    \pi\indk[1], \cdots, \pi\indk[N] \sim \pds~\text{independently.}
\end{align*}
Hence,
\begin{align*}
    \PP\paren{ \forall k\in[N], \LM[\Ms]{\pi\indk}> \Delta }
    =&~ \pds\paren{ \pi: \LM[\Ms]{\pi}>\Delta }^N \\
    \leq&~ \paren{1-\frac{1}{\DC{\cM}}}^N \\
    \leq &~ \exp\paren{ -\frac{N}{\DC{\cM}} }\leq \delta.
\end{align*}
Therefore, \whp, there exists $k\in[K]$ such that $\LM[\Ms]{\pi\indk}\leq\Delta$.

Furthermore, by the definition of $(\pi\indk,\pr\indk)$, we know that for $t\in[kJ+1,(k+1)J]$, the observation $o\ind{t}\sim \Rad{\ca \Vm[\Mstar](\pi\indk)}$ are generated independently. Therefore, \whp[\frac{\delta}{N}], it holds that
\begin{align*}
    \abs{\hr\indk-\ca\Vm[\Mstar](\pi\indk)}\leq \sqrt{\frac{2\log(2N/\delta)}{J}}=:\eps_J.
\end{align*}
Hence, taking the union bound, we know that \whp[2\delta],
\begin{align*}
    \ca \Vm[\Mstar](\pi\indk[\hk])\geq \ca \max_{k\in[N]}\Vm[\Mstar](\pi\indk[\hk])-2\eps_J
    \geq \ca\paren{\Vmm[\Mstar]-\Delta}-2\eps_J.
\end{align*}
Reorganizing yields
\begin{align*}
    \LM[\Mstar]{\hpi}
    =&~\Vmm[\Mstar]-\Vm[\Mstar](\pi\indk[\hk])
    \leq \Delta+\frac{2}{\ca}\sqrt{\frac{2N\log(T/\delta)}{T}} \\
    \leq&~ \Delta+\bigO{\frac{\log(T/\delta)}{\alpha}\sqrt{\frac{\DC{\cM}}{T}}}.
\end{align*}
This is the desired result.
\qed

\subsection{Proof of \cref{lem:DC-agn-regr} and \cref{lem:DC-real-regr}}\label{appdx:proof-agn-regr}

We first show that $\DC{\cMagn}\leq \DCF$ for any 1-Lipschitz loss $\loss$. For any $M\in\cMagn$, we denote
\begin{align*}
    \fm=\argmin_{f\in\cF}\EE_{(x,y)\sim M} \loss(y, f(x)),
\end{align*}
and then for any $f\in\cFp$,
\begin{align*}
    \LM{f}=\EE_{(x,y)\sim M} \brac{ \loss(y, f(x))-\loss(y, \fm(x)) }
    \leq \EE_{x\sim M}\abs{ f(x)-\fm(x) }.
\end{align*}
Therefore, we have $\DC{\cMagn}\leq \DCF$ for any $\Delta>0$.

We next consider the absolute loss $\Labs$. By definition, for any $M\in\cMreal$, we have $\LM{f}=\EE_{x\sim M} \abs{f(x)-\fm(x)}$, and hence it holds that $\DC{\cMreal}=\DCF$. Notice that $\cMreal\subseteq \cMagn$, and hence we have
\begin{align*}
    \DCF=\DC{\cMreal}\leq \DC{\cMagn}\leq\DCF.
\end{align*}
This gives the desired results.
\qed

\begin{remark}
Similarly, under the squared loss $\Lsq$, we can also show that
\begin{align*}
    \DCF[\sqrt{2\Delta}] \leq \DC{\cMreal}=\DC{\cMwell}\leq \DC{\cMagn} \leq \DCF.
\end{align*}
\end{remark}

\subsection{Proof of Proposition~\ref{prop:impr-lin}}\label{appdx:proof-impr-lin}

\newcommand{\cFlinC}{\cF_{\mathsf{Lin},C}}
\newcommand{\bSd}{\mathbb{S}^{d-1}}

\paragraph{Proof of the lower bound}
For any parameter $C\geq 1$, we define
\begin{align*}
    \cFlinC\defeq \set{ f_\theta(x)=\lr \theta, x\rr }_{\theta:\nrm{\theta}\leq C},
\end{align*}
We lower bound $\DCF[\Delta][\cFlin][\cFlinC]$ as follows. Denote $\cF\defeq \cFlin$ and $\cF_C\defeq \cFlinC$.

Fix any $p\in\Delta(\cF_C)$, and we bound
\begin{align*}
    \inf_{\mu\in\DX, \fs\in\cF} \PP_{f\sim p}\paren{ f: \EE_{x\sim \mu}|f(x)-\fs(x)| \leq \frac12 } 
    \leq&~ \EE_{x_0\sim \Unif(\bSd)} \PP_{f\sim p}\paren{ f: |f(x_0)-1| \leq \frac12 } \\
    =&~ \EE_{f\sim p} \PP_{x_0\sim \Unif(\bSd)}\paren{ f: |f(x_0)-1| \leq \frac12 }.
\end{align*}
Notice that for any fix $\theta\in\R^d$ and $x_0\sim \Unif(\bSd)$, we have $\lr \theta, x_0\rr=\nrm{\theta} t$, where the random variable $t\in[-1,1]$ has density function
\begin{align*}
	P(t) = \frac{\Gamma(d/2)}{\Gamma((d-1)/2)\sqrt{\pi}}(1-t^2)^{(d-3)/2},
\end{align*}
see e.g. \citet[Section 2]{bubeck2016testing}. Therefore,
\begin{align*}
    \PP_{x_0\sim \Unif(\bSd)}\paren{ f: |f(x_0)-1| \leq \frac12 }
    =&~\PP_{t\sim P}\paren{ t\in \brac{ \frac{1}{2\nrm{\theta}}, \frac{3}{2\nrm{\theta}}} } \\
    \leq&~ \frac{1}{\nrm{\theta}}\cdot \bigO{\sqrt{d}} \paren{1-\frac{4}{\nrm{\theta}^2}}^{(d-3)/2} \\
    \leq&~ \bigO{1} \frac{\sqrt{d}}{C}\exp\paren{ -\frac{d-3}{2C^2} },
\end{align*}
Therefore, as long as $C\leq c_0\sqrt{d}$, we have
\begin{align*}
    \inf_{\mu\in\DX, \fs\in\cF} \PP_{f\sim p}\paren{ f: \EE_{x\sim \mu}|f(x)-\fs(x)| \leq \frac12 } \leq \exp\paren{-c_1\frac{d}{C^2}}, \qquad \forall p\in\Delta(\cF_C),
\end{align*}
for some universal constants $c_1, c_0>0$. 
Therefore,
\begin{align*}
    \log\DCF[\Delta][\cFlin][\cFlinC]\geq c_1\frac{d}{C^2}, \qquad \forall C\in[1,c_0\sqrt{d}].
\end{align*}
In particular, this gives the desired lower bound by letting $C=1$.
\qed

\paragraph{Proof of the upper bound}
As the above lemma indicates, to upper bound $\DCF$, we must choose $p\in\DDD$ to be highly improper. We construct such a distribution of improper functions as follows.

Fix a parameter $\lambda\in[0,\lambda_0]$ for some small enough universal constant $\lambda_0$. We set $p$ to be the distribution of $f_\theta$ with $\theta\sim \normal{0,\lambda^2\id_d}$. We proceed to lower bound the probability
\begin{align*}
    \PP_{f\sim p}\paren{ f: \EE_{x\sim \mu}|f(x)-\fs(x)| \leq \Delta } 
\end{align*}
for arbitrary fixed $\fs=f_{\ths}\in\cF$ and distribution $\mu\in\DX$. Notice that for $\theta\in\R^d$, we have
\begin{align*}
    \paren{\EE_{x\sim \mu}|f_\theta(x)-\fs(x)|}^2
    \leq \EE_{x\sim \mu}|f_\theta(x)-\fs(x)|^2
    =\nrm{\theta-\ths}_\Sigma^2,
\end{align*}
where $\Sigma=\EE_{x\sim \mu}[xx^\top]$. By the rotational invariance, we may assume that $\Sigma=\diag(\lambda_1,\cdots,\lambda_d)$ with $\lambda_1\geq \cdots\geq \lambda_d\geq 0$. Notice that $\tr(\Sigma)\leq 1$, and hence we have $\sum_{i=1}^n \lambda_i\leq 1$ and $\lambda_k\leq \frac{1}{k}$. Therefore, we know
\begin{align*}
    \nrm{\theta-\ths}_\Sigma^2\leq \max_{1\leq i\leq k}\paren{ \theta_i-\ths_i }^2 + \sum_{i=k+1}^d \lambda_i\paren{ \theta_i-\ths_i }^2.
\end{align*}
Using the fact that $\theta_i\sim \normal{0,\lambda^2}$, we know
\begin{align*}
    \PP_\theta\paren{ \paren{ \theta_i-\ths_i }^2\leq \lambda^2 }\geq \frac{\lambda}{\sqrt{2\pi}}\exp\paren{ -\frac{\paren{\abs{\ths_i}+\lambda}^2}{2\lambda^2} }.
\end{align*}
Therefore, using the independence between $\theta_1,\cdots,\theta_k$, we have
\begin{align*}
    \PP_\theta\paren{ \forall i\in[k], \paren{ \theta_i-\ths_i }^2\leq \lambda^2 }\geq \paren{\frac{\lambda}{\sqrt{2\pi}}}^k\exp\paren{ -\sum_{i=1}^k\frac{\paren{\abs{\ths_i}+\lambda}^2}{2\lambda^2} }
    \geq \paren{\frac{\lambda}{\sqrt{2\pi}e}}^k \exp\paren{-\frac{1}{\lambda^2}}.
\end{align*}
Further, using the fact that 
\begin{align*}
    \EE_\theta\brac{ \sum_{i=k+1}^d \lambda_i\paren{ \theta_i-\ths_i }^2 }
    =\sum_{i=k+1}^d \lambda_i\paren{\abs{\ths_i}^2+\lambda^2}\leq \frac{1}{k}+\lambda^2,
\end{align*}
we know that
\begin{align*}
    \PP_\theta\paren{ \sum_{i=k+1}^d \lambda_i\paren{ \theta_i-\ths_i }^2 \leq \frac{2}{k}+2\lambda^2 }\geq \frac{1}{2}.
\end{align*}
Therefore, using the independence between $\theta_1,\cdots,\theta_d$, we have
\begin{align*}
    \PP_\theta\paren{ \nrm{\theta-\ths}_\Sigma^2\leq \frac{2}{k}+3\lambda^2 }  \geq \frac12\exp\paren{ -\frac{1}{\lambda^2}+ k\log(\sqrt{2\pi}e/\lambda) }.
\end{align*}
Setting $k=\frac{\Delta^2}{4}$ and $\lambda^2=\frac{\Delta^2}{6}$ gives
\begin{align*}
    \PP_\theta\paren{ \nrm{\theta-\ths}_\Sigma^2\leq \Delta^2 }\geq \exp\paren{ -\frac{C_0}{\Delta^2}\log\paren{\frac{1}{\Delta}} },
\end{align*}
where $C_0$ is a large universal constant. 
By the arbitrariness of $\mu$ and $\ths$, we have
\begin{align*}
    -\log \PP_{f\sim p}\paren{ f: \EE_{x\sim \mu}|f(x)-\fs(x)| \leq \Delta } \leq \frac{C_0}{\Delta^2}\log\paren{\frac{1}{\Delta}}, \qquad \forall \mu\in\DX, \ths\in\Bone.
\end{align*}
Therefore, $p$ certifies that $\log\DCF\leq \frac{C_0}{\Delta^2}\log\paren{\frac{1}{\Delta}}$, and the proof is hence completed.
\qed

\subsection{Proof of Proposition~\ref{prop:RDim-to-DC}}\label{appdx:proof-RDim-to-DC}

Suppose that $p\in\DF$ is given by
\begin{align*}
    p\defeq \arg\min_{p\in\DF} \sup_{\nu\in\DX, \fs\in\cF} ~\frac{1}{p\paren{ f: \EE_{x\sim \nu}|f(x)-\fs(x)| \leq \Delta } }.
\end{align*}
Then, for any given $\nu\in\DX, \fs\in\cF$, we have
\begin{align*}
    \PP_{f\sim p}\paren{ \PP_{x\sim \nu}(f(x)\neq \fs(x))\leq \Delta }\geq \frac{1}{\DC{\cF}}.
\end{align*}
Therefore, for $N\geq 1$, we consider the distribution $p_N$ over the subsets of $\DD$ given by
\begin{align*}
    \cH\sim p_N: ~\cH=\set{f_1,\cdots,f_N}, \qquad f_1,\cdots,f_N\sim p~\text{independently}.
\end{align*}
Then, we can bound
\begin{align*}
    \PP_{\cH\sim p_N}\paren{ \exists f\in\cH, \PP_{x\sim \nu}(f(x)\neq \fs(x))\leq \Delta }\geq 1-\paren{1-\frac{1}{\DC{\cF}}}^N.
\end{align*}
Choosing $N\geq \DC{\cF}\log(4)$ yields that $p_N$ is a $\Delta$-probabilistic representation of $\cF$, and hence
\begin{align*}
    \rdim_{\Delta}(\cF)\leq \log N\leq  \log\DC{\cF}+2.
\end{align*}

Conversely, suppose that $\scH$ is an optimal $\eps$-probabilistic representation of $\cF$, i.e. $\rdim_\eps(\cF)=\sz(\scH)$. Then $\scH$ induces a distribution $p_{\scH}\in\DDD$ as
\begin{align*}
    f\sim p_{\scH}: ~\cH\sim \scH, ~f\sim \Unif(\cH).
\end{align*}
Then, for any $\nu\in\DX, \fs\in\cF$,
\begin{align*}
    \PP_{f\sim p_{\scH}}\paren{ f: \EE_{x\sim \nu}|f(x)-\fs(x)| \leq \eps }
    \geq&~ \EE_{\cH\sim \scH}\brac{ \frac{1}{|\cH|}\indic{ \exists f\in\cH, \PP_{x\sim \nu}(f(x)\neq \fs(x)) \leq \eps } } \\
    \geq&~ \frac{3}{4}\frac{1}{\sup_{\cH\in \supp(\scH)} |\cH|}.
\end{align*}
Therefore, $p_{\scH}$ certifies that
\begin{align*}
    \log \DC[\eps]{\cF}\leq \sup_{\cH\in \supp(\scH)} \log |\cH|+\log(4/3)
    \leq \rdim_\eps(\cF)+1.
\end{align*}
Combining the inequalities above completes the proof.
\qed

\end{document}